\renewcommand{\ref}{\hyperref}
\newtheorem*{rep@theorem}{\rep@title}
\newcommand{\newreptheorem}[2]{%
\newenvironment{rep#1}[1]{%
 \def\rep@title{#2 \ref{##1}}%
 \begin{rep@theorem}}%
 {\end{rep@theorem}}}
\newtheorem{theorem}{Theorem}
\newtheorem{lemma}[theorem]{Lemma}
\newtheorem{claim}[theorem]{Claim}
\newtheorem{corollary}[theorem]{Corollary}
\newenvironment{sketch}{\paragraph{Proof sketch}\mbox{}}{\hfill $\blacksquare$}
\theoremstyle{definition}
\newtheorem{definition}{Definition}
\newcounter{saveenumi}
\algrenewcommand\algorithmicrequire{\textbf{Parameters:}}
\algrenewcommand\algorithmicensure{\textbf{Initialization:}}
\newcommand{\calA}{\mathcal{A}}
\newcommand{\calD}{\mathcal{D}}
\newcommand{\calF}{\mathcal{F}}
\newcommand{\calG}{\mathcal{G}}
\newcommand{\calK}{\mathcal{K}}
\newcommand{\calL}{\mathcal{L}}
\newcommand{\calM}{\mathcal{M}}
\newcommand{\calN}{\mathcal{N}}
\newcommand{\calO}{\mathcal{O}}
\newcommand{\calP}{\mathcal{P}}
\newcommand{\calR}{\mathcal{R}}
\newcommand{\calS}{\mathcal{S}}
\newcommand{\calT}{\mathcal{T}}
\newcommand{\calW}{\mathcal{W}}
\newcommand{\calX}{\mathcal{X}}
\newcommand{\calY}{\mathcal{Y}}
\newcommand{\mathE}{\mathbb{E}}
\newcommand{\mathP}{\mathbb{P}}
\newcommand{\mathR}{\mathbb{R}}
\newcommand{\vecH}{\mathbf{H}}
\newcommand{\vecU}{\mathbf{U}}
\newcommand{\vecV}{\mathbf{V}}
\newcommand{\vecW}{\mathbf{W}}
\newcommand{\vecX}{\mathbf{X}}
\newcommand{\vecY}{\mathbf{Y}}
\newcommand{\vecGamma}{\boldsymbol{\gamma}}
\newcommand{\vecDelta}{\boldsymbol{\Delta}}
\newcommand{\vecphi}{\boldsymbol{\phi}}
\newcommand{\vectheta}{\boldsymbol{\theta}}
\newcommand{\vecb}{\mathbf{b}}
\newcommand{\vecc}{\mathbf{c}}
\newcommand{\vech}{\mathbf{h}}
\newcommand{\vecp}{\mathbf{p}}
\newcommand{\vecr}{\mathbf{r}}
\newcommand{\vecu}{\mathbf{u}}
\newcommand{\vecw}{\mathbf{w}}
\newcommand{\vecx}{\mathbf{x}}
\newcommand{\vecy}{\mathbf{y}}
\newcommand{\vecz}{\mathbf{z}}
\DeclareMathOperator{\sign}{sign}
\DeclareMathOperator{\conv}{conv}
\DeclareMathOperator{\cov}{Cov}
\DeclareMathOperator{\diag}{diag}
\DeclareMathOperator{\length}{len}
\newcommand{\removed}[1]{}
\newcommand{\defeq}{\stackrel{\text{def}}{=}}
\newcommand{\argmin}[1]{\underset{#1}{\mathrm{argmin}} \:}
\newcommand{\inner}[2]{\left\langle #1, #2 \right\rangle}
\newcommand{\E}[1]{{\mathbb{E}\left[{#1}\right]}}
\newcommand{\Ep}[2]{{\mathbb{E}_{#1}\left[{#2}\right]}}
\newcommand{\norm}[1]{\left\lVert{#1}\right\rVert}
\newcommand{\abs}[1]{{\left\lvert{#1}\right\rvert}}
\newcommand{\margin}{\text{margin}}
\newcommand{\R}{\mathbb{R}}
\newcommand{\pr}[2]{{\mathbb{P}_{#1}\left[{#2}\right]}}
\newcommand{\net}{\text{net}}
\newcommand{\eps}{\epsilon}
\newcommand{\In}{\text{in}}
\newcommand{\Out}{\text{out}}
\newcommand{\Win}{\vecW_{\text{in}}}
\newcommand{\Wout}{\vecW_{\text{out}}}
\newcommand{\Wrec}{\vecW_{\text{rec}}}
\newcommand{\relu}{\sigma_{\textsc{relu}}}
\newcommand{\err}{Err}
\newcommand{\error}{\epsilon}
\newcommand{\hatD}{\widehat{D}}
\newcommand{\hatL}{\hat{L}}
\newcommand{\hatl}{\hat{\ell}}
\newcommand{\vin}{v_{\textrm{in}}}
\newcommand{\vout}{v_{\textrm{out}}}
\newcommand{\layer}{\text{layer}}
\newcommand{\gnorm}[1]{\norm{#1}}
\newcommand{\pathr}{\phi}
\newcommand{\DAG}{\text{DAG}}
\newcommand{\convexnn}{\nu}
\newcommand{\nout}{n_\text{out}}
\newcommand{\nin}{n_\text{in}}
\newcommand{\netdepth}{d}
\newcommand{\netwidth}{H}
\newcommand{\nparam}{n_{\text{param}}}
\newcommand{\picwidth}{1.84in}
\numberwithin{equation}{section}
   \def\url#1{}
\newcommand{\thesisTitle}{Implicit Regularization in Deep Learning}
\newcommand{\thesisAuthor}{Behnam Neyshabur}
\newcommand{\thesisInstitute}{Toyota Technological Institute at Chicago}
\newcommand{\thesisDate}{August, 2017}
\newcommand{\thesisAdvisor}{Nathan Srebro}
\newcommand{\thesisAdvisorTitle}{Professor}
\newcommand{\committeeFirst}{Yury Makarychev}
\newcommand{\committeeSecond}{Ruslan Salakhutdinov}
\newcommand{\committeeThird}{Gregory Shakhnarovich}
\newcommand{\thesisAbstract}{In an attempt to better understand generalization in deep learning, we study several possible explanations. We show that implicit regularization induced by the optimization method is playing a key role in generalization and success of deep learning models. Motivated by this view, we study how different complexity measures can ensure generalization and explain how optimization algorithms can implicitly regularize complexity measures. We empirically investigate the ability of these measures to explain different observed phenomena in deep learning. We further study the invariances in neural networks, suggest complexity measures and optimization algorithms that have similar invariances to those in neural networks and evaluate them on a number of learning tasks.}
\begin{document}

\newgeometry{left=1.65in, right=1.65in, top=1.65in, bottom=1.65in}

\begin{center}

\huge{\bf \thesisTitle} \\[1cm]
\Large{by} \\[1cm]
\LARGE{\thesisAuthor} \\[2cm]
\Large{A thesis submitted\\ in partial fulfillment of the requirements for\\the degree of}\\[0.8cm]
\Large{Doctor of Philosophy in Computer Science}\\[0.8cm]
\Large{at the}\\[0.8cm]
\Large{\MakeUppercase \thesisInstitute}\\[0.8cm]
\Large{\thesisDate}\\[2cm]
\Large{Thesis Committee: \\ {\thesisAdvisor \;(Thesis advisor)},\\ {\committeeFirst},\\ {\committeeSecond},\\ {\committeeThird}}	

\end{center}

\thispagestyle{empty}
\includepdf[pages=-, width=1.7\textwidth]{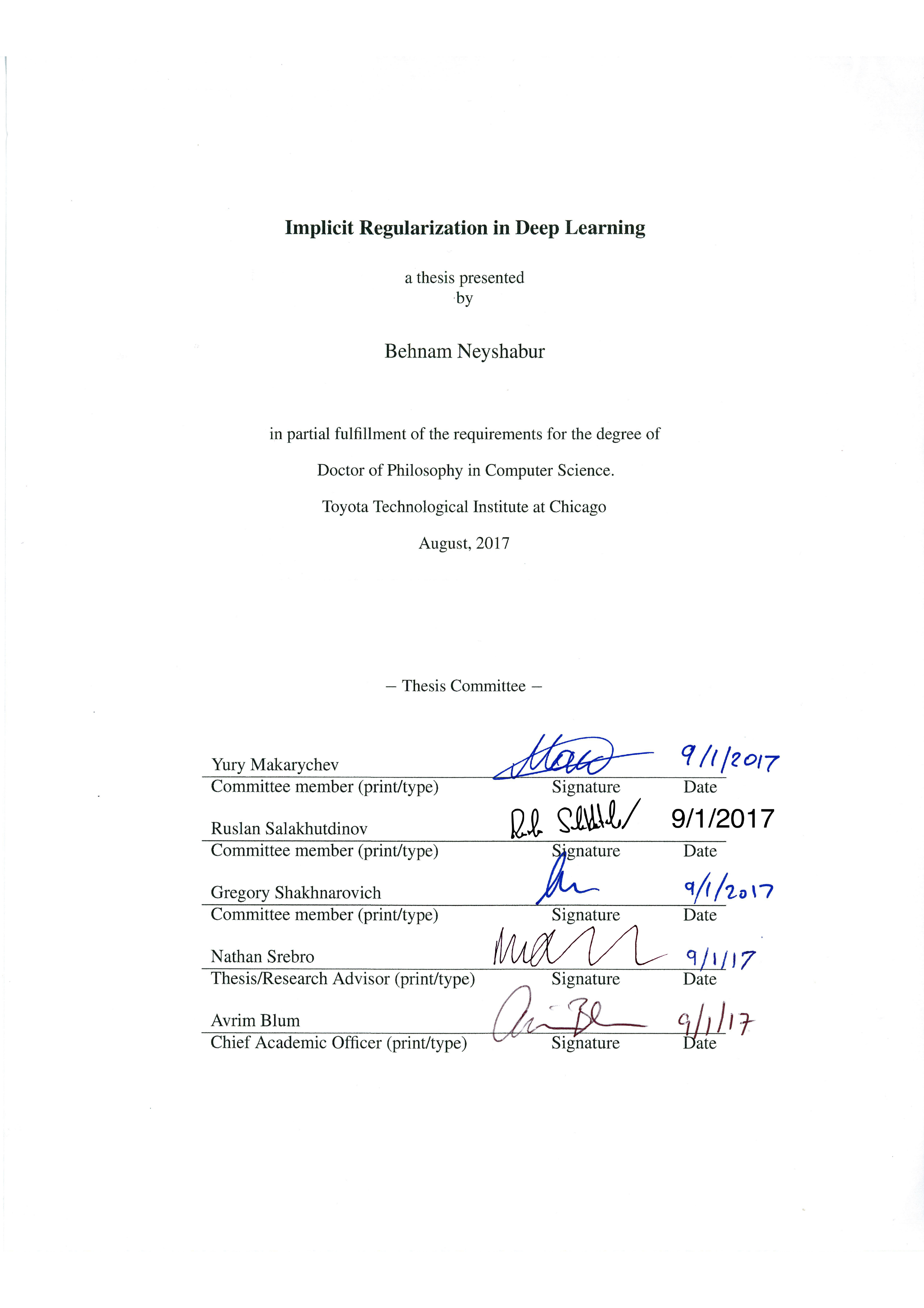}
\newgeometry{left=1.5in, right=1.5in, top=1in, bottom=1.25in}

\begin{center}

\Large{\bf \thesisTitle} \\[0.2cm]
\large{by} \\[0.2cm]
\Large{\thesisAuthor}

\end{center}

\section*{Abstract}
\thesisAbstract\par

\noindent Thesis Advisor: \thesisAdvisor\\
\noindent Title: \thesisAdvisorTitle

\thispagestyle{empty}
\newpage

\pdfbookmark[0]{Dedication}{dedication}
	\begin{center} 
~~~\\
~~~\\
~~~\\
~~~\\
~~~\\
~~~\\
~~~\\
~~~\\
~~~\\
~~~\\
~~~\\
\includegraphics[width=0.85\linewidth]{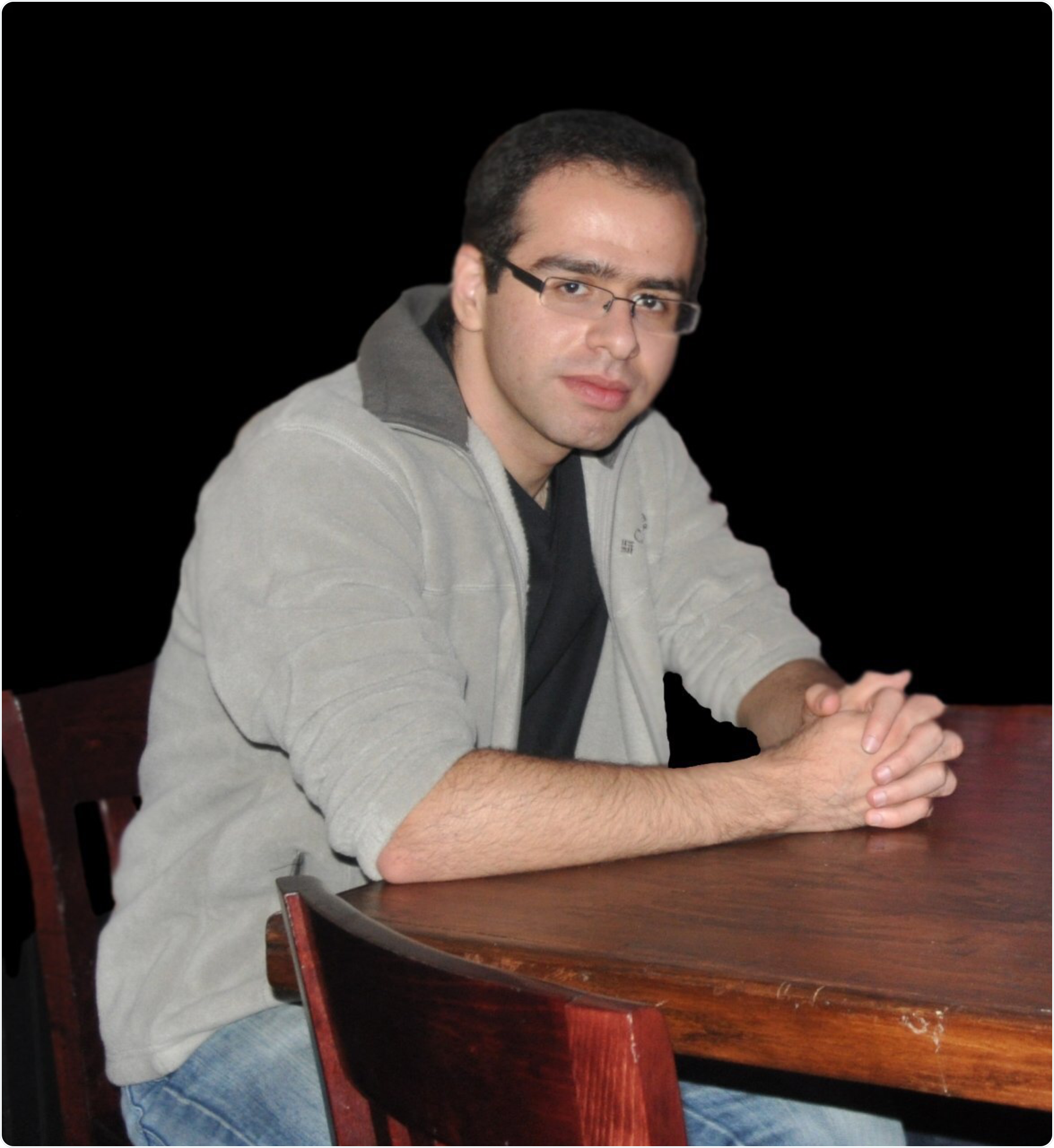}
~~~\\
~~~\\
{In memory of my dear friend, Sina Masihabadi}
 \end{center}
\thispagestyle{empty}
\newpage

\section*{Acknowledgments}
I would like to thank my advisor Nati Srebro without whom I would not be even pursuing a PhD in the first place. Nati's classes made me interested in optimization and machine learning and later I was delighted when he agreed to be my advisor. From the very first few meetings, I realized how his excitement and passion for research energizes me. I will not forget our countless long meetings that did not seem long to me at all. Looking back, I think Nati was the best mentor I could have hoped for. Among the skills I started to pick up from Nati, my favorite is asking the right question and formalizing it. About three years ago, excited about recent advances in deep learning, I walked into Nati's office and I told him that I want to understand what makes deep learning models perform well in practice. He helped me to ask the right questions and formalize them. Nati has had a great influence in shaping my thoughts and therefore this dissertation. I am forever grateful for all I have learned from him.

I would like to thank all of my committee members from whom I benefited during my PhD study. I am thankful to Ruslan Salakhutdinov who has been advising me in several projects and I have benefited a lot from discussions with him and his advices regarding my career. I thank Yury Makarychev for his collaborations and advices during early years of PhD. I always felt free to knock at his office door whenever I was stuck with theoretical questions and needed more insight. I thank Greg Shakhnarovich who has been kind enough to answer my questions on deep learning and metric learning whenever I showed up at his office. Moreover, I think his well-prepared machine learning course had a great impact on making me interested in machine learning.

TTIC's faculty have a close and friendly relationship with students and I have benefited from that. I am thankful to Jinbo Xu, my interim advisor who encouraged me to continue PhD in the area that fits my interests the best. I am pleased that we continued collaboration on computational biology projects. Madhur Tulsiani deserves a special thanks for being a great director of graduate studies. Madhur's desire to improve the PhD program at TTIC and his support made me feel comfortable and free to discuss my thoughts and suggestions with him many times. I would like to acknowledge that I really enjoyed the mathematical foundation course taught by David McAllester and his views on deep learning which he discussed in his deep learning course at TTIC. I regret that I only started collaborating with him in last few months of my PhD and I wish I would have had more discussions with him during these years. I was not able to collaborate with Karen Livescu and Kevin Gimpel during my PhD years but I have enjoyed many conversations with them and I felt supported by them. I also thank Sadaoki Furui, Julia Chuzhoy and Matthew Walter for their effort in enhancing TTIC's PhD program. I had the pleasure of chatting with Avrim Blum a few times since he has joined TTIC and I am excited about his new appointment at TTIC.

During my PhD years, I have enjoyed collaborating with several research faculties at TTIC. I would like to thank Srinadh Bhojanapalli with whom I have spent a lot of time in the last two years for being a good friend, mentor and collaborator. I am very grateful for everything he has offered me. My earlier works in deep learning was in collaboration with Ryota Tomioka and I am thankful his help and support. Suriya Gunasekar is another research faculty at TTIC who has been an amazing friend and collaborator. Other than her help and support, I have also enjoyed countless discussions with her on random topics. I would like to thank Aly Khan and Ayan Chakrabarti for what I have learned from them during our collaborations on different projects. Finally, I have learned from and enjoyed chatting with many other research faculty and postdocs at TTIC including but not limited to Michael Maire, Hammad Naveed, Mesrob Ohannessian, Mehrdad Mahdavi, Weiran Wang, Herman Kamper, Ofer Meshi, Qixing Huangi, Subhransu Maji, Raman Arora, and George Papandreou.

I would like to thank TTIC students. I am indebted to Payman Yadollahpour who kindly hosted me and my wife, Somaye, for several days after we arrived to US and helped us numerous times in various occasions. I am grateful for knowing him and for all the moments we have shared. Hao Tang also deserves a special thanks. He was the person I used to discuss all my ideas and thoughts with. Therefore, everything presented in this thesis is somehow impacted by the discussions with him. I thank Mrinalkanti Ghosh for the countless times I interrupted his work with a theoretical question and he patiently helped me try to investigate it. Shubhendu Trivedi is another student whom I had several fruitful discussion with. I am also thankful for things I have learned from and memories I have shared with Bahador Nooraei, Mohammadreza Mostajabi, Haris Angelidakis, Vikas Garg, Kaustav Kundu, Abhishek Sen, Siqi Sun, Hai Wang, Heejin Choi, Qingming Tang, Lifu Tu, Blake Woodworth, Shubham Toshniwal, Shane Settle, Nicholas Kolkin, Falcon Dai, Charles Schaff, Rachit Nimavat, and Ruotian Luo.

I want to thank TTIC staff for making my life much easier and helping me whenever I had any problems. On top of the list is Chrissy Novak. She has always patiently and kindly listened to my complaints, suggestions, and problems, and tried her best to resolve or improve any issues. Adam Bohlander has always been helpful and quick in resolving any IT issues. I think Adam's great expertise has saved me several hundreds of hours. Liv Leader was one of TTIC's staff when we arrived to US. She helped us during the first few months of being in US. The first party we were invited in US was Liv's housewarming party. I also thank Mary Marre, Amy Minick, Jessica Johnston and other TTIC staffs for their effort.

Beyond TTIC, I am grateful for my internship at MSR Silicon Valley with Rina Panigrahy. Perhaps, several hundred hours of meeting and discussions with Rina in this internship whose goal was to better understand neural networks theoretically had a great influence on making me interested in deep learning. I am also thankful for collaboration and discussions I had with Robert Schapire, Alekh Agarwal, Haipeng Luo and John Langford during my internship at MSR New York City. I am grateful to Anna Choromanska for several discussions on neural networks and for great career advices I received from her. I thank Tony Wu for being a good friend and collaborator. I have learned a lot about deep learning from several meetings with him.

My deepest gratitude goes to my parents and my brother who have shaped who I am today. Words cannot capture how grateful I am to Somaye without whom everything in my life would have been drastically different. I will thank her in person.

I am dedicating this thesis to the memory of my dear friend Sina Massihabadi who went to the same high school and university as me. Sina was very brilliant and one of the finest human beings I have ever met. Later, he moved to United States to start a PhD in industrial engineering at Texas A\&M university. However, he did not make it to the end of the program. He died in a tragic car accident when he was driving to the airport to pick up his mother whom he was not able to visit for two years due to visa restrictions on Iranians.

\newgeometry{left=1.4in, right=1.4in, top=1in, bottom=1.25in}

{\large \tableofcontents }
{\large \listoffigures}
{\large \listoftables}

\chapter{Introduction}  \label{chap:intro}

Deep learning refers to training typically complex and highly over-parameterized models that benefit from learning a hierarchy of representations. The terms ``neural networks'' and ``deep learning'' are often used interchangeably as many modern deep learning models are slight modifications of different types of neural networks suggested originally around 1950-2000~\cite{schimidhuber15}. Interest in deep learning was revived around 2006 ~\cite{hinton06,bengio07} and since then, it has had enormous practical successes in different areas~\cite{lecun15}. The rapid growth of practical works and new concepts in this field has created a considerable gap between our theoretical understanding of deep learning and practical advances. 

From the learning viewpoint, we often look into three different properties to investigate the effectiveness of a model: expressive power, optimization, and generalization. Given a function class/model the expressive power is about understanding what functions can be realized or approximated by the functions in the function class. Given a loss function as an evaluation measure, the optimization aspect refers to the ability to \emph{efficiently} find a function with a minimal loss on the training data and generalization addresses the model's ability to perform well on the new unseen data.

All above aspects of neural networks have been studied before. Neural networks have great expressive power. Universal approximation theorem states that for any given precision, feed-forward networks with a single hidden layer containing a finite number of hidden units can approximate any continuous function~\cite{hornik1989multilayer}. More broadly, any $O(T)$ time computable function can be captured by an $O(T^2)$ sized network, and so the expressive power of such networks is indeed great~\cite[Theorem~9.25]{sipser2012}.

Generalization of neural networks as a function of network size is also fairly well understood. With hard-threshold activations, the VC-dimension, and hence sample complexity, of the class of functions realizable with a feed-forward network is equal, up to logarithmic factors, to the number of edges in the
network \cite{anthony09,shalev14}, corresponding to the number of parameters.  With continuous activation functions the VC-dimension could be higher, but is fairly well understood and is still controlled by the size of the network.\footnote{Using weights with very high precision and vastly different magnitudes it is possible to shatter a number of points quadratic in the number of edges when activations such as the sigmoid, ramp or hinge are used \cite[Chapter 20.4]{shalev14}.  But even with such activations, the VC dimension can still be bounded by the size and depth \cite{bartlett98,anthony09,shalev14}.}

At the same time, we also know that learning even moderately sized
networks is computationally intractable---not only is it NP-hard to
minimize the empirical error, even with only three hidden units, but
it is hard to learn small feed-forward networks using {\em any}
learning method (subject to cryptographic assumptions).  That is, even
for binary classification using a network with a single hidden layer and a logarithmic (in the
input size) number of hidden units, and even if we know the true
targets are {\em exactly} captured by such a small network, there is
likely no efficient algorithm that can ensure error better than 1/2
\cite{klivans2006cryptographic,Daniely14}---not if the algorithm
tries to fit such a network, not even if it tries to fit a much larger
network, and in fact no matter how the algorithm represents
predictors.  And so, merely knowing that some not-too-large
architecture is excellent in expressing does {\em not} explain
why we are able to learn using it, nor using an even larger network.
These results, however, are not suggesting any insights on the practical success of deep learning. In contrast to our theoretical understanding, it is possible to train (optimize) very large neural networks and despite their large sizes, they generalize to unseen data. Why is it then that we succeed in learning very large neural networks?  Can we identify a property that makes them
possible to train? Why do these networks generalize to unseen data despite their large capacity in terms of the number of parameters?

In such an over-parameterized setting, the objective has multiple global minima, all minimize the training error, but many of them do not generalize well.
Hence, just minimizing the training error is not sufficient for learning: picking the wrong global minima can lead to bad generalization behavior. In such situations, generalization behavior depends implicitly on the algorithm used to minimize the training error. Different algorithmic choices for optimization such as the initialization, update rule, learning rate, and stopping condition, will lead to different global minima with different generalization behavior \cite{chaudhari2016entropy, keskar2016large, NeySalSre15}.

What is the bias introduced by these algorithmic choices for neural
networks? What is the relevant notion of complexity or capacity control?

The goal of this dissertation is to understand the implicit regularization by studying the optimization, regularization, and generalization in deep learning and the relationship between them. The dissertation is divided into two parts. In the first part, we study different approaches to explain generalization in neural networks. We discuss how some of the complexity measures derived by these approaches can explain implicit regularization. In the second part, we investigate the transformations under which the function computed by a network remains the same and therefore argue for complexity measures and optimization algorithms that have similar invariances. We find complexity measures that have similar invariances to neural networks and optimization algorithms that implicitly regularize them. Using these optimization algorithms for different learning tasks, we indeed observe that they have better generalization abilities.

\section{Main Contributions}
\begin{enumerate}
\item Part I:
\begin{enumerate}
\item {\em The Role of Implicit Regularization (Chapter 4)} We design experiments to highlight the role of implicit regularization in the success of deep learning models.
\item {\em Norm-based capacity control (Chapter 5):} We prove generalization bounds for the class of fully connected feedforward networks with the bounded norm. We further show that for some norms, this bound is independent of the number of hidden units.
\item {\em Generalization Guarantee by PAC-Bayes Framework (Chapter 6):} We show how PAC-Bayes framework can be employed to obtain generalization bounds for neural networks by making a connection between sharpness and PAC-Bayes theory.
\item {\em Implicit Regularization by SGD (Chapter 6):} We show that networks learned by SGD satisfy several conditions that lead to flat minima. 
\item {\em Empirical Investigation of Generalization in Deep Learning (Chapter 7):} We design experiments to compare the ability of different complexity measures to explain the implicit regularization and generalization in deep learning.
\end{enumerate}
\item Part II:
\begin{enumerate}
\item {\em Invariances in neural networks (Chapter 8):}  We characterize a large class of invariances in feedforward and recurrent neural networks caused by rescaling issues and suggest a measure called the Path-norm that is invariant to the rescaling of the weights.
\item {\em Path-normalized optimization (Chapter 9 and 10):} Inspired by our understanding of invariances in neural networks and the importance of implicit regularization, we suggest a new method called Path-SGD whose updates are the approximate steepest descent direction with respect to the Path-norm. We show Path-SGD achieves better generalization error than SGD in both fully connected and recurrent neural networks on different benchmarks.
\item {\em Data-dependent path normalization (Chapter 11):} We propose a unified framework for neural net normalization, regularization, and optimization, which includes Path-SGD and Batch-Normalization and interpolates between them across two different dimensions. Through this framework, we investigate the issue of invariance of the optimization, data dependence and the connection with natural gradient.
\end{enumerate}
\end{enumerate}
\chapter{Preliminaries} \label{chap:prelim}
In this chapter, we present the basic setup and notations used throughout this dissertation.

\section{The Statistical Learning Framework}\label{sec:learning}
In this section, we briefly review the statistical learning framework. More details on the formal model can be found in \citet{shalev14}.

In the statistical batch learning framework, the learner is given a training set $\calS=\{(\vecx_1,\vecy_1),\dots,(\vecx_m,\vecy_m)\}$ of $m$ training points in $\calX \times \calY$ that are independently and identically distributed (i.i.d.) according to an unknown distribution $\calD$. For simplicity, we will focus on the task of classification where the goal of the learner is to output a predictor $f:\calX\rightarrow\calY$ with minimum expected error on samples generated from the distribution $\calD$:
\begin{equation}
L_\calD(f) = \pr{(\vecx,\vecy)\sim \calD}{f(\vecx)\neq\vecy}
\end{equation}
Since the learner does not have access to the distribution $\calD$, it cannot evaluate or minimize the expected loss. It can however, obtain an estimate of the expected loss of a predictor $f$ using the training set $\calS$:
\begin{equation}\label{eq:training-err}
L_\calS(f) = \frac{\abs{\left\{(\vecx,\vecy)\in \calS \mid f(\vecx)\neq \vecy\right\}}}{m}
\end{equation}
When the distribution $\calD$ and training set $\calS$ is clear from the context, we use $L(f)$ and $\hatL(f)$ instead of $L_\calD$ and $L_\calS(f)$ respectively. We also define the expected margin loss for any margin $\gamma>0$, as follows:
\begin{equation}\label{eq:loss}
L_{\gamma}(f)=\pr{(\vecx,y)\sim \calD}{f(\vecx)[y]\leq \gamma + \max_{j\neq y}f(\vecx)[j]}
\end{equation}
Let $\hatL_{\gamma}(f)$ be the empirical estimate of the above expected margin loss. Since setting $\gamma=0$ corresponds to the classification loss, we will use $L_{0}(f)=L(f)$ and $\hatL_{0}(f_{\vecw})=\hatL(f)$ to refer to the expected risk and the training error.

Minimizing the loss in the equation \eqref{eq:training-err} which is called the training error does not guarantee low expected error. For example, a predictor that only memorizes the set $\calS$ to output the right label for the data in the training set can get zero training error while its expected loss might be very close to the random guess. We are therefore interested in controlling the difference $L_\calD(f)-L_\calS(f)$ which we will refer to as {\em generalization error}. This quantity reflects the difference between memorizing and learning.

An interesting observation is that if $f$ is chosen in advance and is not dependent on the distribution $\calD$ or training set $\calS$, then the {\em generalization error} can be simply bounded by concentration inequalities such as Hoeffding's inequality and relatively small number of samples are required to get small generalization error. However, since the predictor is chosen by the learning algorithm using the training set $\calS$, one need to make sure that this bound holds for the set all predictors that could be chosen by the learning algorithm. It is therefore preferred to limit the search space of the learning algorithm to a small enough set $\calF$ of predictors called {\em model class} to be able to bound the generalization.

We consider the statistical {\em capacity} of a model class in terms of the number of examples required to ensure {\em generalization}, i.e.~that the population (or test error) is close to the training error, even when minimizing the training error.  This also roughly corresponds to the maximum number of examples on which one can obtain small training error even with random labels.

In the next section, we define a meta-model class of feedforward networks with shared weights that include several well-known model classes such as fully connected, convolutional and recurrent neural networks.

\section{Feedforward Neural Networks with Shared Weights}\label{sec:feedforward}

We denote a feedforward network by a triple $(G,\vecw,\sigma)$ where $G=(V,E)$ is a directed acyclic graph over the set of nodes $V$ that corresponds to units $v\in V$ in the network, including special input nodes $V_\In\subset V$ with no incoming edges and special output nodes $V_\Out\subseteq V$ with no outgoing edges, $\vecw:E\rightarrow \R$ is the weights assigned to the edges and $\sigma:\R\rightarrow \R$ is an activation function. 

Feedforward network $(G,\vecw,\sigma)$ computes the function $f_{G,\vecw,\sigma}:\R^{n_\In}\rightarrow \R^{n_\Out}$ for a given input vector $\vecx\in \R^{n_\In}$ as follows: For any input node $v\in V_{\In}$, its output $h_v$ is the corresponding coordinate of $\vecx$~\footnote{We might want to also add a special ``bias'' node with $h_{\text{bias}}=1$, or just rely on the inputs having a fixed ``bias coordinate''.}; for any internal node $v$ (all nodes except the input and output nodes) the output value is defined according to the forward propagation equation:
\begin{equation}
h_v=\sigma\left(\sum_{(u\rightarrow v)\in E} w_{u\rightarrow v}\cdot h_u\right)
\end{equation}

and for any output node $v\in V_\Out$, no non-linearity is applied and its output $h_v=\sum_{(u\rightarrow v)\in E} w_{u\rightarrow v}\cdot h_u$ corresponds to coordinates of the computed function $f_{G,\vecw,\sigma}(\vecx)$. When the graph structure $G$ and the activation function $\sigma$ is clear from the context, we use the shorthand $f_\vecw=f_{G,\vecw,\sigma}$ to refer to the function computed by weights $\vecw$.

We will focus mostly on the hinge, or RELU (REctified Linear Unit) activation, which is currently in popular use \citep{nair10,glorot11,zeiler13}, $\relu(z) = [z]_+ = \max(z,0)$.
When the activation will not be specified, we will
implicitly be referring to the RELU.  The RELU has several convenient
properties which we will exploit, some of them shared with other
activation functions:
\begin{description}\itemsep1pt \parskip0pt \parsep0pt
\item[Lipshitz] ReLU is Lipschitz continuous with Lipschitz
  constant one.  This property is also shared by the sigmoid and the
  ramp activation $\sigma(z)=\min(\max(0,z),1)$.
\item[Idempotency] ReLU is idempotent,
  i.e.~$\relu(\relu(z))=\relu(z)$. This property is also shared by the ramp
  and hard threshold activations.
\item[Non-Negative Homogeneity] For a non-negative scalar $c \geq 0$
  and any input $z\in\R$ we have $\relu(c\cdot z)=c\cdot \relu(z)$.
  This property is important as it allows us to scale the incoming
  weights to a unit by $c>0$ and scale the outgoing edges by $1/c$
  without changing the function computed by the network.  For
  layered graphs, this means we can scale $\vecW^i$ by $c$ and compensate
  by scaling $\vecW^{i+1}$ by $1/c$.
\end{description}

When investigating a class of feedforward networks, in order to account for weight sharing, we separate the weights from actual parameters. Given a parameter vector $\vectheta\in \R^\nparam$ and a mapping $\pi:E\rightarrow \{1,\dots, \nparam\}$ from edges to parameter indices, the weight of any edge $e\in E$ is $w_e=\theta_{\pi(e)}$. We also refer to the set of edges that share the $i$th parameter $\theta_i$ as $E_i=\left\{e\in E| \pi(e)=i\right\}$. That is, for any $e_1,e_2\in E_i$, $\pi(e_1)=\pi(e_2)$ and therefore $w_{e_1}=w_{e_2}=\theta_{\pi(e_1)}$. Given a graph $G$, activation function $\sigma$ and mapping $\pi$, we consider the hypothesis class $\calF^{G,\sigma,\pi}=\left\{f_{G,w,\sigma}|\vectheta\in \R^k; \forall_{e\in E}\;w(e)=\theta_{\pi(e)}\right\}$ of functions computable using some setting of parameters. When $\pi$ is a one-to-one mapping, we use weights $\vecw$ to refer to the parameters $\vectheta$ and drop $\pi$ and use $\calF^{G,\sigma}$ to refer to the hypothesis class.

We will refer to the {\em size} of the network, which is the overall number of edges $n_{\text{edge}}=\abs{E}$, the {\em depth} $\netdepth$ of the network, which is the length of the longest directed path in $G$, and the {\em in-degree} (or width) $\netwidth$ of a network, which is the maximum in-degree of a vertex in $G$.

If the mapping $\pi$ is a one-to-one mapping, then there is no weight sharing and it corresponds to standard feedforward networks. Fully connected neural networks (FCNNs) are a well-known family of standard feedforward networks in which every hidden unit in each layer is connected to all hidden units in the previous and next layers. On the other hand, weight sharing exists if $\pi$ is a many-to-one mapping. Two well-known examples of feedforward networks with shared weights are convolutional neural networks (CNNs) and recurrent neural networks (RNNs). We mostly use the general notation of feedforward networks with shared weights as this will be more comprehensive. However, when focusing on FCNNs or RNNs, it is helpful to discuss them using a more familiar notation which we briefly introduce next.

\paragraph{Fully Connected Neural Networks}

Let us consider a layered fully-connected network where nodes are partitioned into layers. Let $n_i$ be the number of nodes in layer $i$. For all nodes $v$ on layer $i$, we recover the layered recursive formula $\vech^i=\sigma\left(\mathbf{W}^i\vech^{i-1}\right)$ where $\vech^i\in \R^{n_i}$ is the vector of outputs in layer $i$ and $\mathbf{W}^i\in \R^{n_{i}\times n_{i-1}}$ is the weight matrix in layer $i$ with entries $w_{u\rightarrow v}$, for each $u$ in layer $i-1$ and $v$ in layer $i$.  This description ignores the bias term, which could be modeled as a direct connection from $v_{\rm bias}$ into every node on every layer, or by introducing a bias unit (with output fixed to 1) at each layer.

\removed{
\paragraph{Convolutional Neural Networks} CNNs are a family of feedforward networks that the input data has sequence or grid-like structure. Their main features are special weight sharing and local connectivity patterns. For simplicity of notation, we will focus on CNNs defined for 2D structures such as images. Let $\vech^i$ be the outputs of hidden units in layer $i$. Then $\vech^i$ is 3-dimensional where $h^i[j_1,j_2,j_3]$ is $j_3$-th feature for the position $(j_1,j_2)$ of the image representation in layer $i$. Local connectivity of CNNs allows each node in layer $i$ to be connected to all nodes within $F\times F$ neighborhood of that node where $F$ is the filter size. To save computation and reduce the image dimensionality, sometimes the filter jump over some fixed number of pixels. If $s_i$ is the stride, then we define the operator $\ast_s$ to be convolution with stride $s$ which can be calculated as:
\begin{equation}
(W^i \ast_{s_i} h^{i-1})[j_1,j_2] = \sum_{\alpha_1=1}^{F}  \sum_{\alpha_2=1}^{F} \inner{\vecW^{i}[\alpha_1,\alpha_2]}{\vech^{i-1}[(j_1-1)s_i+\alpha_1,(j_2-1)s_i+\alpha_2]}
\end{equation}
}

\paragraph{Recurrent Neural Networks}
Time-unfolded RNNs are feedforward networks with shared weights that map an input sequence to an output sequence. Each input node corresponds to either a coordinate of the input vector at a particular time step or a hidden unit at time $0$. Each output node also corresponds to a coordinate of the output at a specific time step. Finally, each internal node refers to some hidden unit at time $t\geq 1$. When discussing RNNs, it is useful to refer to different layers and the values calculated at different time-steps. We use a notation for RNN structures in which the nodes are partitioned into layers and $\vech_t^i$ denotes the output of nodes in layer $i$ at time step $t$. Let $\vecx=(\vecx_1,\dots,\vecx_T)$ be the input at different time steps where $T$ is the maximum number of propagations through time and we refer to it as the length of the RNN. For $0\leq i <d$, let $\Win^i \in \R^{n_{i}\times n_{i-1}}$ and $\Wrec^i\in \R^{n_{i}\times n_{i}}$ be the input and recurrent parameter matrices of layer $i$ and $\Wout\in \R^{n_{d}\times n_{d-1}}$ be the output parameter matrix.The output of the function implemented by RNN can then be calculated as $f_{\vecw,t}(x)=h_t^d$. Note that in this notations, weight matrices $\Win$, $\Wrec$ and $\Wout$ correspond to ``free'' parameters of the model that are shared in different time steps. Table \ref{tab:notation} shows forward computations for layered feedforward networks and RNNs.

\begin{table}
\begin{center}
\small
\begin{tabular}{|c|c|c|c|}
\hline
 & Input nodes &  Internal nodes & Output nodes \rule{0pt}{2ex}\\
\hline
FF (shared weights)& $h_v=x[v]$ &  $h_v=\sigma\left(\sum_{(u\rightarrow v)\in E} w_{u\rightarrow v} h_u\right)$ & $h_v=\sum_{(u\rightarrow v)\in E} w_{u\rightarrow v} h_u$ \\
\hline
FCNN notation& $\vech^0=\vecx$& $\vech^i=\sigma\left(\vecW^i \vech^{i-1}\right)$& $\vech^d=\vecW^d \vech^{d-1}$\\
\removed{
\hline
CNN notation& $\vech^0=\vecx$& $\vech^i=\sigma\left(\vecW^i \ast_{s_i}\vech^{i-1}\right)$& $\vech^d=\vecW^d \vech^{d-1}$\\
}
\hline
RNN notation& $\vech_t^0=\vecx_t,\textbf{h}_0^i=0$& $\vech_t^i=\sigma\left(\Win^i \vech_t^{i-1} + \Wrec^i \vech_{t-1}^i\right)$& $\vech_t^d=\Wout \vech_t^{d-1}$\\
\hline
\end{tabular}
  \caption[\small Forward computations for feedforward and recurrent networks]{Forward computations for feedforward nets with shared weights.}
  \label{tab:notation}
 \end{center}
\end{table}

\part[\Large Implicit Regularization and Generalization]{Implicit Regularization and Generalization}\label{part:generalization}
\chapter{Generalization and Capacity Control} \label{chap:generalization}
In section \ref{sec:learning} we briefly discussed viewing
the statistical {\em capacity} of a model class in terms
of the number of examples required to ensure {\em generalization}. 
Given a model class $\calF$, such as all the functions representable by
some feedforward or convolutional networks, one can consider the
capacity of the entire class $\calF$---this corresponds to learning
with a uniform ``prior'' or notion of complexity over all models in
the class.  Alternatively, we can also consider some {\em complexity
  measure}, which we take as a mapping that assigns a non-negative
number to every predictor in the class - $\mu: \{ \calF, \calS \}
\rightarrow \mathR^+$, where $\calS$ is the training set.  It is then sufficient to consider the capacity
of the restricted class $\calF_{\mu,\alpha}=\{f: f\in \calF,
\mu(f) \leq \alpha\}$ for a given $\alpha \geq 0$.  One can then
ensure generalization of a learned predictor $f$ in terms of the
capacity of $\calF_{\mu,\mu(f)}$.  Having a good predictor with
low complexity, and being biased toward low complexity (in terms of
$\mu$) can then be sufficient for learning, even if the capacity of
the entire $\calF$ is high.  And if we are indeed relying on $\mu$
for ensuring generalization (and in particular, biasing toward models
with lower complexity under $\mu$), we would expect a learned $f$
with a lower value of $\mu(f)$ to generalize better.

For some complexity measures, we allow $\mu$ to depend also on
the training set.  If this is done carefully, we can still ensure
generalization for the restricted class $\calF_{\mu,\alpha}$.

When considering a complexity measure $\mu$, we can 
investigate whether it is sufficient for
generalization, and analyze the capacity of $\calF_{\mu,\alpha}$.
Understanding the capacity corresponding to different complexity
measures also allows us to relate between different measures and
provides guidance as to what and how we should measure: From the above
discussion, it is clear that any monotone transformation of a
complexity measures leads to an equivalent notion of complexity.
Furthermore, complexity is meaningful only in the context of a
specific model class $\calF$, e.g.~specific architecture or
network size.  The capacity, as we consider it (in units of sample
complexity), provides a yardstick by which to measure complexity (we
should be clear though, that we are vague regarding the scaling of the
generalization error itself, and only consider the scaling in terms of
complexity and model class, thus we obtain only a very crude yardstick
sufficient for investigating trends and relative phenomena, not a
quantitative yardstick).

We next look at different ways of controlling the capacity.
\section{VC Dimension: A Cardinality-Based Arguments}\label{sec:vc}
Consider a finite model class $\calF$. Given any predictor $f\in \calF$, training error $L(f)$
is the average of independent random variables and the expected error the excepted value of
the training error. We can therefore use Hoeffding's inequality upper bounds the generalization error with 
high probability:
\begin{equation}
\pr{}{L(f)-\hatL(f)\geq t} \leq e^{-2mt^2}
\end{equation}
The above bound is for any given $f$. However, since the learning algorithm can output any predictor
from class $\calF$, we need to make sure that all predictors in $\calF$ have low generalization error which
can be done through a union bound over model class $\calF$:
\begin{equation}
\pr{}{\exists_{f\in \calF}\; L(f)-\hatL(f)\geq t} \leq \sum_{f\in \calF}\pr{}{L(f)-\hatL(f)\geq t} \leq \abs{\calF}e^{-2mt^2}
\end{equation}
Setting the r.h.s. of the above inequality to small probability $0<\delta<1$, we can say that with probability $1-\delta$ over the choice of samples in the training set, the following generalization bound holds:
\begin{equation}\label{eq:gen-cardinality}
L(f) \leq \hatL(f) + \sqrt{\frac{\ln \abs{\calF}+\ln(1/\delta)}{m}}
\end{equation}
The above simple yet effective approach gives us an intuition about the relationship between
the capacity and generalization. Many of the approaches of controlling the capacity that we will study
later follow similar arguments. Here, the term $\ln \abs{\calF}$ corresponds to the complexity of the model class.

Even though many model classes that we consider are not finite based on the definition, one can argue that
all parametrized model classes used in practice are finite since the parameters are stored with finite precision
For any model, if $b$ bits are used to store each parameter, then we have $\ln \abs{\calF}\leq b\nparam$ which is
is linear in the total number of parameters. 

Even without making an assumption on the precision of parameters, it is possible to get
similar generalization bound using Vapnik-Chervonenkis dimension (VC dimension) which can be
thought as the logarithm of the``intrinsic'' cardinality. VC-dimension is defined as the size of the largest set $\calW=\{\vecx_i\}_{i=1}^m$ such that for any mapping $g:\calW\rightarrow \{\pm\}^m$, there is a predictor in $\calF$ that achieves zero training error on the training set $\calS=\left\{(\vecx_i,g(\vecx_i)\mid \vecx_i\in \calW\right\}$. The VC-dimension of many known model classes is a linear or low-degree polynomial of the number of parameters. The following generalization bound then holds with probability $1-\delta$~\cite{vapnik1971uniform,blumer1987occam}:
\begin{equation}\label{eq:gen-vc}
L(f) \leq \hatL(f) + \calO\left(\sqrt{\frac{\text{VC-dim}(\calF)\ln m+\ln(1/\delta)}{m}}\right)
\end{equation}

\paragraph{Feedforward Networks} The VC dimension of feedforward networks can also be bounded in terms of the number of parameters 
$\nparam$\cite{anthony2009neural,bartlett1998sample,bartlett1998almost,
  shalev2014understanding}. In particular, \citet{bartlet2017} and
\citet{harvey2017nearly}, following \citet{bartlett1998almost}, give the following tight
(up to logarithmic factors) bound on the VC
dimension and hence capacity of feedforward networks with ReLU activations:
\begin{equation}
\text{VC-dim} = \tilde{O}(d * \nparam)
\end{equation}
In the over-parametrized settings, where the number of parameters is
more than the number of samples, complexity measures that depend on
the total number of parameters are too weak and cannot explain the
generalization behavior.  Neural networks used in practice often have
significantly more parameters than samples, and indeed can perfectly
fit even random labels, obviously without
generalizing~\cite{zhang2017understanding}.  Moreover, measuring
complexity in terms of number of parameters cannot explain the
reduction in generalization error as the number of hidden units
increase \cite{neyshabur15b}. We will discuss more details about network size as the capacity control in Chapter~\ref{chap:implicit}.



\section{Norms and Margins: Counting Real-Valued Functions}\label{sec:margin}
The model classes that we learn are often functions with real-valued outputs and for each task, we use
a different loss and prediction method based on the predicted scores. For example, for the binary classification, thresholding the only real-valued output gives us the binary labels. For the multi-class classification, the output dimension is usually equal to the number of classes and the class with maximum score is chosen as the predicted label. For simplicity, we focus on binary classification here. Since the model class has real-valued output, can not directly use VC-dimension here. Instead, we can use a similar concept called subgraph VC-dimension which is similar to VC-dimension with the difference being that here we count the number of different behavior with a given margin. This means for the binary case, we require $yf(\vecx)\geq \eta$ for some margin $\eta$. There are different techniques that bound subgraph-VC dimension such as Covering Numbers and Rademacher Complexities. Here, we focus on the Rademacher Complexity since most of the results by Covering Numbers can be also proved through Rademacher complexities with less effort. The empirical Rademacher complexity of a class $\calF$ of function mapping from $\calX$ to $\R$ with respect to a set $\{x_1,\dots,x_m\}$ is defined as:
\begin{equation}
\calR_m(\calF) = \Ep{\xi \in \{\pm 1\}^m}{\frac{1}{m}
\sup_{f\in \calF}  \left\lvert \sum_{i=1}^m \xi_i f(x_i) \right\rvert}
\end{equation}
The relationship between Rademacher complexity and subgraph VC-dimension is as follows:
\begin{equation}
\calR_m(\calF) = \calO\left(\sqrt{\frac{\text{VC-dim}(\calF)}{m}}\right)
\end{equation}
It is possible to get the following generalization error for any margin $\gamma>0$ with probability $1-\delta$ over the choice of training examples for every $f\in \calF$:
\begin{equation}
L_{0}(f) \leq \hatL_{\gamma}(f) + 2\frac{\calR_m(\calF)}{\gamma}+ \sqrt{\frac{8\ln(2/\delta)}{m}}
\end{equation}

\paragraph{Feedforward Networks}
\cite{bartlett2002rademacher} proved that the Rademacher complexity of fully connected
feedforward networks on set $\calS$ can be bounded based on the $\ell_1$ norm of the
weights of hidden units in each layer as follows:
\begin{equation}\label{eq:L1-rademacher}
\calR_m(\calF) \leq \sqrt{\frac{4^d\ln \left(n_{\In}\right)\prod_{i=1}^d \norm{W_i}^2_{1,\infty} \max_{\vecx\in \calS} \norm{\vecx}_\infty}{m}}
\end{equation}
where $\norm{W_i}_{1,\infty}$ is the maximum over hidden units in layer $i$ of the $\ell_1$ norm of
incoming weights to the hidden unit \cite{bartlett2002rademacher}. This suggests that the capacity scales roughly as $\prod_{i=1}^d \norm{W_i}^2_{1,\infty}$. In Chapter~\ref{chap:norm-based} we
show how the capacity can be controlled for a large family of norms.

\section{Robustness: Lipschitz Continuity with Respect to Input}\label{subsec:lipschitz}
Some of the measures/norms also control the Lipschitz constant of the model class with respect to its input such as the capacity based on \eqref{eq:L1-rademacher}. Is the capacity
control achieved through the bound on the Lipschitz constant?  Is bounding the
Lipschitz constant alone enough for generalization?  To answer these
questions, and in order to understand capacity control in terms of
Lipschitz continuity more broadly, we review here the relevant guarantees.

Given an input space $\calX$ and metric $\calM$, a function $f:\calX\rightarrow \R$ on a metric space $(\calX,\calM)$ is called
a Lipschitz function if there exists a constant $C_\calM$, such that $\abs{f(x)-f(y)}\leq C_\calM \calM(x,y)$.
\citet{luxburg2004distance} studied the capacity of functions with
bounded Lipschitz constant on metric space $(\calX,\calM)$ with a finite diameter $\text{diam}_\calM(\calX)=\sup_{x,y\in X} \calM(x,y)$ 
and showed that the capacity is proportional to $\left(\frac{C_\calM}{\gamma}\right)^n \text{diam}_\calM(\calX)$ where $\gamma$ is the margin. This capacity bound is weak as it has an
exponential dependence on input size.

Another related approach is through algorithmic robustness as suggested by \citet{xu2012robustness}. Given $\epsilon>0$, the model $f_\vecw$ found by a learning algorithm is $K$ robust if $\calX$ can be partitioned into $K$ disjoint sets, denoted as $\{C_i\}_{i=1}^K$, such that for any pair $(\vecx,y)$ in the training set $\mathbf{s}$ ,\footnote{\citet{xu2012robustness} have defined the robustness as a property of learning algorithm given the model class and the training set. Here since we are focused on the learned model, we introduce it as a property of the model.}
\begin{equation}
\vecx,\vecz\in C_i \Rightarrow \abs{\ell(\vecw,\vecx)-\ell(\vecw,\vecz)}\leq \epsilon
\end{equation}
\citet{xu2012robustness} showed the capacity of a model class whose
models are $K$-robust scales as $K$. For the model class of functions
with bounded Lipschitz $C_{\norm{.}}$, $K$ is proportional to
${\frac{C_{\norm{.}}}{\gamma}}$-covering number of the
input domain $\calX$ under norm $\norm{.}$ where $\gamma$ is the margin to get error $\epsilon$. However, the covering
number of the input domain can be exponential in the input dimension
and the capacity can still grow as
$\left(\frac{C_{\norm{.}}}{\gamma}\right)^n$~\footnote{Similar to margin-based bounds, we drop the term that depends on the diameter of the input space.}.

\paragraph{Feedforward Networks} Returning to our original question, the $C_{\ell_\infty}$ and $C_{\ell_2}$
Lipschitz constants of the network can be bounded by $\prod_{i=1}^d
\norm{W_i}_{1,\infty}$ (hence $\ell_1$-path norm) and $\prod_{i=1}^d
\norm{W_i}_{2}$,
respectively~\cite{xu2012robustness,sokolic2016generalization}. This
will result in a very large capacity bound that scales as
$\left(\frac{\prod_{i=1}^d \norm{W_i}_{2}}{\gamma}\right)^n$,
which is exponential in both the input dimension and depth of the
network. This shows that simply bounding the Lipschitz constant of the
network is not enough to get a reasonable capacity control and the
capacity bounds of the previous Section are not merely a consequence
of bounding the Lipschitz constant.

\section{PAC-Bayesian Framework: Sharpness with Respect to Parameters}\label{sec:pac-bayes-general}
The notion of sharpness as a generalization measure was recently suggested by \citet{keskar2016large} and corresponds to robustness to adversarial perturbations on the parameter space:
\begin{equation}
\zeta_\alpha(\vecW) = \frac{\max_{\abs{\vecu_i}\leq \alpha (\abs{\vecw_i}+\mathbf{1})}\hatL(f_{\vecw+\vecu}) - \hatL(f_\vecw)}{1+\hatL(f_\vecw)} \simeq \max_{\abs{\vecu_i}\leq \alpha (\abs{\vecw_i}+\mathbf{1})}\hatL(f_{\vecw+\vecu}) - \hatL(f_\vecw),
\end{equation}
where the training error $\hatL(f_\vecw)$ is generally very small in the case of neural networks in practice, so we can simply drop it from the denominator without a significant change in the sharpness value. 

Instead, we advocate viewing a related notion of expected sharpness in
the context of the PAC-Bayesian framework.  Viewed this way, it
becomes clear that sharpness controls only one of two relevant terms,
and must be balanced with some other measure such as norm.  Together,
sharpness and norm do provide capacity control and can explain many of
the observed phenomena.  This connection between sharpness and the
PAC-Bayes framework was also recently noted by \citet{dziugaite2017computing}. 

The PAC-Bayesian framework~\cite{mcallester1998some,mcallester1999pac}
provides guarantees on the expected error of a randomized predictor
(hypothesis), drawn from a distribution denoted $\mathcal{Q}$ and
sometimes referred to as a ``posterior'' (although it need {\em not}
be the Bayesian posterior), that depends on the training data. Let $f_\vecw$
be any predictor (not necessarily a neural network) learned from training data.
We consider a distribution $\mathcal{Q}$ over predictors with
weights of the form $\vecw+\vecu$, where $\vecw$ is a single predictor
learned from the training set, and $\vecu$ is a random variable.  Then,
given a ``prior'' distribution $P$ over the hypothesis that is
independent of the training data, with probability at least $1-\delta$
over the draw of the training data, the expected error of
$f_{\vecw+\vecu}$ can be bounded as follows~\cite{mcallester2003simplified}:
\begin{equation}\label{eq:pac-bayes-general}
\Ep{\vecu}{L(f_{\vecw+\vecu})} \leq \Ep{\vecu}{\hatL(f_{\vecw+\vecu})}+\sqrt{\Ep{\vecu}{\hatL(f_{\vecw+\vecu})}\calK}+\calK
\end{equation}
where $\calK=\frac{2\left(KL\left(\vecw+\vecu\|P\right)+\ln\frac{2m}{\delta}\right)}{m-1}$. When the training loss $\Ep{\vecu}{\hatL(f_{\vecw+\vecu})}$ is smaller than $\calK$, then the last term dominates. This is often the case for neural networks with small enough perturbation. One can also get the the following weaker bound:
\begin{small}
\begin{equation}\label{eq:pac-bayes-simple}
\Ep{\vecu}{L(f_{\vecw+\vecu})} \leq \Ep{\vecu}{\hatL(f_{\vecw+\vecu})} +2\sqrt{\frac{2\left(KL\left(\vecw+\vecu\|P\right)+\ln\frac{2m}{\delta}\right)}{m-1}}
\end{equation}
\end{small}
The above inequality clearly holds for $\calK\geq 1$ and for $\calK<1$ it can be derived from Equation~\eqref{eq:pac-bayes-general} by upper bounding the loss in the second term by $1$. We can rewrite the above bound as follows:
\begin{small}
\begin{equation}
\Ep{\vecu}{L(f_{\vecw+\vecu})} \leq \hatL(f_\vecw) + \underbrace{\Ep{\vecu}{\hatL(f_{\vecw+\vecu})} -\hatL(f_\vecw)}_{\text{expected sharpness}} +2\sqrt{\frac{2}{m-1}\left(KL\left(\vecw+\vecu\|P\right)+\ln\frac{2m}{\delta}\right)}
\end{equation}
\label{eq:pacbayes}
\end{small}
As we can see, the PAC-Bayes bound depends on two quantities - i) the
expected sharpness and ii) the Kullback Leibler (KL) divergence to the
``prior'' $P$.  The bound is valid for any distribution measure $P$,
any perturbation distribution $\vecu$ and any method of choosing
$\vecw$ dependent on the training set. 

Next, we present a result that gives a margin-based generalization bound using the PAC-Bayesian framework. The proof of the lemma uses similar ideas as in the proof for the case of linear separators, discussed by \citet{langford2003pac} and \citet{mcallester2003simplified}. This is a general result that holds for any hypothesis class and not specific to neural networks.

\begin{lemma}\label{lem:general-bound}
Let $f_\vecw(\vecx):\calX\rightarrow \R^{k}$ be any predictor (not necessarily a neural network) with parameters $\vecw$ and $P$ be any distribution on the parameters that is independent of the training data. For any $\gamma>0$, consider any set ${\calS_{\vecw}}$ of perturbations with the following property:
\begin{equation*}
{\calS_{\vecw}} \subseteq \left\{\vecw+\vecu  \;\bigg|\max_{\vecx \in \calX}\abs{f_{\vecw+\vecu}(\vecx)-f_{\vecw}(\vecx)}_\infty < \frac{\gamma}{4} \right\}
\end{equation*}
Let $\vecu$ be a random variable such that $\mathP\left[{\vecu}\in {\calS_{\vecw}}\right]\geq \frac{1}{2}$. Then, for any $\delta>0$, with probability $1-\delta$ over the training set, the generalization error can be bounded as follows:
\begin{equation*}
L_0(f_{\vecw}) \leq \hatL_{\gamma}(f_{\vecw})+ 4\sqrt{\frac{KL_{{\calS_{\vecw}}}\left(\vecw+\vecu\|P\right)+\ln\frac{4m}{\delta}}{m-1}}\\
\end{equation*}
where $KL_{\calS_{\vecw}}(Q||P)=\int_{{\calS_{\vecw}}} q(x)\ln\frac{q(x)}{p(x)}dx$.
\end{lemma}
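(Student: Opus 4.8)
The plan is to reduce the statement to the generic PAC-Bayesian bound \eqref{eq:pac-bayes-simple} applied to a \emph{truncated} posterior. Let $Q$ denote the distribution of the perturbed parameter vector $\vecw+\vecu$, let $Z\defeq\pr{}{\vecw+\vecu\in\calS_\vecw}\ge\tfrac12$, and let $\tilde Q$ be $Q$ conditioned on the event $\{\vecw+\vecu\in\calS_\vecw\}$, so that $\tilde Q$ has density $\tilde q=q/Z$ on $\calS_\vecw$ and $0$ elsewhere. The argument then splits into three essentially independent pieces: a deterministic ``margin transfer'' between $f_\vecw$ and the perturbed predictors lying in $\calS_\vecw$; an application of PAC-Bayes to $\tilde Q$ at margin level $\gamma/2$; and a comparison of $KL(\tilde Q\|P)$ with $KL_{\calS_\vecw}(\vecw+\vecu\|P)$.

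First I would prove the margin transfer. Fix any $\vecw'\in\calS_\vecw$, so $\max_{\vecx\in\calX}\abs{f_{\vecw'}(\vecx)-f_{\vecw}(\vecx)}_\infty<\gamma/4$. For a fixed $(\vecx,y)$, applying this bound once to the $y$-th coordinate and once to $\max_{j\ne y}f(\vecx)[j]$ (this is the reason two copies of $\gamma/4$ appear) shows: if $f_\vecw$ separates $(\vecx,y)$ with margin more than $\gamma$ then $f_{\vecw'}$ separates it with margin more than $\gamma/2$; and if $f_\vecw$ misclassifies $(\vecx,y)$ then $f_{\vecw'}$ has margin at most $\gamma/2$. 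Pointwise these give $\hatL_{\gamma/2}(f_{\vecw'})\le\hatL_{\gamma}(f_\vecw)$ and $L_0(f_\vecw)\le L_{\gamma/2}(f_{\vecw'})$ for every $\vecw'\in\calS_\vecw$; averaging over $\vecw'\sim\tilde Q$ (which is supported on $\calS_\vecw$) yields $L_0(f_\vecw)\le\Ep{\vecw'\sim\tilde Q}{L_{\gamma/2}(f_{\vecw'})}$ and $\Ep{\vecw'\sim\tilde Q}{\hatL_{\gamma/2}(f_{\vecw'})}\le\hatL_{\gamma}(f_\vecw)$.

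Next I would apply \eqref{eq:pac-bayes-simple} — which, like its parent \eqref{eq:pac-bayes-general}, is valid for any loss taking values in $[0,1]$ — with posterior $\tilde Q$, the data-independent prior $P$, and the $[0,1]$-valued margin-$\gamma/2$ loss; this is legitimate since $\tilde Q$ may depend on the training data, here only through $\vecw$ and $\calS_\vecw$. Combined with the two inequalities above, with probability $1-\delta$ one gets $L_0(f_\vecw)\le\hatL_{\gamma}(f_\vecw)+2\sqrt{2\bigl(KL(\tilde Q\|P)+\ln\tfrac{2m}{\delta}\bigr)/(m-1)}$. It remains to bound $KL(\tilde Q\|P)$. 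A one-line computation from $\tilde q=q/Z$ gives the identity $KL(\tilde Q\|P)=\tfrac1Z\,KL_{\calS_\vecw}(\vecw+\vecu\|P)-\ln Z$, and since $Z\ge\tfrac12$ this is at most $2\,KL_{\calS_\vecw}(\vecw+\vecu\|P)+\ln2$. Substituting, and using $2\bigl(2KL_{\calS_\vecw}+\ln2+\ln\tfrac{2m}{\delta}\bigr)=4KL_{\calS_\vecw}+2\ln\tfrac{4m}{\delta}\le4\bigl(KL_{\calS_\vecw}+\ln\tfrac{4m}{\delta}\bigr)$, collapses the square-root term to exactly $4\sqrt{\bigl(KL_{\calS_\vecw}(\vecw+\vecu\|P)+\ln\tfrac{4m}{\delta}\bigr)/(m-1)}$, as claimed.

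I expect the last step — the comparison of $KL(\tilde Q\|P)$ with $KL_{\calS_\vecw}$ — to be the only delicate point. The identity itself is immediate, but one must check that the factor $1/Z\le2$ and the additive $-\ln Z\le\ln2$ are absorbed into exactly the constants appearing in the statement (the leading $4$ and the $\ln\tfrac{4m}{\delta}$) rather than into merely $O(\cdot)$ versions of them, and this is the sole place the hypothesis $Z\ge\tfrac12$ enters; one should also note that $KL_{\calS_\vecw}$, being an integral of $q\ln(q/p)$ over a subset, need not be nonnegative, in which case one falls back on $KL(\tilde Q\|P)\ge0$ together with $-\ln Z\le\ln2$ (this case is immaterial in the applications of the lemma, where $P$ and $Q$ are Gaussians and $KL_{\calS_\vecw}\ge0$). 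Everything else is either elementary coordinate-wise bookkeeping or a direct citation of the PAC-Bayes inequality already recorded in this chapter.
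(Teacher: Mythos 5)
Your proposal is correct and follows essentially the same route as the paper's own proof: truncate the posterior to $\calS_{\vecw}$, transfer margins by $\gamma/2$ using the $\gamma/4$ output-perturbation bound, apply the PAC-Bayes inequality \eqref{eq:pac-bayes-simple} to the truncated posterior, and control $KL(\tilde{Q}\|P)$ via $Z\geq\tfrac12$. Your accounting of the constants (using $-\ln Z\leq\ln 2$ rather than the paper's cruder $+1$) and your remark that $KL_{\calS_{\vecw}}$ need not be nonnegative are both slightly more careful than the original, but they do not change the argument.
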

\begin{proof}
Let $q$ be the probability density function for $\vecw+\vecu$. We consider the distribution $\tilde{Q}$ with the following probability density function:
\begin{equation*}
\tilde{q}(\vecr)=\frac{1}{Z}
\begin{cases}
q(\vecr) & \vecr \in {\calS_{\vecw}}\\
0 & \text{otherwise}.
\end{cases}
\end{equation*}
where $Z$ is a normalizing constant and by the lemma assumption $Z = \mathP \left[ \vecw+\vecu \in {\calS_{\vecw}}\right] \geq \frac{1}{2}$. Therefore, we have:
\begin{equation}
KL(\tilde{Q}\|P) =\int \tilde{q}(\vecr) \ln\frac{\tilde{q}(\vecr)}{p(\vecr)}d\vecr \leq 2\int_{{\calS_{\vecw}}} q(\vecr)\ln\frac{q(\vecr)}{p(\vecr)}d\vecr + 1
\end{equation}
Consider $\vecw+{\tilde{\vecu}}$ to be the random perturbation centered at $\vecw$ drawn from $\tilde{Q}$. By the definition of $\tilde{Q}$, we know that for any perturbation ${\tilde{\vecu}}$:
\begin{equation}
\max_{\vecx \in \calX}\abs{f_{\vecw+{\tilde{\vecu}}}(\vecx)-f_{\vecw}(\vecx)}_\infty < \frac{\gamma}{4}
\end{equation}
Therefore, the perturbation ${\tilde{\vecu}}$ can change the margin between two output units of $f_\vecw$ by at most $\frac{\gamma}{2}$; i.e. for any perturbation ${\tilde{\vecu}}$ drawn from $\tilde{Q}$:
\begin{equation*}
\max_{i,j\in[k],\vecx \in \calX}\abs{\left(\abs{f_{\vecw+{\tilde{\vecu}}}(\vecx)[i]-f_{\vecw+{\tilde{\vecu}}}(\vecx)[j]}\right)-\left(\abs{f_{\vecw}(\vecx)[i]-f_{\vecw}(\vecx)[j]}\right)} < \frac{\gamma}{2}
\end{equation*}
Since the above bound holds for any $\vecx$ in the domain $\calX$, we can get the following inequalities:
\begin{align*}
&L_0(f_{\vecw}) \leq L_{\frac{\gamma}{2}}(f_{\vecw+{\tilde{\vecu}}})\\
&\hatL_{\frac{\gamma}{2}}(f_{\vecw+{\tilde{\vecu}}}) \leq \hatL_{\gamma}(f_{\vecw})
\end{align*}
Now using the above inequalities together with the equation~\eqref{eq:pacbayes}, with probability $1-\delta$ over the training set we have:
\begin{align*}
L_0(f_{\vecw}) &\leq \Ep{{\tilde{\vecu}}}{L_{\frac{\gamma}{2}}(f_{\vecw+{\tilde{\vecu}}})}\\
&\leq \Ep{{\tilde{\vecu}}}{\hatL_{\frac{\gamma}{2}}(f_{\vecw+{\tilde{\vecu}}})}+ 2\sqrt{\frac{2(KL\left(\vecw+{\tilde{\vecu}}\|P\right)+\ln\frac{2m}{\delta})}{m-1}}\\
&\leq \hatL_{\gamma}(f_{\vecw})+ 2\sqrt{\frac{2(KL\left(\vecw+{\tilde{\vecu}}\|P\right)+\ln\frac{2m}{\delta})}{m-1}}\\
&\leq \hatL_{\gamma}(f_{\vecw})+ 4\sqrt{\frac{KL_{\calS_{\vecw}}\left(\vecw+\vecu\|P\right)+\ln\frac{4m}{\delta}}{m-1}},
\end{align*}
\end{proof}
 
\paragraph{Feedforward Networks} This connection between sharpness and the PAC-Bayesian framework was
also recently noticed by \citet{dziugaite2017computing}, who optimize
the PAC-Bayes generalization bound over a family of multivariate
Gaussian distributions, extending the work of \citet{langford2001not}.
They show that the optimized PAC-Bayes bounds are numerically
non-vacuous for feedforward networks trained on a binary classification
variant of MNIST dataset.
\chapter{On the Role of Implicit Regularization in Generalization} \label{chap:implicit}

Central to any form of learning is an inductive bias that induces some
sort of capacity control (i.e.~restricts or encourages predictors to
be ``simple'' in some way), which in turn allows for generalization.
The success of learning then depends on how well the inductive bias
captures reality (i.e.~how expressive is the hypothesis class of
``simple'' predictors) relative to the capacity induced, as well as on
the computational complexity of fitting a ``simple'' predictor to the
training data.

Let us consider learning with feed-forward networks from this
perspective. If we search for the weights minimizing the training
error, we are essentially considering the hypothesis class of
predictors representable with different weight vectors, typically for
some fixed architecture.  We showed in Section~\ref{sec:vc}
that the capacity can then be controlled by the size
(number of weights) of the network.  Our justification for using such networks is
then that many interesting and realistic functions can be represented
by not-too-large (and hence bounded capacity) feed-forward networks.
Indeed, in many cases we can show how specific architectures can
capture desired behaviors.  More broadly, any $O(T)$ time computable
function can be captured by an $O(T^2)$ sized network, and so the
expressive power of such networks is indeed great~\cite[Theorem~9.25]{sipser2012}.

At the same time, we also know that learning even moderately sized
networks is computationally intractable---not only is it NP-hard to
minimize the empirical error, even with only three hidden units, but
it is hard to learn small feed-forward networks using {\em any}
learning method (subject to cryptographic assumptions).  That is, even
for binary classification using a network with a single hidden layer and a logarithmic (in the
input size) number of hidden units, and even if we know the true
targets are {\em exactly} captured by such a small network, there is
likely no efficient algorithm that can ensure error better than 1/2
\citep{klivans2006cryptographic,Daniely14}---not if the algorithm
tries to fit such a network, not even if it tries to fit a much larger
network, and in fact no matter how the algorithm represents
predictors.  And so, merely knowing that some not-too-large
architecture is excellent in expressing reality does {\em not} explain
why we are able to learn using it, nor using an even larger network.
Why is it then that we succeed in learning using multilayer
feed-forward networks?  Can we identify a property that makes them
possible to learn?  An alternative inductive bias?

Here, we make our first steps at shedding light on this question by
going back to our understanding of network size as the capacity
control at play.  

Our main observation, based on empirical experimentation with
single-hidden-layer networks of increasing size (increasing number of
hidden units), is that size does {\em not} behave as a capacity
control parameter, and in fact there must be some other, implicit,
capacity control at play.  We suggest that this hidden capacity
control might be the real inductive bias when learning with deep
networks.

In order to try to gain an understanding at the possible inductive
bias, we draw an analogy to matrix factorization and understand
dimensionality versus norm control there.  Based on this analogy we
suggest that implicit norm regularization might be central also for
deep learning, and also there we should think of bounded-norm models
with capacity independent of number of hidden units.

\removed{We then also demonstrate how (implicit) $\ell_2$
weight decay in an infinite two-layer network gives rise to a ``convex
neural net'', with an infinite hidden layer and $\ell_1$ (not $\ell_2$)
regularization in the top layer. }

\section{Network Size and Generalization}\label{sec:netsize}

Consider training a feedforward network by finding the weights
minimizing the training error.  Specifically, we will consider a fully connected
feedforward networks with one hidden layer that includes $H$ hidden units.
The weights learned by minimizing a soft-max cross entropy loss
\footnote{When using soft-max
  cross-entropy, the loss is never exactly zero for correct
  predictions with finite margins/confidences.  Instead, if the data
  is seperable, in order to minimize the loss the weights need to be
  scaled up toward infinity and the cross entropy loss goes to zero,
  and a global minimum is never attained.  In order to be able to say
  that we are actually reaching a zero loss solution, and hence a
  global minimum, we use a slightly modified soft-max which does not
  noticeably change the results in practice.  This truncated loss
  returns the same exact value for wrong predictions or correct
  prediction with confidences less than a threshold but returns zero
  for correct predictions with large enough margins: Let
  $\{s_i\}_{i=1}^k$ be the scores for $k$ possible labels and $c$ be
  the correct labels. Then the soft-max cross-entropy loss can be
  written as $\ell(s,c) = \ln \sum_{i} \exp(s_i - s_c)$ but we instead
  use the differentiable loss function $\hat{\ell}(s,c) = \ln \sum_{i}
  f(s_i-s_c)$ where $f(x)=\exp(x)$ for $x\geq -11$ and $f(x)
  =\exp(-11) [x+13]_+^2/4$ otherwise. Therefore, we only deviate from
  the soft-max cross-entropy when the margin is more than $11$, at
  which point the effect of this deviation is negligible (we always
  have $\abs{\ell(s,c)-\hat{\ell}(s,c)}\leq 0.000003k$)---if there are
  any actual errors the behavior on them would completely dominate
  correct examples with margin over $11$, and if there are no errors
  we are just capping the amount by which we need to scale up the
  weights.} on $n$ labeled training examples.  The total number of
weights is then $H(\abs{V_\In}+\abs{V_\Out})$.

What happens to the training and test errors when we increase the
network size $H$? The training error will necessarily decrease.  The
test error might initially decrease as the approximation error is
reduced and the network is better able to capture the targets.
However, as the size increases further, we loose our capacity control
and generalization ability, and should start overfitting.  This is the
classic approximation-estimation tradeoff behavior.

Consider, however, the results shown in Figure \ref{fig:inductive}, where
we trained networks of increasing size on the MNIST and CIFAR-10
datasets.  Training was done using stochastic gradient descent with
momentum and diminishing step sizes, on the training error and without
any explicit regularization.  As expected, both training and test
error initially decrease.  More surprising is that if we increase the
size of the network past the size required to achieve zero training
error, the test error continues decreasing!  This behavior is not at
all predicted by, and even contrary to, viewing learning as fitting a
hypothesis class controlled by network size.  For example for MNIST, 32 units
are enough to attain zero training error.  When we allow more units,
the network is not fitting the training data any better, but the
estimation error, and hence the generalization error, should increase
with the increase in capacity.  However, the test error goes down.  In
fact, as we add more and more parameters, even beyond the number
of training examples, the generalization error does not go up.

\begin{figure}
\centering
\subfloat[MNIST]{
  \includegraphics[width=65mm]{./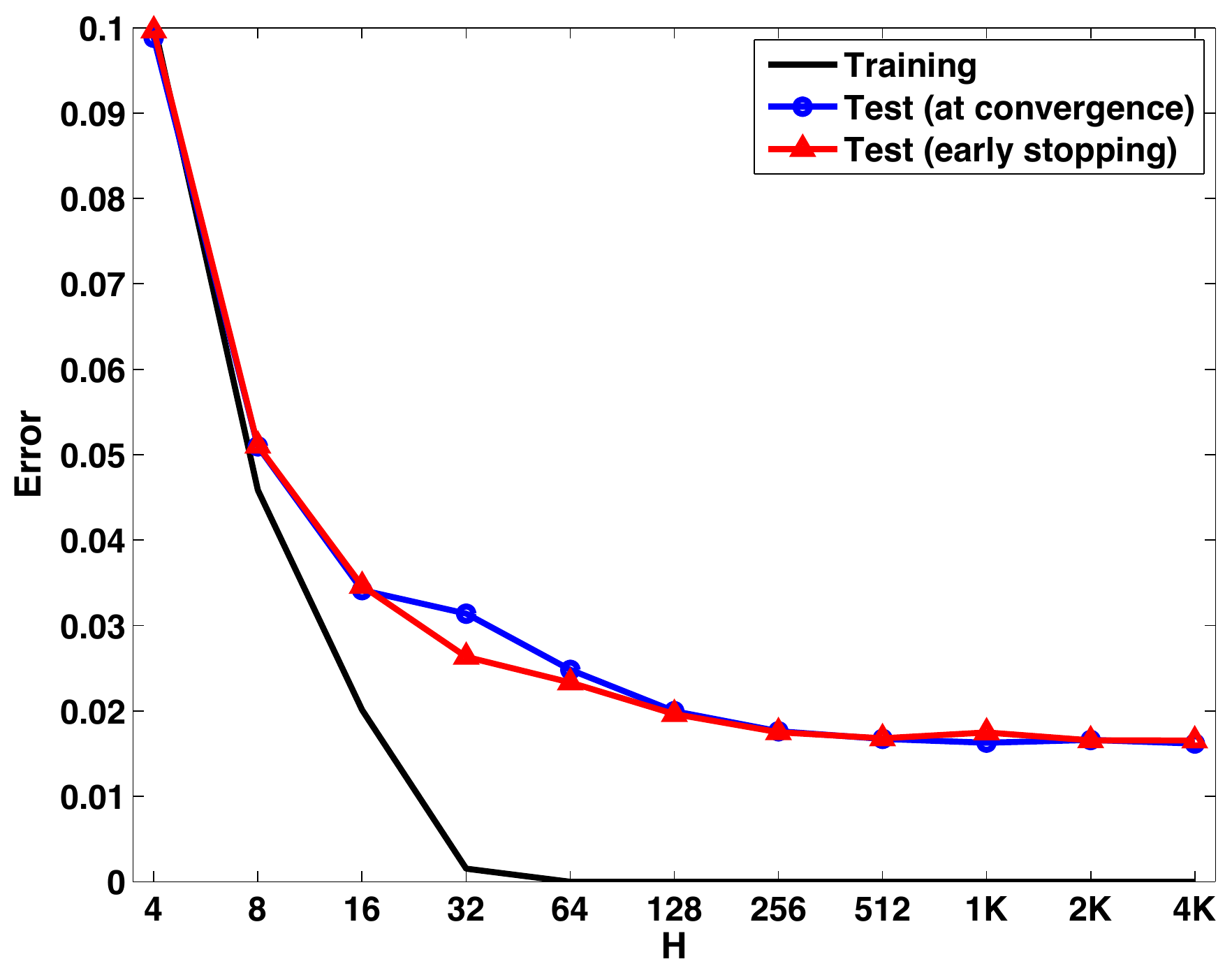}
}
\subfloat[CIFAR10]{
  \includegraphics[width=65mm]{./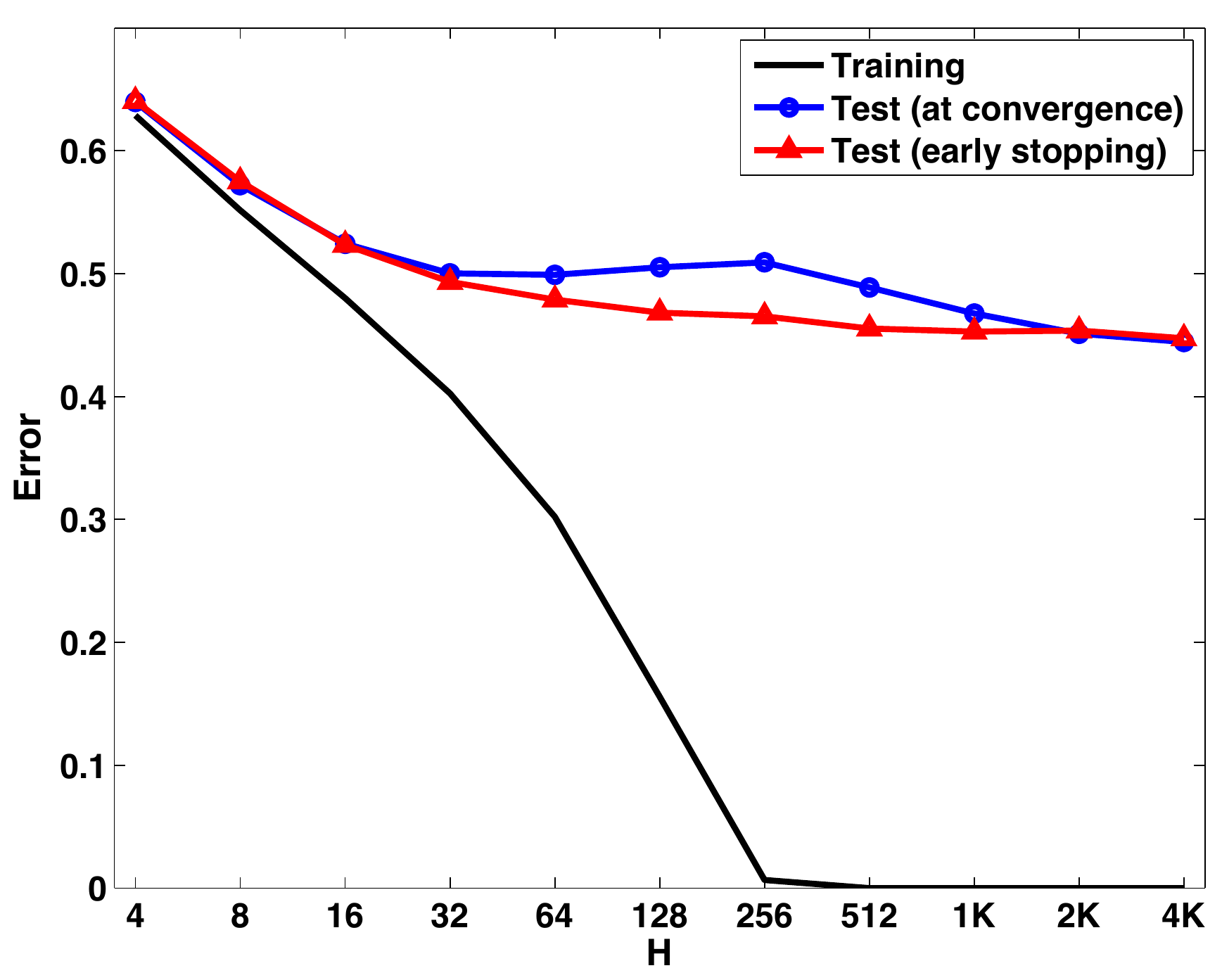}
}
\caption[\small The training and test error of a two-layer perceptron for varying number of hidden units]{\small The training error and the test error based on different stopping
  criteria when 2-layer NNs with different number of hidden units
  are trained on MNIST and CIFAR-10. Images in both datasets are downsampled
  to 100 pixels. The size of the training set is 50000 for MNIST and 40000 for CIFAR-10.
  The early stopping is based on the error on a validation set
  (separate from the training and test sets) of size 10000. The training was
  done using stochastic gradient descent with momentum and mini-batches
  of size 100. The network was initialized with weights generated randomly from
  the Gaussian distribution. The initial step size and momentum were set to 0.1 and 0.5
  respectively. After each epoch, we used the update rule $\mu^{(t+1)}=0.99\mu^{(t)}$
  for the step size and $m^{(t+1)}=\min\{0.9,m^{(t)}+0.02\}$ for the momentum. 
  \label{fig:inductive}}
\end{figure}

We also further tested this phenomena under some artificial
mutilations to the data set.  First, we wanted to artificially ensure
that the approximation error was indeed zero and does not decrease as
we add more units.  To this end, we first trained a network with a
small number $H_0$ of hidden units ($H_0=4$ on MNIST and $H_0=16$ on
CIFAR) on the entire dataset (train+test+validation).  This network
did have some disagreements with the correct labels, but we then
switched all labels to agree with the network creating a ``censored''
data set.  We can think of this censored data as representing an
artificial source distribution which can be exactly captured by a
network with $H_0$ hidden units.  That is, the approximation error is zero
for networks with at least $H_0$ hidden units, and so does not
decrease further. Still, as can be seen in the middle row of
Figure~\ref{fig:netsize}, the test error continues decreasing even
after reaching zero training error.

Next, we tried to force overfitting by adding random label noise to
the data.  We wanted to see whether now the network will use its
higher capacity to try to fit the noise, thus hurting generalization.
However, as can be seen in the bottom row of Figure~\ref{fig:netsize},
even with five percent random labels, there is no significant
overfitting and test error continues decreasing as network size
increases past the size required for achieving zero training error.

\begin{figure}
\includegraphics[width=65mm]{./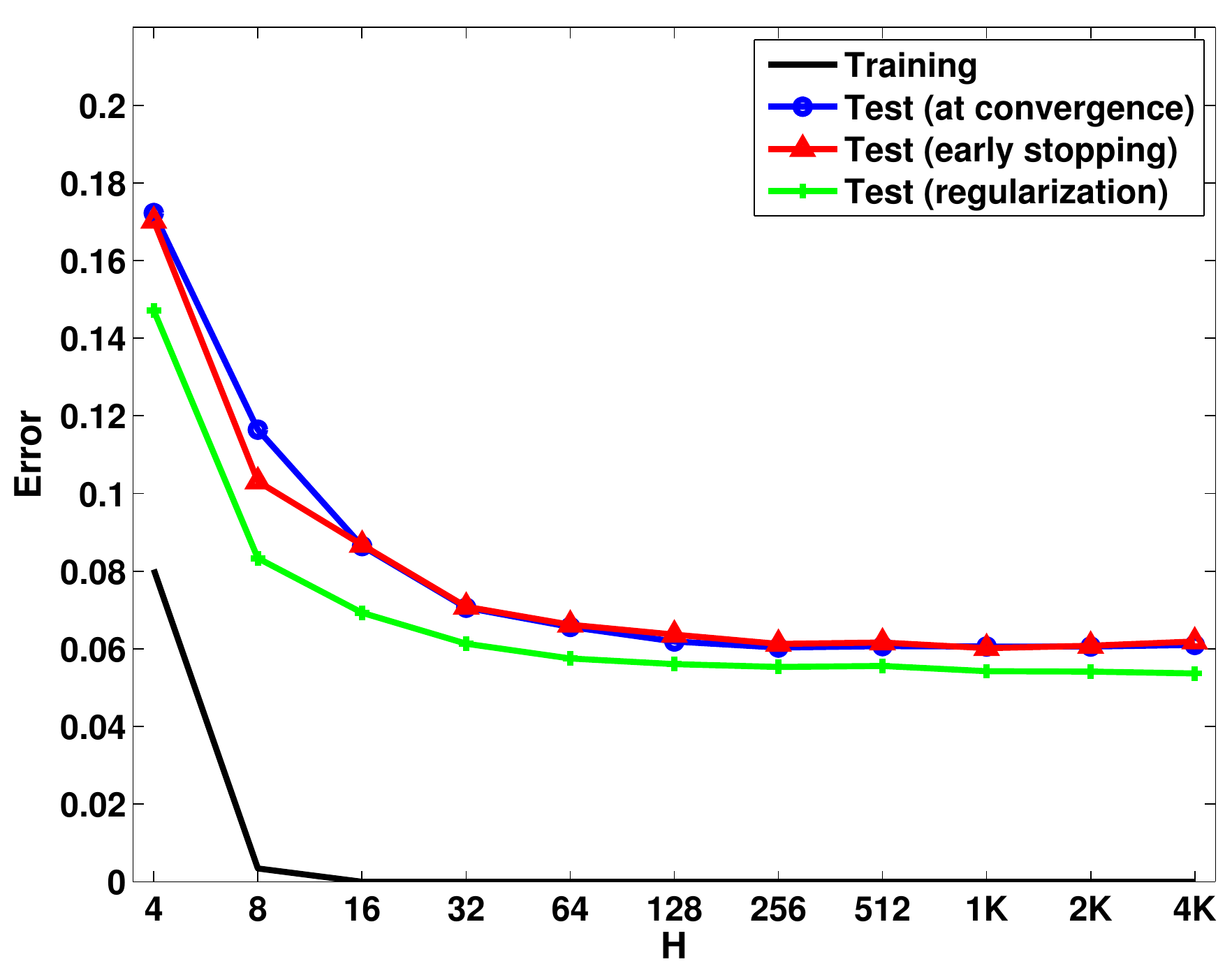}
\includegraphics[width=65mm]{./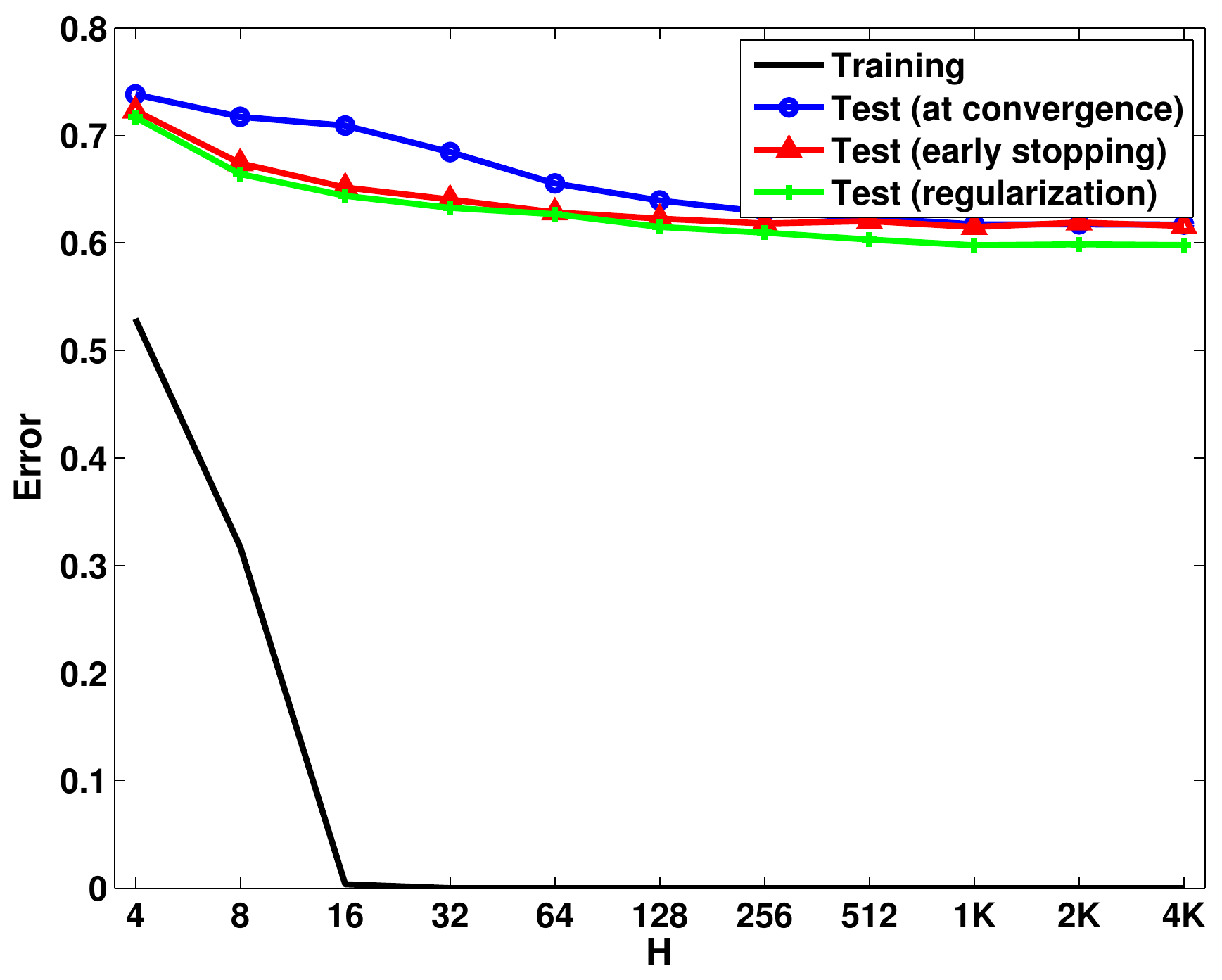}\\
\includegraphics[width=65mm]{./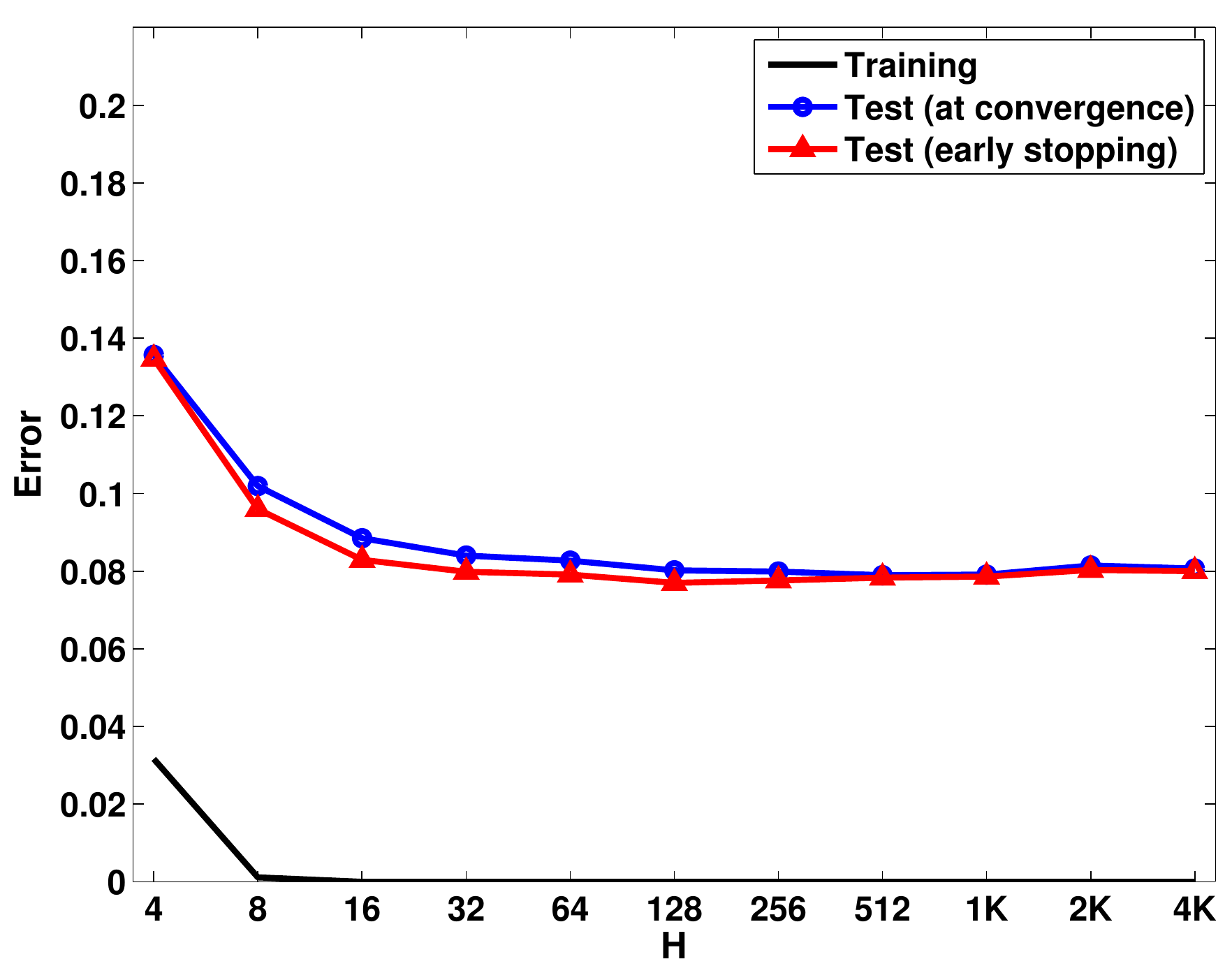}
\includegraphics[width=65mm]{./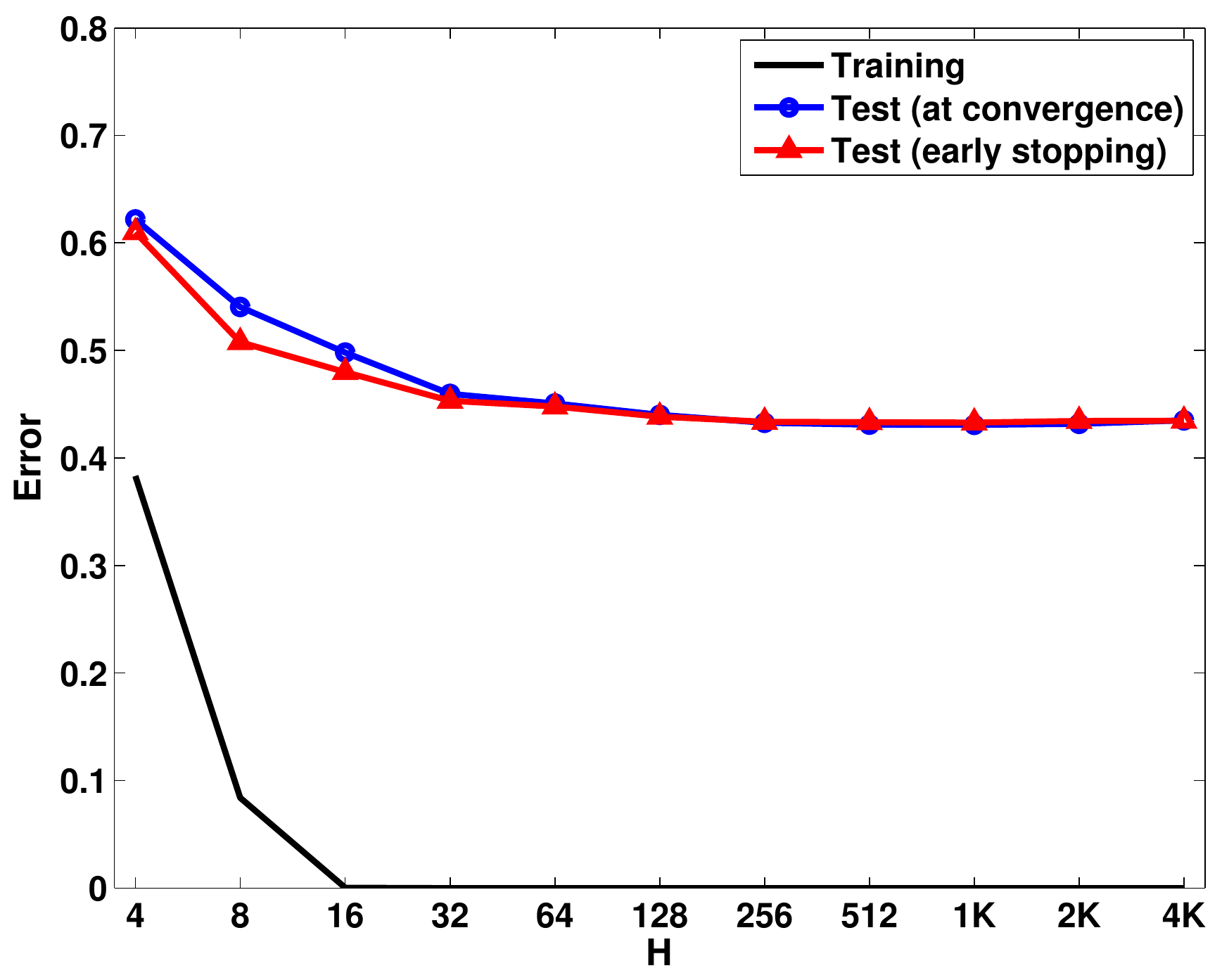}\\
\includegraphics[width=65mm]{./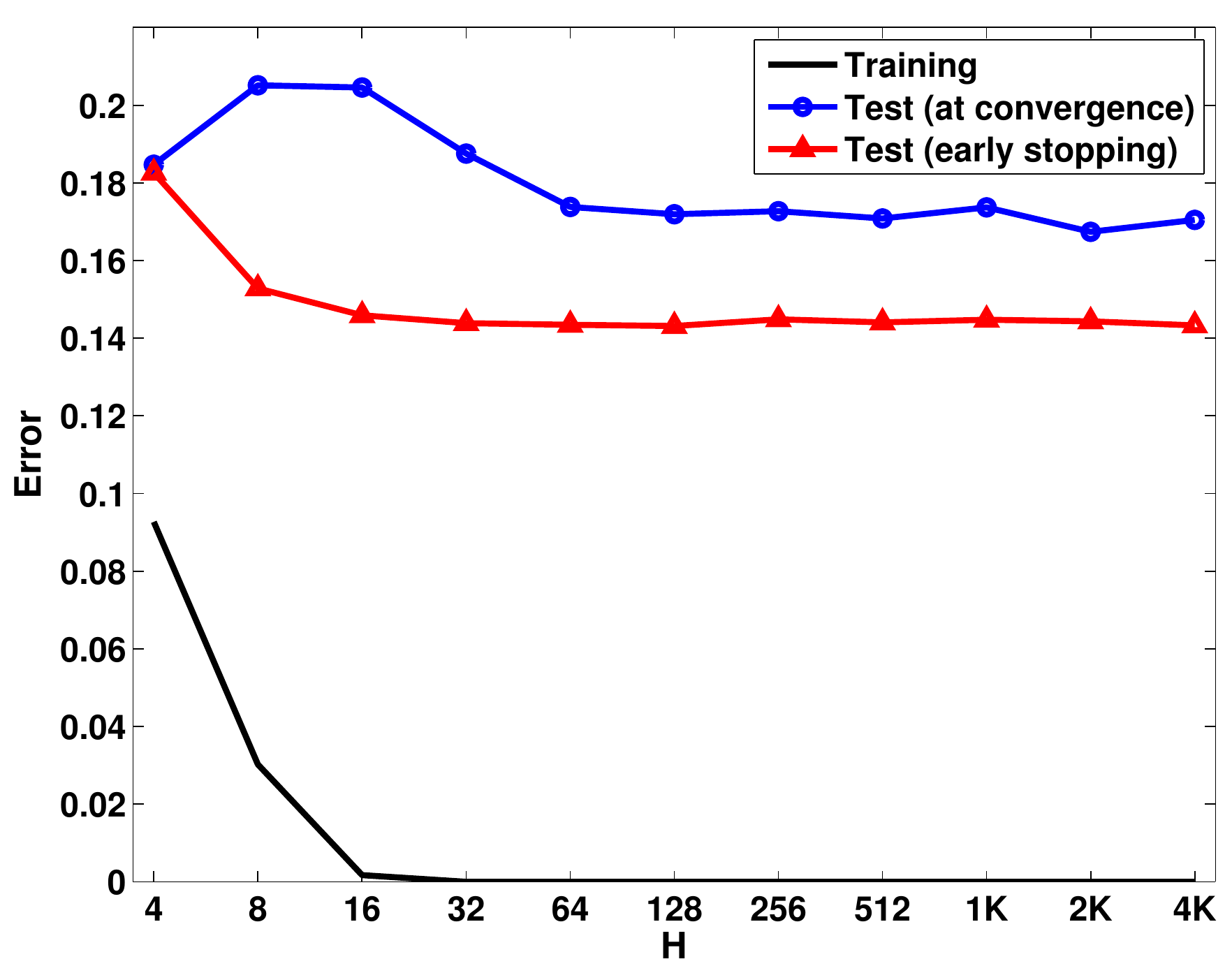}
\includegraphics[width=65mm]{./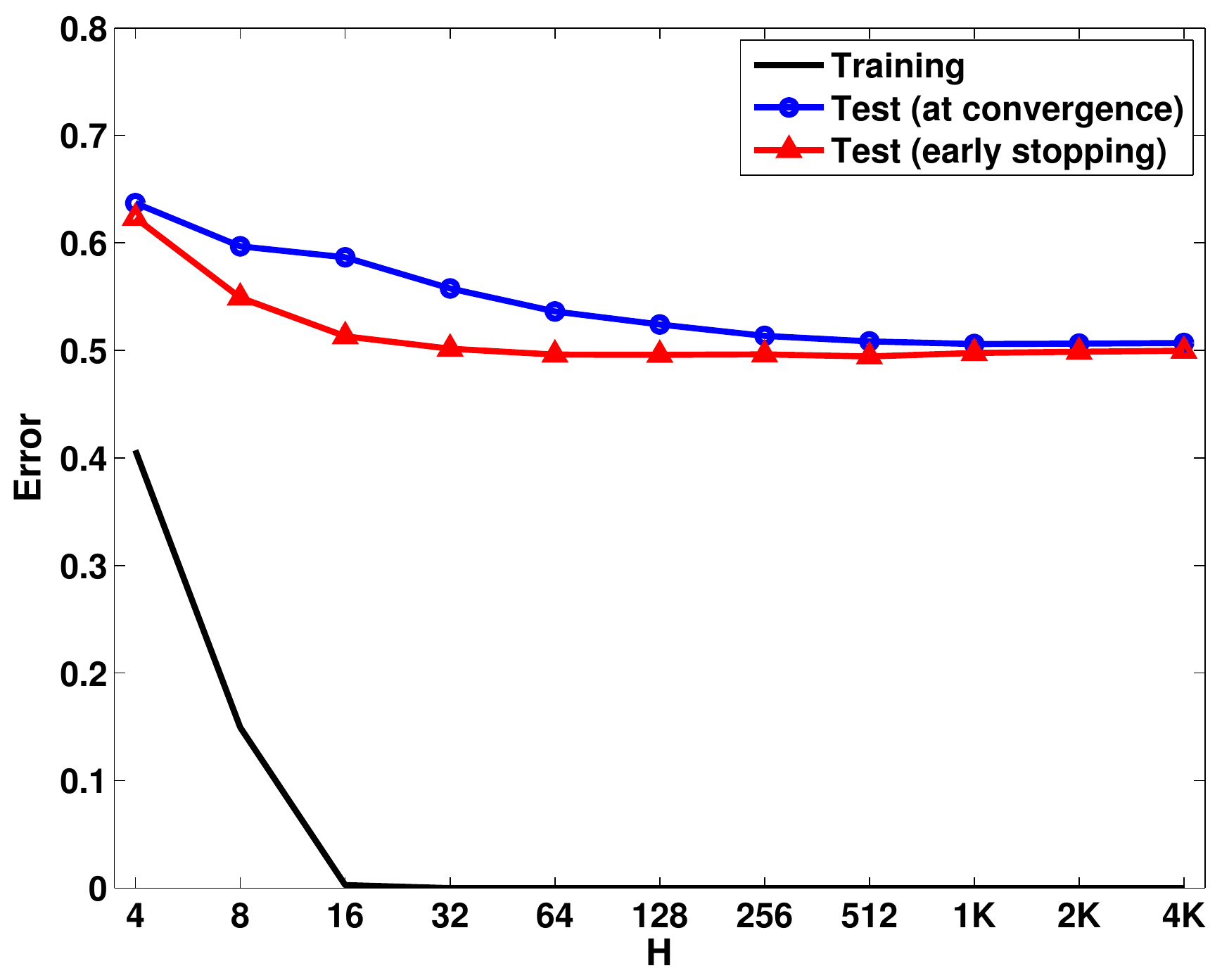}\\
\begin{picture}(0,0)(0,0)
\put(70, 460){Small MNIST}
\put(255, 460){Small CIFAR-10}
\end{picture}
\caption[\small The generalization behavior a two-layer perceptron for varying number of hidden units]{\small The training error and the test error based on different stopping
  criteria when 2-layer NNs with different number of hidden units
  are trained on small subsets of MNIST and CIFAR-10. Images in both
  datasets are downsampled to 100 pixels. The sizes of the training
  and validation sets are 2000 for both MNIST and CIFAR-10 and the early stopping
  is based on the error on the validation set. The top plots are
  the errors for the original datasets with and without explicit regularization.The best
  weight decay parameter is chosen based on the validation error.
  The middle plots are on the censored data set that is constructed
  by switching all the labels to agree with the predictions of a trained network
  with a small number $H_0$ of hidden units  $H_0=4$ on MNIST and $H_0=16$
  on CIFAR-10) on the  entire dataset (train+test+validation).
  The plots on the bottom are also for the censored data except we also add 5 percent
  noise to the labels by randomly changing 5 percent of the labels. 
  The optimization method is the same as the in Figure 1.  The results in this figure are the average error over 5 random repetitions.
\label{fig:netsize}
}
\end{figure}

What is happening here?  A possible explanation is that the
optimization is introducing some implicit regularization.  That is, we are
implicitly trying to find a solution with small ``complexity'', for
some notion of complexity, perhaps norm.  This can explain why we do
not overfit even when the number of parameters is huge.  Furthermore,
increasing the number of units might allow for solutions that actually
have lower ``complexity'', and thus generalization better.  Perhaps an
ideal then would be an infinite network controlled only through this
hidden complexity.

We want to emphasize that we are not including any explicit
regularization, neither as an explicit penalty term nor by modifying
optimization through, e.g., drop-outs, weight decay, or with
one-pass stochastic methods.  We are using a stochastic method, but we
are running it to convergence---we achieve zero surrogate loss and zero training error. In fact, we also tried training using batch conjugate
gradient descent and observed almost identical behavior.  But it seems
that even still, we are not getting to some random global
minimum---indeed for large networks the vast majority of the many
global minima of the training error would horribly overfit.  Instead,
the optimization is directing us toward a ``low complexity'' global
minimum.

Although we do not know what this hidden notion of complexity is, as a
final experiment we tried to see the effect of adding explicit
regularization in the form of weight decay.  The results are shown in the top
row of figure~\ref{fig:netsize}. There is a slight improvement in generalization
but we still see that increasing the network size helps generalization.

\section{A Matrix Factorization Analogy}

To gain some understanding at what might be going on, let us consider
a slightly simpler model which we do understand much better.  Instead of
rectified linear activations, consider a feed-forward network with a
single hidden layer, and {\em linear} activations, i.e.:
\begin{equation}
  \label{eq:ykl}
  f_{\vecU,\vecV}(\vecx) = \vecU\vecV\vecx
\end{equation}
This is of course simply a matrix-factorization model, where $ f_{\vecW}(\vecx)=\vecW\vecx$
and $\vecW=\vecV\vecU$.  Controlling capacity by limiting the number of hidden
units exactly corresponds to constraining the rank of $\vecW$,
i.e.~biasing toward low dimensional factorizations.  Such a low-rank
inductive bias is indeed sensible, though computationally intractable
to handle with most loss functions.

However, in the last decade we have seen much success for learning
with low {\em norm} factorizations.  In such models, we do not
constrain the inner dimensionality $H$ of $\vecU,\vecV$, and instead only
constrain, or regularize, their norm.  For example, constraining the
Frobenius norm of $\vecU$ and $\vecV$ corresponds to using the {\em
  trace-norm} as an inductive bias \citep{Srebro04}:
\begin{equation}\label{eq:tracenorm}
  \norm{\vecW}_{\text{tr}} = \min_{\vecW=\vecV\vecU} \frac{1}{2}(\norm{\vecU}_F^2+\norm{\vecV}_F^2).
\end{equation}
Other norms of the factorization lead to different regularizers.  

Unlike the rank, the trace-norm (as well as other factorization norms)
is convex, and leads to tractable learning problems
\citep{Fazel01,Srebro04}.  In fact, even if learning is done by a
local search over the factor matrices $\vecU$ and $\vecV$ (i.e.~by a local
search over the weights of the network), if the dimensionality is high
enough and the norm is regularized, we can ensure convergence to a
global minima \citep{Burer06}. This is in stark contrast to the
dimensionality-constrained low-rank situation, where the limiting
factor is the number of hidden units, and local minima are abundant
\citep{Srebro03}.

Furthermore, the trace-norm and other factorization norms are
well-justified as sensible inductive biases.  We can ensure
generalization based on having low trace-norm, and a low-trace norm
model corresponds to a realistic factor model with many factors of
limited overall influence.  In fact, empirical evidence suggests that
in many cases low-norm factorization are a more appropriate inductive
bias compared to low-rank models.

We see, then, that in the case of linear activations (i.e.~matrix
factorization), the norm of the factorization is in a sense a better
inductive bias than the number of weights: it ensures generalization,
it is grounded in reality, and it explain why the models can be learned
tractably.

Recently, \citet{gunasekar2017implicit} provided empirical and theoretical evidence on the implicit
regularization of gradient descent for matrix factorization. They showed that 
gradient descent on the full dimensional factorizations without any explicit regularization
 indeed converges to the minimum trace norm solution with initialization close enough
 to the origin and small enough step size.

Let us interpret the experimental results of Section \ref{sec:netsize} in this
light.  Perhaps learning is succeeding not because there is a good
representation of the targets with a small number of units, but rather
because there is a good representation with small overall norm, and
the optimization is implicitly biasing us toward low-norm models.
Such an inductive bias might potentially explain both the
generalization ability {\em and} the computational tractability of
learning, even using local search.

Under this interpretation, we really should be using infinite-sized
networks, with an infinite number of hidden units.  Fitting a finite
network (with implicit regularization) can be viewed as an
approximation to fitting the ``true'' infinite network.  This
situation is also common in matrix factorization: e.g., a very
successful approach for training low trace-norm models, and other
infinite-dimensional bounded-norm factorization models, is to
approximate them using a finite dimensional representation
\cite{rennie2005fast,srebro2010collaborative}.  The finite
dimensionality is then not used at all for capacity (statistical
complexity) control, but purely for computational reasons.  Indeed, increasing
the allowed dimensionality generally improves generalization performance, as it
allows us to better approximate the true infinite model. Inspired by these experiments, in order to understand the implicit regularization in deep learning, we will next look at ways of controlling the capacity independent of the number of hidden units.

\chapter{Norm-based Capacity Control} \label{chap:norm-based}
As we discussed in Section~\ref{sec:vc} statistical complexity, or capacity, of {\em unregularized}
feed-forward neural networks, as a function of the network size and
depth, is fairly well understood. But feedforward networks are often trained with some
kind of regularization, such as weight decay,
early stopping, ``max regularization'', or more exotic
regularization such as drop-outs. We also showed in Chapter~\ref{chap:implicit} that
even without any explicit regularization, the capacity of neural networks is being controlled
by a form of implicit regularization caused by optimization which does not depend on the size
of the network. What is the effect of such
regularization on the induced model class and its capacity?

For linear prediction (a one-layer feed-forward network) we know that
using regularization the capacity of the class can be bounded only in
terms of the norms, with no (or a very weak) dependence on the number
of edges (i.e.~the input dimensionality or
number of linear coefficients).  E.g., we understand very well how the
capacity of $\ell_2$-regularized linear predictors can be bounded in
terms of the norm alone (when the norm of the data is also bounded),
even in infinite dimension.

A central question we ask is: can we bound the capacity of
feed-forward network in terms of norm-based regularization alone,
without relying on network size and even if the network size (number
of nodes or edges) is unbounded or infinite?  What type of
regularizers admit such capacity control?  And how does the capacity
behave as a function of the norm, and perhaps other network
parameters such as depth?

Beyond the central question of capacity control, we also analyze the
convexity of the resulting model class---unlike unregularized
size-controlled feed-forward networks, infinite magnitude-controlled
networks have the potential of yielding convex model classes
(this is the case, e.g., when we move from rank-based control on
matrices, which limits the number of parameters to magnitude based
control with the trace-norm or max-norm).  A convex class might be
easier to optimize over and might be convenient in other ways.

In this chapter we focus on two natural types of norm regularization: bounding the norm of the
incoming weights of each unit (per-unit regularization) and bounding
the overall norm of all the weights in the system jointly (overall
regularization, e.g.~limiting the overall sum of the magnitudes, or
square magnitudes, in the system).  We generalize both of these with a
single notion of group-norm regularization: we take the $\ell_p$
norm over the weights in each unit and then the $\ell_q$ norm over
units.  In Section \ref{sec:group} we present this regularizer and
obtain a tight understanding of when it provides for size-independent
capacity control and a characterization of when it induces convexity.
We then apply these generic results to per-unit regularization
(Section \ref{sec:path}) and overall regularization (Section
\ref{sec:overall}), noting also other forms of regularization that are
equivalent to these two.  In particular, we show how per-unit
regularization is equivalent to a novel path-based regularizer
and how overall $\ell_2$ regularization for two-layer networks is
equivalent to so-called ``convex neural networks''
\citep{Bengio05}.  In terms of capacity control, we show that
per-unit regularization allows size-independent capacity-control only
with a per-unit $\ell_1$-norm, and that overall $\ell_p$
regularization allows for size-independent capacity control only when
$p \leq 2$, even if the depth is bounded.  In any case, even if we
bound the sum of all magnitudes in the system, we show that an
exponential dependence on the depth is unavoidable.

As far as we are aware, prior work on size-independent capacity
control for feed-forward networks considered only per-unit $\ell_1$
regularization, and per-unit $\ell_2$ regularization for two-layered
networks (see discussion and references at the beginning of Section
\ref{sec:path}). Recently, \citet{bartlett2017spectrally} have shown a
generalization bound based on the product of spectral norm of the layers
using covering numbers. In Chapter~\ref{chap:sharpness}, we show a simpler prove for a tighter bound.
Here, we extend the scope significantly, and
provide a broad characterization of the types of regularization
possible and their properties.  In particular, we consider overall
norm regularization, which is perhaps the most natural form of
regularization used in practice (e.g.~in the form of weight decay).
We hope our study will be useful in thinking about, analyzing and
designing learning methods using feed-forward networks.  Another
motivation for us is that complexity of large-scale optimization is
often related to scale-based, not dimension-based complexity.
Understanding when the scale-based complexity depends exponentially on
the depth of a network might help shed light on understanding the
difficulties in optimizing deep networks.

\paragraph{Preliminaries and Notations}
We denote by $\calF^{d,\netwidth}$ the class of fully connected feedforward networks with a single output node and use
the shorthand $\calF^d=\calF^{d,\infty}$. We will consider various measures $\mu(\vecw)$ of the magnitude of the
weights $\vecw$.  Such a measure induces a complexity measure on functions $f\in\calF^{d,\netwidth}$ defined by $\mu^{d,\netwidth}(f)=\inf_{f_{\vecw}=f} \mu(\vecw)$.
The sublevel sets of the complexity measure $\mu^{d,\netwidth}$ form a family of hypothesis classes
$\calF^{d,\netwidth}_{\mu\leq a} = \{ f\in\calF^{d,\netwidth} \;|\;
\mu^{d,\netwidth}(f)\leq a \}$.

For binary function $g:\{\pm 1\}^{n_{\In}} \rightarrow {\pm 1}$ we say that $g$ is
realized by $f$ with unit margin if $\forall_x f(x)g(x)\geq 1$.  A set
of points $\calS_{\In}$ is shattered with unit margin by a model class
$\calF$ if all $g:\calS_{\In}\rightarrow {\pm 1}$ can be realized with unit
margin by some $f_\vecw\in\calF$.

\section{Group Norm Regularization}
\label{sec:group}
Considering the grouping of weights going into each edge of the
network, we will consider the following generic group-norm type
regularizer, parametrized by $1\leq p,q \leq\infty$:
\begin{equation}
  \label{eq:mu}
  \mu_{p,q}(\vecw) = \left(\sum_{v \in V}\left(\sum_{(u\rightarrow v) \in E} \left\lvert w_{u\rightarrow v}\right\rvert ^p\right)^{q/p}\right)^{1/q}.
\end{equation}
Here and elsewhere we allow $q=\infty$ with the usual conventions that
$(\sum z_i^q)^{1/q}=\sup z_i$ and $1/q=0$ when it appears in other
contexts.  When $q=\infty$ the group regularizer \eqref{eq:mu} imposes
a per-unit regularization, where we constrain the norm of the incoming
weights of each unit separately, and when $q=p$ the regularizer
\eqref{eq:mu} is an ``overall'' weight regularizer, constraining the
overall norm of all weights in the system.  E.g., when $q=p=1$ we are
paying for the sum of all magnitudes of weights in the network, and
$q=p=2$ corresponds to overall weight-decay where we pay for the sum
of square magnitudes of all weights (i.e.~the overall Euclidean norm
of the weights).

For a layered graph, we have:
\begin{align}
\mu_{p,q}(\vecw) &= \left(\sum_{k=1}^d\sum_{i=1}^\netwidth \left(
\sum_{j=1}^\netwidth\abs{W^k[i,j]}^p
\right)^{q/p}\right)^{1/q} 
\!\!=d^{1/q} \left(\frac{1}{d} \sum_{k=1}^d
\gnorm{W^k}^q_{p,q}\right)^{1/q} \notag \\ 
&\geq d^{1/q} \left( \prod_{k=1}^d \gnorm{W^k}_{p,q} \right)^{1/d} \defeq
d^{1/q} \sqrt[d]{\psi_{p,q}(\vecw)} \label{eq:mugeqgamma}
\end{align}
where $\displaystyle \psi_{p,q}(\vecw) = \prod_{k=1}^d \gnorm{W^k}_{p,q}$
aggregates the layers by multiplication instead of summation.
The inequality~\eqref{eq:mugeqgamma} holds regardless of the
activation function, and so for any $\sigma$ we
have:
\begin{equation}
  \label{eq:mugammaforf}
  \psi_{p,q}^{d,\netwidth}(f)\leq \left(\frac{\mu^{d,\netwidth}(f)_{p,q}}{d^{1/q}}\right)^d.
\end{equation}
But due to the homogeneity of the RELU activation, when this
activation is used we can always balance the norm between the
different layers without changing the computed function so as to
achieve equality in \eqref{eq:mugeqgamma}:
\begin{claim}\label{clm:mugamma}
For any $f_\vecw\in\calF^{d,\netwidth}$, 
$\displaystyle \mu^{d,\netwidth}_{p,q}(f) = d^{1/q}
\sqrt[d]{\psi_{p,q}^{d,\netwidth}(f)}$.
\end{claim}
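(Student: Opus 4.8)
The plan is to establish the claimed equality by proving both inequalities. The inequality $\mu^{d,\netwidth}_{p,q}(f) \geq d^{1/q} \sqrt[d]{\psi_{p,q}^{d,\netwidth}(f)}$ is already essentially contained in \eqref{eq:mugeqgamma} and \eqref{eq:mugammaforf}: for any weights $\vecw$ with $f_\vecw = f$, inequality \eqref{eq:mugeqgamma} gives $\mu_{p,q}(\vecw) \geq d^{1/q}\sqrt[d]{\psi_{p,q}(\vecw)} \geq d^{1/q}\sqrt[d]{\psi_{p,q}^{d,\netwidth}(f)}$ by definition of the infimum $\psi_{p,q}^{d,\netwidth}(f)$; taking the infimum over all such $\vecw$ yields the $\geq$ direction. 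So the real content is the reverse inequality, for which I would exploit the non-negative homogeneity of the RELU.

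First I would fix an arbitrary weight setting $\vecw$ with $f_\vecw = f$, writing the layer matrices as $W^1,\dots,W^d$. The idea is to rescale: for positive scalars $c_1,\dots,c_d$ with $\prod_{k=1}^d c_k = 1$, replace each $W^k$ by $c_k W^k$. By the non-negative homogeneity property $\relu(c\cdot z) = c\cdot\relu(z)$ for $c \geq 0$ (stated in the preliminaries), scaling the incoming weights of a hidden unit by $c_k > 0$ and passing the factor through the activation is equivalent to scaling that unit's output by $c_k$; these factors then cancel across consecutive layers because $\prod_k c_k = 1$, so the function computed is unchanged and $f_{\vecw'} = f$ for the rescaled weights $\vecw'$. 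Under this rescaling, $\gnorm{c_k W^k}_{p,q} = c_k \gnorm{W^k}_{p,q}$, so $\psi_{p,q}(\vecw') = \prod_k c_k \gnorm{W^k}_{p,q} = \psi_{p,q}(\vecw)$ is invariant, while $\mu_{p,q}(\vecw')^q = \sum_{k=1}^d c_k^q \gnorm{W^k}_{p,q}^q$.

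Next I would choose the $c_k$ to equalize the per-layer norms: set $c_k = \big(\psi_{p,q}(\vecw)^{1/d}\big)/\gnorm{W^k}_{p,q}$ (assuming momentarily all $\gnorm{W^k}_{p,q} > 0$; the degenerate case where some layer norm vanishes makes $\psi_{p,q}(\vecw) = 0$ and is handled separately or by a limiting argument). Then $\prod_k c_k = 1$ as required, and each rescaled layer has norm $c_k\gnorm{W^k}_{p,q} = \psi_{p,q}(\vecw)^{1/d} = \sqrt[d]{\psi_{p,q}(\vecw)}$, so $\mu_{p,q}(\vecw')^q = \sum_{k=1}^d \big(\sqrt[d]{\psi_{p,q}(\vecw)}\big)^q = d\,\psi_{p,q}(\vecw)^{q/d}$, i.e. $\mu_{p,q}(\vecw') = d^{1/q}\sqrt[d]{\psi_{p,q}(\vecw)}$. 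This is precisely the case of equality in the AM-GM step underlying \eqref{eq:mugeqgamma}. Since $f_{\vecw'} = f$, this shows $\mu^{d,\netwidth}_{p,q}(f) \leq d^{1/q}\sqrt[d]{\psi_{p,q}(\vecw)}$ for every representation $\vecw$ of $f$; taking the infimum over $\vecw$ gives $\mu^{d,\netwidth}_{p,q}(f) \leq d^{1/q}\sqrt[d]{\psi_{p,q}^{d,\netwidth}(f)}$, and combined with the first paragraph completes the proof.

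The main obstacle is a bookkeeping subtlety rather than a deep difficulty: one must be careful that the rescaling genuinely leaves the computed function unchanged on all inputs, including how output nodes (which apply no nonlinearity) absorb the final factor $c_d$, and that the argument handles the boundary cases cleanly — in particular when $\psi_{p,q}^{d,\netwidth}(f) = 0$ (where one should check that the infimum of $\mu_{p,q}$ is also $0$, e.g. via the zero function or a rescaling limit) and when the infimum defining $\psi_{p,q}^{d,\netwidth}(f)$ is not attained (where one argues with near-optimal $\vecw$ and lets the approximation error go to zero).
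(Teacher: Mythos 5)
Your proposal is correct and takes essentially the same route as the paper: your rescaled weights $c_k W^k$ with $c_k = \psi_{p,q}(\vecw)^{1/d}/\gnorm{W^k}_{p,q}$ are exactly the paper's balanced weights $\widetilde{W}^k = \sqrt[d]{\psi_{p,q}(\vecw)}\,W^k/\gnorm{W^k}_{p,q}$, and the computation of $\mu_{p,q}$ of the balanced network is identical. If anything, your version is slightly more careful than the paper's, since you take an infimum over all representations $\vecw$ rather than assuming an optimal one is attained, and you flag the degenerate case $\psi_{p,q}(\vecw)=0$.
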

\begin{proof}
  Let $\vecw$ be weights that realizes $f$ and are optimal with respect to
  $\psi_{p,g}$; i.e.~$\psi_{p,q}(\vecw) = \psi^{d,\netwidth}_{p,q}(\vecw)$.  Let
  $\widetilde{W}^k = \sqrt[d]{\psi_{p,q}(\vecw)}W^k/\norm{W^k}_{p,q}$,
  and observe that they also realize $f$.  We now have:
\begin{align}
\mu^{d,\netwidth}_{p,q}(f) \leq \mu_{p,q}(\widetilde{W}) = \Bigl(\sum\nolimits_{k=1}^d \norm{\widetilde{W}^k}^q_{p,q}\Bigr)^{1/q}
= \Bigl(d\Bigl(\psi_{p,q}(\vecw)\Bigr)^{q/d}\Bigr)^{1/q} = d^{1/q}\sqrt[d]{\psi^{d,\netwidth}_{p,q}(f)}\notag
\end{align}
which together with \eqref{eq:mugeqgamma} completes the proof.
\end{proof}
The two measures are therefore equivalent when we use RELUs, and
define the same level sets, or family of model classes, which we
refer to simply as $\calF^{d,\netwidth}_{p,q}$.  In the remainder of this
Section, we investigate convexity and generalization properties of
these model classes.

\subsection{Generalization and Capacity}

In order to understand the effect of the norm on the sample
complexity, we bound the Rademacher complexity of the classes
$\calF^{d,\netwidth}_{p,q}$.  Recall that the Rademacher Complexity is
a measure of the capacity of a model class on a specific sample,
which can be used to bound the difference between empirical and
expected error, and thus the excess generalization error of empirical
risk minimization (see, e.g., \cite{bartlett03} for a
complete treatment, and Section \ref{sec:rademacher} for the exact definitions we
use).  In particular, the Rademacher complexity typically scales as
$\sqrt{C/m}$, which corresponds to a sample complexity of
$O(C/\epsilon^2)$, where $m$ is the sample size and $C$ is the
effective measure of capacity of the model class.

\begin{theorem}\label{thm:l-norm}
For any $d,q\geq 1$, any $1\leq p <\infty$ and any set $\calS=\{x_1,\dots,x_m\}\subseteq\R^{\nin}$:
\begin{align*}
\calR_m(\calF_{\psi_{p,q}\leq \psi}^{d,\netwidth}) &\leq 
\psi \left( 2 \netwidth^{[\frac{1}{p^*} -
        \frac{1}{q}]_+}\right)^{(d-1)} \calR^{\text{linear}}_{m,p,\nin}\\
&\leq \sqrt{ \frac{\psi^2 \left( 2 \netwidth^{[\frac{1}{p^*} -
        \frac{1}{q}]_+}\right)^{2(d-1)}\min\{p^*,4\log(2\nin)\} \max_i \norm{x_i}_{p^*}^2}{m}}
\end{align*}
and so:
\begin{align*}
\calR_m(\calF_{\mu_{p,q}\leq \mu}^{d,\netwidth}) &\leq 
\mu^{d} \left( 2 \netwidth^{[\frac{1}{p^*} -
        \frac{1}{q}]_+} /\sqrt[q]{d}\right)^{(d-1)}\calR^{\text{linear}}_{m,p,\nin}\\
&\leq \sqrt{ \frac{\mu^{2d} \left( 2 \netwidth^{[\frac{1}{p^*} -
        \frac{1}{q}]_+} /\sqrt[q]{d}\right)^{2(d-1)}\min\{p^*,4\log(2\nin)\} \max_i \norm{x_i}_{p^*}^2}{m}}     
\end{align*}
where the second inequalities hold only if $1\leq p \leq 2$, $\calR^{\text{linear}}_{m,p,\nin}$ is the Rademacher complexity of $\nin$-dimensional linear predictors with unit $\ell_p$ norm with respect to a set of $m$ samples and $p^*$ is such that $\frac{1}{p^*} + \frac{1}{p}=1$.
\end{theorem}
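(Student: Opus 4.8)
The plan is to establish the two inequalities stated in terms of $\psi_{p,q}$ by a layer-by-layer ``peeling'' argument reducing the depth-$d$ class to $\nin$-dimensional $\ell_p$-bounded linear predictors, and then to read off the $\mu_{p,q}$ versions using the relation between $\mu_{p,q}$ and $\psi_{p,q}$ recorded in \eqref{eq:mugeqgamma} and \eqref{eq:mugammaforf}. The second inequality in each display is simply the first one with the standard linear bound $\calR^{\text{linear}}_{m,p,\nin}\le\sqrt{\min\{p^*,4\log(2\nin)\}\,\max_i\norm{x_i}_{p^*}^2/m}$ (valid for $1\le p\le 2$) substituted in, so essentially all of the work is in the first inequality. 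By positive homogeneity of the ReLU (the rebalancing used in the proof of Claim~\ref{clm:mugamma}) the class $\calF^{d,\netwidth}_{\psi_{p,q}\le\psi}$ is exactly the set of depth-$d$ networks with $\norm{W^k}_{p,q}\le B:=\psi^{1/d}$ for every layer $k$. Writing the scalar output as $f_\vecw(x)=\langle W^d,h^{d-1}_\vecw(x)\rangle$ with $W^d\in\R^\netwidth$ (so $\norm{W^d}_{p,q}=\norm{W^d}_p$), I would set, for $0\le k\le d-1$, $\Psi_k:=\mathbb{E}_\xi\sup\norm{\sum_{i=1}^m\xi_i\,h^k_\vecw(x_i)}_{p^*}$, the supremum over the weights of the depth-$k$ sub-network. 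Since the depth-$0$ sub-network is the identity map, $\Psi_0=\mathbb{E}_\xi\norm{\sum_i\xi_i x_i}_{p^*}=\mathbb{E}_\xi\sup_{\norm{u}_p\le1}\abs{\sum_i\xi_i\langle u,x_i\rangle}=m\,\calR^{\text{linear}}_{m,p,\nin}$, and Hölder's inequality on the output layer gives $m\,\calR_m(\calF^{d,\netwidth}_{\psi_{p,q}\le\psi})\le B\,\Psi_{d-1}$.

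The heart of the argument is the recursion $\Psi_k\le 2\,\netwidth^{[1/p^*-1/q]_+}\,B\,\Psi_{k-1}$ for $1\le k\le d-1$. Fixing $\xi$ and the sub-network weights, the $j$-th coordinate of the vector inside $\Psi_k$ is $a_j=\sum_i\xi_i\,\sigma(\langle W^k_j,h^{k-1}(x_i)\rangle)$, and writing $W^k_j=c_j u_j$ with $c_j=\norm{W^k_j}_p\ge0$ and $\norm{u_j}_p=1$, positive homogeneity of $\sigma$ gives $a_j=c_j\,\widetilde a(u_j)$ where $\widetilde a(u):=\sum_i\xi_i\,\sigma(\langle u,h^{k-1}(x_i)\rangle)$. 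Then, using $\norm{v}_{p^*}\le\netwidth^{[1/p^*-1/q]_+}\norm{v}_q$ and $(\sum_j c_j^q)^{1/q}=\norm{W^k}_{p,q}\le B$,
\[
\norm{a}_{p^*}\le\netwidth^{[1/p^*-1/q]_+}\norm{a}_q=\netwidth^{[1/p^*-1/q]_+}\Bigl(\textstyle\sum_j c_j^q\,\abs{\widetilde a(u_j)}^q\Bigr)^{1/q}\le\netwidth^{[1/p^*-1/q]_+}\,B\max_{\norm{u}_p\le1}\abs{\widetilde a(u)}.
\]
Taking $\mathbb{E}_\xi\sup$, then applying the contraction (Ledoux--Talagrand) inequality to strip the $1$-Lipschitz $\sigma$ (which costs a factor $2$ and uses $\sigma(0)=0$), and finally observing $\sup_{\norm{u}_p\le1}\abs{\sum_i\xi_i\langle u,h^{k-1}(x_i)\rangle}=\norm{\sum_i\xi_i h^{k-1}(x_i)}_{p^*}$, yields the recursion.

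Unrolling gives $\Psi_{d-1}\le(2\netwidth^{[1/p^*-1/q]_+})^{d-1}B^{d-1}\,m\,\calR^{\text{linear}}_{m,p,\nin}$; combining with $m\,\calR_m\le B\,\Psi_{d-1}$ and $B^d=\psi$ yields the first inequality, and the $\mu_{p,q}$ version then follows by substituting $\psi=(\mu/d^{1/q})^d$, since \eqref{eq:mugammaforf} gives $\calF^{d,\netwidth}_{\mu_{p,q}\le\mu}\subseteq\calF^{d,\netwidth}_{\psi_{p,q}\le(\mu/d^{1/q})^d}$. I expect the main obstacle to be the inner peeling step: making the homogeneity rewriting $a_j=c_j\widetilde a(u_j)$ interact correctly with the supremum so that the per-unit factors $c_j$ come out through the \emph{group} norm $(\sum_j c_j^q)^{1/q}$, and pairing it with the norm conversion $\norm{\cdot}_{p^*}\le\netwidth^{[1/p^*-1/q]_+}\norm{\cdot}_q$ whose exponent both matches the grouping parameter $q$ and vanishes exactly in the size-independent regimes ($q=\infty$, $p=1$; and $p=q\le2$). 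Care is also needed with the $q=\infty$ conventions throughout and with the absolute-value form of the contraction inequality (the factor $2$).
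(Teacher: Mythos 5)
Your proposal is correct and follows essentially the same route as the paper's proof: a layer-peeling induction in which the dual-norm identity handles the output layer, the homogeneity-based row decomposition combined with the $\norm{\cdot}_{p^*}\le \netwidth^{[1/p^*-1/q]_+}\norm{\cdot}_q$ comparison produces the width factor (this is the content of the paper's Lemma~\ref{lem:l-norm}), and the absolute-value contraction lemma contributes the factor $2$ per layer, with the linear base case from Lemma~\ref{lem:layer1}. The only cosmetic difference is that you pre-balance the layers to $\norm{W^k}_{p,q}\le\psi^{1/d}$ and track the intermediate quantities $\Psi_k$, whereas the paper keeps the normalization $\psi/\psi_{p,q}(\vecw)$ inside the supremum; these are equivalent by the same homogeneity argument.
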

\begin{sketch}
We prove the bound by induction, showing that for any $q,d>1$ and $1\leq p < \infty$,
$$
\calR_m(\calF_{\psi_{p,q}\leq \psi}^{d,\netwidth}) \leq 2 \netwidth^{[\frac{1}{p^*} -\frac{1}{q}]_+}\calR_m(\calF_{\psi_{p,q}\leq \psi}^{d-1,\netwidth}).
$$
The intuition is that when $p^*<q$, the Rademacher complexity increases
 by simply distributing the weights among neurons and if $p^*\geq q$
 then the supremum is attained when the output neuron is connected to a neuron with highest Rademacher complexity in the lower layer and all other weights in the top layer are set to zero. For a complete proof, see Section \ref{sec:rademacher}.
\end{sketch}

Note that for $2\leq p < \infty$, the bound on the Rademacher complexity scales with $m^{\frac{1}{p}}$ (see Section \ref{sec:linear}) because:
\begin{equation}
\calR^{\text{linear}}_{m,p,\nin} \leq \frac{\sqrt{2}\norm{X}_{2,p^*}}{m} \leq \frac{\sqrt{2}\max_i\norm{x_i}_{p^*}}{m^{\frac{1}{p}}}
\end{equation}
The bound in Theorem \ref{thm:l-norm} depends on both the magnitude
of the weights, as captured by $\mu_{p,q}(\vecw)$ or $\psi_{p,q}(\vecw)$, and also on
the width $\netwidth$ of the network (the number of nodes in each layer).
However, the dependence on the width $\netwidth$ disappears, and the bound
depends only on the magnitude, as long as $q \leq p^*$
(i.e. $1/p+1/q\geq 1$). This happens, e.g., for overall $\ell_1$ and $\ell_2$ regularization, for per-unit $\ell_1$ regularization, and whenever $1/p+1/q=1$.
In such cases, we can omit the size constraint and state the theorem for an infinite-width layered network (i.e.~a network with an infinitely countable number of units, when the number of units is allowed to be as large as needed):
\begin{corollary}\label{cor:noH}
For any $d\geq 1$, $1\leq p < \infty$ and $1\leq q\leq p^*=p/(p-1)$, and any set $\calS=\{x_1,\dots,x_m\}\subseteq\R^{\nin}$,
\begin{align*}
\calR_m(\calF_{\psi_{p,q}\leq \psi}^{d,\netwidth}) &\leq 
\psi 2^{(d-1)} \calR^{\text{linear}}_{m,p,\nin}\\
&\leq \sqrt{ \frac{\psi^2 \left( 2 \netwidth^{[\frac{1}{p^*} -
        \frac{1}{q}]_+}\right)^{2(d-1)}\min\{p^*,4\log(2\nin)\} \max_i \norm{x_i}_{p^*}^2}{m}}
\end{align*}
and so:
\begin{align*}
\calR_m(\calF_{\mu_{p,q}\leq \mu}^{d,\netwidth}) &\leq 
\left( 2 \mu /\sqrt[q]{d}\right)^{d}\calR^{\text{linear}}_{m,p,\nin}\\
&\leq \sqrt{ \frac{ \left( 2 \mu /\sqrt[q]{d}\right)^{2d} \min\{p^*,4\log(2\nin)\} \max_i \norm{x_i}_{p^*}^2}{m}}     
\end{align*}
where the second inequalities hold only if $1\leq p \leq 2$ and $\calR^{\text{linear}}_{m,p,\nin}$ is the Rademacher complexity of $\nin$-dimensional linear predictors with unit $\ell_p$ norm with respect to a set of $m$ samples.
\end{corollary}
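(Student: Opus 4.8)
The plan is to obtain Corollary~\ref{cor:noH} as a direct specialization of Theorem~\ref{thm:l-norm} in the regime $1\leq q\leq p^*$, where the width factor degenerates. First I would check that this regime is exactly the one that kills the width dependence: since $\frac{1}{p^*}=1-\frac{1}{p}$, we have $\frac{1}{p^*}-\frac{1}{q}=1-\frac{1}{p}-\frac{1}{q}\leq 0$ precisely when $\frac{1}{p}+\frac{1}{q}\geq 1$, i.e. when $q\leq p^*$. Hence $[\frac{1}{p^*}-\frac{1}{q}]_+=0$ and $\netwidth^{[\frac{1}{p^*}-\frac{1}{q}]_+}=1$, so the first inequality of Theorem~\ref{thm:l-norm} collapses to $\calR_m(\calF_{\psi_{p,q}\leq\psi}^{d,\netwidth})\leq \psi\,2^{d-1}\calR^{\text{linear}}_{m,p,\nin}$, and the second inequality (under the additional restriction $1\leq p\leq 2$, which is precisely where the linear-predictor Rademacher bound recalled in Section~\ref{sec:linear} applies) becomes the explicit square-root expression with the $\netwidth$-power equal to $1$. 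This is the $\psi$-part of the corollary.

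For the $\mu$-part I would pass through Claim~\ref{clm:mugamma}: for RELU networks $\mu^{d,\netwidth}_{p,q}(f)=d^{1/q}\sqrt[d]{\psi^{d,\netwidth}_{p,q}(f)}$, so the two families of sublevel sets literally coincide, $\calF_{\mu_{p,q}\leq\mu}^{d,\netwidth}=\calF_{\psi_{p,q}\leq(\mu/\sqrt[q]{d})^d}^{d,\netwidth}$. Substituting $\psi=(\mu/\sqrt[q]{d})^d$ into the bound just obtained yields $\calR_m(\calF_{\mu_{p,q}\leq\mu}^{d,\netwidth})\leq (\mu/\sqrt[q]{d})^d\,2^{d-1}\calR^{\text{linear}}_{m,p,\nin}=\tfrac{1}{2}(2\mu/\sqrt[q]{d})^d\calR^{\text{linear}}_{m,p,\nin}$, and bounding $\tfrac{1}{2}\leq 1$ gives the stated $(2\mu/\sqrt[q]{d})^d\calR^{\text{linear}}_{m,p,\nin}$; the fully explicit version follows identically by plugging in the Section~\ref{sec:linear} bound on $\calR^{\text{linear}}_{m,p,\nin}$, again for $1\leq p\leq 2$.

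The only step that is not pure substitution is the passage to infinite width. Here I would use that padding a finite-width network with zero-weight units changes neither the computed function nor $\psi_{p,q}$, so $\calF_{\psi_{p,q}\leq\psi}^{d,\netwidth}\subseteq\calF_{\psi_{p,q}\leq\psi}^{d,\netwidth'}$ whenever $\netwidth\leq\netwidth'$, and the infinite-width class $\calF_{\psi_{p,q}\leq\psi}^{d,\infty}$ is the increasing union $\bigcup_{\netwidth\geq 1}\calF_{\psi_{p,q}\leq\psi}^{d,\netwidth}$. Since the empirical Rademacher complexity on a fixed sample is a supremum of $\frac{1}{m}\lvert\sum_i\xi_i f(x_i)\rvert$ over the class, and a supremum over an increasing union equals the supremum of the partial suprema, $\calR_m(\calF_{\psi_{p,q}\leq\psi}^{d,\infty})=\sup_{\netwidth\geq1}\calR_m(\calF_{\psi_{p,q}\leq\psi}^{d,\netwidth})$; because the bound from the first paragraph is independent of $\netwidth$, the same quantity bounds this supremum, which is the claimed width-free statement (and likewise for $\mu$, via the equivalence of sublevel sets). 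I do not expect any genuine obstacle: the points to be careful about are the elementary sign check that isolates the regime $q\leq p^*$, the harmless factor-of-two slack in the $\mu$-bound, and verifying that the Rademacher supremum does not jump when taking the infinite-width limit.
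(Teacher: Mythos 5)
Your proposal is correct and follows essentially the same route the paper intends: Corollary~\ref{cor:noH} is a direct specialization of Theorem~\ref{thm:l-norm} using the sign check that $q\leq p^*$ forces $[\frac{1}{p^*}-\frac{1}{q}]_+=0$, with the $\mu$-version obtained from the $\psi$-version via the sublevel-set equivalence of Claim~\ref{clm:mugamma} (your bound is even tighter by the harmless factor of $\tfrac{1}{2}$ you note). The explicit increasing-union argument for the infinite-width limit is a detail the paper leaves implicit, and you handle it correctly.
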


\subsection{Tightness}\label{sec:tight}
We next investigate the tightness of the complexity bound in Theorem
\ref{thm:l-norm}, and show that when $1/p+1/q<1$ the dependence on the
width $\netwidth$ is indeed unavoidable.  We show not only that the bound on
the Rademacher complexity is tight, but that the implied bound on the
sample complexity is tight, even for binary classification with a
margin over binary inputs.  To do this, we show how we can shatter the
$m=2^{\nin}$ points $\{\pm 1\}^{\nin}$ using a network with small group-norm:

\begin{theorem}\label{thm:shattering}
For any $p,q \geq 1$ (and $1/p^*+1/p=1$) and any depth $d\geq 2$, the
$m=2^{\nin}$ points $\{ \pm 1 \}^{\nin}$ can be shattered with unit margin by
$\calF^{d,\netwidth}_{\psi_{p,q}\leq\psi}$ with:
$$
\psi \leq \nin^{1/p} \, m^{1/p+1/q} \, \netwidth^{-(d-2)[1/p^*-1/q]_+}
$$
\end{theorem}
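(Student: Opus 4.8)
The plan is to show that for every labeling $g:\{\pm1\}^{\nin}\to\{\pm1\}$ there is a single $f_\vecw\in\calF^{d,\netwidth}$ with $f_\vecw(x)\,g(x)\ge1$ on all of $\{\pm1\}^{\nin}$ and $\psi_{p,q}(\vecw)\le\nin^{1/p}\,m^{1/p+1/q}\,\netwidth^{-(d-2)[1/p^*-1/q]_+}$. The backbone is a ``sum of exact indicators'': for each Boolean point $x^{(j)}$, $j=1,\dots,m$, build a hidden unit $\chi_j$ with $\chi_j(x^{(j)})=1$ and $\chi_j(x)=0$ at every other point of $\{\pm1\}^{\nin}$, and let the output node compute $f_\vecw(x)=\sum_{j=1}^{m}g(x^{(j)})\,\chi_j(x)$. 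Since exactly one indicator fires at each Boolean point, $f_\vecw(x^{(j)})=g(x^{(j)})$, giving unit margin. The three factors in the target bound will come from, respectively, the width-$\nin$ fan-in of the indicator units (the $\nin^{1/p}$), the width-$m$ layer that holds them (an $m^{1/q}$), and the output node that reads those $m$ units with $\pm1$ coefficients (an $m^{1/p}$).

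The first nontrivial point is to realize each $\chi_j$ with incoming $\ell_p$-norm $\approx(\nin+1)^{1/p}$ rather than the naive $\Theta(\nin)$. To this end I would have a preliminary layer emit the Boolean features $u_i=\relu(x_i)$ and $\bar u_i=\relu(-x_i)$ (each a single unit-weight edge, and $u_i+\bar u_i=1$), and then set $\chi_j(x)=\relu\!\big(1-\sum_{i=1}^{\nin}v^{(j)}_i\big)$, where $v^{(j)}_i=\bar u_i$ if $x^{(j)}_i=1$ and $v^{(j)}_i=u_i$ otherwise; the inner sum is exactly the Hamming distance $d_H(x,x^{(j)})$, so the unit outputs $1$ at $x^{(j)}$ and $0$ elsewhere. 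Crucially every weight of $\chi_j$ — the $\nin$ selectors and the single bias $1$ — has magnitude $1$, so keeping $\bar u_i$ as an honest feature avoids any large bias no matter how many $+1$'s $x^{(j)}$ has. Multiplying through the per-layer $\ell_p/\ell_q$ norms of this small-depth construction, and using Claim~\ref{clm:mugamma} to rebalance so no layer carries spurious magnitude, gives the bound in the base case where the depth/width factor is $1$.

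To recover the factor $\netwidth^{-(d-2)[1/p^*-1/q]_+}$ at larger depth, I would insert $d-2$ ``relay'' layers carrying the (scalar) running computation as the pair $(\relu(\cdot),\relu(-\cdot))$ so that it survives the ReLU, and then apply a replication trick to each relay layer: replace a unit by $t$ identical copies and divide all of its outgoing weights by $t$. This leaves $f_\vecw$ unchanged while multiplying the incoming $(p,q)$-norm of that layer by $t^{1/q}$ and dividing the incoming $\ell_p$-norms of the next layer by $t^{1/p^*}$, so $\psi_{p,q}$ is scaled by $t^{-(1/p^*-1/q)}$ — a gain precisely when $1/p^*>1/q$. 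Choosing $t$ so each relay layer uses the full width $\netwidth$ (permissible since the width budget is $\netwidth\ge m$, already forced by the output node) and iterating over the $d-2$ relay layers produces the claimed depth/width factor, with $[\,\cdot\,]_+$ appearing because when $1/p^*\le1/q$ we simply skip the replication.

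I expect the main obstacle to be the norm bookkeeping through this composite network: one must verify that the preliminary feature layer and the relay layers contribute no hidden $\nin^{1/q}$ or $m^{1/q}$ factors beyond the single copies of $\nin^{1/p}$, $m^{1/q}$, $m^{1/p}$ already budgeted — which is where the placement of the ``$1-d_H$'' bias, the exact form of the relay maps, and post-hoc rebalancing via Claim~\ref{clm:mugamma} all have to be orchestrated — and that both the $1/p^*\gtrless1/q$ case split and the degenerate small-$d$ cases ($d=2$, no relay layers) come out of a single estimate. A secondary, routine check is that the $\chi_j$ are genuine $0/1$ indicators, so the margin is strictly $\ge1$ at every Boolean point and not merely non-negative.
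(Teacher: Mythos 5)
Your construction is, at its core, the same as the paper's: a first hidden layer of $m$ point-indicator units with incoming $\ell_p$-norm $\approx\nin^{1/p}$, an output unit reading them with $\pm1$ weights (contributing $m^{1/p}$, with the indicator layer contributing $m^{1/q}$), and, for $d>2$ with $1/p^*>1/q$, the replicate-by-$t$-and-average trick, which you analyze correctly: it multiplies one layer's group norm by $t^{1/q}$ and the next layer's by $t^{-1/p^*}$, yielding the factor $\netwidth^{-(d-2)[1/p^*-1/q]_+}$ after iterating. That part matches the paper exactly.

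The genuine gap is in your indicator gadget. By routing through a preliminary feature layer $u_i=\relu(x_i)$, $\bar u_i=\relu(-x_i)$ you do obtain exact $0/1$ indicators with all weights of magnitude one, but this costs you the theorem as stated in three concrete ways. First, depth: your base network is input $\to$ features $\to$ indicators $\to$ output, which has three weight layers, so the $d=2$ case of the theorem is not covered and you only have $d-3$ relay layers to spend, giving $\netwidth^{-(d-3)[1/p^*-1/q]_+}$ rather than the claimed exponent. Second, norm: the feature layer has $2\nin$ units each of incoming $\ell_p$-norm $1$, hence group norm $(2\nin)^{1/q}$, and this factor cannot be removed by rebalancing --- $\psi_{p,q}=\prod_k\norm{W^k}_{p,q}$ is \emph{invariant} under the node-wise rescalings of Claim~\ref{clm:mugamma}, which trade magnitude between layers without changing the product. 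So your "main obstacle" is not a bookkeeping step you deferred; it is an irreducible extra factor $(2\nin)^{1/q}$ that makes your $\psi$ exceed the stated bound $\nin^{1/p}m^{1/p+1/q}\netwidth^{-(d-2)[1/p^*-1/q]_+}$. Third, the alternative of folding the gadget into one layer, $\chi_j(x)=\relu\bigl(\tfrac12\inner{x^{(j)}}{x}+1-\tfrac{\nin}{2}\bigr)$, reintroduces a bias of magnitude $\Theta(\nin)$ whose $\ell_p$ contribution dominates $\nin^{1/p}$ for $p>1$, which is exactly the trap you were avoiding. The paper sidesteps all of this by using first-layer units with $\pm1$ weight vectors and no bias and asserting they fire only on the matching sign pattern; to land the stated constants you need to either adopt that (less rigorous) one-layer gadget or accept a $(\log m)^{O(1)}$ degradation of the bound and the loss of the $d=2$ case, which would still support the qualitative conclusions of Section~\ref{sec:tight} but would not prove Theorem~\ref{thm:shattering} as written.
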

\begin{proof}
Consider a size $m$ subset $S_m$ of $2^{\nin}$ vertices of the $\nin$
 dimensional hypercube  $\{-1,+1\}^{\nin}$. We construct the first layer
 using $m$ units. Each unit 
has a unique weight vector consisting of $+1$ and $-1$'s and will output
a positive value if and only if the sign pattern of the input $x\in\calS_m$ matches
that of the weight vector. The second layer has a single unit and
connects to all $m$ units in the first layer. For any $m$
dimensional sign pattern $b\in\{-1,+1\}^{m}$, we can choose the
weights of the second layer to be $b$, and the network will output the
desired sign for each $x\in\calS_m$ with unit margin. The norm
 of the network is at most
$
(m\cdot \nin^{q/p})^{1/q}\cdot m^{1/p}=\nin^{1/p}\cdot m^{(1/p+1/q)}.
$
This establishes the claim for $d=2$. For $d>2$ and $1/p+1/q\geq 1$, we
 obtain the same norm and unit margin by adding $d-2$ layers with one
 unit in each layer connected to the previous layer by a unit weight.
 For $d>2$ and $1/p+1/q<1$, we show
the dependence on $\netwidth$ by recursively replacing the top unit with $\netwidth$
copies of it and adding an averaging unit on top of that. More
specifically, given the above $d=2$ layer network, we make $\netwidth$ copies of
the output unit with rectified linear activation and add a 3rd layer with
one output unit with uniform weight $1/\netwidth$ to all the copies in the 2nd
layer. Since this operation does not change the output of the network,
we have the same margin and now the norm of the network is
$
(m\cdot \nin^{q/p})^{1/q}\cdot (\netwidth m^{q/p})^{1/q}\cdot (\netwidth(1/\netwidth^p))^{1/p}
=\nin^{1/p}\cdot m^{(1/p+1/q)}\cdot \netwidth^{1/q-1/p^*}.
$
That is, we have reduced the norm by factor $\netwidth^{1/q-1/p^*}$. By repeating
 this process, we get the geometric reduction in the norm
 $\netwidth^{(d-2)(1/q-1/p^*)}$, which concludes the proof.
\end{proof}

To understand this lower bound, first consider the bound without the
dependence on the width $\netwidth$.  We have that for any depth $d\geq 2$,
$\psi \leq m^r \nin = m^r \log m$ (since $1/p\leq 1$ always) where
$r=1/p+1/q\leq 2$.  This means that for any depth $d\geq 2$ and any
$p,q$ the sample complexity of learning the class scales as
$m=\Omega(\psi^{1/r}/\log \psi) \geq
\tilde{\Omega}(\sqrt{\psi})$.  This shows a polynomial dependence on
$\psi$, though with a lower exponent than the $\psi^2$ 
(or higher for $p>2$) dependence
in Theorem \ref{thm:l-norm}.  Still, if we now consider the complexity
control as a function of $\mu_{p,q}$ we get a sample complexity of at
least $\Omega(\mu^{d/2}/\log \mu)$, establishing that if we control
the group-norm as in \eqref{eq:mu}, we cannot avoid a sample
complexity which depends exponentially on the depth.  Note that in our
construction, all other factors in Theorem \ref{thm:l-norm}, namely
$\max_i\norm{x_i}$ and $\log \nin$, are logarithmic (or double-logarithmic) in
$m$.

Next we consider the dependence on the width $\netwidth$ when $1/p+1/q<1$.
Here we have to use depth $d\geq 3$, and we see that indeed as the
width $\netwidth$ and depth $d$ increase, the magnitude control $\psi$ can
decrease as $\netwidth^{(1/p^*-1/q)(d-2)}$ without decreasing the capacity,
matching Theorem 1 up to an offset of 2 on the depth.  In particular,
we see that in this regime we can shatter an arbitrarily large number
of points with arbitrarily low $\psi$ by using enough hidden units,
and so the capacity of $\calF^d_{p,q}$ is indeed infinite and it
cannot ensure any generalization.

\subsection{Convexity}

Finally we establish a sufficient condition for the model classes
$\calF^d_{p,q}$ to be convex.  We are referring to convexity of the
functions in the $\calF^d_{p,q}$ independent of a specific
representation.  If we consider a, possibly regularized, empirical
risk minimization problem on the weights, the objective (the empirical
risk) would never be a convex function of the weights (for depth
$d\geq 2$), even if the regularizer is convex in $w$ (which it always
is for $p,q\geq 1$).  But if we do not bound the width of the network,
and instead rely on magnitude-control alone, we will see that the
resulting model class, and indeed the complexity measure, may be
convex (with respect to taking convex combinations of functions, {\em
  not} of weights).

\begin{theorem}\label{thm:cvx}
  For any $d,p,q \geq 1$ such that $\frac{1}{q}\leq
  \frac{1}{d-1}\big(1-\frac{1}{p}\big)$, $\psi^d_{p,q}(f)$ is a semi-norm
  in $\calF^d$.
\end{theorem}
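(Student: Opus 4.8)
The plan is to verify the two non-trivial seminorm axioms for $\psi^{d}_{p,q}$ on $\calF^{d}=\calF^{d,\infty}$: positive homogeneity and subadditivity. Nonnegativity is immediate since $\psi^{d}_{p,q}$ is an infimum of products of norms, and $\psi^{d}_{p,q}(0)=0$ because the all-zero weights realize the zero function; the case $d=1$ is also immediate, since there $\psi^{1}_{p,q}(f)$ is just the $\ell_{p}$ norm of the unique linear predictor realizing $f$. For homogeneity, given weights realizing $f$, scaling the (linear) output layer by $\lambda$ realizes $\lambda f$ and multiplies $\prod_{k}\norm{W^{k}}_{p,q}$ by $\abs{\lambda}$; applying this in both directions gives $\psi^{d}_{p,q}(\lambda f)=\abs{\lambda}\,\psi^{d}_{p,q}(f)$.

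The substance is the triangle inequality, which I would prove by a balanced parallel composition. Fix $f,g\in\calF^{d}$ and $\eps>0$, and pick weights $\{W_{f}^{k}\}$, $\{W_{g}^{k}\}$ realizing $f,g$ with $\prod_{k}\norm{W_{f}^{k}}_{p,q}\le a:=\psi^{d}_{p,q}(f)+\eps$ and $\prod_{k}\norm{W_{g}^{k}}_{p,q}\le b:=\psi^{d}_{p,q}(g)+\eps$ (we may assume all layer norms are nonzero, else the corresponding function is $0$ and the inequality is trivial). By the same rebalancing of weights between layers used in the proof of Claim~\ref{clm:mugamma} (the non-negative homogeneity of the RELU), rescale so that $\norm{W_{f}^{k}}_{p,q}=a^{\alpha_{k}}$ and $\norm{W_{g}^{k}}_{p,q}=b^{\alpha_{k}}$, where $\alpha_{1},\dots,\alpha_{d}\ge 0$ with $\sum_{k}\alpha_{k}=1$ is an exponent vector to be chosen. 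Now build the depth-$d$ network whose layers $1,\dots,d-1$ have block-diagonal weight matrices (the two subnetworks sharing only the input coordinates) and whose output layer concatenates $W_{f}^{d}$ and $W_{g}^{d}$; tracing the forward pass shows it computes $f+g$ and lies in $\calF^{d}$.

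A short computation gives the combined layer norms: for $k<d$ the $k$-th layer contributes $\bigl(a^{q\alpha_{k}}+b^{q\alpha_{k}}\bigr)^{1/q}=\norm{(a,b)}_{q\alpha_{k}}^{\alpha_{k}}$, and the output layer contributes $\bigl(a^{p\alpha_{d}}+b^{p\alpha_{d}}\bigr)^{1/p}=\norm{(a,b)}_{p\alpha_{d}}^{\alpha_{d}}$ (with the usual conventions when $q=\infty$). Hence $\psi^{d}_{p,q}(f+g)\le\prod_{k<d}\norm{(a,b)}_{q\alpha_{k}}^{\alpha_{k}}\cdot\norm{(a,b)}_{p\alpha_{d}}^{\alpha_{d}}$. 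I would then take $\alpha_{k}=1/q$ for $k=1,\dots,d-1$ and $\alpha_{d}=1-(d-1)/q$; the hypothesis $\tfrac1q\le\tfrac1{d-1}\bigl(1-\tfrac1p\bigr)$ is exactly what guarantees $\alpha_{d}\ge 1/p\ge 0$ and $\sum_{k}\alpha_{k}=1$. With this choice $q\alpha_{k}=1$, so each of the first $d-1$ factors equals $\norm{(a,b)}_{1}^{1/q}=(a+b)^{1/q}$, while $p\alpha_{d}\ge 1$ together with monotonicity of $\ell_{r}$-norms in $r$ makes the last factor at most $\norm{(a,b)}_{1}^{\alpha_{d}}=(a+b)^{\alpha_{d}}$. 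Multiplying, $\psi^{d}_{p,q}(f+g)\le(a+b)^{\sum_{k}\alpha_{k}}=a+b=\psi^{d}_{p,q}(f)+\psi^{d}_{p,q}(g)+2\eps$, and letting $\eps\to 0$ finishes the proof.

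The main obstacle I anticipate is identifying the correct exponent vector $\alpha$ together with the combined-norm identities: a naive equal split $\alpha_{k}=1/d$ does not work, since the ``pass-through'' intermediate layers then leave an irreducible factor $2^{(d-1)/q}$ (this is essentially the computation behind the tightness result in Section~\ref{sec:tight}). One must instead concentrate almost all of each network's magnitude into the output layer, the intermediate layers carrying $\ell_{p,q}$-norm $1$ when $q<\infty$, and it is precisely this concentration that forces the depth-dependent constraint $\tfrac1q\le\tfrac1{d-1}(1-\tfrac1p)$. The remaining pieces -- the forward-pass check that the block network computes $f+g$, the layer-norm bookkeeping, and the degenerate cases -- are routine.
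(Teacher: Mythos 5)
Your proof is correct, but it takes a genuinely different route from the paper's. The paper proceeds in three stages: it first shows the sublevel sets $\calF^d_{\psi_{p,q}\leq\psi}$ are convex (via a block-diagonal network in which every layer of each subnetwork is balanced to carry the \emph{equal} share $\psi_{p,q}(\cdot)^{1/d}$ of the norm and the output layer is the convex combination $[\alpha\tilde U_d,\ (1-\alpha)\tilde V_d]$), then establishes non-negative homogeneity, and finally invokes a general lemma that a homogeneous functional with convex sublevel sets satisfies the triangle inequality. You instead prove the triangle inequality directly, using the same block-diagonal construction but with a plain concatenation at the output and, crucially, an \emph{asymmetric} balancing $\alpha_k=1/q$ for $k<d$ and $\alpha_d=1-(d-1)/q$; the hypothesis on $(d,p,q)$ then appears transparently as $\alpha_d\geq 1/p$, and the bookkeeping closes exactly with $(a+b)^{\sum_k\alpha_k}=a+b$, with no constant to absorb. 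What each approach buys: the paper's route yields convexity of the sublevel sets as an explicit intermediate product (which it uses elsewhere), whereas your route is shorter and more robust on the constants. Indeed, your observation that the equal split $\alpha_k=1/d$ leaves an irreducible $2^{(d-1)/q}$ is exactly the delicate point in the paper's own computation: its bound \eqref{eq:conv-d}, $\bigl(\alpha^p\psi_{p,q}(U)^{p/d}+(1-\alpha)^p\psi_{p,q}(V)^{p/d}\bigr)^{1/p}\leq 2^{1/p-1}\psi^{1/d}$, holds with the factor $2^{1/p-1}$ only at $\alpha=1/2$ (for general $\alpha\in[0,1]$ one only gets $\psi^{1/d}$), so as written the paper's argument establishes midpoint convexity; arbitrary convex combinations are most cleanly recovered the way you do it, from the triangle inequality together with homogeneity. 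One cosmetic remark: your claim of absolute (rather than merely non-negative) homogeneity is fine, since negating the output layer realizes $-f$ without changing any layer norm, and this is actually needed for the ``semi-norm'' conclusion.
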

In particular, under the condition of the Theorem, $\psi^d_{p,q}$ is
convex, and hence its sublevel sets $\calF^d_{p,q}$ are convex, and so
$\mu^d_{p,q}$ is quasi-convex (but not convex). \removed{ The condition holds
for per-unit regularization ($q=\infty$) for any $p\geq 1$, and for
overall regularization ($q=p$) whenever $p=q \geq d$.  However, }

\begin{sketch}
  To show convexity, consider two functions
  $f,g\in\calF^d_{\psi_{p,q}\leq \psi}$ and $0<\alpha<1$, and
  let $U$ and $V$ be the weights realizing $f$ and $g$ respectively
  with $\psi_{p,q}(U)\leq \psi$ and $\psi_{p,q}(V)\leq\psi$.  We will construct
  weights $\vecw$ realizing $\alpha f+(1-\alpha)g$ with
  $\psi_{p,q}(\vecw)\leq \psi$.  This is done by first balancing $U$
  and $V$ s.t.~at each layer
  $\norm{U_i}_{p,q}=\sqrt[d]{\psi_{p,q}(U)}$ and $\norm{V_i}_{p,q}=\sqrt[d]{\psi_{p,q,}(V)}$ and then
  placing $U$ and $V$ side by side, with no interaction between the
  units calculating $f$ and $g$ until the output layer.  The output
  unit has weights $\alpha U_d$ coming in from the $f$-side and
  weights $(1-\alpha)V_d$ coming in from the $g$-side.  In Section \ref{sec:proof-cvx} we show that under the condition in the theorem, $\psi_{p,q}(\vecw)\leq\psi$.  To complete the proof, we also show $\psi^d_{p,q}$ is homogeneous and that this is sufficient for convexity.
\end{sketch}

\section{Per-Unit and Path Regularization}\label{sec:path}

In this Section we will focus on the special case of $q=\infty$,
 i.e.~when we constrain the norm of the incoming weights of each unit
separately.  

Per-unit $\ell_1$-regularization was studied by
\cite{bartlett98,koltchinskii02,bartlett03} who showed generalization
guarantees.  A two-layer network of this form with RELU activation was
also considered by \cite{Bach14}, who studied its
approximation ability and suggested heuristics for learning
it.  Per-unit $\ell_2$ regularization in a two-layer network was
considered by \cite{Cho09}, who showed it is equivalent to using a
specific kernel. We now introduce \emph{Path regularization} and discuss
its equivalence to Per-Unit regularization.

\paragraph{Path Regularization}
Consider a regularizer which looks at the sum over all paths from
input nodes to the output node, of the product of the weights along
the path:
\begin{equation}
  \label{eq:pathr}
  \pathr_p(\vecw) = \Bigl(
    \sum_{\vin[i]\overset{e_1}{\rightarrow}v_1\overset{e_2}{\rightarrow}v_2\cdots\overset{e_k}{\rightarrow}\vout} \prod_{i=1}^k \abs{w_{e_i}}^p \Bigr)^{1/p}
\end{equation}
where $p\geq 1$ controls the norm used to aggregate the paths.  We can
motivate this regularizer as follows: if a node does not have any
high-weight paths going out of it, we really don't care much about
what comes into it, as it won't have much effect on the output.  The
path-regularizer thus looks at the aggregated influence of all the
weights.

Referring to the induced regularizer $\pathr^G_p(f) = \min_{f_{\vecw}=f}
\pathr_p(\vecw)$ (with the usual shorthands for layered graphs), we now
observe that for layered graphs, path regularization and per-unit
regularization are equivalent:
\begin{theorem}\label{thm:path-layer}
For $p\geq 1$, any $d$ and (finite or infinite) $\netwidth$, for any $f_\vecw\in\calF^{d,\netwidth}$:  $\pathr_p^{d,\netwidth}(f) = \psi^{d,\netwidth}_{p,\infty}$
\end{theorem}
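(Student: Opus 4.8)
The plan is to prove the two inequalities $\phi_p^{d,\netwidth}(f)\le\psi^{d,\netwidth}_{p,\infty}(f)$ and $\psi^{d,\netwidth}_{p,\infty}(f)\le\phi_p^{d,\netwidth}(f)$ separately, since both quantities are infima over weight vectors $\vecw$ with $f_\vecw=f$ in $\calF^{d,\netwidth}$. The first inequality will come from a pointwise comparison $\phi_p(\vecw)\le\psi_{p,\infty}(\vecw)$ valid for every $\vecw$ (and any activation, in fact), while the second will require rebalancing an arbitrary representation using the non-negative homogeneity of the RELU so that all its per-layer $\ell_{p,\infty}$-norms become equal.

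For the first inequality, fix $\vecw$ realizing $f$ and, for each node $v$ in layer $k$, set $\pi_k(v)=\sum_{\vin\rightsquigarrow v}\prod_{e}\abs{w_e}^p$, the sum over input-to-$v$ paths of the product of $p$-th powers of weights along the path. Then $\pi_0(v)=1$ at input nodes and $\pi_k(v)=\sum_{u\rightarrow v}\abs{w_{u\rightarrow v}}^p\,\pi_{k-1}(u)$. A routine induction on $k$, bounding each $\pi_{k-1}(u)$ by $\max_u\pi_{k-1}(u)\le\prod_{l=1}^{k-1}\norm{W^l}_{p,\infty}^p$ and using $\sum_{u\rightarrow v}\abs{w_{u\rightarrow v}}^p\le\norm{W^k}_{p,\infty}^p$, yields $\pi_k(v)\le\prod_{l=1}^{k}\norm{W^l}_{p,\infty}^p$; taking $k=d$, $v=\vout$ and $p$-th roots gives $\phi_p(\vecw)=\pi_d(\vout)^{1/p}\le\psi_{p,\infty}(\vecw)$. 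Since $\phi_p^{d,\netwidth}(f)\le\phi_p(\vecw)\le\psi_{p,\infty}(\vecw)$ for every such $\vecw$, taking the infimum over $\vecw$ gives $\phi_p^{d,\netwidth}(f)\le\psi^{d,\netwidth}_{p,\infty}(f)$.

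For the converse, fix $\vecw$ realizing $f$; the cases $\phi_p(\vecw)\in\{0,\infty\}$ are trivial ($\phi_p(\vecw)=\infty$ leaves nothing to prove, and $\phi_p(\vecw)=0$ forces $f\equiv 0$, realized by zero weights), so assume $0<\phi_p(\vecw)<\infty$. Let $a(v)=\pi_k(v)^{1/p}$, so $a(\vin[j])=1$, $a(\vout)=\phi_p(\vecw)$, and $a(v)^p=\sum_{u\rightarrow v}\abs{w_{u\rightarrow v}}^p a(u)^p$. First prune: if $a(v)=0$ then a short induction shows $h_v\equiv 0$, so zeroing all edges incident to $v$ changes neither $f$ nor $\phi_p(\vecw)$; after pruning, every layer still contains a node with $a(v)>0$ (otherwise $a(\vout)=0$). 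Now put $\rho=\phi_p(\vecw)^{1/d}$ and rescale each node $v$ in layer $k$ with $a(v)>0$ by $c_v=\rho^{k}/a(v)$ (and $c_v=1$ for pruned $v$), i.e. $w'_{u\rightarrow v}=(c_v/c_u)\,w_{u\rightarrow v}$. This preserves $f$ because every $c_v>0$ and the RELU is non-negatively homogeneous (so $h'_v=c_v h_v$, and $c_{\vin}=c_{\vout}=1$), and it preserves $\phi_p$ because the $c$-factors telescope along every input-to-output path. Using the recursion for $a$, a direct computation shows that the $\ell_p$-norm of the incoming weights of every live node in every layer $k$ equals $\rho$, so $\psi_{p,\infty}(\vecw')=\rho^d=\phi_p(\vecw)$. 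Hence $\psi^{d,\netwidth}_{p,\infty}(f)\le\psi_{p,\infty}(\vecw')=\phi_p(\vecw)$ for every $\vecw$ realizing $f$, and taking the infimum completes the proof.

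The inductions and the telescoping identity are routine; the one delicate point is the rebalancing step. It is valid only because the RELU is non-negatively homogeneous — this is the sole place the theorem genuinely uses the specific activation, and for a general $\sigma$ only $\phi_p^{d,\netwidth}(f)\le\psi^{d,\netwidth}_{p,\infty}(f)$ survives — and one must first dispose of the degenerate nodes ($a(v)=0$) and the degenerate values $\phi_p(\vecw)\in\{0,\infty\}$ before the scaling $c_v=\rho^{k}/a(v)$ is even well-defined. Handling those cases cleanly, and checking that pruning leaves a live node in every layer, is the main thing to be careful about.
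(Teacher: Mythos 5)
Your proof is correct, and it takes a somewhat different---and in one respect more complete---route than the paper's. The paper first invokes Lemma~\ref{lem:unit-norm} to rebalance the $\psi_{p,\infty}$-optimal realization so that every internal unit has unit incoming $\ell_p$-norm, and then evaluates the path sum of that particular network by telescoping the nested sums; this exhibits a single weight setting $\widetilde{w}$ with $\pathr_p(\widetilde{w})=\psi_{p,\infty}(\widetilde{w})=\psi^{d,\netwidth}_{p,\infty}(f)$ and hence yields $\pathr_p^{d,\netwidth}(f)\leq\psi^{d,\netwidth}_{p,\infty}(f)$. You obtain that same inequality more cheaply from the pointwise bound $\pathr_p(\vecw)\leq\psi_{p,\infty}(\vecw)$, and you then prove the reverse inequality $\psi^{d,\netwidth}_{p,\infty}(f)\leq\pathr_p^{d,\netwidth}(f)$ by rebalancing an \emph{arbitrary} realization in closed form, using the per-node path quantities $a(v)=\pi_k(v)^{1/p}$ and the scaling $c_v=\rho^{k}/a(v)$ to make every live unit's incoming $\ell_p$-norm equal to $\rho=\pathr_p(\vecw)^{1/d}$; the recursion $a(v)^p=\sum_{u}\abs{w_{u\rightarrow v}}^p a(u)^p$ indeed gives new incoming norms exactly $\rho$ at every live node, including the output node since $c_{\vout}=1$ automatically. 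This second direction is the one the paper's written argument does not explicitly supply (its construction starts from the $\psi$-optimal weights, so it only bounds the path-infimum from above by the $\psi$-infimum), so your version actually closes that gap. Your attention to the degenerate cases---pruning nodes with $a(v)=0$ after checking they compute the zero function and contribute only zero-product paths, verifying that every layer retains a live node, and disposing of $\pathr_p(\vecw)\in\{0,\infty\}$---is exactly the care the closed-form rescaling requires; the paper's iterative balancing sidesteps some of this but pays for it with a more involved convergence argument in Lemma~\ref{lem:unit-norm}.
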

It is important to emphasize that even for layered graphs, it is not
the case that for all weights $\pathr_p(\vecw)=\psi_{p,\infty}(\vecw)$.
E.g., a high-magnitude edge going into a unit with no non-zero
outgoing edges will affect $\psi_{p,\infty}(\vecw)$ but not
$\pathr_p(\vecw)$, as will having high-magnitude edges on different layers
in different paths.  In a sense path regularization is as more
careful regularizer less fooled by imbalance.  Nevertheless, in the
proof of Theorem \ref{thm:path-layer} in Section \ref{sec:path-layer}, we show we
can always balance the weights such that the two measures are equal.

The equivalence does not extend to non-layered graphs, since the
lengths of different paths might be different.  Again, we can think of
path regularizer as more refined regularizer taking into account the
local structure.  However, if we consider all DAGs of depth at most
$d$ (i.e.~with paths of length at most $d$), the notions are again
equivalent (see proof in Section \ref{sec:proof-path-dag}):
\begin{theorem}\label{thm:path-dag}
  For any $p\geq 1$ and any $d$: $\displaystyle \psi^d_{p,\infty}(f) =
  \min_{\textrm{$G\in \DAG(d)$}} \pathr^G_p(f)$.
\end{theorem}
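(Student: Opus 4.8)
The plan is to prove the equality $\psi^d_{p,\infty}(f) = \min_{G\in \DAG(d)} \pathr^G_p(f)$ by two inequalities, leveraging Theorem \ref{thm:path-layer} which already handles the layered case. For the direction $\min_{G\in\DAG(d)} \pathr^G_p(f) \leq \psi^d_{p,\infty}(f)$: a layered graph of depth $d$ is in particular a member of $\DAG(d)$, so by Theorem \ref{thm:path-layer} the layered architecture achieves $\pathr^G_p(f) = \psi^{d,\netwidth}_{p,\infty}(f)$; taking width $\netwidth\to\infty$ and infimizing gives $\min_{G\in\DAG(d)}\pathr^G_p(f)\leq \psi^d_{p,\infty}(f)$. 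The substantive direction is the reverse: given any $G\in\DAG(d)$ and any weights $\vecw$ on $G$ realizing $f$, I must produce a layered network of depth $d$ realizing $f$ with $\psi_{p,\infty}$ no larger than $\pathr_p(\vecw)$.

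First I would convert the arbitrary DAG into a layered graph. Since every directed path in $G$ has length at most $d$, I can assign each node $v$ a layer index $\ell(v)$ (say, one plus the length of the longest path from an input node to $v$), place input nodes at layer $0$ and the output at layer $d$, and then ``pad'' every edge $u\to v$ that skips layers (i.e.\ $\ell(v) > \ell(u)+1$) by subdividing it into a chain of unit-weight identity edges through intermediate dummy nodes. Because the RELU is idempotent and positively homogeneous, inserting such identity units does not change the computed function — though one must be slightly careful that signals passing through RELU-activated dummy units are nonnegative, which can be arranged since the relevant intermediate values entering a unit are themselves RELU outputs, or by carrying the sign into an adjacent real edge. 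This produces a layered (but not fully connected, and not width-bounded) graph $G'$ of depth exactly $d$ with weights $\vecw'$ realizing $f$. Crucially, each original path of length $k\le d$ in $G$ becomes a path of length exactly $d$ in $G'$ whose extra edges all have weight $1$, so the product of $|w'_{e}|^p$ along it equals the product along the original path, and hence $\pathr_p(\vecw') = \pathr_p(\vecw)$; filling in missing edges with zero weight to make $G'$ a subgraph of the fully-connected layered architecture $\calF^{d,\netwidth}$ (with $\netwidth$ large, then infinite) does not add any paths. Then by Theorem \ref{thm:path-layer} applied to $G'$, $\psi^d_{p,\infty}(f) \le \psi_{p,\infty}(\vecw'\text{-balanced}) = \pathr_p(\vecw') = \pathr_p(\vecw)$. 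Taking the infimum over $\vecw$ realizing $f$ on $G$, and then the minimum over $G\in\DAG(d)$, yields $\psi^d_{p,\infty}(f)\le \min_{G\in\DAG(d)}\pathr^G_p(f)$, completing the proof.

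The main obstacle I anticipate is the identity-padding step: making precise that subdividing a ``layer-skipping'' edge into a chain of RELU units genuinely preserves the function. The non-negative homogeneity of RELU gives $\relu(1\cdot z) = \relu(z)$ but not $\relu(z) = z$ in general, so a naive chain of identity RELU units computes $\relu$ of the incoming signal rather than the signal itself. The clean fix is to note that any edge $u\to v$ in the original DAG already feeds into a unit $v$ that (unless $v$ is an output) applies $\relu$ to its weighted input sum; by the idempotency $\relu(\relu(\cdot))=\relu(\cdot)$, it is harmless that each routed signal passes through additional RELUs, because the net effect on a single routed path $|w_{e}| h_u$ is still nonnegatively scaled and re-rectified, and the final rectification at $v$ absorbs it. Edges feeding the output node (which has no nonlinearity) need separate handling — but the output node sits at layer $d$, and any edge into it from a node at layer $< d-1$ can be rerouted through dummy units whose inputs are themselves RELU outputs (hence nonnegative), so the identity chain acts as the true identity. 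Formalizing this bookkeeping carefully, while keeping the path-product of weights exactly invariant, is where the real work lies; the rest is the already-established layered equivalence plus the trivial $\subseteq$ direction.
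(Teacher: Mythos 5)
Your proposal is correct and follows essentially the same route as the paper: the paper likewise reduces to Theorem \ref{thm:path-layer} by subdividing layer-skipping edges of the DAG with dummy ReLU units so that each path's weight product, and hence $\pathr_p$, is unchanged (it splits a subdivided edge's weight as $\sqrt{|w|}$ and $\sign(w)\sqrt{|w|}$, which plays the same role as your unit-weight chain with the sign carried onto an adjacent edge). The nonnegativity subtlety you flag is handled there in exactly the way you suggest, so there is no gap.
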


In particular, for any graph $G$ of depth $d$, we have that
$\pathr^G_p(f) \geq\psi^d_{p,\infty}(f)$.  Combining this
observation with Corollary \ref{cor:noH} allows us to immediately obtain a
generalization bound for path regularization on any, even non-layered,
graph:
\begin{corollary}
  For any graph $G$ of depth $d$ and any set
  $\calS=\{x_1,\dots,x_m\}\subseteq\R^{\nin}$:
$$\calR_m(\calF^G_{\pathr_1\leq \pathr}) \leq \sqrt{\frac{4^{d-1} \pathr^2
    \cdot 4\log(2\nin) \sup \norm{x_i}_\infty^2}{m}}$$
\end{corollary}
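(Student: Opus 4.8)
The plan is to derive this as an immediate consequence of \thmref{thm:path-dag} together with \corref{cor:noH}, specialized to $p=1$ and $q=\infty$. The first step is the set inclusion $\calF^G_{\pathr_1\leq\pathr}\subseteq\calF^{d,\netwidth}_{\psi_{1,\infty}\leq\pathr}$ with $\netwidth=\infty$. This follows from the consequence of \thmref{thm:path-dag} already noted in the text: for any depth-$d$ graph $G$ and any $f$ it computes, $\pathr^G_1(f)\geq\psi^d_{1,\infty}(f)$; hence $\pathr^G_1(f)\leq\pathr$ forces $\psi^d_{1,\infty}(f)\leq\pathr$, so $f\in\calF^{d,\infty}_{\psi_{1,\infty}\leq\pathr}$. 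Since the empirical Rademacher complexity is monotone under set inclusion --- immediate from the $\sup_{f\in\calF}$ in its definition --- this yields $\calR_m(\calF^G_{\pathr_1\leq\pathr})\leq\calR_m(\calF^{d,\infty}_{\psi_{1,\infty}\leq\pathr})$.

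The second step is to apply the second inequality of \corref{cor:noH} with $p=1$, $q=\infty$ and $\psi=\pathr$. This is within the regime the corollary covers: its second bound requires only $1\leq p\leq 2$, and the width-free statement requires $q\leq p^*$, which holds since here $p^*=\infty$. Plugging in $1/p^*=1/q=0$ makes the exponent $[1/p^*-1/q]_+$ vanish, so $\bigl(2\netwidth^{[1/p^*-1/q]_+}\bigr)^{2(d-1)}=2^{2(d-1)}=4^{d-1}$; moreover $\min\{p^*,4\log(2\nin)\}=4\log(2\nin)$ and $\norm{x_i}_{p^*}=\norm{x_i}_\infty$, which reproduces the displayed bound with $\pathr$ in place of $\psi$.

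There is no real obstacle here --- the argument is bookkeeping on top of \thmref{thm:path-dag} and \corref{cor:noH}. The only points worth double-checking are that $p=1,q=\infty$ genuinely lands in the $q\leq p^*$ case so that the width dependence drops out (it does), and that monotonicity of $\calR_m$ legitimately lets us pass from a particular depth-$d$ graph $G$ to the infinite-width fully-connected layered class to which \corref{cor:noH} applies.
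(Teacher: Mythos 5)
Your proposal is correct and follows exactly the route the paper intends: the containment $\calF^G_{\pathr_1\leq\pathr}\subseteq\calF^{d}_{\psi_{1,\infty}\leq\pathr}$ from Theorem~\ref{thm:path-dag} (via $\pathr^G_1(f)\geq\psi^d_{1,\infty}(f)$), monotonicity of $\calR_m$, and then Corollary~\ref{cor:noH} with $p=1$, $q=\infty$, where $[1/p^*-1/q]_+=0$ kills the width dependence and yields the $4^{d-1}$ and $4\log(2\nin)$ factors. The bookkeeping checks out, so nothing further is needed.
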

Note that in order to apply Corollary \ref{cor:noH} and obtain a
width-independent bound, we had to limit ourselves to $p=1$.  We
further explore this issue next.

\paragraph{Capacity}

As was previously noted, size-independent generalization bounds for
bounded depth networks with bounded per-unit $\ell_1$ norm have long
been known (and make for a popular homework problem).  These
correspond to a specialization of Corollary \ref{cor:noH} for the case
$p=1,q=\infty$.  Furthermore, the kernel view of \cite{Cho09} allows
obtaining size-independent generalization bound for {\em two-layer}
networks with bounded per-unit $\ell_2$ norm (i.e.~a single infinite
hidden layer of all possible unit-norm units, and a bounded
$\ell_2$-norm output unit).  However, the lower bound of Theorem
\ref{thm:shattering} establishes that for any $p>1$, once we go beyond
two layers, we cannot ensure generalization without also controlling
the size (or width) of the network.

\paragraph{Convexity}
An immediately consequence of Theorem \ref{thm:cvx} is that per-unit
regularization, if we do not constrain the network width, is convex
for any $p\geq 1$.  In fact, $\psi^d_{p,\infty}$ is a (semi)norm.
However, as discussed above, for depth $d>2$ this is meaningful only
for $p=1$, as $\psi^d_{p,\infty}$ collapses for $p>1$.

\paragraph{Hardness} Since the classes $\calF^d_{1,\infty}$ are
convex, we might hope that this might make learning computationally
easier.  Indeed, one can consider functional-gradient or boosting-type
strategies for learning a predictor in the class \citep{lee96}.  However, as
\citet{Bach14} points out, this is not so easy as it
requires finding the best fit for a target with a RELU unit, which is
not easy.  Indeed, applying results on hardness of learning
intersections of halfspaces, which can be represented with small
per-unit norm using two-layer networks, we can conclude that, subject
to certain complexity assumptions, it is not possible to
efficiently PAC learn $\calF^d_{1,\infty}$, even for depth $d=2$ when $\psi_{1,\infty}$ increases superlinearly:
\begin{corollary}\label{cor:hard1}
  Subject to the the strong random CSP assumptions in \cite{Daniely14}, it is
  not possible to efficiently PAC learn (even improperly) functions
  $\{\pm 1\}^{\nin} \rightarrow \{ \pm 1 \}$ realizable with unit margin by
  $\calF^2_{1,\infty}$ when $\psi_{1,\infty}=\omega(\nin)$ (e.g.~when
  $\psi_{1,\infty}=\nin \log \nin$). Moreover, subject to intractability of 
  $\tilde{Q}(\nin^{1.5})$-unique shortest vector problem, for any $\epsilon>0$,
  it is not possible to efficiently PAC learn (even improperly) functions
  $\{\pm 1\}^{\nin} \rightarrow \{ \pm 1 \}$ realizable with unit margin by
  $\calF^2_{1,\infty}$ when $\psi_{1,\infty}=\nin^{1+\epsilon}$.
\end{corollary}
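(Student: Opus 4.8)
The plan is to obtain both statements by a reduction from the (known-hard) problem of improperly PAC-learning intersections of halfspaces over the Boolean cube, using the elementary fact that a conjunction of a few \emph{light} halfspaces is computed \emph{with unit margin} by a depth-two ReLU network of small $\ell_1$-path norm. Suppose the target is $g(x)=\bigwedge_{i=1}^{k}\one{\langle a_i,x\rangle\ge b_i}$ on $x\in\{\pm1\}^{\nin}$, where each $a_i$ has small integer entries; after scaling each $(a_i,b_i)$ by a constant and, if needed, perturbing $b_i$ to a half-integer, the affine form $\beta_i-\langle\alpha_i,x\rangle$ is $\le-1$ when halfspace $i$ is satisfied and $\ge1$ when it is violated. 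First I would build the network $f$ with one hidden ReLU unit $h_i(x)=\relu(\beta_i-\langle\alpha_i,x\rangle)$ per halfspace and output $f(x)=1-2\sum_{i=1}^{k}h_i(x)$ (the additive $1$ realized through a bias term). Since every $h_i\ge0$ and $h_i\ge1$ precisely when halfspace $i$ fails, $f(x)=1$ when $g(x)=1$ and $f(x)\le-1$ when $g(x)=-1$, i.e.\ $f$ realizes $g$ with unit margin.

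Next I would bound the complexity of this $f$: for the concrete weights above the incoming $\ell_1$ norm of each hidden unit is $\norm{\alpha_i}_1+|\beta_i|=O(\nin)$ (for $O(1)$-bounded halfspace coefficients), and that of the output unit is $2k+1=O(k)$, so the product over the two layers gives $\psi_{1,\infty}(\vecw)=O(\nin k)$ (equivalently, by Theorem~\ref{thm:path-layer}, the $\ell_1$-path norm is $O(\nin k)$). Hence $\psi^{2}_{1,\infty}(f)\le\psi_{1,\infty}(\vecw)=O(\nin k)$, so $g$ is realizable with unit margin by $\calF^{2}_{1,\infty}$ at norm level $\psi_{1,\infty}=O(\nin k)$. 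In particular the class of intersections of $k$ light halfspaces over $\{\pm1\}^{\nin}$ is a subclass of the functions realizable with unit margin by $\calF^{2}_{1,\infty}$ at that norm level, so any efficient improper PAC learner for the latter is also one for the former; improperness is preserved because the construction only certifies membership of the target and places no constraint on the learner's output.

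It then remains to instantiate $k$ using the two cited lower bounds. For the first statement I would invoke the random-CSP-based hardness of improperly learning DNF formulas from \cite{Daniely14}: by flipping the label a $k$-term DNF becomes an intersection of $k$ clause-halfspaces, whose coefficients lie in $\{0,\pm1\}$ and are therefore light, so the reduction transfers the hardness to $\calF^{2}_{1,\infty}$ at $\psi_{1,\infty}=O(\nin k)$; taking $k=\omega(1)$ super-constant (with $k=\Theta(\log\nin)$ giving the stated $\nin\log\nin$ example) this is $\omega(\nin)$. For the second statement I would invoke \cite{klivans2006cryptographic}: under intractability of approximating the unique shortest vector problem to a $\tilde O(\nin^{1.5})$ factor, intersections of $\nin^{\epsilon'}$ halfspaces over the cube are not improperly PAC-learnable for any constant $\epsilon'>0$, which via the reduction gives hardness of $\calF^{2}_{1,\infty}$ at $\psi_{1,\infty}=O(\nin^{1+\epsilon'})$; choosing $\epsilon'<\epsilon$ absorbs the constant factor and yields the claimed $\nin^{1+\epsilon}$.

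I expect the main obstacle to be bookkeeping about the hard instances rather than anything conceptual: I must check that the halfspaces produced by \cite{Daniely14} and \cite{klivans2006cryptographic} can be taken with coefficients small enough (ideally $O(1)$, or at worst $\nin^{o(1)}$) that the path norm stays $O(\nin k)$ rather than $\mathrm{poly}(\nin)\cdot k$ — for the DNF reduction this is automatic since clauses have $\{0,\pm1\}$ weights, but for the lattice-based reduction it needs to be verified, and a polynomial blow-up would only be absorbable in the second statement by further shrinking the exponent. I also need the ``separation'' guaranteed by these reductions to be a genuine margin over $\{\pm1\}^{\nin}$ (not merely a correct sign), which holds once thresholds are perturbed to half-integers, and I would first transport the results from their usual $\{0,1\}^{\nin}$ formulation to $\{\pm1\}^{\nin}$ by an affine change of variables (at most doubling coefficients and introducing a bias).
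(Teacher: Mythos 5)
Your proposal is correct and follows essentially the same route as the paper: reduce from the hardness of improperly PAC-learning intersections of $k$ halfspaces with light normals over $\{\pm 1\}^{\nin}$ (Daniely's random-CSP result for $k=\omega(1)$ and Klivans--Sherstov's uSVP-based result for $k=\nin^{\epsilon}$), after showing such an intersection is realized with unit margin by a two-layer ReLU network with $\psi_{1,\infty}=O(\nin k)$. The only differences are cosmetic: the paper's Theorem~\ref{thm:hardness} uses two hidden units per halfspace (a difference of ReLUs acting as a $0/1$ indicator) where you use one negated-affine unit plus an output bias, and you route the first hardness result through DNFs rather than citing Daniely's intersection-of-halfspaces statement directly; your caveat about coefficient sizes in the lattice-based instances is fair but is glossed over by the paper as well.
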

This is a corollary of Theorem \ref{thm:hardness} in the Section
\ref{sec:hardness}.  Either versions of corollary \ref{cor:hard1} precludes the
possibility of learning in time polynomial in $\psi_{1,\infty}$,
though it still might be possible to learn in $\textrm{poly}(\nin)$ time
when $\psi_{1,\infty}$ is sublinear.

\removed{
For any function $f:\calX\rightarrow \R$, the $\ell_p$-path-norm is
defined as:
$$
\norm{f}_{p,\gpath} = \min_{g_{G,w}\in \calF^{\DAG(d)}; f=g}
\left(\sum_{\{e_1,\dots,e_k\}\in P_G} \left\lvert \prod_{i=1}^k
w(e_i)\right\rvert ^p\right)^\frac{1}{p}
$$
where $P_g$ is the set of all paths in network $g$ from input nodes to
the output node. We refer to class of feedforward neural networks with
bounded $\ell_p$-path-norm as $\calF^{d}_{(\ell_p)path,B}$ where $B$
is the upper bound on the $\ell_p$-path-norm.

\begin{claim}
$\norm{.}_{p,\gpath}$ is a norm.
\end{claim}
\begin{proof}
First, note that $\norm{0}_{p,\gpath}=0$ because the network with zero weights will always output zero. To prove the absolute homogeneity, for any scalar $\alpha \in \R$ and any function $f$, let $g_{G,w}\in \calF^{\DAG(d)}$ be a function such that:
$$
\norm{f}_{p,\gpath} =
\left(\sum_{\{e_1,\dots,e_k\}\in P_G} \left\lvert \prod_{i=1}^k
w(e_i)\right\rvert ^p\right)^\frac{1}{p}
$$
It is clear that scaling the weights of incoming edges to $\vout$ causes the scaling of function $g_{G,w}$ and vice versa. Therefore we have the absolute homogeneity. Now, we prove the triangle inequality property. Consider any two functions $f,g:\calX\rightarrow \R$ and their network realizations with minimum path-complexity $f_{\vecw},g_{\widetilde{G},\widetilde{w}}$. Now the function $f+g$ can be realized with a network that is a union over $G$ and $\widetilde{G}$ where the corresponding input and output vertices are joined together and all other vertices are presented separately. For such a network, since the set of paths from input to output is the union of such sets for $G$ and $\widetilde{G}$, we have that:
$$
\norm{f+g}_{p,\gpath} \leq \norm{f}_{p,\gpath}+\norm{g}_{p,\gpath}
$$
\end{proof}}

\paragraph{Sharing} We conclude this Section with an observation on
the type of networks obtained by per-unit, or equivalently path,
regularization.
\begin{theorem}\label{thm:opt-tree}
  For any $p\geq 1$ and $d>1$ and any $f_\vecw\in\calF^d$, there exists a
  layered graph $G(V,E)$ of depth $d$, such that $f_\vecw\in\calF^G$ and
  $\psi^G_{p,\infty}(f)=\pathr^G_p(f)=\psi^d_{p,\infty}(f)$, and
  the out-degree of every internal (non-input) node in $G$ is one.
  That is, the subgraph of $G$ induced by the non-input vertices is a
  tree directed toward the output vertex.
\end{theorem}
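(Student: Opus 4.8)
The plan is to build $G$ by \emph{unfolding} an optimal weight setting for $f$ into a tree, exactly the way one turns a Boolean circuit into a formula by duplicating gates that are used more than once. Two earlier facts make the three quantities in the statement easy to control once $G$ has the right shape: since the $G$ I construct will be layered of depth $d$, Theorem~\ref{thm:path-layer} already gives $\psi^{G}_{p,\infty}(f)=\pathr^{G}_{p}(f)$, and since $G\in\DAG(d)$, the remark after Theorem~\ref{thm:path-dag} gives $\pathr^{G}_{p}(f)\ge\psi^{d}_{p,\infty}(f)$. So it suffices to exhibit a $G$ whose internal nodes have out-degree one, together with weights on it that realize $f$ and have $\pathr_{p}\le\psi^{d}_{p,\infty}(f)$; the chain of inequalities will then collapse to equalities.

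First I would fix weights $\vecw$ on the fully connected layered depth-$d$ network that realize $f$ and attain $\psi_{p,\infty}(\vecw)=\psi^{d}_{p,\infty}(f)$, recalling also that $\pathr_p(\vecw)\le\psi_{p,\infty}(\vecw)$ for any fixed weights (a one-line induction over layers). By optimality $\vecw$ carries no edge lying off every all-nonzero input-to-output path, so its nonzero weights are supported on a finite layered graph $G^\star$ of depth $d$ in which every input-to-output path has length exactly $d$. Now define $G$: its input nodes are those of $G^\star$, and for $1\le i\le d$ its level-$i$ nodes are the directed paths $v_i\!\to\!v_{i+1}\!\to\!\cdots\!\to\!\vout$ of $G^\star$ (so the unique level-$d$ node is $\vout$ itself). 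Give a level-$i$ node $(v_i\!\to\!v_{i+1}\!\to\!\cdots)$ with $i\ge 2$ a single outgoing edge, to the level-$(i+1)$ node $(v_{i+1}\!\to\!\cdots)$, of weight $w_{v_i\to v_{i+1}}$; and give a level-$1$ node $(v_1\!\to\!\cdots)$ an incoming edge of weight $w_{\vin\to v_1}$ from each input node $\vin$. By construction $G$ is layered of depth $d$, every internal (non-input, non-output) node has out-degree exactly one, and the subgraph induced by the non-input nodes is a tree rooted at $\vout$ with all edges directed toward $\vout$.

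Next I would check the two properties of the induced weights. For the function: by induction on $i$, on every input $\vecx$ the output of the level-$i$ node $(v_i\!\to\!\cdots\!\to\!\vout)$ equals $h_{v_i}$, the output of $v_i$ in $G^\star$ — the base case is immediate, and the step holds because the predecessors of $(v_i\!\to\!\cdots)$ are precisely the nodes $(u\!\to\!v_i\!\to\!\cdots)$ over layer-$(i-1)$ nodes $u$, joined with weight $w_{u\to v_i}$, so its pre-activation is $\sum_{u} w_{u\to v_i}h_u$; at $i=d$ this is $f_\vecw(\vecx)$. For the path-norm: an input-to-output walk in $G$ is forced to have the form $\vin\!\to\!t_1\!\to\!\cdots\!\to\!t_{d-1}\!\to\!\vout$ with each $t_{i+1}$ the parent of $t_i$, hence $t_i$ is the length-$(d-i)$ suffix of one fixed path of $G^\star$; this is a bijection between input-to-output paths of $G$ and of $G^\star$ under which the products $\prod \abs{w_{e}}^{p}$ along the path agree. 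Thus the constructed weights have path-norm $\pathr_p(\vecw)\le\psi_{p,\infty}(\vecw)=\psi^{d}_{p,\infty}(f)$, so $\pathr^{G}_{p}(f)\le\psi^{d}_{p,\infty}(f)$, and together with $\pathr^{G}_{p}(f)\ge\psi^{d}_{p,\infty}(f)$ and $\psi^{G}_{p,\infty}(f)=\pathr^{G}_{p}(f)$ this forces $\psi^{G}_{p,\infty}(f)=\pathr^{G}_{p}(f)=\psi^{d}_{p,\infty}(f)$.

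The step I expect to be the main obstacle is the very first one: justifying that $\psi^{d}_{p,\infty}(f)$ is actually attained. For a fixed finite architecture this is routine (the map $\vecw\mapsto f_\vecw$ is continuous and its fibre intersected with a norm ball is compact), and $f\in\calF^d$ is realized with finitely many nonzero weights so the infimum is finite; the delicate point is that a minimizer need not use more than a bounded width, which one argues by a pruning/merging argument on a minimizing sequence (an unused or dangling unit only inflates $\psi_{p,\infty}$). If one prefers to sidestep this entirely, applying the unfolding above to $\epsilon$-optimal weights produces, for every $\epsilon>0$, a tree $G_\epsilon$ with $\pathr^{G_\epsilon}_p(f)\le\psi^{d}_{p,\infty}(f)+\epsilon$, and a diagonal extraction yields a single (possibly infinite-width) depth-$d$ tree achieving the infimum, which suffices for the statement.
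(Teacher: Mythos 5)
Your proposal is correct and is essentially the paper's argument: the paper also turns the realizing network into a tree by duplicating each multiply-used unit (one copy per outgoing edge, with the incoming weights replicated), done iteratively in topological order rather than in one shot via your suffix-path construction, and both preserve the function and the multiset of input-to-output path products. You are in fact more complete than the paper, which stops after establishing the tree structure and function preservation and leaves implicit both the verification of $\psi^G_{p,\infty}(f)=\pathr^G_p(f)=\psi^d_{p,\infty}(f)$ and the attainment of the infimum defining $\psi^d_{p,\infty}(f)$, which you handle explicitly (only note the small indexing slip: your level-$1$ nodes also need their single outgoing edge to their level-$2$ suffix parent, as your own description of the predecessors of a level-$i$ node already presumes).
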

What the Theorem tells us is that we can realize every function as a
tree with optimal per-unit norm.  If we think of learning with an
infinite fully-connected layered network, we can always restrict
ourselves to models in which the non-zero-weight edges form a tree.
This means that when using per-unit regularization we have no
incentive to ``share'' lower-level units---each unit will only have a
single outgoing edge and will only be used by a single down-stream
unit.  This seems to defy much of the intuition and power of using
deep networks, where we expect lower layers to represent generic
feature useful in many higher-level features.  In effect, we are not
encouraging any transfer between learning different aspects of the
function (or between different tasks or classes, if we do have
multiple output units).  Per-unit regularization therefore misses out
on much of the inductive bias that we might like to impose when using
deep learning (namely, promoting sharing).

\begin{proof} {[of Theorem \ref{thm:opt-tree}]}
For any $f_{\vecw}\in \calF^{\DAG(d)}$, we show how to construct such $\widetilde{G}$ and $\widetilde{w}$. 
We first sort the vertices of $G$ based on topological ordering such that the out-degree of the first vertex is zero.
Let $G_0=G$ and $w_0=w$. At each step $i$, we first set $G_i=G_{i-1}$ and $w_i=w_{i-1}$ and then pick the vertex $u$ that is the $i$th vector in the topological ordering.
If the out-degree of $u$ is at most 1. Otherwise, for any edge $(u\rightarrow v)$ we create a copy of vertex $u$ that we call it $u_v$, add the edge $(u_v\rightarrow v)$ to $G_i$ and connect all incoming edges of $u$ with the same weights to every such $u_v$ and finally we delete the vertex $u$ from $G_i$ together with all incoming and outgoing edges of $u$. It is easy to indicate that $f_{G_{i},w_i}=f_{G_{i-1},w_{i-1}}$. After at most $|V|$ such steps, all internal nodes have out-degree one and hence the subgraph induced by non-input vertices will be a tree.
\end{proof}

\section{Overall Regularization}
\label{sec:overall}
In this Section, we will focus on ``overall'' $\ell_p$ regularization,
corresponding to the choice $q=p$, i.e.~when we bound the overall
(vectorized) norm of all weights in the system:
$$\mu_{p,p}(\vecw)=\Bigl( \sum_{e\in E} \abs{w(e)}^p \Bigr)^{1/p}.$$

\paragraph{Capacity}
For $p\leq 2$, Corollary \ref{cor:noH} provides a generalization
guarantee that is independence of the width---we can conclude that if
we use weight decay (overall $\ell_2$ regularization), or any tighter
$\ell_p$ regularization, there is no need to limit ourselves to
networks of finite size (as long as the corresponding dual-norm of the
inputs are bounded).  However, in Section \ref{sec:tight} we saw that
with $d \geq 3$ layers, the regularizer degenerates and leads to
infinite capacity classes if $p>2$.  In any case, even if we bound the
overall $\ell_1$-norm, the complexity increases exponentially with the
depth.

\paragraph{Convexity} The conditions of Theorem \ref{thm:cvx} for
convexity of $\calF^d_{2,2}$ are ensured when $p \geq d$.  For depth
$d=1$, i.e.~a single unit, this just confirms that
$\ell_p$-regularized linear prediction is convex for $p\geq 1$.  For
depth $d=2$, we get convexity with $\ell_2$ regularization, but not
$\ell_1$.  For depth $d>2$ we would need $p>d\geq 3$, however for such
values of $p$ we know from Theorem \ref{thm:shattering} that
$\calF^d_{p,p}$ degenerates to an infinite capacity class if we do not
control the width (if we do control the width, we do not get
convexity).  This leaves us with $\calF^2_{2,2}$ as the interesting
convex class.  Below we show an explicit convex characterization of
$\calF^2_{2,2}$ by showing it is equivalent to so-called ``convex neural
nets''.

{\em Convex Neural Nets} \citep{Bengio05} over inputs in $\R^{\nin}$ are
two-layer networks with a fixed infinite hidden layer consisting of
all units with weights $w\in\calG$ for some base class $\calG\in\R^{\nin}$, and
a second $\ell_1$-regularized layer.  Since over finite data the
weights in the second layer can always be taken to have finite support
(i.e.~be non-zero for only a finite number of first-layer units), and
we can approach any function with countable support, we
can instead think of a network in $\calF^2$ where the bottom layer is
constraint to $\calG$ and the top layer is $\ell_1$ regularized.
Focusing on $\calG=\{ w \,|\, \norm{w}_p \leq 1 \}$, this corresponds to
imposing an $\ell_p$ constraint on the bottom layer, and $\ell_1$
regularization on the top layer and yields the following complexity
measure over $\calF^2$:
\begin{equation}
  \label{eq:convexNN}
  \convexnn_p(f) = \inf_{f_{\layer(d),W}=f, \textrm{s.t.} \forall_j
    \norm{W_1[j,:]}_p \leq 1} \norm{W_2}_1.
\end{equation}
This is similar to per-unit regularization, except we impose different
norms at different layers (if $p\not=1$).  We can see that
$\calF^2_{\convexnn_p \leq \convexnn} = \convexnn \cdot
\overline{\conv}(\sigma(\calG))$, and is thus convex for any $p$.
Focusing on RELU activation we have the equivalence:
\begin{theorem}
  $\displaystyle \mu^2_{2,2}(f) = 2 \convexnn_2(f).$
\end{theorem}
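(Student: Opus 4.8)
The plan is to write a width–unconstrained depth–two RELU network in the standard hidden‑unit form $f(\vecx)=\sum_j v_j\,\relu\big(\inner{\vecu_j}{\vecx}\big)$, where $\vecu_j\in\R^{\nin}$ collects the weights entering hidden unit $j$ and $v_j\in\R$ is its outgoing weight; then the full weight vector $\vecw$ satisfies $\mu_{2,2}(\vecw)^2=\sum_j\big(\norm{\vecu_j}_2^2+v_j^2\big)$, and, routing through Claim~\ref{clm:mugamma} for $d=q=2$ (which relates $\mu^{2,\netwidth}_{2,2}$ and $\psi^{2,\netwidth}_{2,2}$), the asserted identity is equivalent to $\inf_{f_\vecw=f}\mu_{2,2}(\vecw)^2=2\,\convexnn_2(f)$. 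The two ingredients are the non‑negative homogeneity of $\relu$ — so we may multiply $\vecu_j$ by any $t>0$ and $v_j$ by $1/t$ without changing $f$ — together with the scalar inequality $a^2+b^2\ge 2ab$.

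For the inequality ``$\le$'' I would start from a representation of $f$ that is almost optimal for $\convexnn_2$: $f=\sum_j v_j\relu(\inner{\vecu_j}{\vecx})$ with $\norm{\vecu_j}_2\le 1$ and $\sum_j\abs{v_j}\le\convexnn_2(f)+\epsilon$. Rescaling hidden unit $j$ by $t_j=\sqrt{\abs{v_j}}$ into the first layer and by $1/t_j$ out of it leaves $f$ unchanged and produces weights with $\mu_{2,2}(\vecw)^2=\sum_j\big(\abs{v_j}\norm{\vecu_j}_2^2+\abs{v_j}\big)\le 2\sum_j\abs{v_j}\le 2(\convexnn_2(f)+\epsilon)$; letting $\epsilon\to0$ gives $\inf_{f_\vecw=f}\mu_{2,2}(\vecw)^2\le 2\convexnn_2(f)$.

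For the reverse inequality I would take weights $\vecw$ with $f_\vecw=f$ and $\mu_{2,2}(\vecw)^2$ within $\epsilon$ of its infimum, discard every hidden unit with $\vecu_j=\mathbf{0}$ (these do not affect $f$ and only decrease the norm), and replace each remaining $(\vecu_j,v_j)$ by $(\vecu_j/\norm{\vecu_j}_2,\;v_j\norm{\vecu_j}_2)$ — again by homogeneity. This is a feasible point for the $\convexnn_2$ program, so
\[
\convexnn_2(f)\le\sum_j\abs{v_j}\,\norm{\vecu_j}_2\le\tfrac12\sum_j\big(v_j^2+\norm{\vecu_j}_2^2\big)=\tfrac12\,\mu_{2,2}(\vecw)^2 ,
\]
where the middle step is $a^2+b^2\ge 2ab$ applied term by term; sending $\epsilon\to0$ yields $2\convexnn_2(f)\le\inf_{f_\vecw=f}\mu_{2,2}(\vecw)^2$, and combined with the previous paragraph this proves the theorem.

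I do not expect a genuine obstacle: this is exactly the ``per‑unit balancing'' move already used for Claim~\ref{clm:mugamma} and Theorem~\ref{thm:path-layer}, plus a single AM–GM step, so the only things to be careful about are bookkeeping rather than ideas. Specifically, (i) neither $\mu^2_{2,2}$ nor $\convexnn_2$ is in general an attained infimum, so both directions must carry an $\epsilon$‑slack; (ii) the $\convexnn_2$ constraint is $\norm{\vecu_j}_2\le1$, not $=1$, which is harmless since a unit with a shorter incoming vector can be rescaled to unit norm while shrinking $\abs{v_j}$; and (iii) one must track the constant $2$ consistently with the convention adopted for the overall‑$\ell_2$ complexity — equivalently, one can state and prove the cleaner identity $\psi^2_{2,2}(f)=\convexnn_2(f)$ and then invoke Claim~\ref{clm:mugamma}.
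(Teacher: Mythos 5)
Your proposal is correct and follows essentially the same route as the paper: the paper likewise applies the per-unit AM--GM inequality $\norm{\vecu_j}_2^2+v_j^2\geq 2\norm{\vecu_j}_2\abs{v_j}$ (with equality by the homogeneity-based rebalancing of Claim~\ref{clm:mugamma}) and then normalizes each incoming vector to the unit ball to recover the $\convexnn_2$ program. Your version merely unpacks the paper's chain of equalities into the two inequalities with explicit $\epsilon$-slack; the only bookkeeping item you left implicit is discarding units with $v_j=0$ before the $t_j=\sqrt{\abs{v_j}}$ rescaling, which is harmless since such units do not contribute to $f$.
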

That is, overall $\ell_2$ regularization with two layers is equivalent
to a convex neural net with $\ell_2$-constrained units on the bottom
layer and $\ell_1$ (not $\ell_2$!) regularization on the output.
\begin{proof}We can calculate:
\begin{align}
\min_{f_W=f}
\mu_{2,2}^2(\vecw)&=\min_{f_W=f}\sum_{j=1}^{\netwidth}\left(\sum_{i=1}^{\nin}|W_1[j,i]|^2+|W_2[j]|^2\right)
\notag \\
&=\min_{f_W=f}
\sum_{j=1}^{\netwidth}2\sqrt{\sum\nolimits_{i=1}^{\nin}|W_1[j,i]|^2}\cdot|W_2[j]|
\label{eq:convexnnproof1} \\
&=2 \min_{f_W=f} \sum_{j=1}^{\netwidth}\left|W_2[j]\right|\quad \text{s.t.}\quad 
\sqrt{\sum\nolimits_{i=1}^{\nin}|W_1[j,i]|^2}\leq 1. \label{eq:convexnnproof2}
\end{align}
Here \eqref{eq:convexnnproof1} is the arithmetic-geometric mean
inequality for which we can achieve equality by balancing the weights
(as in Claim \ref{clm:mugamma}) and \eqref{eq:convexnnproof2} again
follows from the homogeneity of the RELU which allows us to rebalance
the weights.
\end{proof}

\paragraph{Hardness} As with $\calF^d_{1,\infty}$, we might hope that the convexity of
$\calF^2_{2,2}$ might make it computationally easy to learn.  However,
by the same reduction from learning intersection of halfspaces
(Theorem \ref{thm:hardness} in Section \ref{sec:hardness}) we can
again conclude that we cannot learn in time polynomial in $\mu^2_{2,2}$:

\begin{corollary}\label{cor:hard2}
  Subject to the the strong random CSP assumptions in \cite{Daniely14}, it is
  not possible to efficiently PAC learn (even improperly) functions
  $\{\pm 1\}^{\nin} \rightarrow \{ \pm 1 \}$ realizable with unit margin by
  $\calF^2_{p,p}$ when $\mu^2_{p,p}=\omega(\nin^{\frac{1}{p}})$. (e.g.~when
  $\psi_{1,\infty}=\nin \log \nin$). Moreover, subject to intractability of 
  $\tilde{Q}(\nin^{1.5})$-unique shortest vector problem, for any $\epsilon>0$,
  it is not possible to efficiently PAC learn (even improperly) functions
  $\{\pm 1\}^{\nin} \rightarrow \{ \pm 1 \}$ realizable with unit margin by
  $\calF^2_{1,\infty}$ when $\psi_{1,\infty}=\nin^{\frac{1}{p}+\epsilon}$.
\end{corollary}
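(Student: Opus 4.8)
The plan is to derive this exactly as Corollary \ref{cor:hard1} is derived from the general hardness result (Theorem \ref{thm:hardness} in Section \ref{sec:hardness}): translate a lower bound on the number of halfspaces whose intersection must be learned into an upper bound on $\mu^2_{p,p}$ for the realizing network. First I would recall that learning an intersection of $k$ halfspaces over $\{\pm1\}^{\nin}$ is computationally hard --- for $k$ superconstant (e.g. $k=\nin^{\rho}$ for a constant $\rho>0$) under the strong random CSP assumptions of \cite{Daniely14}, and, when $k$ is polynomially bounded, under intractability of $\tilde{Q}(\nin^{1.5})$-unique shortest vector; this is precisely the halfspace-intersection hardness that already powers the two halves of Corollary \ref{cor:hard1}, and the conclusions hold even for improper learning, which is why our conclusion does too.

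Second, I would exhibit an explicit depth-$2$ RELU network realizing, with unit margin, the Boolean function $x\mapsto\bigwedge_{i=1}^k\sign\langle a_i,x\rangle$ with $a_i\in\{\pm1\}^{\nin}$. Put one first-layer unit $h_i=\relu(-\langle a_i,x\rangle)$ per halfspace; over $\{\pm1\}^{\nin}$ this equals $0$ when $\langle a_i,x\rangle\ge 0$ and a positive integer otherwise. Let the output be $1-2\sum_{i=1}^k h_i$: when every $\langle a_i,x\rangle>0$ it equals $1\ge 1$, and otherwise it is $\le -1$, so the target is realized with unit margin. Here the first-layer weight matrix is $\pm1$-valued with $k\nin$ entries, and the output layer carries $k$ weights of magnitude $2$ plus a single constant term, so $\mu_{p,p}(\vecw)^p = k\nin + k\,2^p + O(1)$, giving $\mu^2_{p,p}(f)\le (k\nin+O(k))^{1/p}=O\big((k\,\nin)^{1/p}\big)$ --- in particular $O(\nin^{1/p})$ when $k$ is constant, and $\nin^{1/p+\epsilon}$ when $k=\nin^{\,p\epsilon}$. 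Rebalancing via Claim \ref{clm:mugamma} only improves the constant, and homogeneity of the RELU lets us fold the additive constant into the layers.

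Third, I would conclude by contradiction: an algorithm PAC-learning $\calF^2_{p,p}$ in time polynomial in $\mu^2_{p,p}$ would, since the hard $k$-term intersections lie in $\calF^2_{\mu_{p,p}\le\mu}$ for $\mu$ of order $(k\nin)^{1/p}$, learn $k$-term intersections of halfspaces over $\{\pm1\}^{\nin}$ in $\mathrm{poly}(\nin)$ time --- with $k$ superconstant when $\mu^2_{p,p}=\omega(\nin^{1/p})$ (so $k=O(\mu^p/\nin)=\omega(1)$ is available) and with $k=\nin^{\,p\epsilon}$ when $\mu^2_{p,p}=\nin^{1/p+\epsilon}$ --- contradicting the respective assumptions.

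The step I expect to be the main obstacle is the norm bookkeeping that makes the exponents match the stated $\nin^{1/p}$ and $\nin^{1/p+\epsilon}$ thresholds cleanly: handling the bias/constant terms inside the layered-graph formalism so they do not dominate $\mu^2_{p,p}$, and checking that the unit-margin property survives the rebalancing of Claim \ref{clm:mugamma}. Most likely one reduces to homogeneous halfspaces (or absorbs the bias into one extra fixed input coordinate) so that the first layer stays $\pm1$-valued and the $k\nin$ term genuinely controls $\mu^2_{p,p}$; the remaining verifications are then routine.
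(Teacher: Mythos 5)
Your proposal follows essentially the same route as the paper: Corollary~\ref{cor:hard2} is obtained, exactly like Corollary~\ref{cor:hard1}, by combining the halfspace-intersection hardness results of \citet{Daniely14} and the uSVP-based result with an explicit depth-2, bounded-norm realization of the intersection of $k$ homogeneous halfspaces with unit margin (this is precisely Theorem~\ref{thm:hardness}), and your choices $k=\omega(1)$ and $k=\nin^{p\epsilon}$ for the two regimes match the paper's. Two details deserve correction. First, your gadget $h_i=\relu(-\inner{a_i}{x})$ outputs $0$ whenever $\inner{a_i}{x}\geq 0$, so a point with $\inner{a_i}{x}=0$ (possible when $\nin$ is even) is treated as satisfying the strict halfspace $\inner{a_i}{x}>0$ and the network misclassifies it with the wrong sign; the paper's construction uses \emph{two} units per halfspace, $[\inner{w_i}{x}]_+-[\inner{w_i}{x}-1]_+$, exactly to produce an exact $\{0,1\}$ indicator of $\inner{w_i}{x}\geq 1$ over integer-valued inputs, and your construction needs an analogous fix (e.g.\ a per-unit bias or the two-unit trick) beyond just absorbing the output constant. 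Second, the claim that rebalancing via Claim~\ref{clm:mugamma} ``only improves the constant'' is false: for depth $2$ one has $\mu^{2}_{p,p}(f)=2^{1/p}\sqrt{\psi^{2}_{p,p}(f)}$, so the balanced value of the additive norm scales as $k^{1/p}\nin^{1/(2p)}$ rather than your unbalanced $(k\nin)^{1/p}$ --- the exponent of $\nin$ changes, not just the constant. This does not break your argument, since the reduction only needs an \emph{upper} bound on the complexity of the hard instances and your unbalanced realization supplies one consistent with the stated thresholds (which in the paper are really tracked through $\psi^{2}_{p,q}\leq 4\nin^{1/p}k^2$ as in Theorem~\ref{thm:hardness}), but the bookkeeping statement itself is incorrect.
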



\section{Depth Independent Regularization}
\label{sec:nod}
Up until now we discussed relying on magnitude-based regularization
instead of directly controlling network size, thus allowing unbounded
and even infinite width. But we still relied on a finite bound on the
depth in all our derivations.  Can the explicit dependence on the
depth be avoided, and replaced with only a measure of scale of the
weights?

We already know we cannot rely only on a bound on the group-norm
$\mu_{p,q}$ when the depth is unbounded, as we know from Theorem
\ref{thm:shattering} that in terms of $\mu_{p,q}$ the sample
complexity necessarily increases exponentially with the depth: if we
allow arbitrarily deep graphs we can shrink $\mu_{p,q}$ toward zero
without changing the scale of the computed function.  However,
controlling the $\psi$-measure, or equivalently the path-regularizer
$\pathr$, in arbitrarily-deep graphs is sensible, and we can define:
\begin{equation}
  \label{eq:infgamma}
  \psi_{p,q} = \inf_{d\geq 1} \psi^d_{p,q}(f) =
  \lim_{d\rightarrow\infty} \psi^d_{p,q}(f) \quad\quad\text{or:}\quad
  \pathr_p = \inf_G \pathr^G_p(f)
\end{equation}
where the minimization is over {\em any} DAG.  From Theorem \ref{thm:path-dag}
we can conclude that $\pathr_p(f)=\psi_{p,\infty}(f)$.  In any case,
$\psi_{p,q}(f)$ is a sensible complexity measure, that does not
collapse despite the unbounded depth.  Can we obtain generalization
guarantees for the class $\calF_{\psi_{p,q}\leq\psi}$ ?

Unfortunately, even when $1/p+1/q \geq 1$ and we can obtain
width-independent bounds, the bound in Corollary \ref{cor:noH} still
has a dependence on $4^d$, even if $\psi_{p,q}$ is bounded.  Can
such a dependence be avoided?

For {\em anti-symmetric} Lipschitz-continuous activation functions
(i.e.~such that $\sigma(-z)=-\sigma(z)$), such as the ramp, and for
per-unit $\ell_p$-regularization $\mu^d_{1,\infty}$ we can avoid the factor of $4^d$
\begin{theorem}\label{thm:antisym}
For any anti-symmetric 1-Lipschitz function $\sigma$ and any set $\calS=\{x_1,\dots,x_m\}\subseteq\R^{\nin}$:
$$
\calR_m(\calF_{\mu_{1,\infty}\leq \mu}^{d}) \leq 
\sqrt{\frac{4\mu^{2d} \log(2\nin) \sup \norm{x_i}_{\infty}^2}{m}}
$$
\end{theorem}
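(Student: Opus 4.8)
The plan is to mirror the inductive layer-peeling argument behind Theorem~\ref{thm:l-norm} in the case $p=1,\ q=\infty$, but to use anti-symmetry of $\sigma$ to avoid the factor of $2$ that is otherwise lost at each layer (this is why Corollary~\ref{cor:noH} carries $4^{d-1}=(2^2)^{d-1}$). First I would set up the recursive description. By Claim~\ref{clm:mugamma} we have $\calF^d_{\mu_{1,\infty}\le\mu}=\calF^d_{\psi_{1,\infty}\le\mu^d}$; introduce, for $0\le i\le d-1$, the class $\calN^i_\mu$ of functions $\R^{\nin}\to\R$ computed by a single node at layer $i$ of a layered network all of whose nodes have incoming $\ell_1$-norm at most $\mu$, so $\calN^0_\mu=\{x\mapsto x[k]:k\in[\nin]\}$, $\calN^i_\mu=\{\,x\mapsto\sigma(\textstyle\sum_j w_j g_j(x)): g_j\in\calN^{i-1}_\mu,\ \sum_j|w_j|\le\mu\,\}$ for $i\ge1$, and $\calF^d_{\mu_{1,\infty}\le\mu}=\{\,x\mapsto\sum_j w_j g_j(x): g_j\in\calN^{d-1}_\mu,\ \sum_j|w_j|\le\mu\,\}$ (no nonlinearity at the output).

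The key observation is that anti-symmetry gives $\sigma(0)=0$ and that every $\calN^i_\mu$ with $i\ge1$, as well as $\calF^d_{\mu_{1,\infty}\le\mu}$, is closed under negation: negating the incoming weights to the top node flips the sign of the odd function $\sigma$ applied to the pre-activation, hence flips the node's output. For a class $\calF$ closed under negation, the Rademacher complexity in our definition equals the ``signed'' version $\widehat{\calR}_m(\calF)=\E_{\xi}\,\frac1m\sup_{f\in\calF}\sum_i\xi_i f(x_i)$, since then $\sup_{f}\lvert\sum_i\xi_i f(x_i)\rvert=\sup_{f}\sum_i\xi_i f(x_i)$ for every sign vector $\xi$. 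This coincidence is precisely where the general bound pays a factor $2$ per layer, and where we save it.

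Then comes the peeling. For $i\ge1$, the inner pre-activations $\{x\mapsto\sum_j w_j g_j:g_j\in\calN^{i-1}_\mu,\ \sum_j|w_j|\le\mu\}$ form $\mu$ times the (closed) absolute convex hull of $\calN^{i-1}_\mu$; since Rademacher complexity is invariant under taking the absolute convex hull and scales linearly, and (for $i\ge2$) $\calN^{i-1}_\mu$ is symmetric so that $\widehat{\calR}_m$ and $\calR_m$ agree on it, this inner class has $\widehat{\calR}_m$ equal to $\mu\,\widehat{\calR}_m(\calN^{i-1}_\mu)$ when $i\ge2$ and to $\mu\,\calR_m(\calN^0_\mu)$ when $i=1$ (the $\ell_1$-ball of coordinate functionals handles the non-symmetry of $\calN^0_\mu$). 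Applying the Ledoux--Talagrand contraction principle to the coordinatewise $1$-Lipschitz map $\sigma$ with $\sigma(0)=0$ gives $\widehat{\calR}_m(\calN^i_\mu)\le\mu\,\widehat{\calR}_m(\calN^{i-1}_\mu)$ for $i\ge2$ and $\widehat{\calR}_m(\calN^1_\mu)\le\mu\,\calR_m(\calN^0_\mu)$; likewise the linear output layer gives $\widehat{\calR}_m(\calF^d_{\mu_{1,\infty}\le\mu})=\mu\,\widehat{\calR}_m(\calN^{d-1}_\mu)$. Chaining these, together with $\calR_m(\calF^d_{\mu_{1,\infty}\le\mu})=\widehat{\calR}_m(\calF^d_{\mu_{1,\infty}\le\mu})$, yields $\calR_m(\calF^d_{\mu_{1,\infty}\le\mu})\le\mu^d\,\calR_m(\calN^0_\mu)$.

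Finally, the base case: $\calN^0_\mu$ evaluated on the sample is a set of $2\nin$ vectors (the $\pm$ coordinate functionals), each of Euclidean norm at most $\sqrt m\,\sup_i\norm{x_i}_\infty$, so Massart's finite-class lemma gives $\calR_m(\calN^0_\mu)\le\sqrt{2\log(2\nin)\,\sup_i\norm{x_i}_\infty^2/m}$, and combining we get $\calR_m(\calF^d_{\mu_{1,\infty}\le\mu})\le\mu^d\sqrt{2\log(2\nin)\,\sup_i\norm{x_i}_\infty^2/m}\le\sqrt{4\mu^{2d}\log(2\nin)\sup_i\norm{x_i}_\infty^2/m}$. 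The main obstacle is conceptual rather than computational: realizing that anti-symmetry propagates so that every layer's hypothesis class is symmetric, which lets us use the no-absolute-value Rademacher complexity throughout and invoke contraction layer-by-layer without the usual factor-$2$ loss. The remaining checks -- that the per-unit $\ell_1$ bound is preserved under balancing and under absolute convex hulls, that the infinite sums over hidden units converge on the finite sample, and that the $i=1$ step survives the non-symmetry of $\calN^0_\mu$ -- are routine and parallel the proof of Theorem~\ref{thm:l-norm}.
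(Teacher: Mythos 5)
Your proposal is correct and follows essentially the same route as the paper's proof: an inductive layer-peeling argument that uses anti-symmetry (closure of each layer's class under negation) to work with the signed Rademacher complexity throughout, applies the contraction lemma without absolute values so no factor of $2$ is lost per layer, and finishes with Massart's lemma on the $2\nin$ coordinate functionals. Your absolute-convex-hull phrasing of the per-unit $\ell_1$ constraint is just the dual-norm identity $\sup_{\norm{w}_1\le\mu} w^\top z=\mu\norm{z}_\infty$ that the paper uses, so the two arguments coincide step for step.
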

The proof is again based on an inductive argument similar to Theorem \ref{thm:l-norm} and you can find it in Section \ref{sec:antisym}.

However, the ramp is not homogeneous and so the equivalent between
$\mu$, $\psi$ and $\phi$ breaks down.  Can we obtain such a bound
also for the RELU?  At the very least, what we can say is that an
inductive argument such that used in the proofs of Theorems
\ref{thm:l-norm} and \ref{thm:antisym} cannot be used to avoid an
exponential dependence on the depth.  To see this, consider
$\psi_{1,\infty}\leq 1$ (this choice is arbitrary if we are
considering the Rademacher complexity), for which we have
\begin{equation}\label{eq:Nd1}
\calF^{d+1}_{\psi_{1,\infty}<1} = \left[
  \overline{\conv}(\calF^d_{\psi_{1,\infty}<1}) \right]_+,
\end{equation}
where $\overline\conv(\cdot)$ is the symmetric convex hull, and
$[\cdot]_+ = \max(z,0)$ is applied to each function in the class.  In
order to apply the inductive argument without increasing the
complexity exponentially with the depth, we would need the operation
$[ \overline{\conv}(\cal\netwidth) ]_+$ to preserve the Rademacher complexity,
at least for non-negative convex cones $\cal\netwidth$.  However we show a
simple example of a non-negative convex cone $\cal\netwidth$ for which
$\calR_m\left( [ \overline{\conv}(\cal\netwidth) ]_+ \right) > \calR_m\left(
  \cal\netwidth \right)$.

We will specify $\cal\netwidth$ as a set of vectors in $\R^m$, corresponding
to the evaluation of $h(x_i)$ of different functions in the class on
the $m$ points $x_i$ in the sample.  In our construction, we will have
only $m=3$ points.  Consider $\cal\netwidth = \conv(\{ (1,0,1),(0,1,1)
\})$, in which case $\cal\netwidth' \defeq [ \overline{\conv}(\cal\netwidth) ]_+ =
\conv(\{ (1,0,1),(0,1,1),(0.5,0,0) \})$.  It is not hard to verify
that $\calR_m(\cal\netwidth')=\frac{13}{16}>\frac{12}{16}=\calR_m(\cal\netwidth)$.

\section{Proofs}\label{sec:norm-based-proofs}
\subsection{Rademacher Complexities}\label{sec:rademacher}
The sample based Rademacher complexity of a class $\calF$ of function mapping from $\calX$ to $\R$ with respect to a set $\calS=\{x_1,\dots,x_m\}$ is defined as:
$$
\calR_m(\calF) = \E{\xi \in \{\pm 1\}^m}{\frac{1}{m}
\sup_{f_\vecw\in \calF}  \left\lvert \sum_{i=1}^m \xi_i f(x_i) \right\rvert}
$$

In this section, we prove an upper bound for the Rademacher complexity
of the class $\calF_{\psi_{p,q}\leq
\psi}^{d,\netwidth_{\text{RELU}}}$, i.e., the class of functions that
can be represented as depth $d$, width $\netwidth$ network with rectified linear
activations, and
the layer-wise group norm complexity $\psi_{p,q}$ bounded by
$\psi$. As mentioned in the main text, our proof is an induction with
respect to the depth $d$.
We start with $d=1$ layer neural networks, which is essentially  the class of linear separators.
\subsubsection{$\ell_p$-regularized Linear Predictors}\label{sec:linear}

For completeness, we prove the upper bounds on the Rademacher complexity of class of linear separators with bounded $\ell_p$ norm. The upper bounds presented here are particularly similar to generalization bounds in \cite{kakade09} and \cite{balcan14}. We first mention two already established lemmas that we use in the proofs.
\begin{theorem}(Khintchine-Kahane Inequality)
For any $0<p<\infty$ and $\calS=\{z_1,\dots,z_m\}$, if the random 
variable $\xi$ is uniform over $\{\pm 1\}^m$, then
$$
\left(\E{\xi}{\left\lvert \sum_{i=1}^m \xi_i z_i
\right\rvert ^p}\right)^\frac{1}{p} \leq C_p 
\left(\sum_{i=1}^m |z_i|^2\right)^\frac{1}{2}
$$
where $C_p$ is a constant depending only on $p$.
\end{theorem}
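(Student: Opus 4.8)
The plan is to establish the inequality as stated (only the upper bound is asked for) by reducing everything to the variance of the Rademacher sum. Write $S=\sum_{i=1}^m \xi_i z_i$ and $\sigma^2=\sum_{i=1}^m z_i^2$. Since the $\xi_i$ are independent, mean-zero, and $\xi_i^2=1$, we have $\mathbb{E}[\xi_i\xi_j]=\delta_{ij}$ and hence $\mathbb{E}[S^2]=\sigma^2$. The statement then amounts to controlling $(\mathbb{E}|S|^p)^{1/p}$ by a $p$-dependent multiple of $\sigma$, and I would split into the ranges $0<p\le 2$ and $p>2$. For $0<p\le 2$ there is nothing to prove: $\xi$ is uniform on a finite set, so we work on a probability space, and monotonicity of $L^p$-norms (Jensen applied to the convex map $t\mapsto t^{2/p}$) gives $(\mathbb{E}|S|^p)^{1/p}\le (\mathbb{E}|S|^2)^{1/2}=\sigma$, so $C_p=1$ suffices there.

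For $p>2$ I would run the subgaussian moment-generating-function argument. From the elementary identity $\mathbb{E}[e^{\lambda\xi_i}]=\cosh\lambda\le e^{\lambda^2/2}$, valid for all $\lambda\in\R$, together with independence one gets $\mathbb{E}[e^{\lambda S}]\le e^{\lambda^2\sigma^2/2}$, i.e. $S$ is $\sigma$-subgaussian. A Chernoff bound then yields $\mathbb{P}(|S|\ge t)\le 2e^{-t^2/(2\sigma^2)}$, and integrating $\mathbb{E}|S|^p=\int_0^\infty p\,t^{p-1}\,\mathbb{P}(|S|\ge t)\,dt$ against this tail (substitute $u=t^2/(2\sigma^2)$) gives $\mathbb{E}|S|^p\le 2\,\Gamma(p/2+1)\,(2\sigma^2)^{p/2}$. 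Taking $p$-th roots and invoking Stirling's formula produces $(\mathbb{E}|S|^p)^{1/p}\le C\sqrt{p}\,\sigma$ for an absolute constant $C$, so $C_p:=C\sqrt{p}$ (and $C_p:=1$ for $p\le 2$) establishes the claim.

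As an alternative MGF-free route for $p>2$: let $2k$ be the least even integer with $2k\ge p$; by $L^p$-monotonicity it is enough to bound $\mathbb{E}[S^{2k}]$. Expanding by the multinomial theorem kills every term in which some $\xi_i$ appears to an odd power, leaving $\mathbb{E}[S^{2k}]=\sum_{|\beta|=k}\binom{2k}{2\beta_1,\dots,2\beta_m}\prod_i z_i^{2\beta_i}$, while $\sigma^{2k}=(\sum_i z_i^2)^k=\sum_{|\beta|=k}\binom{k}{\beta}\prod_i z_i^{2\beta_i}$. Using $(2n)!=2^n n!\,(2n-1)!!$ to compare coefficients term by term shows $\binom{2k}{2\beta}/\binom{k}{\beta}=(2k-1)!!\prod_i\big((2\beta_i-1)!!\big)^{-1}\le (2k-1)!!$, hence $\mathbb{E}[S^{2k}]\le (2k-1)!!\,\sigma^{2k}$ and $(\mathbb{E}|S|^p)^{1/p}\le \big((2k-1)!!\big)^{1/(2k)}\sigma=O(\sqrt{p})\,\sigma$.

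The $p\le 2$ half is immediate, so the only real work is the $p>2$ case, and even there the obstacle is bookkeeping rather than conceptual: one must either verify Hoeffding's lemma and carefully pull moment bounds out of the subgaussian tail, or carry out the comparison of multinomial coefficients cleanly enough to isolate a constant depending only on $p$. Neither route gives the sharp constant — the optimal $C_p$ is known from work of Haagerup and Szarek to be governed by Gaussian moments — but the statement asks only for some finite $C_p$, so either approach is enough.
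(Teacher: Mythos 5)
The paper does not prove this statement at all: it is quoted as an ``already established lemma'' with a pointer to Haagerup for the sharp constant, so there is no in-paper argument to compare against. Your proof is a correct, self-contained derivation of the standard Khinchine upper bound. The $0<p\le 2$ case via $\mathbb{E}[S^2]=\sum_i z_i^2$ and monotonicity of $L^p$ norms is right, and both of your routes for $p>2$ work: the Hoeffding/Chernoff route ($\mathbb{E}[e^{\lambda\xi_i z_i}]=\cosh(\lambda z_i)\le e^{\lambda^2 z_i^2/2}$, tail bound, integrate the tail to get $\mathbb{E}|S|^p\le 2\Gamma(p/2+1)(2\sigma^2)^{p/2}$) and the even-moment multinomial comparison ($\mathbb{E}[S^{2k}]\le(2k-1)!!\,\sigma^{2k}$, which is exactly the statement that Rademacher sums are dominated momentwise by a Gaussian of the same variance). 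Both give $C_p=O(\sqrt p)$, which is all the theorem as stated requires. One caveat worth recording: immediately after the statement the paper asserts and later uses the sharper numerical fact $C_p\le\sqrt p$ for $p\ge 1$ (it enters the $\sqrt{p^*}$ factor in Lemma~\ref{lem:layer1}); your explicit constants do not quite deliver that --- e.g.\ the Chernoff route gives $C_2=2>\sqrt 2$ --- so if you wanted the downstream bound with the paper's constant you would need either the moment-comparison route with a more careful estimate of $\bigl((2k-1)!!\bigr)^{1/(2k)}$ or the Haagerup/Gaussian-moment value, as you already note. As a proof of the stated theorem, however, your argument is complete.
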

The sharp value of the constant $C_p$ was found by 
\citet{haagerup81} but for our analysis, it is enough to note
 that if $p\geq 1$ we have $C_p \leq \sqrt{p}$.
\begin{lemma}(Massart Lemma)
Let $A$ be a finite set of $m$ dimensional vectors. Then
$$
\E{\xi}{
\max_{a\in A}\frac{1}{m}\sum_{i=1}^{m}\xi_ia_i
}\leq \max_{a\in A} \norm{a}_2 \frac{\sqrt{2\log |A|}}{m},
$$
where $|A|$ is the cardinality of $A$.
\end{lemma}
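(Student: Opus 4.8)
The plan is to prove Massart's Lemma by the standard exponential-moment (Chernoff-type) maximal inequality: each signed sum $\sum_{i=1}^m \xi_i a_i$ is sub-Gaussian with proxy variance $\norm{a}_2^2$, so the maximum over the finite set $A$ costs only an additive $\sqrt{2\log\abs{A}}$ factor. Writing $r=\max_{a\in A}\norm{a}_2$, it suffices to establish $\Ep{\xi}{\max_{a\in A}\sum_{i=1}^m \xi_i a_i}\le r\sqrt{2\log\abs{A}}$ and then divide both sides by $m$.

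First I would introduce a free parameter $\lambda>0$ and push it through the convex map $t\mapsto e^{\lambda t}$ using Jensen's inequality, then relax the maximum to a sum:
$$\exp\!\left(\lambda\,\Ep{\xi}{\max_{a\in A}\sum_{i=1}^m \xi_i a_i}\right)\le \Ep{\xi}{\exp\!\left(\lambda\max_{a\in A}\sum_{i=1}^m \xi_i a_i\right)}=\Ep{\xi}{\max_{a\in A}\exp\!\left(\lambda\sum_{i=1}^m \xi_i a_i\right)}\le \sum_{a\in A}\Ep{\xi}{\exp\!\left(\lambda\sum_{i=1}^m \xi_i a_i\right)}.$$
Next, using independence of the Rademacher coordinates and the elementary bound $\cosh x\le e^{x^2/2}$ (which follows from $(2k)!\ge 2^k k!$ term-by-term in the Taylor series), each summand factorizes as $\prod_{i=1}^m \Ep{\xi_i}{e^{\lambda\xi_i a_i}}=\prod_{i=1}^m \cosh(\lambda a_i)\le \exp\!\left(\tfrac{\lambda^2}{2}\norm{a}_2^2\right)\le \exp\!\left(\tfrac{\lambda^2 r^2}{2}\right)$. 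Combining the two displays gives $\exp\!\left(\lambda\,\Ep{\xi}{\max_{a\in A}\sum_i \xi_i a_i}\right)\le \abs{A}\,e^{\lambda^2 r^2/2}$, so taking logarithms and dividing by $\lambda$ yields $\Ep{\xi}{\max_{a\in A}\sum_i \xi_i a_i}\le \frac{\log\abs{A}}{\lambda}+\frac{\lambda r^2}{2}$ for every $\lambda>0$.

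Finally I would optimize the right-hand side over $\lambda$: the minimizer is $\lambda=\sqrt{2\log\abs{A}}/r$, which gives the value $r\sqrt{2\log\abs{A}}$, and dividing by $m$ recovers the stated inequality. There is no real obstacle here; the only mild points to watch are the sub-Gaussian estimate $\cosh x\le e^{x^2/2}$ and the degenerate cases $\abs{A}=1$ or $r=0$ (where the left-hand side is $0$ by symmetry and the optimization over $\lambda$ is vacuous), which are handled trivially. This is exactly the finite-class argument that underlies the covering/union-bound steps used elsewhere in this section.
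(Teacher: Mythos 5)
Your proof is correct and complete: Jensen's inequality applied to $t\mapsto e^{\lambda t}$, the relaxation of the maximum to a sum, the moment bound $\Ep{\xi_i}{e^{\lambda\xi_i a_i}}=\cosh(\lambda a_i)\le e^{\lambda^2 a_i^2/2}$, and the optimization $\lambda=\sqrt{2\log\abs{A}}/r$ together give exactly the stated bound, and your treatment of the degenerate cases $\abs{A}=1$ and $r=0$ is fine. Note that the paper itself does not prove this lemma --- it is stated as one of two ``already established lemmas'' and used as a black box --- so there is no proof in the text to compare against; what you have written is the standard exponential-moment argument one would find in a reference, and it is the right one.
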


We are now ready to show upper bounds on Rademacher complexity of linear separators with bounded $\ell_p$ norm.

\begin{lemma}\label{lem:layer1}(Rademacher complexity of linear separators with bounded $\ell_p$ norm)
For any $d,q\geq 1$,
For any $1\leq p\leq 2$,
$$
\calR_m
(\calF^1_{\psi_{p,q}\leq \psi}) \leq \sqrt{ \frac{ \psi^2\min\{p^*,4\log(2\nin)\} \max_i \norm{x_i}_{p^*}^2}{m}}
$$
and for any $2<p<\infty$
$$
\calR_m
(\calF^1_{\psi_{p,q}\leq \psi})\leq \frac{\sqrt{2}\psi\norm{X}_{2,p^*}}{m} \leq \frac{\sqrt{2}\psi\max_i\norm{x_i}_{p^*}}{m^{\frac{1}{p}}}
$$
where $p^*$ is such that $\frac{1}{p^*} + \frac{1}{p}=1$.
\end{lemma}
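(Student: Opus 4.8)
The plan is to first recognize that $\calF^1_{\psi_{p,q}\leq\psi}$ is nothing but the class of linear predictors $x\mapsto\langle w,x\rangle$ with $\norm{w}_p\leq\psi$: a depth-one network with a single output node applies no nonlinearity, and for a single layer the group norm $\psi_{p,q}(\vecw)=\norm{W^1}_{p,q}$ collapses to $\norm{w}_p$ regardless of $q$ (the outer $\ell_q$ aggregation is over one row), which also explains why the stated bound does not depend on $q$. Then, by the norm duality $\sup_{\norm{w}_p\leq\psi}\langle w,z\rangle=\psi\norm{z}_{p^*}$ together with the symmetry of the $\ell_p$ ball (which lets us drop the absolute value in the definition of $\calR_m$), the empirical Rademacher complexity is \emph{exactly}
$$\calR_m(\calF^1_{\psi_{p,q}\leq\psi})=\frac{\psi}{m}\,\Ep{\xi}{\Bigl\|\sum_{i=1}^m\xi_i x_i\Bigr\|_{p^*}}.$$
Everything then reduces to bounding $\Ep{\xi}{\norm{\sum_i\xi_i x_i}_{p^*}}$, and I would split on the sign of $p^*-2$.

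For $1\leq p\leq 2$ (so $p^*\geq 2$) I would prove two bounds and keep the smaller, which is where the factor $\min\{p^*,4\log(2\nin)\}$ comes from. \textbf{Route 1 (the $\sqrt{p^*}$ bound).} Raise the norm to the $p^*$-th power and expand coordinatewise, $\norm{\sum_i\xi_i x_i}_{p^*}^{p^*}=\sum_j|\sum_i\xi_i x_i[j]|^{p^*}$; take expectations and apply Khintchine--Kahane per coordinate with $C_{p^*}\leq\sqrt{p^*}$; then reassemble using the convexity of $t\mapsto t^{p^*/2}$ (a power-mean step, valid since $p^*\geq2$) together with $\sum_i\norm{x_i}_{p^*}^{p^*}\leq m\max_i\norm{x_i}_{p^*}^{p^*}$, and take a $p^*$-th root and Jensen, giving $\Ep{\xi}{\norm{\sum_i\xi_i x_i}_{p^*}}\leq\sqrt{p^*m}\,\max_i\norm{x_i}_{p^*}$. \textbf{Route 2 (the $\log\nin$ bound).} Bound $\norm{v}_{p^*}\leq\nin^{1/p^*}\norm{v}_\infty$ and apply Massart's lemma to the $2\nin$ signed ``columns'' $\{\pm(x_1[j],\dots,x_m[j])\}_{j}$, using $\norm{(x_1[j],\dots,x_m[j])}_2\leq\sqrt m\max_i\norm{x_i}_\infty\leq\sqrt m\max_i\norm{x_i}_{p^*}$, to get $\Ep{\xi}{\norm{\sum_i\xi_i x_i}_{p^*}}\leq\nin^{1/p^*}\sqrt{2m\log(2\nin)}\,\max_i\norm{x_i}_{p^*}$. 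Choosing the crossover at $p^*=4\log(2\nin)$ is exactly what makes this work: for $p^*\geq4\log(2\nin)$ the factor $\nin^{1/p^*}\leq(2\nin)^{1/p^*}\leq e^{1/4}$ is a harmless constant, so dividing by $m$ and keeping the better of the two routes yields $\calR_m\leq\sqrt{\psi^2\min\{p^*,4\log(2\nin)\}\max_i\norm{x_i}_{p^*}^2/m}$ after tidying constants.

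For $2<p<\infty$ (so $1<p^*<2$) the target is $m^{1/p}$-decay rather than $m^{1/2}$. Here I would again pass to the $p^*$-th power and expand coordinatewise, bound $\Ep{\xi}{|\sum_i\xi_i x_i[j]|^{p^*}}\leq(\sum_i x_i[j]^2)^{p^*/2}$ (Jensen/orthogonality, equivalently Khintchine--Kahane with constant at most $\sqrt2$), then use $\norm{\cdot}_2\leq\norm{\cdot}_{p^*}$ on $\R^m$ (valid since $p^*\leq2$) to get $(\sum_i x_i[j]^2)^{p^*/2}\leq\sum_i|x_i[j]|^{p^*}$, sum over $j$, and take the root; this gives $\Ep{\xi}{\norm{\sum_i\xi_i x_i}_{p^*}}\leq\sqrt2\,\norm{X}_{2,p^*}$ with $\norm{X}_{2,p^*}\leq m^{1/p^*}\max_i\norm{x_i}_{p^*}$, so dividing by $m$ produces the $m^{-1/p}$ rate. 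Substituting each of these three bounds into the exact expression for $\calR_m$ completes the proof.

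The argument is almost entirely bookkeeping; the one place that needs genuine care is the $1\leq p\leq2$ case, where one must run both the Khintchine--Kahane ($\sqrt{p^*}$) and the Massart ($\sqrt{\log\nin}$) estimates, pick the crossover point as $4\log(2\nin)$ precisely so that $\nin^{1/p^*}$ collapses into an $O(1)$ constant, and track the numerical constants so they fold into the $4$ and $\sqrt2$ stated in the lemma. I anticipate no conceptual obstacle beyond this constant-chasing.
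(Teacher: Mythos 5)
Your proposal is correct and follows essentially the same route as the paper's proof: reduce $\calR_m(\calF^1_{\psi_{p,q}\leq\psi})$ by duality to $\frac{\psi}{m}\Ep{\xi}{\norm{\sum_i\xi_i x_i}_{p^*}}$, then use Khintchine--Kahane (with the Minkowski/subadditivity step on $\norm{X}_{2,p^*}$) for the $\sqrt{p^*}$ and $m^{-1/p}$ bounds and Massart's lemma for the $\log(2\nin)$ bound. The only difference is bookkeeping: the paper splits once at $p^*=2\log(2\nin)$ and runs a single route per regime, whereas you prove both estimates on $[1,2]$ and take the minimum with crossover at $4\log(2\nin)$, which is if anything the cleaner way to land exactly on the stated $\min\{p^*,4\log(2\nin)\}$.
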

\begin{proof}
First, note that $\calF^1$ is the class of linear functions
and hence for any function $f_{w}\in\calF^1$, we have
that $\psi_{p,q}(\vecw)=\norm{w}_p$. Therefore,
we can write the Rademacher complexity for a set 
$\calS=\{x_1,\dots,x_m\}$ as:
\begin{align*}
\calR_m(\calF^1_{\psi_{p,q}\leq \psi})
&= \E{\xi \in \{\pm 1\}^m}{\frac{1}{m}\sup_{\norm{w}_p\leq \psi}
  \left\lvert \sum_{i=1}^m \xi_i w^\top x_i \right\rvert }\\
&= \E{\xi \in \{\pm 1\}^m}{\frac{1}{m}\sup_{\norm{w}_p\leq \psi}
  \left\lvert w^\top\sum_{i=1}^m \xi_i x_i \right\rvert}\\
&= \psi\E{\xi \in \{\pm 1\}^m}{\frac{1}{m}
\norm{ \sum_{i=1}^m \xi_i x_i}_{p^*}}
\end{align*}
For $1\leq p\leq \min\left\{2,\frac{2\log(2\nin)}{2\log(2\nin)-1}\right\}$ (and therefore $2\log(2\nin) \leq p^*$), we have
\begin{align*}
\calR_m(\calF^1_{\psi_{p,q}\leq \psi}) &=\psi\E{\xi \in \{\pm 1\}^m}
{\frac{1}{m}\norm{ \sum_{i=1}^m \xi_i x_i}_{p^*}}\\
&\leq \nin^{\frac{1}{p^*}}\psi\E{\xi \in \{\pm 1\}^m}
\left[\frac{1}{m}\norm{ \sum_{i=1}^m \xi_i x_i}_{\infty}\right]\\
&\leq \nin^{\frac{1}{2\log(2\nin)}}\psi\E{\xi \in \{\pm 1\}^m}
\frac{1}{m}\norm{ \sum_{i=1}^m \xi_i x_i}_{\infty}\\
&\leq \sqrt{2}\psi\E{\xi \in \{\pm 1\}^m}
{\frac{1}{m}\norm{ \sum_{i=1}^m \xi_i x_i}_{\infty}}\\
\end{align*}
We now use  the Massart Lemma viewing each feature $(x_i[j])_{i=1}^{m}$ for
 $j=1,\ldots,\nin$ as a  member of a finite model class and obtain
 \begin{align*}
\calR_m(\calF^1_{\psi_{p,q}\leq \psi}) &\leq  \sqrt{2}\psi\E{\xi \in \{\pm 1\}^m}
{\frac{1}{m}\norm{ \sum_{i=1}^m \xi_i x_i}_{\infty}}\\
&\leq 2\psi\frac{\sqrt{\log(2\nin)}}{m}\max_{j=1\ldots,\nin}\norm{(x_i[j])_{i=1}^{m}}_2\\
&\leq 2\psi \sqrt{\frac{\log(2\nin)}{m}}\max_{i=1,\ldots,m}\norm{x_i}_{\infty}\\
&\leq 2\psi \sqrt{\frac{\log(2\nin)}{m}}\max_{i=1,\ldots,m}\norm{x_i}_{p^*}
\end{align*}
If $\min\left\{2,\frac{2\log(2\nin)}{2\log(2\nin)-1}\right\} <p<\infty$, by Khintchine-Kahane inequality we have
\begin{align*}
\calR_m(\calF^1_{\psi_{p,q}\leq \psi}) &=\psi\E{\xi \in \{\pm 1\}^m}
\left[\frac{1}{m}\norm{ \sum_{i=1}^m \xi_i x_i}_{p^*}\right]\\
&\leq \psi\frac{1}{m}\left(\sum_{j=1}^{\nin} \E{\xi \in \{\pm 1\}^m}\left[
\left\lvert \sum_{i=1}^m \xi_i x_i[j]\right\rvert ^{p^*}\right]\right)^{1/p^*}\\
&\leq\psi\frac{\sqrt{p^*}}{m}\left(\sum\nolimits_{j=1}^{\nin}\norm{(x_i[j])_{i=1}^{m}}_2^{p^*}\right)^{1/p^*} =\psi\frac{\sqrt{p^*}}{m}\norm{X}_{2,p^*}
\end{align*}
If $p^*\geq 2$, by Minskowski inequality we have that $\norm{X}_{2,p^*} \leq m^{1/2} \max_i \norm{x_i}_{p^*}$. Otherwise, by subadditivity of the function $f(z)=z^{\frac{p^*}{2}}$, we get $\norm{X}_{2,p^*} \leq m^{1/{p^*}} \max_i \norm{x_i}_{p^*}$.

\end{proof}

\subsubsection{Theorem~\ref{thm:l-norm}}
We define the model class $\calF^{d,\netwidth,\netwidth}$ to be the class of functions from $\calX$ to $\R^\netwidth$ computed by a layered network of depth $d$, layer size $\netwidth$ and $\netwidth$ outputs.

For the proof of theorem~\ref{thm:l-norm}, we need the following two
technical lemmas.
The first is the well-known contraction lemma:
\begin{lemma}(Contraction Lemma)
Let function $\phi:\R\rightarrow \R$ be Lipschitz with constant $\calL_\phi$ such that $\phi$ satisfies $\phi(0)=0$. Then for any class $\calF$ of functions mapping from $\calX$ to $\R$ and any set $\calS=\{x_1,\dots,x_m\}$:
$$
\E{\xi \in \{\pm 1\}^m}\left[\frac{1}{m}
\sup_{f_\vecw\in \calF}  \left\lvert \sum_{i=1}^m \xi_i \phi( f(x_i) ) \right\rvert \right] \leq 2 \calL_\phi \E{\xi \in \{\pm 1\}^m}\left[\frac{1}{m}
\sup_{f_\vecw\in \calF}  \left\lvert \sum_{i=1}^m \xi_i f(x_i) ) \right\rvert \right]
$$
\end{lemma}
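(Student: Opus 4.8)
The plan is to reduce everything to the classical \emph{one-sided} contraction (comparison) principle and then recover the absolute-value version stated here — together with its unavoidable factor $2$ — by a short symmetrization trick. First, replacing $\phi$ by $\phi/\calL_\phi$, which is $1$-Lipschitz and still vanishes at $0$, and observing that both sides scale linearly with $\calL_\phi$, I may assume $\calL_\phi=1$. The workhorse will be: for any bounded class $\calG$ of real-valued functions on $\calS$ and any $1$-Lipschitz $\psi$ with $\psi(0)=0$,
\[ \Ep{\xi}{\sup_{g\in\calG}\sum_{i=1}^m \xi_i\,\psi(g(x_i))} \;\le\; \Ep{\xi}{\sup_{g\in\calG}\sum_{i=1}^m \xi_i\,g(x_i)}. \]

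I would prove this one-sided bound by peeling one coordinate at a time. Condition on $\xi_2,\dots,\xi_m$ and write $\Ep{\xi_1}{\cdot}=\tfrac12(\cdot)+\tfrac12(\cdot)$; it then suffices to show, for the set $T\subseteq\R^2$ of pairs $(a,b)$ where $a$ records $g(x_1)$ and $b$ records $\sum_{i\ge 2}\xi_i g(x_i)$, that $\sup_{T}\big(\psi(a)+b\big)+\sup_{T}\big(-\psi(a)+b\big)\le \sup_{T}(a+b)+\sup_{T}(-a+b)$. Taking near-maximizers, this reduces to the elementary two-point fact that for any $(a,b),(\tilde a,\tilde b)\in T$ one of the two orderings of these same two points dominates $(\psi(a)+b)+(-\psi(\tilde a)+\tilde b)$, which follows from $|\psi(a)-\psi(\tilde a)|\le|a-\tilde a|$ by a two-line case split on the sign of $a-\tilde a$. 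Iterating over $i=1,\dots,m$, contracting a fresh coordinate each step while the other activations sit frozen inside the supremum, yields the displayed inequality. This is exactly Ledoux--Talagrand, so an alternative is simply to cite it.

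Second, to pass from $\sup_f c_f$ to $\sup_f |c_f|$, set $z_f(\xi)=\sum_i\xi_i\phi(f(x_i))$ and use $\sup_f|z_f|=\max\big(\sup_f z_f,\ \sup_f(-z_f)\big)\le \big[\sup_f z_f\big]_+ + \big[\sup_f(-z_f)\big]_+$, valid since $\max(A,B)\le[A]_++[B]_+$ always. The key observation is that, because $\phi(0)=0$, adjoining the zero function realizes the positive part: $\big[\sup_{f\in\calF} z_f\big]_+ = \sup_{f\in\calF\cup\{0\}}\sum_i\xi_i\phi(f(x_i))$. Applying the one-sided principle to the class $\calF\cup\{0\}$ with $\psi=\phi$ bounds its expectation by $\Ep{\xi}{\sup_{f\in\calF\cup\{0\}}\sum_i\xi_i f(x_i)}=\Ep{\xi}{\big[\sup_{f\in\calF}\sum_i\xi_i f(x_i)\big]_+}\le \Ep{\xi}{\sup_{f\in\calF}\big|\sum_i\xi_i f(x_i)\big|}$. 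The term with $\sup_f(-z_f)$ is handled identically, now with $\psi=-\phi$ (also $1$-Lipschitz, also vanishing at $0$). Adding the two bounds produces the factor $2$ and proves the lemma for $\calL_\phi=1$; the general case follows from the rescaling in the first step.

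The main obstacle is essentially just bookkeeping: the one-sided contraction itself is routine once the two-point inequality in the peeling step is written out carefully, and the only genuinely delicate point is that the absolute value inside the Rademacher complexity used in this paper cannot be pushed through a contraction for free — the honest accounting of this, via the $\max\le[\cdot]_++[\cdot]_+$ split plus the ``add the zero function'' trick, is precisely what both forces and produces the constant $2$. I would note in passing that if the definition of $\calR_m$ dropped the outer absolute value, the factor $2$ would disappear.
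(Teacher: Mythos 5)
The paper states this lemma without proof (it is invoked as ``the well-known contraction lemma''), so there is no in-paper argument to compare against; what you have written is a correct, self-contained proof, and it is the standard one. Your reduction to the one-sided comparison principle via coordinate-by-coordinate peeling is sound: the two-point inequality $(\psi(a)+b)+(-\psi(\tilde a)+\tilde b)\le b+\tilde b+\abs{a-\tilde a}\le \sup_T(a+b)+\sup_T(-a+b)$ is exactly what makes the induction go through (one small imprecision: at the first peeling step the second coordinate $b$ should record $\sum_{i\ge 2}\xi_i\,\psi(g(x_i))$, not $\sum_{i\ge 2}\xi_i\,g(x_i)$, since the later coordinates are contracted only at later steps — but as you note the argument is for an arbitrary set $T\subseteq\R^2$, so nothing breaks). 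The genuinely load-bearing part is the second half: the bound $\sup_f\abs{z_f}\le \hinge{\sup_f z_f}+\hinge{\sup_f(-z_f)}$, the identity $\hinge{\sup_{f\in\calF}z_f}=\sup_{f\in\calF\cup\{0\}}\sum_i\xi_i\phi(f(x_i))$ (the only place $\phi(0)=0$ is actually needed), and the application of the one-sided principle to $\calF\cup\{0\}$ with $\psi=\phi$ and $\psi=-\phi$ separately. This is precisely where the factor $2$ comes from, and your closing remark — that the paper's own absolute-value-free variant of the lemma carries constant $\calL_\phi$ rather than $2\calL_\phi$ — is consistent with, and explained by, your accounting.
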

Next, the following lemma reduces the maximization over a matrix
$W\in\R^{\netwidth\times \netwidth}$ that
appears in the computation of Rademacher complexity to $\netwidth$ independent
maximizations over a vector $w\in\R^{\netwidth}$ (the proof is deferred to
subsubsection \ref{sec:proof-lem-l-norm}):
\begin{lemma}\label{lem:l-norm}
For any $p,q\geq 1$, $d\geq 2$, $\xi \in \{\pm 1\}^m$ and
 $f_\vecw\in \calF^{d,\netwidth,\netwidth}$ we have
$$
\sup_{W}\frac{1}{\norm{W}_{p,q}}
\norm{\sum_{i=1}^m \xi_i [W[f(x_i)]_+]_+}_{p^*}=\netwidth^{[\frac{1}{p^*} - \frac{1}{q}]_+}
 \sup_{w} \frac{1}{\norm{w}_{p}} \left\lvert \sum_{i=1}^m \xi_i 
 [w^\top [f(x_i)]_+]_+\right\rvert 
$$
where $p^*$ is such that $\frac{1}{p^*} + \frac{1}{p}=1$.
\end{lemma}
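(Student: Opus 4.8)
The plan is to exploit that the map $a\mapsto[Wa]_+$ treats the rows of $W$ independently, so the matrix maximization decouples into $\netwidth$ scalar maximizations that are coupled only through the norm budget. Fix the data vectors $a_i := [f(x_i)]_+ \in \R^{\netwidth}$, write $w_j := W[j,:]$ for the $j$-th row of $W$, and set $G(w) := \abs{\sum_{i=1}^m \xi_i [w^\top a_i]_+}$. Since the $j$-th coordinate of $[W a_i]_+$ is $[w_j^\top a_i]_+$, the $j$-th coordinate of $\sum_i \xi_i [W a_i]_+$ is $\sum_i \xi_i [w_j^\top a_i]_+$, and therefore
\[
\norm{\sum_{i=1}^m \xi_i [W a_i]_+}_{p^*}^{p^*} = \sum_{j=1}^{\netwidth} G(w_j)^{p^*}, \qquad \norm{W}_{p,q}^q = \sum_{j=1}^{\netwidth} \norm{w_j}_p^q .
\]
Hence the left-hand side of the lemma equals $\sup_{W\ne 0} \bigl(\sum_j G(w_j)^{p^*}\bigr)^{1/p^*}\big/\bigl(\sum_j \norm{w_j}_p^q\bigr)^{1/q}$, while the right-hand side equals $\netwidth^{[1/p^*-1/q]_+}\, G^\star$ with $G^\star := \sup_{w\ne 0} G(w)/\norm{w}_p$; this supremum is attained on the compact sphere $\{\norm{w}_p=1\}$, and we use only that $G$ is positively homogeneous of degree one (it need not be convex, since the $\xi_i$ carry mixed signs).

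For the upper bound I would bound each row via $G(w_j)\le G^\star\norm{w_j}_p$ and put $h_j := \norm{w_j}_p \ge 0$, giving
\[
\frac{\bigl(\sum_j G(w_j)^{p^*}\bigr)^{1/p^*}}{\bigl(\sum_j \norm{w_j}_p^q\bigr)^{1/q}} \;\le\; G^\star\,\frac{\norm{h}_{p^*}}{\norm{h}_q} \;\le\; G^\star\,\netwidth^{[1/p^*-1/q]_+} .
\]
The last step is the elementary comparison of $\ell_{p^*}$ and $\ell_q$ norms on nonnegative vectors of $\R^{\netwidth}$: if $p^*\ge q$ then $\norm{h}_{p^*}\le\norm{h}_q$, while if $p^*<q$ then H\"older's inequality with exponents $q/p^*$ and its conjugate gives $\norm{h}_{p^*}\le\netwidth^{1/p^*-1/q}\norm{h}_q$; both bounds are summarized by the exponent $[1/p^*-1/q]_+$, which therefore also bounds $\sup_{h\ge 0,\,h\ne 0}\norm{h}_{p^*}/\norm{h}_q$.

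For the matching lower bound I would exhibit an optimal $W$. Take $w^\star$ with $\norm{w^\star}_p=1$ and $G(w^\star)=G^\star$. If $p^*<q$, set every row of $W$ equal to $w^\star$: then $\norm{W}_{p,q}=\netwidth^{1/q}$, every coordinate of $\sum_i\xi_i[W a_i]_+$ equals $\sum_i\xi_i[w^{\star\top}a_i]_+$ (of absolute value $G^\star$), so its $\ell_{p^*}$ norm is $\netwidth^{1/p^*}G^\star$ and the ratio is $\netwidth^{1/p^*-1/q}G^\star$. If $p^*\ge q$, instead put $w^\star$ in the first row and zeros elsewhere: then $\norm{W}_{p,q}=1$ and the output vector has a single nonzero coordinate of magnitude $G^\star$, so the ratio is $G^\star=\netwidth^{0}G^\star$. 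In either regime the ratio equals $\netwidth^{[1/p^*-1/q]_+}G^\star$, so taking the supremum over $W$ yields the reverse inequality and hence equality. I expect the only mildly delicate point to be the bookkeeping that makes the two regimes (and the matching norm comparison) both land exactly on the exponent $[1/p^*-1/q]_+$; everything else is direct computation.
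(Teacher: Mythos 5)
Your proof is correct and follows essentially the same route as the paper's: decompose the matrix problem row-wise into copies of the scalar supremum, bound each row by $G^\star\norm{w_j}_p$, and compare $\ell_{p^*}$ with $\ell_q$ on the vector of row norms to obtain the factor $\netwidth^{[1/p^*-1/q]_+}$. Your lower-bound construction is in fact spelled out more carefully than the paper's one-line claim, since you correctly note that the all-rows-equal matrix only attains the bound when $p^*<q$ and that a single nonzero row is needed when $p^*\geq q$.
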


\begin{reptheorem}{thm:l-norm}
For any $d,p,q\geq 1$ and any set $\calS=\{x_1,\dots,x_m\}\subseteq\R^{\nin}$:
$$
\calR_m(\calF_{\psi_{p,q}\leq \psi}^{d,\netwidth}) \leq 
\sqrt{\frac{\psi^2 \left( 2 \netwidth^{[\frac{1}{p^*} -
        \frac{1}{q}]_+}\right)^{2(d-1)} \min\{p^*,2\log(2\nin)\} \sup \norm{x_i}_{p^*}^2}{m}}
$$
and so:
$$
\calR_m(\calF_{\mu_{p,q}\leq \mu}^{d,\netwidth}) \leq 
\sqrt{\frac{\mu^{2d} \left( 2 \netwidth^{[\frac{1}{p^*} -
        \frac{1}{q}]_+} /\sqrt[q]{d}\right)^{2(d-1)} \min\{p^*,2\log(2\nin)\} \sup \norm{x_i}_{p^*}^2}{m}}
$$
where $p^*$ is such that $\frac{1}{p^*} + \frac{1}{p}=1$.
\end{reptheorem}
\begin{proof}
By the definition of Rademacher complexity if $\xi$ is uniform over $\{\pm 1\}^m$, we have:
\begin{align}\notag
\calR_m(\calF^{d,\netwidth}_{\psi_{p,q}\leq \psi}) &= \E{\xi}\left[\frac{1}{m} \sup_{f_\vecw\in \calF^{d,\netwidth}_{\psi_{p,q}\leq \psi}} \left\lvert \sum_{i=1}^m 
\xi_i f(x_i) \right\rvert \right]\\ \notag
&= \E{\xi}\left[\frac{1}{m} 
\sup_{f_\vecw\in\calF^{d,\netwidth}}  \frac{\psi}{\psi_{p,q}(\vecw)}
\left\lvert \sum_{i=1}^m \xi_i f(x_i) \right\rvert \right]\\ \notag
&= \E{\xi}\left[\frac{1}{m}  
\sup_{g\in \calF^{d-1,\netwidth,\netwidth}}  \sup_{w} 
\frac{\psi}{\psi_{p,q}(g)\norm{w}_p} \left\lvert \sum_{i=1}^m \xi_i 
w^\top [g(x_i)]_+ \right\rvert \right]\\ \notag
&=\E{\xi }\left[\frac{1}{m}  
\sup_{g\in \calF^{d-1,\netwidth,\netwidth}}   \frac{\psi}{\psi_{p,q}(g)} 
\norm{\sum_{i=1}^m \xi_i [g(x_i)]_+}_{p^*}\right]\\ \notag
&=\E{\xi }\left[\frac{1}{m}  
\sup_{h\in \calF^{d-2,\netwidth,\netwidth}}  \frac{\psi}{\psi_{p,q}(h)}\sup_{W}\frac{1}{\norm{W}_{p,q}}
\norm{\sum_{i=1}^m \xi_i [W[h(x_i)]_+]_+}_{p^*}\right]\\ \label{eq:lemlayer}
&= \netwidth^{[\frac{1}{p^*} - \frac{1}{q}]_+}\E{\xi}
\left[\frac{1}{m}  \sup_{h\in \calF^{d-2,\netwidth,\netwidth}} \frac{\psi}{\psi_{p,q}(h)} \sup_{w} 
\frac{1}{\norm{w}_{p}} \left\lvert \sum_{i=1}^m \xi_i [w^\top [h(x_i)]_+]_+
\right\rvert \right]\\\notag
&= \netwidth^{[\frac{1}{p^*} - \frac{1}{q}]_+}\E{\xi }
\left[ \frac{1}{m} \sup_{g\in \calF^{d-1,\netwidth}_{\psi_{p,q}\leq \psi}} \left\lvert  \sum_{i=1}^m \xi_i [g(x_i)]_+\right\rvert  \right]\\ \label{eq:contraction}
&\leq 2 \netwidth^{[\frac{1}{p^*} - \frac{1}{q}]_+}\E{\xi}
\left[ \frac{1}{m} \sup_{g\in \calF^{d-1,\netwidth}_{\psi_{p,q}\leq \psi}} \left\lvert  \sum_{i=1}^m \xi_i g(x_i)\right\rvert  \right]\\ \notag
&=2 \netwidth^{[\frac{1}{p^*} - \frac{1}{q}]_+} \calR_m(\calF^{d-1,\netwidth}_{\psi_{p,q}\leq \psi}) \\ \notag
\end{align}
where the equality~\eqref{eq:lemlayer} is obtained by lemma~\ref{lem:l-norm} and inequality~\eqref{eq:contraction} is by Contraction Lemma.
This will give us the bound on Rademacher complexity of $\calF^{d,\netwidth}_{\psi_{p,q}\leq \psi}$ based on the Rademacher complexity of  $\calF^{d-1,\netwidth}_{\psi_{p,q}\leq \psi}$. Applying the same argument on  all layers and using lemma~\ref{lem:layer1} to bound the  complexity of the first layer completes the proof.
\end{proof}

\subsubsection{Proof of Lemma~\ref{lem:l-norm}}
\label{sec:proof-lem-l-norm}
\begin{proof}
It is immediate that the right hand side of the equality in the
 statement is always less than or equal to the left hand side because
 given any vector $w$ in the right hand side, by setting each row of
 matrix $\vecw$ in the left hand side we get the equality. Therefore, it is
 enough to prove that the left hand side is less than or equal to the
 right hand side. For the convenience of notations, let $g(\vecw) \defeq |\sum_{i=1}^m \xi_i 
w^\top [f(x_i)]_+|$. Define $\widetilde{w}$ to be:
$$
\widetilde{w} \defeq \arg\max_{w} \frac{g(\vecw)}{\norm{w}_{p}}
$$
If $q\leq p^*$, then the right hand side of equality in the lemma statement will reduce to $g(\widetilde{w})/\norm{\widetilde{w}}_p$ and therefore we need to show that for any matrix $V$, 
$$
\frac{g(\widetilde{w})}{\norm{\widetilde{w}}_p} \geq 
\frac{\norm{g(V)}_{p^*}}{\norm{V}_{p,q}}.
$$
Since $q\leq p^*$, we have $\norm{V}_{p,{p^*}} \leq 
 \norm{V}_{p,q}$ and hence it is enough to prove the
  following inequality:
$$
\frac{g(\widetilde{w})}{\norm{\widetilde{w}}_p} \geq 
\frac{\norm{g(V)}_{p^*}}{\norm{V}_{p,{p^*}}}.
$$
On the other hand, if $q>{p^*}$, then we need to prove the following 
inequality holds:
$$
\netwidth^{\frac{1}{{p^*}} - \frac{1}{q}}\frac{g(\widetilde{w})}
{\norm{\widetilde{w}}_p} \geq \frac{\norm{g(V)}_{p^*}}{\norm{V}_{p,q}}
$$
Since $q>{p^*}$, we have that $\norm{V}_{p,{p^*}} \leq  
\netwidth^{\frac{1}{{p^*}} - \frac{1}{q}}\norm{V}_{p,q}$. Therefore,
 it is again enough to show that:
$$
\frac{g(\widetilde{w})}{\norm{\widetilde{w}}_p} \geq 
\frac{\norm{g(V)}_{p^*}}{\norm{V}_{p,{p^*}}}.
$$
We can rewrite the above inequality in the following form:
$$
\sum_{i=1}^\netwidth \left(\frac{g(\widetilde{w})\norm{V_i}_p}
{\norm{\widetilde{w}}_p}\right)^{p^*} \geq \sum_{i=1}^\netwidth g(V_i)^{p^*}
$$
By the definition of $\widetilde{w}$, we know that the
 above inequality holds for each term in the sum and 
 hence the inequality is true.
\end{proof}

\subsubsection{Theorem \ref{thm:antisym}}\label{sec:antisym}

The proof is similar to the proof of theorem \ref{thm:l-norm} but here bounding $\mu_{1,\infty}$ by $\mu$ means the $\ell_1$ norm of input weights to each neuron is bounded by $\mu$. We use a different version of Contraction Lemma in the proof that is without the absolute value:
\begin{lemma}(Contraction Lemma (without the absolute value))
Let function $\phi:\R\rightarrow \R$ be Lipschitz with constant $\calL_\phi$. Then for any class $\calF$ of functions mapping from $\calX$ to $\R$ and any set $\calS=\{x_1,\dots,x_m\}$:
$$
\E{\xi \in \{\pm 1\}^m}\left[\frac{1}{m}
\sup_{f_\vecw\in \calF}\sum_{i=1}^m \xi_i \phi( f(x_i) )\right] \leq \calL_\phi \E{\xi \in \{\pm 1\}^m}\left[\frac{1}{m}
\sup_{f_\vecw\in \calF} \sum_{i=1}^m \xi_i f(x_i) )  \right]
$$
\end{lemma}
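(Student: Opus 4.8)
The plan is to prove this by a standard peeling argument that reduces the full inequality to a single-coordinate contraction, processing one summand at a time. First I would normalize: dividing $\phi$ by $\calL_\phi$, it suffices to treat the case $\calL_\phi=1$, since the constant simply factors back out of the final bound. The $1/m$ prefactor is a constant and passes through untouched, so I can ignore it throughout.

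The heart of the argument is the following single-coordinate claim: for any maps $\psi_2,\dots,\psi_m:\R\to\R$ and any class $\calF$,
\begin{equation*}
\Ep{\xi}{\sup_{f\in\calF}\Big(\xi_1\phi(f(x_1))+\sum_{i=2}^m \xi_i\psi_i(f(x_i))\Big)} \leq \Ep{\xi}{\sup_{f\in\calF}\Big(\xi_1 f(x_1)+\sum_{i=2}^m \xi_i\psi_i(f(x_i))\Big)}.
\end{equation*}
Given this, I would obtain the lemma by applying it $m$ times: at the $k$-th step I replace the $\phi$ on coordinate $k$ by the identity, taking $\psi_i=\phi$ for the not-yet-processed coordinates $i>k$ and $\psi_i=\mathrm{id}$ for the already-processed coordinates $i<k$. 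Chaining these $m$ inequalities, each carrying constant $1$, converts every $\phi(f(x_i))$ into $f(x_i)$.

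To establish the single-coordinate claim I would fix $\xi_2,\dots,\xi_m$ and take the expectation over $\xi_1$ explicitly. Writing $u(f)=\sum_{i=2}^m \xi_i\psi_i(f(x_i))$ for the frozen part and averaging the two signs of $\xi_1$, each side becomes a supremum over two independent copies $f,f'\in\calF$: the left side equals $\tfrac{1}{2}\sup_{f,f'}\big(u(f)+u(f')+\phi(f(x_1))-\phi(f'(x_1))\big)$ and the right side equals $\tfrac{1}{2}\sup_{f,f'}\big(u(f)+u(f')+f(x_1)-f'(x_1)\big)$. It then suffices to show, for every fixed pair $f,f'$, that the left-side integrand is dominated by the right-side supremum. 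By $1$-Lipschitzness, $\phi(f(x_1))-\phi(f'(x_1))\leq \abs{f(x_1)-f'(x_1)}$; I then split on the sign of $f(x_1)-f'(x_1)$. If $f(x_1)\geq f'(x_1)$ the absolute value equals $f(x_1)-f'(x_1)$ and the pair $(f,f')$ itself already realizes the right-side bound; if $f(x_1)<f'(x_1)$ the absolute value equals $f'(x_1)-f(x_1)$, and swapping the roles of $f$ and $f'$ inside the right-side supremum realizes the bound. Taking the supremum on the left and averaging back over $\xi_2,\dots,\xi_m$ completes the claim.

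The main obstacle, and the only place where the absence of the absolute value matters, is exactly this sign-splitting and swapping step: the symmetry of the two-copy supremum is what lets me absorb the $\abs{f(x_1)-f'(x_1)}$ coming from the Lipschitz bound into $\pm(f(x_1)-f'(x_1))$ according to sign, and this is precisely what keeps the constant at $1$ rather than the $2$ appearing in the absolute-value version. I would also note that, unlike that version, this argument requires no normalization such as $\phi(0)=0$, since at no point is $\phi$ compared to its value at the origin.
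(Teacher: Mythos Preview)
Your proof is correct; it is the standard Ledoux--Talagrand peeling argument for the contraction principle. Note, however, that the paper does not actually prove this lemma: it is stated without proof as a known tool and then invoked in the induction for Theorem~\ref{thm:antisym}. So there is no ``paper's own proof'' to compare against here---you have supplied what the paper omits, and your argument matches the classical one (single-coordinate reduction via averaging over $\xi_1$, two-copy symmetrization, Lipschitz bound, and the swap trick to absorb the absolute value). Your closing remark that no condition $\phi(0)=0$ is needed is also correct and worth keeping.
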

\begin{reptheorem}{thm:antisym}
For any anti-symmetric 1-Lipschitz function $\sigma$ and any set $\calS=\{x_1,\dots,x_m\}\subseteq\R^{\nin}$:
$$
\calR_m(\calF_{\mu_{1,\infty}\leq \mu}^{d}) \leq 
\sqrt{\frac{2\mu^{2d} \log(2\nin) \sup \norm{x_i}_{\infty}^2}{m}}
$$
\end{reptheorem}
\begin{proof}
Assuming $\xi$ is uniform over $\{\pm 1\}^m$, we have:
\begin{align}\notag
\calR_m(\calF^{d,\netwidth}_{\mu_{1,\infty}\leq \mu}) &= \E{\xi}\left[\frac{1}{m} \sup_{f_\vecw\in \calF^{d,\netwidth}_{\mu_{1,\infty}\leq \mu}} \left\lvert \sum_{i=1}^m 
\xi_i f(x_i) \right\rvert \right]\\ \notag
&= \E{\xi}\left[\frac{1}{m} \sup_{f_\vecw\in \calF^{d,\netwidth}_{\mu_{1,\infty}\leq \mu}} \sum_{i=1}^m 
\xi_i f(x_i) \right]\\ \notag
&= \E{\xi}\left[\frac{1}{m}  
\sup_{g\in \calF^{d-1,\netwidth,\netwidth}_{\mu_{1,\infty}\leq \mu}}  \sup_{\norm{w}_1\leq \mu} 
w^\top\sum_{i=1}^m \xi_i 
\sigma(g(x_i))\right]\\ \notag
&=\E{\xi }\left[\frac{1}{m}  
\sup_{g\in \calF^{d-1,\netwidth,\netwidth}_{\mu_{1,\infty}\leq \mu}} 
\norm{\sum_{i=1}^m \xi_i \sigma(g(x_i))}_{\infty}\right]\\ \label{eq:anti-sym}
&= \E{\xi }
\left[ \frac{1}{m} \sup_{g\in \calF^{d-1,\netwidth}_{\mu_{1,\infty}\leq \mu}} \left\lvert  \sum_{i=1}^m \xi_i \sigma(g(x_i))\right\rvert  \right]\\\notag
&= \E{\xi }
\left[ \frac{1}{m} \sup_{g\in \calF^{d-1,\netwidth}_{\mu_{1,\infty}\leq \mu}} \sum_{i=1}^m \xi_i \sigma(g(x_i)) \right]\\ \label{eq:anti-contraction}
&\leq \E{\xi }
\left[ \frac{1}{m} \sup_{g\in \calF^{d-1,\netwidth}_{\mu_{1,\infty}\leq \mu}} \sum_{i=1}^m \xi_i g(x_i) \right]\\\notag
&=\E{\xi }
\left[ \frac{1}{m} \sup_{g\in \calF^{d-1,\netwidth}_{\mu_{1,\infty}\leq \mu}} \left\lvert \sum_{i=1}^m \xi_i g(x_i) \right\rvert  \right]\\\notag
&=\calR_m(\calF^{d-1,\netwidth}_{\mu_{1,\infty}\leq \mu}) \\ \notag
\end{align}
where the equality \eqref{eq:anti-sym} is by anti-symmetric property of $\sigma$ and inequality~\eqref{eq:anti-contraction} is by the version of Contraction Lemma without the absolute value. This will give us the bound on Rademacher complexity of $\calF^{d,\netwidth}_{\mu_{1,\infty}\leq \mu}$ based on the Rademacher complexity of  $\calF^{d-1,\netwidth}_{\mu_{1,\infty}\leq \mu}$. Applying the same argument on  all layers and using lemma~\ref{lem:layer1} to bound the  complexity of the first layer completes the proof.
\end{proof}

\subsection{Proof that $\psi^d_{p,q}(\vecw)$ is a semi-norm in $\calF^d$}
\label{sec:proof-cvx}
We repeat the statement here for convenience.
\newtheorem*{thm:cvx}{Theorem \ref{thm:cvx}}
\begin{thm:cvx}
  For any $d,p,q \geq 1$ such that $\frac{1}{q}\leq
  \frac{1}{d-1}\big(1-\frac{1}{p}\big)$, $\psi^d_{p,q}(f)$ is a semi-norm
  in $\calF^d$.
\end{thm:cvx}
\begin{proof}
The proof consists of three parts. First we show
 that the level set $\calF^d_{\psi_{p,q}^d\leq
 \psi}=\{f_\vecw\in \calF^d: \psi_{p,q}^d(f)\leq \psi\}$ is a convex set
 if the condition on $d,p,q$ is satisfied. Next, we
 establish the non-negative homogeneity of $\psi_{p,q}^d(f)$. Finally,
 we show that if a function $\alpha:\calF^d\rightarrow\R$  is non-negative
 homogeneous and every sublevel set $\{f_\vecw\in \calF^d : \alpha(f)\leq
 \psi\}$ is convex, then $\alpha$ satisfies the triangular inequality.

\paragraph{Convexity of the level sets}
First we show that for any two functions $f_1,f_2\in
 \calF^d_{\psi_{p,q}\leq  \psi}$ and  $0\leq \alpha \leq 1$, the
 function $g=\alpha f_1 + (1-\alpha)f_2$ is in the model class
 $\calF^{d}_{\psi_{p,q}\leq \psi}$. We prove this by constructing
 weights $\vecw$ that realizes $g$. Let $U$ and $V$ be the weights of two
 neural networks such that $\psi_{p,q}(U) = \psi_{p,q}^{d}(f_1)\leq
 \psi$ and $\psi_{p,q}(V)=\psi_{p,q}^d(f_2) \leq \psi$. 
 For every layer $i=1,\ldots,d$ let
  \begin{equation*}
    \label{eq:W1}
    \tilde{U}_i = \sqrt[d]{\psi_{p,q}(U)}U_i/{\norm{U_i}_{p,q}},
  \quad\quad \tilde{V}_i = \sqrt[d]{\psi_{p,q}(V)}V_i/{\norm{V_i}_{p,q}}.
  \end{equation*}
  and set $W_1 = \begin{bmatrix} \tilde{U}_1 \\
    \tilde{V}_1 \end{bmatrix}$ for the first layer, $W_i =
  \begin{bmatrix} \tilde{U}_i & 0 \\ 0 & \tilde{V}_i\end{bmatrix}$ for
  the intermediate layers and $W_d = 
\begin{bmatrix}
\alpha\tilde{U}_d &  (1-\alpha)\tilde{V}_d
\end{bmatrix}$ for the output layer.

Then for the defined $\vecw$, we have $f_W=\alpha f_1 + (1-\alpha)f_2$ for
rectified linear and any other non-negative homogeneous activation
function. Moreover, for any $i<d$, the norm of each layer is
\begin{equation}\label{eq:conv-i}
\norm{W_i}_{p,q} = \left(\psi_{p,q}(U)^{\frac{q}{d}} + \psi_{p,q}(V)^{\frac{q}{d}}\right)^{\frac{1}{q}} \leq 2^\frac{1}{q}\psi^{\frac{1}{d}}
\end{equation}
and in layer $d$ we have:
\begin{equation}\label{eq:conv-d}
\norm{W_d}_p = \left(\alpha^p\psi_{p,q}(U)^{\frac{p}{d}} + (1-\alpha)^p\psi_{p,q}(V)^{\frac{p}{d}}\right)^{\frac{1}{p}} \leq 2^{1/p-1}\psi^{1/d}
\end{equation}
Combining inequalities \eqref{eq:conv-i} and \eqref{eq:conv-d}, we get
$
\psi^{d}_{p,q}(f_W) \leq 2^{\frac{d-1}{q} + \frac{1}{p}} \psi \leq \psi,
$
where the last inequality holds because we assume that
$\frac{1}{q}\leq \frac{1}{d-1}\big(1-\frac{1}{p}\big)$. Thus for every
$\psi\geq 0$, $\calF^d_{\psi_{p,q}\leq \psi}$ is a convex set.

\paragraph{Non-negative homogeneity}
  For any function $f_\vecw\in\calF^d$ and any $\alpha \geq 0$, let
  $U$ be the weights realizing $f$ with
  $\psi^d_{p,q}(f)=\psi_{p,q}(U)$. Then $\sqrt[d]{\alpha}U$
  realizes $\alpha f$ establishing $\psi^d_{p,q}(\alpha f) \leq
  \psi_{p,q}(\sqrt[d]{\alpha}U) = \alpha \psi_{p,q}(U)=\alpha
  \psi^d_{p,q}(U)=\alpha \psi_{p,q}^d(f)$. This establishes the
 non-negative homogeneity of $\psi_{p,q}^d$.
\paragraph{Convex sublevel sets and homogeneity imply triangular inequality}
Let $\alpha(f)$ be non-negative homogeneous and assume that every
sublevel set $\{f_\vecw\in\calF^d:\alpha(f)\leq \psi\}$ is convex. Then
for  $f_1,f_2\in\calF^d$, defining $\psi_1\defeq\alpha(f_1)$,
 $\psi_2\defeq \alpha(f_2)$, $\tilde{f}_1\defeq(\psi_1+\psi_2)f_1/\psi_1$, and
 $\tilde{f}_2\defeq(\psi_1+\psi_2)f_2/\psi_2$, we have
\begin{align*}
\alpha(f_1+f_2)=\alpha\left(\frac{\psi_1}{\psi_1+\psi_2}\tilde{f}_1+\frac{\psi_2}{\psi_1+\psi_2}\tilde{f}_2\right)\leq
 \psi_1+\psi_2=\alpha(f_1)+\alpha(f_2).
\end{align*}
Here the inequality is due to the convexity of the
level set  and the fact that
$\alpha(\tilde{f}_1)=\alpha(\tilde{f}_2)=\psi_1+\psi_2$, because of
the homogeneity. Therefore $\alpha$ satisfies the triangular inequality
and thus it is a seminorm.

\end{proof}


\subsection{Path Regularization}\label{sec:path-appendix}
\subsubsection{Theorem \ref{thm:path-layer}}
\label{sec:path-layer}
\begin{lemma}\label{lem:unit-norm}
For any function $f_\vecw\in \calF^{d,\netwidth}_{\psi_{p,\infty} \leq \psi}$ there is a layered network with weights $w$ such that $\psi_{p,\infty}(\vecw) = \psi^{d,\netwidth}_{p,\infty}(f)$ and for any internal unit $v$, $\sum_{(u\rightarrow v)\in E} |w_{u\rightarrow v}|^p = 1$.
\end{lemma}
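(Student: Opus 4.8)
The plan is to start from weights realizing $f$ that are optimal for $\psi_{p,\infty}$ and then rebalance them layer by layer so that every internal unit carries incoming $\ell_p$-norm exactly $1$, checking along the way that rebalancing changes neither $f$ nor $\psi_{p,\infty}$. The case $d=1$ has no internal units, so assume $d\geq 2$; and if $\psi^{d,\netwidth}_{p,\infty}(f)=0$ then $f\equiv 0$ and one can simply take the weights in which every internal unit has incoming vector $(1,0,\dots,0)$ and the output unit has zero incoming weights. So assume $\psi:=\psi^{d,\netwidth}_{p,\infty}(f)>0$ and let $\vecu$ be weights realizing $f$ with $\psi_{p,\infty}(\vecu)=\psi$; the infimum defining $\psi^{d,\netwidth}_{p,\infty}(f)$ is attained since, after balancing the per-layer norms as in Claim~\ref{clm:mugamma}, the candidate weights range over a compact set on which $\psi_{p,\infty}$ is continuous. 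Write $U^k$ for the layer-$k$ weight matrices of $\vecu$, so $\prod_{k=1}^{d}\norm{U^k}_{p,\infty}=\psi$, with $\norm{U^d}_{p,\infty}$ equal to the $\ell_p$-norm of the incoming weights of the single output unit.

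Now process the internal layers $k=1,\dots,d-1$ in order. When layer $k$ is reached, for each unit $v$ in it let $a^{(k)}_v$ be the $\ell_p$-norm of the current incoming weights of $v$ (i.e.\ after layers $1,\dots,k-1$ have been processed). If $a^{(k)}_v>0$, scale all incoming edges of $v$ by $1/a^{(k)}_v$ and all outgoing edges of $v$ by $a^{(k)}_v$: by non-negative homogeneity of the RELU this replaces $h_v$ by $h_v/a^{(k)}_v$ and leaves all downstream values, hence $f$, unchanged, and $v$ now has incoming $\ell_p$-norm $1$. If $a^{(k)}_v=0$, then $h_v\equiv 0$ at this point, so first set all outgoing edges of $v$ to $0$ (this leaves $f$ unchanged) and then reset the incoming edges of $v$ to $(1,0,\dots,0)$ with respect to the non-empty layer $k-1$ (which again does not change $f$, since $v$ now has no downstream influence), after which $v$ has incoming $\ell_p$-norm $1$. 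Call the resulting weights $\vecw$, with layer matrices $W^k$. By construction $f_\vecw=f$ and every internal unit of $\vecw$ has incoming $\ell_p$-norm exactly $1$, so $\norm{W^k}_{p,\infty}=1$ for $k<d$ and therefore $\psi_{p,\infty}(\vecw)=\norm{W^d}_{p,\infty}$; since $f_\vecw=f$, optimality of $\psi$ gives $\psi_{p,\infty}(\vecw)\geq\psi$.

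The remaining point is the reverse inequality $\norm{W^d}_{p,\infty}\leq\psi$, which I would prove via the estimate $a^{(k)}_v\leq\prod_{j=1}^{k}\norm{U^j}_{p,\infty}$ for every internal unit $v$, by induction on $k$. For $k=1$, $a^{(1)}_v$ is the $\ell_p$-norm of a row of $U^1$, hence at most $\norm{U^1}_{p,\infty}$. For $k\geq 2$: when layer $k$ is reached, the weight on the edge from a unit $v'$ of layer $k-1$ to $v$ equals $u_{v'\to v}\,a^{(k-1)}_{v'}$ if $v'$ was rescaled and $0$ if $v'$ was reset, so in either case has magnitude at most $\abs{u_{v'\to v}}\prod_{j=1}^{k-1}\norm{U^j}_{p,\infty}$ by the inductive hypothesis; taking the $\ell_p$-norm over $v'$ and using that the corresponding row of $U^k$ has $\ell_p$-norm at most $\norm{U^k}_{p,\infty}$ gives $a^{(k)}_v\leq\prod_{j=1}^{k}\norm{U^j}_{p,\infty}$. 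Applying this with $k=d-1$: in $\vecw$ the edge from a unit $v$ of layer $d-1$ to the output unit has magnitude $\abs{u_{v\to\mathrm{out}}}\,a^{(d-1)}_v\leq\abs{u_{v\to\mathrm{out}}}\prod_{j=1}^{d-1}\norm{U^j}_{p,\infty}$ if $v$ was rescaled and $0$ if $v$ was reset, so
\[
\norm{W^d}_{p,\infty}\ \leq\ \Big(\prod_{j=1}^{d-1}\norm{U^j}_{p,\infty}\Big)\norm{U^d}_{p,\infty}\ =\ \psi .
\]
Together with $\psi_{p,\infty}(\vecw)\geq\psi$ this yields $\psi_{p,\infty}(\vecw)=\psi=\psi^{d,\netwidth}_{p,\infty}(f)$, and since $\vecw$ realizes $f$ with every internal unit at unit incoming $\ell_p$-norm, the lemma follows.

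The only genuinely delicate point is the treatment of \emph{dead} units (those reached with $a^{(k)}_v=0$): their outgoing edges must be zeroed \emph{before} their incoming edges are reset to a unit vector, so that $f$ stays untouched and such a unit contributes $0$ to every downstream $\ell_p$-norm, in particular to $\norm{W^d}_{p,\infty}$ — this is what makes the induction and the final estimate valid even though the partial products $\prod_{j\leq k}\norm{U^j}_{p,\infty}$ can be smaller than $1$ when $\psi<1$. Everything else reduces to monotonicity of the $\ell_\infty$-over-a-layer and of the $\ell_p$-norm under coordinatewise scaling.
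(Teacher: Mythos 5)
Your proof is correct. It rests on the same basic mechanism as the paper's proof — node-wise rescaling via the non-negative homogeneity of the ReLU — but the bookkeeping is genuinely different and, I think, cleaner. The paper equalizes the incoming $\ell_p$-norms within a layer by an iterative process (repeatedly shrinking the units of maximal incoming norm down to the second-largest value and propagating compensating ratios through all higher layers), arguing only that $\psi_{p,\infty}$ is non-increasing at each step, and defers the normalization to $1$ to a final global rescaling; you instead normalize every internal unit to incoming norm exactly $1$ in a single forward sweep, push all scale into the output layer, and then pin down $\psi_{p,\infty}(\vecw)=\psi$ by a two-sided argument — optimality of $\psi$ for the lower bound and the explicit inductive estimate $a^{(k)}_v\leq\prod_{j\leq k}\norm{U^j}_{p,\infty}$ for the upper bound. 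Two points where your write-up is actually more careful than the paper's: you justify attainment of the infimum defining $\psi^{d,\netwidth}_{p,\infty}(f)$ (via balancing plus compactness), which the paper silently assumes; and you handle internal units with zero incoming $\ell_p$-norm by zeroing their outgoing edges before resetting their incoming vector, whereas the paper waves this away with the claim that the second-largest norm ``cannot be zero for internal neurons based on the definition,'' which is not actually guaranteed. Your dead-unit treatment is exactly what is needed to make the induction and the final bound on $\norm{W^d}_{p,\infty}$ go through.
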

\begin{proof}
Let $w$ be the weights of a network such that $\psi_{p,\infty}(\vecw) = \psi^{d,\netwidth}_{p,\infty}(f)$. We now construct a network with weights $\widetilde{w}$ such that $\psi_{p,\infty}(\vecw) = \psi^{d,\netwidth}_{p,\infty}(f)$ and for any internal unit $v$, $\sum_{(u\rightarrow v)\in E} |\widetilde{w}(u\rightarrow v)|^p = 1$. We do this by an incremental algorithm. Let $w_0=w$. At each step $i$, we do the following. 

Consider the first layer, Set $V_k$ to be the set of neurons in the layer $k$.
Let $x$ be the maximum of $\ell_p$ norms of input weights to each neuron in set $V_1$ and let $U_x\subseteq V_1$ be the set of neurons whose $\ell_p$ norms of their input weight is exactly $x$. Now let $y$ be the maximum of $\ell_p$ norms of input weights to each neuron in the set $V_1\setminus U_x$ and let $U_y$ be the set of the neurons such that the $\ell_p$ norms of their input weights is exactly $y$. Clearly $y<x$. We now scale down the input weights of neurons in set $U_x$ by $y/x$ and scale up all the outgoing edges of vertices in $U_x$ by $x/y$ ($y$ cannot be zero for internal neurons based on the definition). It is straightforward that the new network realizes the same function and the $\ell_{p,\infty}$ norm of the first layer has changed by a factor $y/x$. Now for every neuron $v\in V_2$, let $r(v)$ be the $\ell_{p}$ norm of the new incoming weights divided by $\ell_p$ norm of the original incoming weights. We know that $r(v)\leq x/y$. We again scaly down the input weights of every$v\in V_2$ by $1/r(v)$ and scale up all the outgoing edges of $v$ by $r(v)$. Continuing this operation to on each layer, each time we propagate the ratio to the next layer while the network always realizes the same function and for each layer $k$, we know that for every $v\in V_k$, $r(v)\leq x/y$. After this operation, in the network, the $\ell_{p,\infty}$ norm of the first layer is scaled down by $y/x$ while the $\ell_{p,\infty}$ norm of the last layer is scaled up by at most $x/y$ and the $\ell_{p,\infty}$ norm of the rest of the layers has remained the same. Therefore, if $w_i$ is the new weight setting, we have $\psi_{p,\infty}(w_i) \leq \psi_{p,\infty}(w_{i-1})$.

After continuing the above step at most $|V_1|-1$ times, the $\ell_p$ norm of input weights is the same for all neurons in $V_1$. We can then run the same algorithm on other layers and at the end we have a network with weight setting $\widetilde{w}$ such that the for each $k<d$, $\ell_p$ norm of input weight to each of the neurons in layer $k$ is equal to each other and $\psi_{p,\infty}(\widetilde{w})\leq \psi_{p,\infty}(\vecw)$. This is in fact an equality because weight setting $w'$ realizes function $f$ and we know that $\psi_{p,\infty}(\vecw) = \psi^{d,\netwidth}_{p,\infty}(f)$. A simple scaling of weights in layers gives completes the proof.
\end{proof}

\begin{reptheorem}{thm:path-layer}
For $p\geq 1$, any $d$ and (finite or infinite) $\netwidth$, for any $f_\vecw\in\calF^{d,\netwidth}$:  $\pathr_p^{d,\netwidth}(f) = \psi^{d,\netwidth}_{p,\infty}$.
\end{reptheorem}
\begin{proof}
By the Lemma \ref{lem:unit-norm}, there is a layered network with weights $\widetilde{w}$ such that $\psi_{p,\infty}(\widetilde{w}) = \psi^{d,\netwidth}_{p,\infty}(f)$ and for any internal unit $v$, $\sum_{(u\rightarrow v)\in E} |\widetilde{w}(u\rightarrow v)|^p = 1$. Let $\vecw$ be the weights of the layered network that corresponds to the function $\widetilde{w}$. Then we have:
\begin{align}
v_p(\tilde{w}) &= \left(
    \sum_{\vin[i]\overset{e_1}{\rightarrow}v_1\overset{e_2}{\rightarrow}v_2\cdots\overset{e_k}{\rightarrow}\vout} \prod_{i=1}^k \abs{\widetilde{w}(e_i)}^p \right)^\frac{1}{p}\\ \label{eq:uniteq1}
    &= \left(\sum_{i_{d-1}=1}^\netwidth \dots \sum_{i_1=1}^\netwidth \sum_{i_0=1}^{\nin}  \lvert W_d[i_{d-1}]\rvert^p \prod_{k=1}^{d-1} \lvert W^k[i_{k},i_{k-1}]\rvert^p\right)^{\frac{1}{p}}\\
&= \left(\sum_{i_{d-1}=1}^\netwidth \lvert W_d[i_{d-1}]\rvert^p \dots \sum_{i_1=1}^\netwidth  \lvert W^k[i_2,i_{1}]\rvert^p \sum_{i_0=1}^{\nin} \lvert W^k[i_1,i_{0}]\rvert^p\right)^{\frac{1}{p}}\\
&= \left(\sum_{i_{d-1}=1}^\netwidth \lvert W_d[i_{d-1}]\rvert^p \dots \sum_{i_1=1}^\netwidth  \lvert W^k[i_2,i_{1}]\rvert^p \right)^{\frac{1}{p}}\\
&= \left(\sum_{i_{d-1}=1}^\netwidth \lvert W_d[i_{d-1}]\rvert^p \dots \sum_{i_2=1}^\netwidth  \lvert W^k[i_3,i_{2}]\rvert^p \right)^{\frac{1}{p}}\\ \label{eq:uniteq2}
&= \left(\sum_{i_{d-1}=1}^\netwidth \lvert W_d[i_{d-1}]\rvert^p \right)^{\frac{1}{p}} = \ell_p(W_d) =\psi_{p,\infty}(\vecw)\\
\end{align}
where inequalities~\ref{eq:uniteq1} to \ref{eq:uniteq2} are due to the fact that the $\ell_p$ norm of input weights to each internal neuron is exactly 1 and the last equality is again because $\ell_{p,\infty}$ of all layers is exactly 1 except the layer $d$.
\end{proof}
\subsubsection{Proof of Theorem \ref{thm:path-dag}}
\label{sec:proof-path-dag}
In this section, without loss of generality, we assume that all the internal nodes in a DAG have incoming edges and outgoing edges because otherwise we can just discard them. Let $\nout(v)$ be the longest directed path from vertex $v$ to $\vout$ and $\nin(v)$ be the longest directed path from any input vertex $\vin[i]$ to $v$. We say graph $G$ is a sublayered graph if $G$ is a subgraph of a layered graph.

We first show the necessary and sufficient conditions under which a DAG is a sublayered graph.

\begin{lemma}\label{lem:layer-cond}
The graph $G(E,V)$ is a sublayered graph if and only if any path from input nodes to the output nodes has length $d$ where $d$ is the length of the longest path in $G$
\end{lemma}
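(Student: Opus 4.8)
The plan is to prove the two implications separately; the forward direction is immediate from the definition of a layering, and the reverse direction is where the real work lies. Recall the section's standing convention that every internal node of the DAG has both an incoming and an outgoing edge, which (since $G$ is a finite DAG) forces every maximal directed path to begin at an input node and end at an output node; in particular the longest path in $G$ runs from $V_{\In}$ to $V_{\Out}$ and has length $d$, and every node of $G$ lies on some input-to-output path.

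For the forward direction, suppose $G$ is a sublayered graph, i.e.\ a subgraph of a layered graph in which the inputs of $G$ sit at layer $0$ and the outputs of $G$ at the top layer. This provides a map $\ell:V\to\mathbb{N}$ with $\ell(v)=\ell(u)+1$ for every edge $(u\to v)$, with $\ell\equiv 0$ on $V_{\In}$ and $\ell$ constant on $V_{\Out}$. Any path from an input node to an output node then telescopes to length exactly $\ell(V_{\Out})$, so all such paths share a common length; since the longest path in $G$ is one of them, that common length is $d$, and every input-to-output path has length $d$. For the reverse direction, assume every input-to-output path has length $d$, and set $\ell(v)\defeq\nin(v)$. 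First, for any node $w$, gluing a longest input-to-$w$ path to a longest $w$-to-output path gives a genuine (acyclic, hence simple) input-to-output path of length $\nin(w)+\nout(w)$, so $\nin(w)+\nout(w)=d$. Second, for any edge $(u\to v)$, gluing a longest input-to-$u$ path, the edge $(u\to v)$, and a longest $v$-to-output path gives an input-to-output path of length $\nin(u)+1+\nout(v)=d$; subtracting the identity $\nin(v)+\nout(v)=d$ yields $\nin(v)=\nin(u)+1$. Hence $\ell$ increments by exactly one along every edge, is $0$ on $V_{\In}$, and equals $d-\nout(v)=d$ on $V_{\Out}$, so $G$ embeds into the fully connected layered graph with layers $0,\dots,d$ and is therefore a sublayered graph.

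The step I expect to be the main obstacle is justifying that $\ell=\nin$ is a valid layering in the reverse direction: the inequality $\nin(v)\ge\nin(u)+1$ is trivial, but the matching upper bound genuinely uses the hypothesis, and it relies on every node lying on an input-to-output path and on the longest path being an input-to-output path. Both facts come from the section's standing assumption that internal nodes have incoming and outgoing edges, so the cleanest presentation will invoke that assumption explicitly before constructing $\ell$; once it is in place, the remaining computations are the two short gluing arguments above. I would also remark in passing that the acyclicity of $G$ is what guarantees the glued walks contain no repeated vertices and are thus honest paths.
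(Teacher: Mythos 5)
Your proof is correct and follows essentially the same route as the paper's: the forward direction telescopes the layer function along any input-to-output path, and the reverse direction places each vertex in layer $\nin(v)$ and uses the hypothesis, via path-gluing and the maximality of $d$, to force $\nin(v)=\nin(u)+1$ along every edge $(u\to v)$. Your write-up is in fact somewhat more careful than the paper's terse argument, notably in recording the identity $\nin(w)+\nout(w)=d$ and in observing that acyclicity makes the glued walks genuine simple paths.
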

\begin{proof}
Since the internal nodes have incoming edges and outgoing edges; hence if $G$ is a sublayered graph it is straightforward by induction on the layers that for every vertex $v$ in layer $i$, there is a vertex $u$ in layer $i+1$ such that $(v\rightarrow u)\in E$ and this proves the necessary condition for being sublayered graph.

To show the sufficient condition, for any internal node $u$, $u$ has $\nin(v)$ distance from the input node in every path that includes $u$ (otherwise we can build a path that is longer than $d$). Therefore, for each vertex $v\in V$, we can place vertex $v$ in layer $\nin(v)$ and all the outgoing edges from $v$ will be to layer $\nin(v)+1$.
\end{proof}

\begin{lemma}\label{lem:path-edge}
If the graph $G(E,V)$ is not a sublayered graph then there exists a directed edge $(u\rightarrow v)$ such that  
$\nin(u)+\nout(v)<d-1$ where $d$ the length of the longest path in $G$.
\end{lemma}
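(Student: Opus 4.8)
\emph{Proof proposal.} The plan is to argue by contradiction, and the key structural fact to exploit first is the \emph{reverse} inequality, which always holds: for any edge $(u\rightarrow v)$, gluing together a longest directed path from an input node to $u$, the edge $(u\rightarrow v)$, and a longest directed path from $v$ to $\vout$ yields a directed path of length $\nin(u)+1+\nout(v)$, which cannot exceed $d$. So $\nin(u)+\nout(v)\leq d-1$ for every edge, and the lemma amounts to showing that these inequalities cannot all be tight simultaneously. Thus I would assume, toward a contradiction, that $\nin(u)+\nout(v)=d-1$ for every edge of $G$, and derive that $G$ must be a sublayered graph --- which, by Lemma~\ref{lem:layer-cond}, contradicts the hypothesis.

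Under this assumption, I would invoke Lemma~\ref{lem:layer-cond} in the contrapositive: since $G$ is not sublayered, there is an input-to-output path $P:v_0\rightarrow v_1\rightarrow\cdots\rightarrow v_k$ with $k\neq d$, hence $k<d$ because $d$ is the longest path length. The heart of the argument is then to walk along $P$ and prove by induction on $i$ that $\nin(v_i)=i$. The base case is clear since $v_0$ is an input node, so $\nin(v_0)=0$. For the step, assuming $\nin(v_i)=i$, the edge $(v_i\rightarrow v_{i+1})$ gives $\nout(v_{i+1})=d-1-i$; extending a longest input-to-$v_i$ path by this edge shows $\nin(v_{i+1})\geq i+1$, and I would rule out $\nin(v_{i+1})\geq i+2$ by looking at the \emph{last edge} $(w\rightarrow v_{i+1})$ of a longest input-to-$v_{i+1}$ path: on one hand $\nin(w)\geq \nin(v_{i+1})-1\geq i+1$, on the other hand the assumed tightness on that edge forces $\nin(w)=d-1-\nout(v_{i+1})=i$, a contradiction. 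Hence $\nin(v_{i+1})=i+1$, completing the induction.

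Applying this at $i=k$ gives $\nin(v_k)=k$; since $v_k$ is an output node, $\nout(v_k)=0$, so the tightness assumption on the last edge $(v_{k-1}\rightarrow v_k)$ reads $(k-1)+0=d-1$, i.e.\ $k=d$, contradicting $k<d$. Therefore some edge must be slack, i.e.\ $\nin(u)+\nout(v)<d-1$. (The final step tacitly uses $k\geq 1$; this is harmless under the standing convention, also used elsewhere in this section, that no vertex is simultaneously an input and an output node, so that every input-to-output path contains at least one edge.)

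The main obstacle I anticipate is exactly the inductive identity $\nin(v_i)=i$ --- more precisely, excluding the possibility $\nin(v_{i+1})\geq i+2$. This is the only point where the full strength of the ``all edges tight'' assumption is needed, and it must be applied to an incoming edge of $v_{i+1}$ \emph{other} than the one lying on $P$; everything else is routine path-length bookkeeping built on the easy inequality from the first paragraph.
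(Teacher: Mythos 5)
Your proposal is correct and takes essentially the same route as the paper's proof: use Lemma~\ref{lem:layer-cond} to extract an input-to-output path of length $k<d$, show by induction along it that $\nin(v_i)=i$ and $\nout(v_i)=d-i$, and obtain the contradiction at the final edge, whose endpoints give $\nin(v_{k-1})+\nout(v_k)=k-1<d-1$. The only (cosmetic) difference is how the inductive step is closed: you rule out $\nin(v_{i+1})\geq i+2$ by applying the all-edges-tight assumption to the last edge of a longest input-to-$v_{i+1}$ path, whereas the paper deduces $\nin(v_{i+1})\leq d-\nout(v_{i+1})=i+1$ directly from the fact that concatenating a longest incoming path with a longest outgoing path at $v_{i+1}$ yields a path of length at most $d$.
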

\begin{proof}
We prove the lemma by an inductive argument. If $G$ is not sublayered, by lemma~\ref{lem:layer-cond}, we know that there exists a path $v_0\rightarrow\dots v_i\dots \rightarrow v_{d'}$ where $v_0$ is an input node ($\nin(v_0)=0$), $v_{d'}=\vout$ ($\nout(v_{d'}=0$) and $d'<d$. Now consider the vertex $v_1$. We need to have $\nout(v_1)=d-1$ otherwise if $\nout(v_1)<d-1$ we get $\nin(u)+\nout(v)<d-1$ and if  $\nout(v_1)>d-1$ there will be path in $G$ that is longer than $d$. Also, since $\nout(v_1)=d-1$ and the longest path in $G$ has length $d$, we have $\nin(v_1)=1$. 

By applying the same inductive argument on each vertex $v_i$ in the path we get $\nin(v_i)=i$ and $\nout(v_i)=d-i$. Note that if the condition $\nin(u)+\nout(v)<d-1$ is not satisfied in one of the steps of the inductive argument, the lemma is proved. Otherwise, we have $\nin(v_{d'-1})=d'-1$ and $\nout(v_{d'-1})=d-d'+1$ and therefore $\nin(v_{d'-1})+\nout(\vout) = d'-1<d-1$ that proves the lemma.
\end{proof}
\begin{reptheorem}{thm:path-dag}
  For any $p\geq 1$ and any $d$: $\displaystyle \psi^d_{p,\infty}(f) =
  \min_{\textrm{$G\in \DAG(d)$}} \pathr^G_p(f)$.
\end{reptheorem}
\begin{proof}
Consider any $f_{\vecw} \in \calF^{\DAG(d)}$ where the graph $G(E,V)$ is not sublayered. Let $\rho$ be the total number of paths from input nodes to the output nodes. Let $T$ be sum over paths of the length of the path. We indicate an algorithm to change $G$ into a sublayered graph $\tilde{G}$ of depth $d$ with weights $\tilde{w}$ such that $f_{\vecw}=f_{\tilde{G},\tilde{w}}$ and $\pathr(\vecw)=\pathr(\tilde{w})$. Let $G_0=G$ and $w_0=w$. 

At each step $i$, we consider the graph $G_{i-1}$. If $G_{i-1}$ is sublayered, we are done otherwise by lemma \ref{lem:path-edge}, there exists an edge $(u\rightarrow v)$ such that $\nin(u)+\nout(v)<d-1$. Now we add a new vertex $\tilde{v}_i$ to graph $G_{i-1}$, remove the edge $(u\rightarrow v)$, add two edges $(u\rightarrow \tilde{v}_i)$ and $(\tilde{v}_i\rightarrow v)$ and return the graph as $G_{i}$ and since we had $\nin(u)+\nout(v)<d-1$ in $G_{i-1}$, the longest path in $G_{i}$ still has length $d$. We also set $w(u\rightarrow \tilde{v}_i) = \sqrt{|w_{u\rightarrow v}|}$ and $w(\tilde{v}_i\rightarrow v) = \sign(w_{u\rightarrow v})\sqrt{|w_{u\rightarrow v}|}$. Since we are using rectified linear units activations, for any $x>0$, we have $[x]_+=x$ and therefore:
\begin{align*}
w(\tilde{v}_i\rightarrow v)\left[w(u\rightarrow \tilde{v}_i) o(u)\right]_+ &=\sign(w_{u\rightarrow v})\sqrt{|w_{u\rightarrow v}|}\left[\sqrt{|w_{u\rightarrow v}|} o(u)\right]_+\\
&=\sign(w_{u\rightarrow v})\sqrt{|w_{u\rightarrow v}|}\sqrt{|w_{u\rightarrow v}|} o(u)\\
&=w_{u\rightarrow v}o(u)
\end{align*}
So we conclude that $f_{G_{i},w_i}=f_{G_{i-1},w_{i-1}}$. Clearly, since we didn't change the length of any path from input vertices to the output vertex, we have $\pathr(\vecw)=\pathr(\tilde{w})$. Let $T_i$ be sum over paths of the length of the path in $G_i$. It is clear that $T_{i-1} \leq T_{i}$ because we add a new edge into a path at each step. We also know by lemma~\ref{lem:layer-cond} that if $T_{i}=\rho d$, then $G_i$ is a sublayered graph. Therefore, after at most $\rho d - T_0$ steps, we return a sublayered graph $\tilde{G}$ and weights $\tilde{w}$ such that $f_{\vecw}=f_{\tilde{G},\tilde{w}}$. We can easily turn the sublayered graph $\tilde{G}$ a layered graph by adding edges with zero weights and this together with Theorem \ref{thm:path-layer} completes the proof.
\end{proof}

\subsection{Hardness of Learning Neural Networks}\label{sec:hardness}

 \citet{Daniely14} show in Theorem 5.4 and in Section 7.2 that subject to the strong random CSP assumption, for
  any $k=\omega(1)$ the model class of intersection of
  homogeneous halfspaces over $\{\pm 1\}^n$ with normals in $\{\pm
  1\}$ is not efficiently PAC learnable (even
  improperly)\footnote{Their Theorem 5.4 talks about unrestricted
    halfspaces, but the construction in Section 7.2 uses only data in
    $\{ \pm 1 \}^{\nin}$ and halfspaces specified by $\langle w,x\rangle
    >0$ with $w\in\{\pm 1\}^{\nin}$}. Furthermore, for any $\epsilon>0$, \cite{klivans2006} prove this hardness result subject to intractability of $\tilde{Q}(\nin^{1.5})$-unique shortest
vector problem for $k=\nin^\epsilon$.

  If it is not possible to efficiently PAC learn intersection of
halfspaces (even improperly), we can conclude it is also not possible
to efficiently PAC learn any model class which can represent such
intersection. In Theorem~\ref{thm:hardness} we show that intersection of homogeneous half spaces can be realized with unit margin by neural networks with bound norm.
\begin{theorem}\label{thm:hardness}
For any $k>0$, the intersection of $k$ homogeneous half spaces is realizable with unit margin by $\calF^2_{\psi_{p,q} \leq \psi}$ where $\psi=4\nin^{\frac{1}{p}}k^2$.
\end{theorem}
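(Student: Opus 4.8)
The plan is to construct, for any normals $w_1,\dots,w_k\in\{\pm1\}^{\nin}$, an explicit two-layer RELU network of width $k+1$ that computes the indicator of $\bigcap_{i=1}^k\{x:\inner{w_i}{x}>0\}$ with unit margin, and then simply to evaluate $\psi_{p,q}$ of its weight vector. Write $z_i(x)=\inner{w_i}{x}$, so the target is $g(x)=+1$ when $z_i(x)>0$ for all $i$ and $g(x)=-1$ otherwise. Since $w_i,x\in\{\pm1\}^{\nin}$, each $z_i(x)$ is an integer; I assume it is never zero on the domain -- this holds for the hard instances of \cite{Daniely14}, and in general can be forced by appending one coordinate fixed to $1$ -- so the $i$-th halfspace is satisfied at $x$ iff $z_i(x)\ge1$ and violated iff $z_i(x)\le-1$.

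\emph{Construction and margin.} Give the input a bias coordinate of value $1$. The hidden layer has $k+1$ units: for $i\le k$ the $i$-th ``clause'' unit has incoming weight $1$ on the bias coordinate and $-w_{ij}$ on input coordinate $j$, hence outputs $[\,1-z_i(x)\,]_+$; the last ``constant'' unit has incoming weight $1$ on the bias coordinate and $0$ elsewhere, hence outputs $[1]_+=1$. The single output node takes weight $-1$ from each clause unit and $+1$ from the constant unit, so the network computes $f_{\vecw}(x)=1-\sum_{i=1}^k[\,1-z_i(x)\,]_+$; filling in the unused edges with zero weights exhibits this as a fully connected depth-two network, so $f_{\vecw}\in\calF^2$. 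If $g(x)=+1$ then $z_i(x)\ge1$ for all $i$, every summand vanishes, and $f_{\vecw}(x)=1$. If $g(x)=-1$ then $z_j(x)\le-1$ for some $j$, so $[\,1-z_j(x)\,]_+=1-z_j(x)\ge2$ while the remaining summands are nonnegative, giving $f_{\vecw}(x)\le-1$. Hence $f_{\vecw}(x)\,g(x)\ge1$ everywhere, i.e.\ $g$ is realized with unit margin.

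\emph{Norm.} For this depth-two network $\psi_{p,q}(\vecw)=\norm{W^1}_{p,q}\norm{W^2}_{p,q}$. Each clause row of $W^1$ has entries of modulus $1$ in its $\nin+1$ coordinates, hence $\ell_p$-norm $(\nin+1)^{1/p}$, and the constant row has $\ell_p$-norm $1\le(\nin+1)^{1/p}$; therefore $\norm{W^1}_{p,q}\le\big((k+1)(\nin+1)^{q/p}\big)^{1/q}=(k+1)^{1/q}(\nin+1)^{1/p}$. The matrix $W^2$ is a single row of $k+1$ entries of modulus $1$, so $\norm{W^2}_{p,q}=(k+1)^{1/p}$. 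Multiplying, and using $1/p+1/q\le2$ together with $k+1\le2k$, we obtain $\psi_{p,q}(\vecw)\le(k+1)^{1/p+1/q}(\nin+1)^{1/p}\le 4\,\nin^{1/p}k^2$, which is the claimed bound; since $\psi^2_{p,q}(f_{\vecw})\le\psi_{p,q}(\vecw)$, the function lies in $\calF^2_{\psi_{p,q}\le 4\nin^{1/p}k^2}$.

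\emph{Where the work is.} Everything above is elementary; the one place that needs thought is the shape of the hidden gate. Turning the homogeneous threshold ``$z_i(x)>0$'' into a RELU gate naively, as $[-z_i(x)]_+$, only separates satisfied from violated inputs by a gap of $1$, which cannot produce \emph{unit} margin at the output without scaling the weights up (and hence inflating $\psi_{p,q}$). Shifting to $[\,1-z_i(x)\,]_+$ -- which is exactly what the extra bias coordinate buys us -- widens the gap to $2$ at no extra cost, and routing the constant offset ``$1$'' through one additional RELU unit $[1]_+$ keeps the outer ($\ell_q$-over-units) factor of $\mu_{p,q}$ down to $(k+1)^{1/q}$. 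The only remaining annoyance is the degenerate case $z_i(x)=0$, which arises only for even $\nin$ and is absorbed into the WLOG reduction; and one should also note that $\calF^2$ genuinely contains networks of this (unbounded) width with most edges zeroed out, which is immediate from the definition of the class.
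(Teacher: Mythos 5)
Your proof is correct, but the construction is genuinely different from the paper's. The paper uses $2k$ hidden units: for each halfspace it forms the ramp $[\inner{w_i}{x}]_+ - [\inner{w_i}{x}-1]_+ \in \{0,1\}$ and sums them, so the network outputs the number of satisfied halfspaces, and the norm accounting gives $\nin^{1/p}(2k)^{1/p+1/q}\le 4\nin^{1/p}k^2$. You instead use $k+1$ units---one hinge $[\,1-\inner{w_i}{x}\,]_+$ per halfspace measuring the amount of violation, plus one constant unit---and output $1-\sum_i[\,1-\inner{w_i}{x}\,]_+$. Your version is actually cleaner on the margin: the paper's count-of-satisfied-halfspaces output lies in $\{0,\dots,k\}$ and does not literally satisfy $f(x)g(x)\ge 1$ for the $\pm 1$ intersection label without an affine correction at the output node (a detail the paper elides), whereas your network outputs exactly $+1$ on the intersection and at most $-1$ off it, using integrality of $\inner{w_i}{x}$ to make each violated halfspace contribute at least $2$. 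Two bookkeeping points, neither a real gap since the paper is equally loose: (i) your first-layer rows have $\ell_p$-norm $(\nin+1)^{1/p}$ because of the bias coordinate, so your final bound is $4k^2(\nin+1)^{1/p}$ rather than $4k^2\nin^{1/p}$ (the paper likewise omits the bias weight of its $[\inner{w_i}{x}-1]_+$ units from its norm count); and (ii) the degenerate case $\inner{w_i}{x}=0$ is silently absorbed by the paper's ramp (which outputs $0$ there) but would break your hinge, so the reduction you flag to rule it out is genuinely needed, and adequate.
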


\begin{proof}
The proof is by a construction that is similar to the one in \cite{livni14}.
For each hyperplane $\inner{w_i}{x}>0$, where
  $w_i\in \{\pm 1\}^{\nin}$, we include two units in the first layer:
  $g^+_{i}(x) = [\inner{w_i}{x}]_+$ and $g^-_i(x) =
  [\inner{w_i}{x}-1]_+$.  We set all incoming weights of the
  output node to be $1$.  Therefore, this network is realizing the
  following function:
$$
f(x) = \sum_{i=1}^k \left([\inner{w_i}{x}]_+ - [\inner{w_i}{x}-1]_+\right)
$$
Since all inputs and all weights are integer, the outputs of the first
layer will be integer, $\left([\inner{w_i}{x}]_+ -
  [\inner{w_i}{x}-1]_+\right)$ will be zero or one, and $f$ realizes
the intersection of the $k$ halfspaces with unit margin. Now, we just
need to make sure that $\psi^2_{p,q}(f)$ is bounded by $\psi=4\nin^{\frac{1}{p}}k^2$:
\begin{align*}
\psi^2_{p,q}(f) &= \nin^{\frac{1}{p}} (2k)^{\frac{1}{q}}(2k)^{\frac{1}{p}} \\
&\leq \nin^{\frac{1}{p}}(2k)^{2} = \psi.
\end{align*}
\end{proof}

\section{Discussion}

We presented a general framework for norm-based capacity control for
feed-forward networks, and analyzed when the norm-based control is
sufficient and to what extent capacity still depends on other
parameters.  In particular, we showed that in depth $d>2$ networks,
per-unit control with $p>1$ and overall regularization with $p>2$ is
not sufficient for capacity control without also controlling the
network size.  This is in contrast with linear models, where with any
$p<\infty$ we have only a weak dependence on dimensionality, and
two-layer networks where per-unit $p=2$ is also sufficient for
capacity control.  We also obtained generalization guarantees for
perhaps the most natural form of regularization, namely $\ell_2$
regularization, and showed that even with such control we still
necessarily have an exponential dependence on the depth.  

Although the additive $\mu$-measure and multiplication
$\psi$-measure are equivalent at the optimum, they behave rather
differently in terms of optimization dynamics (based on anecdotal
empirical experience) and understanding the relationship between them,
as well as the novel path-based regularizer can be helpful in practical
regularization of neural networks.  

Although we obtained a tight characterization of when size-independent
capacity control is possible, the precise polynomial dependence of
margin-based classification (and other tasks) on the norm in might not
be tight and can likely be improved, though this would require going
beyond bounding the Rademacher complexity of the real-valued class.
In particular, Theorem \ref{thm:l-norm} gives the same bound for
per-unit $\ell_1$ regularization and overall $\ell_1$ regularization,
although we would expect the later to have lower capacity.

Beyond the open issue regarding depth-independent $\psi$-based
capacity control, another interesting open question is understanding
the expressive power of $\calF^d_{\psi_{p,q}\leq\psi}$,
particularly as a function of the depth $d$.  Clearly going from depth
$d=1$ to depth $d=2$ provides additional expressive power, but it is
not clear how much additional depth helps.  The class $\calF^2$
already includes all binary functions over $\{\pm 1\}^{\nin}$ and is dense
among continuous real-valued functions.  But can the $\psi$-measure
be reduced by increasing the depth?  Viewed differently:
$\psi^d_{p,q}(f)$ is monotonically non-increasing in $d$, but are
there functions for it continues decreasing?  Although it seems
obvious there are functions that require high depth for efficient
representation, these questions are related to decade-old problems in
circuit complexity and might not be easy to resolve.

\chapter{Sharpness/PAC-Bayes Generalization Bounds} \label{chap:sharpness}

So far we discussed norm based and sharpness based complexity measures to understand capacity. We also have discussed how to combine these two notions and the tradeoff in scaling between them under the PAC-Bayes framework. We next show how to utilize the general PAC-Bayes bound in Lemma~\ref{lem:general-bound} to prove generalization guarantees for feedforward networks based on the spectral norm of its layers.

\section{Spectrally-Normalized Margin Bounds}\label{sec:margin}
As we discussed in Section~\ref{sec:pac-bayes-general}, understanding the sharpness of the network is the key step to obtain a generalization bound using PAC-Bayes framework. The following lemma shows that the sharpness can be bounded by the product of spectral norm of the layers.
\begin{lemma}[Perturbation Bound]\label{lem:worstcase-sharpness} For any $B,d>0$, let $f_\vecw:\calX_{B,n}\rightarrow \R^k$ be a $d$-layer network. Then for any $\vecx\in \calX_{B,n}$ and any perturbation $\vecu$ such that $\norm{U_i}_2\leq \frac{1}{d}\norm{W_i}_2$, the sharpness of $f_\vecw$ can be bounded as follows:
\begin{small}
\begin{equation}
\abs{f_{\vecw+\vecu}(\vecx) - f_{\vecw}(\vecx)}_2\leq eB\left(\prod_{i=1}^d\norm{W_i}_2\right)\sum_{i=1}^d \frac{\norm{U_i}_2}{\norm{W_i}_2}.
\end{equation}
\end{small}
\end{lemma}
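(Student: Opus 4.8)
The plan is to track how the perturbation propagates through the network layer by layer. Set $\vecx_0 = \vecx_0' = \vecx$ and, for $i=1,\dots,d$, let $\vecx_i$ and $\vecx_i'$ denote the outputs of the first $i$ layers of $f_\vecw$ and $f_{\vecw+\vecu}$ respectively, so that $\vecx_i = \sigma(W_i\vecx_{i-1})$ for $i<d$ and $\vecx_d = W_d\vecx_{d-1}$ (no nonlinearity on the output layer), and likewise for the primed quantities with $W_i$ replaced by $W_i+U_i$. The target quantity is $\Delta_d \defeq \norm{\vecx_d' - \vecx_d}_2$, and I would bound $\Delta_i \defeq \norm{\vecx_i' - \vecx_i}_2$ for every $i$, starting from $\Delta_0 = 0$.

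First I would control the norms of the clean activations: since $\sigma$ is $1$-Lipschitz with $\sigma(0)=0$ it is coordinatewise non-expansive, so $\norm{\vecx_i}_2 \le \norm{W_i}_2\norm{\vecx_{i-1}}_2$, and together with $\norm{\vecx}_2 \le B$ this gives $\norm{\vecx_i}_2 \le B\prod_{j=1}^i\norm{W_j}_2$. Next I would derive the per-layer recursion: using that $\sigma$ (or the identity at layer $d$) is $1$-Lipschitz and inserting $\pm(W_i+U_i)\vecx_{i-1}$ inside the norm, the triangle inequality gives $\Delta_i \le (\norm{W_i}_2 + \norm{U_i}_2)\Delta_{i-1} + \norm{U_i}_2\norm{\vecx_{i-1}}_2$, and plugging in the bound on $\norm{\vecx_{i-1}}_2$ from the previous step yields
$$\Delta_i \;\le\; \bigl(\norm{W_i}_2 + \norm{U_i}_2\bigr)\Delta_{i-1} \;+\; B\,\norm{U_i}_2\prod_{j=1}^{i-1}\norm{W_j}_2.$$

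Finally I would unroll this recursion. Dividing by $\prod_{j=1}^i\norm{W_j}_2$ and writing $\delta_i \defeq \Delta_i/\prod_{j=1}^i\norm{W_j}_2$, $t_i \defeq \norm{U_i}_2/\norm{W_i}_2 \le 1/d$, the recursion becomes $\delta_i \le (1+t_i)\delta_{i-1} + B t_i$ with $\delta_0 = 0$; a one-line induction gives $\delta_d \le B\sum_{i=1}^d t_i\prod_{j=i+1}^d(1+t_j) \le B\,(1+\tfrac{1}{d})^{d}\sum_{i=1}^d t_i \le eB\sum_{i=1}^d t_i$. Multiplying back through by $\prod_{j=1}^d\norm{W_j}_2$ gives exactly $\norm{\vecx_d' - \vecx_d}_2 \le eB\bigl(\prod_{i=1}^d\norm{W_i}_2\bigr)\sum_{i=1}^d \norm{U_i}_2/\norm{W_i}_2$, as claimed.

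I expect the only real bookkeeping to be in this last step — collapsing $\prod_{j=i+1}^d(1+t_j)$ to the uniform bound $e$ via $(1+1/d)^{d}\le e$ — together with the harmless degenerate case $\norm{W_i}_2 = 0$, where the hypothesis forces $U_i = 0$ and that layer contributes nothing; the rest is just the triangle inequality and Lipschitzness, so I do not anticipate a genuine obstacle.
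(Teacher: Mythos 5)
Your proposal is correct and follows essentially the same route as the paper's proof: the identical per-layer recursion $\Delta_i \le (\norm{W_i}_2+\norm{U_i}_2)\Delta_{i-1}+\norm{U_i}_2\norm{\vecx_{i-1}}_2$, the bound $\norm{\vecx_{i-1}}_2 \le B\prod_{j<i}\norm{W_j}_2$ on the clean activations, and the final $(1+\tfrac{1}{d})^d \le e$ collapse. The only cosmetic difference is that you normalize by $\prod_j\norm{W_j}_2$ and unroll explicitly, whereas the paper carries the factor $(1+\tfrac{1}{d})^i$ inside the induction hypothesis; the two are the same argument.
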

Next, we derive a generalization guarantee using Lemmas~\ref{lem:general-bound} and \ref{lem:worstcase-sharpness}.
\begin{theorem}[Generalization Bound]\label{thm:pac-bayes}
For any $B,d,h>0$, let $f_\vecw:\calX_{B,n}\rightarrow \R^k$ be a $d$-layer feedforward network with ReLU activations. Then for any probability $\delta$, margin $\gamma>0$, the following generalization bound holds with probability $1-\delta$ over the training set:
\begin{small}
\begin{equation}
L_0(f_{\vecw})\leq \hatl_\gamma(f_\vecw) + \calO\left(\sqrt{\frac{B^2 d^2 h\ln(dh) \Pi_{i=1}^d\norm{W_i}^2\sum_{i=1}^d\left(\norm{W_i}_F^2/\norm{W_i}_2^2\right) + \ln\frac{dm}{\delta}}{\gamma^2 m}}\right).
\end{equation}
\end{small}
\end{theorem}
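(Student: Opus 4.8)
The plan is to apply the general PAC-Bayes margin bound of Lemma~\ref{lem:general-bound} with a spherical Gaussian prior $P=\mathcal{N}(0,\sigma^2 I)$ over all the weights and a spherical Gaussian perturbation $\vecu\sim\mathcal{N}(0,\sigma^2 I)$, choosing the variance $\sigma^2$ so that, with probability at least $\tfrac12$, the perturbation keeps the network output within $\gamma/4$ of $f_\vecw$ everywhere on $\calX_{B,n}$. The Perturbation Bound of Lemma~\ref{lem:worstcase-sharpness} is exactly what lets us do this: its right-hand side does not depend on $\vecx$ (only on $B$), so a pointwise bound there immediately gives the uniform condition required by Lemma~\ref{lem:general-bound}, and since $\ell_\infty$ is dominated by $\ell_2$ it suffices to control $\abs{f_{\vecw+\vecu}(\vecx)-f_\vecw(\vecx)}_2$.

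First I would normalize the network: scaling the weights of layer $i$ by $c_i>0$ and of layer $i+1$ by $1/c_i$ leaves $f_\vecw$, the product $\prod_i\norm{W_i}_2$, and each ratio $\norm{W_i}_F/\norm{W_i}_2$ unchanged, so I may assume all layers share the same spectral norm $\beta\defeq\big(\prod_{i=1}^d\norm{W_i}_2\big)^{1/d}$. Next, for a Gaussian matrix $U_i$ with i.i.d.\ $\mathcal{N}(0,\sigma^2)$ entries, a standard tail bound on the operator norm gives $\norm{U_i}_2\le\sigma\sqrt{2h\ln(4dh)}$ simultaneously for all $i$ with probability at least $\tfrac12$; call this event $\calS_\vecw$. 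On $\calS_\vecw$, and provided $\sigma\sqrt{2h\ln(4dh)}\le\beta/d$ so that Lemma~\ref{lem:worstcase-sharpness} applies, we get
\begin{equation*}
\abs{f_{\vecw+\vecu}(\vecx)-f_\vecw(\vecx)}_2 \le eB\beta^d\sum_{i=1}^d\frac{\sigma\sqrt{2h\ln(4dh)}}{\beta} = eBd\,\beta^{d-1}\sigma\sqrt{2h\ln(4dh)}.
\end{equation*}
Choosing $\sigma$ so that the right-hand side equals $\gamma/4$, i.e.\ $\sigma=\Theta\!\big(\tfrac{\gamma}{dB\beta^{d-1}\sqrt{h\ln(dh)}}\big)$, makes $\calS_\vecw$ a subset of the perturbation set required by Lemma~\ref{lem:general-bound}; one checks this same $\sigma$ also meets the side condition $\sigma\sqrt{2h\ln(4dh)}\le\beta/d$ in the regime where the bound is not already trivial (e.g.\ where its right-hand side is below $1$).

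Then I would evaluate the KL term. By symmetry of the Gaussian perturbation one has $KL_{\calS_\vecw}(\vecw+\vecu\|P)\le KL(\mathcal{N}(\vecw,\sigma^2I)\|\mathcal{N}(0,\sigma^2I))=\frac{1}{2\sigma^2}\sum_{i=1}^d\norm{W_i}_F^2$, and substituting the chosen $\sigma$ turns this into $\Theta\!\big(\tfrac{d^2B^2\beta^{2(d-1)}h\ln(dh)}{\gamma^2}\sum_i\norm{W_i}_F^2\big)$. Using $\beta^{2(d-1)}\norm{W_i}_F^2=\big(\prod_j\norm{W_j}_2^2\big)\cdot\norm{W_i}_F^2/\norm{W_i}_2^2$, this is exactly $\Theta\!\big(\tfrac{B^2d^2h\ln(dh)\,\prod_i\norm{W_i}_2^2\,\sum_i(\norm{W_i}_F^2/\norm{W_i}_2^2)}{\gamma^2}\big)$. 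Plugging into $L_0(f_\vecw)\le\hatl_\gamma(f_\vecw)+4\sqrt{(KL_{\calS_\vecw}(\vecw+\vecu\|P)+\ln\tfrac{4m}{\delta})/(m-1)}$ from Lemma~\ref{lem:general-bound} yields the stated bound.

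The remaining gap---and the part I expect to be the main obstacle---is that the prior $P$ must be chosen independently of the training data, whereas the ``right'' $\sigma$ depends on $\beta=(\prod_i\norm{W_i}_2)^{1/d}$, which is learned. The standard remedy is to fix in advance a geometric grid of candidate variances, run the argument with the $P$ whose variance is closest to the target value, and take a union bound over the grid; because only a range of $\beta$ of size polynomial in $B,d,h,m,1/\gamma$ needs to be covered (outside it the bound is vacuous or holds trivially), this costs at most an additive $O(\ln(dm/\delta))$-type term inside the square root, which is absorbed into the $\calO(\cdot)$ and accounts for the $\ln(dm/\delta)$ appearing in the statement. Carefully tracking this covering step, and verifying the side condition $\norm{U_i}_2\le\tfrac1d\norm{W_i}_2$ under the chosen $\sigma$, are the delicate points; the operator-norm concentration and the KL computation are routine.
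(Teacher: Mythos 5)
Your proposal is correct and follows essentially the same route as the paper's proof: normalize the layers to a common spectral norm $\beta$, use Gaussian spectral-norm concentration to define the high-probability perturbation set, invoke the perturbation bound of Lemma~\ref{lem:worstcase-sharpness} to pick $\sigma$ so the output moves by at most $\gamma/4$, compute the restricted KL, and handle the data-dependence of $\beta$ by a union bound over a predetermined grid of priors. You also correctly identified the two delicate points (the side condition $\norm{U_i}_2\leq\frac{1}{d}\norm{W_i}_2$ and the cover over $\hat{\beta}$), which is exactly where the paper spends its care.
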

\begin{proof} The proof involves mainly two steps. In the first step we calculate what is the maximum allowed perturbation of parameters to satisfy a given margin condition $\gamma$, using Lemma~\ref{lem:worstcase-sharpness}. In the second step we calculate the KL term in the PAC-Bayes bound in Lemma~\ref{lem:general-bound}, for this value of perturbation.

Let $\beta = \left(\Pi_{i=1}^d \norm{W_i}_2\right)^{1/d}$ and consider the reparametrization $\widetilde{W_i}=\frac{\beta}{\norm{W_i}_2}W_i$. Since for feedforward network with ReLU activations $f_{\widetilde{\vecw}}=f_{\vecw}$, the bound in the theorem statement is invariant to this reparametrization. W.l.o.g. we assume that for any layer $i$, $\norm{W_i}_2=\beta$. Choose the prior $P$ to be $\mathcal{N}(0,\sigma_p^2 I)$ and consider the random perturbation $\vecu \sim \mathcal{N}(0,\sigma_{q}^2 I)$. The following inequality holds on the spectral norm of $U_i$~\cite{tropp2012user}:
\begin{equation}
\mathbb{P}_{U_i\sim N(0,\sigma_{q})}\left[\norm{U_i}_2>t\right] \leq 2h e^{-t^2/2h\sigma^2_{q}}.
\end{equation}
Taking a union bond over the layers, we get spectral norm of perturbation in each layer is bounded by $\sigma_{q} \sqrt{2h\ln(4dh)}$. Define set $\calS$ as $\big\{\vecu\;\big|\norm{U_i}_2\leq \sigma_{q} \sqrt{2h\ln(4dh)}\big\}$. Given the bound on spectral norm of each layer, $\vecu \in \calS$ with probability at least $\frac{1}{2}$. Let $\hat{\beta}$ be an estimate of $\beta$ that is picked before observing data. If $|{\hat{\beta}}-\beta|\leq \frac{1}{d}\beta$, then $\frac{1}{e}\beta^{d-1}\leq {\hat{\beta}}^{d-1}\leq e\beta^{d-1}$. Using Lemma~\ref{lem:worstcase-sharpness} with probability at least $\frac{1}{2}$:

\begin{equation*}
\max_{\vecx\in \calX_{B,n}} \abs{f_{\vecw+\vecu}(\vecx)-f_{\vecw}(\vecx)} \leq edB\beta^{d-1}\norm{U_i}_2 \leq e^2dB{\hat{\beta}}^{d-1}\sigma_q\sqrt{2h\ln(4dh)} \leq \frac{\gamma}{4},
\end{equation*}

where we choose $\sigma_q=\frac{\gamma}{42dB{\hat{\beta}}^{d-1}\sqrt{h\ln(4hd)}}$ to get the last inequality. 

Let $q(\vecz)$ be the density function of the posterior. We now calculate the KL-term in Lemma~\ref{lem:general-bound} for $\sigma_p=\sigma_q$ on the set $\calS$:
\begin{small}
\begin{equation*}
KL_{\calS}(\vecw+\vecu||P) = \int_{\calS}q(\vecz)\frac{2\inner{\vecz}{ \vecw}-\abs{ \vecw}^2}{2\sigma_q^2}d\vecz \leq \frac{\abs{ \vecw}^2}{2\sigma_q^2} \leq \calO\left(B^2 d^2 h\ln(dh)\frac{\Pi_{i=1}^d\norm{W_i}_2^2}{\gamma^2}  \sum_{i=1}^d \frac{\norm{W_i}_F^2}{\norm{W_i}_2^2}\right)
\end{equation*}
\end{small}

Finally it remains to show how to find the estimates ${\hat{\beta}}$. We only need to consider values of $\beta$ in the range $\left(\frac{\gamma}{2B}\right)^{1/d}\leq \beta \leq \left(\frac{\gamma\sqrt{m}}{2B}\right)^{1/d}$. For $\beta$ outside this range  the theorem statement holds trivially. Recall that LHS of the theorem statement, $L_0(f_{\vecw})$ is always bounded by  $1$.  If $\beta^d<\frac{\gamma}{2B}$ then for any $\vecx$, $\abs{f_\vecw(\vecx)}\leq \beta^dB\leq \gamma/2$ and therefore $L_{\gamma}=1$. Alternately, if $\beta^d>\frac{\gamma\sqrt{m}}{2B}$, then the second term in equation~\ref{eq:pacbayes} is greater than one. Hence, we only need to consider values of $\beta$ in the range discussed above.  Since we need $|{\hat{\beta}} - \beta|\leq \frac{1}{d}\beta\leq \frac{1}{d}\left(\frac{\gamma}{2B}\right)^{1/d}$, the size of this cover is $dm^{\frac{1}{2d}}$. Taking a union bound over this cover and using Lemma~\ref{lem:general-bound} gives us the theorem statement.
\end{proof}

\section{Generalization Bound based on Expected Sharpness}
We showed how bounding the sharpness could give us a generalization bound. We now establish sufficient conditions to bound the expected sharpness of a feedforward network with ReLU activations.  Such conditions serve as a useful guideline in studying what helps an optimization method to converge to less sharp optima. Unlike existing generalization bounds~\cite{bartlett2002rademacher,NeyTomSre15,luxburg2004distance,xu2012robustness,sokolic2016generalization}, our sharpness based bound does not suffer from exponential dependence on depth.

Now we discuss the conditions that affect the sharpness of a network. As discussed earlier, weak interactions between layers can cause the network to have high sharpness value. Condition $C1$ below prevents such weak interactions (cancellations). A network can also have high sharpness if the changes in the number of activations is exponential in the perturbations to its weights, even for small perturbations. Condition $C2$ avoids such extreme situations on activations. Finally, if a non-active node with large weights becomes active because of the perturbations in lower layers, that can lead to huge changes to the output of the network. Condition $C3$ prevents having such spiky (in magnitude) hidden units. This leads us to the following three conditions, that help in avoiding such pathological cases.

\begin{itemize}
\item[$(C1):$] Given $x$, let $x=W_0$ and $D_0 =I$. Then, for all $0 \leq a < c < b \leq d, \| \left(\Pi_{i=a}^{b} D_{i}W_i \right)\|_F \geq  \frac{\mu}{\sqrt{h_c}}  \| \Pi_{i=c+1}^{b} D_{i}W_i  \|_F \|  \left(\Pi_{i=a}^{c} D_{i}W_i \right)\|_F $.
\item[$(C2):$] Given $x$, for any level $k$, $\frac{1}{h_k} \sum_{i \in [h_k]} 1_{W_{k,i} \Pi_{j=1}^{k-1} D_j W_j x \leq \delta} \leq C_2 \delta$.
\item[$(C3):$] For all $i$, $\| W_{i}\|_{2,\infty}^2  h_i \leq C_3^2 \|D_i W_i\|_F^2$.
\end{itemize}

Here, $W_{k, i}$ denotes the weights of the $i^{th}$ output node in layer $k$. $\| W_{i}\|_{2,\infty}$ denotes the maximum $L2$ norm of a hidden unit in layer $i$. Now we state our result on the generalization error of a ReLU network, in terms of average sharpness and its norm. Let $\|x\| = 1$ and $h=\max_{i=1}^d h_i$. 

\begin{theorem}\label{thm:relu}
Let $U_i$ be a random $h_i \times h_{i-1}$ matrix with each entry distributed according to $\mathcal{N}(0,\sigma_i^2)$. Then, under the conditions $C1, C2, C3$,  with probability $\geq 1-\delta$, 
\begin{small}
\begin{align*}
&\Ep{\vecu \sim \mathcal{N}(0,\sigma)^n}{L(f_{\vecw+\vecu})}- \hatl(f_\vecw) \leq O \left( \left[\Pi_{i=1}^d \left(1+\gamma_i \right) -1 \right. \right. \\ & \left. \left. + \Pi_{i=1}^d \left( 1+\gamma_i C_2 C_3 \right)\left(\Pi_{i=1}^d (1+ \gamma_i  C_{\delta} C_2)- 1\right) \right] C_{L} \sum_x \frac{\|f_\vecw(x)\|_F}{m} \right)+ \sqrt{\frac{1}{m}\left(\sum_{i=1}^d \frac{\|W_i\|_F^2 }{\sigma_i^2 } +  \ln \frac{2m}{\delta} \right)}.
\end{align*}
\end{small}
where $\gamma_i = \frac{\sigma_i \sqrt{h_i} \sqrt{h_{i-1}}}{\mu^2 \|W_i\|_F}$ and $C_{\delta}=2\sqrt{\ln(dh/\delta)} $.
\end{theorem}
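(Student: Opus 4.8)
The plan is to plug the Gaussian perturbation $\vecu$ from the statement into the PAC-Bayes margin bound, using the decomposition~\eqref{eq:pacbayes} of Lemma~\ref{lem:general-bound} with a centered Gaussian prior $P=\mathcal{N}(0,\sigma^2 I)$. This splits the left-hand side as
\[
\Ep{\vecu}{L(f_{\vecw+\vecu})} - \hatl(f_\vecw) \;\leq\; \underbrace{\Ep{\vecu}{\hatl(f_{\vecw+\vecu})} - \hatl(f_\vecw)}_{\text{expected sharpness}} \;+\; 2\sqrt{\tfrac{2}{m-1}\Big(KL(\vecw+\vecu\,\|\,P) + \ln\tfrac{2m}{\delta}\Big)}.
\]
The KL term is routine: for independent Gaussian weights and perturbations, $KL(\vecw+\vecu\,\|\,P)=\sum_{i=1}^d \norm{W_i}_F^2/(2\sigma_i^2)$, and after absorbing universal constants this is exactly the $\sqrt{\tfrac1m\big(\sum_i \norm{W_i}_F^2/\sigma_i^2 + \ln\tfrac{2m}{\delta}\big)}$ summand in the claim. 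Note the worst-case perturbation lemma~\ref{lem:worstcase-sharpness} is not directly applicable, since it is a pessimistic $\ell_2$ bound that ignores the activation structure; all the real work is to bound the \emph{expected} sharpness by the bracketed multiplicative factor times $C_L\sum_x \norm{f_\vecw(x)}_F/m$.

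For the expected sharpness I would fix a training point $x$, invoke that the surrogate loss is $C_L$-Lipschitz in the network output to reduce to bounding $\Ep{\vecu}{\norm{f_{\vecw+\vecu}(x)-f_\vecw(x)}_F}$, and then propagate the perturbation layer by layer while separating two effects: (i) the output change produced by perturbing the weight matrices but \emph{freezing} the activation pattern $D_i$ of the unperturbed network, and (ii) the change produced by \emph{activation flips}, i.e.\ hidden units whose pre-activation crosses zero under the perturbation. For (i), an inductive (telescoping) argument over layers shows the relative error accumulates multiplicatively like $\prod_i(1+\norm{U_i}_2\cdot(\text{local scale}))$; condition $C1$ is precisely what converts a perturbation injected at an intermediate layer into a bound on the \emph{output} perturbation, since it guarantees $\norm{\prod_{i=a}^{b}D_iW_i}_F$ is not much smaller than the product of the norms of the two sub-blocks, so no $1/(\text{small})$ blow-up appears and the depth dependence stays multiplicative rather than exponential --- this is where $\mu$ enters the definition of $\gamma_i$. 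Taking expectations, Gaussian-matrix concentration gives $\Ep{}{\norm{U_i}_2}\lesssim \sigma_i(\sqrt{h_i}+\sqrt{h_{i-1}})$ and the rescaling that produces $\gamma_i=\sigma_i\sqrt{h_i}\sqrt{h_{i-1}}/(\mu^2\norm{W_i}_F)$, hence the $\prod_i(1+\gamma_i)-1$ term.

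For (ii) I would quantify both how many units flip and how much each flip costs. A unit in layer $k$ flips only if its unperturbed pre-activation lies within $C_\delta\, s_k$ of zero, where $s_k$ is the standard deviation of the (conditionally Gaussian) induced pre-activation perturbation and $C_\delta=2\sqrt{\ln(dh/\delta)}$ makes a Gaussian tail bound plus a union bound over the $\leq dh$ hidden units hold with probability at least $1-\delta$. Condition $C2$ then bounds the fraction of such borderline units by $C_2\,(C_\delta s_k)$, and condition $C3$ bounds the $L_2$ norm of a single hidden unit's weights relative to $\norm{D_iW_i}_F$ by $C_3$, so the flips in layer $i$ contribute at most a factor $1+\gamma_i C_\delta C_2$ to the error while each such contribution is further rescaled toward the output through the surviving upper layers by factors $1+\gamma_i C_2 C_3$; after the bookkeeping this compounds over layers into the second bracketed term $\prod_i(1+\gamma_i C_2 C_3)\big(\prod_i(1+\gamma_i C_\delta C_2)-1\big)$. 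Combining (i) and (ii), multiplying by $C_L$, averaging over the $m$ training points, and adding back the KL term yields the stated bound.

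The main obstacle is step (ii): cleanly separating the weight-perturbation and activation-flip contributions when they genuinely interact (a flip at layer $k$ feeds a perturbed input into layer $k+1$, whose own activation pattern may then also flip), and executing the conditional-Gaussian plus union-bound argument so that the borderline-unit count from $C2$, the per-unit magnitude bound from $C3$, and the layer-wise error amplification combine into clean $(1+\gamma_i\cdot\text{const})$ factors instead of something that degrades exponentially in $d$. Condition $C1$ is the linchpin keeping the ``attach an intermediate perturbation to the output'' step lossless up to the constant $\mu$ per layer; verifying that its use is quantitatively tight enough to preserve this multiplicative structure throughout the recursion is the delicate part.
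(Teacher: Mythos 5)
Your proposal follows essentially the same route as the paper: substitute the expected sharpness into the PAC-Bayes decomposition with a Gaussian prior (giving the $\sum_i\|W_i\|_F^2/\sigma_i^2$ KL term), then bound $\Ep{\vecu}{\|f_{\vecw+\vecu}(x)-f_\vecw(x)\|_F}$ by splitting into the frozen-activation (``linear'') error controlled via $C1$ (the paper's Lemma~\ref{lem:linear}) and the activation-flip error controlled via $C2$, $C3$ and a union bound over the $dh$ hidden units (Lemma~\ref{lem:recursion}), exactly as you describe. The only cosmetic difference is that the paper obtains the $\sqrt{h_i h_{i-1}}/\mu^2$ factor in $\gamma_i$ not from spectral-norm concentration of $U_i$ but from the Frobenius identity $\mathE\|A U_i B\|_F\leq\sigma_i\|A\|_F\|B\|_F$ combined with two applications of $C1$ (each contributing a $\sqrt{h}/\mu$), which is the bookkeeping you would need to land on the stated $\gamma_i$.
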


To understand the above generalization error bound,  consider choosing $\gamma_i =\frac{\sigma}{C_{\delta}d}$, and we get a bound that simplifies as follows:
\begin{align*}
 \Ep{\vecu \sim \mathcal{N}(0,\sigma)^n}{L(f_{\vecw+\vecu}} - \hatl(f_\vecw) & \leq O \left(  \sigma \left( 1 + (1+ \sigma C_2 C_3)  C_2 \right) C_L \frac{\sum_x\|f_\vecw(x)\|_F}{m} \right)  \\ & \quad \quad+  \sqrt{\frac{1}{m}\left(\frac{d^2}{ \mu^4}\sum_{i=1}^d  \frac{h_i h_{i-1} }{\sigma^2}+  \ln \frac{2m}{\delta} \right)}
\end{align*}

If we choose large $\sigma$, then the network will have higher expected sharpness but smaller 'norm' and vice versa. Now one can optimize over the choice of $\sigma$ to balance between the terms on the right hand side and get a better capacity bound. For any reasonable choice of $\sigma$, the generalization error above, depends only linearly on depth and does not have any exponential dependence, unlike other notions of generalization. Also the error gets worse with decreasing $\mu$ and increasing $C_2, C_3$ as the sharpness of the network increases which is in accordance with our discussion of the conditions above.

\begin{figure}[t]
\centering
\includegraphics[width=.32\textwidth]{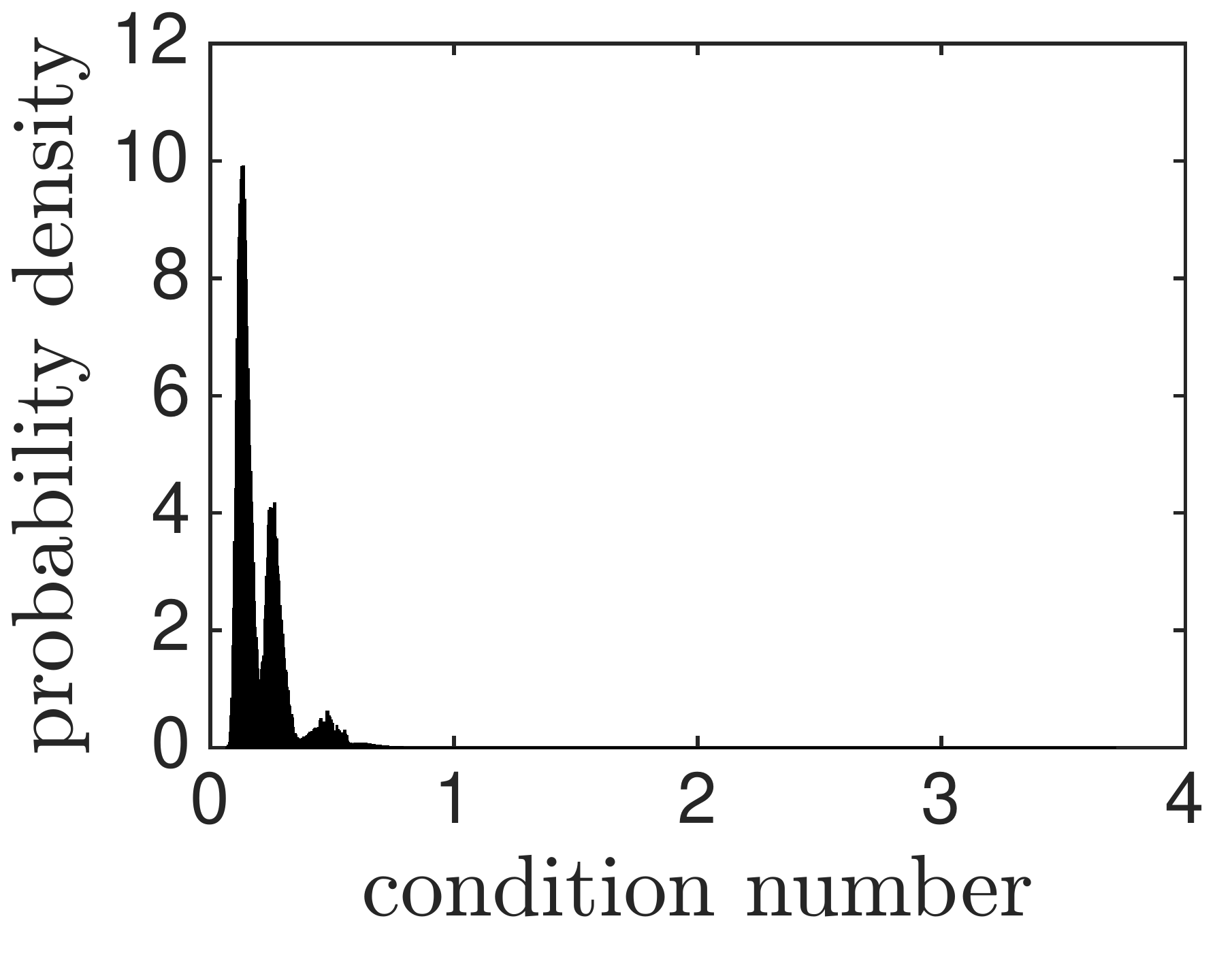}
\includegraphics[width=.32\textwidth]{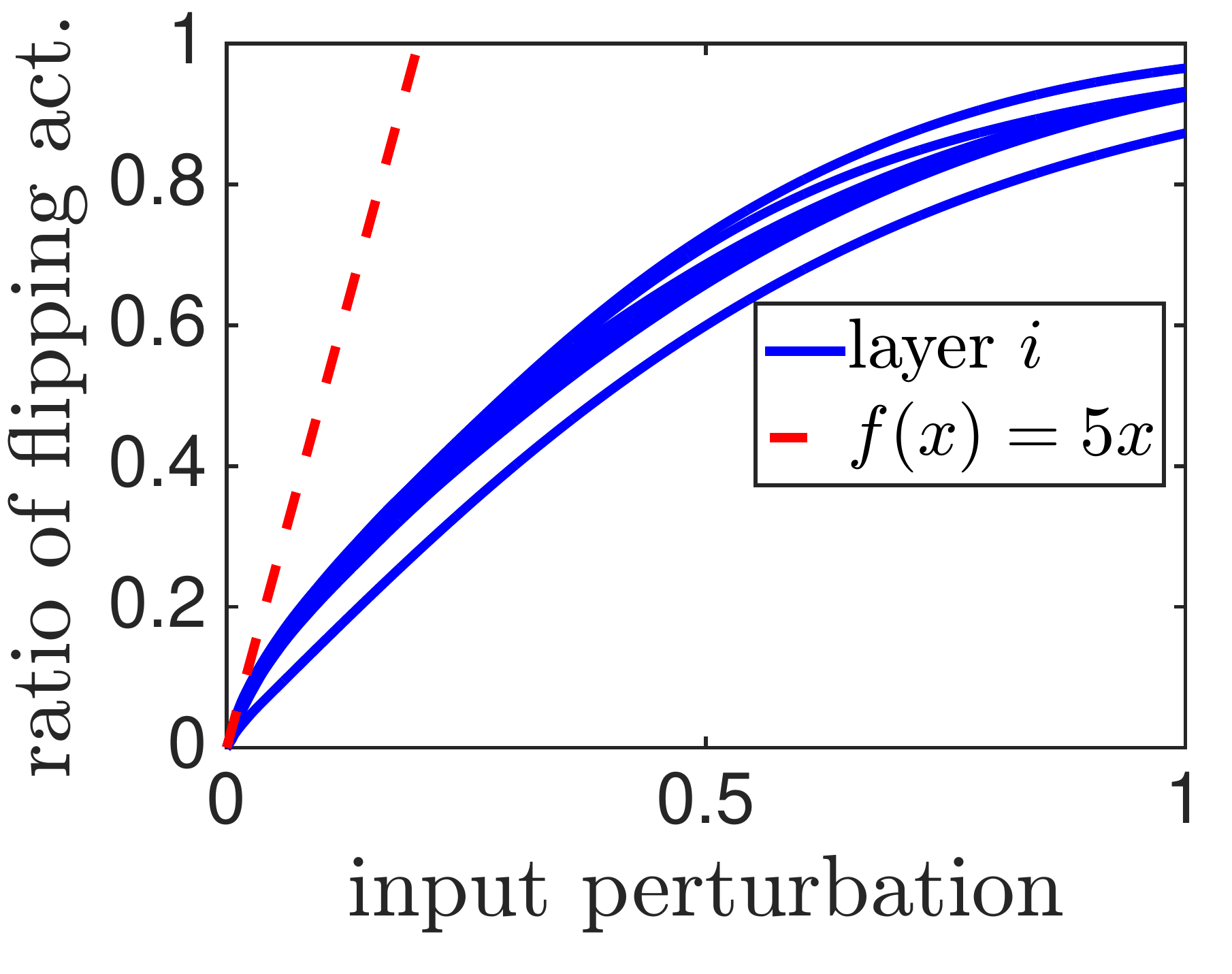}
\includegraphics[width=.32\textwidth]{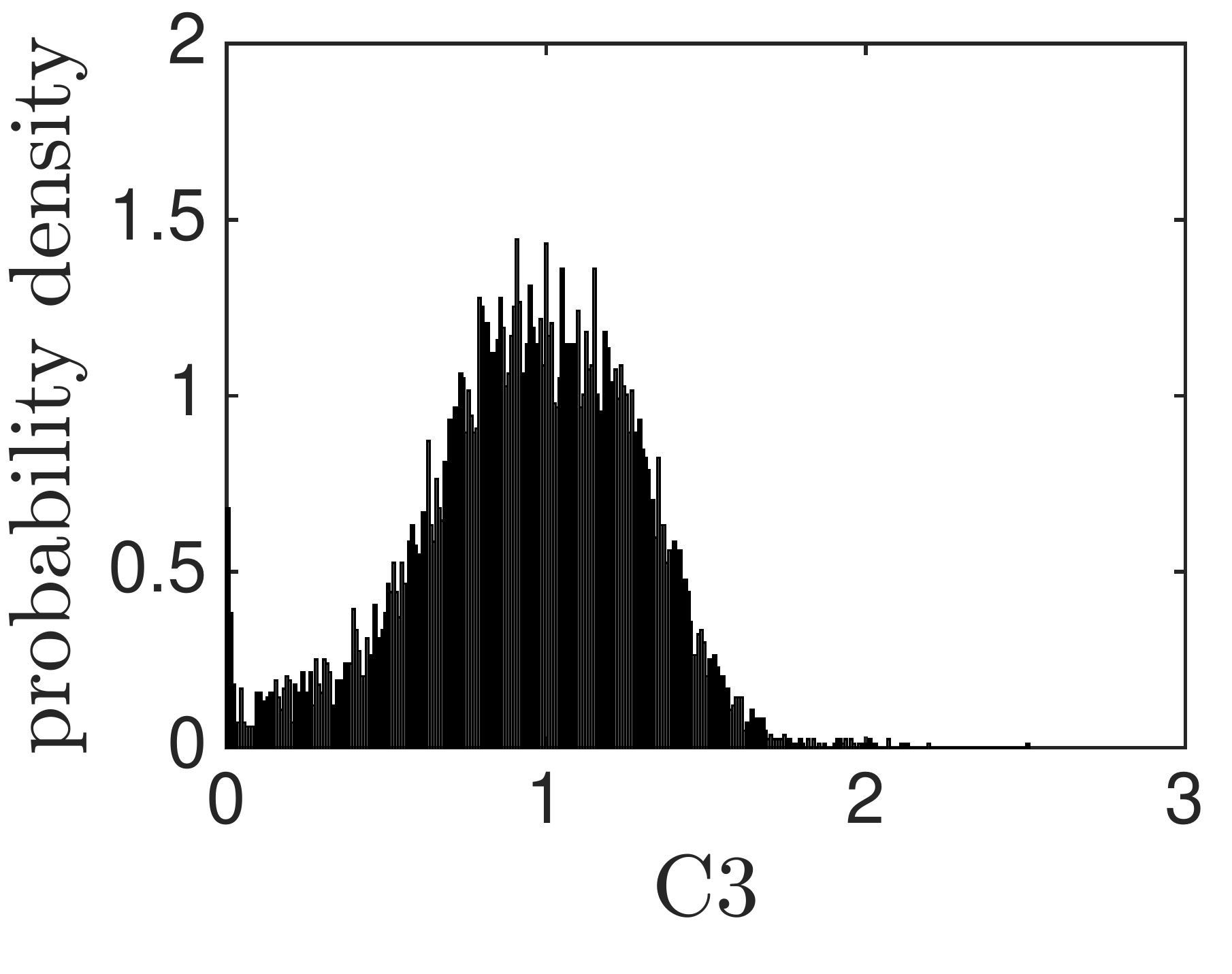}\\
\caption[\small Verifying the conditions of Theorem \ref{thm:relu} on a multi-layer perceptron]{\small Verifying the conditions of Theorem \ref{thm:relu} on a 10 layer perceptron with 1000 hidden units in each layer, i.e. more than 10,000,000 parameters on MNIST. We have numerically checked that all values are within the displayed range. \textbf{Left}: $C1$: condition number of the network, i.e. $\frac{1}{\mu}$. \textbf{Middle}:  $C2$: the ratio of activations that flip based on magnitude of perturbation. \textbf{Right}:  $C3:$ the ratio of norm of incoming weights to each hidden units with respect to average of the same quantity over hidden units in the layer.}
\label{fig:conditions_verify1}
\end{figure}

Additionally the conditions $C1-C3$ actually hold for networks trained in practice as we verify in Figure~\ref{fig:conditions_verify1}, and our experiments suggest that, $\mu \geq 1/4, C2 \leq 5$ and $C3 \leq 3$. More details on the verification and comparing the conditions on learned network with those of random weights, are presented in Section \ref{sec:sharpness-proofs}.

\paragraph{Proof of Theorem~\ref{thm:relu}}
We bound the expectation as follows:
\begin{align}
&\mathE\abs{ \hatl(f_{\vecw+\vecu}(x)) -\hatl(f_{\vecw}(x))} \nonumber \\ 
&\quad \quad  \leq C_{L} \mathE\| f_{\vecw+\vecu}(x) -f_\vecw(x)\|_F \nonumber \\
&\quad \quad \stackrel{(i)}{=}  C_{L}\mathE\| (W+\vecu)_d \left(\Pi_{i=1}^{d-1} \hatD_{i} (W+\vecu)_i \right)*x - W_d \left(\Pi_{i=1}^{d-1} D_{i}W_i \right)*x \|_F  \nonumber \\
& \quad \quad \leq  C_{L} \mathE\| (W+\vecu)_d \left(\Pi_{i=1}^{d-1} D_{i} (W+\vecu)_i \right)*x - W_d \left(\Pi_{i=1}^{d-1} D_{i}W_i \right)*x \|_F \nonumber \\ & \quad \quad \quad \quad + C_{L} \mathE\| (W+\vecu)_d \left(\Pi_{i=1}^{d-1} \hatD_{i} (W+\vecu)_i \right)*x - (W+\vecu)_d \left(\Pi_{i=1}^{d-1} D_{i} (W+\vecu)_i \right)*x \|_F \nonumber \\
&\quad \quad \leq  C_{L} \mathE\| (W+\vecu)_d \left(\Pi_{i=1}^{d-1} D_{i} (W+\vecu)_i \right)*x - W_d \left(\Pi_{i=1}^{d-1} D_{i}W_i \right)*x \|_F +C_{L} \mathE\|\err_d\|_F,  \label{eq:thm_relu1}
\end{align}
where $\err_d =  \| (W+\vecu)_d \left(\Pi_{i=1}^{d-1} \hatD_{i} (W+\vecu)_i \right)*x - (W+\vecu)_d \left(\Pi_{i=1}^{d-1} D_{i} (W+\vecu)_i \right)*x \|_F$. $(i)$ $\hatD_i$ is the diagonal matrix with 0's and 1's corresponding to the activation pattern of the perturbed network $f_{\vecw+\vecu}(x)$.

The first term in the equation~\eqref{eq:thm_relu1} corresponds to error due to perturbation of a network with unchanged activations (linear network). Intuitively this is small when any subset of successive layers of the network do no interact weakly with each other (not orthogonal to each other). Condition $C1$ captures this intuition and we bound this error in Lemma~\ref{eq:lem_linear1}. 

\begin{lemma}\label{lem:linear}
Let $U_i$ be a random $h_i \times h_{i-1}$ matrix with each entry distributed according to $\mathcal{N}(0,\sigma_i^2)$. Then, under the condition $C1$,
\begin{multline*}
\mathE\| (W+\vecu)_d \left(\Pi_{i=1}^{d-1} D_{i}(W+\vecu)_i \right)*x - W_d \left(\Pi_{i=1}^{d-1} D_{i}W_i \right)*x\|_F \\ \leq  \left(\Pi_{i=1}^d \left(1+ \frac{\sigma_i \sqrt{h_ih_{i-1}}}{\mu^2 \|D_i W_i\|_F} \right) -1 \right)   \|f_\vecw(x)\|_F. 
\end{multline*}
\end{lemma}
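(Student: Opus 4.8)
The plan is to freeze the activation patterns $D_i$ (which is exactly the regime of this lemma) and exploit that the resulting map $\vecw\mapsto W_d\big(\Pi_{i=1}^{d-1}D_iW_i\big)x$ is then multilinear in the layer matrices. Writing $W_i^{(S)}=U_i$ when $i\in S$ and $W_i^{(S)}=W_i$ otherwise, expanding the perturbed product gives
\begin{equation*}
(W+\vecu)_d\Big(\Pi_{i=1}^{d-1}D_i(W+\vecu)_i\Big)x-W_d\Big(\Pi_{i=1}^{d-1}D_iW_i\Big)x=\sum_{\emptyset\neq S\subseteq\{1,\dots,d\}}P_S\,x,
\end{equation*}
where $P_S=D_dW_d^{(S)}\cdots D_1W_1^{(S)}$ is the corresponding ordered product (with $D_d=I$, and $x=W_0$, $D_0=I$ as in $C1$). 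By the triangle inequality it suffices to bound $\mathE\norm{P_S x}_F$ for each nonempty $S$ and sum; since the $U_i$ are independent the $S$-term expectations factorize over $i\in S$.

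Fix $S=\{i_1<\dots<i_s\}$; I would read $P_S x$ from the input side as an alternating product of \emph{runs} of true-weight layers (maximal blocks of consecutive indices not in $S$) separated by the single perturbation matrices $D_{i_t}U_{i_t}$. After the first perturbation factor acts, every subsequent factor is applied to a vector, so repeated use of $\norm{Av}_2\le\norm{A}_2\norm{v}_2$ on the runs, together with the Gaussian bound $\mathE[\norm{U_{i_t}v}_2\mid v]\le\sigma_{i_t}\sqrt{h_{i_t}}\norm{v}_2$ (legitimate because $v$ depends only on $U_{i_1},\dots,U_{i_{t-1}}$), lets me peel off, in expectation, one factor $\sigma_{i_t}\sqrt{h_{i_t}}$ per $t$ and one Frobenius norm per run. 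This bounds $\mathE\norm{P_S x}_F$ by $\big(\prod_t\sigma_{i_t}\sqrt{h_{i_t}}\big)$ times the product of the Frobenius norms of the runs and of the partial forward pass $\big\|(\Pi_{i=i_1-1}^{1}D_iW_i)x\big\|_2$.

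Condition $C1$ enters next, to turn this product of run-norms back into $\norm{f_\vecw(x)}_F$. Viewing $f_\vecw(x)$ as $\Pi_{i=0}^dD_iW_i$, I would apply $C1$ at the $\le 2s$ split points $\{i_t,\,i_t-1:1\le t\le s\}$ one at a time, each split costing only a factor $\sqrt{h_c}/\mu$; this exhibits $\norm{f_\vecw(x)}_F$ as at least $\big(\prod_{\text{splits }c}\mu/\sqrt{h_c}\big)$ times the \emph{same} product of run Frobenius norms multiplied by $\prod_t\norm{D_{i_t}W_{i_t}}_F$. Rearranging and combining with the previous step gives $\mathE\norm{P_S x}_F\le\big(\prod_{t}\gamma_{i_t}\big)\norm{f_\vecw(x)}_F$ with $\gamma_i$ as defined (here one absorbs absolute constants, uses $\sqrt{h_i}+\sqrt{h_{i-1}}\le2\sqrt{h_ih_{i-1}}$ when the widths are $\ge1$, and assumes $\mu\le1$ — or replaces $\mu$ by $\min\{\mu,1\}$ — so that $\mu^{-\#\mathrm{splits}}\le\mu^{-2s}$). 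Summing over nonempty $S$,
\begin{equation*}
\sum_{\emptyset\neq S}\mathE\norm{P_S x}_F\le\sum_{\emptyset\neq S}\prod_{i\in S}\gamma_i\cdot\norm{f_\vecw(x)}_F=\Big(\prod_{i=1}^d(1+\gamma_i)-1\Big)\norm{f_\vecw(x)}_F,
\end{equation*}
which is the claim.

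The main obstacle is the $C1$ bookkeeping in the third step: matching up exactly which intermediate Frobenius norms appear on the two sides, correctly handling empty runs when two consecutive layers both lie in $S$ (so a split point is shared and no spurious factor of $\sqrt{h}/\mu$ is charged), and keeping the input-containing run — whose ``Frobenius norm'' is the Euclidean norm of a partial forward pass rather than a matrix norm — aligned throughout the $C1$ chain. Once that accounting is pinned down, the rest is routine submultiplicativity of norms plus the standard first-moment bound for Gaussian matrices.
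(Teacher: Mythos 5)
Your top-level strategy coincides with the paper's: expand the perturbed product multilinearly over the subset $S$ of layers carrying a $U_i$, bound $\mathE\norm{P_S x}_F$ by $\prod_{i\in S}\gamma_i\cdot\norm{f_\vecw(x)}_F$ term by term, and sum the binomial series to get $\prod_{i=1}^d(1+\gamma_i)-1$. The gap is in the middle step: your way of taking the Gaussian expectation double-counts the width $h_{i_t}$. You peel off each perturbation with the vector bound $\mathE\left[\norm{U_{i_t}v}_2\mid v\right]\le \sigma_{i_t}\sqrt{h_{i_t}}\norm{v}_2$, paying $\sqrt{h_{i_t}}$ there, and then you pay $\sqrt{h_{i_t}}/\mu$ \emph{again} when you apply $C1$ at the split point $c=i_t$. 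Both splits, at $i_t$ and at $i_t-1$, are unavoidable, because you must excise the factor $\norm{D_{i_t}W_{i_t}}_F$ that is present in the product defining $f_\vecw(x)$ but absent from $P_S$. The net per-layer factor your argument produces is
\begin{equation*}
\sigma_{i_t}\sqrt{h_{i_t}}\cdot\frac{\sqrt{h_{i_t}h_{i_t-1}}}{\mu^2\,\norm{D_{i_t}W_{i_t}}_F}=\gamma_{i_t}\sqrt{h_{i_t}},
\end{equation*}
so you prove the lemma with $\gamma_i$ replaced by $\gamma_i\sqrt{h_i}$. That slack is a genuine width factor (of order $30$ for the $h=1000$ networks in the experiments), not an absolute constant that can be absorbed; the inequality $\sqrt{h_i}+\sqrt{h_{i-1}}\le 2\sqrt{h_ih_{i-1}}$ you invoke never arises in this computation and does not help.

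The fix is to integrate out $U_{i_t}$ \emph{jointly} with the run sitting above it, rather than first bounding that run by its operator norm: for a Gaussian matrix $\vecu$ with i.i.d.\ $\mathcal{N}(0,\sigma^2)$ entries one has $\mathE\norm{A\vecu B}_F\le \sigma\norm{A}_F\norm{B}_F$ (the paper's Lemma~\ref{lem:gauss_product}), in which the dimensions of $\vecu$ never appear explicitly. With this, the only source of $\sqrt{h_{i_t}h_{i_t-1}}$ is the pair of $C1$ splits, and each perturbed layer contributes exactly $\gamma_{i_t}$; this is precisely how the paper's base case and induction step proceed. Your remaining bookkeeping — the run decomposition, shared split points when consecutive indices lie in $S$, and the input-side run carrying $\norm{(\Pi_{i=1}^{i_1-1}D_iW_i)x}_2$ — is sound and matches what the paper does implicitly by peeling one noisy layer at a time.
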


The second term in the equation~\eqref{eq:thm_relu1} captures the perturbation error due to change in activations. If a tiny perturbation can cause exponentially many changes in number of active nodes, then that network will have huge sharpness. Condition $C2$ and $C3$ essentially characterize the behavior of sensitivity of activation patterns to perturbations, leading to a bound on this term in Lemma~\ref{lem:recursion}.

\begin{lemma} \label{lem:recursion}
Let $U_i$ be a random $h_i \times h_{i-1}$ matrix with each entry distributed according to $\mathcal{N}(0,\sigma_i^2)$. Then, under the conditions $C1$, $C2$ and $C3$, with probability $\geq 1-\delta$, for all $ 1 \leq k \leq d$, 
$$\|\hatD_{k} -D_{k}\|_1 \leq  O \left( C_2 h_k  C_{\delta} \sigma_k  \|f^{k-1}_{\vecw}\|_F \right) $$ and 
$$\E\| \err_k\|_F \leq  O \left( \Pi_{i=1}^k \left( 1+\gamma_iC_2 C_3\right)\left(\Pi_{i=1}^k (1+ \gamma_i C_{\delta} C_2) - 1\right)\|f^k_\vecw\|_F \right).$$
where $\gamma_i = \frac{\sigma_i \sqrt{h_i} \sqrt{h_{i-1}}}{\mu^2 \|D_iW_i\|_F}$ and $C_\delta=2\sqrt{\ln(dh/\delta)}$.
\end{lemma}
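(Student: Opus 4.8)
The plan is to prove the two displayed bounds together, by induction on the layer index $k$, propagating at each layer a handle on the discrepancy between the perturbed network $f_{\vecw+\vecu}$ and the ``frozen-activation'' surrogate that uses the perturbed weights $(W+\vecu)_i$ but the \emph{clean} activation masks $D_i$; here $\err_k$ is exactly the layer-$k$ value of that discrepancy and $\norm{\hatD_k-D_k}_1$ counts how many units of layer $k$ change their on/off state between the two networks. Throughout, the probability-$1-\delta$ clause refers to a single good event on which every relevant Gaussian inner product $u^\top v$ (a coordinate of a pre-activation perturbation) and every Gaussian operator norm $\norm{U_i}_2$ stays within a $C_\delta=2\sqrt{\ln(dh/\delta)}$ factor of its typical scale; this is secured by standard sub-Gaussian and $\chi^2$ tail bounds together with a union bound over the $\le dh$ hidden units and $d$ layers, which is precisely what produces the $\ln(dh/\delta)$.

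\textbf{Flip count.} Fix $k$ and condition on the inductive bounds for layers $<k$. A unit $i$ of layer $k$ has $\hatD_k[i,i]\ne D_k[i,i]$ only if the perturbation moves its pre-activation across $0$, i.e.\ if the magnitude of the change in the $i$-th layer-$k$ pre-activation exceeds the clean margin $\bigl|(W_k\prod_{j<k}D_jW_j x)[i]\bigr|$. Splitting that pre-activation change as $U_k\cdot(\text{clean layer-}(k{-}1)\text{ output})+W_k\cdot(\text{layer-}(k{-}1)\text{ discrepancy})+U_k\cdot(\text{layer-}(k{-}1)\text{ discrepancy})$, the first term is a Gaussian vector with coordinates $\mathcal N(0,\sigma_k^2\norm{f^{k-1}_\vecw}_F^2)$, and the remaining two are dominated — via Lemma~\ref{lem:linear} and the inductive $\err_{k-1}$ bound — by the first. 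On the good event every coordinate of the pre-activation change is therefore $\lesssim C_\delta\sigma_k\norm{f^{k-1}_\vecw}_F$, and condition $C2$ (at most a $C_2\delta$ fraction of units have clean margin below $\delta$), applied with this value of $\delta$, gives $\norm{\hatD_k-D_k}_1\le O\!\bigl(C_2 h_k C_\delta\sigma_k\norm{f^{k-1}_\vecw}_F\bigr)$.

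\textbf{Error recursion.} Peel the top layer off $\err_k$: it equals $(W+\vecu)_k$ applied to $\bigl(\prod_{i<k}\hatD_i(W+\vecu)_i-\prod_{i<k}D_i(W+\vecu)_i\bigr)x$, which is $\err_{k-1}$ amplified by $\norm{(W+\vecu)_k}$, plus the contribution of the layer-$k$ flip $\hatD_k-D_k$ acting on the perturbed layer-$(k-1)$ output. For the carried term, bound $\norm{(W+\vecu)_k}\le\norm{W_k}+\norm{U_k}$ and use condition $C1$ so that $\norm{D_kW_k\,(\cdot)}_F$ composes multiplicatively with the lower block (no destructive or constructive cancellation) together with the Gaussian estimate $\norm{U_k}_F\lesssim\sigma_k\sqrt{h_kh_{k-1}}$; this converts the amplification into the factor $(1+\gamma_k)$, with $\gamma_k=\sigma_k\sqrt{h_kh_{k-1}}/(\mu^2\norm{D_kW_k}_F)$. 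For the new term, bound it by $\norm{\hatD_k-D_k}_1$ times the largest per-unit magnitude a flipped unit can inject, which condition $C3$ (no spiky hidden unit: $\norm{W_k}_{2,\infty}^2 h_k\le C_3^2\norm{D_kW_k}_F^2$) controls — producing the $(1+\gamma_kC_2C_3)$ factor multiplying the carried error plus the additive $\gamma_kC_\delta C_2$-type term inherited from the flip bound. Unrolling this two-component recursion for $i=1,\dots,k$ and taking expectations (handling the small-probability complement of the good event by the trivial bound) yields exactly $\mathbb{E}\norm{\err_k}_F\le O\!\bigl(\prod_{i=1}^k(1+\gamma_iC_2C_3)\,(\prod_{i=1}^k(1+\gamma_iC_\delta C_2)-1)\,\norm{f^k_\vecw}_F\bigr)$; the base case is immediate since the frozen and perturbed activations below the first layer coincide.

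\textbf{Main obstacle.} The delicate point is getting the recursion to close \emph{multiplicatively} — errors compounding as $\prod(1+\gamma_i\cdot\dots)$ rather than being amplified by raw operator norms at every layer — since a single activation flip deep in the network could a priori be magnified through all layers above it. This is exactly why all three conditions are needed simultaneously: $C1$ lets norms of sub-blocks multiply cleanly instead of potentially cancelling, $C3$ caps how much any flipped unit can inject, and $C2$ keeps the number of flips per layer proportional to, not exponential in, the perturbation magnitude. The remaining care is purely probabilistic bookkeeping — which Gaussian concentration events get union-bounded over which index sets — so that a single $1-\delta$ and the clean constant $C_\delta=2\sqrt{\ln(dh/\delta)}$ suffice for all $k$ at once.
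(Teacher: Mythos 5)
Your proposal is correct and follows essentially the same route as the paper's proof: induction on the layer index, the same three-way split of the pre-activation perturbation into $U_k$ acting on the clean output plus $W_k$ and $U_k$ acting on the lower-layer discrepancy, condition $C2$ applied at threshold $C_\delta\sigma_k\norm{f^{k-1}_{\vecw}}_F$ to count flips, condition $C3$ to cap what a flipped unit injects, and the linear-perturbation lemma (via $C1$) to control the frozen-activation part, with higher-order terms absorbed as $O(\sigma^2)$. No substantive divergence from the paper's argument.
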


Hence, from Lemma~\ref{eq:lem_linear1} and Lemma~\ref{lem:recursion} we get, 
\begin{small}
\begin{align*}
&\mathE\abs{ \hatl(f_{\vecw+\vecu}(x)) -\hatl(f_{\vecw}(x))} \nonumber \\ 
&\leq \left[\Pi_{i=1}^d \left(1+ \gamma_i \right) -1  + \Pi_{i=1}^d \left( 1+\gamma_i C_2 C_3 \right)\left(\Pi_{i=1}^d (1+ \gamma_i C_{\delta} C_2  )- 1\right) \right] C_{L} \|f_\vecw(x)\|_F.
\end{align*}

Here $\gamma_i =\frac{\sigma_i\sqrt{h_i}\sqrt{h_{i-1}} }{\mu^2 \|D_iW_i\|_F } $. Substituting the above bound on expected sharpness in the PAC-Bayes result (equation~\eqref{eq:pacbayes}), gives the result.
\end{small}

\section{Supporting Results}

\subsection{Supporting Lemma}
\begin{lemma}\label{lem:gauss_product}
Let $A$ ,$B$ be $n_1 \times n_2$ and $n_3 \times n_4$ matrices and $\vecu$ be a $n_2\times n_3$ entrywise random Gaussian matrix with $\vecu_{ij} \sim \mathcal{N}(0,\sigma)$. Then,
$$\mathE\left[\| A*\vecu*B\|_F \right] \leq \sigma \|A\|_F  \|B\|_F .$$
\end{lemma}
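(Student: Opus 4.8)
The plan is to evaluate $\E{\norm{A\mathbf{u}B}_F^2}$ exactly and then pass to the first moment by Jensen's inequality; no concentration argument is needed, only a second-moment computation.

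First I would rewrite the squared Frobenius norm as a trace. Using $\norm{X}_F^2=\mathrm{tr}(X^\top X)$ and the cyclic property of the trace,
\[
\norm{A\mathbf{u}B}_F^2=\mathrm{tr}\!\left(B^\top\mathbf{u}^\top A^\top A\,\mathbf{u}\,B\right)=\mathrm{tr}\!\left(M\,\mathbf{u}\,N\,\mathbf{u}^\top\right),
\]
where $M\defeq A^\top A$ and $N\defeq BB^\top$ are symmetric positive semidefinite with $\mathrm{tr}(M)=\norm{A}_F^2$ and $\mathrm{tr}(N)=\norm{B}_F^2$. Expanding this trace entrywise yields $\mathrm{tr}(M\mathbf{u}N\mathbf{u}^\top)=\sum_{a,b,c,d}M_{ab}N_{cd}\,u_{bc}\,u_{ad}$.

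Next I would take the expectation over the entries of $\mathbf{u}$, which are independent, mean zero, and satisfy $\E{u_{ij}^2}=\sigma^2$. Hence $\E{u_{bc}u_{ad}}$ equals $\sigma^2$ when $a=b$ and $c=d$ and is $0$ otherwise, so only the ``diagonal'' terms survive:
\[
\E{\norm{A\mathbf{u}B}_F^2}=\sigma^2\sum_{a,c}M_{aa}N_{cc}=\sigma^2\,\mathrm{tr}(M)\,\mathrm{tr}(N)=\sigma^2\,\norm{A}_F^2\,\norm{B}_F^2 .
\]
Finally, by Jensen's inequality applied to the concave square-root function, $\E{\norm{A\mathbf{u}B}_F}\le\big(\E{\norm{A\mathbf{u}B}_F^2}\big)^{1/2}=\sigma\,\norm{A}_F\,\norm{B}_F$, which is the claim.

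There is essentially no hard step here: the only point requiring care is the index bookkeeping in the trace expansion, together with the observation that $\norm{A\mathbf{u}B}_F^2$ reorganizes into a trace in which $\mathbf{u}$ occurs exactly twice, so the Gaussian second moment $\sigma^2$ can be read off term by term; everything else is a one-line Jensen bound. (I am reading $\mathcal{N}(0,\sigma)$ as having standard deviation $\sigma$, i.e.\ $\E{u_{ij}^2}=\sigma^2$, which is the convention that makes the stated bound correct; I would match whatever variance convention is used elsewhere in the section, adjusting the constant accordingly.)
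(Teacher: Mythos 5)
Your proof is correct and follows essentially the same route as the paper: bound the first moment by the second via Jensen, then compute $\mathbb{E}\|A\mathbf{u}B\|_F^2$ exactly using the independence and zero mean of the Gaussian entries, so that only the diagonal second-moment terms survive and give $\sigma^2\|A\|_F^2\|B\|_F^2$. The only difference is organizational — you package the expansion as $\mathrm{tr}(A^\top A\,\mathbf{u}\,BB^\top\mathbf{u}^\top)$ while the paper expands the entries of $A\mathbf{u}B$ directly — and your reading of $\mathcal{N}(0,\sigma)$ as $\mathbb{E}[u_{ij}^2]=\sigma^2$ matches the convention used in the paper's own computation.
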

\begin{proof}
By Jensen's inequality, 
\begin{align*}
\mathE\left[\| A*\vecu*B\|_F \right]^2 &\leq \mathE\left[\| A*\vecu*B\|_F^2 \right] \\
&= \mathE\left[\left(\sum_{ij} \sum_{kl} A_{ik} \vecu_{kl} B_{lj} \right)^2 \right]\\
&= \sum_{ij} \sum_{kl} A_{ik}^2 \mathE\left[\vecu_{kl}^2\right] B_{lj}^2 \\
&= \sigma^2 \|A\|_F^2 \| B\|_F^2.
\end{align*}

\end{proof}


\subsection{Conditions in Theorem ~\ref{thm:relu}}
In this section, we compare the conditions in Theorem ~\ref{thm:relu} of a learned network with that of its random initialization. We trained a 10-layer feedforward network with 1000 hidden units
in each layer on MNIST dataset. Figures~\ref{fig:conditions_verify2}, \ref{fig:conditions_verify3} and \ref{fig:conditions_verify4} compare condition $C1$, $C2$ and $C3$ on learned weights to that of random initialization respectively. Interestingly,
we observe that the network with learned weights is very similar to its random initialization in terms of these conditions.
\begin{figure}[t]
\centering
\includegraphics[width=.32\textwidth]{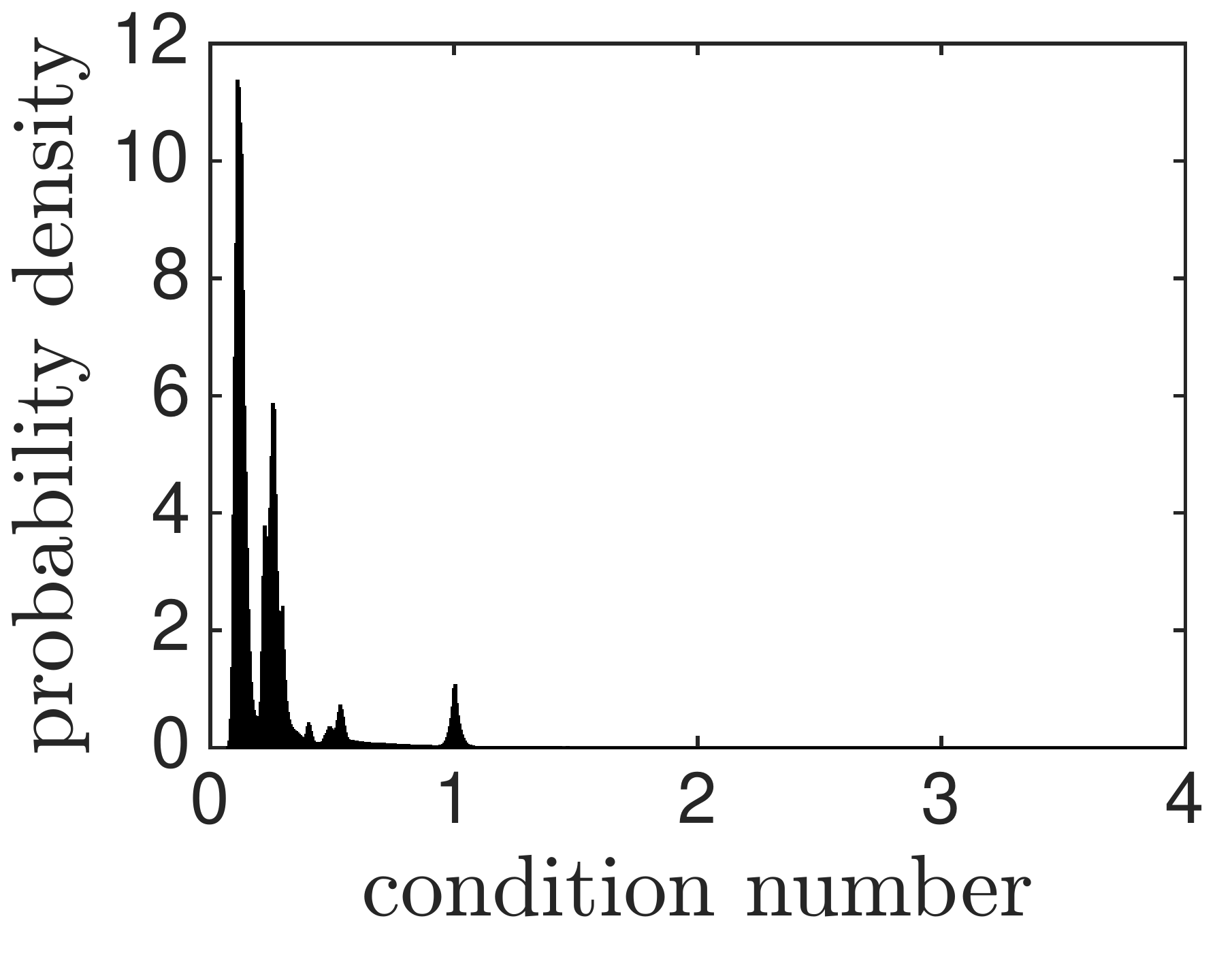}
\includegraphics[width=.32\textwidth]{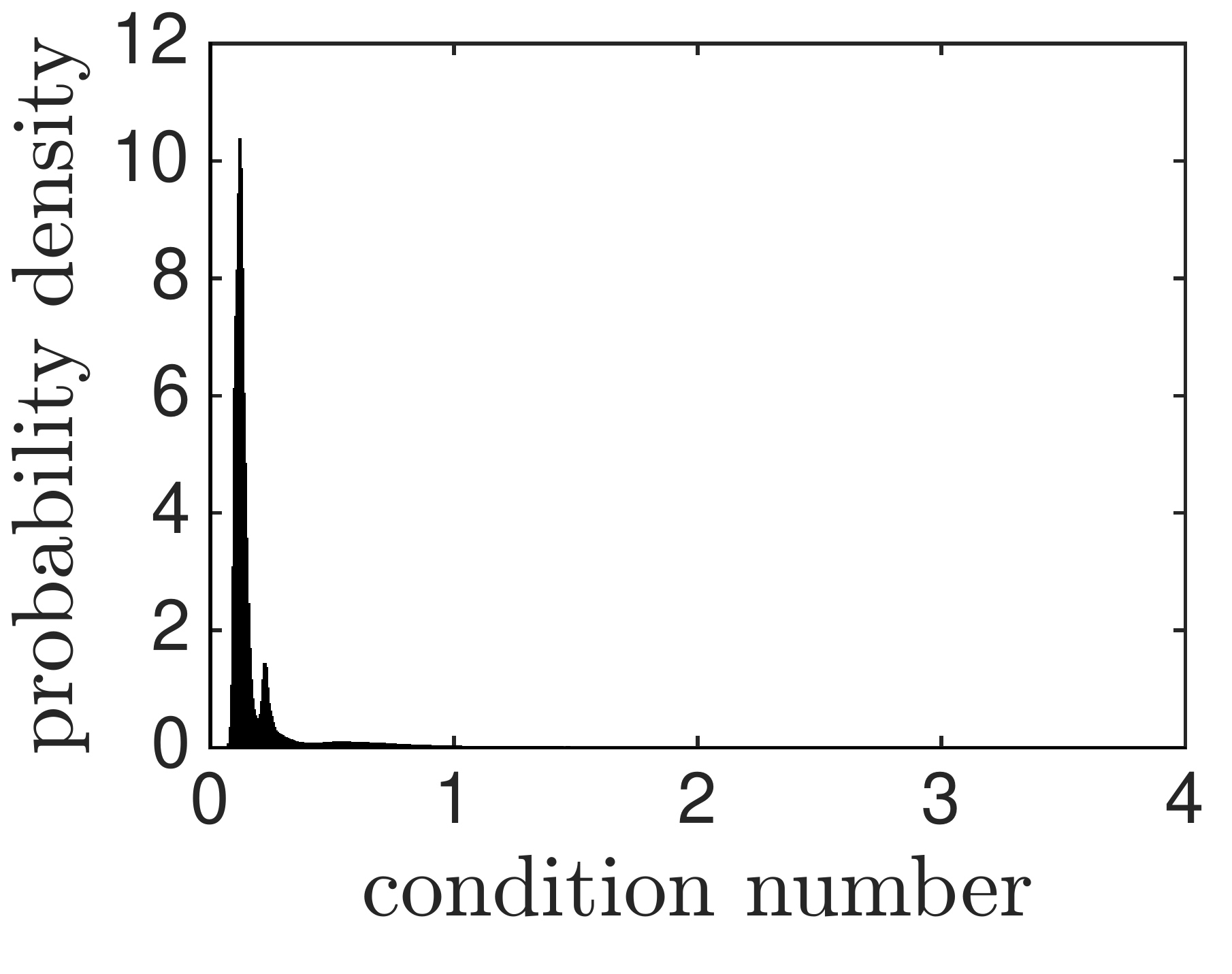}
\includegraphics[width=.32\textwidth]{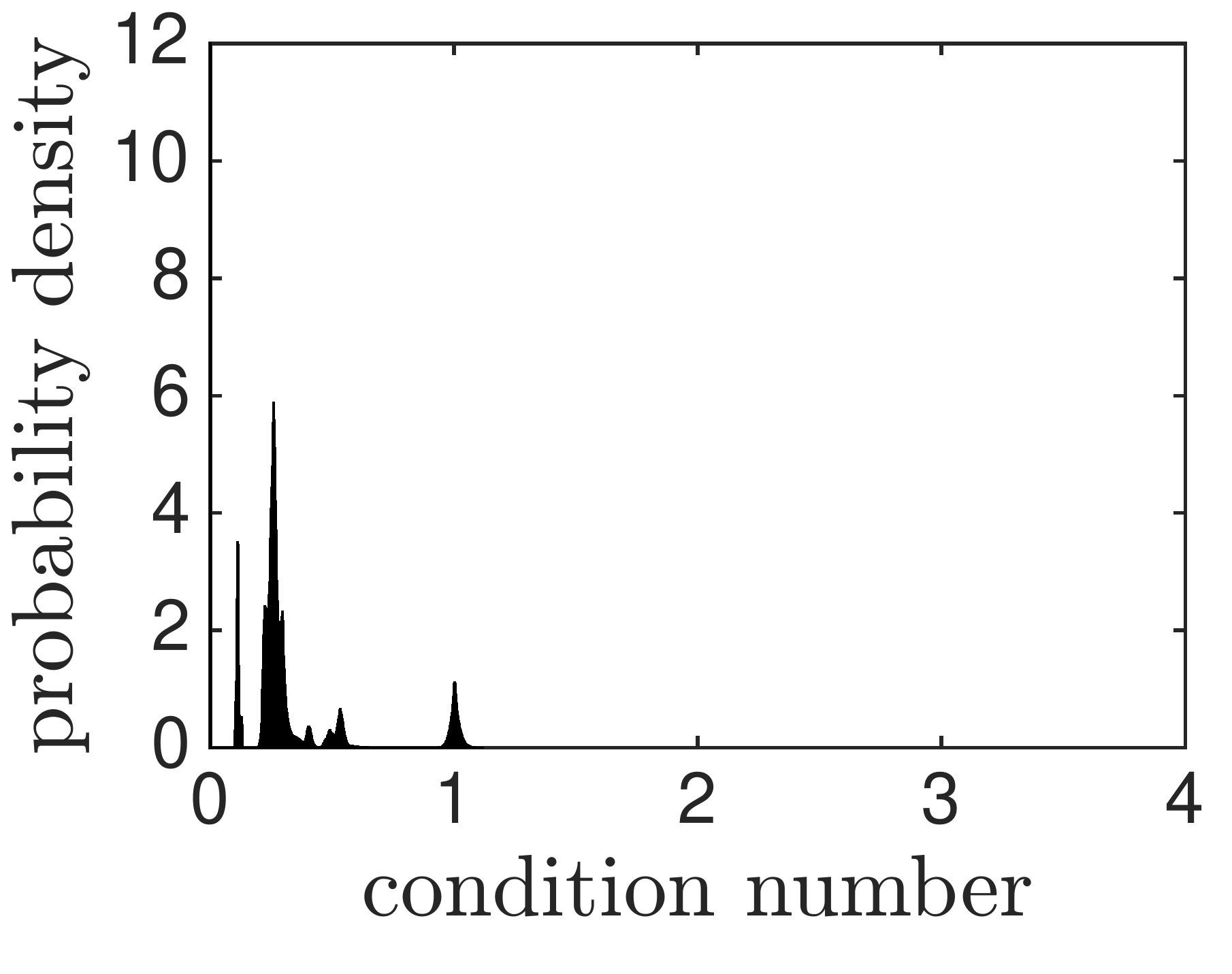}\\
\includegraphics[width=.32\textwidth]{hist.pdf}
\includegraphics[width=.32\textwidth]{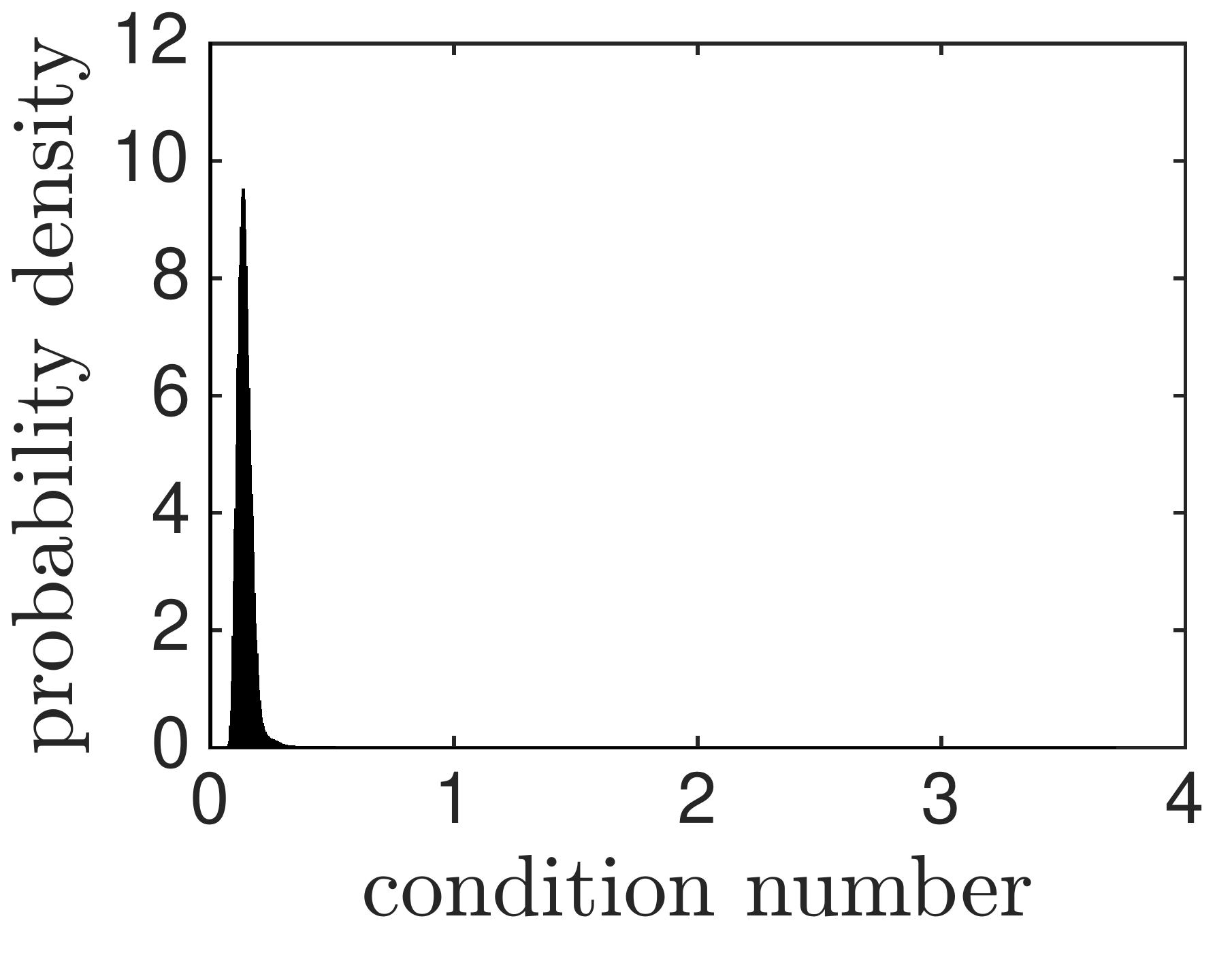}
\includegraphics[width=.32\textwidth]{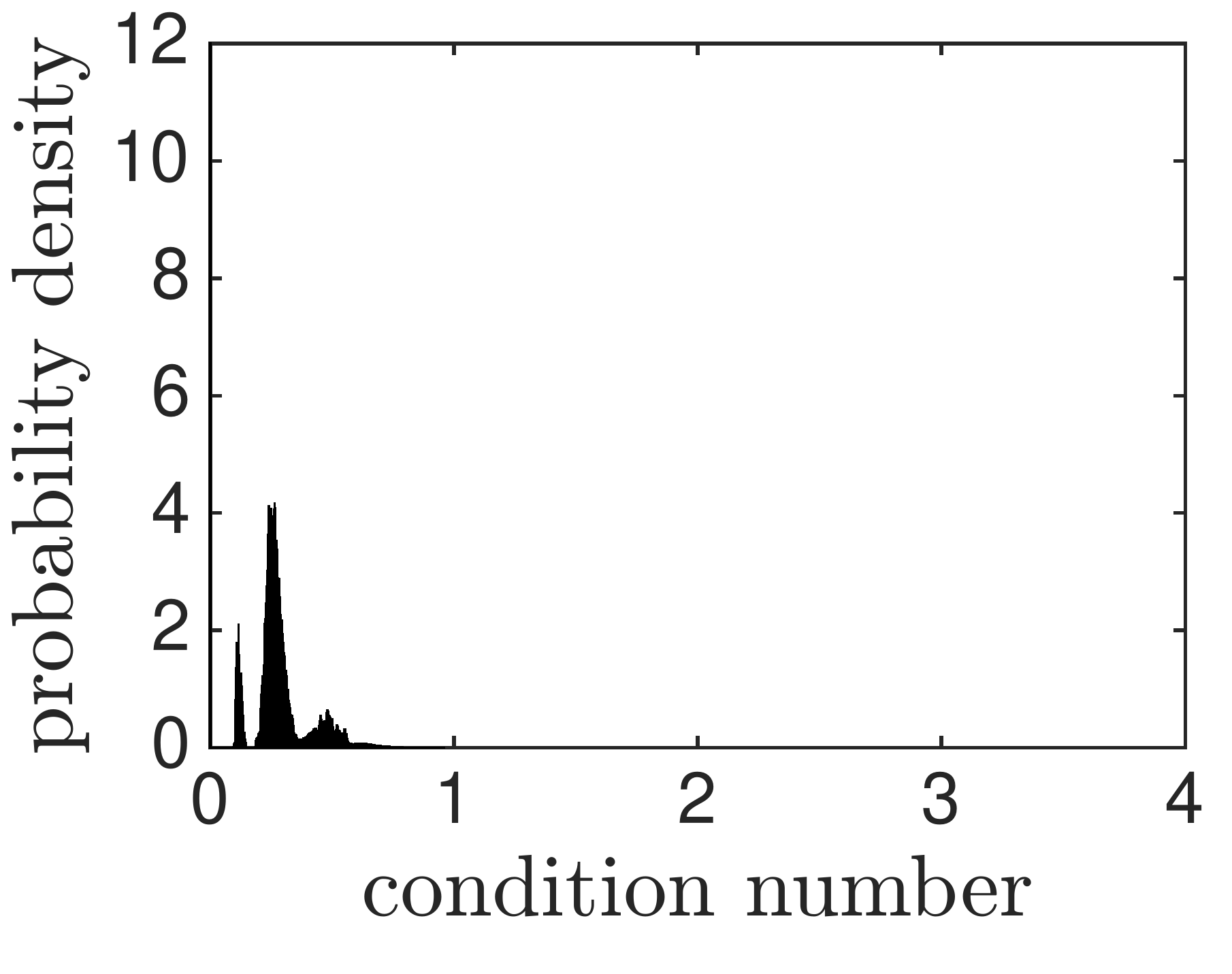}\\
\caption[\small Condition $C1$ in Theorem~\ref{thm:relu} on a learned network.]{\small Condition $C1$: condition number $\frac{1}{\mu}$ of the network and its decomposition to two cases for random initialization and learned weights. \textbf{Top}: random initialization \textbf{Bottom}: learned weights. \textbf{Left}: distribution of all combinations of $a\leq c\leq b-1$. \textbf{Middle}: when $a<c<b-1$. \textbf{Right}: when $c=a$ or $c=b-1$.}
\label{fig:conditions_verify2}
\end{figure}

\begin{figure}[t]
\centering
\includegraphics[width=.32\textwidth]{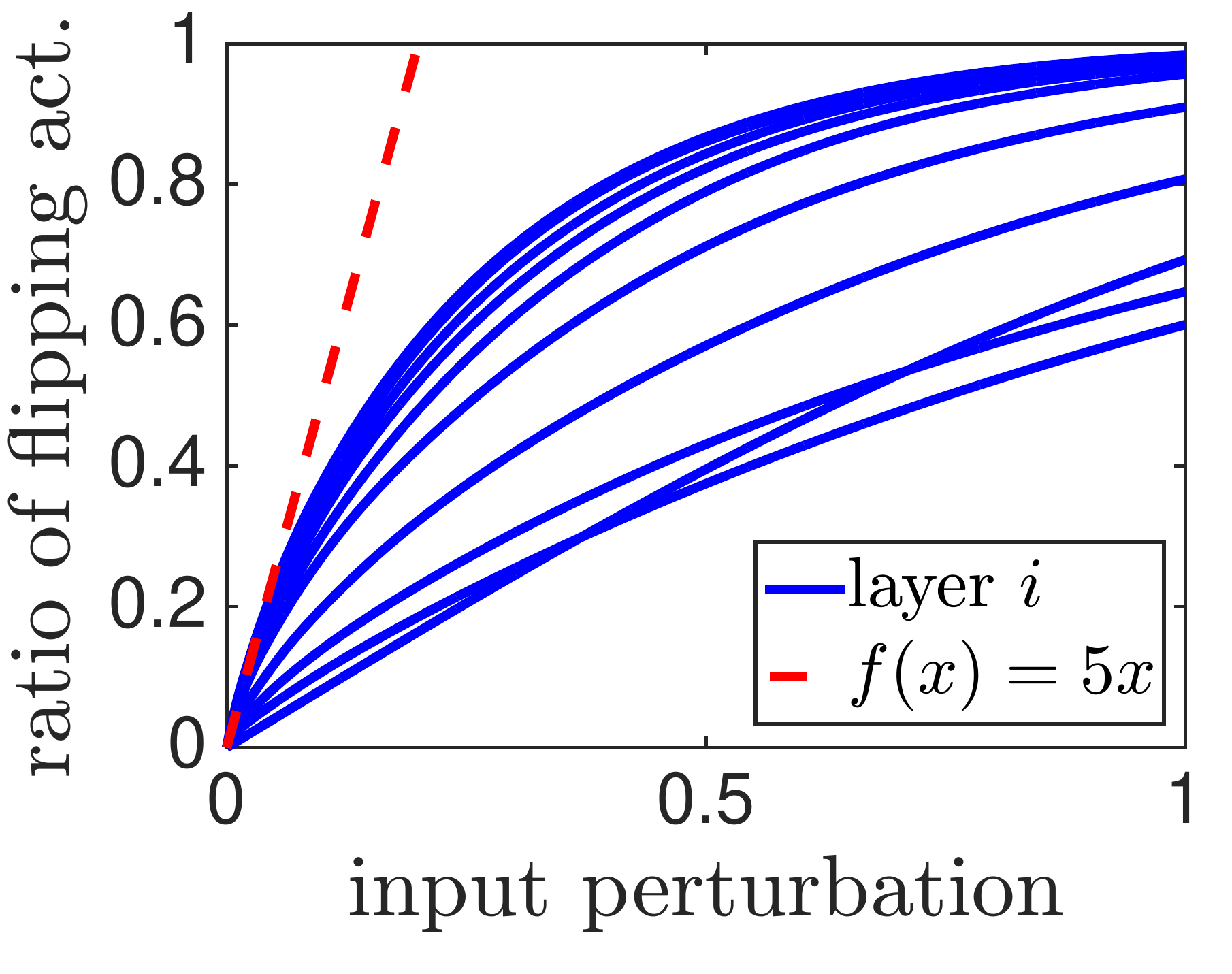}
\includegraphics[width=.32\textwidth]{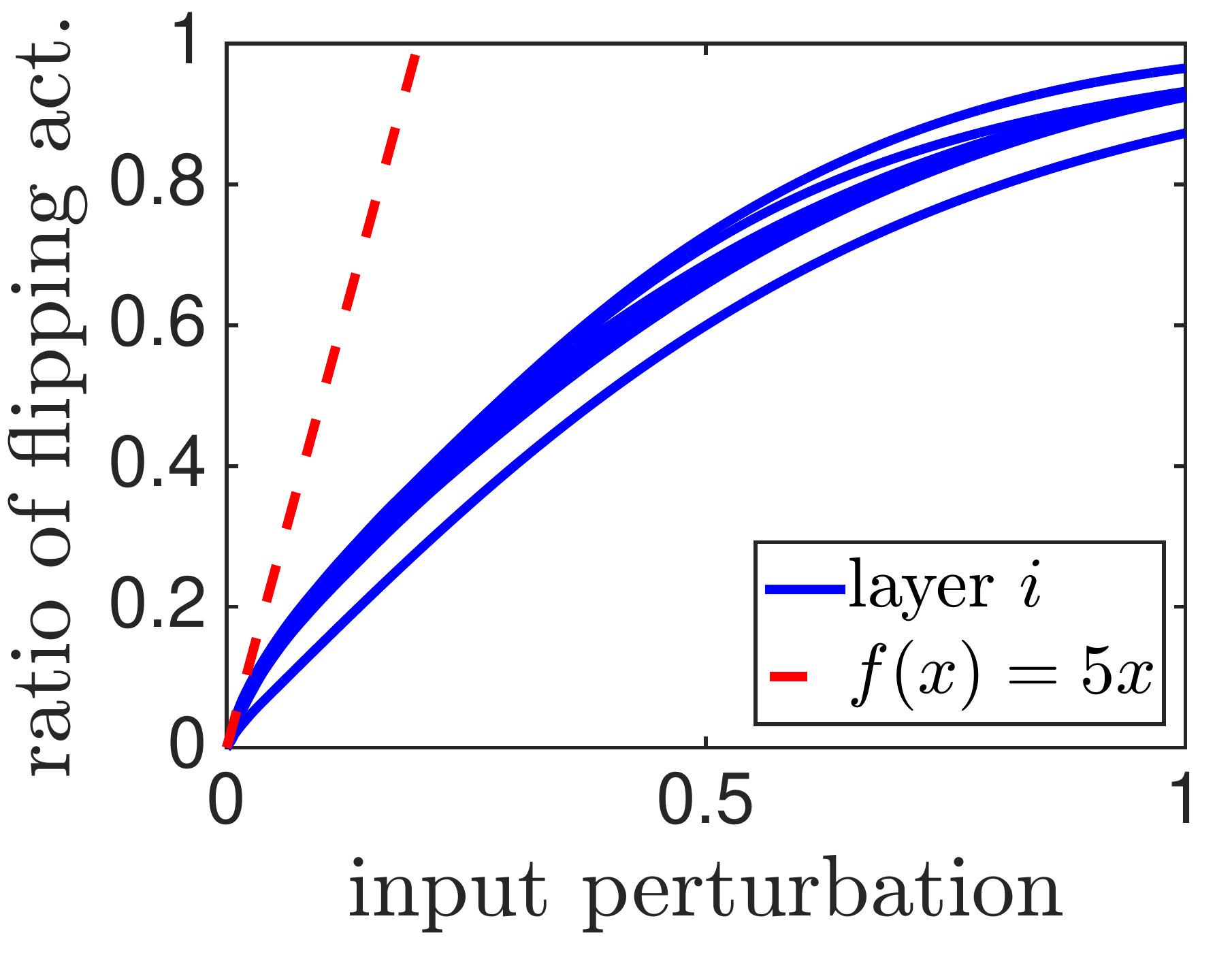}
\includegraphics[width=.33\textwidth]{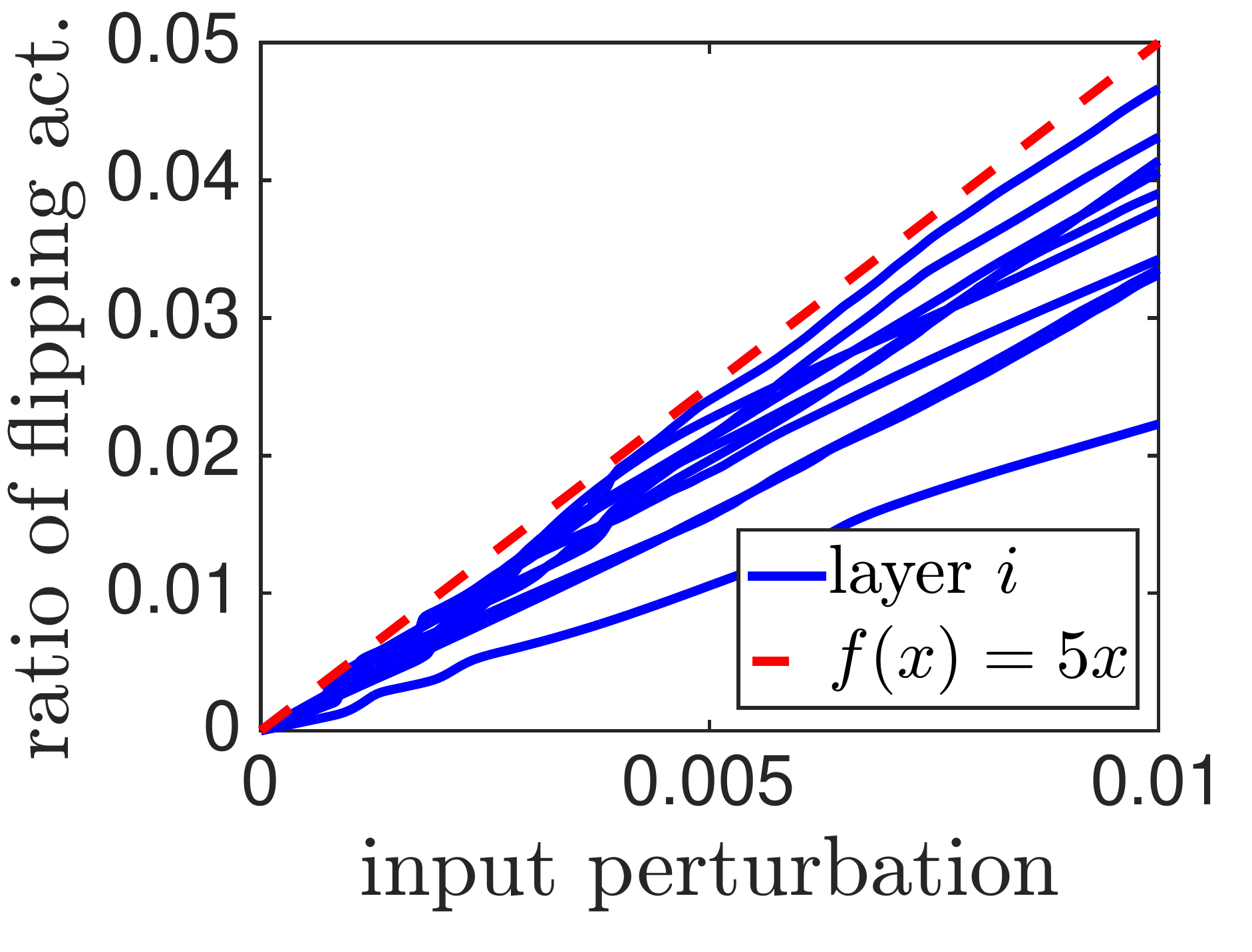}
\caption[\small Ratio of activations that flip based on the magnitude of perturbation.]{\small Ratio of activations that flip based on the magnitude of perturbation. \textbf{Left}: random initialization. \textbf{Middle}: learned weights. \textbf{Right}: learned weights (zoomed in).}
\label{fig:conditions_verify3}
\end{figure}

\begin{figure}[t]
\centering
\includegraphics[width=.24\textwidth]{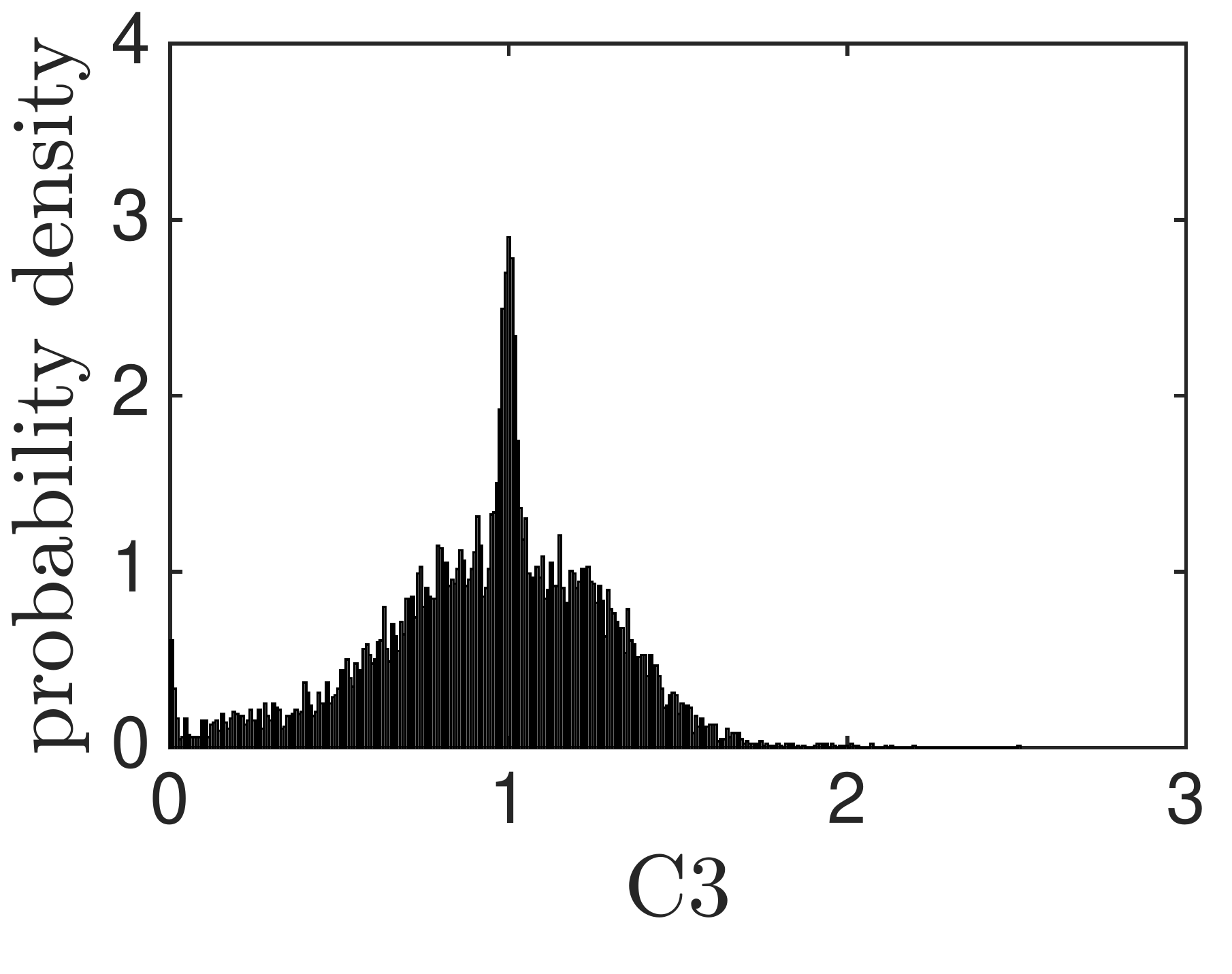}
\includegraphics[width=.24\textwidth]{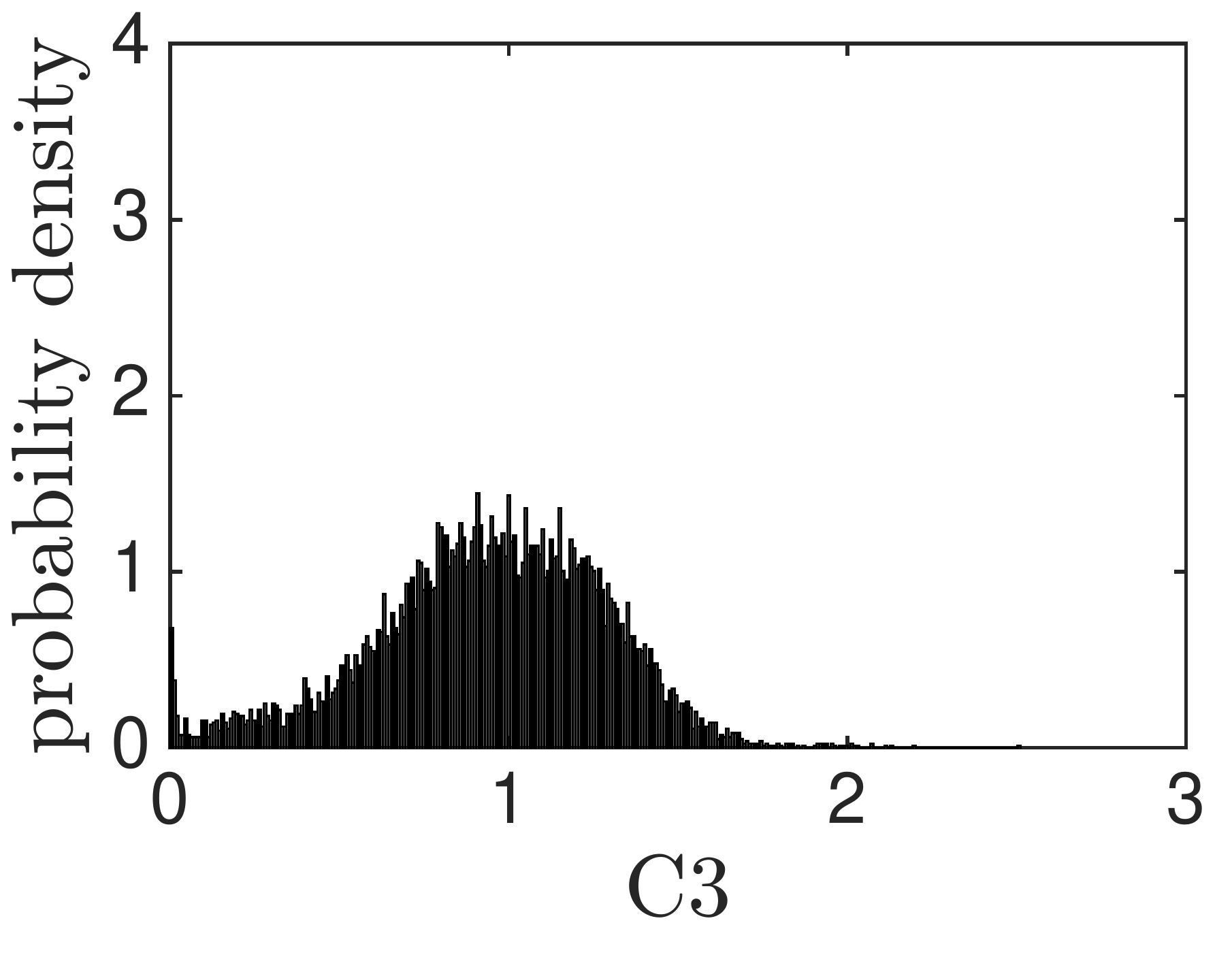}
\includegraphics[width=.25\textwidth]{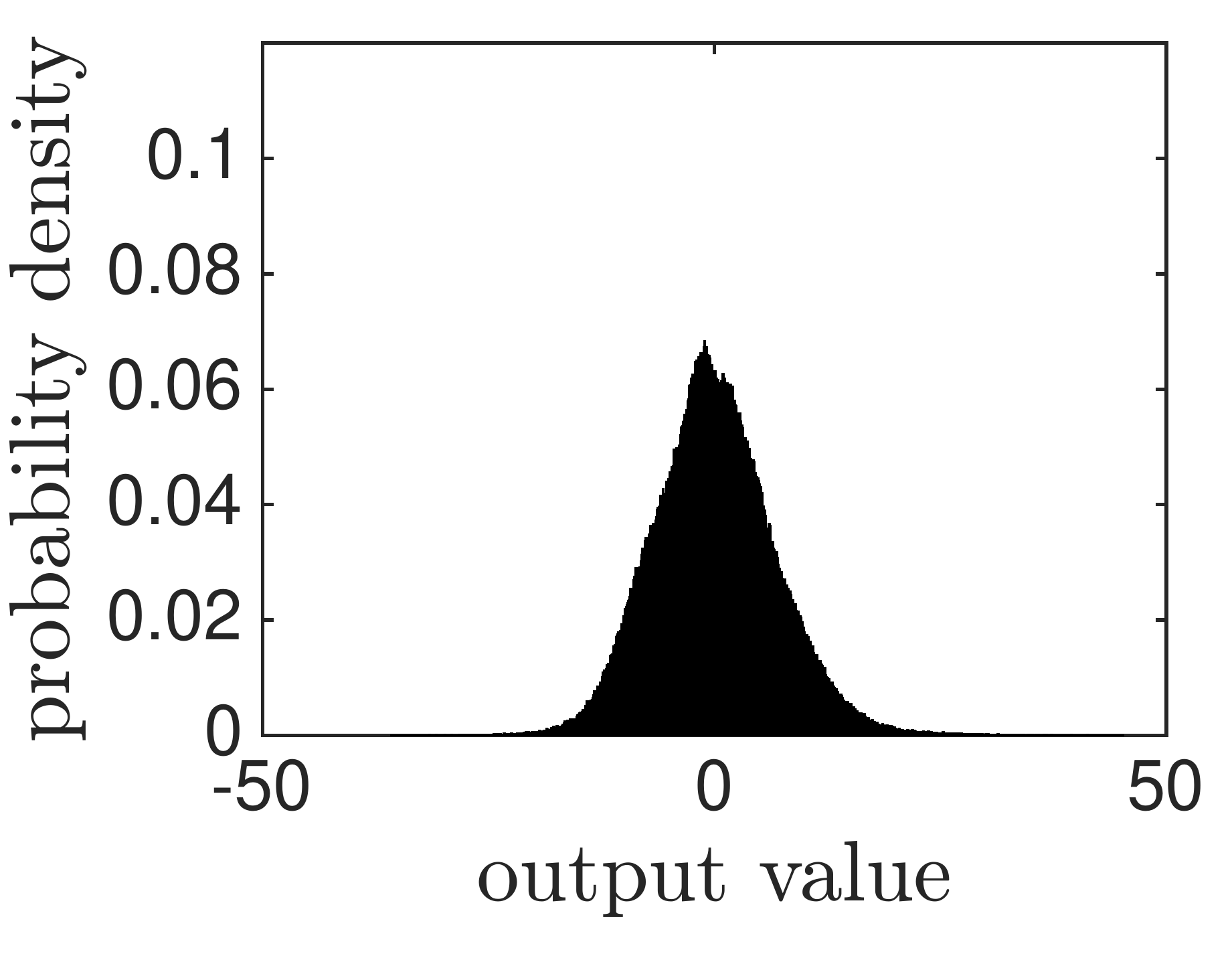}
\includegraphics[width=.25\textwidth]{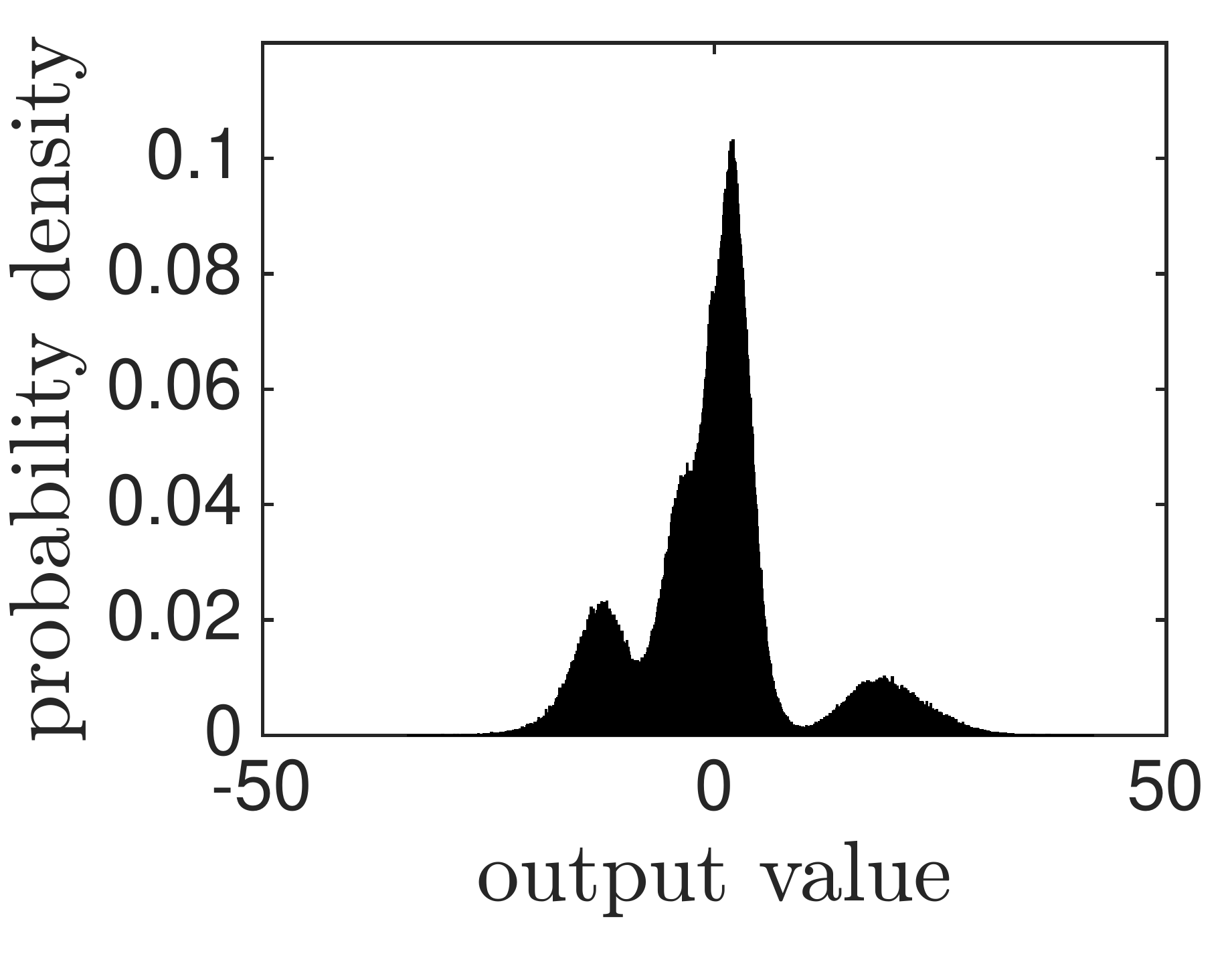}
\caption[\small Condition $C3$ in Theorem~\ref{thm:relu} for random initialization and learned network]{\small From left to right: Condition $C3$ for random initialization and learned network, output values for random and learned network}
\label{fig:conditions_verify4}
\end{figure}

\section{Proofs}\label{sec:sharpness-proofs}

\subsection{Proof of Lemma~\ref{lem:worstcase-sharpness}}
\begin{proof}
Let $\Delta_i= \abs{f^i_{\vecw+\vecu}(\vecx)-f^i_{\vecw}(\vecx)}_2$ be the sharpness of layer $i$. We will prove using induction that for any $i\geq 0$:
\begin{small}
\begin{equation*}
\Delta_i \leq \left(1+\frac{1}{d}\right)^i \left(\prod_{j=1}^i \norm{W_j}_2\right)\abs{\vecx}_2\sum_{j=1}^i \frac{\norm{U_j}_2}{\norm{W_j}_2}.
\end{equation*}
\end{small}
The above inequality together with $\left(1+\frac{1}{d}\right)^{d}\leq e$ proves the lemma statement. The induction base clearly holds since $\Delta_0 =\abs{\vecx-\vecx}_2=0$. For any $i\geq 1$, we have the following:

\begin{align*}
\Delta_{i+1} &= \abs{\left(W_{i+1}+U_{i+1}\right)\phi_i(f^i_{\vecw+\vecu}(\vecx))- W_{i+1}\phi_i(f^i_{\vecw}(\vecx))}_2\nonumber\\\nonumber
&= \abs{\left(W_{i+1}+U_{i+1}\right)\left(\phi_i(f^i_{\vecw+\vecu}(\vecx))- \phi_i(f^i_{\vecw}(\vecx))\right)+ U_{i+1}\phi_i( f^i_{\vecw}(\vecx))}_2\\\nonumber
&\leq \left(\norm{W_{i+1}}_2+\norm{U_{i+1}}_2\right)\abs{\phi_i(f^i_{\vecw+\vecu}(\vecx))- \phi_i(f^i_{\vecw}(\vecx))}_2+ \norm{U_{i+1}}_2\abs{\phi_i(f^i_{\vecw}(\vecx))}_2\\\nonumber
&\leq \left(\norm{W_{i+1}}_2+\norm{U_{i+1}}_2\right)\abs{f^i_{\vecw+\vecu}(\vecx)-f^i_{\vecw}(\vecx)}_2+ \norm{U_{i+1}}_2\abs{f^i_{\vecw}(\vecx)}_2\\
&=\Delta_i\left(\norm{W_{i+1}}_2+\norm{U_{i+1}}_2\right)+ \norm{U_{i+1}}_2\abs{f^i_{\vecw}(\vecx)}_2,
\end{align*}

where the last inequality is by the Lipschitz property of the activation function and using $\phi(0)=0$. The $\ell_2$ norm of outputs of layer $i$ is bounded by $\abs{\vecx}_2\Pi_{j=1}^i \norm{W_j}_2$ and by the lemma assumption we have $\norm{U_{i+1}}_2\leq \frac{1}{d}\norm{W_{i+1}}_2$. Therefore, using the induction step, we get the following bound: \begin{small}
\begin{align*}
\Delta_{i+1}&\leq \Delta_i\left(1+\frac{1}{d}\right)\norm{W_{i+1}}_2+ \norm{U_{i+1}}_2\abs{\vecx}_2\prod_{j=1}^i \norm{W_j}_2\nonumber\\\nonumber
&\leq \left(1+\frac{1}{d}\right)^{i+1} \left(\prod_{j=1}^{i+1}\norm{W_j}_2\right)\abs{\vecx}_2\sum_{j=1}^i \frac{\norm{U_j}_2}{\norm{W_j}_2}+ \frac{\norm{U_{i+1}}_2}{\norm{W_{i+1}}_2}\abs{\vecx}_2\prod_{j=1}^{i+1}\norm{W_i}_2\\
&\leq \left(1+\frac{1}{d}\right)^{i+1} \left(\prod_{j=1}^{i+1}\norm{W_j}_2\right)\abs{\vecx}_2\sum_{j=1}^{i+1} \frac{\norm{U_j}_2}{\norm{W_j}_2}.
\end{align*}
\end{small}
This completes the proof.
\end{proof}

\subsection{Proof of Lemma~\ref{lem:linear}}
\begin{proof}
Define $g_{\{W_{-i -j}, \vecu_{i,j} \}}(x)$ as the network $f_W$ with weights in layers $i, j,$, $W_i, W_j$ replaced by $U_i, \vecu_j$. Hence,
\begin{align}
 \| (W+\vecu)_d &\left(\Pi_{i=1}^{d-1} D_{i}(W+\vecu)_i \right)*x - W_d \left(\Pi_{i=1}^{d-1} D_{i}W_i \right)*x\|_F \nonumber \\
& \leq  \|\sum_i g(\{W_{-i}, U_i \},x)\|_F +\|\sum_{i,j} g(\{W_{-i -j}, \vecu_{i,j} \},x)\|_F + \cdots + \| f_{\vecu}(x)\|_F  \label{eq:lem_linear1}
\end{align}

\noindent {\bf Base case:} First we show the bound for terms with one noisy layer. Let $g( \{ W_{-k}, \vecu_k \},x)$ denote $f_W(x)$ with weights in layer $k$, $W_k$ replaced by $\vecu_k$. Now notice that,
\begin{align*}
\mathE\|g( \{ W_{-k}, \vecu_k \},x)\|_F &= \mathE\|W_d \Pi_{i=k+1}^{d-1} D_{i}W_i* D_k \vecu_k * \left(\Pi_{i=1}^{k-1} D_{i}W_i \right)*x \|_F \\
&\stackrel{(i)}{\leq}  \sigma_k   \|W_d \Pi_{i=k+1}^{d-1} D_{i}W_i\|_F  \|  \|\left(\Pi_{i=1}^{k-1} D_{i}W_i \right)*x \|_F \\
&\stackrel{(ii)}{\leq}  \sigma_k \frac{\sqrt{h_k h_{k-1}}}{\mu^2 \|D_k W_k\|_F} \|W_d \left(\Pi_{i=1}^{d-1} D_{i}W_i \right)*x \|_F \\
&=\sigma_k \frac{\sqrt{h_k h_{k-1} }}{\mu^2 \|D_k W_k\|_F} \| f_W(x)\|_F.
\end{align*}
$(i)$ follows from Lemma~\ref{lem:gauss_product}. $(ii)$ follows from condition $C1$.

\noindent {\bf Induction step:} Let for any set $s \subset [d], |s| =k$, the following holds:   $$\mathE\|g(\{W_{-i}, \vecu_{i} \}_{i \in s},x)\|_F \leq  \|f_W(x)\|_F \Pi_{i \in s} \sigma_i \frac{\sqrt{h_i h_{i-1}}}{\mu^2\| D_i W_i\|_F} . $$

We will prove this now for terms with $k+1$ noisy layers. 
\begin{align*}
\mathE\|g(\{W_{-i}, \vecu_{i} \}_{i \in s \cup \{j\}},x)\|_F &\leq  \sigma_j \frac{\sqrt{h_j h_{j-1}}}{\mu^2 \|D_j W_j\|} \mathE\|g(\{W_{-i}, \vecu_{i} \}_{i \in s},x)\|_F \\
&\leq  \sigma_j \frac{\sqrt{h_j h_{j-1}}}{\mu^2\|D_j W_j\|}    \|f_W(x)\|_F \Pi_{i \in s} \sigma_i \frac{\sqrt{h_i h_{i-1}}}{\mu^2 \| D_i W_i\|_F} \\
&= \|f_W(x)\|_F \Pi_{i \in s \cup \{j\}} \sigma_i \frac{\sqrt{h_i h_{i-1}}}{\mu^2 \| D_i W_i\|_F}
\end{align*}

Substituting the above expression in equation~\eqref{eq:lem_linear1} gives,
 \begin{multline*}
 \| (W+\vecu)_d \left(\Pi_{i=1}^{d-1} D_{i}(W+\vecu)_i \right)*x - W_d \left(\Pi_{i=1}^{d-1} D_{i}W_i \right)*x\|_F  \\ \leq  \left(\Pi_{i=1}^d \left(1+ \frac{\sigma_i \sqrt{h_i}\sqrt{h_{i-1}}}{\mu^2 \|D_i W_i\|_F} \right) -1 \right)  \|f_W(x)\|_F.
\end{multline*}

\end{proof}

\subsection{Proof of Lemma~\ref{lem:recursion}}
\begin{proof}
We prove this lemma by induction on $k$. Recall that $\hatD_i$ is the diagonal matrix with 0's and 1's corresponding to the activation pattern of the perturbed network $f_{W+\vecu}(x)$. Let $1_{E}$ denote the indicator function, that is $1$ if the event $E$ is true, $0$ else. We also use $f_W^k(x)$ to denote the network truncated to level $k$, in particular $f_W^k(x) =\Pi_{i=1}^k D_k W_k x$.

\noindent {\bf Base case:}

\begin{multline*}
\|\hatD_1 - D_1\|_1 = \sum_i 1_{\inner{(W+\vecu)_{1,i}}{x}*\inner{W_{1,i}}{x} <0} =\sum_i 1_{\inner{(W)_{1,i}}{x}^2 < -\inner{(\vecu)_{1,i}}{x}*\inner{(W)_{1,i}}{x}} \\ \leq \sum_i 1_{\abs{\inner{(W)_{1,i}}{x}} < \abs{\inner{(\vecu)_{1,i}}{x}}}.
\end{multline*}
Since $\vecu_1$ is a random Gaussian matrix, and $\|x\| \leq 1$, for any $i$, $\abs{\inner{(\vecu)_{1,i}}{x}} \leq \sigma_1 (1+\delta_1)\sqrt{2\ln(h_1)}$ with probability greater than $1-\delta_1$. Hence, with probability $\geq 1-\delta_1$,

\begin{align*}
\|\hatD_1 - D_1\|_1 \leq \sum_i 1_{\abs{\inner{(W)_{1,i}}{x}} \leq \sigma_1 \sqrt{20 \ln(h_1)}}  \leq C_2 h_1 \sigma_1 (1+\delta_1)\sqrt{2\ln(h_1)} =C_2 h_1 \sigma_1 C_{\delta_1}.
\end{align*}

This completes the base case for $k=1$. $\hatD_1$ is a random variable that depends on $\vecu_1$. Hence, in the remainder of the proof, to avoid this dependence, we separately bound $\hatD_1 -D$ using the expression above and compute expectation only with respect to $\vecu_1$. With probability $\geq 1-\delta_1$,

\begin{align*}
\mathE\| \err_1 \|_F &= \mathE\|\hatD_1*(W+\vecu)_1 x - D_1*(W+\vecu)_1 x \|_F \\
&\leq \mathE\|(\hatD_1- D_1)*W_1 x \|_F + \mathE\|(\hatD_1- D_1)*\vecu_1 x \|_F \\
& \stackrel{(i)}{\leq} \sqrt{C_2 h_1 \sigma_1 C_{\delta_1}} \sigma_1 + \sqrt{C_2 h_1 \sigma_1 C_{\delta_1}} \sigma_1\\
& = 2 \sqrt{C_2 h_1 \sigma_1 C_{\delta_1}} \sigma_1.
\end{align*}
$(i)$ follows because, each hidden node in  $\mathE\|(\hatD_1- D_1)*W_1 x \|_F$ has norm less than $\sigma_1 C_{\delta_1}$ (as it changed its activation), number of such units is less than $C_2 h_1 \sigma_1 C_{\delta_1}$.

$k=1$ case does not capture all the intricacies and dependencies of higher layer networks. Hence we also evaluate the bounds for $k=2$.

\begin{align*}
\|\hatD_2 - D_2\|_1 \leq \sum_i 1_{\inner{(W+\vecu)_{2,i}}{f^1_{W+\vecu}}*\inner{W_{2,i}}{f^1_W} \leq 0 } \leq \sum_i 1_{\abs{\inner{W_{2,i}}{f^1_{W}}} \leq \abs{\inner{\vecu_{2,i}}{f^1_{W+\vecu}}} + \abs{\inner{W_{2,i}}{f^1_{W+\vecu}-f^1_{W}}} }
\end{align*}

Let $C_{\delta_2}= (1+\delta_2)\sqrt{2\ln(h_2)}$. Then, with probability $\geq 1-\delta_1- \delta_2$,

\begin{align*}
&\abs{\inner{\vecu_{2,i}}{f^1_{W+\vecu}}} + \abs{\inner{W_{2,i}}{f^1_{W+\vecu}-f^1_{W}}} \\ &\leq C_{\delta_2} \sigma_2 \left( \|f^1_W\|_F+ 2 \sqrt{C_2 h_1 \sigma_1 C_{\delta_1}} \sigma_1 \right) + \|W_{2, i}\| 2 \sqrt{C_2 h_1 \sigma_1 C_{\delta_1}} \sigma_1 \\
&\leq  C_{\delta_2} \sigma_2 \left( \|f^1_W\|_F + 2 \sqrt{C_2 h_1 \sigma_1 C_{\delta_1}} \sigma_1   \right) + C_3 \frac{\|D_2 W_2\|_F}{\sqrt{h_2}} 2 \sqrt{C_2 h_1 \sigma_1 C_{\delta_1}} \sigma_1   \\
& \stackrel{(i)}{\leq} C_{\delta_2} \sigma_2 \left(\|f^1_W\|_F  + 2 \sqrt{\frac{\hat{\sigma_1}}{\sqrt{h_i + h_{i-1}}}} \hat{\sigma_1} \right) + 2 \hat{\sigma_1}  \frac{C_3  \|f_W(x)\|_F^{\nicefrac{1}{d}}}{\mu} \sqrt{\frac{\hat{\sigma_1}}{\sqrt{h_i + h_{i-1}}}}\\
&=C_{\delta_2} \sigma_2 \left( \|f^1_W\|_F + \gamma_1 \hat{\sigma_1} \right) +  \frac{C_3  \|f_W(x)\|_F^{\nicefrac{1}{d}}}{\mu} \gamma_1 \hat{\sigma_1} 
\end{align*}
where, $\gamma_i = 2 \sqrt{\frac{\hat{\sigma_1}}{\sqrt{h_i + h_{i-1}}}}$. $(i)$ follows from condition $C1$, which results in $\Pi_{i=2}^d \frac{\mu \|D_i W_i\|_F}{\sqrt{h_i}} \frac{\mu \|D_1 W_1 x\|_F}{\sqrt{h_1}} \leq \|f_W(x)\|_F$. Hence, if we consider the rebalanced network\footnote{The parameters of ReLu networks can be scaled between layers without changing the function} where all layers have same values for $\frac{\mu \|D_i W_i\|_F}{\sqrt{h_i}} $, we get, $\frac{\mu \|D_i W_i\|_F}{\sqrt{h_i}} \leq \|f_W(x)\|_F^{\nicefrac{1}{d}}$. Also the above equations follow from setting, $\sigma_i =\frac{\hat{\sigma}_i}{C_2 C_{\delta_i}\sqrt{h_i + h_{i-1}}}$. 

Hence, with probability $\geq 1-\delta_1-  \delta_2$,
\begin{align*}
\|\hatD_2 - D_2\|_1 &\leq C_2*h_2 \left( C_{\delta_2} \sigma_2 \left( \|f^1_W\|_F + \gamma_1 \hat{\sigma_1} \right) +  \frac{C_3  \|f_W(x)\|_F^{\nicefrac{1}{d}}}{\mu}\gamma_1 \hat{\sigma_1} \right).
\end{align*}

Since, we choose $\sigma_i$ to scale as some small number $O(\sigma)$, in the above expression the first term scales as $O(\sigma)$ and the last two terms decay at least as $O(\sigma^{3}{2})$. Hence we do not include them in the computation of $\err$.
 
\begin{align*}
\mathE\| \err_2 \|_F &= \mathE\|\hatD_2(W+\vecu)_2 *\hatD_1*(W+\vecu)_1 x - D_2(W+\vecu)_2 *D_1*(W+\vecu)_1 x \|_F \\
&\leq \mathE\| (\hatD_2-D_2)(W+\vecu)_2 *(\hatD_1-D_1)*(W+\vecu)_1 x\|_F  + \mathE\| D_2(W+\vecu)_2 *(\hatD_1-D_1)*(W+\vecu)_1 x\|_F \\ & \quad \quad+ \mathE\| (\hatD_2-D_2)(W+\vecu)_2 *D_1*(W+\vecu)_1 x\|_F.
\end{align*}

We will bound now the first term in the above expression. With probability $\geq 1-\delta_1-  \delta_2$,

\begin{align*}
\mathE& \| (\hatD_2-D_2)(W+\vecu)_2 *(\hatD_1-D_1)*(W+\vecu)_1 x\|_F \\
&\leq \mathE\| (\hatD_2-D_2)W_2 *(\hatD_1-D_1)*W_1 x\|_F + \mathE\| (\hatD_2-D_2)W_2 *(\hatD_1-D_1)*\vecu_1 x\|_F \\ & \quad \quad + \mathE\| (\hatD_2-D_2)\vecu_2 *(\hatD_1-D_1)*W_1 x\|_F + \mathE\| (\hatD_2-D_2)\vecu_2 *(\hatD_1-D_1)*\vecu_1 x\|_F \\
&\leq  2 \sqrt{C_2*h_2 C_{\delta_2} \sigma_2  \|f^1_W\|_F }C_{\delta_2} \sigma_2  \|f^1_W\|_F \sqrt{C_2*h_1 *C_{\delta_1} \sigma_1}C_{\delta_1} \sigma_1 \\ & \quad \quad + 2 \sqrt{C_2*h_2 C_{\delta_2} \sigma_2  \|f^1_W\|_F }C_{\delta_2} \sigma_2  \sqrt{h_1} \sqrt{C_2*h_1 *C_{\delta_1} \sigma_1}C_{\delta_1} \sigma_1 +O(\sigma^2)\\
&\leq  4 \|f_W^2\|_F \frac{C_{\delta_2} \sigma_2 C_{\delta_1} \sigma_1 \sqrt{h_1}}{\mu \|D_2 W_2\|_F} \Pi_{i=1}^2 \sqrt{C_2 h_i C_{\delta_i} \sigma_i}.
\end{align*}


\noindent {\bf Induction step:}

Now we assume the statement for all $i \leq k$ and prove it for $k+1$.
$\|\hatD_{k} -D_{k}\|_1 \leq C_2 h_k  C_{\delta_k} \sigma_k  \|f^{k-1}_W\|_F$ and $\E\| \err_k\|_F \leq  \Pi_{i=1}^k \left( 1+\frac{\sigma_i\sqrt{h_{i-1}}}{\mu^2 C^i_{2,\infty}C_2}\right)\left(\Pi_{i=1}^k (1+\frac{\hat{\sigma_i}^{3/2}}{C_2})- 1\right)\|f^k_w\|_F$. Now we prove the statement for $k+1$.
 
\begin{align*}
\|\hatD_{k+1} - D_{k+1}\|_1 &= \sum_i 1_{\inner{(W+\vecu)_{k+1 , i}}{ \Pi_{i=1}^{k } \hatD_i (W+\vecu)_i *x } *\inner{W_{2,i}}{D_1 W_1 x } \leq 0} \\
&\leq \sum_i 1_{ \abs{ \inner{W_{k+1 , i}}{ \Pi_{i=1}^{k } \hatD_i (W+\vecu)_i *x }} \leq \abs{ \inner{\vecu_{k+1 , i}}{ \Pi_{i=1}^{k } \hatD_i (W+\vecu)_i *x }} } \\
&=  \sum_i 1_{ \abs{ \inner{W_{k+1 , i}}{ f^k_{W+\vecu} }} \leq \abs{ \inner{\vecu_{k+1 , i}}{ f^k_{W+\vecu} }}} \\
&\leq   \sum_i 1_{ \abs{ \inner{W_{k+1 , i}}{ f^k_{W} }} \leq \abs{ \inner{\vecu_{k+1 , i}}{ f^k_{W} }} + \abs{ \inner{\vecu_{k+1 , i}}{ f^k_{W +\vecu} -f^k_W }} + \abs{ \inner{W_{k+1 , i}}{ f^k_{W+\vecu} -f^k_{W} }} } 
\end{align*}

Hence, with  probability $\geq 1-\sum_{i=1}^k\delta_i$, 
\begin{align*}
\|\hatD_{k+1} - D_{k+1}\|_1 &\leq  C_2 h_{k+1} \left[ C_{\delta_k} \sigma_{k+1} ( \|f^k_{W} \|_F +\|f^k_{W+\vecu} -f^k_W \|_F ) + \| W_{k+1, i}\|  \|f^k_{W+\vecu} -f^k_W \|_F \right] \\
&\leq C_2 h_{k+1}C_{\delta_k} \sigma_{k+1} \|f^k_{W} \|_F +C_2 h_{k+1}C_{\delta_k} \sigma_{k+1}\|f^k_{W+\vecu} -f^k_W \|_F  +C_2 h_{k+1} \| W_{k+1, i}\|  \|f^k_{W+\vecu} -f^k_W \|_F . 
\end{align*}
Now we will show that the last two terms in the above expression scale as $O(\sigma^2)$. For that, first notice that $\|f^k_{W+\vecu} -f^k_W \|_F \leq  \left(\Pi_{i=1}^k \left(1+ \frac{\sigma_i \sqrt{h_i h_{i-1}}}{\mu^2 \|D_i W_i\|_F} \right) -1 \right)  \|f_W(x)\|_F + \err_k$, from lemma~\ref{lem:linear}. Note that the second term in the above expression clearly scale as $O(\sigma^2)$.

Hence,
\begin{align*}
\|\hatD_{k+1} - D_{k+1}\|_1 \leq C_2 h_{k+1}C_{\delta_k} \sigma_{k+1} \|f^k_{W} \|_F +O(\sigma^2)
\end{align*}


\begin{align*}
\|\err_{k+1} \| &= \|f^{k+1}_{W+\vecu} -\tilde{f}^{k+1}_{W+\vecu}\|_F \\ &= \| \hatD_{k+1}(W+\vecu)_{k+1} \Pi_{i=1}^{k+1} \hatD_i (W+\vecu)_i x - D_{k+1}(W+\vecu)_{k+1} \Pi_{i=1}^{k+1} D_i (W+\vecu)_i x\|_F \\
&\leq \| (\hatD_{k+1}-D_{k+1})(W+\vecu)_{k+1} \Pi_{i=1}^{k+1} D_i (W+\vecu)_i x\|_F + \| \hatD_{k+1} (W+\vecu)_{k+1} \err_k\|_F \\
&\leq \| (\hatD_{k+1}-D_{k+1})(W+\vecu)_{k+1} \Pi_{i=1}^{k+1} D_i (W+\vecu)_i x\|_F + \| (\hatD_{k+1}-D_{k+1}) (W+\vecu)_{k+1} \err_k\|_F \\ & \quad \quad +\|D_{k+1} (W+\vecu)_{k+1} \err_k\|_F 
\end{align*}

Substituting the bounds for $\hatD_{k+1} -D_{k+1}$ and $\err_k$ gives us, with probability $\geq 1-\sum_{i=1}^{k+1} \delta_i$.
\begin{align*}
\mathE\|\err_{k+1} \| &\leq  \sqrt{C_2 h_{k+1}C_{\delta_k} \sigma_{k+1} \|f^k_{W} \|_F } C_{\delta_k} \sigma_{k+1} \|f^k_{W} \|_F \mathE\| \Pi_{i=1}^{k+1} D_i (W+\vecu)_i x\|_F \\
&\quad \quad + \mathE\|\err_k\|_F \left( \sqrt{C_2 h_{k+1}C_{\delta_k} \sigma_{k+1} \|f^k_{W} \|_F } C_{\delta_k} \sigma_{k+1} \|f^k_{W} \|_F + \|D_{k+1} W_{k+1}\|_F +\sigma_{k+1}\sqrt{h_{k+1}} \right)
\end{align*}
Now we bound the above terms following the same approach as in proof of Lemma~\ref{lem:linear}, by considering all possible replacements of $W_i$ with $U_i$. That gives us the result.


%

\end{proof}

%

\chapter{Empirical Investigation} \label{chap:empirical}

In this chapter we investigate the ability of the discussed measures
to explain the different generalization phenomenon.

\section{Complexity Measures}\label{sec:complexity-measures}
Capacity control in terms of norm, when using a zero/one loss
(i.e.~counting errors) requires us in addition to account for scaling
of the output of the neural networks, as the loss is insensitive to
this scaling but the norm only makes sense in the context of such
scaling.  For example, dividing all the weights by the same number will
scale down the output of the network but does not change the $0/1$
loss, and hence it is possible to get a network with arbitrary small
norm and the same $0/1$ loss.  Using a scale sensitive losses, such as the
cross entropy loss, does address this issue (if the outputs are scaled
down toward zero, the loss becomes trivially bad), and one can obtain
generalization guarantees in terms of norm and the cross entropy loss.

However, we should be careful when comparing the norms of different
models learned by minimizing the cross entropy loss, in particular
when the training error goes to zero.  When the training error goes to
zero, in order to push the cross entropy loss (or any other positive
loss that diminish at infinity) to zero, the outputs of the network
must go to infinity, and thus the norm of the weights (under any
norm) should also go to infinity.  This means that minimizing the
cross entropy loss will drive the norm toward infinity.  In practice,
the search is terminated at some finite time, resulting in large, but
finite norm.  But the value of this norm is mostly an indication of
how far the optimization is allowed to progress---using a stricter
stopping criteria (or higher allowed number of iterations) would yield
higher norm.  In particular, comparing the norms of models found using
different optimization approaches is meaningless, as they would all go
toward infinity.

Instead, to meaningfully compare norms of the network, we should
explicitly take into account the scaling of the outputs of the
network. One way this can be done, when the training error is indeed
zero, is to consider the ``margin'' of the predictions in addition to
the norms of the parameters.  We refer to the margin for a single data
point $x$ as the difference between the score of the correct label and
the maximum score of other labels, i.e.
\begin{equation}
  \label{eq:margin}
 f_\vecw(\vecx)[y_\text{true}] -
\max_{y\neq y_\text{true}} f_\vecw(\vecx)[y] 
\end{equation}
In order to measure scale over an entire training set, one simple
approach is to consider the ``hard margin'', which is the minimum
margin among all training points.  However, this definition is very
sensitive to extreme points as well as to the size of the training
set.  We consider instead a more robust notion that allows a small
portion of data points to violate the margin. For a given training set
and small value $\epsilon>0$, we define the margin $\gamma_\margin$ as
the lowest value of $\gamma$ such that $\lceil \epsilon m \rceil$ data
point have margin lower than $\gamma$ where $m$ is the size of the
training set.  We found empirically that the qualitative and relative
nature of our empirical results is almost unaffected by reasonable
choices of $\epsilon$ (e.g.~between $0.001$ and $0.1$).

The norm-based measures we investigate in this work and their
corresponding capacity bounds are as follows \footnote{We have dropped the term that only depend on the norm of the input. The bounds based on $\ell_2$-path norm and spectral norm can be derived directly from the those based on $\ell_1$-path norm and $\ell_2$ norm respectively. Without further conditions on weights, exponential dependence on depth is tight but the $4^d$ dependence might be loose~\cite{NeyTomSre15}. We will also discuss a rather loose bound on the capacity based on the spectral norm in Section \ref{subsec:lipschitz}.}:

\begin{itemize}
\item $\ell_2$ norm with capacity proportional to $\frac{1}{\gamma_{\margin}^2}\prod_{i=1}^d 4\norm{W_i}^2_F$~\cite{NeyTomSre15}.
\item $\ell_1$-path norm with capacity proportional to $\frac{1}{\gamma_{\margin}^2}\left(\sum_{j \in \prod_{k=0}^d[h_k]}\abs{\prod_{i=1}^d 2W_i[j_i,j_{i-1}]}\right)^2$\cite{bartlett2002rademacher,NeyTomSre15}.
\item $\ell_2$-path norm with capacity proportional to $\frac{1}{\gamma_{\margin}^2}\sum_{j \in \prod_{k=0}^d[h_k]}\prod_{i=1}^d 4h_iW_i^2[j_i,j_{i-1}]$.
\item spectral norm with capacity proportional to $\frac{1}{\gamma_{\margin}^2}\prod_{i=1}^d h_i\norm{W_i}^2_2$.
\end{itemize}
where $\prod_{k=0}^d[h_k]$ is the Cartesian product over sets $[h_k]$. The above bounds indicate that capacity can be bounded in terms of either $\ell_2$-norm or $\ell_1$-path norm independent of number of parameters. The $\ell_2$-path norm dependence on the number of hidden units in each layer is unavoidable. However, it is not clear that the dependence on the number of parameters is needed for the bound based on the spectral norm. 

\paragraph{PAC-Bayes Bound}A simple way to instantiate the PAC-Based bound discussed in Section~\ref{sec:pac-bayes-general}
is to set $P$ to be a zero mean, $\sigma^2$ variance Gaussian distribution.
Choosing the perturbation $\eps$ to also be a zero mean spherical
Gaussian with variance $\sigma^2$ in every direction, yields the
following guarantee (w.p.~$1-\delta$ over the training set):
\begin{small}
\begin{equation}\label{eq:pacbayes2}
\Ep{\eps \sim \mathcal{N}(0,\sigma)^n}{L(f_{\vecw+\eps})} \leq \hatL(f_\vecw) + \underbrace{\Ep{\eps \sim \mathcal{N}(0,\sigma)^n}{\hatL(f_{\vecw+\eps})} -\hatL(f_\vecw) }_{\text{expected sharpness}}+ 4\sqrt{\frac{1}{m} \bigg( \underbrace{\frac{\|\vecw\|_2^2}{2\sigma^2} }_{\text{KL}}+ \ln \frac{2m}{\delta} \bigg) },
\end{equation}
\end{small}
Another interesting approach is to set the variance of the perturbation to each parameter with respect to the magnitude of the parameter. For example if $\sigma_i=\alpha \abs{w_i}+\beta$, then the KL term in the above expression changes to $\sum_i\frac{w_i^2}{2\sigma_i^2} $. 

The above generalization guarantees give a clear way to think about capacity control jointly in terms of both the expected sharpness and the norm, and as we discussed earlier indicates that sharpness by itself cannot control the capacity without considering the scaling. In the above generalization bound, norms and sharpness interact in a direct way depending on $\sigma$, as increasing the norm by decreasing $\sigma$ causes decrease in sharpness and vice versa. It is therefore important to find the right balance between the norm and sharpness by choosing $\sigma$ appropriately in order to get a reasonable bound on the capacity.

\section{Experiments Settings}
In experiment with different network sizes, we train a two layer perceptron with ReLU activation and varying number of hidden units without Batch Normalization or dropout. In the rest of the experiments, we train a modified version of the VGG architecture \cite{simonyan2014very} with the configuration $2\times [64,3,3,1]$, $2\times [128,3,3,1]$, $2\times [256,3,3,1]$, $2\times [512,3,3,1]$ where we add Batch Normalization before ReLU activations and apply $2\times 2$ max-pooling with window size 2 and dropout after each stack. Convolutional layers are followed by $4\times 4$ average pooling, a fully connected layer with 512 hidden units and finally a linear layer is added for prediction. 

In all experiments we train the networks using stochastic gradient descent (SGD) with mini-batch size 64, fixed learning rate 0.01 and momentum 0.9 without weight decay. In all experiments where achieving zero training error is possible, we continue training until the cross-entropy loss is less than $10^{-4}$.

When calculating norms on a network with a Batch Normalization layer, we reparametrize the network to one that represents the exact same function without Batch Normalization as suggested in \cite{neyshabur16}. In all our figures we plot norm divided by margin to avoid scaling issues (see Section~\ref{sec:complexity-measures}), where we set the margin over training set $S$ to be $5^{th}$-percentile of the margins of the data points in $S$, i.e. $\text{Prc}_5\left\{f_\vecw(x_i)[y_i] - \max_{y\neq y_i} f_\vecw(x)[y] | (x_i,y_i)\in S\right\}.$ We have also investigated other versions of the margin and observed similar behavior to this notion. 

We calculate the sharpness, as suggested in \cite{keskar2016large} - for each parameter $w_i$ we bound the magnitude of perturbation by $\alpha(\abs{w_i}+1)$ for $\alpha=5.10^{-4}$. In order to compute the maximum perturbation (maximize the loss), we perform 2000 updates of stochastic gradient ascent starting from the minimum, with mini-batch size 64, fixed step size 0.01 and momentum 0.9.

To compute the expected sharpness, we perturb each parameter $w_i$ of the model with noise generated from Gaussian distribution with zero mean and standard deviation, $\alpha(10\abs{w_i}+1)$. The expected sharpness is average over 1000 random perturbations each of which are averaged over a mini-batch of size 64. We compute the expected sharpness for different choices of $\alpha$. For each value of $\alpha$ the KL divergence can be calculated as $\frac{1}{\alpha^2}\sum_i \left(\frac{w_i}{(10\abs{w_i}+1)}\right)^2$.

\section{True Labels Vs. Random Labels}
As an initial empirical investigation of the appropriateness of the
different complexity measures, we compared the complexity (under each
of the above measures) of models trained on true versus random labels.
We would expect to see two phenomena: first, the complexity of models
trained on true labels should be substantially lower than those
trained on random labels, corresponding to their better generalization
ability.  Second, when training on random labels, we expect capacity
to increase almost linearly with the number of training examples, since
every extra example requires new capacity in order to fit it's random
label.  However, when training on true labels we expect the model to
capture the true functional dependence between input and output and
thus fitting more training examples should only require small
increases in the capacity of the network.  The results are reported in
Figure \ref{fig:norm-true-random}.  We indeed observe a gap between
the complexity of models learned on real and random labels for all
four norms, with the difference in increase in capacity between true
and random labels being most pronounced for the $\ell_2$ norm and
$\ell_2$-path norm.

\begin{figure}[t]
\centering
\includegraphics[width=.245\textwidth]{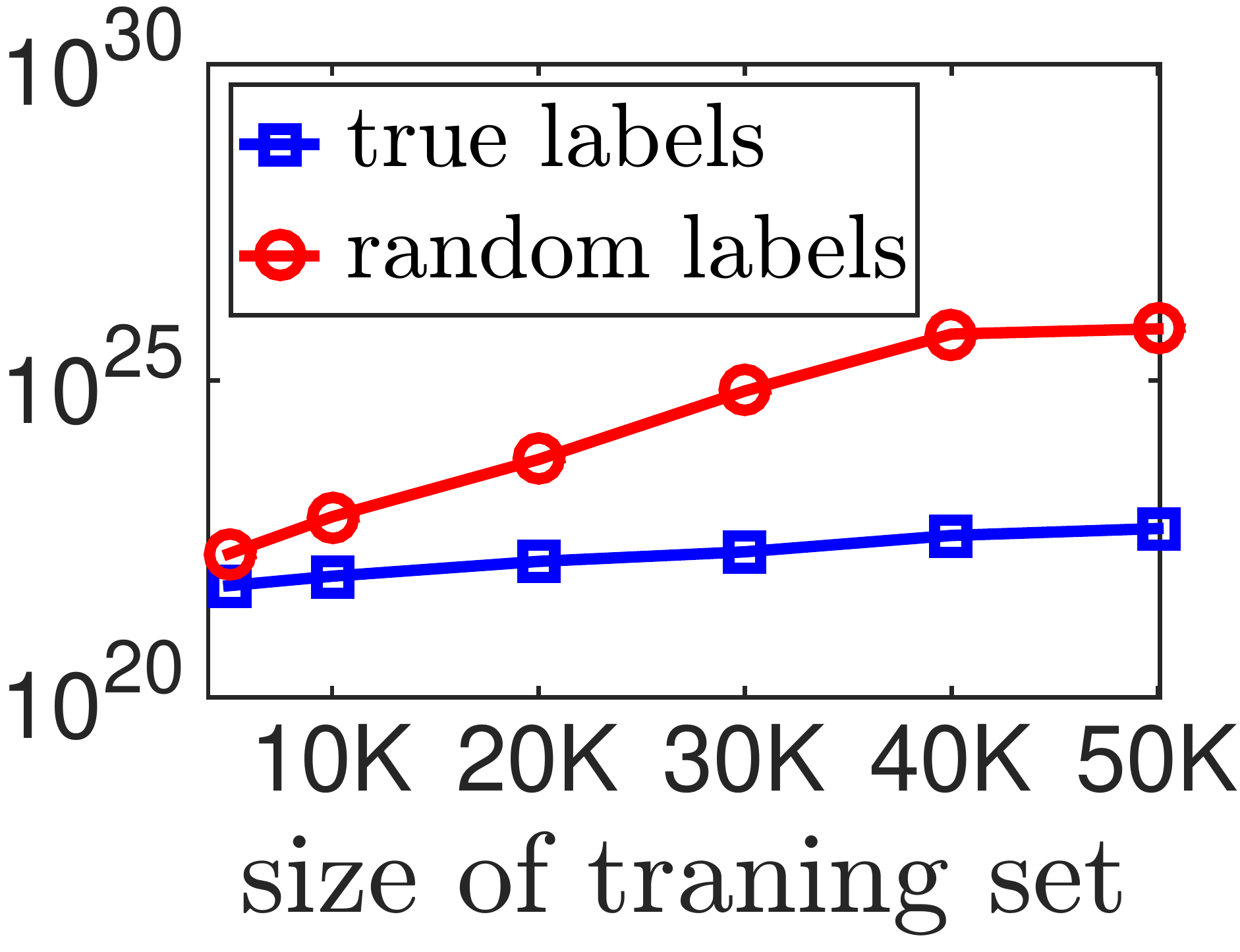}
\includegraphics[width=.245\textwidth]{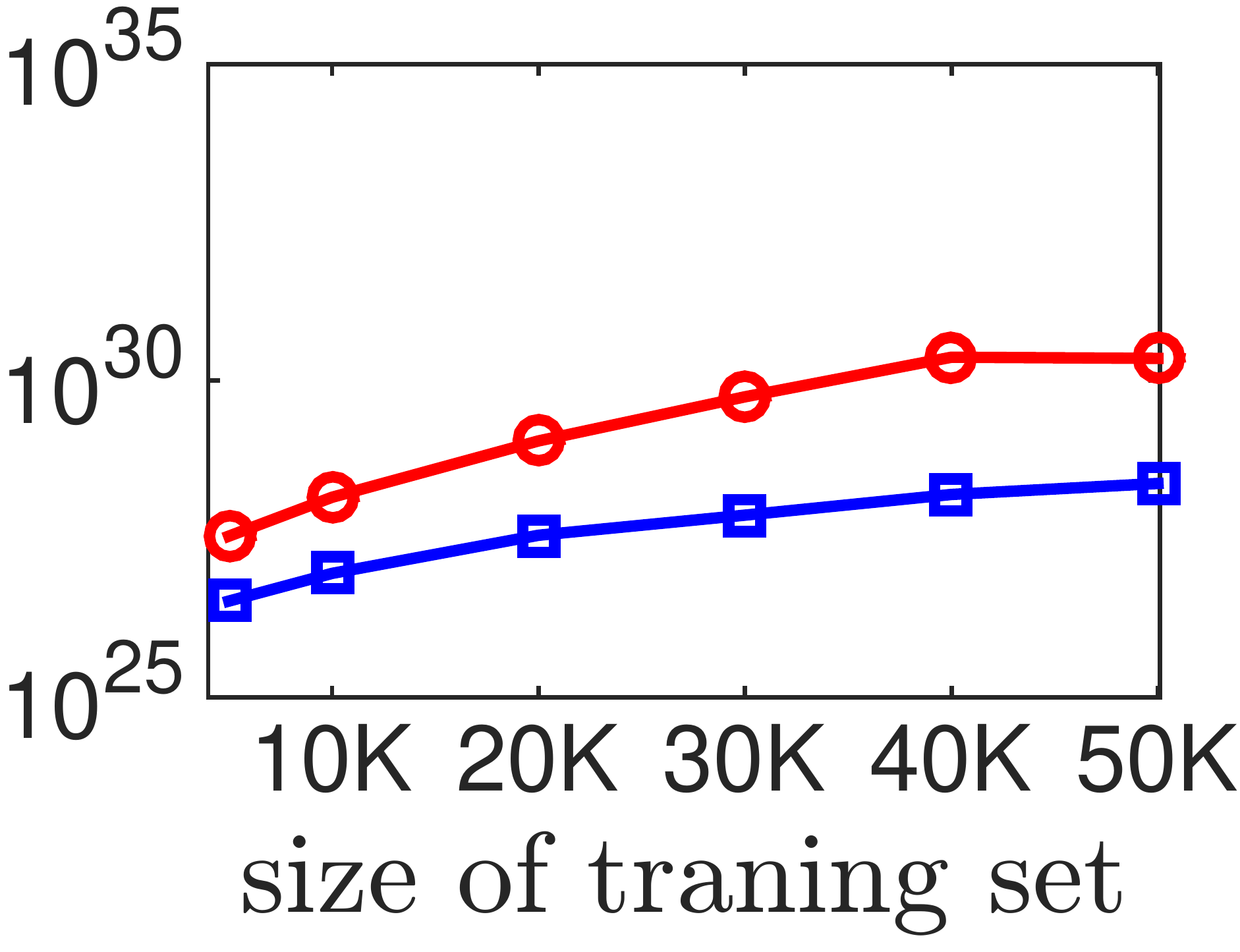}
\includegraphics[width=.245\textwidth]{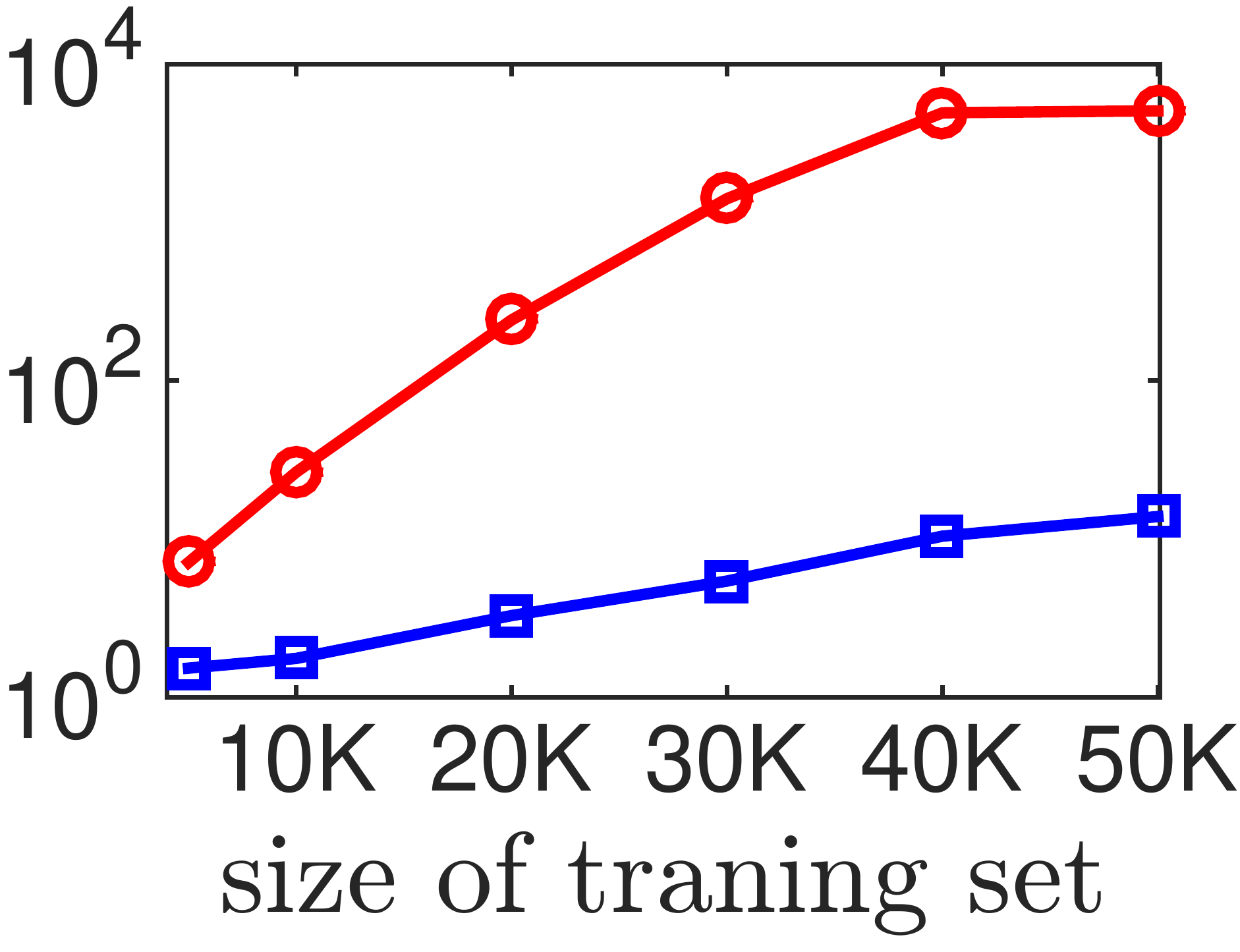}
\includegraphics[width=.245\textwidth]{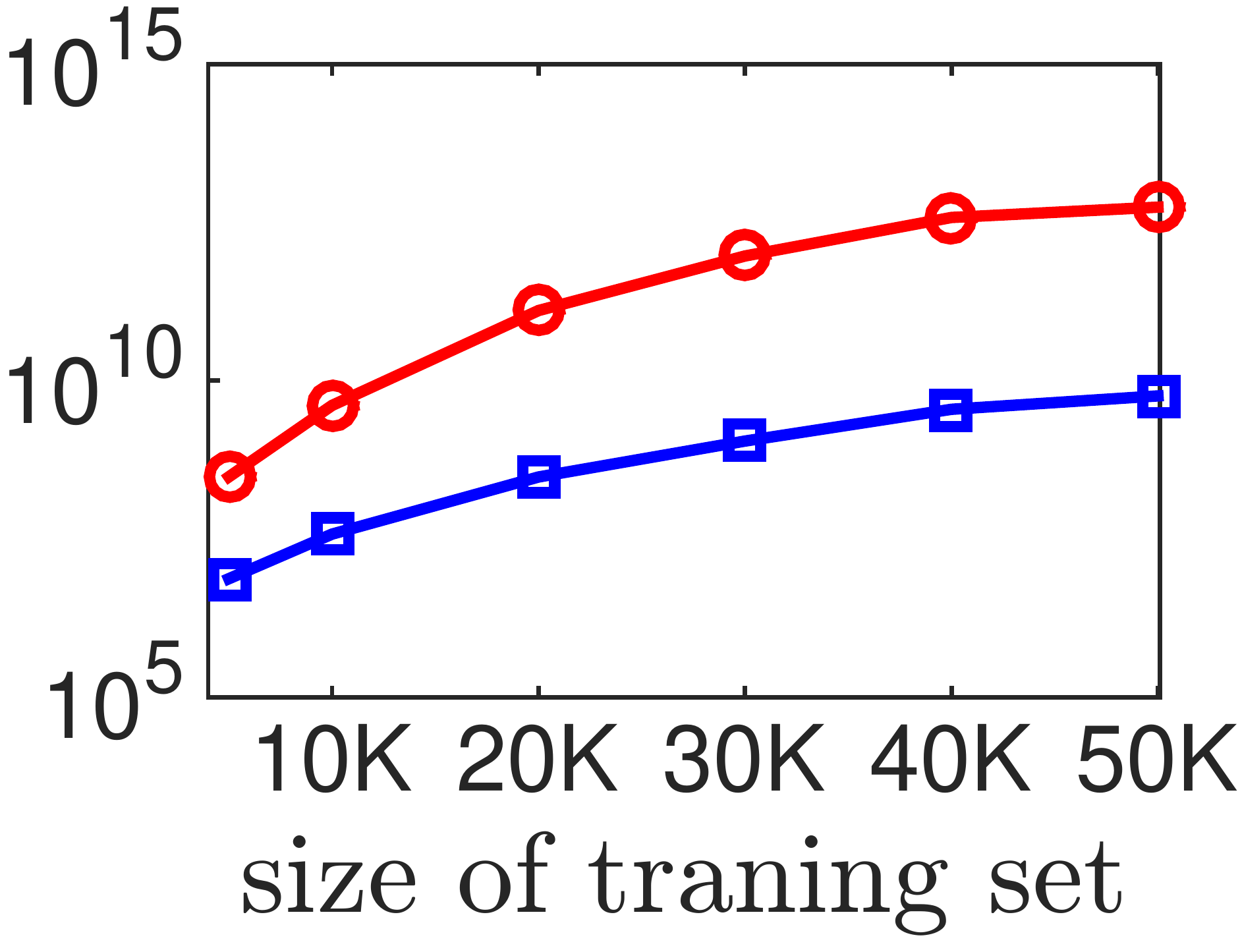}
\begin{picture}(0,0)(0,0)
{\small \put(-162, 90){$\ell_2$ norm}\put(-70, 90){$\ell_1$-path norm}\put(28, 90){$\ell_2$-path norm}\put(130, 90){spectral norm}}
\end{picture}
\caption[\small Comparing complexity measures on a VGG network trained with true or random labels.]{\small Comparing different complexity measures on a VGG network trained on subsets of CIFAR10 dataset with true (blue line) or random (red line) labels. We plot norm divided by margin to avoid scaling issues (see Section~\ref{sec:complexity-measures}), where for each complexity measure, we drop the terms that only depend on depth or number of hidden units; e.g. for $\ell_2$-path norm we plot $\gamma_{\margin}^{-2}\sum_{j \in \prod_{k=0}^d[h_k]}\prod_{i=1}^d W_i^2[j_i,j_{i-1}]$.We also set the margin over training set $S$ to be $5^{th}$-percentile of the margins of the data points in $S$, i.e. $\text{Prc}_5\left\{f_\vecw(x_i)[y_i] - \max_{y\neq y_i} f_\vecw(\vecx)[y] | (x_i,y_i)\in S\right\}$. In all experiments, the training error of the learned network is zero. The plots indicate that these measures can explain the generalization as the complexity of model learned with random labels is always higher than the one learned with true labels. Furthermore, the gap between the complexity of models learned with true and random labels increases as we increase the size of the training set.}
\label{fig:norm-true-random}
\end{figure}

In our experiments on PAC-Bayes bound, we observe that looking at both sharpness and norm in Equation~\ref{eq:margin}
jointly indeed makes a better predictor for the generalization error.
As discussed earlier, \citet{dziugaite2017computing} numerically
optimize the overall PAC-Bayes generalization bound over a family of
multivariate Gaussian distributions (different choices of
perturbations and priors). Since the precise way the sharpness and
KL-divergence are combined is not tight, certainly not in
\eqref{eq:pacbayes2}, nor in the more refined bound used by
\citet{dziugaite2017computing}, we prefer shying away from numerically
optimizing the balance between sharpness and the KL-divergence.
Instead, we propose using bi-criteria plots, where sharpness and
KL-divergence are plotted against each other, as we vary the
perturbation variance.  For example, in the center and right panels of
Figure \ref{fig:sharpness-true-random} we show such plots for networks
trained on true and random labels respectively.  We see that although
sharpness by itself is not sufficient for explaining generalization in
this setting (as we saw in the left panel), the bi-criteria plots are
significantly lower for the true labels.  Even more so, the change in
the bi-criteria plot as we increase the number of samples is
significantly larger with random labels, correctly capturing the
required increase in capacity.  For example, to get a fixed value of
expected sharpness such as $\error=0.05$, networks trained with random
labels require higher norm compared to those trained with true labels.
This behavior is in agreement with our earlier discussion, that
sharpness is sensitive to scaling of the parameters and is not a
capacity control measure as it can be artificially changed by scaling
the network. However, combined with the norm, sharpness does seem to
provide a capacity measure.

\begin{figure}[t]
\centering
\includegraphics[width=.32\textwidth]{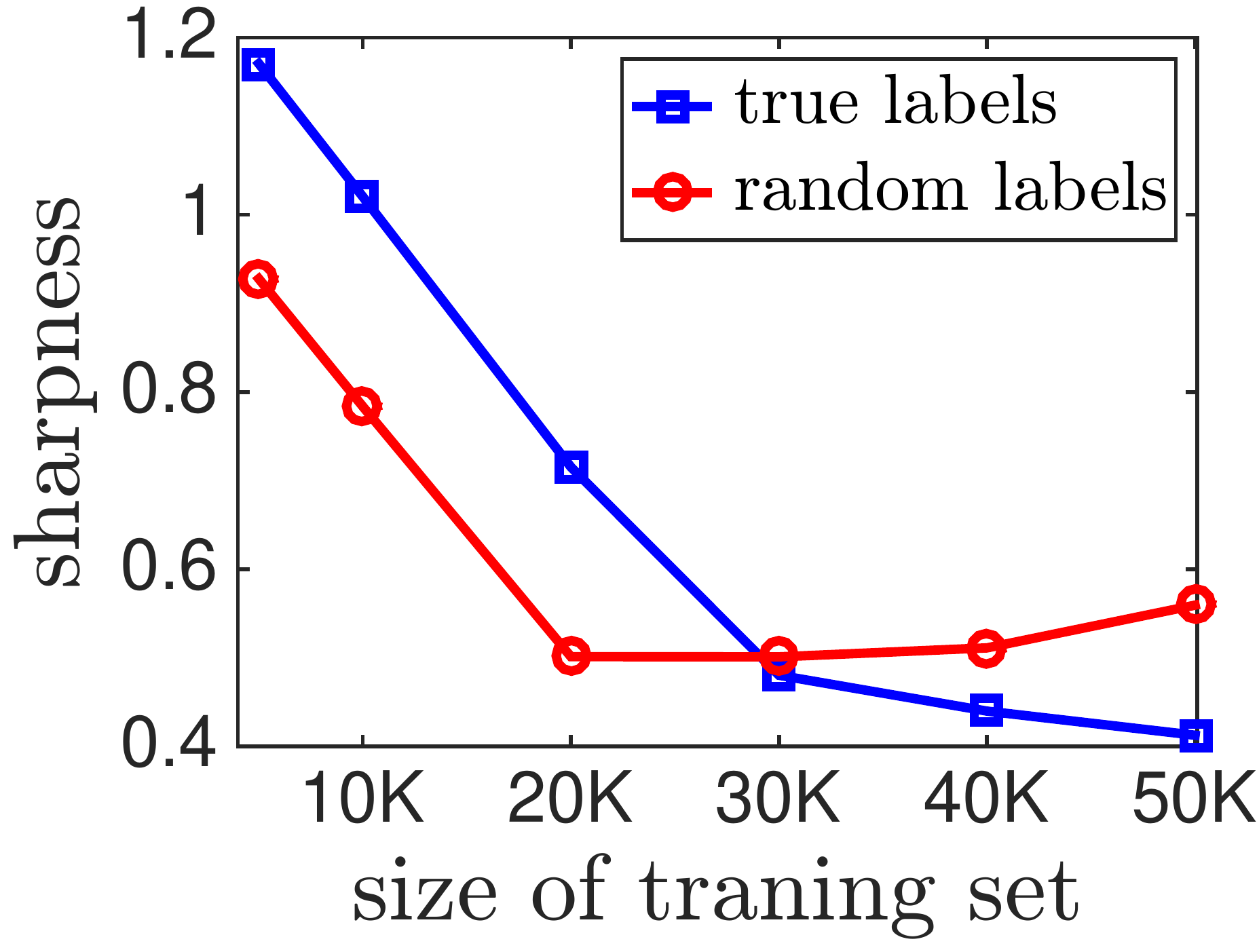}
\includegraphics[width=.32\textwidth]{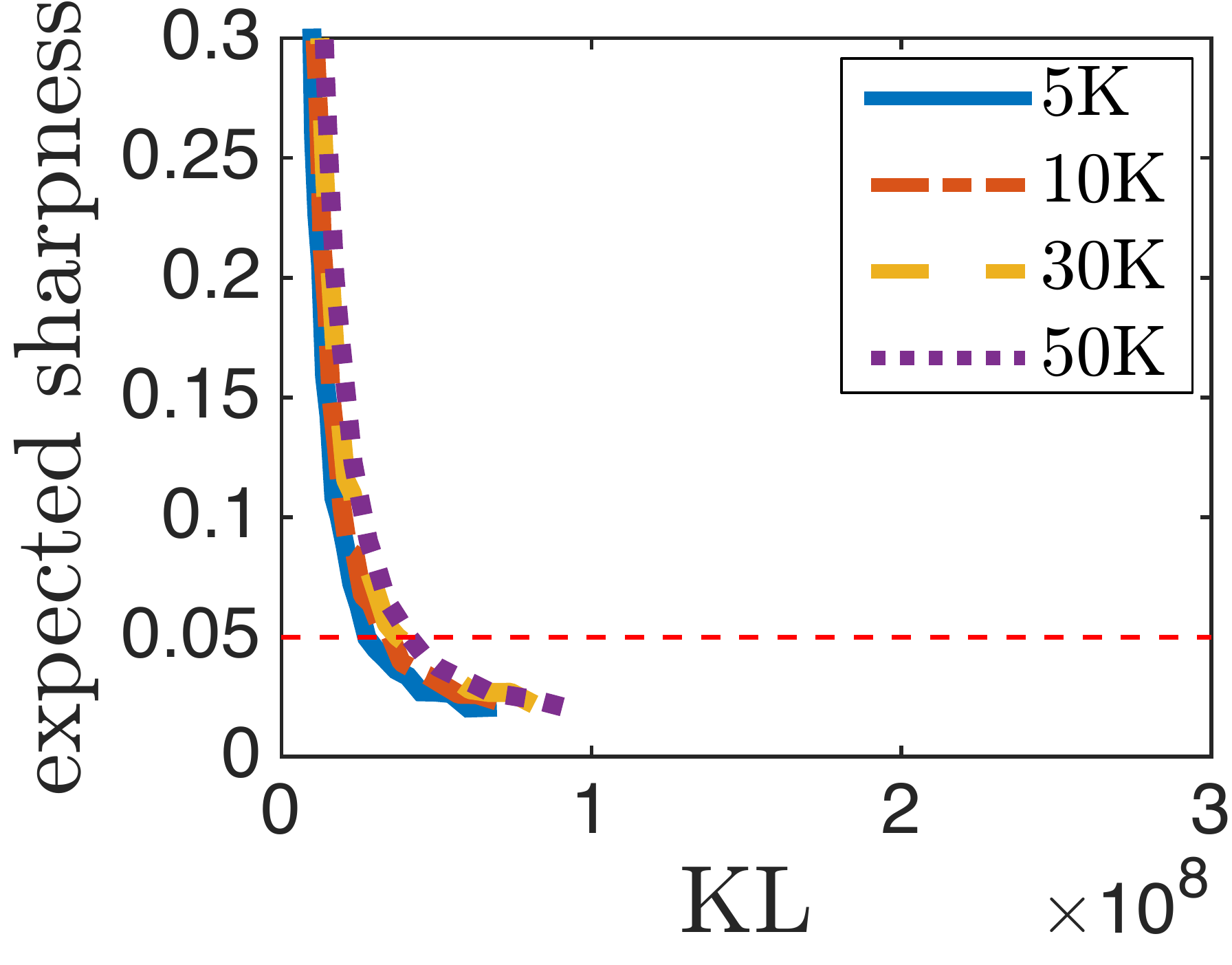}
\includegraphics[width=.32\textwidth]{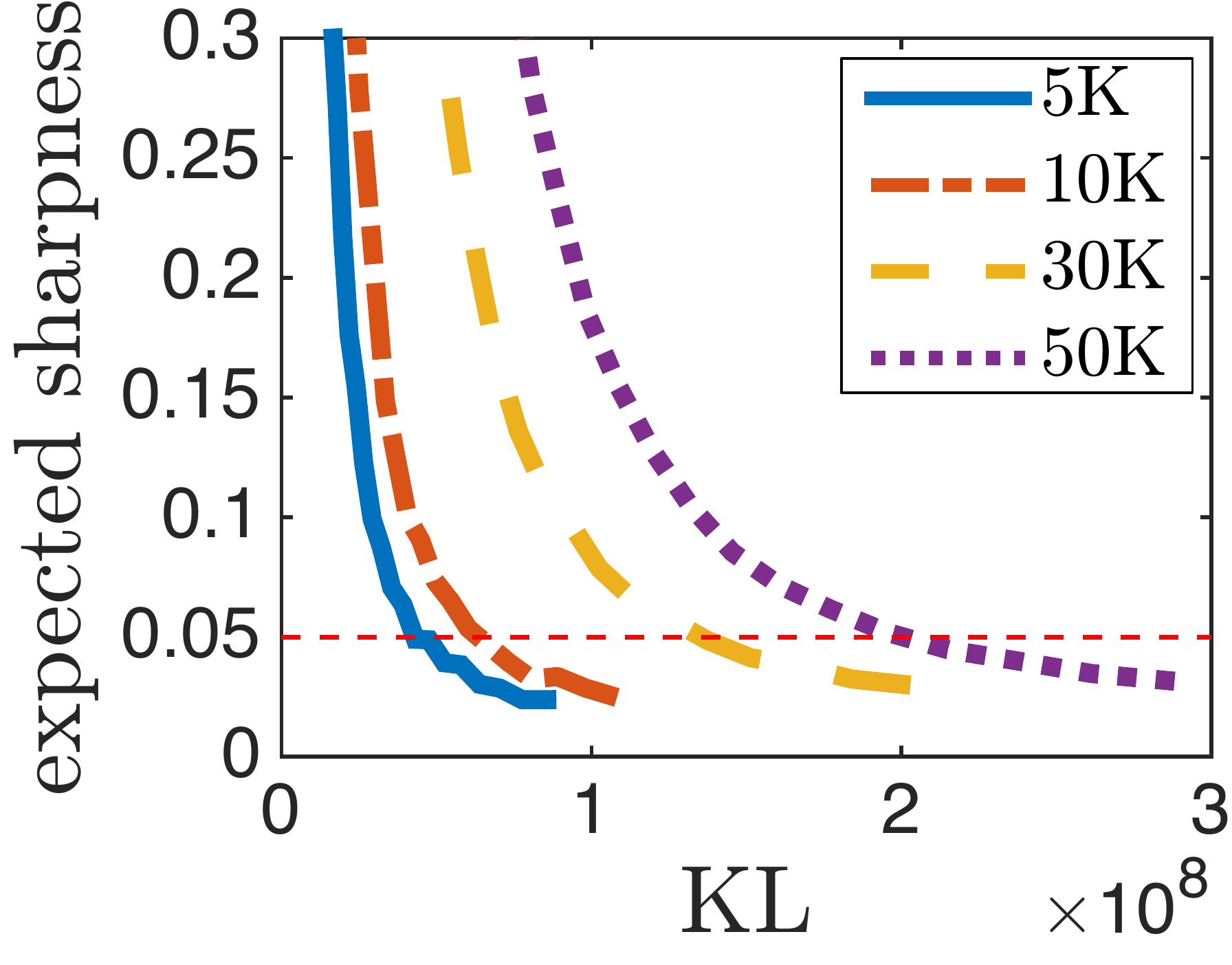}
\caption[ \small Sharpness and PAC-Bayes measures on a VGG network trained with true or random labels.]{\small Sharpness and PAC-Bayes measures on a VGG network
  trained on subsets of CIFAR10 dataset with true or random labels. In
  the left panel, we plot max sharpness, which we calculate as
  suggested by \citet{keskar2016large} where the perturbation for
  parameter $w_i$ has magnitude $5.10^{-4}(\abs{w_i}+1)$. The middle
  and right plots demonstrate the relationship between expected
  sharpness and KL divergence in PAC-Bayes analysis for true and
  random labels respectively. For PAC-Bayes plots, each point in the
  plot correspond to a choice of variable $\alpha$ where the standard
  deviation of the perturbation for the parameter $i$ is
  $\alpha(10\abs{w_i}+1)$. The corresponding $KL$ to each $\alpha$ is
  nothing but weighted $\ell_2$ norm where the weight for each
  parameter is the inverse of the standard deviation of the
  perturbation.}
\begin{picture}(0,0)(0,0)
{\small \put(-6, 200){true labels}\put(120, 200){random labels}}
\end{picture}
\label{fig:sharpness-true-random}
\end{figure}

\section{Different Global Minima}

Given different global minima of the training loss on the same
training set and with the same model class, can these measures
indicate which model is going to generalize better? In order to verify
this property, we can calculate each measure on several different
global minima and see if lower values of the measure imply lower
generalization error. In order to find different global minima for the
training loss, we design an experiment where we force the optimization
methods to converge to different global minima with varying
generalization abilities by forming a confusion set that includes
samples with random labels. The optimization is done on the loss that
includes examples from both the confusion set and the training set.
Since deep learning models have very high capacity, the optimization
over the union of confusion set and training set generally leads to a
point with zero error over both confusion and training sets which thus
is a global minima for the training set.

We randomly select a subset of CIFAR10 dataset with 10000 data points
as the training set and our goal is to find networks that have zero
error on this set but different generalization abilities on the test
set. In order to do that, we train networks on the union of the
training set with fixed size 10000 and confusion sets with varying
sizes that consists of CIFAR10 samples with random labels; and we
evaluate the learned model on an independent test set. The trained
network achieves zero training error but as shown in Figure
\ref{fig:cifar-core}, the test error of the model increases with
increasing size of the confusion set. The middle panel of this Figure
suggests that the norm of the learned networks can indeed be
predictive of their generalization behavior. However, we again observe
that sharpness has a poor behavior in these experiments. The right
panel of this figure also suggests that PAC-Bayes measure of joint
sharpness and KL divergence, has better behavior - for a fixed
expected sharpness, networks that have higher generalization error,
have higher norms.

\begin{figure}[t]
\centering
\includegraphics[width=.32\textwidth]{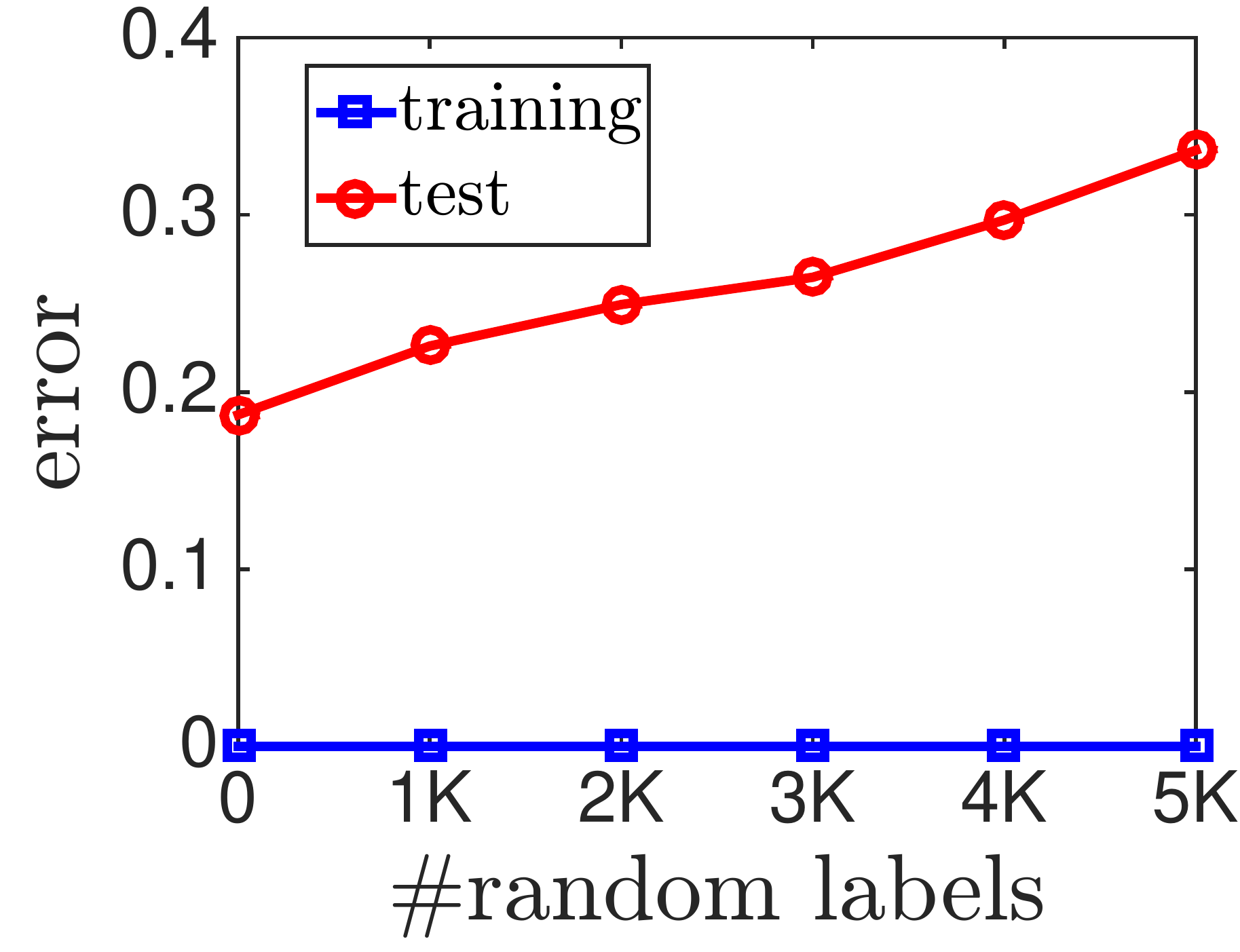}
\includegraphics[width=.32\textwidth]{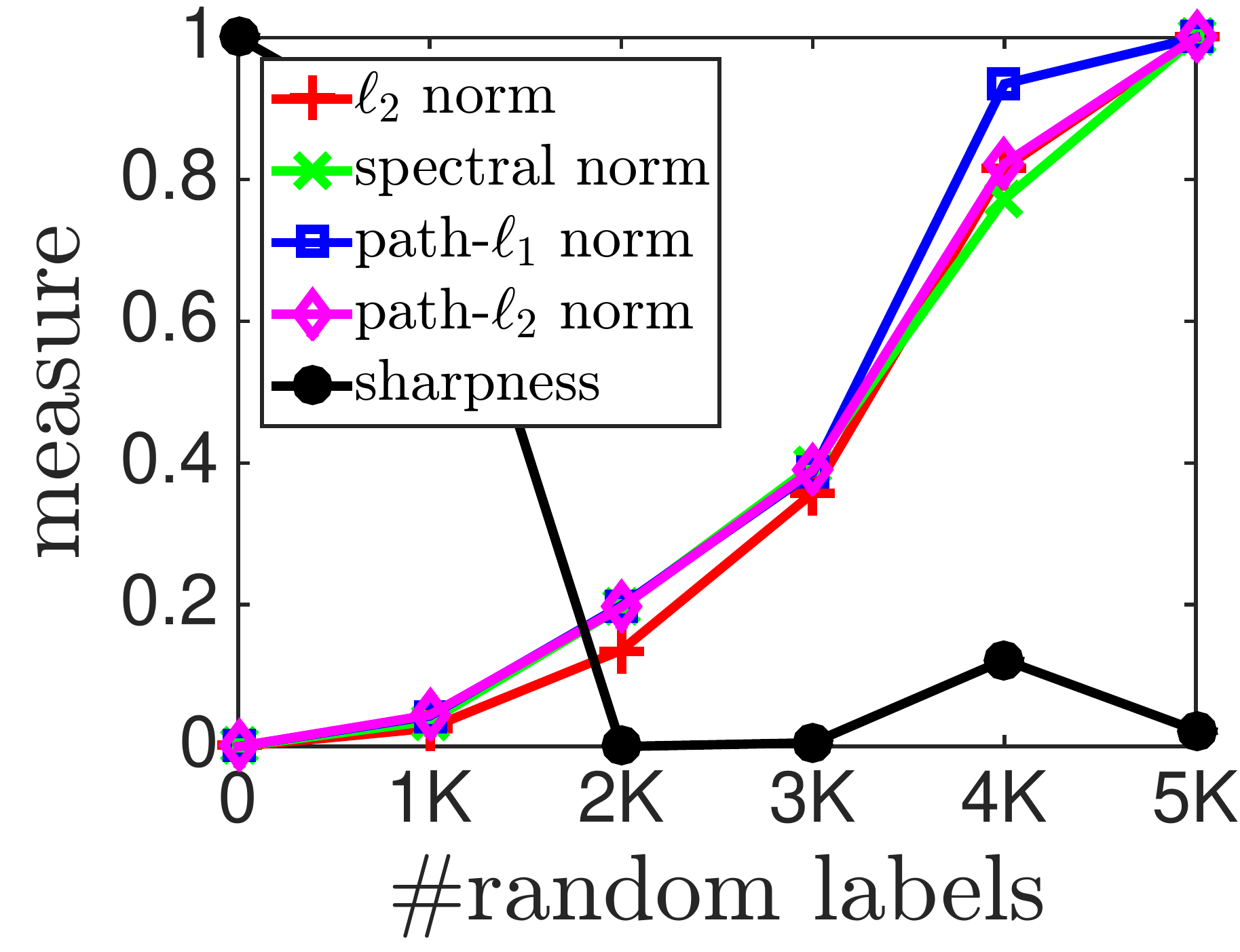}
\includegraphics[width=.32\textwidth]{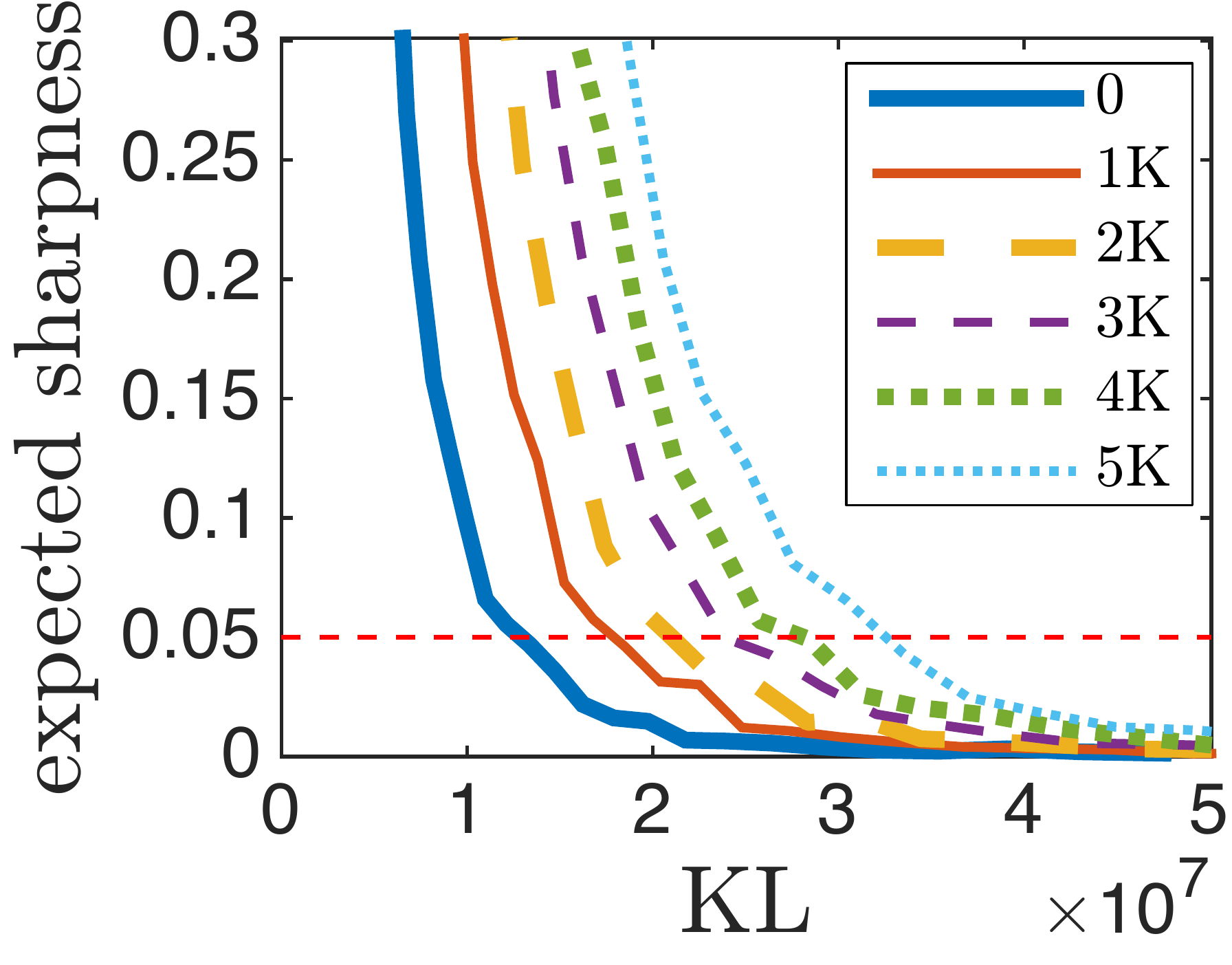}
\caption[\small Experiments on global minima with poor generalization.]{\small Experiments on global minima with poor generalization. For each experiment, a VGG network is trained on union of a subset of CIFAR10 dataset with size 10000 containing samples with true labels and another subset of CIFAR10 datasets with varying size containing random labels. The learned networks are all global minima for the objective function on the subset with true labels. The left plot indicates the training and test errors based on the size of the set with random labels. The plot in the middle shows change in different measures based on the size of the set with random labels. The plot on the right indicates the relationship between expected sharpness and KL in PAC-bayes for each of the experiments. Measures are calculated as explained in Figures \ref{fig:norm-true-random} and \ref{fig:sharpness-true-random}.}
\label{fig:cifar-core}
\end{figure}

\section{Increasing Network Size}

We also repeat the experiments conducted by \citet{neyshabur15b} where
a fully connected feedforward network is trained on MNIST dataset with
varying number of hidden units and we check the values of different
complexity measures on each of the learned networks.The left panel in
Figure \ref{fig:hidden} shows the training and test error for this
experiment. While 32 hidden units are enough to fit the training data,
we observe that networks with more hidden units generalize better.
Since the optimization is done without any explicit regularization,
the only possible explanation for this phenomenon is the implicit
regularization by the optimization algorithm. Therefore, we expect a
sensible complexity measure to decrease beyond 32 hidden units and
behave similar to the test error. Different measures are reported for
learned networks. The middle panel suggest that all margin/norm based
complexity measures decrease for larger networks up to 128 hidden
units. For networks with more hidden units, $\ell_2$ norm and
$\ell_1$-path norm increase with the size of the network. The middle
panel suggest that $\ell_2$-path norm can provide some explanation for
this phenomenon. However, as we discussed in
Section~\ref{sec:complexity-measures}, the actual complexity measure based on
$\ell_2$-path norm also depends on the number of hidden units and
taking this into account indicates that the measure based on
$\ell_2$-path norm cannot explain this phenomenon. This is also the
case for the margin based measure that depends on the spectral norm.
In subsection \ref{subsec:lipschitz} we discussed another complexity
measure that also depends the spectral norm through Lipschitz
continuity or robustness argument. Even though this bound is very
loose, it is monotonic with respect to the spectral norm that is
reported in the plots. Unfortunately, we do observe some increase in
spectral norm by increasing number of hidden units beyond 512. The
right panel shows that the joint PAC-Bayes measure decrease for larger
networks up to size 128 but fails to explain this generalization
behavior for larger networks. This suggests that the measures looked so
far are not sufficient to explain all the generalization phenomenon
observed in neural networks.

\begin{figure}[t]
\centering
\includegraphics[width=.32\textwidth]{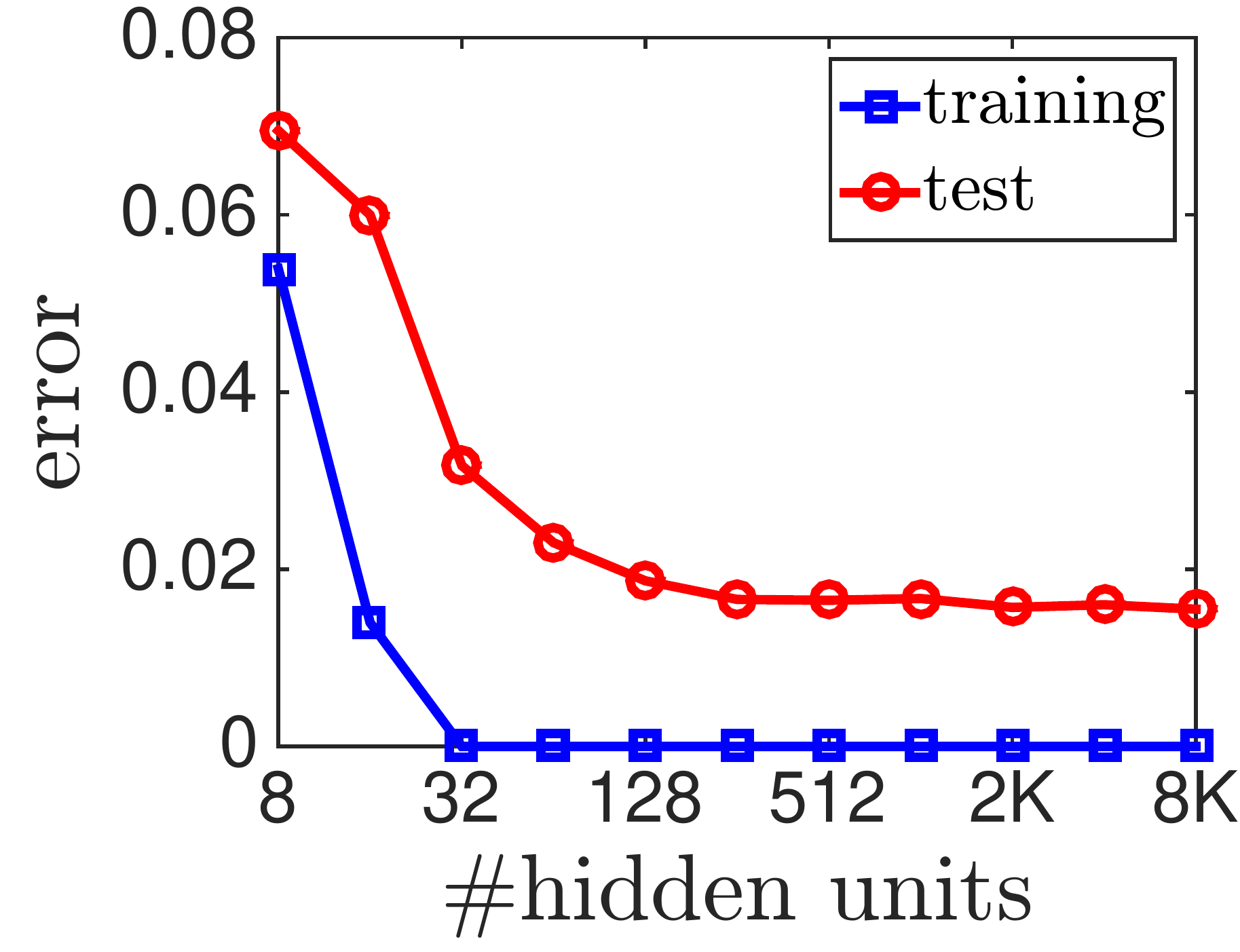}
\includegraphics[width=.32\textwidth]{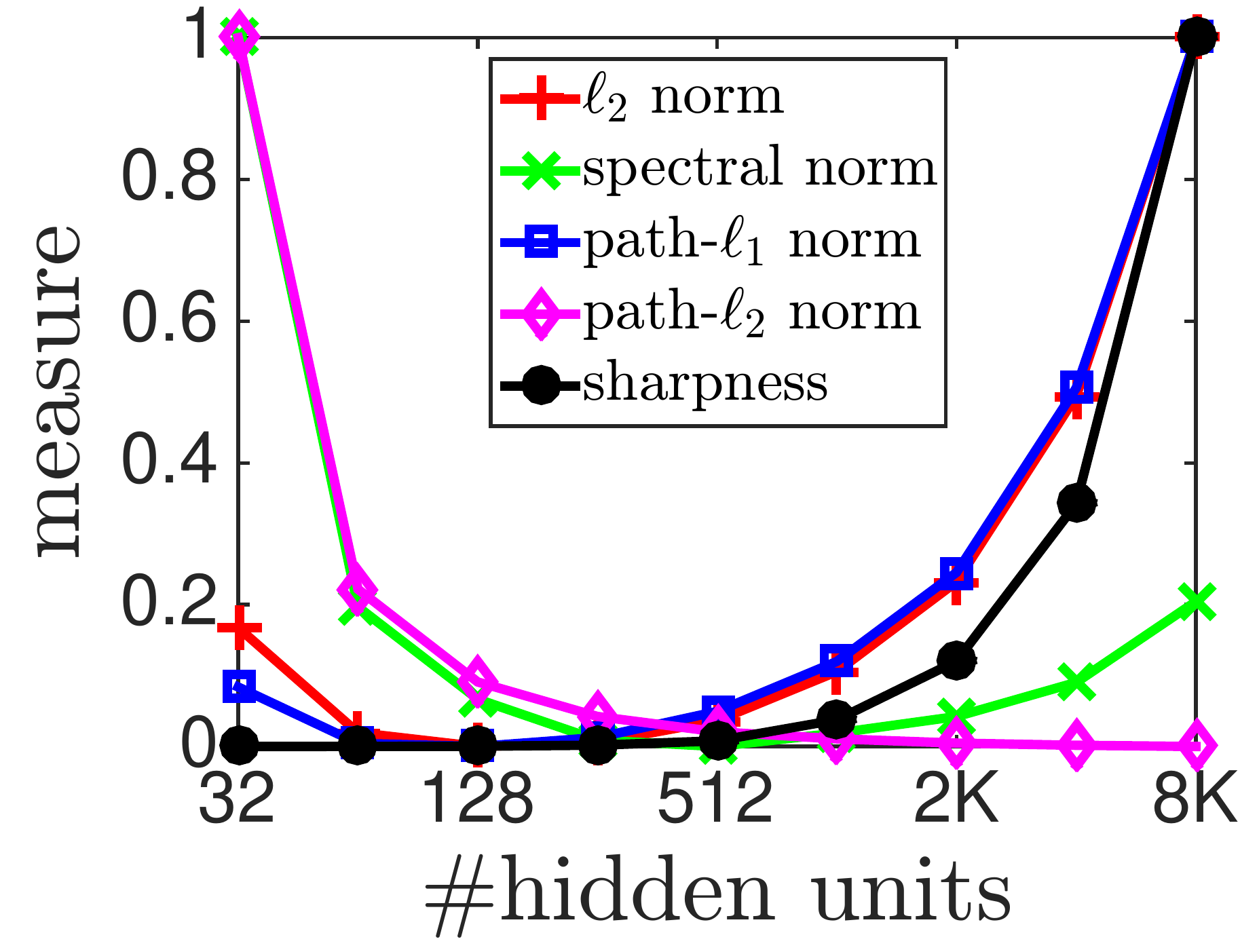}
\includegraphics[width=.32\textwidth]{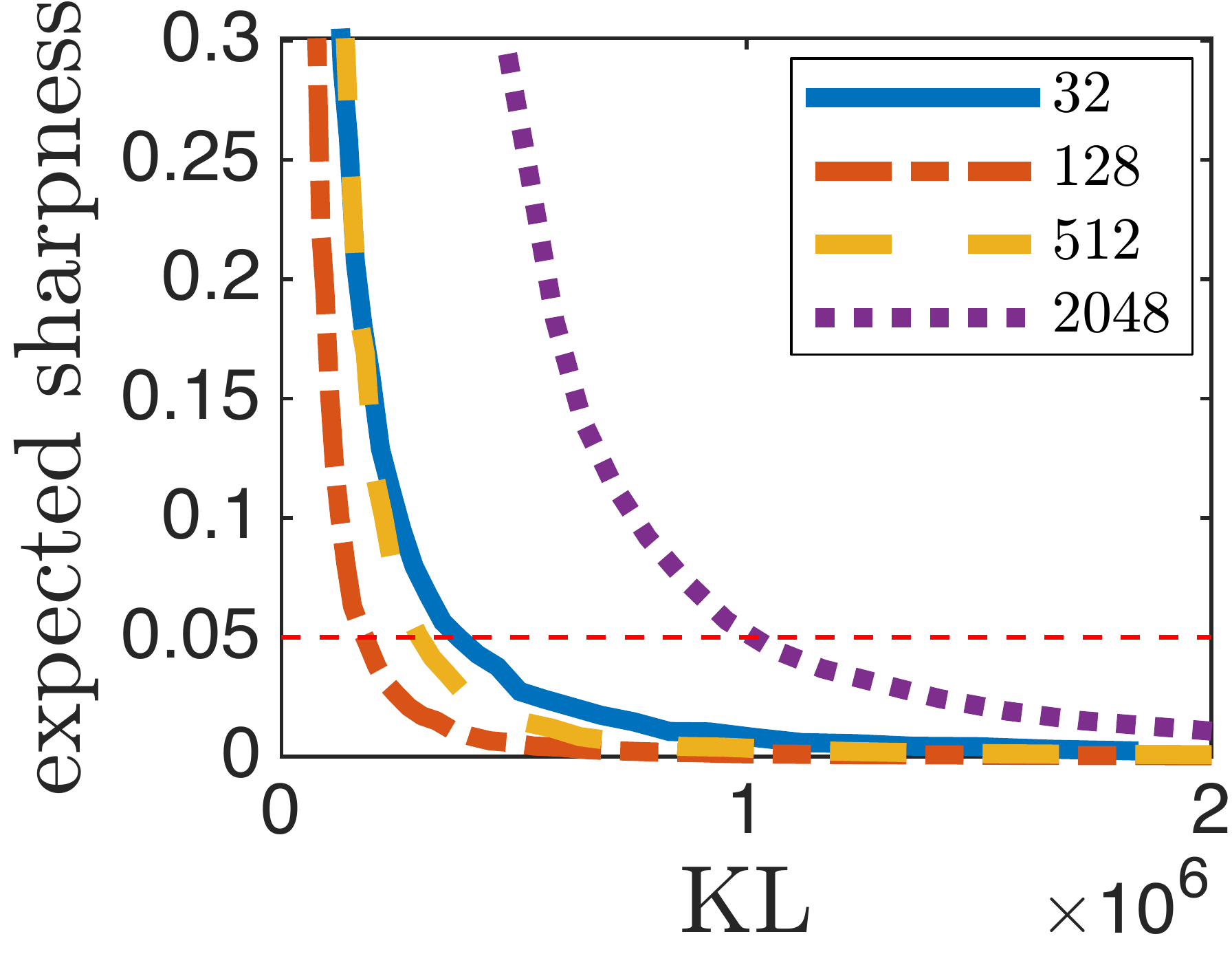}
\caption[\small The effect of increasing network size on generalization.]{\small The generalization of two layer perceptron trained on MNIST dataset with varying number of hidden units. The left plot indicates the training and test errors. The test error decreases as the size increases. The middle plot shows different measures for each of the trained networks. The plot on the right indicates the relationship between expected sharpness and KL in PAC-Bayes for each of the experiments. Measures are calculated as explained in Figures \ref{fig:norm-true-random} and \ref{fig:sharpness-true-random}.}
\label{fig:hidden}
\end{figure}

\part[\Large Geometry of Optimization and Generalization]{Geometry of Optimization and Generalization} \label{part:geometry}
\chapter{Invariances} \label{chap:invariances}

In Chapter~\ref{chap:implicit}, we discussed how optimization is related to generalization due to the implicit regularization. 
Revisiting the choice of gradient descent, we recall that optimization is also inherently tied to a choice of geometry
or measure of distance, norm or divergence. Gradient descent for example is tied to the $\ell_2$ norm as it
is the steepest descent with respect to $\ell_2$ norm in the parameter space, while coordinate descent corresponds
to steepest descent with respect to the $\ell_1$ norm and exp-gradient (multiplicative weight) updates is tied to
an entropic divergence. Moreover, at least when the objective function is convex, convergence behavior is
tied to the corresponding norms or potentials. For example, with gradient descent, or SGD, convergence
speeds depend on the $\ell_2$ norm of the optimum.  The norm or divergence can be viewed as a
regularizer for the updates.  There is therefore also a strong link
between regularization for optimization and regularization for
learning: optimization may provide implicit regularization in terms of
its corresponding geometry, and for ideal optimization performance the
optimization geometry should be aligned with inductive bias driving
the learning \cite{srebro11}.

Is the $\ell_2$ geometry on the weights the appropriate geometry for
the space of deep networks?  Or can we suggest a geometry with more
desirable properties that would enable faster optimization and perhaps
also better implicit regularization?  As suggested above, this
question is also linked to the choice of an appropriate regularizer
for deep networks.

Focusing on networks with RELU activations in this section, we observe that scaling down the incoming edges to a hidden unit and scaling up the outgoing edges by the same factor yields an equivalent network computing the same function. Since predictions are invariant to such rescalings, it is natural to seek a geometry, and corresponding optimization method, that is similarly invariant. In this chapter, we study invariances in feedforward networks with shared weights.

\section{Invariances in Feedforward and Recurrent Neural Networks}

Feedforward networks are highly
over-parameterized, i.e. there are many weight settings
${\vecw}$ that represent the same function $f_{\vecw}$.  Since our true object of
interest is the function $f$, and not the identity ${\vecw}$ of the
weghts, it would be beneficial if optimization would depend only
on $f_{\vecw}$ and not get ``distracted'' by difference in ${\vecw}$ that does not
affect $f_{\vecw}$.  It is therefore helpful to study the transformations on
the weights that will not change the function presented by the
network and come up with methods that their performance is not
affected by such transformations. 

\begin{definition}
We say a class of neural networks is \emph{invariant} to a transformation $\calT$ if for any parameter setting $\vecp$ and its corresponding weights $\vecw$, $f_{\vecw} =  f_{\calT(\vecw)}$. Similarly, we say an update rule $\calA$ is \emph{invariant} to $\calT$ if for any $\vecp$ and its corresponding $\vecw$, $f_{\calA(\vecw)} =  f_{\calA(\calT(\vecw))}$. 
\end{definition}

Invariances have also been studied as different mappings from the parameter space to the same function space~\cite{ollivier2015riemannian} while we define the transformation as a mapping inside a fixed parameter space. A very important invariance in feedforward networks is \emph{node-wise rescaling}~\cite{neyshabur2016data}. For any internal node $v$ and any scalar $\alpha>0$, we can multiply all incoming weights into $v$ (i.e. $w_{u\rightarrow v}$ for any $(u\rightarrow v)\in E$) by $\alpha$ and all the outgoing weights (i.e. $w_{v\rightarrow u}$ for any $(v\rightarrow u)\in E$) by $1/\alpha$ without changing the function computed by the network. Not all node-wise rescaling transformations can be applied in feedforward nets with shared weights. This is due to the fact that some weights are forced to be equal and therefore, we are only allowed to change them by the same scaling factor.

\begin{definition}
Given a class of neural networks, we say an invariant transformation $\calT$ that is defined over edge weights is \emph{feasible} for parameter mapping $\pi$ if the shared weights remain equal after the transformation, i.e. for any $i$ and for any $e,e'\in E_i$, $\calT(\vecw)_e =\calT(\vecw)_{e'}$. 
\end{definition}

We have discussed the complete characterize all feasible node-wise invariances of RNNs in \cite{neyshabur2016path}.

\begin{figure}[t!] \label{fig:unbalanced}
\hspace{0.5in}
\subfloat[Training on MNIST]{
  \includegraphics[width=0.23\textwidth]{./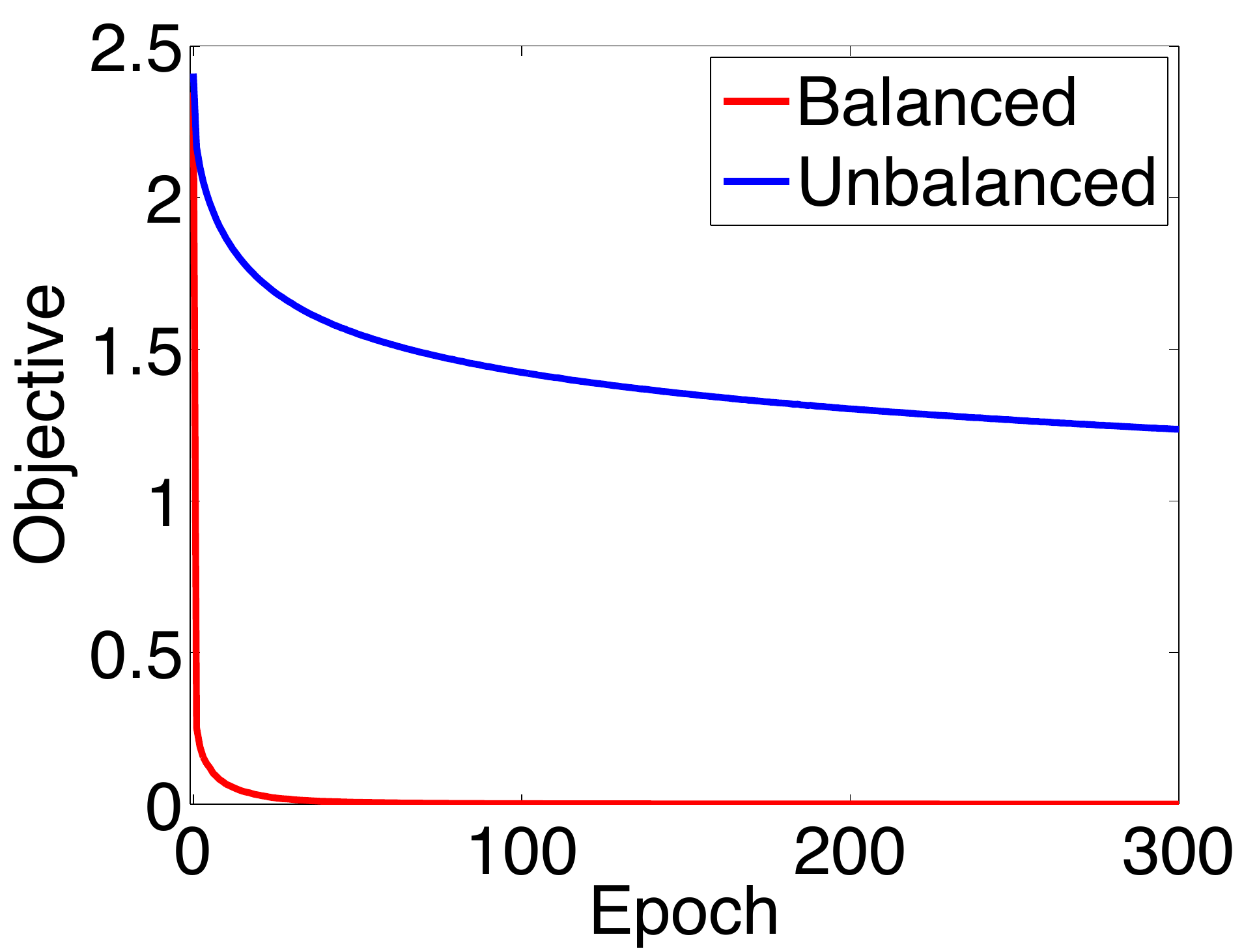}\label{fig:compare-a}
 }\hspace{1in}
 \subfloat[  \small Weight explosion in an unbalanced network]{
 \hspace{0.4in}
  \includegraphics[width=0.32\textwidth]{./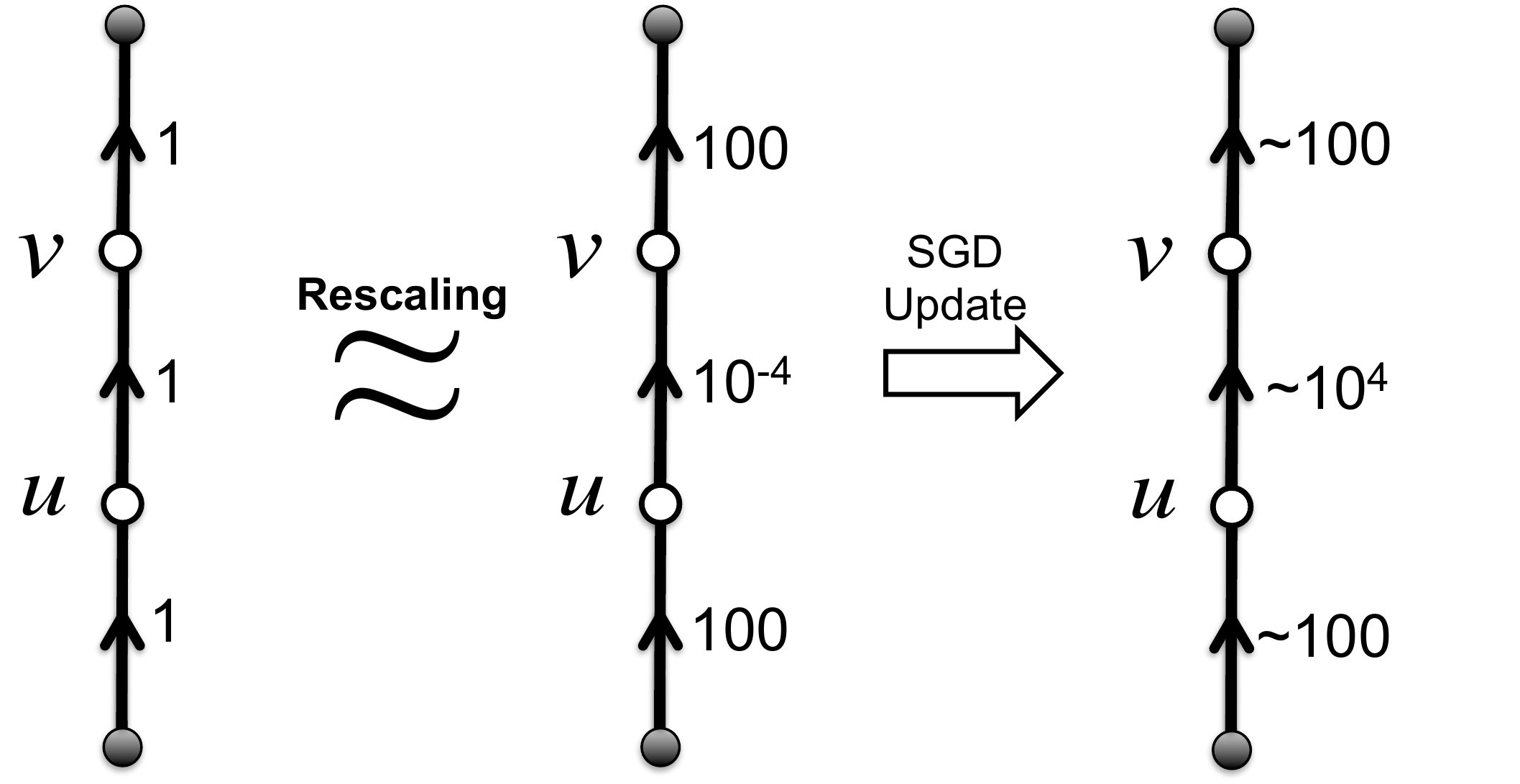}\label{fig:compare-b}
  \hspace{0.3in}
 }
 \newline
 \begin{center}
  \subfloat[ \small Poor updates in an unbalanced network]{
  \includegraphics[width=0.9\textwidth]{./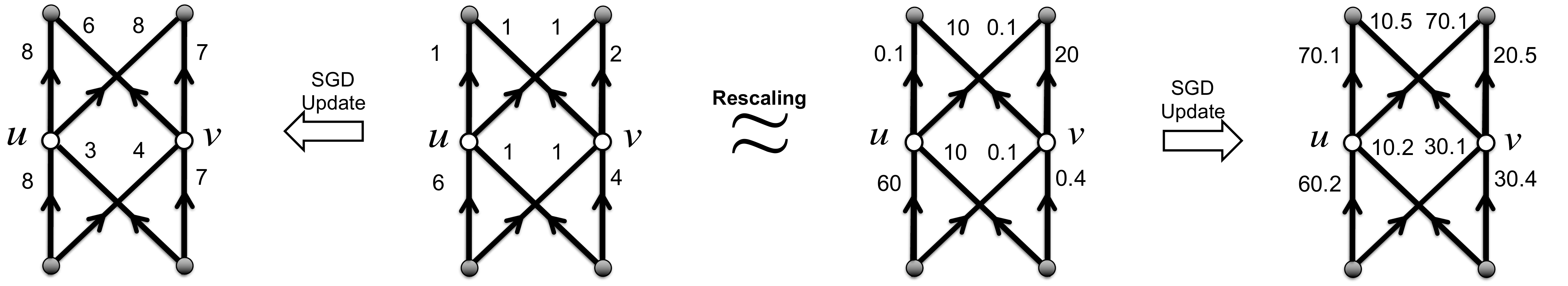}\label{fig:compare-c}
   }
   \end{center}
 \caption[\small Invariance in fully connected feedforward networks]{ \small (a): Evolution of the cross-entropy error function when training a 
feed-forward network on MNIST with two hidden layers, each containing 4000 hidden units.
The unbalanced initialization (blue curve) is generated by applying a sequence of rescaling functions on the balanced initializations (red curve). (b): Updates for a simple case where the input is $x=1$, 
thresholds are set to zero (constant), the stepsize is 1, and the gradient with respect to output is $\delta = -1$. (c): Updated network for the case where the input is $x=(1,1)$, thresholds are set to zero (constant), the stepsize is 1, and the gradient with respect to output is $\delta=(-1,-1)$. }
 \label{fig:unbalanced}
\end{figure}

Unfortunately, gradient descent is {\em not} rescaling invariant. The main problem with the gradient updates is that scaling down the weights of an edge will also scale up the gradient which, as we see later, is exactly the opposite of what is expected from a rescaling invariant update. 

Furthermore, gradient descent performs very poorly on ``unbalanced''
networks.  We say that a network is {\em balanced} if the norm of
incoming weights to different units are roughly the same or within a
small range. For example, Figure~\ref{fig:unbalanced}\subref{fig:compare-a} shows a huge
gap in the performance of SGD initialized with a randomly generated
balanced network, when training on MNIST, compared to a
network initialized with unbalanced weights. Here
the unbalanced weights are generated by applying a sequence of random
rescaling functions on the balanced weights to create a rescaling equivalent unbalanced network.

In an unbalanced network, gradient descent updates could blow up the smaller weights, while keeping the larger weights almost unchanged. This is illustrated in Figure~\ref{fig:unbalanced}\subref{fig:compare-b}. If this were the only issue, one could scale down all the weights after each update. However, in an unbalanced network,  the relative changes in the weights are also very different compared to a balanced network. For example, Figure \ref{fig:unbalanced}\subref{fig:compare-c} shows how two rescaling equivalent networks could end up computing a very different function after only a single update.

Therefore, it is helpful to understand what are the \emph{feasible} node-wise rescalings for RNNs. In the following theorem, we characterize all feasible node-wise invariances in RNNs.

\begin{SCfigure}
\vspace{-11pt}
\includegraphics[width=10cm]{./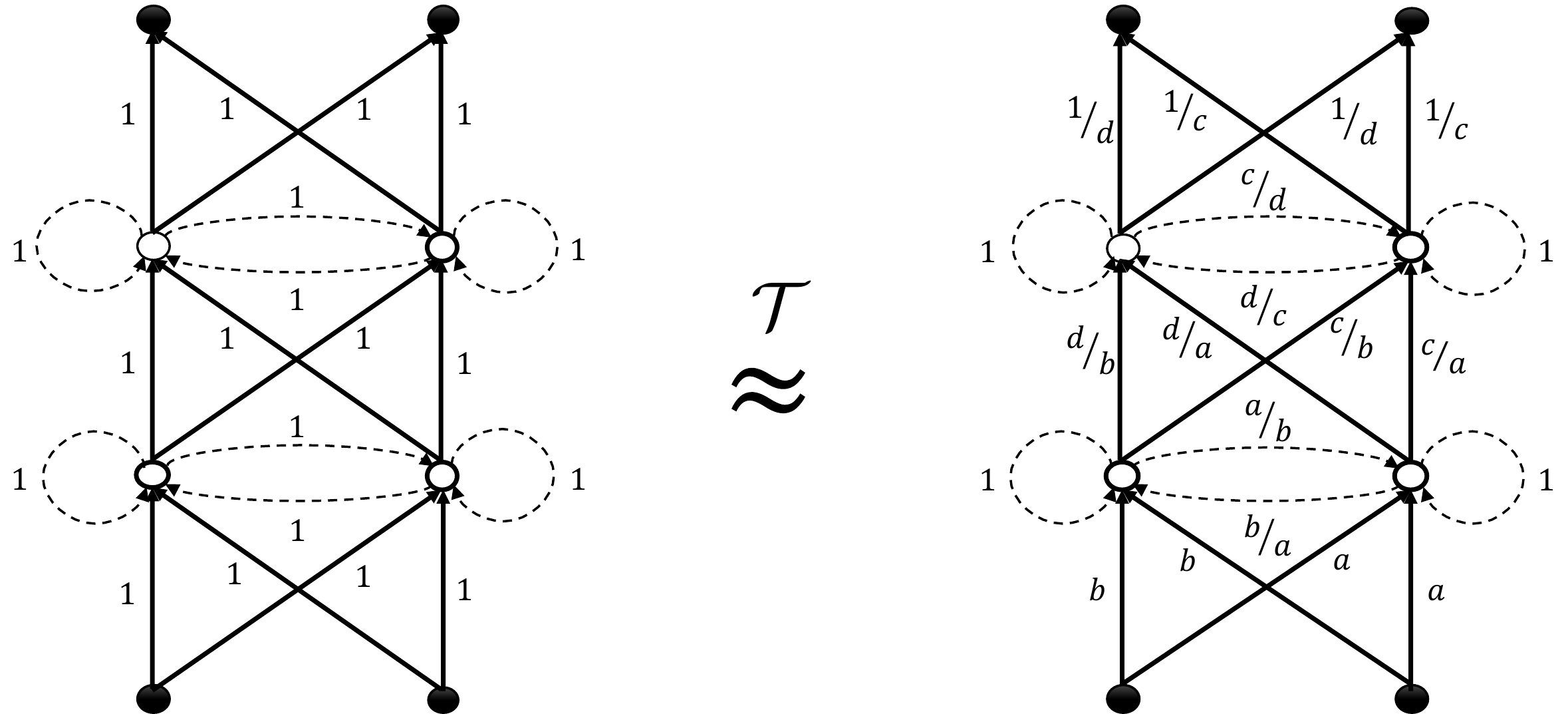}
\caption[\small Invariances in recurrent neural networks (RNNs).]{\small An example of invariances in an RNN with two hidden layers each of which has 2 hidden units. The dashed lines correspond to recurrent weights. The network on the left hand side is equivalent (i.e. represents the same function) to the network on the right for any nonzero $\alpha^1_1=a$, $\alpha^1_2=b$, $\alpha^2_1=c$, $\alpha^2_2=d$. }
\label{fig:invariance}
\end{SCfigure}

\begin{theorem}\label{thm:net-inv}
For any $\alpha$ such that $\alpha^i_j>0$, any Recurrent Neural Network with ReLU activation is invariant to the transformation
$\calT_\alpha\left(\left[\Win,\Wrec,\Wout\right]\right) = \left[\calT_{\In,\alpha}\left(\Win\right),\calT_{\text{rec},\alpha}\left(\Wrec\right),\calT_{\Out,\alpha}\left(\Wout\right)\right]$ where for any $i,j,k$:
\begin{align}
&\calT_{\In,{\alpha}}( \Win)^{i}[j,k]= 
\begin{cases}
\alpha^i_j \Win^{i}[j,k] & i=1, \\ 
\left(\alpha^i_j /\alpha^{i-1}_k \right) \Win^{i}[j,k] & 1< i <d,\\
\end{cases}\\
\calT_{\text{rec},\alpha}( \Wrec )^{i}[j,k]&=
\left(\alpha^i_j /\alpha^{i}_k \right) \Wrec^{i}[j,k],\qquad \calT_{\Out,{\alpha}}( \Wout )[j,k]=\left(1 /\alpha^{d-1}_k \right) \Wout[j,k]. \notag
\end{align}
Furthermore, any feasible node-wise rescaling transformation can be presented in the above form.
\end{theorem}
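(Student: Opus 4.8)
The plan is to establish the theorem's two assertions in turn: first that $\calT_\alpha$ is an invariant transformation, and second that every feasible node-wise rescaling arises this way. For the first part I would observe that $\calT_\alpha$ is exactly the node-wise rescaling of the time-unfolded network in which, for every layer $1\le i\le d-1$, every unit $j$ of that layer, and every time step $t$, the corresponding node is rescaled by the \emph{same} factor $\alpha^i_j$. Reading off, for each edge of the unfolded graph, the factor coming from its source (an outgoing edge, contributing $1/\alpha$) and from its target (an incoming edge, contributing $\alpha$) reproduces the four cases in the statement: an edge carrying $\Win^1[j,k]$ is only incoming to unit $j$ of layer $1$; an edge carrying $\Win^i[j,k]$ with $1<i<d$ is outgoing from unit $k$ of layer $i-1$ and incoming to unit $j$ of layer $i$; an edge carrying $\Wrec^i[j,k]$ is outgoing from unit $k$ of layer $i$ at time $t-1$ and incoming to unit $j$ of layer $i$ at time $t$; and an edge carrying $\Wout[j,k]$ is outgoing from unit $k$ of layer $d-1$ and incoming to an output node, which is not rescaled. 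Invariance then follows from the non-negative homogeneity of the ReLU (Section~\ref{sec:feedforward}) by a routine induction on $i$ showing that under $\calT_\alpha(\vecw)$ the activations satisfy $\vech^i_t=\diag(\alpha^i)\,\vech^i_t$ for all $t$ and $1\le i\le d-1$ (the base case uses $\vech^i_0=0$), after which the $\alpha^{d-1}$ factors cancel against $\calT_{\Out,\alpha}(\Wout)$ and $f_{\calT_\alpha(\vecw)}=f_\vecw$.

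For the completeness part I would start from an arbitrary feasible node-wise rescaling of the time-unfolded RNN. Such a transformation is specified by a positive scalar $\beta_{i,j,t}$ attached to each \emph{internal} node $(i,j,t)$ of the unfolded graph, i.e. to each triple with $1\le i\le d-1$ and $t\ge 1$, which multiplies every incoming edge of that node by $\beta_{i,j,t}$ and every outgoing edge by $1/\beta_{i,j,t}$; the time-$0$ units $\vech^i_0$ and the coordinates of $\vecx_t$ are input nodes and the layer-$d$ units are output nodes, hence (having no incoming, resp. no outgoing, edges) are left untouched. Writing out the effect on the three weight-sharing families and imposing feasibility — all edges sharing a parameter must receive the same new weight — yields: from the layer-$1$ input edges, $\beta_{1,j,t}$ is independent of $t$; from the input edges into layer $i$ for $1<i<d$, the ratio $\beta_{i,j,t}/\beta_{i-1,k,t}$ is independent of $t$; and from the edges into the output layer, $\beta_{d-1,k,t}$ is independent of $t$.

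Combining these constraints by induction on the layer index — the base case $i=1$ gives time-independence, and the ratio constraint propagates it upward through $i=2,\dots,d-1$ — shows that each $\beta_{i,j,t}$ depends only on $(i,j)$; let $\alpha^i_j$ denote this common value. One then checks that the remaining, recurrent-edge feasibility constraints hold automatically: with $\beta$ time-independent, a recurrent edge carrying $\Wrec^i[j,k]$ is scaled by $\alpha^i_j/\alpha^i_k$ for every $t\ge 2$, a factor constant in $t$. Substituting $\beta_{i,j,t}=\alpha^i_j$ into the expressions for the new edge weights then reproduces precisely $\calT_{\In,\alpha}$, $\calT_{\text{rec},\alpha}$ and $\calT_{\Out,\alpha}$, so the given feasible rescaling equals $\calT_\alpha$.

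The step I expect to be the main obstacle is the bookkeeping at the boundaries of the unfolded graph: correctly treating the time-$0$ units $\vech^i_0$ and the layer-$d$ units as un-rescaled nodes, so that the feasibility equations for the $t=1$ recurrent edges and for the $\Wout$ edges come out in the stated form rather than forcing spurious constraints; and arguing that the feasibility identities actually pin down the $\beta$'s, which uses that every such edge is present in the (within-layer fully connected) architecture and that feasibility must hold for generic weight values, not just one particular $\vecw$. Once these structural points are in place, the rest is the elementary algebra of propagating per-node scalars through the unfolded network as sketched above.
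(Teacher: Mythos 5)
Your overall route is the same as the paper's. For the first claim, the paper also proves by induction on layers and time steps that $\vech_t^i\left(\calT_\alpha(\vecw)\right)[j]=\alpha^i_j\,\vech_t^i(\vecw)[j]$ for all $t$ and $1\le i<d$ (base case $t=0$ via $\vech_0^i=0$), and then cancels the $\alpha^{d-1}$ factors against $\calT_{\Out,\alpha}$; your sketch matches this exactly. For the completeness claim, the paper's argument is considerably terser than yours --- it simply asserts that fixing the per-node scalars at the first time step ties, via feasibility, the scalars at all later time steps --- whereas you actually derive the time-independence of $\beta_{i,j,t}$ from the three families of shared weights and an induction over layers. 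That derivation is correct and is a genuine improvement in explicitness over the paper's proof.

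The one point you flag as "the main obstacle" does need the opposite resolution from the one you propose. If the time-$0$ hidden units are treated as un-rescaled input nodes ($\beta_{i,k,0}=1$), then the $t=1$ instance of the edge carrying $\Wrec^i[j,k]$ is scaled by $\beta_{i,j,1}=\alpha^i_j$ while the $t\ge 2$ instances are scaled by $\alpha^i_j/\alpha^i_k$; feasibility would then force $\alpha^i_k=1$ for every recurrent layer, i.e.\ exactly the spurious collapse you were worried about. The correct bookkeeping is to allow the time-$0$ hidden units to be rescaled by $\alpha^i_k$ as well: since $\vech_0^i=0$, this rescaling is vacuous for the computed function, and it makes the $t=1$ recurrent edges transform by $\alpha^i_j/\alpha^i_k$ consistently with all later time steps. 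This is implicitly what the paper's part-one induction does when it asserts $\vech_0^i\left(\calT_\alpha(\vecw)\right)[j]=\alpha^i_j\vech_0^i(\vecw)[j]=0$ as the base case. With that single adjustment your argument goes through; everything else is sound.
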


The proof is given in Section~\ref{sec:invariances-proofs}. The above theorem shows that there are many transformations under which RNNs represent the same function. An example of such invariances is shown in Fig.~\ref{fig:invariance}. Therefore, we would like to have optimization algorithms that are invariant to these transformations and in order to do so, we need to look at measures that are invariant to such mappings.

\section{Understanding Invariances} \label{sec:rescaling}
The goal of this section is to discuss whether being invariant to node-wise
rescaling transformations is sufficient or not.

Ideally we would like our algorithm to be at least invariant to all the transformations to which the model $G$ is invariant. Note that this is different than the invariances studied in \cite{ollivier2015riemannian}, in that they study algorithms that are invariant to reparametrizations of the same model but we look at transformations within the the parameter space that preserve the function in the model. This will eliminate the need for non-trivial initialization. Thus our goal is to characterize the whole variety of transformations to which the model is invariant and check if the algorithm is invariant to all of them.

We first need to note that invariance can be composed. If a 
network $G$ is invariant to transformations $T_1$ and
$T_2$, it is also invariant to their composition $T_1\circ T_2$. This is
also true for an algorithm. If an algorithm is invariant to
transformations $T_1$ and $T_2$, it is also invariant to their
composition. This is because $f_{T_2\circ T_1\circ
\mathcal{A}(\vecw)}=f_{T_2\circ \mathcal{A}(T_1\circ
\vecw)}=f_{\mathcal{A}(T_2\circ T_1(\vecw))}$.

Then it is natural to talk about the {\em basis} of invariances. The
intuition is that although there are infinitely many transformations to which the
model (or an algorithm) is invariant, they could be generated as  
compositions of finite number of transformations.

In fact, in the infinitesimal limit the directions of infinitesimal
changes in the parameters to which the function $f_{\vecw}$ is
insensitive form a subspace. This is because for a  fixed input $\vecx$,
we have
\begin{align}
\label{eq:taylor-expansion}   
 f_{\vecw+\vecDelta}(x) = f_{\vecw}(\vecx) + \sum\nolimits_{e\in E}\frac{\partial f_{\vecw}(\vecx)}{\partial w_{e}}\cdot \Delta_{e} + O(\|\vecDelta\|^2),
\end{align}
where $E$ is the set of edges,
due to a Taylor expansion around $\vecw$. Thus the function $f_{\vecw}$
is insensitive (up to $O(\|\vecDelta\|^2)$) to any change in
the direction $\vecDelta$ that lies in the (right) null space of the
Jacobian matrix $\partial
f_{\vecw}(\vecx)/\partial \vecw$ for all input $\vecx$
simultaneously. More formally, the subspace can be defined as
\begin{align}
\label{eq:defN}
 N(\vecw) = \bigcap\nolimits_{\vecx\in\mathR^{|V_{\rm in}|}}\textrm{Null}\left(\frac{\partial
 f_{\vecw}(\vecx)}{\partial \vecw}\right).
\end{align}
Again, any change to $\vecw$ in the direction $\vecDelta$ that lies in
$N(\vecw)$ leaves the function $f_{\vecw}$ unchanged (up to $O(\|\vecDelta\|^2)$) at {\em
every} input $x$.
Therefore, if we can calculate the dimension of $N(\vecw)$ and if we
have ${\rm dim}N(\vecw)= |V_{\rm internal}|$, where 
we denote the number of internal nodes by $|V_{\rm internal}|$, then we can conclude that all
infinitesimal transformations to which the model is invariant can be
spanned by infinitesimal node-wise rescaling transformations.

Note that the null space $N(\vecw)$ and its dimension is a function of
$\vecw$. Therefore, there are some points in the parameter space that
have more invariances than other points. For example,
suppose that $v$ is an internal node with ReLU activation that receives
connections only from other ReLU units (or any unit whose output is
nonnegative).
If all the incoming weights to 
$v$ are negative including the bias, the output of node $v$ will be zero
regardless of the input, and
the function $f_{\vecw}$ will be insensitive to any transformation to
the outgoing weights of $v$.
Nevertheless we conjecture that
 as the network size grows, the chance of
being in such a degenerate configuration during training will diminish exponentially.

When we study the dimension of $N(\vecw)$, it is convenient to analyze the
dimension of the span of the row vectors of
the Jacobian matrix $\partial
f_{\vecw}(\vecx)/\partial \vecw$ instead. We define the degrees of freedom of
model $G$ at $\vecw$ as
 \begin{align}
  \label{eq:dof}
 d_G(\vecw) = {\rm dim}\left(\bigcup\nolimits_{\vecx\in\mathR^{|V_{\rm in}|}}{\rm Span}\left(
\frac{\partial f_{\vecw}(\vecx)}{\partial \vecw}[v,:] : v\in V_{\rm out}
 \right)\right),
 \end{align}
where $\partial f_{\vecw}(\vecx)[v,:]/\partial \vecw $ denotes the $v$th
row vector of the Jacobian matrix and $\vecx$ runs over all possible input $\vecx$.
Intuitively, $d_{G}(\vecw)$ is the dimension of the set of directions that
changes $f_{\vecw}(x)$ for at least one input $x$.

Due to the rank nullity theorem $d_G(\vecw)$ and the dimension of $N(\vecw)$
are related as follows:
\begin{align*}
 d_G(\vecw) + {\rm dim}\left(N(\vecw)\right)=|E| ,
\end{align*}
where $|E|$ is the number of parameters. Therefore, again if $d_G(\vecw)=|E| -
|V_{\rm internal}|$, then we can conclude that infinitesimally speaking,
all transformations to which the
model is invariant can be spanned by node-wise rescaling
transformations.

Considering only invariances that hold uniformly over all input $\vecx$
could give an under-estimate of the class of invariances, i.e., there might be some invariances that
hold for many input $\vecx$ but not all. An alternative approach
for characterizing invariances is to define a measure of distance
between functions that the neural network model represents based on 
the input distribution, and 
infinitesimally study the subspace of directions to which the distance
is insensitive. We can define distance between two functions $f$
and $g$ as
\begin{align*}
 D(f,g) = \E{\vecx\sim \mathcal{D}}\left[m(f(\vecx),g(\vecx))\right],
\end{align*}
where $m:\mathR^{|V_{\rm out}|\times |V_{\rm out}|}\rightarrow \mathR$ is a
(possibly asymmetric) distance measure between two vectors
$\vecz,\vecz'\in\mathR^{|V_{\rm out}|}$, which we require that
$m(\vecz,\vecz)=0$ and $\partial m/\partial \vecz'_{\vecz=\vecz'}=0$. For example, $m(\vecz,\vecz')=\|\vecz-\vecz'\|^2$.

The second-order Taylor expansion of the distance $D$ can be written as
\begin{align*}
  D(f_{\vecw}\| f_{\vecw+\vecDelta}) &=\frac{1}{2}
\vecDelta^\top \cdot F(\vecw)\cdot
  \vecDelta
 +o(\|\vecDelta\|^2),
\end{align*}
where
\begin{align*}
 F(\vecw)&=\E{\vecx\sim \mathcal{D}}\left[\left(\frac{\partial
 f_{\vecw}(\vecx)}{\partial \vecw}\right)^\top
 \cdot\left.\frac{\partial^2
m(\vecz,\vecz')}{\partial \vecz'^2}\right|_{\vecz=\vecz'=f_{\vecw}(\vecx)} 
\cdot\left(\frac{\partial f_{\vecw}(\vecx)}{\partial \vecw}\right)\right]
 \end{align*}
and $\partial^2 m(\vecz,\vecz')/\partial \vecz'^2|_{\vecz=\vecz'=f_{\vecw}(\vecx)}$ is the Hessian of
the distance measure $m$ at $\vecz=\vecz'=f_{\vecw}(\vecx)$.

Using the above expression, we can define the input distribution
dependent version of $N(\vecw)$ and $d_{G}(\vecw)$ as
 \begin{align*}
  N_{\mathcal{D}}(\vecw) = {\rm Null} F(\vecw),\qquad
 d_{G,\mathcal{D}}(\vecw) = {\rm rank} F(\vecw).
 \end{align*}
Again due to the rank-nullity theorem we have
$d_{G,\mathcal{D}}(\vecw)+{\rm dim}(N_{\mathcal{D}}(\vecw))=|E|$.

As a special case, we obtain the Kullback-Leibler
divergence $D_{\rm KL}$, which is commonly considered as {\em the} way
to study invariances, by 
choosing $m$ as the
conditional Kullback-Leibler divergence of output $y$ given the network output
as
\begin{align*}
 m(\vecz,\vecz') = \E{y\sim q(y|\vecz)}\left[\log\frac{q(y|\vecz)}{q(y|\vecz')}\right],
\end{align*}
where $q(y|\vecz)$ is a link function, which can be, e.g., the soft-max
$q(y|\vecz)=e^{z_y}/\sum_{y'=1}^{|V_{\rm out}|}e^{z_{y'}}$.
However, note that the
invariances in terms of $D_{\rm KL}$ depends not only on the input
distribution but also on the choice of the link function $q(y|\vecz)$.

%
%

\subsection{Path-based characterization of the network}
\label{sec:path-network}
A major challenge in studying the degrees of freedom \eqref{eq:dof}
 is the fact that the
Jacobian $\partial f_{\vecw}(x)/\partial \vecw$ depends on both parameter $w$
and input $x$. In this section, we first tease apart the two
dependencies by rewriting $f_{\vecw}(x)$ as 
the sum over all
directed paths from every input node to each output node as follows:
 \begin{align}
 \label{eq:f-as-sum-over-paths}
 f_{\vecw}(\vecx)[v] &= \sum\nolimits_{p\in\Pi(v)}g_{p}(\vecx)\cdot\pi_p(\vecw)\cdot x[{\rm 
head}(p)],
 \end{align}
where $\Pi(v)$ is the set of all directed path from any 
input
 node to $v$, ${\rm head}(p)$ is the first node of path $p$,
  $g_{p}(\vecx)$ takes 1 if all the rectified
 linear units along path $p$ is active and zero otherwise, and
$\pi_p(\vecw)=\prod_{e\in E(p)} w(e)$
is the product of the weights along path $p$; $E(p)$ denotes the set 
of edges that appear along path $p$.

Let $\Pi=\cup_{v\in V_{\rm out}}\Pi(v)$ be the set of all directed paths.
We define the path-Jacobian matrix $J(\vecw)\in\mathR^{|\Pi|\times |E|}$
as $J(\vecw)=(\partial \pi_p(\vecw)/\partial w_e)_{p\in \Pi, e\in E}$.
In addition, we define
$\vecphi(\vecx)$ as a $|\Pi|$ dimensional vector  with $g_p(\vecx)\cdot x[{\rm
head}(p)]$ in the corresponding entry.
The Jacobian of the network $f_{\vecw}(\vecx)$ can now be expressed as
 \begin{align}
  \label{eq:jacobian}
\frac{\partial f_{\vecw}(\vecx)[v]}{\partial \vecw}&= J_v(\vecw)^\top \vecphi_v(\vecx),
 \end{align}
where where $J_v(\vecw)$ and $\vecphi_v(\vecx)$ are the submatrix (or subvector) of
 $J(\vecw)$ and $\vecphi(\vecx) $ that corresponds to output
node $v$, respectively\footnote{Note that although path activation $g_p(\vecx)$ is a function of $\vecw$,
it is insensitive to an infinitesimal change in the parameter, unless
the input to one of the rectified linear activation functions
along path $p$ is at exactly zero, which happens with probability
zero. Thus we treat $g_p(\vecx)$ as constant here.}. Expression \eqref{eq:jacobian} 
clearly separates the dependence to the parameters $\vecw$ and input $\vecx$.

Now we have the following statement (the proof is given in Section~\ref{sec:invariances-proofs}).
 \begin{theorem}
  \label{thm:dof}
The degrees-of-freedom $d_{G}(\vecw)$ of neural network model $G$ is
at most the rank of the path Jacobian matrix $J(\vecw)$.
 The equality holds if ${\rm dim}\left({\rm
 Span}(\vecphi(\vecx):\vecx\in\mathR^{|V_{\rm in}|})\right)=|\Pi|$; i.e. when
 the dimension of the space
 spanned by  $\vecphi(\vecx)$
equals the total  number of paths $|\Pi|$.
 \end{theorem}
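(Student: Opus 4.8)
The plan is to establish the two parts of the statement separately, working throughout with the decomposition \eqref{eq:jacobian} of the Jacobian into the path Jacobian $J(\vecw)$ and the input feature vector $\vecphi(\vecx)$, and exploiting the fact that the set of directed paths partitions as $\Pi=\bigsqcup_{v\in V_{\rm out}}\Pi(v)$ (every path ends at exactly one output node). Because of this disjoint partition, the matrix $J(\vecw)\in\mathR^{|\Pi|\times|E|}$ is block-stacked from the submatrices $J_v(\vecw)$ (rows indexed by $\Pi(v)$), and the vector $\vecphi(\vecx)\in\mathR^{|\Pi|}$ is the concatenation of the subvectors $\vecphi_v(\vecx)$. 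I will also take \eqref{eq:jacobian} at face value, i.e.\ treat the path-activation indicators $g_p(\vecx)$ as locally constant in $\vecw$, exactly as noted in the footnote to \eqref{eq:jacobian}, so that $\partial f_{\vecw}(\vecx)[v]/\partial\vecw = J_v(\vecw)^\top\vecphi_v(\vecx)$ really is the Jacobian row.

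First I would record that $d_{G}(\vecw)$, as defined in \eqref{eq:dof}, is the dimension of the subspace $S:={\rm Span}\{\partial f_{\vecw}(\vecx)[v]/\partial\vecw : \vecx\in\mathR^{|V_{\rm in}|},\ v\in V_{\rm out}\}$ — the dimension of a union of subspaces being the dimension of the span of that union. Using \eqref{eq:jacobian}, each generator is $\partial f_{\vecw}(\vecx)[v]/\partial\vecw = J_v(\vecw)^\top\vecphi_v(\vecx) = \sum_{p\in\Pi(v)}\vecphi_v(\vecx)[p]\, r_p(\vecw)$, where $r_p(\vecw)$ denotes the $p$-th row of $J(\vecw)$; hence every generator lies in the row space of $J(\vecw)$, so $S$ is contained in the row space of $J(\vecw)$ and $d_{G}(\vecw)={\rm dim}\,S\leq{\rm rank}\,J(\vecw)$. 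This gives the inequality with no hypothesis needed.

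For the equality under the assumption ${\rm dim}\,{\rm Span}\{\vecphi(\vecx):\vecx\}=|\Pi|$, the key identity is that summing \eqref{eq:jacobian} over the output nodes reassembles the full path Jacobian applied to the full feature vector: by the block structure, $\sum_{v\in V_{\rm out}}\partial f_{\vecw}(\vecx)[v]/\partial\vecw = \sum_{v\in V_{\rm out}}J_v(\vecw)^\top\vecphi_v(\vecx) = J(\vecw)^\top\vecphi(\vecx)$. For a fixed $\vecx$ the left-hand side is a sum of vectors each lying in $S$, so $J(\vecw)^\top\vecphi(\vecx)\in S$ for every $\vecx$. Now $J(\vecw)^\top$ is a linear map $\mathR^{|\Pi|}\to\mathR^{|E|}$, so ${\rm Span}\{J(\vecw)^\top\vecphi(\vecx):\vecx\} = J(\vecw)^\top\big({\rm Span}\{\vecphi(\vecx):\vecx\}\big) = J(\vecw)^\top\mathR^{|\Pi|}$, which is precisely the row space of $J(\vecw)$; the middle equality is exactly where the hypothesis is used. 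Combining this with the previous paragraph gives ${\rm rowspace}\,J(\vecw)\subseteq S\subseteq{\rm rowspace}\,J(\vecw)$, whence $S={\rm rowspace}\,J(\vecw)$ and $d_{G}(\vecw)={\rm rank}\,J(\vecw)$.

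The linear-algebra content here is routine (row-space containment in one direction, push-forward of a spanning set under a linear map in the other), so the only part where I would slow down in a full write-up is the bookkeeping: confirming that $d_G(\vecw)$ in \eqref{eq:dof} is to be read as the dimension of the span of the indicated union, and verifying carefully that the partition $\Pi=\bigsqcup_v\Pi(v)$ makes $J(\vecw)^\top\vecphi(\vecx)=\sum_v J_v(\vecw)^\top\vecphi_v(\vecx)$ literally an equality of vectors in $\mathR^{|E|}$ rather than merely a heuristic. I expect no genuine obstacle beyond getting these conventions stated precisely.
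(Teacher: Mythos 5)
Your proposal is correct and follows essentially the same route as the paper's proof: both rest on the factorization $\partial f_{\vecw}(\vecx)[v]/\partial\vecw = J_v(\vecw)^\top\vecphi_v(\vecx)$, deduce the upper bound from containment in the row space of $J(\vecw)$, and obtain equality by observing that the span of the generators pushes forward to $J(\vecw)^\top\,{\rm Span}\{\vecphi(\vecx)\}$, which is the full row space under the stated hypothesis. Your explicit summation over output nodes to show $J(\vecw)^\top\vecphi(\vecx)\in S$ is just a more careful justification of the paper's equation \eqref{eq:span}, not a different argument.
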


An analogous statement holds for the input distribution dependent
degrees of freedom $d_{G,\mathcal{D}}(\vecw)$, namely,
$d_{G,\mathcal{D}}(\vecw)\leq {\rm rank} J(\vecw)$ and the equality holds if
the rank of the $|\Pi|\times |\Pi|$ path covariance matrix
$(\E{\vecx\sim\mathcal{D}}\left[
\partial^2 m(\vecz,\vecz')/\partial z'_v\partial z'_{v'}\phi_{p}(\vecx)\phi_{p'}(\vecx)
\right])_{p,p'\in \Pi}$ is full, where $v$ and $v'$ are the end nodes of
paths $p$ and $p'$, respectively.

It remains to be understood when the dimension of the span of the path
vectors $\vecphi(\vecx)$ become full. The answer depends on
$\vecw$. Unfortunately, there is no typical behavior as we know from the
example of an internal ReLU unit connected to ReLU units by negative
weights. In fact, we can choose any number of internal units in the network to be
in this degenerate state creating different degrees of degeneracy.
Another way to introduce degeneracy is to insert a linear layer in the
network. This will superficially increase the number of paths but will
not increase the dimension of the span of $\vecphi(\vecx)$. 
For example,
consider a linear classifier $z_{\rm out}=\inner{\vecw}{\vecx}$ with $|V_{\rm in}|$ inputs. If
the whole input space is spanned by $\vecx$, the dimension of the span of
$\vecphi(\vecx)$ is $|V_{\rm in}|$, which agrees with the number of paths. Now let's insert a linear layer with units
$V_1$ in between the input and the output layers. The number of paths has increased from $|V_{\rm in}|$ to $|V_{\rm
in}|\cdot|V_1|$. However the dimension of the span of $\vecphi(\vecx)={\vec
1}_{|V_1|}\otimes \vecx$ is still $|V_{\rm in}|$, because the linear units
are always active.
Nevertheless we conjecture that 
there is a configuration $\vecw$ such that ${\rm dim}\left({\rm
 Span}(\vecphi(\vecx):\vecx\in\mathR^{|V_{\rm in}|})\right)=|\Pi|$ and the set of
 such $\vecw$ grows as the network becomes larger.

\subsection{Combinatorial characterization of the rank of path Jacobian} 
 Finally, we show that the rank of the path-Jacobian matrix $J(\vecw)$ is
determined purely combinatorially by the graph $G$ 
except a subset of the parameter space with zero Lebesgue measure. The
proof is given in Section~\ref{sec:invariances-proofs}.

 \begin{theorem}
  \label{thm:rank-path-jacobian}
The rank of the path Jacobian matrix $J(\vecw)$ is generically (excluding set
  of parameters with zero Lebesgue measure) equal to the number of
 parameters $|E|$ minus the number of internal nodes of the network.
 \end{theorem}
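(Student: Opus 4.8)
The plan is to reduce the rank of the path-Jacobian to the rank of a fixed $0/1$ incidence matrix, and then to compute that rank by a node-potential argument. The starting observation is that $J(\vecw)$ factors. Since $\pi_p(\vecw)=\prod_{e\in E(p)}w_e$ and a directed path traverses each edge at most once, for $w_e\neq 0$ we have $\partial\pi_p(\vecw)/\partial w_e = M_{p,e}\,\pi_p(\vecw)/w_e$, where $M\in\{0,1\}^{|\Pi|\times|E|}$ is the path--edge incidence matrix, $M_{p,e}=1$ iff $e\in E(p)$. Thus $J(\vecw)=D_\pi\, M\, D_w^{-1}$ with $D_\pi=\diag\big(\pi_p(\vecw)\big)_{p\in\Pi}$ and $D_w=\diag(w_e)_{e\in E}$. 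On the complement of the measure-zero set $\bigcup_{e\in E}\{w_e=0\}$ all $w_e\neq 0$, hence also all $\pi_p=\prod_{e\in E(p)}w_e\neq 0$, so $D_w$ and $D_\pi$ are invertible and $\operatorname{rank}J(\vecw)=\operatorname{rank}M$. Since $M$ depends only on the graph $G$, this already establishes the ``generic $=$ combinatorial'' part; it remains to show $\operatorname{rank}M = |E|-|V_{\text{internal}}|$.

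To do this I would identify $\operatorname{Null}M\subseteq\R^E$ with a space of node potentials. By definition $x\in\operatorname{Null}M$ iff $\sum_{e\in E(p)}x_e=0$ for every directed path $p$ from an input node to an output node. The easy inclusion: any $x$ of the form $x_e=\phi(v)-\phi(u)$ for $e=(u\to v)$, with $\phi:V\to\R$ vanishing on $V_\In\cup V_\Out$, lies in $\operatorname{Null}M$, since the sum telescopes to $0$ along any input--output path. Conversely, given $x\in\operatorname{Null}M$, define $\phi(v)=\sum_{e\in E(q)}x_e$ for any directed path $q$ from an input to $v$. Using the standing assumption that every internal node has both incoming and outgoing edges (so, in a finite DAG, lies on some input--output path), this is well defined: two paths $q_1,q_2$ from an input to $v$ can be completed by a common path from $v$ to an output, and the two resulting input--output paths have equal $x$-sums by hypothesis. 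One then checks $\phi$ vanishes on $V_\In\cup V_\Out$ and that $x_e=\phi(v)-\phi(u)$ for every edge $e=(u\to v)$ (take $q$ a path from an input to $u$ and use $q$ followed by $e$). Hence $\operatorname{Null}M$ is exactly the image of the linear ``coboundary'' map $\Phi:\{\phi:\phi|_{V_\In\cup V_\Out}=0\}\to\R^E$, $\Phi(\phi)_{(u\to v)}=\phi(v)-\phi(u)$.

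Finally I would argue $\Phi$ is injective: if $\Phi(\phi)=0$ then $\phi$ is constant on each weakly connected component of $G$, and every component containing an internal node also contains a boundary node (the internal node lies on an input--output path), so $\phi\equiv 0$. Therefore $\dim\operatorname{Null}M=\dim\{\phi:\phi|_{V_\In\cup V_\Out}=0\}=|V_{\text{internal}}|$, so $\operatorname{rank}M=|E|-|V_{\text{internal}}|$, and combined with the first paragraph this gives $\operatorname{rank}J(\vecw)=|E|-|V_{\text{internal}}|$ off a measure-zero set. As a sanity check, the basis vectors $\Phi(\mathbf 1_v)$, $v\in V_{\text{internal}}$, of $\operatorname{Null}M$ are precisely the infinitesimal node-wise rescaling directions of Theorem~\ref{thm:net-inv}, so the generic null space of $J$ consists exactly of the known rescaling invariances. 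I expect the only real care to be needed in the second and third steps --- pinning down well-definedness of $\phi$ and injectivity of $\Phi$ under the graph assumptions (in particular ruling out zero columns of $M$, i.e.\ edges on no input--output path) --- while the factorization in the first step makes the genericity claim essentially immediate rather than requiring a semicontinuity-of-rank argument.
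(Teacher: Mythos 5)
Your proposal is correct, and while the first step coincides with the paper's (both factor the path Jacobian as a diagonal matrix times the $0/1$ path--edge incidence matrix $M$ times another diagonal matrix, so that off the measure-zero set $\bigcup_e\{w_e=0\}$ the rank of $J(\vecw)$ equals the rank of $M$), the combinatorial half of your argument takes a genuinely different route. The paper computes $\operatorname{rank}M$ by counting linearly \emph{independent paths}: an explicit layer-by-layer induction that fixes a reference node $u_0$ in each layer and shows every additional edge $(u,v)$ contributes exactly one new independent path via the relation $(u,v)\rightarrow p = (u,v)\rightarrow p_0 - (u_0,v)\rightarrow p_0 + (u_0,v)\rightarrow p$; this yields $n_dn_{d-1}+n_{d-1}(n_{d-2}-1)+\cdots+n_1(n_0-1)$ and is tailored to layered fully-connected architectures. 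You instead compute the \emph{null space} of $M$ directly, identifying it with the image of the coboundary map $\Phi(\phi)_{(u\to v)}=\phi(v)-\phi(u)$ on node potentials vanishing at the boundary, and proving $\dim\operatorname{Null}M=|V_{\text{internal}}|$ via well-definedness of $\phi$ (completing any two input-to-$v$ paths by a common suffix) and injectivity of $\Phi$. Your version buys two things: it works for arbitrary DAGs satisfying the standing assumption that internal nodes have incoming and outgoing edges (the paper's induction does not obviously extend beyond layered complete-bipartite connectivity), and it exhibits the generic null space of $J$ as exactly the span of the infinitesimal node-wise rescaling directions, which is precisely the conclusion the surrounding text wants to draw from this theorem. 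What the paper's approach buys in exchange is an explicit basis of independent paths, i.e.\ a constructive description of the row space rather than the kernel. The points you flag as needing care (acyclicity making concatenations simple paths, and ruling out edges lying on no input--output path) are indeed the only delicate spots, and both are handled by the standing assumption.
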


Note that the dimension of the space spanned by node-wise rescaling
equals the number of internal nodes. Therefore,
node-wise rescaling is the {\em only} type of invariance for a ReLU
network with fixed architecture $G$, if ${\rm dim}\left({\rm
Span}(\phi(\vecx):\vecx\in\mathR^{|V_{\rm in}|})\right)=|\Pi|$ at parameter $\vecw$.
 
As an example, let us consider a simple 3 layer network with 2 nodes in each layer
except for the output layer, which has only 1 node (see 
Figure~\ref{fig:net2221}). The network has 10
parameters (4, 4, and 2 in each layer respectively) and 8
paths. The Jacobian $(\partial f_{\vecw}(\vecx)/\partial \vecw)$ can be written as
$(\partial f_{\vecw}(\vecx)/\partial \vecw)
= J(\vecw)^\top\cdot \vecphi(\vecx)$, where
\begin{align}
 \label{eq:J-2221}
 J(\vecw) &=
 \left[
 \begin{array}{c|c|c}
\begin{array}{cccc}
 w_5 w_9 & & & \\
 & w_5w_9 & & \\
 & & w_6w_9 & \\
 & & & w_6w_9 \\
 \hline
 w_7w_{10} & & & \\
 & w_7w_{10} & & \\
 & & w_8w_{10} & \\
 & & & w_8w_{10} \\
\end{array}
&
\begin{array}{cccc}
w_9w_1 & & & \\
w_9w_2 & & & \\
 & w_9w_3 & & \\
 & w_9w_4 & &\\
 \hline
  & & w_{10}w_1 & \\
 & & w_{10}w_2 & \\
 & & & w_{10}w_3 \\
 & & & w_{10}w_4
\end{array}
& \begin{array}{cc}
 w_5w_1 & \\
 w_5w_2 & \\
 w_6w_3 & \\
 w_6w_4 & \\
\hline
 & w_7w_1 \\
 & w_7w_2 \\
 & w_8w_3 \\
 & w_8w_4
\end{array}
 \end{array}
\right]
 \intertext{and}
 \phi(\vecx)^\top&=
\begin{bmatrix}
 g_1(\vecx)x[1] & g_2(\vecx)x[2]  & g_3(\vecx)x[1] & g_4(\vecx)x[2] &  g_5(\vecx)x[1] & 
g_6(\vecx)x[2]&  g_7(\vecx)x[1] &  g_8(\vecx)x[2] 
\end{bmatrix}.\notag
\end{align}
The rank of $J(\vecw)$ in \eqref{eq:J-2221} is (generically) equal to $10-4=6$, which is smaller
than both the number of parameters and the number of paths.
 \begin{figure}[htb]
\begin{center}
 \includegraphics[clip,width=.28\textwidth]{./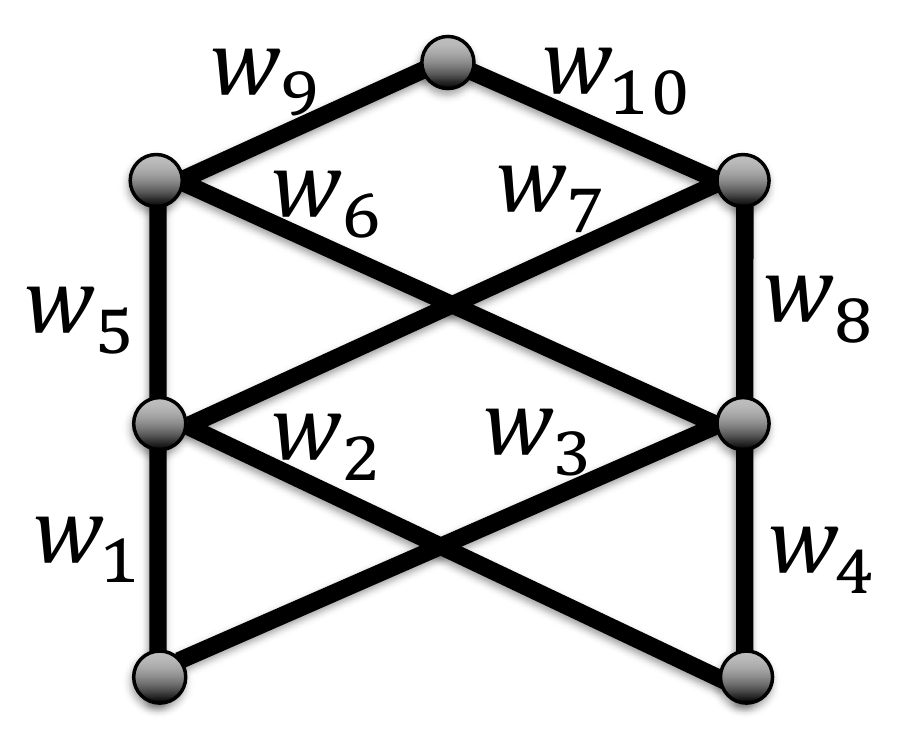}
\end{center}
\caption{\small A 3 layer network with 10 parameters and 8 paths.}
\label{fig:net2221}
 \end{figure}

\section{Proofs}\label{sec:invariances-proofs}
\subsection{Proof of Theorem~\ref{thm:net-inv}}
We first show that any RNN is invariant to $\calT_\alpha$ by induction on layers and time-steps. More specifically, we prove that for any $0\leq t \leq T$ and $1\leq i<d$, $\vec h_t^i\left(\calT_\alpha(\vec W)\right)[j] = \alpha^i_j\vec h_t^i(\vec W)[j]$. The statement is clearly true for $t=0$; because for any $i,j$, $\vec h_0^i\left(\calT_\alpha(\vec W)\right)[j] = \alpha^i_j\vec h_0^i(\vec W)[j]=0$.

Next, we show that for $i=1$, if we assume that the statement is true for $t=t'$, then it is also true for $t=t'+1$:
\begin{align*}
\vec h_{t'+1}^1\left(\calT_\alpha(\vec W)\right)[j] &= \left[\sum_{j'}\calT_{\In,\alpha}(\Win)^1[j,j']\vec x_{t'+1}[j']+ \calT_{\text{rec},\alpha}(\Wrec)^1[j,j'] \vec h_{t'}^1\left(\calT_\alpha(\vec W)\right)[j']\right]_+\\
&=\left[\sum_{j'}\alpha^1_j\Win^1[j,j']\vec x_{t'+1}[j'] +\left(\alpha^1_j /\alpha^1_{j'} \right) \Wrec^1[j,j'] \alpha^1_{j'}\vec h_{t'}^1(\vec W))[j']\right]_+\\
&= \alpha^1_j\vec h_t^i(\vec W)[j]
\end{align*}

We now need to prove the statement for $1<i<d$. Assuming that the statement is true for $t\leq t'$ and the layers before $i$, we have:
\begin{align*}
\vec h_{t'+1}^i\left(\calT_\alpha(\vec W)\right)[j] &= \left[\sum_{j'}\calT_{\In,\alpha}(\Win)^i[j,j']\vec h_{t'+1}^{i-1}\left(\calT_\alpha(\vec W)\right)[j'] +\calT_{\text{rec},\alpha}(\Wrec)^i[j,j'] \vec h_{t'}^i\left(\calT_\alpha(\vec W)\right)[j']\right]_+\\
&=\left[\sum_{j'}\frac{\alpha^i_j}{\alpha^{i-1}_{j'}}\Win^i[j,j']\alpha^{i-1}_{j'}\vec h_{t'+1}^{i-1}(\vec W))[j'] + \frac{\alpha^i_j }{ \alpha^i_{j'} }\Wrec^i[j,j'] \alpha^i_{j'}\vec h_{t'}^i(\vec W))[j']\right]_+\\
&= \alpha^i_j\vec h_t^i(\vec W)[j]
\end{align*}
Finally, we can show that the output is invariant for any $j$ at any time step $t$:
\begin{align*}
f_{\calT(\vec W),t}(\vec x_t)[j] &=  \sum_{j'} \calT_{\text{out},\alpha}(\Wout)[j,j'] \vec h_{t}^{d-1}(\calT_{\alpha}(\vec W)[j']=\sum_{j'} (1/\alpha^{d-1}_{j'})\Wout[j,j'] \alpha^{d-1}_{j'}\vec h_{t}^{d-1}(\vec W)[j'] \\
&=\sum_{j'}\Wout[j,j'] \vec h_{t}^{d-1}(\vec W)[j'] = f_{\vec W,t}(\vec x_t)[j]
\end{align*}

We now show that any feasible node-wise rescaling can be presented as $\calT_\alpha$. Recall that node-wise rescaling invariances for a general feedforward network can be written as $\widetilde{\calT_\beta}(\vec w)_{u\rightarrow v} = (\beta_v/\beta_u)w_{u\rightarrow v}$ for some $\beta$ where $\beta_v>0$ for internal nodes and $\beta_v=1$ for any input/output nodes. An RNN with $T=0$ has no weight sharing and for each node $v$ with index $j$ in layer $i$, we have $\beta_v=\alpha_j^i$. For any $T>0$ however, we there is no invariance that is not already counted. The reason is that by fixing the values of $\beta_v$ for the nodes in time step 0, due to the feasibility, the values of $\beta$ for nodes in other time-steps should be tied to the corresponding value in time step $0$. Therefore, all invariances are included and can be presented in form of $\calT_\alpha$.

\subsection{Proof of Theorem \ref{thm:dof}}
\begin{proof}
First we see that \eqref{eq:jacobian} is true because
 \begin{align*}
\frac{\partial f_{\vecw}(\vecx)[v]}{\partial \vecw} =\Bigl(\sum_{p\in\Pi(v)}
\frac{\partial \pi_p(\vecw)}{\partial w_e}
 \cdot g_{p}(\vecx)\cdot x[{\rm head}(p)]\Bigr)_{e\in E}
 = J_{v}(\vecw)\top\cdot \phi_{v}(\vecx).
\end{align*}
 Therefore, 
 \begin{align}
\bigcup_{\vecx\in\mathR^{|V_{\rm in}|}} {\rm Span}\left(
\frac{\partial f_{\vecw}(\vecx)[v]}{\partial \vecw}: v\in V_{\rm out}
 \right)
&= \bigcup_{\vecx\in\mathR^{|V_{\rm in}|}}{\rm Span}\left(
J_{v}(\vecw)\top\cdot \phi_{v}(\vecx): v\in V_{\rm out}
  \right)\notag\\
  \label{eq:span}
 &=J(\vecw)\top\cdot {\rm Span}\left(\phi(\vecx): \vecx\in\mathR^{|V_{\rm in}|}\right).
\end{align}
Consequently,  any vector of the form $(\frac{\partial
f_{\vecw}(\vecx)[v]}{\partial w_e})_{e\in E}$ for a fixed input $\vecx$ lies in the
 span of the row vectors of the path Jacobian $J(\vecx)$.

The second part says $d_{G}(\vecw)={\rm rank}J(\vecw)$ if ${\rm
dim}\left({\rm Span}(\phi(\vecx):\vecx\in\mathR^{|V_{\rm in}|})\right)=|\Pi|$, which is the number
of rows of $J(\vecw)$. We can see that this is true from expression \eqref{eq:span}.

\end{proof}

\subsection{Proof of Theorem \ref{thm:rank-path-jacobian}}
\begin{proof}
First, $J(\vecw)$  can be written as an Hadamard product between path 
incidence
 matrix $M$ and a rank-one matrix as follows:
\begin{align*}
  J(\vecw) &= M \circ \left(\vecw^{-1} \cdot {\boldsymbol{\pi}}^\top(\vecw)\right),
\end{align*}
 where $M$ is the path incidence matrix whose $i,j$ entry is one if
the $i$th edge is part of the $j$th path,  $\vecw^{-1}$ is an entry-wise
 inverse of the parameter vector $\vecw$, 
${\boldsymbol{\pi}}(\vecw)=(\pi_p(\vecw))$ is a vector containing the product along each 
path in
 each entry, and $\top$ denotes transpose.

 Since we can rewrite
\begin{align*}
  J(\vecw) &= {\rm diag}(\vecw^{-1})\cdot M \cdot{\rm diag}({\boldsymbol{\pi}}(\vecw)),
\end{align*}
 we see that (generically) the rank of $J(\vecw)$ is equal to the rank of
 zero-one matrix $M$.

 Note that the rank of $M$ is equal to the number of linearly
 independent columns of $M$, in other words, the number of linearly
 independent paths. In general, most paths are not independent. For
 example, in Figure \ref{fig:net2221}, we can see that the column
 corresponding to the path
 $w_2w_7w_{10}$ can be produced by combining 3 columns corresponding to
 paths $w_1w_5w_9$,  $w_1w_7w_{10}$, and $w_2w_5w_9$.

 In order to count the number of independent paths, we use 
mathematical
 induction. For simplicity, consider a layered graph with $d$
 layers. All the edges from
 the $(d-1)$th layer nodes to the output layer nodes are linearly 
independent,
 because they correspond to different parameters. So far we have
 $n_dn_{d-1}$ independent paths.

 Next, take one node $u_0$
 (e.g., the leftmost node) from the $(d-2)$th layer. All the paths 
starting
 from this node through the layers above are linearly
 independent. However, other nodes in this layer only contributes
 linearly to the number of independent paths. This is the case because
we can take an edge $(u,v)$, where $u$ is one of the remaining 
$n_{d-2}-1$
 vertices in the $(d-2)$th layer and $v$ is one of the $n_{d-1}$ 
nodes in
 the $(d-1)$th layer, and we can take any path (say $p_0$) from there 
to
 the top  layer. Then this is the only independent path that uses the
 edge $(u,v)$, because any other combination of edge $(u,v)$ and path
 $p$ from $v$ to the top layer can be
 produced as follows (see Figure \ref{fig:dependence}):
 \begin{align*}
  (u,v)\rightarrow p = (u,v)\rightarrow p_0 - (u_0,v)\rightarrow p_0 +
  (u_0,v)\rightarrow p.
 \end{align*}
 Therefore after considering all nodes in the $d-2$th layer, we have
 \begin{align*}
  n_{d}n_{d-1} + n_{d-1}(n_{d-2}-1)
 \end{align*}
 independent paths. Doing this calculation inductively, we have
 \begin{align*}
  n_{d}n_{d-1} + n_{d-1}(n_{d-2}-1) + \cdots + n_{1}(n_0-1)
 \end{align*}
 independent paths, where $n_0$ is the number of input units. This
 number is clearly equal to the number of parameters
 ($n_dn_{d-1}+\cdots+ n_{1}n_0$) minus the number of internal nodes 
($n_{d-1}+\cdots+n_1$).
\end{proof}
\begin{figure}[thb]
 \begin{center}
  \includegraphics[clip,width=0.7\textwidth]{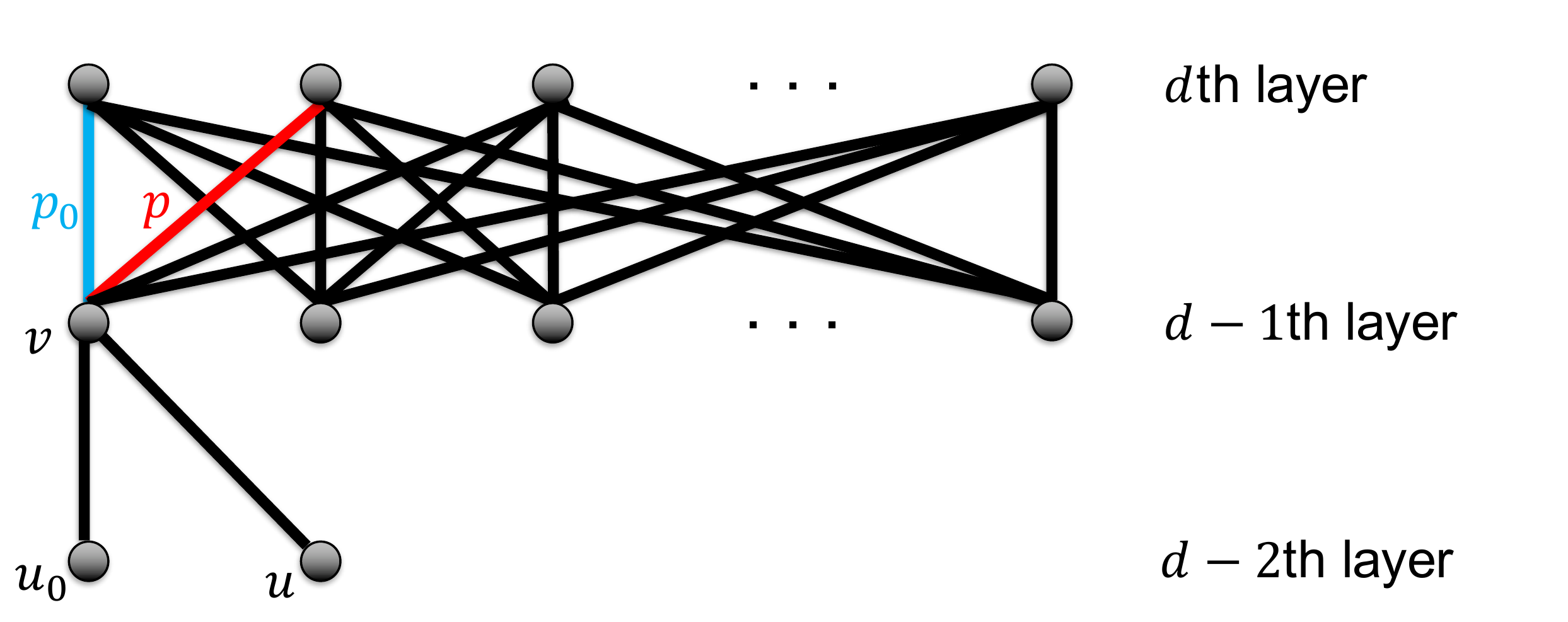}
 \end{center}
  \caption[\small Schematic illustration of the linear dependence of four
  paths.]{\small Schematic illustration of the linear dependence of the four
  paths $(u_0,v)\rightarrow p_0$, $(u_0,v)\rightarrow p$,
  $(u,v)\rightarrow p_0$, and $(u,v)\rightarrow p$. Because of this
  dependence, any additional edge $(u,v)$ only contributes one
  additional independent path.}
  \label{fig:dependence}
\end{figure}

\chapter{Path-Normalization for Feedforward and Recurrent Neural Networks} \label{chap:path-sgd}

As we discussed, optimization is inherently tied to a choice of
geometry, here represented by a choice of complexity measure or
``norm''\footnote{The path-norm which we define is a norm on
  functions, not on weights, but as we prefer not getting into this
  technical discussion here, we use the term ``norm'' very loosely to
  indicate some measure of magnitude~\cite{neyshabur2015norm}.}.
  In Chapter~\ref{chap:invariances}, we studies the invariances in neural networks.
  We would to have a complexity measure that has similar invariance properties as neural networks.
  In Section \ref{sec:path-reg} we introduce the
path-regularizer which is invariant to node-wise rescaling transformations explained in Chapter~\ref{chap:invariances}. In Section \ref{sec:path-forward}, we derive Path-SGD
optimization algorithm for standard feed-forward networks which is the steepest descent with respect to the path-regularizer. Finally, we extend Path-SGD to recurrent neural networks in Section~\ref{sec:path-shared}.

\section{Path-regularizer}\label{sec:path-reg}

We consider the generic group-norm type regularizer in equation \eqref{eq:mu}.
As we discussed before, two simple cases of above group-norm are $q_1=q_2=1$ and $q_1=q_2=2$ that
correspond to overall $\ell_1$ regularization and weight decay
respectively. Another form of regularization that is shown to be very
effective in RELU networks is the max-norm regularization, which is the
maximum over all units of norm of incoming edge to the
unit\footnote{This definition of max-norm is a bit different than the
  one used in the context of matrix factorization~\cite{srebro05}. The
  later is similar to the minimum upper bound over $\ell_2$ norm of
  both outgoing edges from the input units and incoming edges to the
  output units in a two layer feed-forward
  network.}~\cite{goodfellow13,srivastava14}. The max-norm correspond
to ``per-unit" regularization when we set $q_2=\infty$ in
equation~\eqref{eq:mu} and can be written in the following form (for $q_1=2$):
\begin{equation}
  \label{eq:mu}
  \mu_{2,\infty}(w) =\sup_{v \in V}\left(\sum_{(u\rightarrow v) \in E} \left\lvert w_{(u\rightarrow v)}\right\rvert ^2\right)^{1/2}
\end{equation}

Weight decay is probably the most commonly used regularizer. On the
other hand, per-unit regularization might not seem ideal as it is
very extreme in the sense that the value of regularizer corresponds to
the highest value among all nodes.  However, the situation is very
different for networks with RELU activations (and other activation
functions with non-negative homogeneity property).  In these cases,
per-unit $\ell_2$ regularization has shown to be very
effective~\cite{srivastava14}. The main reason could be because RELU
networks can be rebalanced in such a way that all hidden units haveneyshabur2015norm
the same norm. Hence, per-unit regularization will not be a crude
measure anymore.

Since $\mu_{p,\infty}$ is not rescaling invariant and the values of the
scale measure are different for rescaling equivalent networks, it is
desirable to look for the minimum value of a regularizer among all
rescaling equivalent networks. Surprisingly, for a feed-forward
network, the minimum $\ell_2$ per-unit regularizer among all rescaling
equivalent networks can be calculated in close form and we call it the path-regularizer \cite{neyshabur2015norm,neyshabur2015path}.

The path-regularizer is the sum over all paths from input nodes to
output nodes of the product of squared weights along the path. To define
it formally, let $\calP$ be the set of directed paths from input to
output units so that for any pathneyshabur2015path
$\zeta=\left(\zeta_0,\dots,\zeta_{\length(\zeta)}\right)\in \calP$ of
length $\length(\zeta)$, we have that $\zeta_0\in V_\In$,
$\zeta_{\length(\zeta)}\in V_\Out$ and for any $0\leq i \leq
\length(\zeta)-1$, $(\zeta_{i}\rightarrow \zeta_{i+1})\in E$. We also
abuse the notation and denote $e\in \zeta$ if for some $i$,
$e=(\zeta_i,\zeta_{i+1})$. Then the path regularizer can be written
as:
\begin{equation}
\gamma_\net^2(\vecw) = \sum_{\zeta \in \calP} \prod_{i=0}^{\length(\zeta)-1} w_{\zeta_{i}\rightarrow \zeta_{i+1}}^2
\end{equation}
The above formulation of the path-regularizer
involves an exponential number of terms.  However, it can be computed
efficiently by dynamic programming in a single forward step using the
following equivalent recursive definition:
\begin{equation}
\gamma_v^2(\vecw) = \sum_{(u\rightarrow v)\in E} \gamma^2_u(\vecw) w_{u \rightarrow v}^2\;, \qquad \gamma_\net^2(\vecw) = \sum_{u\in V_\Out} \gamma^2_u(\vecw)
\end{equation}

\section{Path-SGD for Feedforward Networks} \label{sec:path-forward}

We consider an approximate steepest descent step with respect to the
path-norm.  More formally, for a network without shared weights, where
the parameters are the weights themselves, consider the diagonal
quadratic approximation of the path-regularizer about the current
iterate $\vecw^{(t)}$:
\begin{equation}
\hat{\gamma}^2_{\rm net}(\vecw^{(t)}+\Delta \vecw) = \gamma^2_{\rm net}(\vecw^{(t)})+
\inner{\nabla \gamma^2_{\rm net}(\vecw^{(t)})}{\Delta \vecw} +
\frac{1}{2} \Delta \vecw^\top
\diag\left(\nabla^2  \gamma^2_{\rm net}(\vecw^{(t)})\right) \Delta \vecw
\end{equation}
Using the corresponding quadratic norm $\norm{\vecw-\vecw'}_{\hat{\gamma}^2_{\rm net}(\vecw^{(t)}+\Delta \vecw)}^2=\frac{1}{2}\sum_{e\in E}\frac{\partial^2 \gamma^2_{\rm 
net}}{\partial w^2_{e}}\left(w_e-w'_e\right)^2$, we can define an
approximate steepest descent step as:
\begin{equation}\label{eq:stp}
\vecw^{(t+1)}=\min_{\vecw} \eta \inner{\nabla L(\vecw)}{\vecw-\vecw^{(t)}} + \norm{\vecw-\vecw^{(t)}}_{\hat{\gamma}^2_{\rm net}(\vecw^{(t)}+\Delta \vecw)}^2.
\end{equation}
Solving \eqref{eq:stp} yields the
update:
\begin{equation}\label{eq:update-forward}
w^{(t+1)}_{e} = w^{(t)}_{e} - \frac{\eta}{\kappa_{e}(\vecw^{(t)})} \frac{\partial 
L}{\partial w_{e}}(\vecw^{(t)}) \quad\quad\textrm{where: } \kappa_{e}(\vecw)=\frac{1}{2}\frac{\partial^2 \gamma^2_{\rm 
net}(\vecw)}{\partial w^2_{e}}.
\end{equation}
The stochastic version that uses a subset
of training examples to estimate $\frac{\partial
  L}{\partial w_{u\rightarrow v}}(\vecw^{(t)})$ is called Path-SGD
\cite{neyshabur2015path}. We now show how Path-SGD can be extended to networks with
shared weights.

\section{Extending to Networks with Shared Weights} \label{sec:path-shared}
When the networks has shared weights, the path-regularizer is a
function of parameters $\vecp$ and therefore the quadratic
approximation should also be with respect to the iterate $\vecp^{(t)}$
instead of $\vecw^{(t)}$ which results in the following
update rule:
\begin{equation}\label{eq:stp-sh}
\vecp^{(t+1)}=\min_{\vecp} \eta \inner{\nabla L(\vecp)}{\vecp-\vecp^{(t)}}+ \norm{\vecp-\vecp^{(t)}}_{\hat{\gamma}^2_{\rm net}(\vecp^{(t)}+\Delta \vecp)}.
\end{equation}
where $\norm{\vecp-\vecp'}_{\hat{\gamma}^2_{\rm net}(\vecp^{(t)}+\Delta \vecp)}^2=\frac{1}{2}\sum_{i=1}^m\frac{\partial^2 \gamma^2_{\rm 
net}}{\partial p^2_{i}}\left(p_i-p'_i\right)^2$. Solving \eqref{eq:stp-sh} gives the following update:
\begin{equation}\label{eq:update-shared}
p^{(t+1)}_{i} = p^{(t)}_{i} - \frac{\eta}{\kappa_{i}(\vecp^{(t)})} \frac{\partial 
L}{\partial p_{i}}(\vecp^{(t)}) \quad\quad\textrm{where: } \kappa_{i}(\vecp)=\frac{1}{2}\frac{\partial^2 \gamma^2_{\rm 
net}(\vecp)}{\partial p^2_{i}}.
\end{equation}
The second derivative terms $\kappa_i$ are specified in terms of their
path structure as follows:
\begin{lemma}\label{lem:path} $\kappa_{i}(\vecp) = \kappa^{(1)}_{i}(\vecp) + \kappa^{(2)}_{i}(\vecp)$
where
\begin{align}
\kappa^{(1)}_{i}(\vecp) &= \sum_{e\in E_i}\sum_{\zeta \in
    \calP} \mathbf{1}_{e\in \zeta} \prod_{j=0 \atop e\neq
    (\zeta_j\rightarrow \zeta_{j+1})}^{\length(\zeta)-1}
  p^2_{\pi(\zeta_j\rightarrow \zeta_{j+1})}= \sum_{e\in E_i} \kappa_{e}(\vecw), \label{eq:k1}\\
 \kappa^{(2)}_{i}(\vecp)  &= p_i^2\sum_{e1,e2\in E_i \atop e_1\neq e_2}\sum_{\zeta \in \calP} \mathbf{1}_{e_1,e_2\in \zeta} \prod_{j=0 \atop { e_1\neq (\zeta_j\rightarrow \zeta_{j+1}) \atop e_2\neq (\zeta_j\rightarrow \zeta_{j+1}) }}^{\length(\zeta)-1} p^2_{\pi(\zeta_j\rightarrow \zeta_{j+1})}, 
 \label{eq:k2}
\end{align}
and $\kappa_e(\vecw)$ is defined in \eqref{eq:update-forward}.
\end{lemma}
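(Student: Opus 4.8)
The plan is to obtain $\kappa_i(\vecp)=\tfrac12\,\partial^2\gamma^2_{\rm net}(\vecp)/\partial p_i^2$ directly from the closed-form path expression $\gamma^2_{\rm net}(\vecp)=\sum_{\zeta\in\calP}\prod_{e\in\zeta}p_{\pi(e)}^2$ (valid since a directed path in a DAG traverses each edge at most once, and the ReLU path gates are locally constant and may be treated as constants throughout), combined with the chain rule through the \emph{linear} parameter map $w_e=p_{\pi(e)}$. The crucial structural fact is that $\partial w_e/\partial p_i=\mathbf 1_{e\in E_i}$ and the map has no second-order part, so that $\partial^2\gamma^2_{\rm net}(\vecp)/\partial p_i^2=\sum_{e\in E_i}\sum_{e'\in E_i}\partial^2\gamma^2_{\rm net}/\partial w_e\partial w_{e'}$, where on the right $\gamma^2_{\rm net}$ is regarded as a function of unconstrained edge weights and then evaluated at $\vecw(\vecp)$.

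First I would split this double sum into its diagonal part ($e=e'$) and its off-diagonal part ($e\neq e'$). For a diagonal term, differentiating $\prod_{e''\in\zeta}w_{e''}^2$ twice in $w_e$ kills every path not containing $e$ and leaves $2\prod_{e''\in\zeta\setminus\{e\}}w_{e''}^2$ on paths through $e$; hence $\partial^2\gamma^2_{\rm net}/\partial w_e^2=2\kappa_e(\vecw)$, and substituting $w_{e''}=p_{\pi(e'')}$ identifies $\sum_{e\in E_i}\kappa_e(\vecw)$ with the path sum $\kappa_i^{(1)}(\vecp)$ of \eqref{eq:k1} (this also re-establishes the identity $\kappa_i^{(1)}=\sum_{e\in E_i}\kappa_e$ stated in the lemma). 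For an off-diagonal term with $e\neq e'$, only paths containing both $e$ and $e'$ survive, and applying $\partial/\partial w_e$ and then $\partial/\partial w_{e'}$ to $w_e^2 w_{e'}^2\prod_{e''\in\zeta\setminus\{e,e'\}}w_{e''}^2$ produces $4 w_e w_{e'}\prod_{e''\in\zeta\setminus\{e,e'\}}w_{e''}^2$; since $e,e'\in E_i$ this equals $4\,p_i^2\prod_{e''\in\zeta\setminus\{e,e'\}}p_{\pi(e'')}^2$, whose sum over ordered pairs $(e,e')$ and matching paths reproduces $\kappa_i^{(2)}(\vecp)$ of \eqref{eq:k2} (the $p_i^2$ prefactor there being exactly the residue of the two squared weights hit by the derivatives, and the ordered-pair sum carrying the combinatorial weight). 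Collecting the diagonal and off-diagonal contributions and dividing by two then yields the decomposition $\kappa_i(\vecp)=\kappa_i^{(1)}(\vecp)+\kappa_i^{(2)}(\vecp)$ in the statement.

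The product-rule bookkeeping itself is routine; the step needing genuine care is the combinatorics of the off-diagonal block. The whole reason $\kappa_i^{(2)}$ appears — and the reason Path-SGD for shared weights is not just $\sum_{e\in E_i}\kappa_e$ — is that in a weight-shared network such as a time-unfolded RNN a single path $\zeta$ can traverse several \emph{distinct} edges all carrying the same parameter $p_i$ (for instance the same recurrent weight at different time steps), so the second derivative in $p_i$ accumulates cross terms between those edges on top of the self terms. The main obstacle is therefore to organize the indexing (which ordered pair of $E_i$-edges lies on which path, counted with correct multiplicity) so that it matches \eqref{eq:k1}–\eqref{eq:k2} termwise, while also verifying the degenerate cases: paths meeting $E_i$ in exactly one edge feed only into $\kappa_i^{(1)}$, and paths disjoint from $E_i$ contribute nothing.
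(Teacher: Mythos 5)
Your overall route is the same as the paper's: both compute $\kappa_i(\vecp)=\tfrac12\,\partial^2\gamma^2_{\rm net}/\partial p_i^2$ directly from the path-sum formula, treating the ReLU gates as constants. You merely organize the computation through the Hessian in the edge weights, $\partial^2/\partial p_i^2=\sum_{e,e'\in E_i}\partial^2/\partial w_e\,\partial w_{e'}$, and split into diagonal and off-diagonal blocks, which is a clean reformulation of the paper's two successive applications of the product rule. The diagonal block is handled correctly and gives $\tfrac12\sum_{e\in E_i}\partial^2\gamma^2_{\rm net}/\partial w_e^2=\sum_{e\in E_i}\kappa_e(\vecw)=\kappa^{(1)}_i(\vecp)$.

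The gap is in the off-diagonal constant, at precisely the step you yourself flagged as needing care. Each ordered pair $e\neq e'$ in $E_i$ lying on a path $\zeta$ contributes $4w_ew_{e'}\prod_{e''\in\zeta\setminus\{e,e'\}}w_{e''}^2=4p_i^2\prod(\cdots)$, as you say; but then the off-diagonal block sums to $4p_i^2\sum_{e_1\neq e_2}\sum_\zeta\mathbf 1_{e_1,e_2\in\zeta}\prod(\cdots)=4\kappa^{(2)}_i(\vecp)$, and after the overall factor $\tfrac12$ your argument yields $\kappa_i=\kappa^{(1)}_i+2\kappa^{(2)}_i$, not $\kappa^{(1)}_i+\kappa^{(2)}_i$. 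Your parenthetical claim that the ordered-pair sum with prefactor $p_i^2$ ``reproduces'' \eqref{eq:k2} is therefore off by a factor of $2$. A one-line sanity check: a single path of two edges, both carrying parameter $i$, gives $\gamma^2_{\rm net}=p_i^4$ and hence $\kappa_i=\tfrac12\cdot 12p_i^2=6p_i^2$, while $\kappa^{(1)}_i=2p_i^2$ and $\kappa^{(2)}_i=2p_i^2$ (two ordered pairs, empty products), so the stated decomposition gives $4p_i^2$ whereas your computation, carried out correctly, gives $6p_i^2$. To be fair, the paper's own proof drops the same factor of $2$ when it differentiates $\prod_{j:\,e\neq(\zeta_j\rightarrow\zeta_{j+1})}p^2_{\pi(\cdot)}$ with respect to $p_i$ (that product still contains a factor $p_i^2$ for every other $E_i$-edge on the path), so the real culprit is the normalization of $\kappa^{(2)}_i$ in the lemma statement rather than your method; but a correct write-up of your argument must either carry the extra $2$ into the definition of $\kappa^{(2)}_i$ or note the discrepancy explicitly, rather than assert the identification without checking the constant.
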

The second term $\kappa^{(2)}_i(\vecp)$ measures the effect of
interactions between edges corresponding to the same parameter (edges
from the same $E_i$) on the same path from input to output.  In
particular, if for any path from an input unit to an output unit, no
two edges along the path share the same parameter, then $
\kappa^{(2)}(\vecp)=0$. For example, for any feedforward or
Convolutional neural network, $\kappa^{(2)}(\vecp)=0$.  But for RNNs,
there certainly are multiple edges sharing a single parameter on the
same path, and so we could have $\kappa^{(2)}(\vecp)\neq 0$.


The above lemma gives us a precise update rule for the approximate
steepest descent with respect to the path-regularizer. The following
theorem confirms that the steepest descent with respect to this
regularizer is also invariant to all feasible node-wise rescaling for
networks with shared weights.

\begin{theorem}\label{thm:pathsgd-inv}
  For any feedforward networks with shared weights, the update
  \eqref{eq:update-shared} is invariant to all feasible node-wise rescalings.
  Moreover, a simpler update rule that only uses $\kappa^{(1)}_i(\vecp)$
  in place of $\kappa_i(\vecp)$ is also invariant to all feasible
  node-wise rescalings.
\end{theorem}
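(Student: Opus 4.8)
The plan is to show that a single step of \eqref{eq:update-shared} commutes with every feasible node-wise rescaling: applying $\calT_\alpha$ and then updating produces the same weight setting as updating and then applying $\calT_\alpha$. Since a feasible node-wise rescaling is by definition function-preserving (see the definitions preceding \thmref{thm:net-inv}), this immediately gives $f_{\calA(\calT_\alpha(\vecp))}=f_{\calT_\alpha(\calA(\vecp))}=f_{\calA(\vecp)}$, which is the asserted invariance. Fix such a $\calT_\alpha$; it multiplies the weight of edge $e=(u\rightarrow v)$ by $\alpha_v/\alpha_u$, with $\alpha\equiv 1$ on input and output nodes. Feasibility is exactly the statement that this ratio is constant over each class $E_i$; call the common value $c_i$, so $\calT_\alpha$ acts on parameters by $p_i\mapsto c_i p_i$. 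I will establish two homogeneity identities, one for the gradient and one for the preconditioner $\kappa_i$, and then combine them.

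For the gradient, I would run back-propagation in terms of pre-activations $s_v$ and backpropagated errors $\delta_v=\partial L/\partial s_v$. Using non-negative homogeneity of the RELU, namely $\sigma(cz)=c\,\sigma(z)$ and $\sigma'(cz)=\sigma'(z)$ for $c>0$, one shows by induction on nodes (top-down from the outputs, where $\alpha_v=1$ and the network output is unchanged) that under $\calT_\alpha$ the pre-activations transform as $\tilde s_v=\alpha_v s_v$, so the activation pattern is unchanged, and the errors as $\tilde\delta_v=\delta_v/\alpha_v$. Since $\partial L/\partial w_e=\delta_v h_u$ for $e=(u\rightarrow v)$, this gives $\widetilde{\partial L/\partial w_e}=(\alpha_u/\alpha_v)\,\partial L/\partial w_e$; summing over $e\in E_i$ and using $\alpha_v/\alpha_u=c_i$ on that class yields $\widetilde{\partial L/\partial p_i}=c_i^{-1}\,\partial L/\partial p_i$.

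For the preconditioner I would use the path expressions of \lemref{lem:path}. The key remark is that $\prod_{e\in\zeta}w_e^2$ is $\calT_\alpha$-invariant for every path $\zeta\in\calP$, because the $\alpha$-factors telescope along $\zeta$ and both endpoints are fixed. Hence deleting one edge $e\in E_i$ from the product multiplies it by $c_i^{-2}$, which is exactly the summand of $\kappa_i^{(1)}$, giving $\tilde\kappa_i^{(1)}=c_i^{-2}\kappa_i^{(1)}$; deleting two edges $e_1,e_2\in E_i$ multiplies it by $c_i^{-4}$, and together with the explicit prefactor $p_i^2\mapsto c_i^2 p_i^2$ in $\kappa_i^{(2)}$ this gives $\tilde\kappa_i^{(2)}=c_i^{-2}\kappa_i^{(2)}$, hence $\tilde\kappa_i=c_i^{-2}\kappa_i$. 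Substituting both identities into \eqref{eq:update-shared}, the step $\Delta p_i=\eta\,(\partial L/\partial p_i)/\kappa_i$ transforms as $\widetilde{\Delta p_i}=c_i\,\Delta p_i$, so $\tilde p_i-\widetilde{\Delta p_i}=c_i(p_i-\Delta p_i)$: the updated parameters transform by the same $\calT_\alpha$. The variant that replaces $\kappa_i$ by $\kappa_i^{(1)}$ is handled by the identical computation, since it uses only $\tilde\kappa_i^{(1)}=c_i^{-2}\kappa_i^{(1)}$ and $\widetilde{\partial L/\partial p_i}=c_i^{-1}\,\partial L/\partial p_i$.

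The main obstacle I expect is the bookkeeping in the $\kappa_i^{(2)}$ term: in a shared-weight (e.g.\ recurrent) network two edges on a single path can carry the same parameter, so I must verify that the telescoping identity survives stripping \emph{two} factors from $\prod_{e\in\zeta}w_e^2$ and that this interacts correctly with the leading $p_i^2$, the phenomenon that is absent from plain feedforward and convolutional nets where $\kappa_i^{(2)}\equiv 0$. I would also take care to invoke feasibility at exactly the point where a single scalar $c_i$ per class is needed: without it the ratios $\alpha_v/\alpha_u$ need not agree over $E_i$, the gradient and $\kappa_i$ would scale inconsistently within a class, and no node-wise rescaling of the \emph{parameters} would even be defined, let alone leave the update equivariant. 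The backprop induction and the path telescoping are otherwise routine consequences of RELU homogeneity.
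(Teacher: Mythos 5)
Your proposal is correct and follows essentially the same route as the paper's proof: establish that the gradient with respect to $p_i$ scales by $c_i^{-1}$ (via the forward scaling of activations and backward scaling of errors) and that $\kappa_i^{(1)}$ and $\kappa_i^{(2)}$ each scale by $c_i^{-2}$ (via the telescoping invariance of the squared path products, with the $p_i^2$ prefactor and the two stripped factors combining exactly as you describe), so that the update step is equivariant under $\calT_\alpha$ and invariance of $f$ follows from function preservation. The paper's argument is the same calculation written with $\beta_v/\beta_u$ in place of your $c_i$, so no further comparison is needed.
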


Equations \eqref{eq:k1} and \eqref{eq:k2} involve a sum over all paths
in the network which is exponential in depth of the network.
We next show that both of these equations can be calculated
efficiently.

\subsection{Simple and Efficient Computations} \label{sec:path-compute}
We show how to calculate $\kappa^{(1)}_i(\vec p)$ and
$\kappa^{(2)}_i(\vec p)$ by considering a network with the
same architecture but with squared weights:
\begin{theorem}\label{thm:pathsgd-cal}
For any network $\calN(G,\pi,p)$, consider $\calN(G,\pi,\tilde{p})$ where for any $i$, $\tilde{p}_i=p_i^2$. Define the function $g:\R^{\abs{V_\In}}\rightarrow \R$ to be the sum of outputs of this network: $g(x)=\sum_{i=1}^{\abs{V_{\Out}}}f_{\tilde{\vec p}}(x)[i]$. Then $\kappa^{(1)}$ and $\kappa^{(2)}$ can be calculated as follows where $\vec 1$ is the all-ones input vector:
\begin{equation}
\kappa^{(1)}(\vec p) = \nabla_{\tilde{\vec p}} g(\vec 1),\qquad \kappa^{(2)}_i(\vec p)  = \sum_{(u\rightarrow v),(u'\rightarrow v')\in E_i \atop{(u\rightarrow v) \neq (u'\rightarrow v')} } \tilde{p}_i\frac{\partial g(\vec 1)}{\partial h_{v'}(\tilde{\vec p})}\frac{\partial h_{u'}(\tilde{\vec p})}{\partial h_v(\tilde{\vec p})} h_u(\tilde{\vec p}).
\end{equation}
\end{theorem}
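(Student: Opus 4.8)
The plan is to recognize each quantity appearing in Lemma~\ref{lem:path} as something read off from a single forward and backward pass through the auxiliary network $\calN(G,\pi,\tilde{\vec p})$ evaluated at the all-ones input. The key preliminary observation is that, since $\tilde p_i=p_i^2\ge 0$ for all $i$ and the input $\vec 1$ is entrywise nonnegative, a straightforward induction along a topological ordering of $G$ shows every unit of $\calN(G,\pi,\tilde{\vec p})$ has nonnegative pre-activation on input $\vec 1$, so every ReLU acts as the identity there. Hence this network computes a function that is \emph{linear in the path structure}: for every node $v$, $h_v(\tilde{\vec p})=\sum_{\rho}\prod_{e\in\rho}\tilde p_{\pi(e)}$, the sum being over directed paths $\rho$ from an input node to $v$, and therefore $g(\vec 1)=\sum_{\zeta\in\calP}\prod_{e\in\zeta}\tilde p_{\pi(e)}$. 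For the same reason the propagated sensitivities are themselves path sums: $\partial h_{u'}(\tilde{\vec p})/\partial h_v(\tilde{\vec p})=\sum_{\rho:\,v\to u'}\prod_{e\in\rho}\tilde p_{\pi(e)}$ and $\partial g(\vec 1)/\partial h_{v'}(\tilde{\vec p})=\sum_{\rho:\,v'\to V_\Out}\prod_{e\in\rho}\tilde p_{\pi(e)}$, which are exactly what backpropagation outputs (with the usual convention that an empty path contributes $1$).

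\emph{The $\kappa^{(1)}$ term.} By Lemma~\ref{lem:path}, $\kappa^{(1)}_i(\vec p)=\sum_{e\in E_i}\kappa_e(\vecw)=\sum_{e\in E_i}\sum_{\zeta\ni e}\prod_{e'\in\zeta\setminus\{e\}}w_{e'}^2$. Differentiating the identity $g(\vec 1)=\sum_{\zeta\in\calP}\prod_{e\in\zeta}\tilde p_{\pi(e)}$ in the coordinate $\tilde p_i$ produces precisely one term for each pair consisting of a path $\zeta$ and an edge $e\in\zeta$ with $\pi(e)=i$, namely the product of the remaining $\tilde p$'s along $\zeta$; evaluating at $\tilde p_i=p_i^2$ this is exactly the double sum above. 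Thus $\kappa^{(1)}(\vec p)=\nabla_{\tilde{\vec p}}\,g(\vec 1)$, a single backward pass.

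\emph{The $\kappa^{(2)}$ term.} Rewrite the Lemma's formula as $\kappa^{(2)}_i(\vec p)=\tilde p_i\sum_{(e_1,e_2)}\sum_{\zeta\ni e_1,e_2}\prod_{e'\in\zeta\setminus\{e_1,e_2\}}\tilde p_{\pi(e')}$ evaluated at $\tilde p_i=p_i^2$, where $(e_1,e_2)$ ranges over ordered pairs of distinct edges in $E_i$. Fix such a pair, $e_1=(u\to v)$ and $e_2=(u'\to v')$. Because $G$ is a DAG, a path $\zeta$ that traverses both edges traverses them in exactly one order; the contribution of those $\zeta$ in which $v$ precedes $u'$ is what we attach to this ordered pair, and such a $\zeta$ decomposes \emph{uniquely} as: a path from an input node to $u$, the edge $e_1$, a path from $v$ to $u'$, the edge $e_2$, a path from $v'$ to an output node. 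Summing $\prod_{e'\in\zeta\setminus\{e_1,e_2\}}\tilde p_{\pi(e')}$ over all such $\zeta$ therefore factors into a product of three independent path sums, which by the identities above equals $h_u(\tilde{\vec p})\cdot\bigl(\partial h_{u'}(\tilde{\vec p})/\partial h_v(\tilde{\vec p})\bigr)\cdot\bigl(\partial g(\vec 1)/\partial h_{v'}(\tilde{\vec p})\bigr)$. The opposite traversal order is supplied by the reversed ordered pair $(e_2,e_1)$, and ordered pairs for which $G$ contains no directed path from $v$ to $u'$ contribute nothing because then $\partial h_{u'}/\partial h_v=0$. Summing over all ordered pairs in $E_i\times E_i$ with $e_1\neq e_2$ and multiplying by $\tilde p_i$ reproduces exactly the stated expression, and all the factors $h_u(\tilde{\vec p})$, $\partial h_{u'}(\tilde{\vec p})/\partial h_v(\tilde{\vec p})$, $\partial g(\vec 1)/\partial h_{v'}(\tilde{\vec p})$ come from forward/backward propagation on $\calN(G,\pi,\tilde{\vec p})$, so the sums over exponentially many paths in \eqref{eq:k1}--\eqref{eq:k2} are replaced by a tractable computation.

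The main obstacle is the bookkeeping in the $\kappa^{(2)}$ step: one has to verify the unique decomposition of a path through two fixed shared edges, check that summing over \emph{ordered} (rather than unordered) pairs of edges is exactly what is needed to account for both topological orderings, and confirm that orderings not realized by any path fall away through vanishing Jacobians; a secondary point is that differentiating $g$ with respect to a hidden value $h_{v'}$ is unambiguous here precisely because the linearization on input $\vec 1$ makes the activation pattern locally constant. The $\kappa^{(1)}$ step and the linearization observation are routine by comparison.
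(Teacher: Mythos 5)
Your proposal follows essentially the same route as the paper's own proof: identify $h_v(\tilde{\vec p})$ and the backpropagated sensitivities of $g(\vec 1)$ with sums of products of $\tilde p$ over path segments, then obtain $\kappa^{(1)}$ by differentiating the path-sum form of $g(\vec 1)$ in $\tilde p_i$ and $\kappa^{(2)}$ by decomposing each path through two distinguished edges of $E_i$ into a prefix, a middle segment, and a suffix. Your version is in fact more careful than the paper's in two places the paper leaves implicit: the observation that nonnegative weights and a nonnegative input make every ReLU act as the identity (so the auxiliary network really computes $\gamma_v^2$), and the DAG argument that each path traverses a given pair of edges in a unique order so that the reversed ordered pair contributes a vanishing Jacobian.

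One remark on the $\kappa^{(2)}$ bookkeeping, which your own accounting makes visible: in the Lemma's formula the inner sum for the \emph{ordered} pair $(e_1,e_2)$ runs over all paths containing both edges in either order, so each (unordered pair, path) combination is counted twice across the two orderings, whereas in the Theorem's expression it is counted once (only the traversal-consistent ordering survives). The map you describe from Lemma terms to Theorem terms is therefore two-to-one, not a bijection, so the two expressions differ by a factor of $2$. This discrepancy is present in, and equally unaddressed by, the paper's own proof (which simply asserts the rewriting), so it is not a defect of your argument relative to the paper --- but if you carry your pairing argument through honestly you should either insert the factor of $2$ or restrict the Lemma's sum to traversal-consistent ordered pairs.
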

In the process of calculating the gradient $\nabla_{\tilde{\vec p}}
g(\vec 1)$, we need to calculate $h_u(\tilde{\vec p})$ and $\partial g(\vec
1)/\partial h_{v}(\tilde{\vec p})$ for any $u,v$. Therefore, the only
remaining term to calculate (besides $\nabla_{\tilde{p}} g(\vec 1)$) is
$\partial h_{u'}(\tilde{\vec p})/\partial h_v(\tilde{\vec p})$.

Recall that $T$ is the length (maximum number of propagations through time) and $d$ is the number 
of layers in an RNN. Let $H$ be the number of hidden units in each layer and $B$
be the size of the mini-batch. Then calculating the gradient of the
loss at all points in the minibatch (the standard work required for
any mini-batch gradient approach) requires time $O(BdTH^2)$. In order
to calculate $\kappa^{(1)}_i(\vec p)$, we need to calculate the gradient
$\nabla_{\tilde{\vec p}}g(1)$ of a similar network at a {\em single}
input---so the time complexity is just an additional $O(dTH^2)$. The second term
$\kappa^{(2)}(\vec p)$ can also be calculated for RNNs in $O(dTH^2(T+H))$ \footnote{
For an RNN, $\kappa^{(2)}(\Win)=0$ and $\kappa^{(2)}(\Wout)=0$ because only
recurrent weights are can be shared multiple times along an input-output path. 
$\kappa^{(2)}(\Wrec)$ can be written and calculated in the matrix form:
{\tiny$\kappa^{(2)}(\Wrec^i) =\Wrec'^i \odot  \sum_{t_1=0}^{T-3}\left[\left(\left(\Wrec'^i\right)^{t_1}\right)^\top \odot \sum_{t_2=2}^{T-t_1-1} 
\frac{\partial g(\vec 1)}{\partial \vec h^i_{t_1+t_2+1}(\tilde{\vec p})} \left(\vec h^i_{t_2}(\tilde{\vec p})\right)^\top \right]
$}
where for any $i,j,k$ we have $\Wrec'^i[j,k] = \left(\Wrec^i[j,k]\right)^2$. 
The only terms that require extra computation are powers of
$\Wrec$ which can be done in $O(dTH^3)$ and the rest of the matrix
computations need $O(dT^2H^2)$.}.
Therefore, the ratio of time complexity of calculating the
first term and second term with respect to the gradient over
mini-batch is $O(1/B)$ and $O((T+H)/B)$ respectively.  Calculating
only $\kappa^{(1)}_i(\vec p)$ is therefore very cheap with minimal
per-minibatch cost, while calculating $\kappa^{(2)}_i(\vec p)$ might
be expensive for large networks.  Beyond the low computational cost,
calculating $\kappa^{(1)}_i(\vec p)$ is also very easy to implement as
it requires only taking the gradient with respect to a standard
feed-forward calculation in a network with slightly modified
weights---with most deep learning libraries it can be implemented very
easily with only a few lines of code.

\section{Proofs}

\subsection{Proof of Lemma~\ref{lem:path}}
We prove the statement simply by calculating the second derivative of the path-regularizer with respect to each parameter:
\begin{align*}
\kappa_{i}(\vec p)&=\frac{1}{2}\frac{\partial^2 \gamma^2_{\net}}{\partial p_i^2} =\frac{1}{2}\frac{\partial}{\partial p_i}\left(\frac{\partial }{\partial p_i}\sum_{\zeta \in \calP} \prod_{j=0}^{\length(\zeta)-1} w_{\zeta_{j}\rightarrow \zeta_{j+1}}^2\right)\\
&=\frac{1}{2}\frac{\partial}{\partial p_i}\left(\frac{\partial }{\partial p_i}\sum_{\zeta \in \calP} \prod_{j=0}^{\length(\zeta)-1} p_{\pi(\zeta_{j}\rightarrow \zeta_{j+1})}^2\right)
=\frac{1}{2}\sum_{\zeta \in \calP}\frac{\partial}{\partial p_i}\left(\frac{\partial }{\partial p_i} \prod_{j=0}^{\length(\zeta)-1} p_{\pi(\zeta_{j}\rightarrow \zeta_{j+1})}^2\right)\\
\end{align*}
Taking the second derivative then gives us both terms after a few calculations:
\begin{align*}
\kappa_{i}(\vec p)&=\frac{1}{2}\sum_{\zeta \in \calP}\frac{\partial}{\partial p_i}\left(\frac{\partial }{\partial p_i} \prod_{j=0}^{\length(\zeta)-1} p_{\pi(\zeta_{j}\rightarrow \zeta_{j+1})}^2\right)=\sum_{\zeta \in \calP}\frac{\partial}{\partial p_i}\left(p_i\sum_{e \in E_i} \vec 1_{e\in \zeta}\prod_{j=0 \atop e\neq (\zeta_j\rightarrow \zeta_{j+1}}^{\length(\zeta)-1}p_{\pi(\zeta_{j}\rightarrow \zeta_{j+1})}^2\right) \\
&= \sum_{\zeta \in \calP}\left[ p_i\frac{\partial}{\partial p_i}\left(\sum_{e \in E_i} \vec 1_{e\in \zeta}\prod_{j=0 \atop e\neq (\zeta_j\rightarrow \zeta_{j+1}}^{\length(\zeta)-1}p_{\pi(\zeta_{j}\rightarrow \zeta_{j+1})}^2\right) + \sum_{e \in E_i} \vec 1_{e\in \zeta}\prod_{j=0 \atop e\neq (\zeta_j\rightarrow \zeta_{j+1}}^{\length(\zeta)-1}p_{\pi(\zeta_{j}\rightarrow \zeta_{j+1})}^2 \right]\\
&=p_i^2\sum_{e1,e2\in E_i \atop e_1\neq e_2}\left[\sum_{\zeta \in \calP} \vec 1_{e_1,e_2\in \zeta} \prod_{j=0 \atop { e_1\neq (\zeta_j\rightarrow \zeta_{j+1}) \atop e_2\neq (\zeta_j\rightarrow \zeta_{j+1}) }}^{\length(\zeta)-1} p^2_{\pi(\zeta_j\rightarrow \zeta_{j+1})}\right]
+\sum_{e\in E_i}\left[\sum_{\zeta \in \calP} \vec 1_{e\in \zeta} \prod_{j=0 \atop e\neq (\zeta_j\rightarrow \zeta_{j+1})}^{\length(\zeta)-1} p^2_{\pi(\zeta_j\rightarrow \zeta_{j+1})}\right]\\
\end{align*}

\subsection{Proof of Theorem~\ref{thm:pathsgd-inv}}
Node-wise rescaling invariances for a feedforward network can be written as $\calT_\beta(\vec w)_{u\rightarrow v}=(\beta_v/\beta_u)w_{u\rightarrow v}$ for some $\beta$ where $\beta_v>0$ for internal nodes and $\beta_v=1$ for any input/output nodes. Any feasible invariance for a network with shared weights can also be written in the same form. The only difference is that some of $\beta_v$s are now tied to each other in a way that shared weights have the same value after transformation. First, note that since the network is invariant to the transformation, the following statement holds by an induction similar to Theorem~\ref{thm:net-inv} but in the backward direction:
\begin{equation}
\frac{\partial L}{\partial h_v}(\calT_\beta(\vec p)) = \frac{1}{\beta_v}\frac{\partial L}{\partial h_u}(\vec p) 
\end{equation}
for any $(u\rightarrow v)\in E$. Furthermore, by the proof of the Theorem~\ref{thm:net-inv} we have that for any $(u\rightarrow v)\in E$, $h_u(\calT_\beta(\vec p ) ) =  \beta_u h_u(\vec p)$. Therefore,
\begin{equation}
\frac{\partial L}{\partial \calT_\beta(\vec p)_i}(\calT_\beta(\vec p)) = \sum_{(u\rightarrow v)\in E_i}\frac{\partial L}{\partial h_v}(\calT_\beta(\vec p)) h_u(\calT_\beta(\vec p ) ) =  \frac{\beta_{u'}}{\beta_{v'}}\frac{\partial L}{\partial p_i}(\vec p) 
\end{equation}
where $(u'\rightarrow v')\in E_i$. In order to prove the theorem statement, it is enough to show that for any edge $(u\rightarrow v) \in E_i$, $\kappa_{i}(\calT_\beta(\vec p)) = (\beta_u/\beta_v)^2\kappa_{i}(\vec p)$ because this property gives us the following update:
\begin{equation*}
\calT_\beta(\vec p)_i - \frac{\eta}{\kappa_i(\calT_\beta(\vec p))} \frac{\partial L(\calT_\beta(\vec p))}{\partial \calT_\beta(\vec p)_i } = \frac{\beta_v}{\beta_u}p_i-
\frac{\eta}{(\beta_u/\beta_v)^2\kappa_{i}(\vec p)}\frac{\beta_{u}}{\beta_{v}}\frac{\partial L}{\partial p_i}(\vec p) = \calT_\beta(\vec p^+)_i
\end{equation*}
Therefore, it is remained to show that for any edge $(u\rightarrow v) \in E_i$ $v$, $\kappa_{i}(\calT_\beta(\vec p)) = (\beta_u/\beta_v)^2\kappa_{i}(\vec p)$. We show that this is indeed true for both terms $\kappa^{(1)}$ and $\kappa^{(2)}$ separately. 

We first prove the statement for $\kappa^{(1)}$. Consider each path $\zeta\in \calP$. By an inductive argument along the path, it is easy to see that multiplying squared weights along this path is invariant to the transformation:
\begin{equation*}
\prod_{j=0}^{\length(\zeta)-1} \calT_\beta(\vec p)^2_{\pi(\zeta_j\rightarrow \zeta_{j+1})} = \prod_{j=0}^{\length(\zeta)-1} p^2_{\pi(\zeta_j\rightarrow \zeta_{j+1})}
\end{equation*}
Therefore, we have that for any edge $e\in E$ and any $\zeta\in \calP$,
\begin{equation*}
\prod_{j=0 \atop e\neq (\zeta_j\rightarrow \zeta_{j+1})}^{\length(\zeta)-1} \calT_\beta(\vec p)^2_{\pi(\zeta_j\rightarrow \zeta_{j+1})}
= \left(\frac{\beta_u}{\beta_v}\right)^2 \prod_{j=0  \atop e\neq (\zeta_j\rightarrow \zeta_{j+1})}^{\length(\zeta)-1} p^2_{\pi(\zeta_j\rightarrow \zeta_{j+1})}
\end{equation*}
Taking sum over all paths $\zeta\in \calP$ and all edges $e=(u\rightarrow v) \in E$ completes the proof for $\kappa^{(1)}$. Similarly for $\kappa^{(2)}$, considering any two edges $e_1\neq e_2$ and any path $\zeta_\calP$, we have that:
\begin{equation*}
\calT_\beta(\vec p)_i^2\prod_{j=0 \atop { e_1\neq (\zeta_j\rightarrow \zeta_{j+1}) \atop e_2\neq (\zeta_j\rightarrow \zeta_{j+1}) }}^{\length(\zeta)-1} \calT_\beta(\vec p)^2_{\pi(\zeta_j\rightarrow \zeta_{j+1})} =  \left(\frac{\beta_v}{\beta_u}\right)^2 p_i^2 \left(\frac{\beta_u}{\beta_v}\right)^4\prod_{j=0 \atop { e_1\neq (\zeta_j\rightarrow \zeta_{j+1}) \atop e_2\neq (\zeta_j\rightarrow \zeta_{j+1}) }}^{\length(\zeta)-1} p^2_{\pi(\zeta_j\rightarrow \zeta_{j+1})}
\end{equation*}
where $(u\rightarrow v)\in E_i$. Again, taking sum over all paths $\zeta$ and all edges $e_1\neq e_2$ proves the statement for $\kappa^{(2)}$ and consequently for $\kappa^{(1)}+\kappa^{(2)}$.

\subsection{Proof of Theorem~\ref{thm:pathsgd-cal}}
First, note that based on the definitions in the theorem statement, for any node $v$, $h_v(\tilde{\vec p})=\gamma^2_v(p)$ and therefore $g(\vec 1)=\gamma_\net^2(p)$. Using Lemma~\ref{lem:path}, main observation here is that for each edge $e\in E_i$ and each path $\zeta\in \calP$, the corresponding term in $\kappa^{(1)}$ is nothing but product of the squared weights along the path except the weights that correspond to the edge $e$:
$$
\vec 1_{e\in \zeta} \prod_{j=0 \atop e\neq (\zeta_j\rightarrow \zeta_{j+1})}^{\length(\zeta)-1} p^2_{\pi(\zeta_j\rightarrow \zeta_{j+1})}
$$
This path can therefore be decomposed into a path from input to edge $e$ and a path from edge $e$ to the output. Therefore, for any edge $e$, we can factor out the number corresponding to the paths that go through $e$ and rewrite $\kappa^{(1)}$ as follows:
\begin{equation}
\kappa^{(1)}(p)=\sum_{(u\rightarrow v)\in E_i} \left[\left(\sum_{\zeta\in \calP_{\In \rightarrow u} } \prod_{j=0}^{\length(\zeta)-1} p_{\pi(\zeta_{j}\rightarrow \zeta_{j+1})}^2\right)\left(\sum_{\zeta\in \calP_{v\rightarrow \Out} } \prod_{j=0}^{\length(\zeta)-1}p_{\pi(\zeta_{j}\rightarrow \zeta_{j+1})}^2 \right)\right]
\end{equation}
where $\calP_{\In\rightarrow u}$ is the set of paths from input nodes to node $v$ and $\calP_{v\rightarrow \Out}$ is defined similarly for the output nodes.

By induction on layers of $\calN(G,\pi,\tilde{\vec p})$, we get the following:
\begin{align}
\sum_{\zeta\in \calP_{\In \rightarrow u} } \prod_{j=0}^{\length(\zeta)-1} p_{\pi(\zeta_{j}\rightarrow \zeta_{j+1})}^2 &= h_u(\tilde{\vec p})\\
\sum_{\zeta\in \calP_{v\rightarrow \Out} } \prod_{j=0}^{\length(\zeta)-1}p_{\pi(\zeta_{j}\rightarrow \zeta_{j+1})}^2 &= \frac{\partial g(1)}{\partial h_v(\tilde{\vec p})}
\end{align}
Therefore, $\kappa^{(1)}$ can be written as:
\begin{equation}
\kappa^{(1)}(p)= \sum_{(u\rightarrow v)\in E_i} \frac{\partial g(1)}{\partial h_v(\tilde{\vec p})}h_u(\tilde{\vec p}) 
= \sum_{(u\rightarrow v)\in E_i} \frac{\partial g(1)}{\partial w'_{u\rightarrow v}} =  \frac{\partial g(1)}{\partial \tilde{p}_{i}}
\end{equation}
Next, we show how to calculate the second term, i.e. $\kappa^{(2)}$. Each term in $\kappa^{(2)}$  corresponds to a path that goes through two edges. We can decompose such paths and rewrite $\kappa^{(2)}$ similar to the first term:
\begin{align*}
\kappa^{(2)}(p)&=p_i^2\sum_{(u\rightarrow v)\in E_i \atop{ (u'\rightarrow v')\in E_i \atop (u\rightarrow v) \neq (u'\rightarrow v')} } \left[\left(\sum_{\zeta\in \calP_{\In \rightarrow u}}\prod_{j=0}^{\length(\zeta)} p_{\pi(\zeta_{j}\rightarrow \zeta_{j+1})}^2\right)\right.\\
&\left.\left(\sum_{\zeta\in \calP_{v \rightarrow u'}}\prod_{j=0}^{\length(\zeta)-1}p_{\pi(\zeta_{j}\rightarrow \zeta_{j+1})}^2 \right)\left(\sum_{\zeta\in \calP_{v'\rightarrow \Out}}\prod_{j=0}^{\length(\zeta)-1}p_{\pi(\zeta_{j}\rightarrow \zeta_{j+1})}^2\right)\right]\\
&= \sum_{(u\rightarrow v)\in E_i \atop{ (u'\rightarrow v')\in E_i \atop (u\rightarrow v) \neq (u'\rightarrow v')} } \tilde{p}_i\frac{\partial g(\vec 1)}{\partial h_{v'}(\tilde{\vec p})}\frac{\partial h_{u'}(\tilde{\vec p})}{\partial h_v(\tilde{\vec p})} h_u(\tilde{\vec p})
\end{align*}
where $\calP_{u\rightarrow v}$ is the set of all directed paths from node $u$ to node $v$.

\chapter{Experiments on Path-SGD} \label{chap:path-experiments}
In this Chapter, we compare Path-SGD to other optimization methods on fully connected and recurrent neural networks.
\section{Experiments on Fully Connected Feedforward Networks}

We compare $\ell_2$-Path-SGD to two commonly used optimization methods in deep learning, SGD and AdaGrad. We conduct our experiments on 
four common benchmark datasets: the standard MNIST dataset of handwritten digits~\cite{lecun1998gradient}; 
CIFAR-10 and CIFAR-100 datasets of tiny images of natural scenes~~\cite{krizhevsky2009learning}; 
and Street View House Numbers (SVHN) dataset containing 
color images of house numbers collected by Google Street View~\cite{netzer2011reading}. 
Details of the datasets are shown in Table~\ref{table}.

\begin{table}[t]
\caption[\small General information on datasets used in the experiments on feedforward networks.]{\small General information on datasets used in the experiments on feedforward networks.}
\label{table}
\begin{center}
\begin{tabular}{c c c c c}
{\bf Data Set}  &{\bf Dimensionality}&{\bf Classes}&{\bf Training Set}&{\bf Test Set}
\\ \hline
CIFAR-10&3072 ($32 \times 32$ color)&10&50000&10000\\
CIFAR-100&3072 ($32 \times 32$ color)&100&50000&10000\\
MNIST&784 ($28 \times 28$ grayscale)&10&60000&10000\\
SVHN&3072 ($32 \times 32$ color)&10&73257&26032\\
\hline
\end{tabular}
\end{center}
\end{table}

In all of our experiments, we trained feed-forward networks with two hidden layers, each containing 4000 hidden units. We used mini-batches of size 100 and the step-size of $10^{-\alpha}$, where $\alpha$ is an integer between 0 and 10. To choose $\alpha$, for each dataset, we considered the validation errors over the validation set (10000 randomly chosen points that are kept out during the initial training) and picked the one that reaches the minimum error faster. We then trained the network over the entire training set. All the networks were trained both with and without dropout. When training with dropout, at each update step, we retained each unit with probability 0.5.

We tried both balanced and unbalanced initializations. In balanced initialization, incoming weights to each unit $v$ are initialized to i.i.d samples from a Gaussian distribution with standard deviation $1/\sqrt{\text{fan-in}(v)}$. In the unbalanced setting, we first initialized the weights to be the same as the balanced weights. We then picked 2000 hidden units randomly with replacement. For each unit, we multiplied its incoming edge and divided its outgoing edge by $10c$, where $c$ was chosen randomly from log-normal distribution.

The optimization results are shown in Figure~\ref{fig:nodrop}. For each of the four datasets, the plots for
objective function (cross-entropy), the training error and the test
error are shown from left to right where in each plot the values are
reported on different epochs during the optimization. Although we
proved that Path-SGD updates are the same for balanced and unbalanced
initializations, to verify that despite numerical issues they are
indeed identical, we trained Path-SGD with both balanced and unbalanced initializations. 
Since the curves were exactly the same we only show a single curve. The dropout is used for the 
experiments on CIFAR-100 and SVHN. Please see \cite{neyshabur2015path} for a more complete set of experimental results.

\begin{figure}[t!]
 \subfloat{
  \begin{tabular}{r}
   \includegraphics[width=\picwidth]{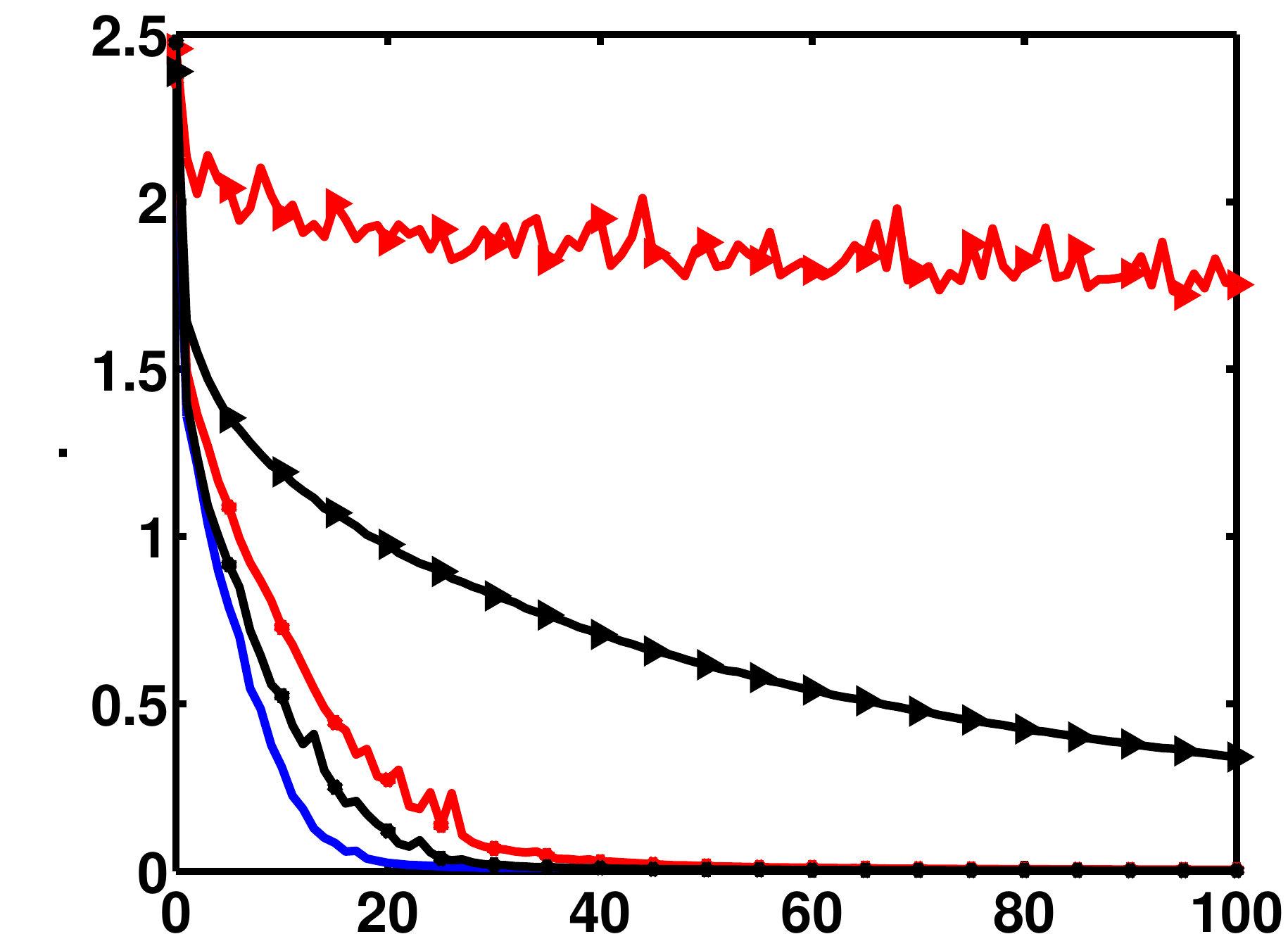} \\
   \includegraphics[width=\picwidth]{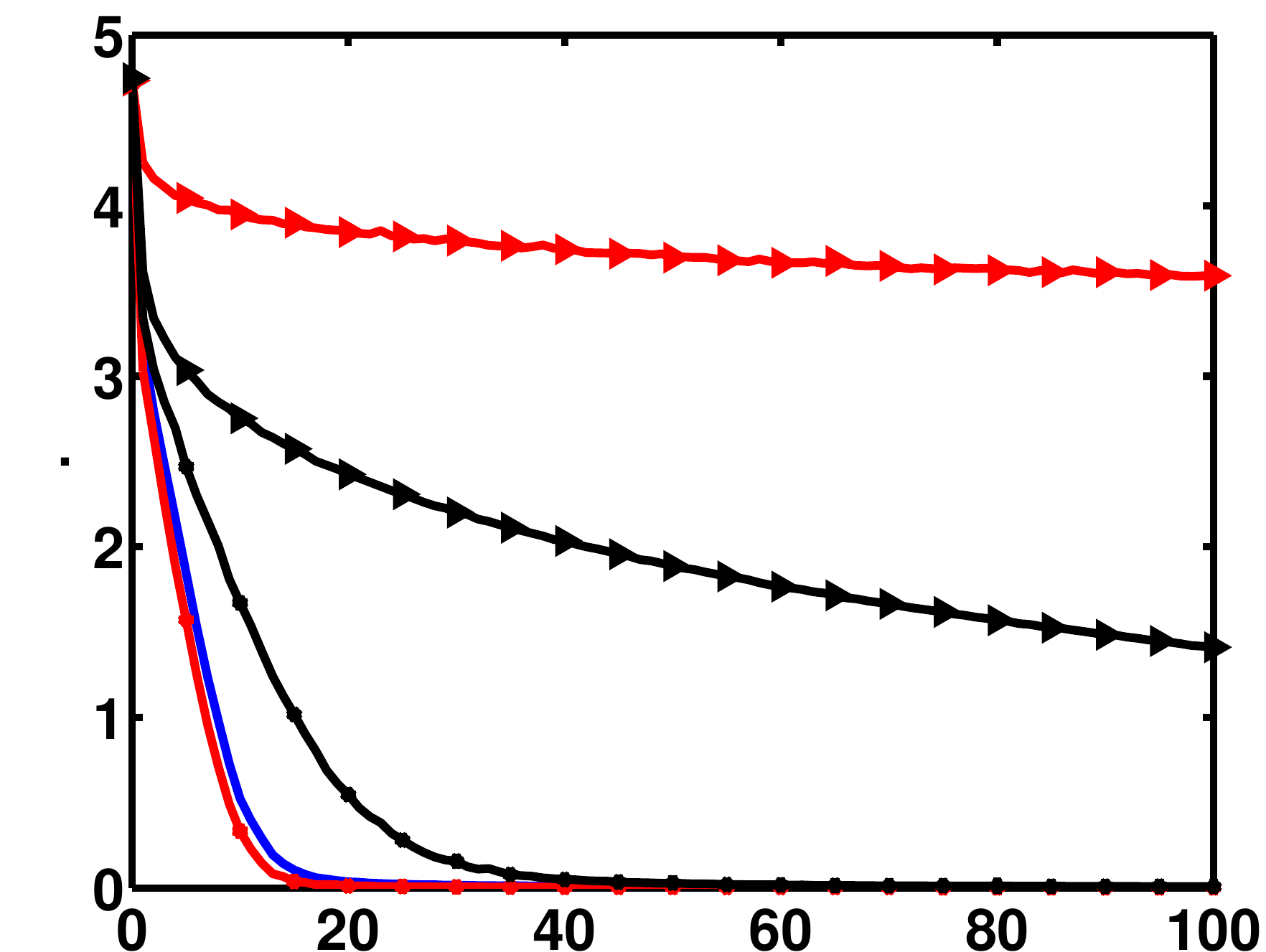} \\
   \includegraphics[width=\picwidth]{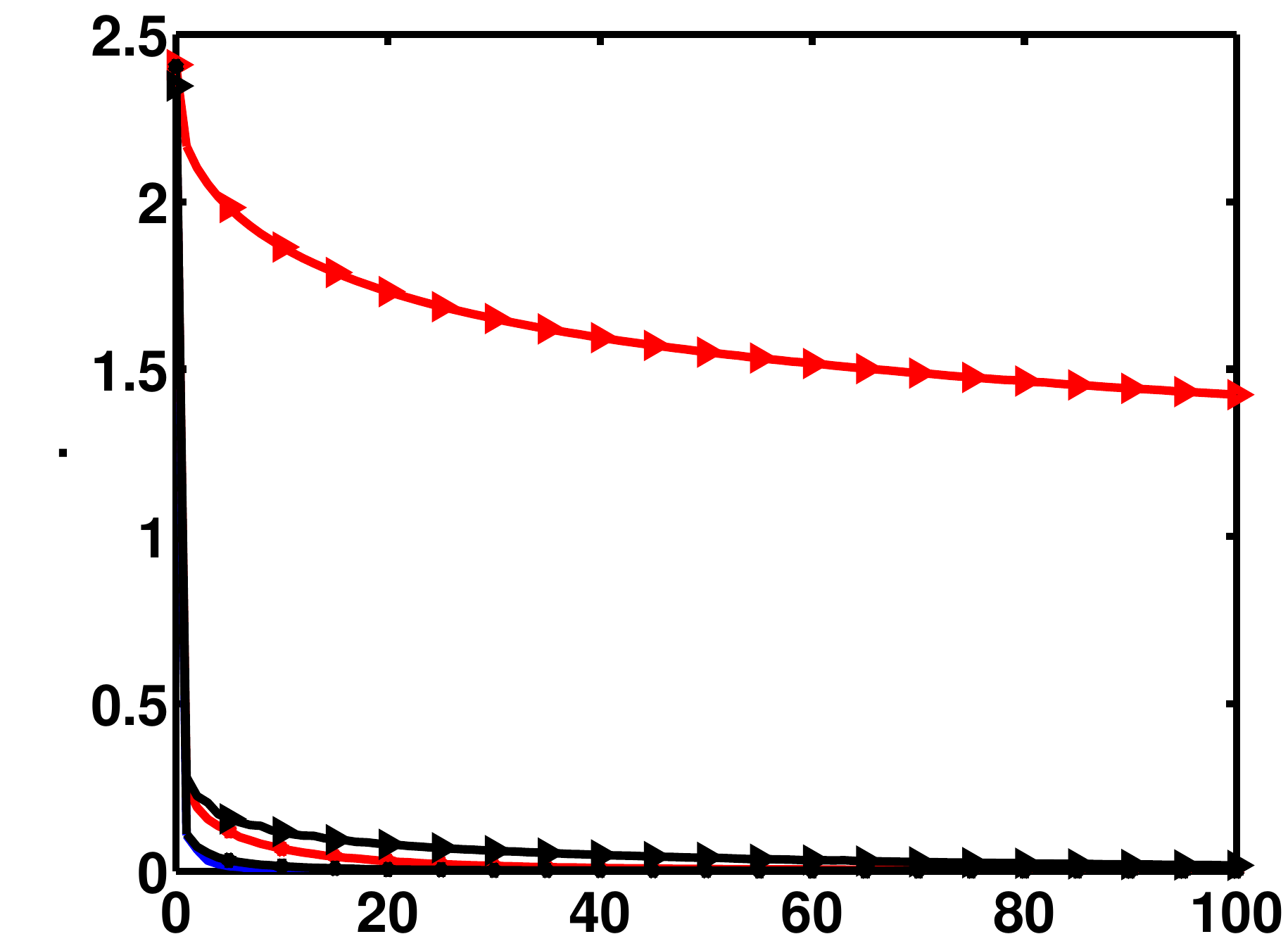} \\
   \includegraphics[width=\picwidth]{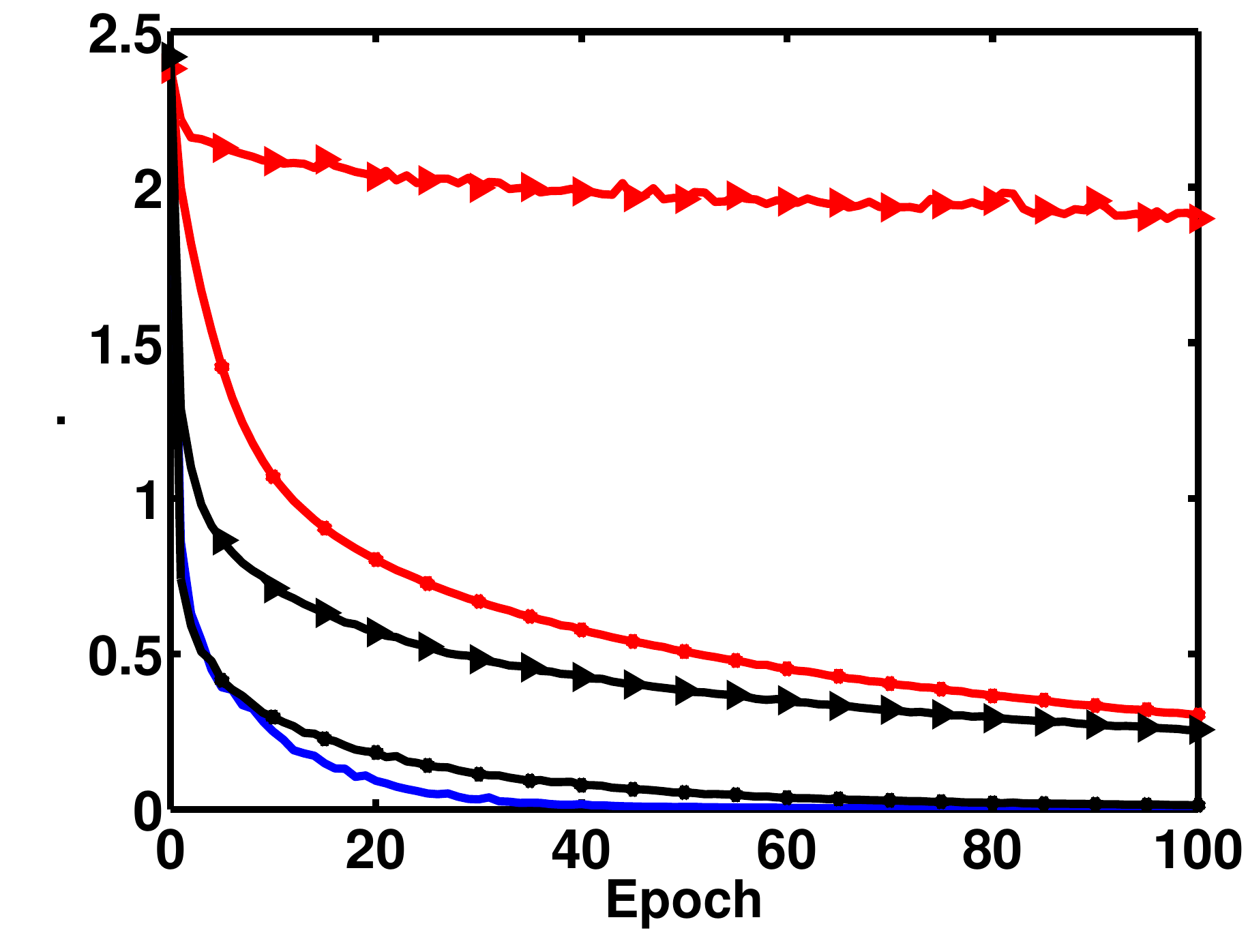}
  \end{tabular}
 }
 \hspace{-0.3in}
 \subfloat{
  \begin{tabular}{r}
   \includegraphics[width=\picwidth]{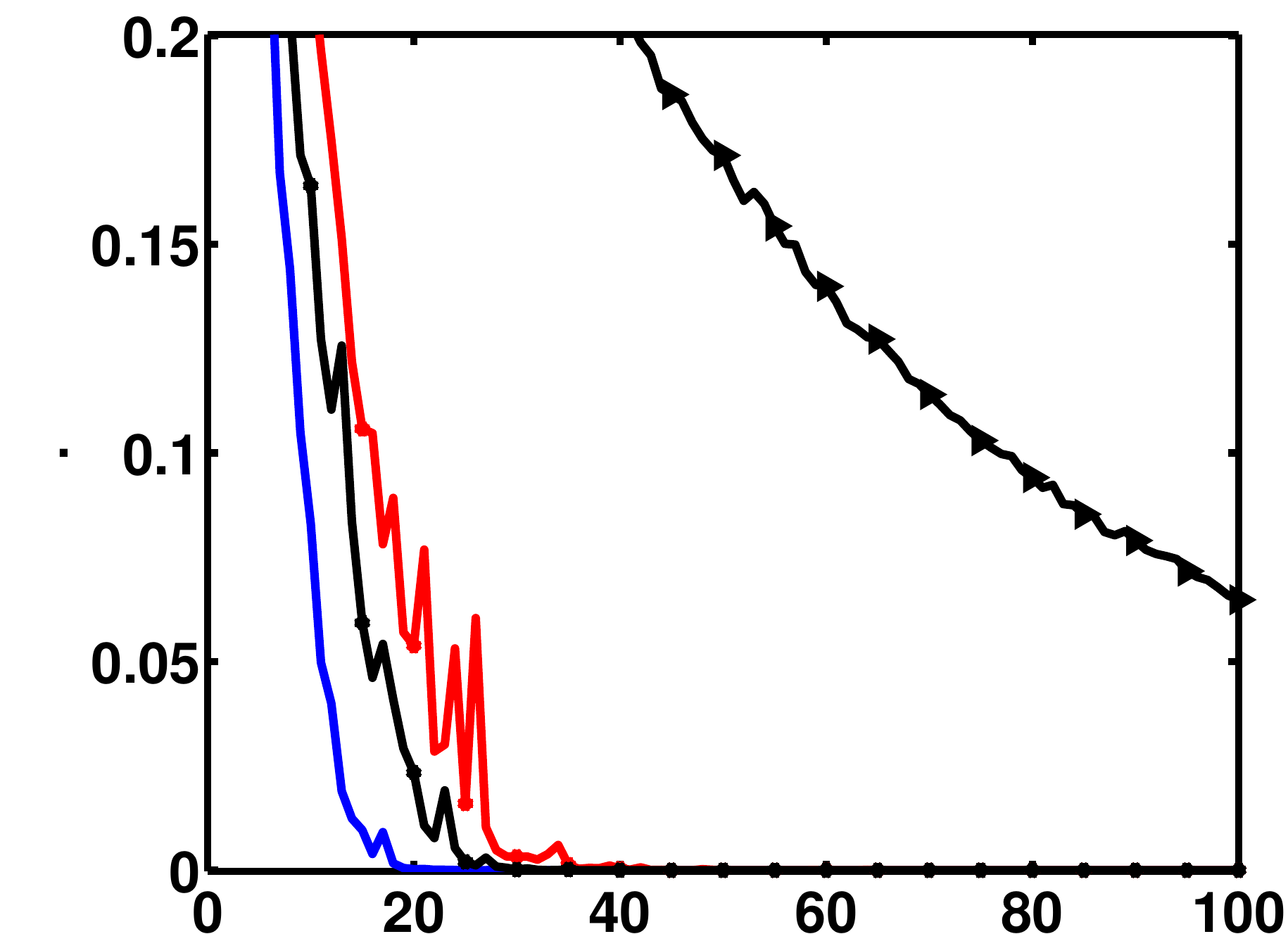} \\
   \includegraphics[width=\picwidth]{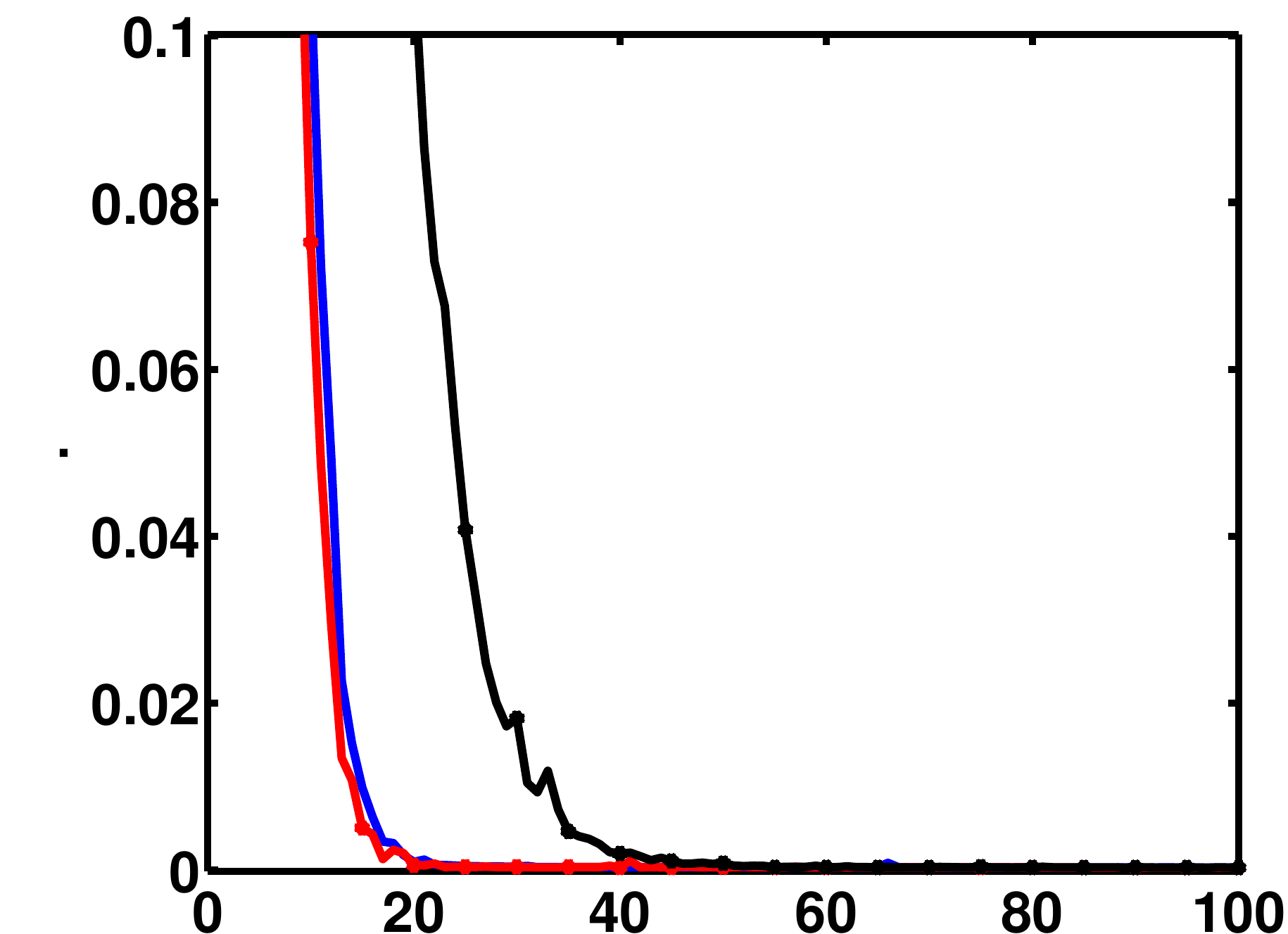} \\
   \includegraphics[width=\picwidth]{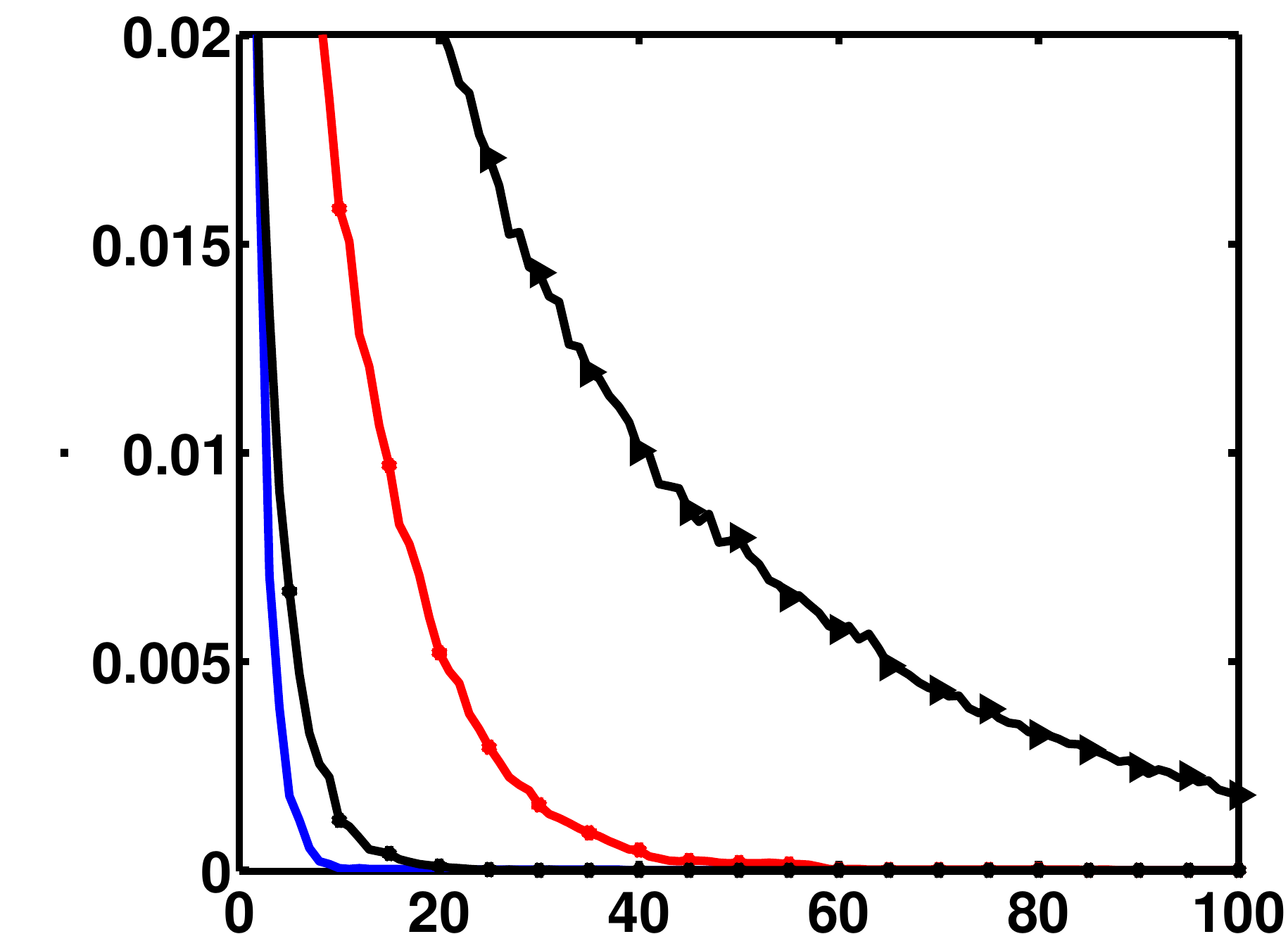} \\
   \includegraphics[width=\picwidth]{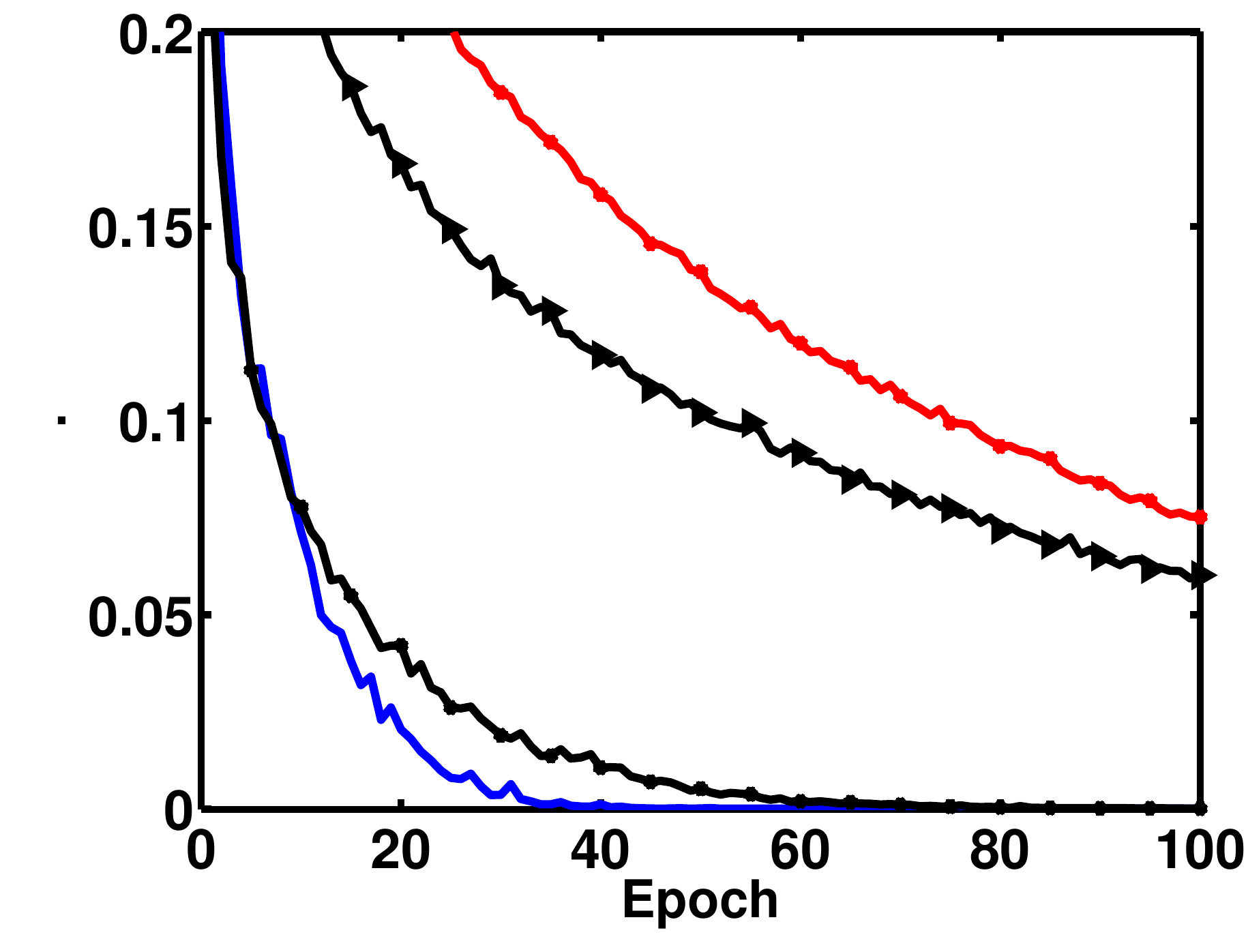}
  \end{tabular}
 }
 \hspace{-0.3in}
 \subfloat{
  \begin{tabular}{r}
   \includegraphics[width=\picwidth]{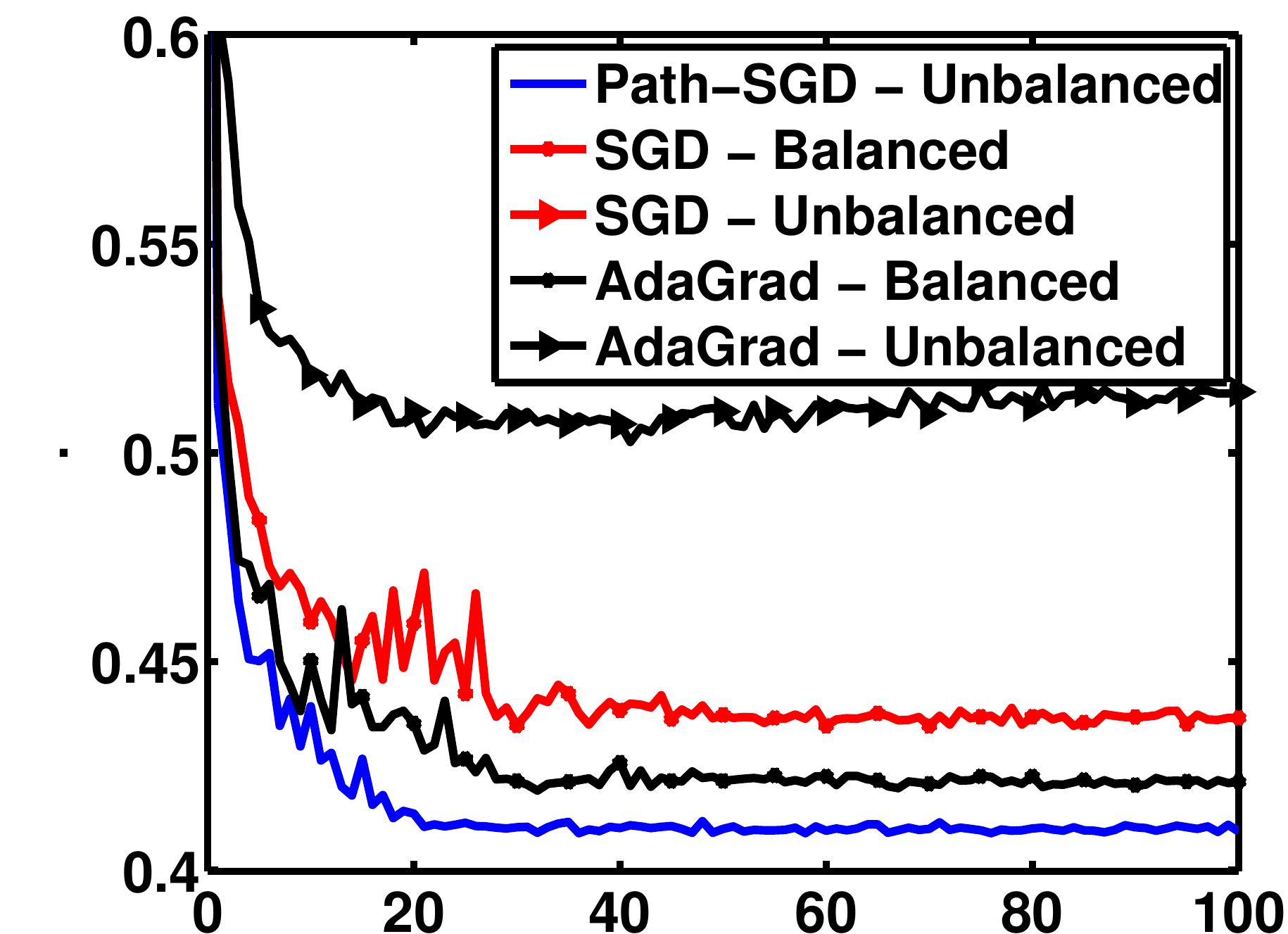} \\
   \includegraphics[width=\picwidth]{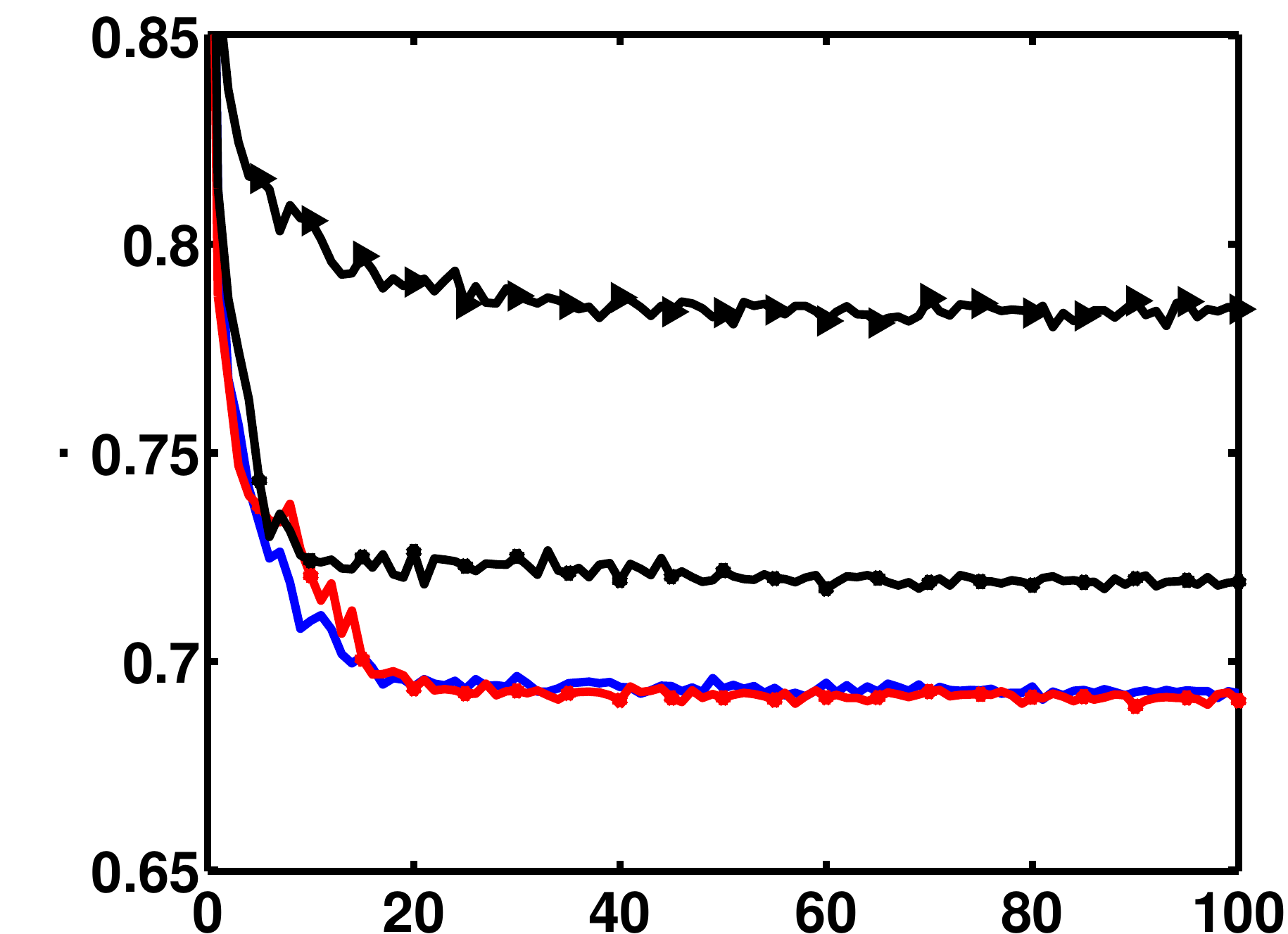} \\
   \includegraphics[width=\picwidth]{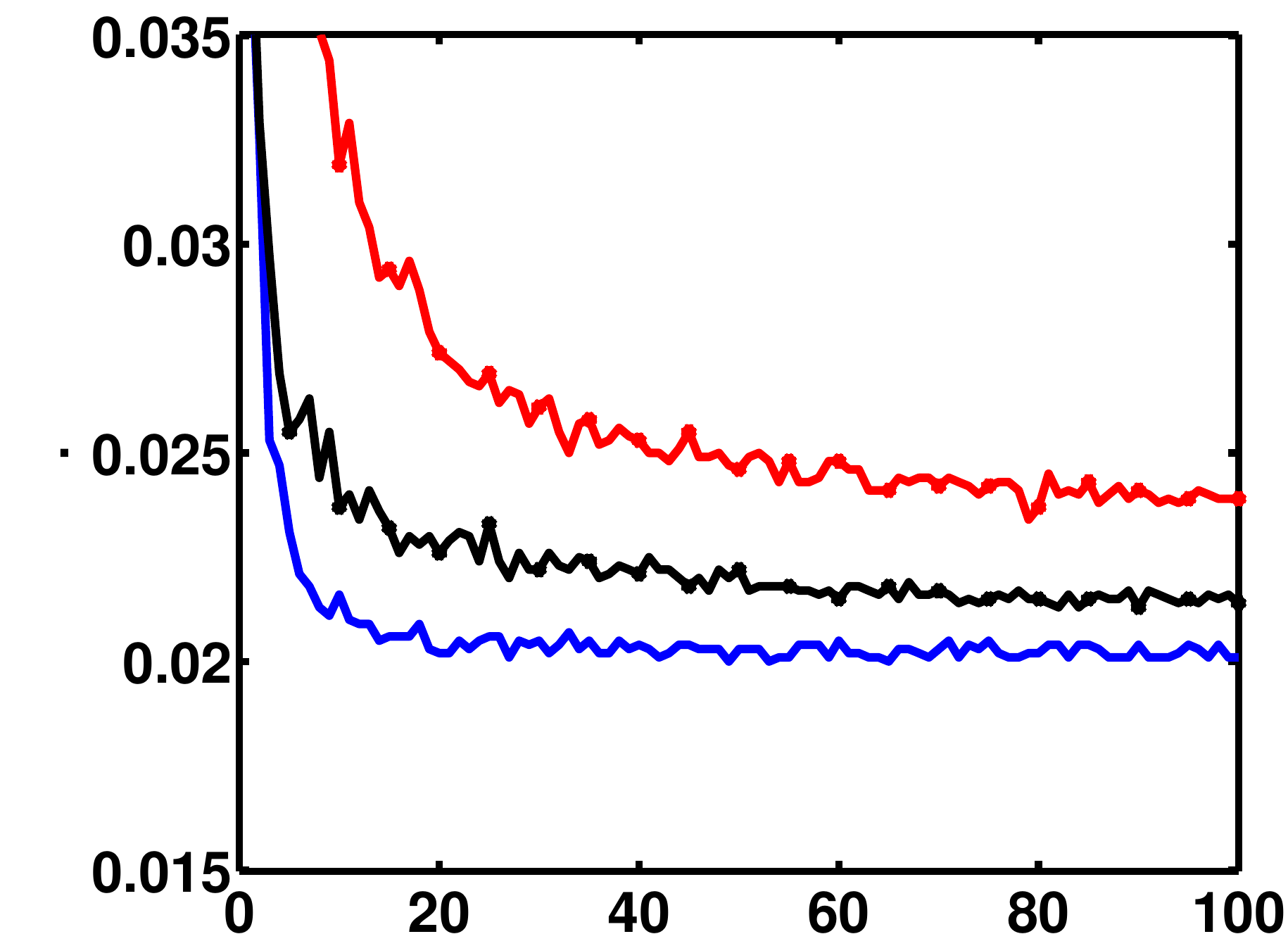} \\
   \includegraphics[width=\picwidth]{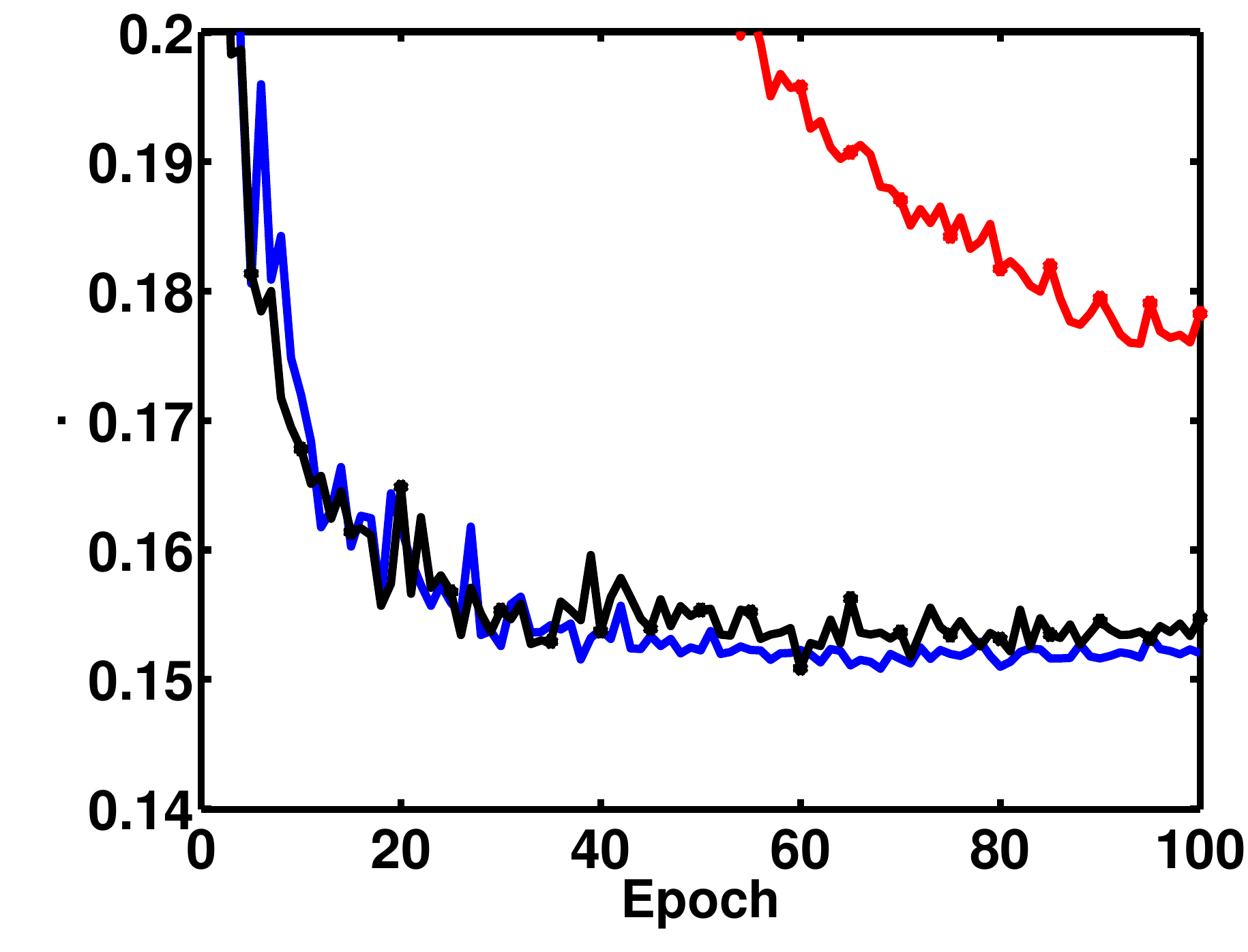}
  \end{tabular}
 }
 
  \begin{picture}(0,0)(0,0)
\rotatebox{90}{\put(342, 0){CIFAR-10}\put(240, 0){CIFAR-100}\put(147, 0){MNIST}\put(50, 0){SVHN}}
\end{picture}
  \begin{picture}(0,0)(0,0)
{\put(30, 420){\small Cross-Entropy Training Loss}\put(187, 420){\small 0/1 Training Error}\put(328, 420){ \small 0/1 Test Error}}
\end{picture}
 \caption[\small Comparing Path-SGD to other optimization methods on 4 dataset without dropout]{\small Learning curves using different optimization methods 
 for 4 datasets without dropout. Left panel displays the cross-entropy objective function; 
middle and right panels show the corresponding values of the training and test errors, where the values are reported on
different epochs during the course of optimization. Best viewed in color.}
 \label{fig:nodrop}
\vspace{-0.1in}
\end{figure}

\begin{figure}[t!]
 \subfloat{
  \begin{tabular}{r}
   \includegraphics[width=\picwidth]{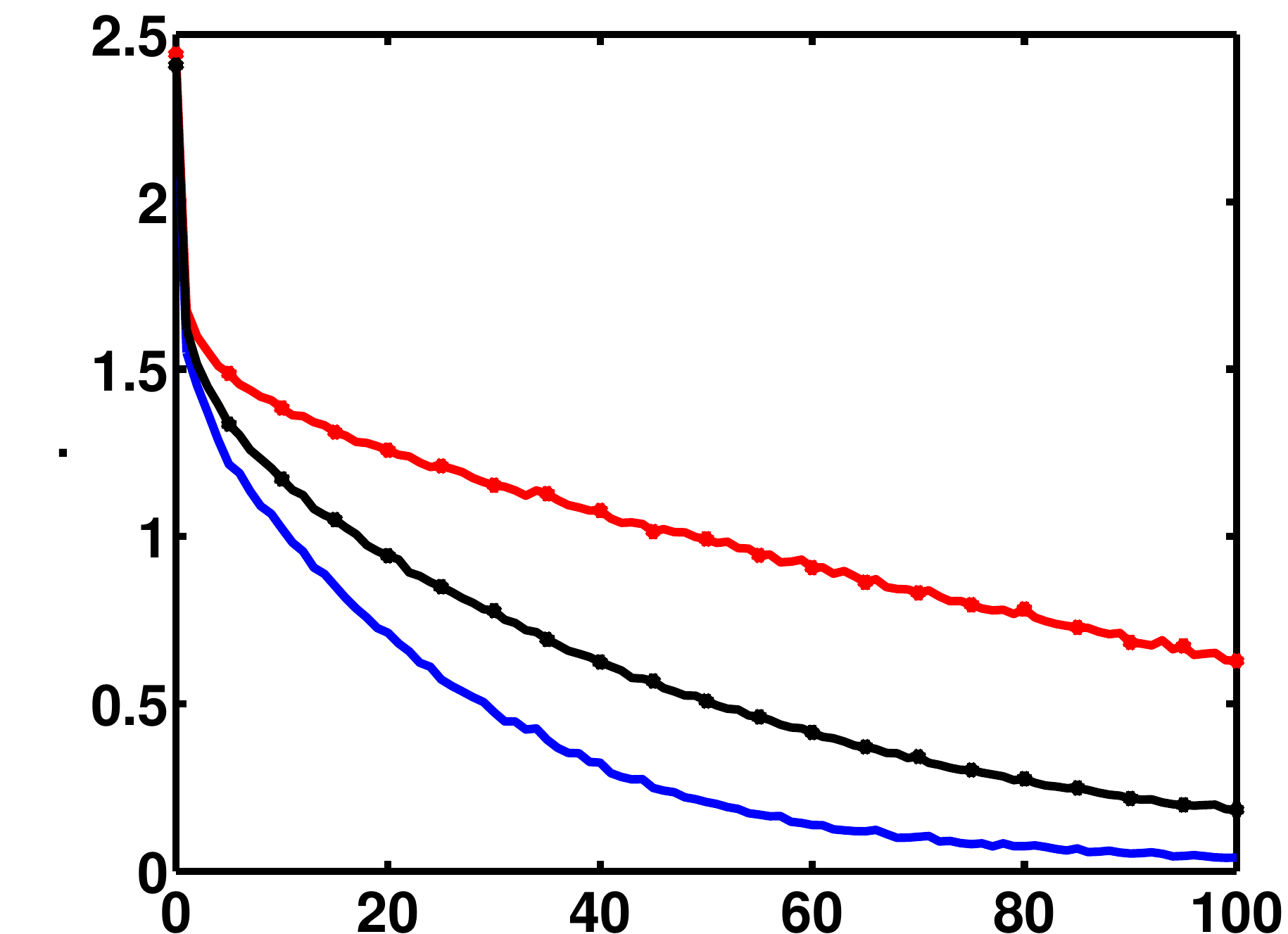} \\
   \includegraphics[width=\picwidth]{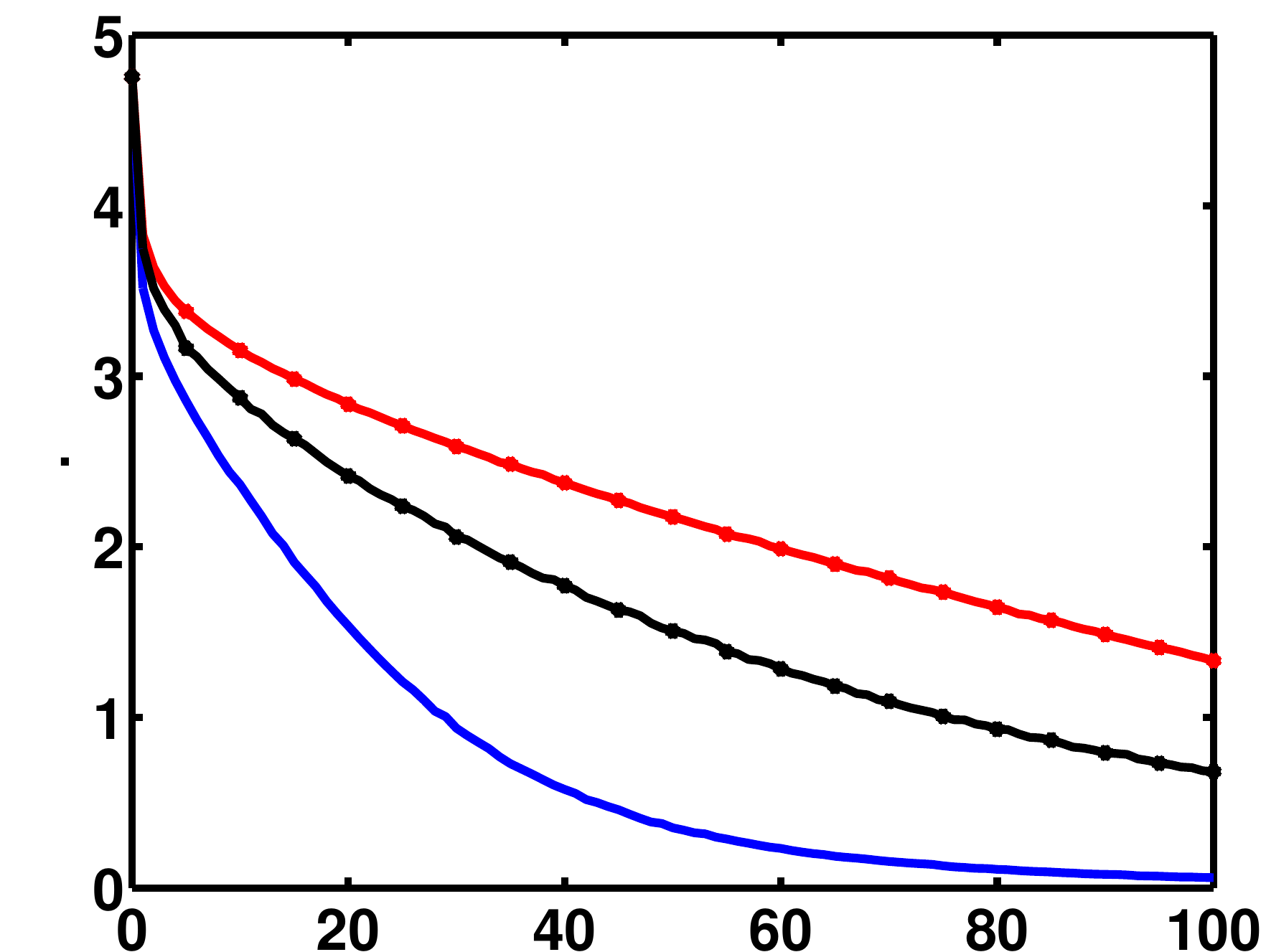} \\
   \includegraphics[width=\picwidth]{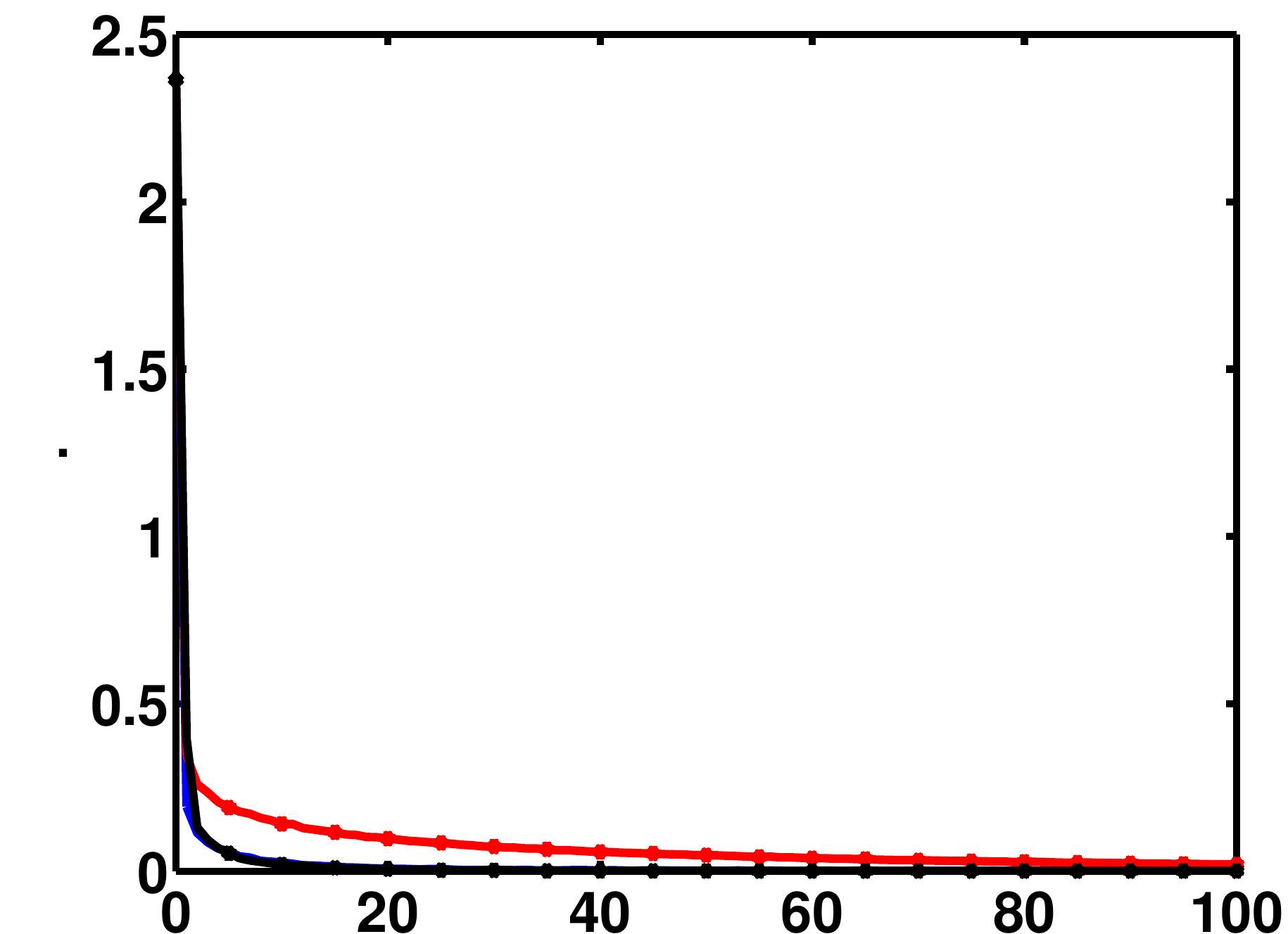} \\
   \includegraphics[width=\picwidth]{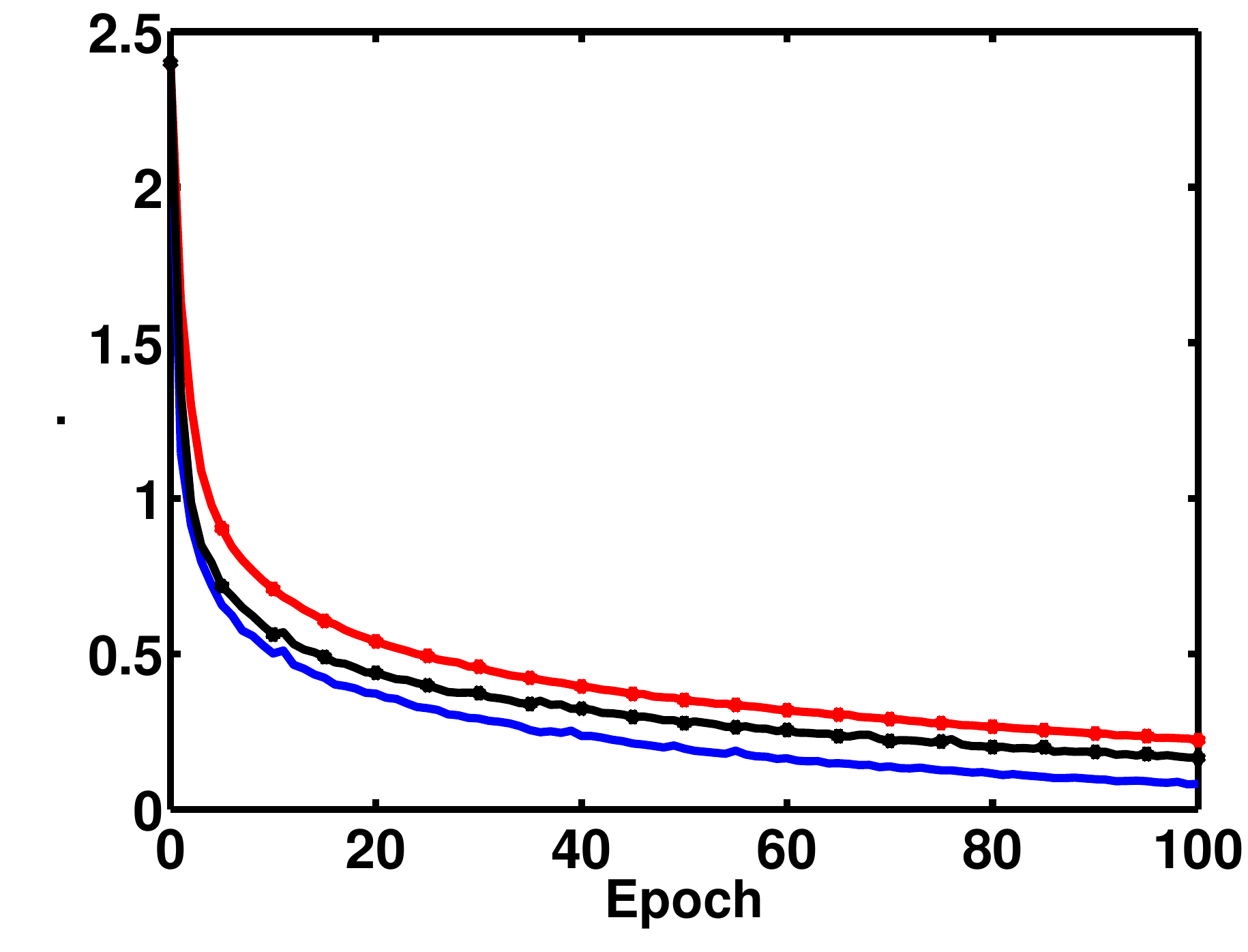}
  \end{tabular}
 }\hspace{-0.3in}
 \subfloat{
  \begin{tabular}{r}
   \includegraphics[width=\picwidth]{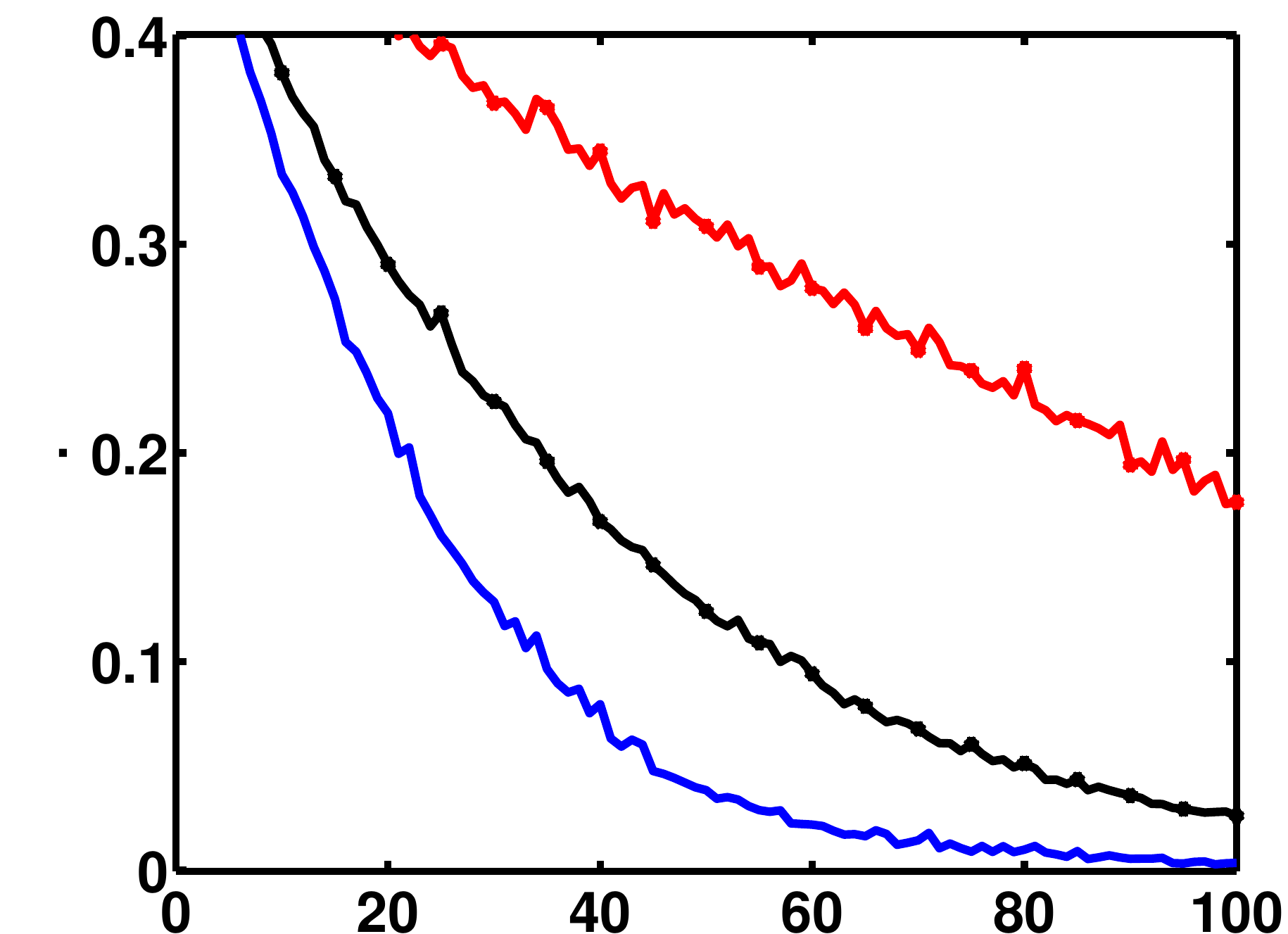} \\
   \includegraphics[width=\picwidth]{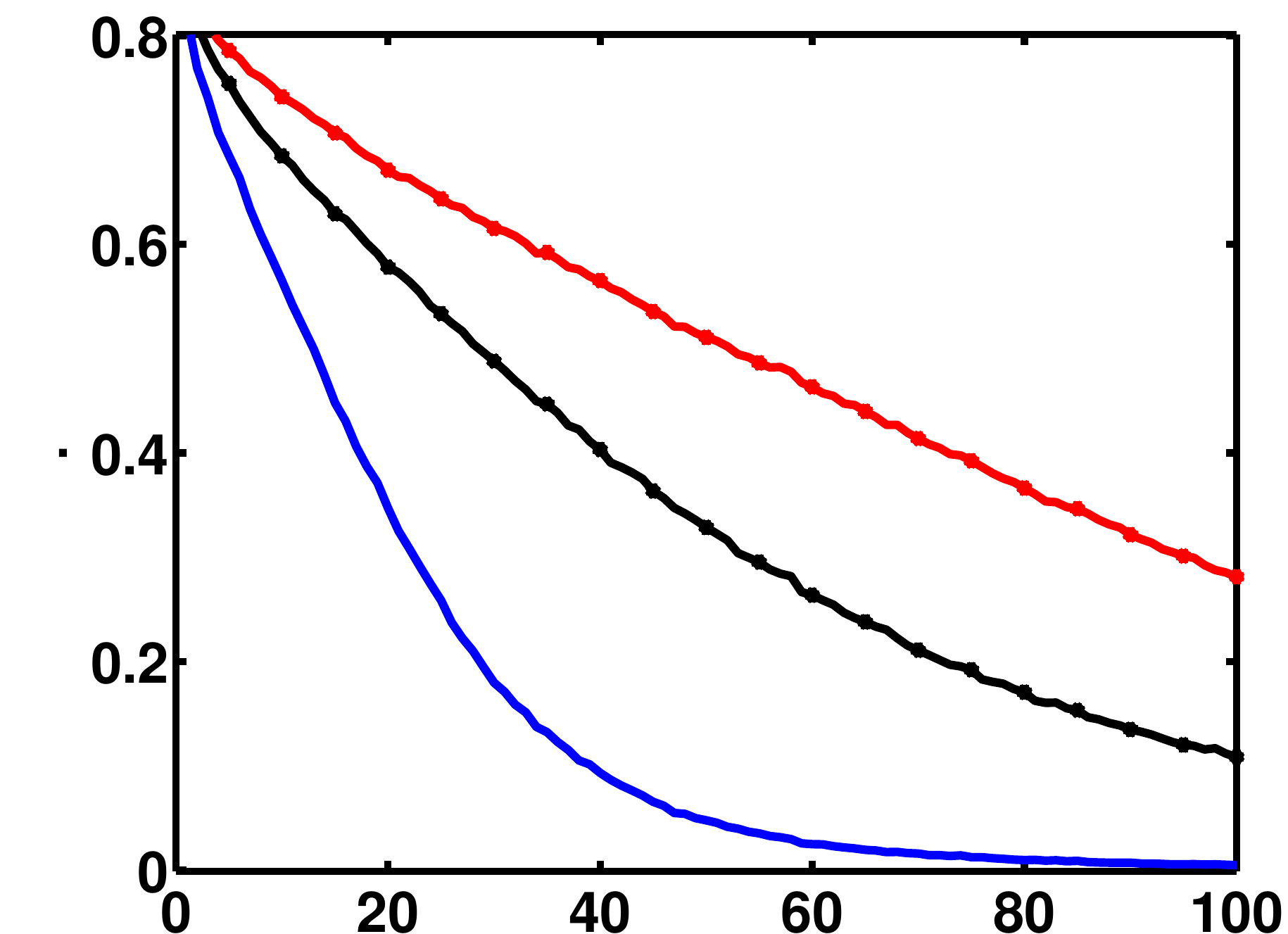} \\
   \includegraphics[width=\picwidth]{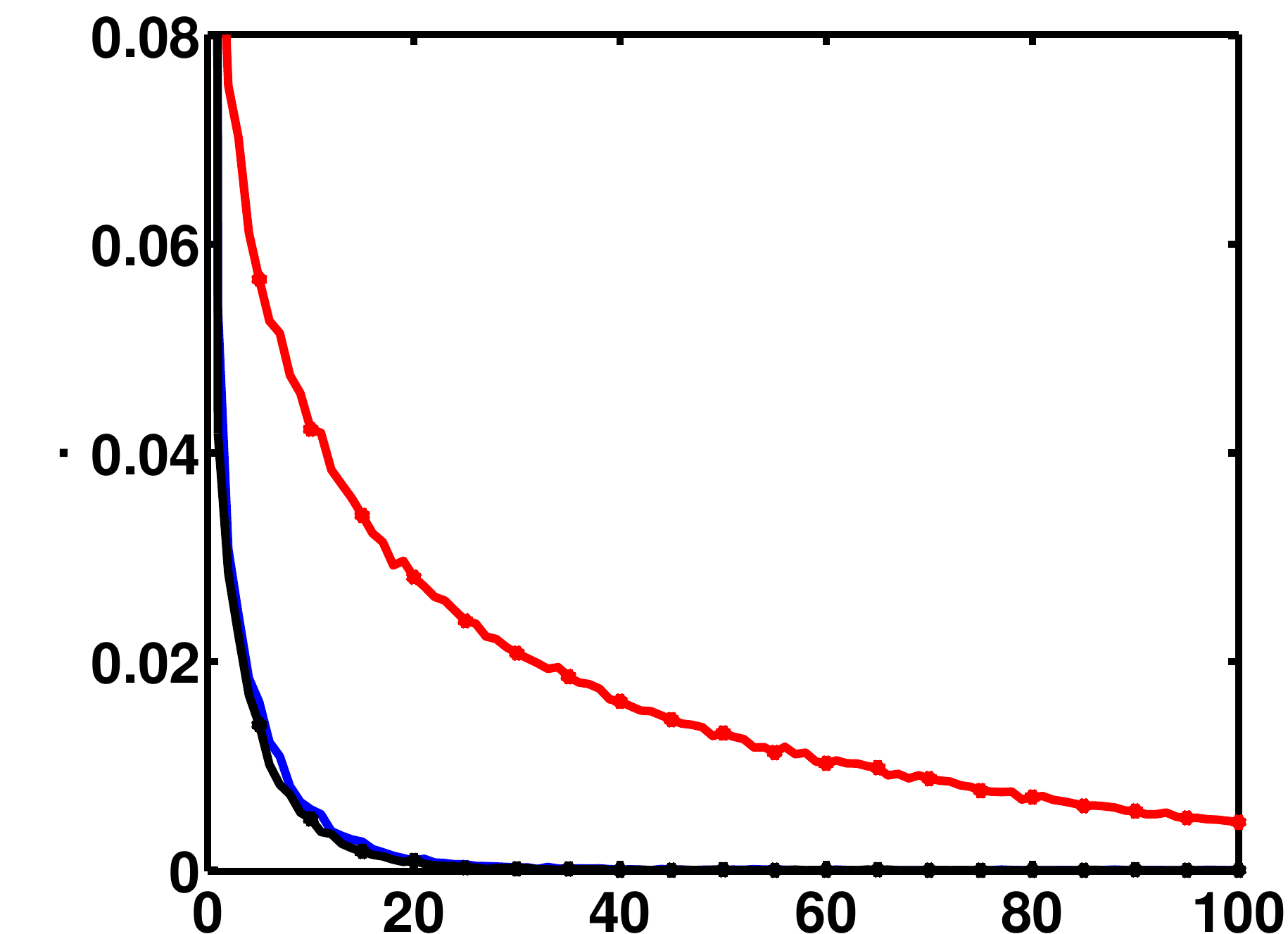} \\
   \includegraphics[width=\picwidth]{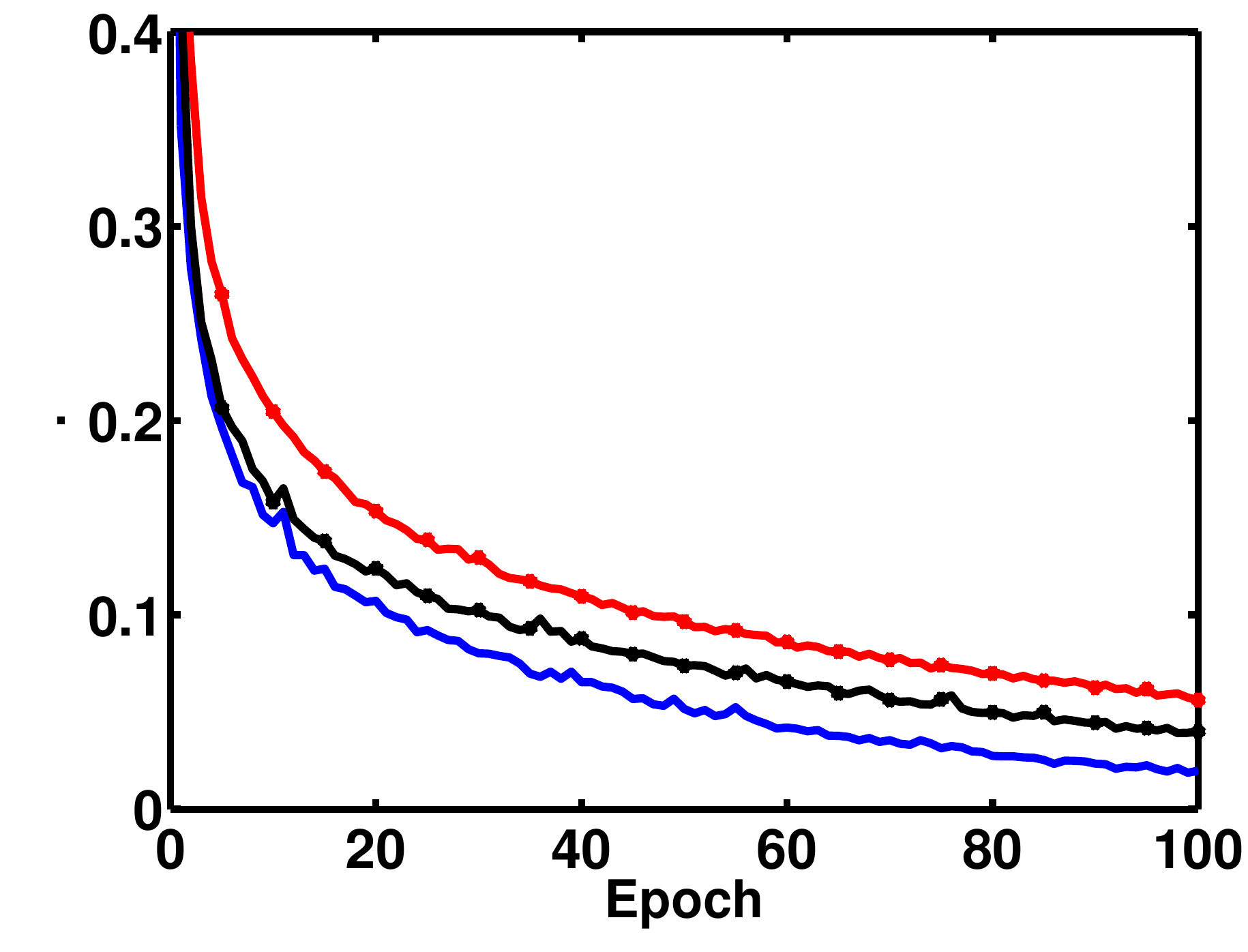}
  \end{tabular}
 }\hspace{-0.3in}
 \subfloat{
  \begin{tabular}{r}
   \includegraphics[width=\picwidth]{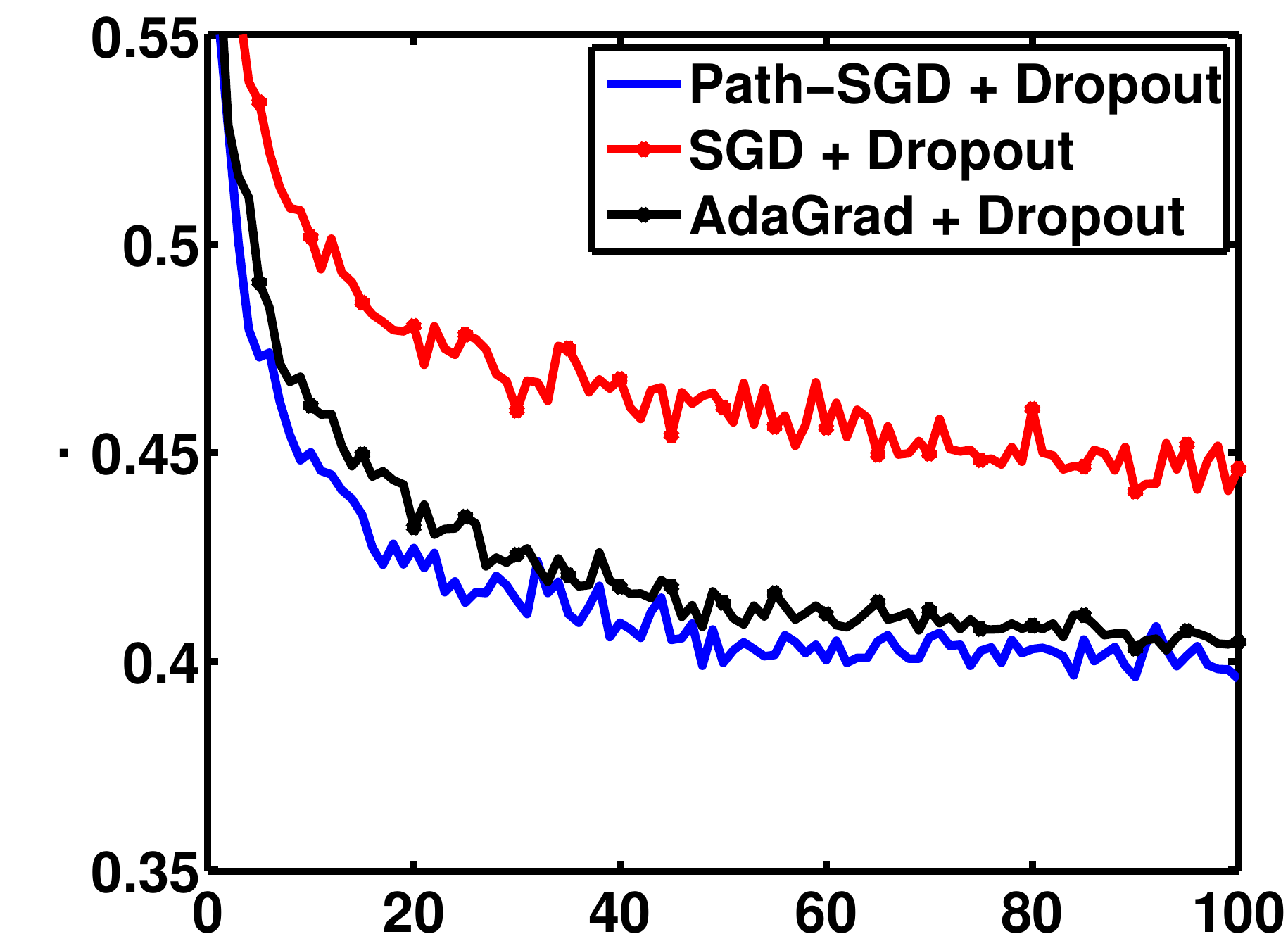} \\
   \includegraphics[width=\picwidth]{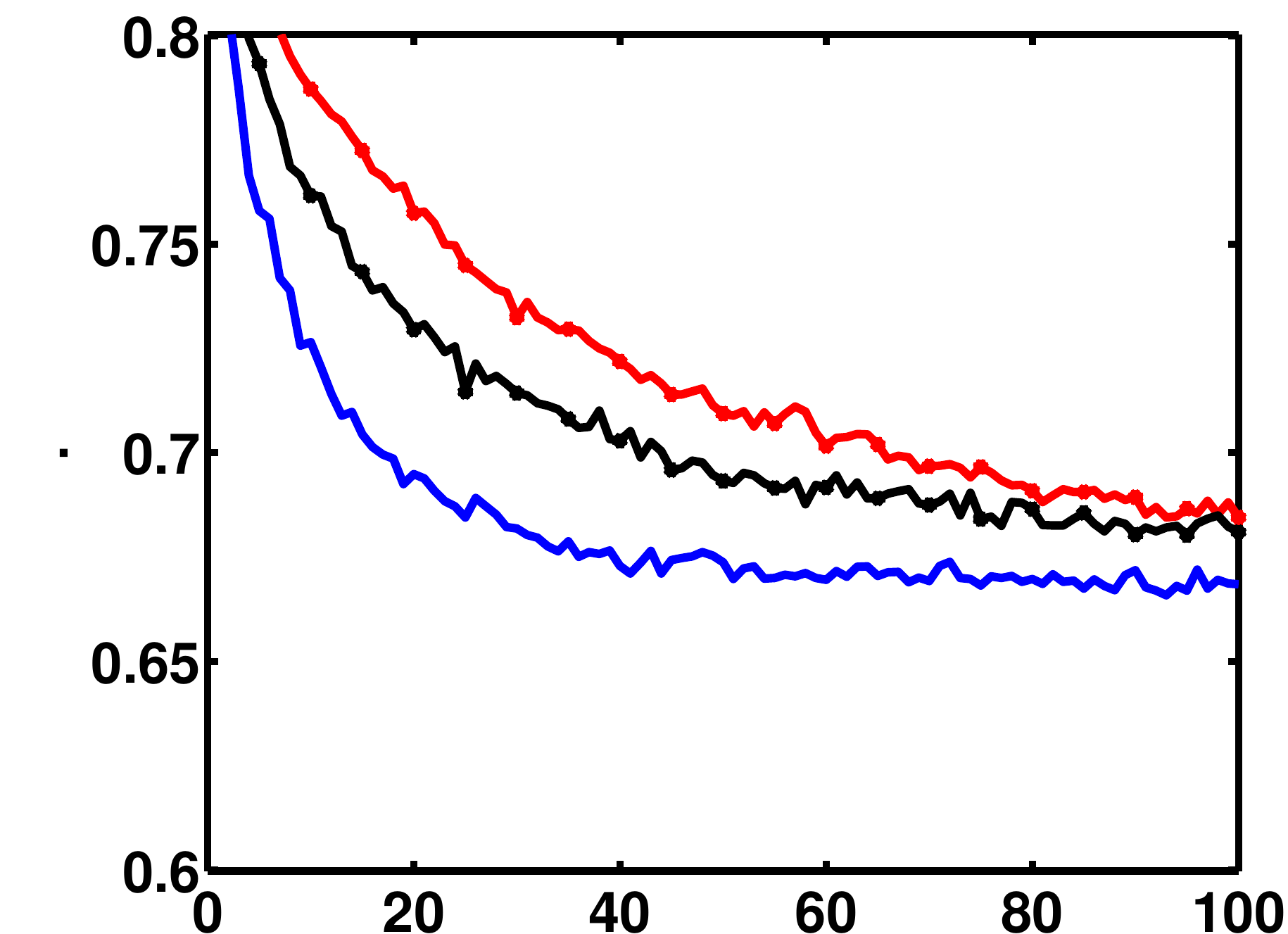} \\
   \includegraphics[width=\picwidth]{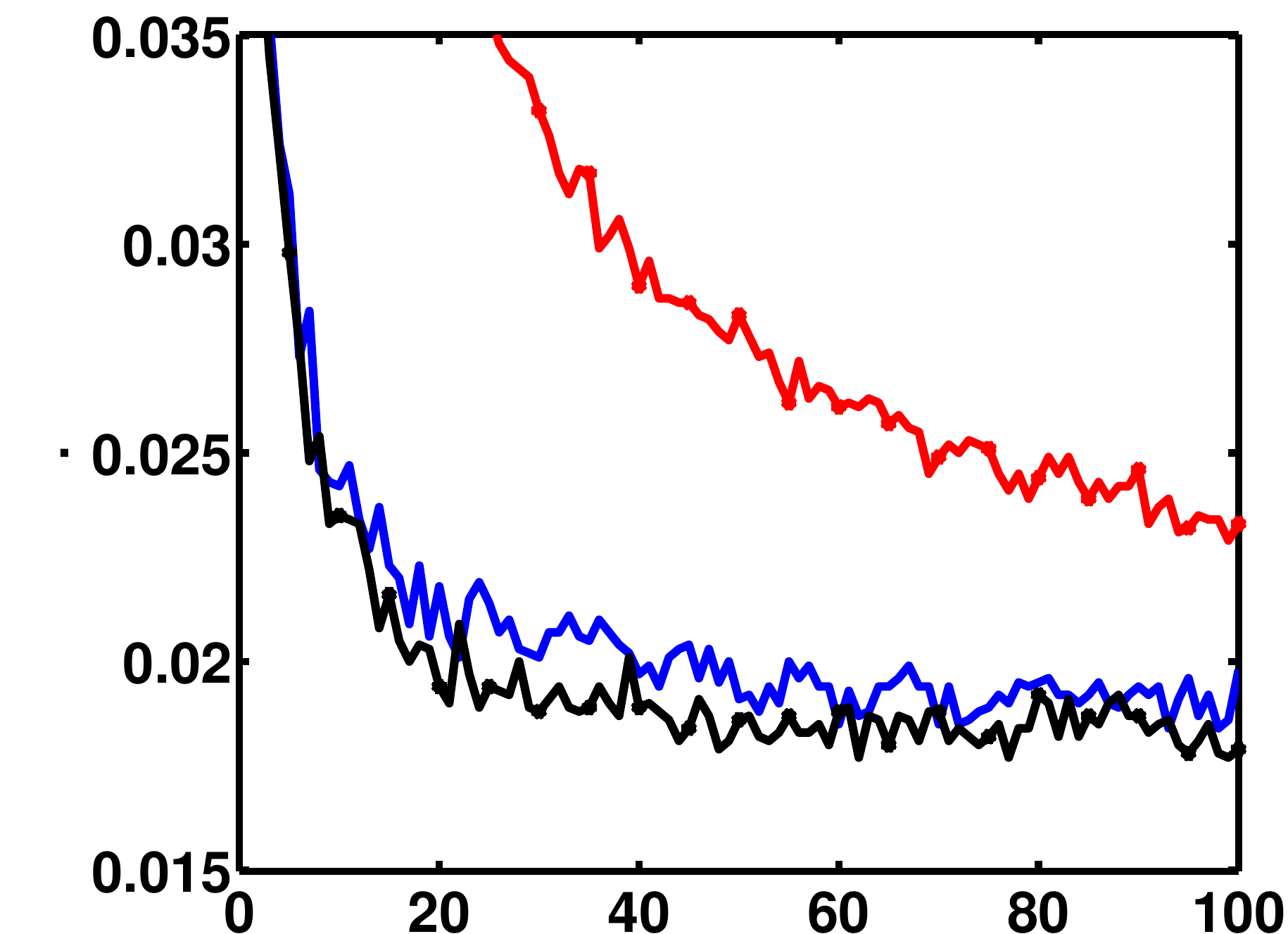} \\
   \includegraphics[width=\picwidth]{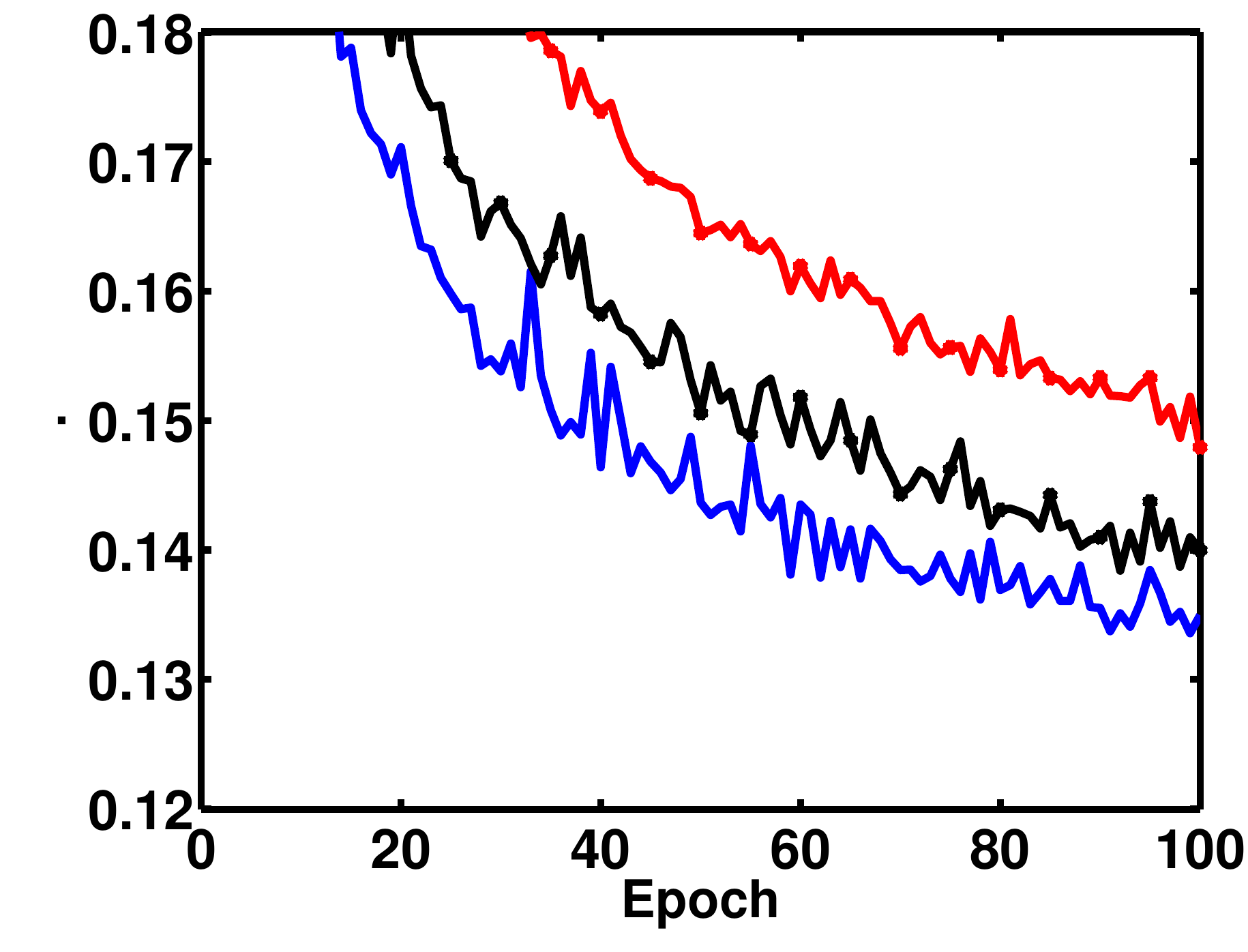}
  \end{tabular}
 }
 
 \begin{picture}(0,0)(0,0)
\rotatebox{90}{\put(342, 0){CIFAR-10}\put(240, 0){CIFAR-100}\put(147, 0){MNIST}\put(50, 0){SVHN}}
\end{picture}
  \begin{picture}(0,0)(0,0)
{\put(30, 420){\small Cross-Entropy Training Loss}\put(187, 420){\small 0/1 Training Error}\put(328, 420){ \small 0/1 Test Error}}
\end{picture}
\vspace{-0.1in}
 \caption[\small Comparing Path-SGD to other optimization methods on 4 dataset with dropout]{\small Learning curves using different optimization methods
 for 4 datasets with dropout. Left panel displays the cross-entropy objective function;     
middle and right panels show the corresponding values of the training and test errors. Best viewed in color.}
 \label{fig:dropout}
\vspace{-0.1in}
\end{figure}

We can see in Figure~\ref{fig:dropout} that as expected, the unbalanced initialization
considerably hurts the performance of SGD and AdaGrad (in many cases
their training and test errors are not even in the range of the plot
to be displayed), while Path-SGD performs essentially the same. Another
interesting observation is that even in the balanced settings, not
only does Path-SGD often get to the same value of objective function, training and test error faster, but also the final generalization error for Path-SGD is sometimes considerably lower than SGD and AdaGrad). The plots for test errors could also imply that implicit regularization due to steepest descent with respect to path-regularizer leads to a solution that generalizes better. This view is similar to observations in \cite{neyshabur2015search} on the role of implicit regularization in deep learning.

The results suggest that Path-SGD outperforms SGD and AdaGrad in two
different ways. First, it can achieve the same accuracy much faster
and second, the implicit regularization by Path-SGD leads to a local
minima that can generalize better even when the training error is
zero. This can be better analyzed
by looking at the plots for more number of epochs which we have
provided in  \cite{neyshabur2015path}. We should also point that
Path-SGD can be easily combined with AdaGrad or Adam to take advantage of the
adaptive stepsize or used together with a momentum term. This could
potentially perform even better compare to Path-SGD.

\section{Experiments on Recurrent Neural Networks}

\subsection{The Contribution of the Second Term}
As we discussed in section \ref{sec:path-compute}, the second term $\kappa^{(2)}$ in the update rule can be computationally expensive for large networks. In this section
we investigate the significance of the second term and show that at least in our experiments, the contribution of the second term is negligible.
To compare the two terms $\kappa^{(1)}$ and $\kappa^{(2)}$, we train a single layer RNN with $H=200$ hidden units for the task of word-level language modeling on Penn Treebank (PTB) Corpus~\cite{marcus1993}. Fig.~\ref{fig:path-vs-sgd} compares the performance of SGD vs. Path-SGD with/without  $\kappa^{(2)}$. We clearly see that both version of Path-SGD are performing very similarly and both of them outperform SGD significantly. This results in Fig.~\ref{fig:path-vs-sgd} suggest that the first term is more significant and therefore we can ignore the second term.

\begin{figure*}[t!]
        \centering
        \includegraphics[height=1.5in]{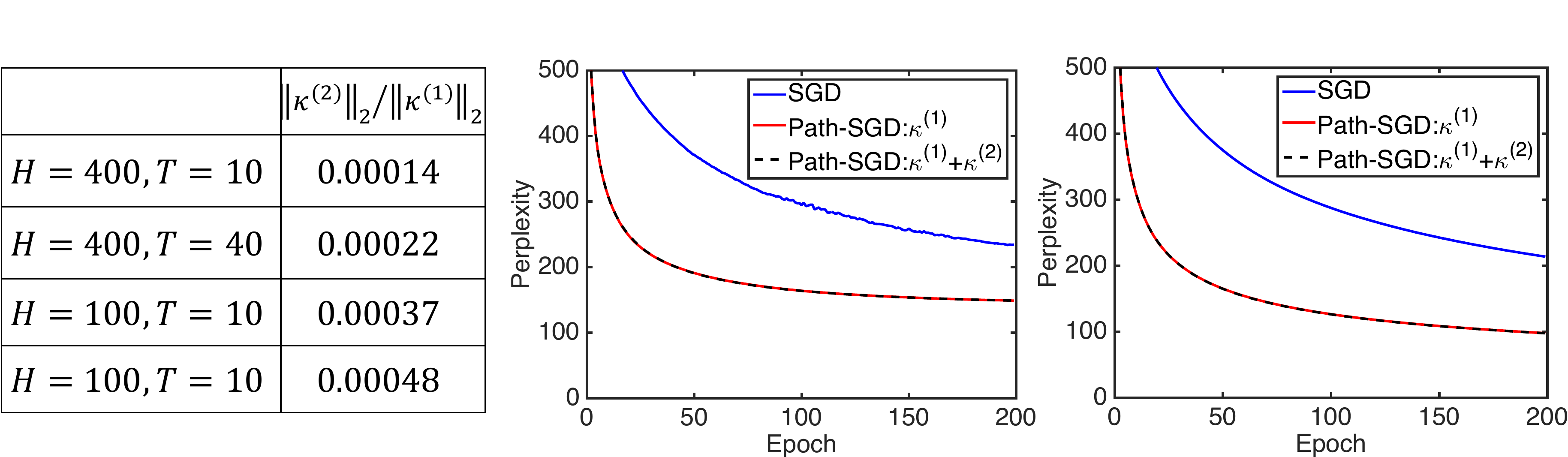}
    \caption[\small The contribution of the second term.]{\small Path-SGD with/without the second term in word-level language modeling on PTB. We use the standard split (929k training, 73k validation and 82k test) and the vocabulary size of 10k words. We initialize the weights by sampling from the uniform distribution with range $[-0.1,0.1]$. The table on the left shows the ratio of magnitude of first and second term for different lengths $T$ and number of hidden units $H$. The plots compare the training and test errors using a mini-batch of size 32 and backpropagating through $T=20$ time steps and using a mini-batch of size 32 where the step-size is chosen by a grid search.}
    \label{fig:path-vs-sgd}
    
\begin{picture}(0,0)(0,0)
{\put(120,190){\tiny Test Error}\put(-12,190){\tiny Training Error}}
\end{picture}
\vspace{-0.2in}
\end{figure*}

To better understand the importance of the two terms, we compared 
the ratio of the norms $\norm{\kappa^{(2)}}_2/\norm{\kappa^{(1)}}_2$ for different RNN lengths $T$ and number of hidden units $H$. 
The table in Fig.~\ref{fig:path-vs-sgd} shows that the contribution of the second term is bigger when the network has fewer number of hidden units and the length of the RNN is larger ($H$ is small and $T$ is large). However, in many cases, it appears that the first term has a much bigger contribution in the update step and hence the second term can be safely ignored. Therefore, in the rest of our experiments, we calculate the Path-SGD updates only using the first term $\kappa^{(1)}$.

\subsection{Addition problem} 
Training Recurrent Neural Networks is known to be hard for modeling long-term dependencies due to the 
gradient vanishing/exploding problem \cite{hochreiter98,bengio1994learning}. In this section, we consider synthetic problems that are specifically designed to test the ability of a model to capture the long-term dependency structure. Specifically, we consider the addition problem and the sequential MNIST problem. 

\textbf{Addition Problem}: The addition problem was introduced in \cite{hochreiter97}. Here,
each input consists of two sequences of length $T$, one of which includes numbers sampled from the uniform distribution with range $[0, 1]$ and the other sequence serves as a mask which is filled with zeros except for two entries.
These two entries indicate which of the two numbers in the first sequence we need to add and the task is to output the result of this addition.\\ 
\textbf{Sequential MNIST}:
In sequential MNIST, each digit image is reshaped into a sequence of length $784$,
turning the digit classification task into sequence classification 
with long-term dependencies~\cite{le15,arjovsky15}.

For both tasks, we closely follow the experimental protocol in \cite{le15}. 
We train a single-layer RNN consisting of 100 hidden units with path-SGD, referred to as \textbf{RNN-Path}. We also train an RNN of the same size with identity initialization, as was proposed in~\cite{le15}, using SGD as our baseline model, referred to as \textbf{IRNN}. We performed grid search for the
learning rates over $\{10^{-2},10^{-3},10^{-4}\}$ for both our model and the baseline.  Non-recurrent weights were initialized from the uniform distribution with range $[-0.01,0.01]$.
Similar to~\cite{arjovsky15}, we found the IRNN to be fairly unstable (with SGD optimization typically diverging). Therefore for IRNN, we ran 10 different initializations and picked the one that did not explode to show its performance. 

In our first experiment, we evaluate Path-SGD on the addition problem. 
The results are shown in Fig.~\ref{fig:adding} with increasing the length $T$ of the sequence: 
$\{100,400,750\}$. We note that this problem becomes much harder as $T$ increases because the dependency between
the output (the sum of two numbers) and the corresponding inputs becomes more distant. 
We also compare RNN-Path with the previously published results, 
including identity initialized RNN ~\cite{le15} (IRNN), unitary RNN \cite{arjovsky15} (uRNN), and np-RNN\footnote{The original paper does not include any result for 750, so we implemented np-RNN for comparison. However, in our implementation the np-RNN is not able to 
even learn sequences of length of 200. Thus we put ``>2'' for length of 750.} introduced by \cite{talathi16}. Table \ref{tb:adding_mnist} shows the effectiveness of using Path-SGD. Perhaps 
more surprisingly,
with the help of path-normalization, a simple RNN with the identity initialization is able to achieve a 0\% error on the sequences of length 750, 
whereas all the other methods, including LSTMs, fail. This shows that Path-SGD may help stabilize the training and alleviate the gradient problem, so as to perform well on longer sequence. 
We next tried to model the sequences length of 1000, but we found that for such very long 
sequences RNNs, even with Path-SGD, fail to learn. 

\begin{figure}[t]
\centering
\includegraphics[width=15cm]{./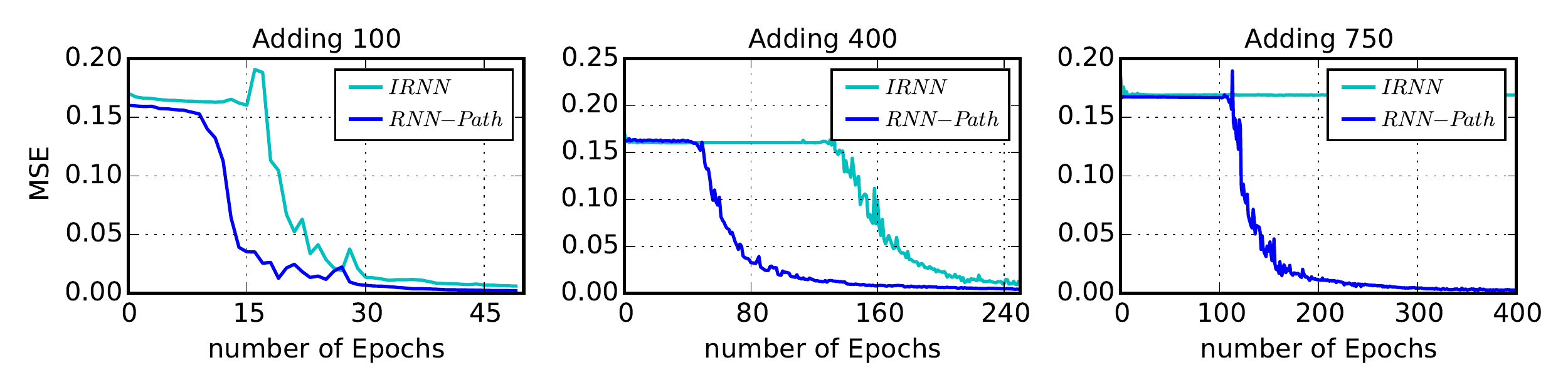}
\caption{\small Test errors for the addition problem of different lengths.}
\label{fig:adding}
\end{figure}

\begin{table}[t!]
    \begin{minipage}{.5\linewidth}
      \centering
      \footnotesize
        \begin{tabular}{c c c c c}
    \toprule
    &Adding & Adding& Adding  & \\
        & 100 & 400& 750  &   sMNIST\\
    \midrule
    IRNN~\cite{le15}& 0& 16.7& 16.7 & 5.0\\
    uRNN \cite{arjovsky15} & 0 & 3 & 16.7 & 4.9 \\
    LSTM \cite{arjovsky15}  & 0 & 2 & 16.7 & 1.8\\
    np-RNN\cite{talathi16}        & 0 & 2 & >2 & 3.1\\
    \midrule
    IRNN & 0 & 0 & 16.7  &7.1\\
    RNN-Path & 0& 0 & 0 &  3.1\\
    \bottomrule
  \end{tabular}
        \caption[\small Test error (MSE) for the adding problem and test classification error for the sequential MNIST.]{\small Test error (MSE) for the adding problem with different input sequence lengths and test classification error for the sequential MNIST.}
        \label{tb:adding_mnist}
      
    \end{minipage}%
    \;\;
    \begin{minipage}{.5\linewidth}
    \footnotesize
      \centering
       \begin{tabular}{lcc}
    \toprule
        & PTB & text8 \\
    \midrule
    RNN+smoothReLU \cite{pachitariu2013regularization} & - & 1.55\\
    HF-MRNN \cite{mikolov2012subword}    &  1.42 & 1.54\\
    RNN-ReLU\cite{KruegerM15} & 1.65 & -\\
    RNN-tanh\cite{KruegerM15} & 1.55 & -\\
    TRec,$\beta=500$\cite{KruegerM15} & 1.48 & -\\
    \midrule
    RNN-ReLU & 1.55 & 1.65\\
    RNN-tanh & 1.58 & 1.70\\
    RNN-Path & 1.47 & 1.58\\
    LSTM  & 1.41 & 1.52\\
    \bottomrule
  \end{tabular}
        \caption{\small Test BPC for PTB and text8.}
        \label{tb:bpc_penn_text8}
    \end{minipage} 
\end{table}

Next, we evaluate Path-SGD on the Sequential MNIST problem. 
Table \ref{tb:adding_mnist}, right column, reports test error rates achieved by RNN-Path 
compared to the previously published results. Clearly,
using Path-SGD helps RNNs achieve better generalization. In many cases,
RNN-Path outperforms other RNN methods (except for LSTMs), even for such a long-term dependency problem.

\subsection{Language Modeling Tasks}
In this section we evaluate Path-SGD on a language modeling task. 
We consider two datasets, Penn Treebank (PTB-c) and text8~\footnote{http://mattmahoney.net/dc/textdata}.
\textbf{PTB-c}: We performed experiments on a tokenized Penn Treebank Corpus, following 
the experimental protocol of~\cite{KruegerM15}. The training, validations and test data contain 5017k, 393k and 442k characters respectively.
The alphabet size is 50, and each training sequence is of length 50. \textbf{text8}: The text8 dataset contains 100M characters from Wikipedia with an alphabet size of 27. We follow the data partition of~\cite{mikolov2012subword}, 
where each training sequence has a length of 180.
Performance is evaluated using bits-per-character (BPC) metric, which is $\log_2$ of perplexity. 

Similar to the experiments on the synthetic datasets, 
for both tasks, we train a single-layer RNN consisting of 2048 hidden units with path-SGD (RNN-Path).
Due to the large dimension of hidden space, SGD can take a fairly long time to converge. 
Instead, we use Adam optimizer~\cite{kingma2014adam} to help speed up the training, where 
we simply use the path-SGD gradient as input to the Adam optimizer. 

We also train three additional baseline models: a ReLU RNN with 2048 hidden units, a tanh RNN with 2048 hidden units, 
and an LSTM with 1024 hidden units, 
all trained using Adam. We performed grid search for learning rate over $\{10^{-3},5\cdot10^{-4},10^{-4}\}$ for all of our models.  For ReLU RNNs, we initialize the recurrent matrices 
from uniform$[-0.01,0.01]$, and uniform$[-0.2,0.2]$ for non-recurrent weights. 
For LSTMs, we use orthogonal initialization~\cite{saxe2013exact} for the recurrent matrices and 
uniform$[-0.01,0.01]$ for non-recurrent weights. The results are summarized in Table \ref{tb:bpc_penn_text8}.

\removed{
\begin{table}[t]
  \footnotesize
  \centering
  \begin{tabular}{ll}
    \toprule
    Penn-Treebank     & BPC     \\
    \midrule
    RNN+smoothReLU \cite{pachitariu2013regularization} & - & 1.55\\
    HF-MRNN \cite{mikolov2012subword}    &  1.42 & 1.54\\
    RNN-ReLU\cite{KruegerM15} & 1.65 & -\\
    RNN-tanh\cite{KruegerM15} & 1.55 & -\\
    TRec,$\beta=500$\cite{KruegerM15} & 1.48 & -\\
    \midrule
    RNN-ReLU & 1.55 & 1.65\\
    RNN-tanh & 1.58 & 1.70\\
    RNN-Path & 1.47 & 1.58\\
    LSTM  & 1.41 & 1.52\\
    \bottomrule
  \end{tabular}
  \quad
  \begin{tabular}{ll}
    \toprule
    text8     & BPC     \\
    \midrule
    RNN+smoothReLU \cite{pachitariu2013regularization}  &1.55  \\
    HF-MRNN \cite{mikolov2012subword}    &  1.54\\
    \midrule
    RNN-ReLU & 1.65 \\
    RNN-tanh & 1.70 \\
    RNN-Path & 1.58 \\
    LSTM  & 1.52\\
    \bottomrule
  \end{tabular}
\caption{\small Left: test Bits Per characters on PTB.
Right: test Bits Per characters on text8.
}
\label{tb:bpc_penn_text8}
\end{table}

}

We also compare our results to an RNN that uses 
hidden activation regularizer~\cite{KruegerM15} (TRec,$\beta=500$), 
Multiplicative RNNs trained by Hessian Free methods~\cite{mikolov2012subword}
(HF-MRNN), and an RNN with smooth version of ReLU \cite{pachitariu2013regularization}.
Table~\ref{tb:bpc_penn_text8} shows that path-normalization is able to 
outperform RNN-ReLU and RNN-tanh,
while at the same time 
shortening the performance gap between plain RNN and other more complicated models (e.g. LSTM by 57\% on PTB and 54\% on text8 datasets).
This demonstrates the efficacy of path-normalized optimization for training RNNs with ReLU activation.

\chapter{Data-Dependent Path Normalization} \label{chap:data-dependent}

In this chapter, we focus on two efficient alternative optimization
approaches proposed recently for feed-forward neural networks that are
based on intuitions about parametrization, normalization and the
geometry of parameter space: {\bf Path-SGD} \citep{NeySalSre15} was
derived as steepest descent algorithm with respect to particular
regularizer (the $\ell_2$-path regularizer, i.e.~the sum over all
paths in the network of the squared product over all weights in the
path ~\citep{NeyTomSre15}) and is invariant to weight
reparametrization.  {\bf Batch-normalization} \citep{IofSze15} was
derived by adding normalization layers in the network as a way of
controlling the variance of the input each unit receives in a
data-dependent fashion.  In this chapter, we propose a unified framework
which includes both approaches, and allows us to obtain additional
methods which interpolate between them.  Using our unified framework,
we can also tease apart and combine two different aspects of these two
approaches: data-dependence and invariance to weight reparametrization.

Our unified framework is based on first choosing a per-node complexity
measure we refer to as $\gamma_v$ (defined in Section
\ref{sec:unified}).  The choice of complexity measure is parametrized
by a choice of ``normalization matrix'' $R$, and different choices for
this matrix incorporate different amounts of data dependencies: for
path-SGD, $R$ is a non-data-dependent diagonal matrix, while for batch
normalization it is a data-dependent covariance matrix, and we can
interpolate between the two extremes.  Once $\gamma_v$ is defined, and
for any choice of $R$, we identify two different optimization
approaches: one relying on a normalized re-parameterization at each
layer, as in batch normalization (Section \ref{sec:bn}), and the other
an approximate steepest descent as in path-SGD, which we refer to as
DDP-SGD (Data Dependent Path SGD) and can be implemented efficiently
via forward and backward propagation on the network (Section
\ref{sec:ddp-sgd}).  We can now mix and match between the choice of
$R$ (i.e.~the extent of data dependency) and the choice of
optimization approach.

One particular advantage of the approximate steepest descent approach
(DDP-SGD) over the normalization approach is that it is invariant to
weight rebalancing (discussed in Section \ref{sec:node-rescaling}).  
This is true
regardless of the amount of data-dependence used.  That is, it
operates more directly on the model (the function defined by the
weights) rather than the parametrization (the values of the weights
themselves).  This brings us to a more general discussion of
parametrization invariance in feedforward networks (Section 
\ref{sec:rescaling}).

Our unified framework and study of in invariances also allows us to
relate the different optimization approaches to Natural Gradients
\citep{Ama98}. In particular, we show that DDP-SGD with full
data-dependence can be seen as an efficient approximation of the
natural gradient using only the diagonal of the Fisher information
matrix (Section \ref{sec:natural}).

\paragraph{Notation}
This chapter requires more involved notation that is slightly different that the notation
of the rest of the dissertation. The notation is summarized in Figure~\ref{fig:notation}.
\begin{figure}[tb]
\begin{center}
\hspace{-0.3in}
\includegraphics[width=1\textwidth]{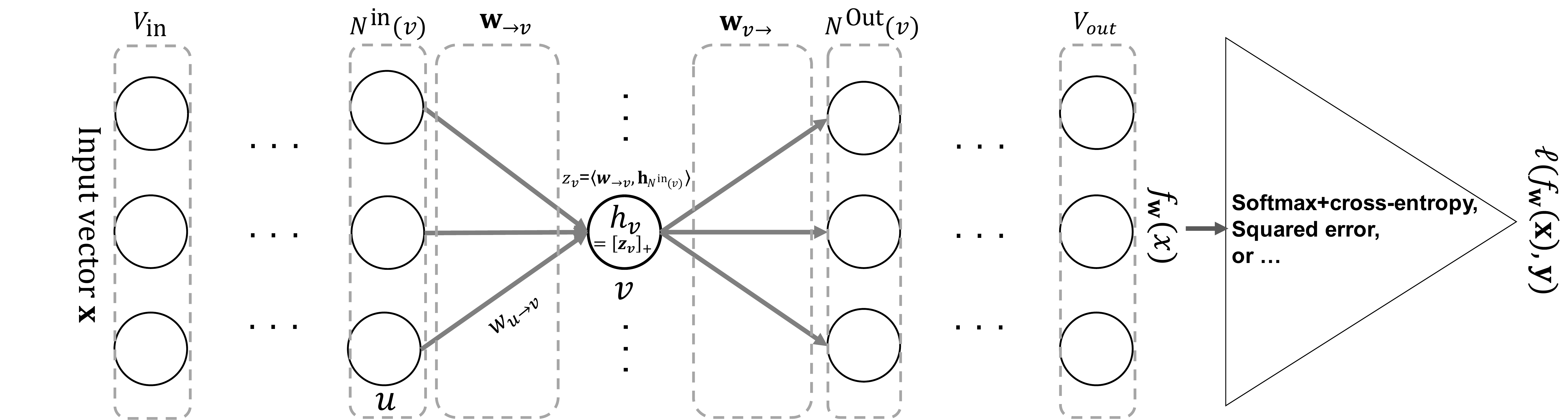}\\
\vspace{0.1in}
\small
\setlength\tabcolsep{4pt}
\begin{tabular}[tb]{|c|c|c|c|}
\hline
Symbol & Meaning &  Symbol
& Meaning\\
\hline
$\vecX$ / $\vec y$& input vector / label&  $V_\In$ / $V_\Out$& the set of input / output nodes\\
\hline
$\vecw$ & the parameter vector& $w_{u\rightarrow v}$
& the weight of the edge $(u\rightarrow v)$\\
\hline
$\vecw_{v\rightarrow}$ & the vector of incoming  weights to $v$ & $\vecw_{\rightarrow v}$
& the vector of outgoing  weights from $v$\\
\hline
$N^{\In}(v)$ & the set of nodes feeding into $v$ & $N^{\Out}(v)$
& the set of nodes that $v$ feeds into\\
\hline
$h_v$ & the output value of node $v$ & $z_v$
& the activation value of node $v$\\
\hline
\end{tabular}
  \caption{\small An example of layered feedforward network and notation used in the chapter}
  \label{fig:notation}
 \end{center}
\end{figure}

\paragraph{Related Works}
There has been an ongoing effort for better understanding of the 
optimization in 
deep networks and several heuristics have been suggested to improve 
the training
 \citep{lecun-98x, 
larochelle2009exploring,difficulty,sutskever2013importance}.
Natural gradient algorithm \citep{Ama98} is known to have a very strong
invariance property; it is not only invariant to reparametrization, but
also to the choice of network architecture. However it is known to
be computationally demanding and thus many approximations have been
proposed \citep{grosse2015scaling,martens2015optimizing,desjardins2015natural}.
However, such approximations make the algorithms less invariant than the
original natural gradient algorithm. \cite{pascanu2013revisiting} also discuss the connections between 
Natural
Gradients and some of the other proposed methods for training neural
networks, namely Hessian-Free Optimization \citep{martens2010deep}, 
Krylov
Subspace Descent \citep{vinyals2011krylov} and TONGA
\citep{roux2008topmoumoute}.

\cite{ollivier2015riemannian} also recently studied the issue of invariance and proposed computationally efficient approximations and alternatives to natural gradient. They study invariances as different mappings from parameter space to the same function space  while we look at the invariances as transformations (inside a fixed parameter space) to which the function is invariant in the model space (see Section~\ref{sec:rescaling}). Unit-wise algorithms suggested in Olivier's work are based on block-diagonal approximations of Natural Gradient in which blocks correspond to non-input units. The computational cost of the these unit-wise algorithms is quadratic in the number of incoming weights. To alleviate this cost, \cite{ollivier2015riemannian} also proposed quasi-diagonal approximations which avoid the quadratic dependence but they are only invariant to affine transformations of activation functions. The quasi-diagonal approximations are more similar to DDP-SGD in terms of computational complexity and invariances (see Section~\ref{sec:node-rescaling}). In particular, ignoring the non-diagonal terms related to the biases in quasi-diagonal natural gradient suggested in \cite{ollivier2015riemannian}, it is then equivalent to diagonal Natural Gradient which is itself equivalent to special case of DDP-SGD when $R_v$ is the second moment (see Table~\ref{tab:framework} and the discussion on relation to the Natural Gradient in Section~\ref{sec:ddp-sgd}).  

\begin{table}[tb]
\small
 \begin{center}
 \small
\setlength\tabcolsep{3pt}
\begin{tabular}[tb]{|c|c|c|c|}
\hline
$R_v$ & \pbox{20cm}{Measure} &  Normalized reparametrization
& Diagonal steepest descent\\
\hline
$D=\diag\left(\vecGamma^2_{N^{\In(v)}}\right)$ & Path-Norm & Unit-wise Path-Normalization
& {\bf Path-SGD}\\
\hline
$C=\cov\left(\vecH_{N^{\In(v)})}\right)$ & Variance
& {\bf Batch-Normalization} & \\
\hline
$M=\mathE\left[\vecH_{N^{\In(v)})} \vecH_{N^{\In(v)})}^\top\right]$ & Second Moment & 
& {\bf Diag. Natural Gradient}\\
\hline
\pbox{20cm}{$\alpha M+ (1-\alpha)D$ \\ $\alpha C+ (1-\alpha)D$}
& DDP-Norm & DDP-Normalization
& DDP-SGD\\
\hline
Node-wise Rescaling Invariant & Yes
& No
& Yes\\
\hline
\end{tabular}
 \end{center}
 \caption{\small Some of the choices for $R_v$ in the proposed unified framework.}
 \label{tab:framework}
\end{table}

\section{A Unified Framework}\label{sec:unified}
We define a complexity measure on each node as follows:
\begin{equation}\label{eq:R}
\gamma_v(\vecw) = \sqrt{\vecw_{\rightarrow v}^\top R_v\vecw_{\rightarrow v} }
\end{equation}
where $R_v$ is a positive semidefinite matrix that could depend on the
computations feeding into $v$, and captures both the complexity of the
nodes feeding into $v$ and possibly their interactions.  We consider
several possibilities for $R_v$, summarized also in Table 1.  

A first possibility is to set
$R_v=\diag\left(\vecGamma^2_{N^{\In(v)}}\right)$ to a diagonal matrix
consisting of the complexities of the incoming units.  This choice
does not depend on the source distribution (i.e.~the data), and also
ignores the effect of activations (since the activation pattern
depends on the input distribution) and of dependencies between
different paths in the network.  Intuitively, with this choice of
$R_v$, the measure $\gamma_v(\vecw)$ captures the ``potential'' (data
independent) variability or instability at the node.

Another possibility is to set $R_v$ to either the covariance
(centralized second moment) or to the (uncentralized) second moment
matrix of the outputs feeding into $v$.  In this case,
$\gamma^2_v(\vecw)$ would evaluate to the variance or (uncentralized)
second moment of $z_v$.  We could also linearly combined the data
independent measure, which measures inherent instability, with one of
these the data-dependent measure to obtain:
\begin{equation}\label{eq:gamma}
\gamma^2_v(\vecw) = \alpha S(z_v) + (1-\alpha)\sum_{u \in 
N^{\In} (v)}\gamma^2_u(\vecw) w^2_{u\rightarrow v}\quad (v\notin V_{\rm 
in}),
\end{equation}
where $S(z_v)$ is either the variance or uncentralized second moment,
and $\alpha$ is a parameter.

The complexity measure above is defined for each node of the network
separately, and propagates through the network.  To get an overall
measure of complexity we sum over the output units and define the
following complexity measure for the function $f_{\vecw}$ as
represented by the network:
\begin{equation}
\gamma^2_{\rm net}(\vecw) = \sum_{v\in V_{\rm out}}\gamma^2_v(\vecw).
\end{equation}

For $R_v=\diag\left(\vecGamma^2_{N^{\In(v)}}\right)$, this complexity
measure agrees with the $\ell_2$-Path-regularizer as introduced by
\cite{NeyTomSre15}.  This is the sum over all paths in the network of the squared
product of weights along the path.  The path-regularizer is also
equivalent to looking at the minimum over all ``node rescalings'' of
$\vecw$ (i.e.~all possibly rebalancing of weights yielding the same
function $f_\vecw$) of the $\max_v \norm{\vecw_{\rightarrow v}}$.
But, unlike this max-norm measure, the path-regularizer does {\em not} depend
on the rebalancing and is invariant to node rescalings \citep{NeyTomSre15}.

For data-dependent choices of $R_v$, we also get a similar invariance
property.  We refer to the resulting complexity measure,
$\gamma^2_{\rm net}(\vecw)$, as the Data-Dependent-Path (DDP) regularizer.

\removed{
\natinote{Do we want to explain that it is a norm when we consider it
  over functions?  If so, we should include back the definition of
  $\gamma(f)$ which I removed.   Otherwise, I don't think we need it}.
}

\removed{
Such a measure induces a complexity measure on functions $f$ that can 
be presented by network $G$.
The complexity of $f$ based on DDP-regularizer over $G$
can then be defined as the complexity of the simplest network that 
represent the function:
\begin{equation}
\gamma(f) =\inf_{f_{G,\vecw}=f} \gamma_{\rm net}(\vecw)
\end{equation}
}

After choosing $R_v$, we will think of $\gamma_v$ as specifying the
basic ``geometry'' and bias (for both optimization and learning) over
weights.  In terms of learning, we will (implicitly) prefer weights
with smaller $\gamma_v$ measure, and correspondingly in terms of
optimization we will bias toward smaller $\gamma_v$ ``balls''
(i.e.~search over the part of the space where $\gamma_v$ is smaller).
We will consider two basic ways of doing this: In Section \ref{sec:bn}
we will consider methods that explicitly try to keep $\gamma_v$ small
for all internal nodes in the network, that is explicitly search over
simpler weights.  Any scaling is pushed to the output units, and this
scaling hopefully does not grow too much due.  In Section \ref{sec:ddp-sgd}
we will consider (approximate) steepest descent methods with respect
to the overall $\gamma_{\rm net}$, i.e.~updates that aim at improving
the training objective while being small in terms of their effect on
$\gamma_{\rm net}$.

\section{DD-Path Normalization: A Batch Normalization Approach}\label{sec:bn}

In this Section, we discuss an optimization approach based on ensuring
$\gamma_v$ for all internal nodes $v$ are fixed and equal to
one---that is, the complexity of all internal nodes is ``normalized'',
and any scaling happens only at the output nodes.  We show that with a
choice of $R_v=\cov\left(\vech_{N^{\In(v)})}\right)$, this is
essentially equivalent to Batch Normalization \citep{IofSze15}.

Batch-Normalization \cite{IofSze15} was suggested as an alternate
architecture, with special ``normalization'' layers, that ensure the
variance of node outputs are normalized throughout training.
Considering a feed-forward network as a graph, for each node $v$, the
Batch-Normalization architecture has as parameters an (un-normalized)
incoming weight vector $\tilde{\vecw}$ and two additional scalars
$c_v,b_v\in\R$ specifying scaling and shift respectively.  The
function computed by the network is then given by a forward
propagation similar to standard feed-forward ReLU networks except that for each node an un-normalized
activation is first computed:
\begin{equation}
  \label{eq:ztilde}
  \tilde{z}_v = \inner{\tilde{\vecw}_{\rightarrow v}}{\vech_{N^{\In}(v)}}
\end{equation}
Then, this activation is normalized to obtain the normalized
activation, which is also scaled and shifted, and the output of the
unit is the output of the activation function for this activation value:
\begin{equation}\label{eq:bn}
  \begin{gathered}
z_v=c_v
\frac{\tilde{z}_v-\E{\tilde{z_v}}}{\sqrt{{\text{var}}(\tilde{z}_v)}}+b_v \\
    h_v=[z_v]_+
  \end{gathered}
\end{equation}
The variance and expectation are actually calculated on a
``mini-batch'' of training examples, giving the method its name.
Batch-normalization then proceeds by training the architecture
specified in \eqref{eq:ztilde} and \eqref{eq:bn} through mini-batch stochastic
gradient descent, with each gradient mini-batch also used for
estimating the variance and expectation in \eqref{eq:bn} for all
points in the mini-batch.

Instead of viewing batch-normalization as modifying the architecture,
or forward propagation, we can view it as a re-parameterization, or
change of variables, of the weights in standard feed-forward networks.  In particular, instead of specifying
the weights directly through $\vecw$, we specify them through
$\tilde{\vecw},\vecb$ and $\vecc$, with the mapping:
\begin{align}
  &\tilde{\gamma}^2_v = \tilde{\vecw}_{\rightarrow v}^\top
  R_v\tilde{\vecw}_{\rightarrow v} \quad\quad R_v=\cov(h_{N^{\In}(v)})
  \label{eq:tildegamma}\\
&w_{u \rightarrow v} = 
\begin{cases}
c \frac{\tilde{w}_{u \rightarrow v}}{\tilde{\gamma}_v} & u\neq 
v_{\text{bias}}\\
b-c \frac{\E{\inner{\tilde{\vecw}_{\rightarrow v}}{\vech_{N^{\In}(v)}}}}{\tilde{\gamma}_v}& u=v_{\text{bias}}\\
\end{cases}\label{eq:reparam}
\end{align}
The model class of functions used by Batch-Normalization is thus
exactly the same model class corresponding to standard feed-forward
network, just the parameterization is different.  However, the change
of variables from $\vecw$ to $\tilde{\vecw},\vecb,\vecc$ changes the
geometry implied by the parameter space, and consequently the
trajectory (in model space) of gradient updates---effectively
transforming the gradient direction by the Jacobian between the two
parameterizations.  Batch-Normalization can thus be viewed as an
alternate optimization on the same model class as standard
feed-forward networks, but with a different geometry.  The
reparametrization ensures that $\gamma_v(\vecw)=c_v$ for all
nodes---that is, the complexity is explicit in the parameterization
and thus gets implicitly regularized through the implicit
regularization inherent in stochastic gradient updates.

The re-parameterization \eqref{eq:reparam} is redundant and includes
more parameters than the original parameterization $\vecw$---in
addition to one parameter per edge, it includes also two additional
parameters per node, namely the shift $b_v$ and scaling $c_v$.  The
scaling parameters at internal nodes can be avoided and removed by
noting that in ReLU networks, due to the node-rescaling property, all
scaling can be done at the output nodes.  That is, fixing $c_v=1$ for
all internal $v$ does not actually change the model class (all
functions realizable by the model can be realized this way).
Similarly, we can also avoid the additional shift parameter $b_v$ and
rely only on bias units and bias weights $\tilde{w}_{v_{\rm
    bias}\rightarrow v}$ that get renormalized together with weights.
The bias term $\tilde{w}_{v_{\rm bias}\rightarrow v}$ does {\em not}
affect normalization (since it is deterministic and so has no effect
on the variance), it just gets rescaled with the other weights.

We thus propose using a simpler reparametrization (change of
variables), with the same number of parameters, using only
$\tilde{\vecw}$ and defining for each internal unit:
\begin{equation}\label{eq:ddp-reparam}
w_{u\rightarrow v}=\frac{\tilde{w}_{u\rightarrow v} }{\tilde{\gamma}_v}
\end{equation}
with $\tilde{\gamma}_v$ as in \eqref{eq:tildegamma}, and with the
output nodes un-normalized: $\vecw_{\rightarrow V_{\rm out}} =
\tilde{\vecw}_{\rightarrow V_{\rm out}}$.  This ensures that for all
internal nodes $\gamma_v(\vecw)=1$.


Going beyond Batch-Normalization, we can also use the same approach
with other choices of $R_v$, including all those in Table 1: We
work with a reparametrization $\tilde{\vecw}$, defined through
\eqref{eq:tildegamma} and \eqref{eq:ddp-reparam} but with different
choices of $R_v$, and take gradient (or stochastic gradient) steps
with respect to $\tilde{\vecw}$.  Expectations in the definition of
$R_v$ can be estimated on the stochastic gradient descent mini-batch
as in Batch-Normalization, or on independent samples of labeled or
unlabeled examples.  We refer to such methods as ``DDP-Normalized''
optimization.  Gradients in DDP-Normalization can be calculated
implemented very efficiently similar to Batch-Normalization (see
Section~\ref{sec:ddp-imp}).

When using this type of DDP-Normalization, we ensure that for any
internal node $\gamma_v(\vecw)=1$ (the value of $\tilde{\gamma}_v$ can
be very different from $1$, but what is fixed is the value of
$\gamma_v$ as defined in \eqref{eq:R} in terms of the weights $\vecw$,
which in turn can be derived from $\tilde{\vecw}$ through
\eqref{eq:reparam}), and so the overall complexity $\gamma_{\rm
  net}(\vecw)$ depends only on the scaling at the output layer.

Another interesting property of DDP-Normalization updates is that for
any internal node $v$, the updates direction of
$\tilde{w}_{\rightarrow v}$ is exactly orthogonal to the weights:
\begin{theorem}\label{thm:orthogonal}
For any weight $\tilde{\vecw}$ in DDP-Normalization and any non-input node $v\notin V_{\In}$
\begin{equation}
\inner{\tilde{\vecw}_{\rightarrow v}}{ \frac{ \partial L }{\partial \tilde{\vecw}_{\rightarrow v} } }=0
\end{equation} 
\end{theorem}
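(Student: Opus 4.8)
The plan is to deduce the identity from a scale-invariance of the loss rather than by a direct computation. Under the DDP-Normalization change of variables, for a normalized (internal) node $v$ the actual weights used by the network are $w_{u\rightarrow v}=\tilde w_{u\rightarrow v}/\tilde\gamma_v$ with $\tilde\gamma_v=\sqrt{\tilde{\vecw}_{\rightarrow v}^\top R_v\,\tilde{\vecw}_{\rightarrow v}}$ as in \eqref{eq:tildegamma}--\eqref{eq:ddp-reparam}. I will show that rescaling the incoming block $\tilde{\vecw}_{\rightarrow v}$ by any $\lambda>0$ leaves the entire vector of actual weights $\vecw$ — hence the computed function $f_{\vecw}$ and the loss $L$ — unchanged. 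Granting this, set $\phi(\lambda)\defeq L$ evaluated with $\tilde{\vecw}_{\rightarrow v}$ replaced by $\lambda\,\tilde{\vecw}_{\rightarrow v}$ and all other coordinates of $\tilde{\vecw}$ fixed; then $\phi$ is constant on $(0,\infty)$, so $\phi'(1)=0$, and by the chain rule $\phi'(1)=\inner{\tilde{\vecw}_{\rightarrow v}}{\partial L/\partial \tilde{\vecw}_{\rightarrow v}}$, which is the claim. (This is just Euler's theorem for the degree-zero homogeneity of $\tilde{\vecw}_{\rightarrow v}\mapsto L$.)

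The first step — the invariance — is where the real content lies. The key point is that $R_v$ depends only on the computations feeding into $v$, i.e.\ on the outputs or complexities of the nodes in $N^{\In}(v)$, all of which are produced by weights lying strictly below $v$ in the topological order; in particular $R_v$ does not involve $\tilde{\vecw}_{\rightarrow v}$. Since $R_v\succeq 0$, the map $\tilde{\vecw}_{\rightarrow v}\mapsto\tilde\gamma_v$ is positively homogeneous of degree one, so replacing $\tilde{\vecw}_{\rightarrow v}$ by $\lambda\,\tilde{\vecw}_{\rightarrow v}$ replaces $\tilde\gamma_v$ by $\lambda\tilde\gamma_v$ and therefore leaves each ratio $w_{u\rightarrow v}=\tilde w_{u\rightarrow v}/\tilde\gamma_v$ untouched. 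I then need to argue that nothing else in $\vecw$ moves: processing the nodes in topological order, each hidden output $h_u$ is computed from the (unchanged) actual weights below it, so every downstream matrix $R_{v'}$, every scalar $\tilde\gamma_{v'}$, and hence every actual weight $w_{\rightarrow v'}=\tilde w_{\rightarrow v'}/\tilde\gamma_{v'}$ is invariant as well. The scalar $\tilde\gamma_v$ itself does change under the rescaling, but it enters the construction of $\vecw$ only through the scale-free ratio defining $w_{\rightarrow v}$, so this is harmless; thus $\vecw$ and therefore $L$ are invariant.

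Finishing is then immediate as described: $\phi$ is differentiable at $\lambda=1$ (ignoring, as elsewhere in the paper, the measure-zero configurations where a ReLU argument vanishes exactly), it is constant, so its derivative vanishes, giving $\inner{\tilde{\vecw}_{\rightarrow v}}{\partial L/\partial\tilde{\vecw}_{\rightarrow v}}=0$. I expect the main obstacle to be the inductive, topological-order bookkeeping in the previous paragraph — namely making precise that rescaling $\tilde{\vecw}_{\rightarrow v}$ perturbs neither the outputs $h_u$ nor the data-dependent normalization matrices $R_{v'}$ of downstream nodes — since this is what makes the invariance genuinely hold, and it is not literally obvious from the formulas. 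One should also be explicit about the node type: with the ``output nodes un-normalized'' convention of Section~\ref{sec:bn} the argument (and the statement) applies to internal $v$, and it extends verbatim to the output nodes under the fully normalized reparametrization \eqref{eq:reparam}.
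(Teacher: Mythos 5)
Your argument is correct. Note first that the paper does not actually supply a proof you can be compared against: the text printed under Theorem~\ref{thm:orthogonal} is a verbatim duplicate of the proof of Theorem~\ref{thm:dof} (the path-Jacobian/span argument), evidently a copy-paste slip, and it does not establish the orthogonality claim. Your route -- observe that $R_v$ depends only on quantities computed below $v$, hence $\tilde{\gamma}_v$ is positively homogeneous of degree one in $\tilde{\vecw}_{\rightarrow v}$, hence the actual weights $w_{u\rightarrow v}=\tilde{w}_{u\rightarrow v}/\tilde{\gamma}_v$ and everything downstream are degree-zero homogeneous, and then apply Euler's identity to the constant map $\lambda\mapsto L(\lambda\tilde{\vecw}_{\rightarrow v})$ -- is sound, and your inductive bookkeeping that the downstream $h_u$ and $R_{v'}$ are untouched is exactly the point that needs to be made. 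As a sanity check, the same conclusion falls out of the explicit gradient formula \eqref{eq:dl} in Section~\ref{sec:BN-imp}: pairing it with $\tilde{\vecw}_{\rightarrow v}$ and using $\inner{\tilde{\vecw}_{\rightarrow v}}{\partial\tilde{\gamma}^2_v/\partial\tilde{\vecw}_{\rightarrow v}}=2\tilde{\gamma}^2_v$ (degree-two homogeneity) makes the bracket vanish term by term, so your conceptual argument and the computational one agree. Your closing caveat is also well taken and worth keeping: under the simplified reparametrization \eqref{eq:ddp-reparam} the output layer is left un-normalized, so the statement as written (``any $v\notin V_{\In}$'') is too broad and should be read as applying to internal nodes, extending to output nodes only under the fully normalized change of variables \eqref{eq:reparam}.
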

\begin{proof}
First we see that \eqref{eq:jacobian} is true because
 \begin{align*}
\frac{\partial f_{\vecw}(\vecx)[v]}{\partial \vecw} =\Bigl(\sum_{p\in\Pi(v)}
\frac{\partial \pi_p(\vecw)}{\partial w_e}
 \cdot g_{p}(\vecx)\cdot x[{\rm head}(p)]\Bigr)_{e\in E}
 = J_{v}(\vecw)\top\cdot \phi_{v}(\vecx).
\end{align*}
 Therefore, 
 \begin{align}
\bigcup_{\vecx\in\mathR^{|V_{\rm in}|}} {\rm Span}\left(
\frac{\partial f_{\vecw}(\vecx)[v]}{\partial \vecw}: v\in V_{\rm out}
 \right)
&= \bigcup_{\vecx\in\mathR^{|V_{\rm in}|}}{\rm Span}\left(
J_{v}(\vecw)\top\cdot \phi_{v}(\vecx): v\in V_{\rm out}
  \right)\notag\\
  \label{eq:span}
 &=J(\vecw)\top \cdot {\rm Span}\left(\phi(\vecx): \vecx\in\mathR^{|V_{\rm in}|}\right).
\end{align}
Consequently,  any vector of the form $(\frac{\partial
f_{\vecw}(\vecx)[v]}{\partial w_e})_{e\in E}$ for a fixed input $\vecx$ lies in the
 span of the row vectors of the path Jacobian $J(\vecx)$.

The second part says $d_{G}(\vecw)={\rm rank}J(\vecw)$ if ${\rm
dim}\left({\rm Span}(\phi(\vecx):\vecx\in\mathR^{|V_{\rm in}|})\right)=|\Pi|$, which is the number
of rows of $J(\vecw)$. We can see that this is true from expression \eqref{eq:span}.

\end{proof}
The fact that the gradient is orthogonal to the parameters
means weight updates in DDP-Normalization are done
in a way that it prevents the norm of weights to change considerably 
after each updates.

\section{DD-Path-SGD: A Steepest Descent Approach}\label{sec:ddp-sgd}

We now turn to a more direct approach of using our complexity measure
for optimization.  To do so, let us first recall the strong connection
between geometry, regularization and optimization through the specific
example of gradient descent.  

Gradient descent can be thought of as steepest descent with respect to
the Euclidean norm---that is, it takes a step in a direction that
maximizes improvement in the objective while also being small in terms
of the Euclidean norm of the step.  The step can also be viewed as a
regularized optimization of the linear approximation given by the
gradient, where the regularizer is squared Euclidean norm.  Gradient
Descent is then inherently linked to the Euclidean norm---runtime of
optimization is controlled by the Euclidean norm of the optimum and
stochastic gradient descent yields implicit Euclidean norm
regularization.  A change in norm or regularizer, which we think of as
a change of geometry, would then yield different optimization
procedure linked to that norm.

What we would like is to use the DDP-regularizer $\gamma_{\rm \net}(\vecw)$
to define our geometry, and for that we need a distance (or
divergence) measure corresponding to it by which we can measure the
``size'' of each step, and require steps to be small under this
measure.  We cannot quite do this, but instead we use a diagonal
quadratic approximation of $\gamma_{\rm net}(\vecw)$ about our current
iterate, and then take a steepest descent step w.r.t.~the quadratic
norm defined by this approximation.

Specifically, given a choice of $R_v$ and so complexity measure
$\gamma_{\rm net}(\vecw)$, for the current iterate $\vecw^{(t)}$ we define the
following quadratic approximation:
\begin{equation}
\hat{\gamma}^2_{\rm net}(\vecw^{(t)}+\Delta \vecw) = \gamma^2_{\rm net}(\vecw^{(t)})+
\inner{\nabla \gamma^2_{\rm net}(\vecw^{(t)})}{\Delta \vecw} +
\frac{1}{2} \Delta \vecw^\top
\diag\left(\nabla^2  \gamma^2_{\rm net}(\vecw^{(t)})\right) \Delta \vecw
\end{equation}
and the corresponding quadratic norm:
\begin{equation}
\norm{\vecw'-\vecw}^2_{\hat{\gamma}^2_{\rm net} } =
\norm{\vecw'-\vecw}^2_{\diag(\frac{1}{2}\nabla^2 \gamma^2_{\rm
    net}(\vecw^{(t)}))}=\sum_{(u\rightarrow v)\in G} 
\frac{1}{2}\frac{\partial^2 \gamma^2_{\rm net}}{\partial \vecw^2_{u\rightarrow v}} (\vecw'_{u\rightarrow v}-\vecw_{u\rightarrow v})^2.
\end{equation}
We can now define the DDP-update as:
\begin{equation}\label{eq:ddp}
\vecw^{(t+1)}=\min_{\vecw} \eta \inner{\nabla L(w)}{\vecw-\vecw^{(t)}} + 
\frac{1}{2} \norm{\vecw'-\vecw}^2_{\hat{\gamma}^2_{\rm net} }.
\end{equation}
Another way of viewing the above approximation is as taking a diagonal
quadratic approximation of the Bergman divergence of the regularizer.
Solving \eqref{eq:ddp} yields the update:
\begin{equation}
w^{(t+1)}_{u\rightarrow v} = w_{u\rightarrow v} - \frac{\eta}{\kappa_{u\rightarrow v}(\vecw)} \frac{\partial 
L}{\partial w_{u\rightarrow v}}(\vecw^{(t)}) \quad\quad\textrm{where: } \kappa_{u\rightarrow v}(\vecw)=\frac{1}{2}\frac{\partial^2 \gamma^2_{\rm 
net}}{\partial w^2_{u\rightarrow v}}.
\end{equation}
Instead of using the full gradient, we can also use a limited number
of training examples to obtain stochastic estimates of $\frac{\partial 
L}{\partial w_{u\rightarrow v}}(\vecw^{(t)})$---we refer to the
resulting updates as DDP-SGD.

For the choice $R_v=\diag(\gamma^2_{N^{\In}(v)})$, we have that
$\gamma^2_{\rm net}$ is the Path-norm and we recover Path-SGD
\cite{NeySalSre15}. As was shown there, the Path-SGD updates can be
calculated efficiently using a forward and backward propagation on the
network, similar to classical back-prop.  In Section~\ref{sec:ddp-imp} we show
how this type of computation can be done more generally also for other
choices of $R_v$ in Table 1.

\subsection*{Relation to the Natural Gradient}\label{sec:natural}

The DDP updates are similar in some ways to Natural Gradient updates,
and it is interesting to understand this connection.  Like the DDP,
the Natural Gradients direction is a steepest descent direction, but
it is based on a divergence measure calculated directly on the
function $f_{\vecw}$, and not the parameterization $\vecw$, and as
such is invariant to reparametrizations.  The natural gradient is
defined as a steepest descent direction with respect to the
KL-divergence between probability distributions, and so to refer to it
we must refer to some probabilistic model.  In our case, this will be
a conditional probability model for labels $\vecY$ conditioned on the
inputs $\vecX$, taking expectation with respect to the true marginal
data distribution over $\vecX$.

What we will show that for the choice
$R_v=\mathE[\vech_{N^{\In}(v)}\vech_{N^{\In}(v)}^\top]$, the DDP update
can also be viewed as an approximate Natural Gradient update.  More
specifically, it is a diagonal approximation of the Natural Gradient
for a conditional probability model $q(\vecY| \vecX;\vecw)$ (of the labels
$\vecY$ given an input $\vecX$) parametrized by $\vecw$ and specified
by adding spherical Gaussian noise to the outputs of the network:
$\vecY|\vecX\sim \mathcal{N}(f_\vecw(\vecX),I_{|V_{\rm out}|})$.

Given the conditional probability distribution $q(\vecY|\vecx;\vecw)$, we can
calculate the expected Fisher information matrix.  This is a matrix
indexed by parameters of the model, in our case edges $e=(u\rightarrow
v)$ on the graph
and their corresponding weights $w_e$, with entries defined as follows:
 \begin{align}
  \label{eq:fisher-information-m}
 F(\vecw)[e,e'] = \mathE_{\vecx\sim p(\vecX)}\mathE_{\vecY\sim q(\vecY|\vecx;\vecw)}\left[
\frac{\partial \log q(\vecY|\vecx;\vecw)}{\partial w_e}
\frac{\partial \log q(\vecY|\vecx;\vecw)}{\partial w_{e'}}
 \right],
 \end{align}
where $x\sim p(\vecX)$ refers to the marginal source distribution (the data
distribution).  That is, we use the true marginal distributing over
$\vecX$, and the model conditional distribution $\vecY|\vecX$, ignoring
the true labels. The Natural
Gradient updates can then be written as(see Section~\ref{sec:ddp-imp} for more 
information):
\begin{equation}\label{eq:ng-m}
  \vecw^{(t+1)} = \vecw^{(t)} - \eta F(\vecw^{(t)})^{-1} \nabla_\vecw L(\vecw^{(t)}).
\end{equation}

If we approximate the Fisher information matrix
with its diagonal elements, the update step normalizes each dimension 
of the gradient with the corresponding element on the diagonal of 
the Fisher information matrix:
\begin{equation}\label{eq:dng}
w^{(t+1)}_{e} = w^{(t)}_{e} - \frac{\eta}{F(\vecw)[e,e]} \frac{\partial L}{\partial 
w_{e}}(\vecw^{(t)}).
\end{equation}

Using diagonal approximation of Fisher information matrix to normalize the gradient values has been suggested before as a computationally tractable alternative to the full Natural Gradient \citep{lecun1998neural,schaul2013no}.  \cite{ollivier2015riemannian} also suggested a ``quasi-diagonal" approximations that includes, in addition to the diagonal, also some non-diagonal terms corresponding to the relationship between the bias term and every other incoming weight into a unit.

For our Gaussian probability model, where $\log
 q(\vecY|\vecX)=\frac{1}{2}\norm{\vecY-f_\vecw(\vecX)}^2+{\rm 
const}$, the diagonal can be calculated as:
\begin{equation}\label{eq:diag}
F(\vecw)[e,e] = \mathE_{\vecX\sim
 p(\vecX)}\left[\sum_{v'\in V_\Out} \left(\frac{\partial 
f_\vecw(\vecX)[v']}{\partial w_e}\right)^2\right],
\end{equation}
using \eqref{eq:partial-logq}. We next prove that this update is equivalent to
DDP-SGD for a specific choice of $R_v$, namely the second moment.
\begin{theorem}
The Diagonal Natural Gradient indicated in equations~\eqref{eq:dng} and ~\eqref{eq:diag} is equivalent to DDP-SGD for $R_v=\mathE\left[\vech_{N^{\In}(v)}\vech_{N^{\In}(v)}^\top\right]$.
\end{theorem}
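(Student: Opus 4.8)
The plan is to reduce the claim to a single identity between the two preconditioners. Both update rules have the form $w^{(t+1)}_e = w^{(t)}_e - \big(\eta/\rho_e(\vecw^{(t)})\big)\,\frac{\partial L}{\partial w_e}(\vecw^{(t)})$ with the same gradient $\frac{\partial L}{\partial w_e}$, so it suffices to show that the DDP-SGD scaling $\kappa_{u\rightarrow v}(\vecw) = \frac12\,\frac{\partial^2 \gamma^2_{\rm net}}{\partial w_{u\rightarrow v}^2}(\vecw)$ equals the diagonal Fisher entry $F(\vecw)[(u\rightarrow v),(u\rightarrow v)]$ given by \eqref{eq:diag}, for the choice $R_v = \mathE\left[\vech_{N^{\In}(v)}\vech_{N^{\In}(v)}^\top\right]$.

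First I would put $\gamma^2_{\rm net}$ in closed form for this $R_v$. By \eqref{eq:R}, $\gamma^2_v(\vecw) = \vecw_{\rightarrow v}^\top R_v \vecw_{\rightarrow v} = \mathE_{\vecx\sim p(\vecX)}\big[\big(\inner{\vecw_{\rightarrow v}}{\vech_{N^{\In}(v)}}\big)^2\big] = \mathE_{\vecx\sim p(\vecX)}[z_v^2]$, where $z_v$ is the pre-activation at $v$ and the expectation is over the data distribution (as estimated on a mini-batch, exactly as for Batch-Normalization). Since output nodes apply no nonlinearity, $z_v = f_\vecw(\vecx)[v]$ for $v\in V_{\rm out}$, and hence $\gamma^2_{\rm net}(\vecw) = \sum_{v\in V_{\rm out}} \gamma^2_v(\vecw) = \mathE_{\vecx\sim p(\vecX)}\big[\norm{f_\vecw(\vecx)}^2\big]$.

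Next I would differentiate this twice in a single edge weight $w_e$. The product rule gives $\frac12\,\frac{\partial^2}{\partial w_e^2}\norm{f_\vecw(\vecx)}^2 = \sum_{v'\in V_{\rm out}}\Big[\big(\tfrac{\partial f_\vecw(\vecx)[v']}{\partial w_e}\big)^2 + f_\vecw(\vecx)[v']\,\tfrac{\partial^2 f_\vecw(\vecx)[v']}{\partial w_e^2}\Big]$, so everything hinges on the Hessian-of-$f$ term vanishing. Here I would invoke the path expansion \eqref{eq:f-as-sum-over-paths}: $f_\vecw(\vecx)[v'] = \sum_{p\in\Pi(v')} g_p(\vecx)\,\pi_p(\vecw)\,x[\mathrm{head}(p)]$ with $\pi_p(\vecw) = \prod_{e'\in E(p)} w_{e'}$. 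In a standard feedforward network each edge occurs at most once on any directed path, so each $\pi_p$ is affine in $w_e$; and, exactly as in the footnote accompanying \eqref{eq:jacobian}, the activation indicators $g_p(\vecx)$ are treated as locally constant (they change only on a set of weights of Lebesgue measure zero). Hence $\frac{\partial^2 f_\vecw(\vecx)[v']}{\partial w_e^2} = 0$, and taking the expectation over $\vecx$ yields $\kappa_e(\vecw) = \mathE_{\vecx\sim p(\vecX)}\big[\sum_{v'\in V_{\rm out}}\big(\tfrac{\partial f_\vecw(\vecx)[v']}{\partial w_e}\big)^2\big]$, which is precisely $F(\vecw)[e,e]$ by \eqref{eq:diag}. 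Substituting into the DDP-SGD step reproduces \eqref{eq:dng}, and the equivalence follows.

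The only delicate point — and the step I expect to need the most care — is the implicit dependence of $R_v$ on $\vecw$ (through the activations $\vech_{N^{\In}(v)}$ and, via ReLU, through the activation pattern itself), which makes it unclear a priori how to differentiate $\gamma^2_{\rm net}$. Rewriting $\gamma^2_{\rm net}(\vecw)$ as $\mathE\norm{f_\vecw(\vecx)}^2$ exposes this dependence, and then the combination of (i) multilinearity of $f_\vecw$ in each individual edge weight and (ii) the standard ``$g_p$ is piecewise constant a.e.'' observation disposes of it cleanly. I would also flag that this argument uses the absence of weight sharing: an edge appearing twice along a path would make $\pi_p$ quadratic in the shared parameter and introduce the extra $\kappa^{(2)}$-type correction of Lemma~\ref{lem:path}, which is consistent with \eqref{eq:dng}--\eqref{eq:diag} being written in terms of individual edge weights $w_e$.
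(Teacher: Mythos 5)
Your proposal is correct and follows essentially the same route as the paper: both reduce the claim to showing $\kappa_e(\vecw)=\mathE\bigl[\sum_{v'\in V_{\rm out}}\bigl(\partial f_\vecw(\vecx)[v']/\partial w_e\bigr)^2\bigr]$ by identifying $\gamma^2_{\rm net}$ with $\sum_{v'}\mathE[z_{v'}^2]$ and then using the fact that $f_\vecw(\vecx)[v']$ is affine in each single edge weight (the paper phrases this via the factorization $\partial z_{v'}/\partial w_{u\rightarrow v}=h_u\,\partial z_{v'}/\partial z_v$ with both factors independent of $w_{u\rightarrow v}$, while you phrase it via the path expansion — the same underlying observation). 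Your added remarks on treating the activation pattern as locally constant and on the role of no weight sharing are consistent with what the paper does implicitly.
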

\begin{proof}
We calculate the scaling factor $\kappa_{u\rightarrow v}(\vecw)$ for 
DDP-SGD as follows:
\begin{align*}
\kappa_{u\rightarrow v}(\vecw)&=\frac{1}{2}\frac{\partial^2 \gamma^2_{\rm net}}{\partial w_{u\rightarrow v}^2}=\frac{1}{2}\sum_{v'\in V_{\Out}}\frac{\partial^2 \mathE[z^2_{v'}]}{\partial w_{u\rightarrow v}^2} = \sum_{v'\in V_{\Out}} \frac{\partial}{\partial w_{u\rightarrow v}}\left(\frac{1}{2}\frac{\partial \mathE[z^2_{v'}]}{\partial w_{u\rightarrow v}}\right)\\
&= \sum_{v'\in V_{\Out}}\frac{\partial}{\partial w_{u\rightarrow v}}\left(\mathE\left[ z_{v'} \frac{\partial z_{v'}}{\partial w_{u\rightarrow v}}\right]\right)
= \sum_{v'\in V_{\Out}}\frac{\partial}{\partial w_{u\rightarrow v}}\left(\mathE\left[ z_{v'} h_u\frac{\partial z_{v'}}{\partial z_{v}}\right]\right)\\
&= \sum_{v'\in V_{\Out}}\mathE\left[ h^2_u\left(\frac{\partial z_{v'}}{\partial z_{v}}\right)^2\right] = \mathE\left[h^2_u \sum_{v'\in V_{\Out}} \left(\frac{\partial z_{v'}}{\partial z_{v}}\right)^2\right]\\
&=\mathE\left[\sum_{v'\in V_\Out} \left(\frac{\partial 
f_\vecw(\vecX)[v']}{\partial w_e}\right)^2\right] = F(\vecw)[u\rightarrow v,u\rightarrow v]\\
\end{align*}
Therefore, the scaling factors in DDP-SGD  with $R_v=\mathE\left[\vech_{N^{\In}(v)}\vech_{N^{\In}(v)}^\top\right]$
are exactly the diagonal elements of the Fisher Information matrix used 
in the Natural Gradient updates.
\end{proof}

\removed{
In classification tasks, we usually use softmax activations and it that case, the 
diagonal of the Fisher Information can be calculated as:
\begin{equation}
 F(\vecw)[u\rightarrow v,u\rightarrow v] = \mathE\left[h^2_{u}
  \sum_{v'\in V_{\rm out}} \sum_{v''\in V_{\rm out}}
  \left(1_{v'=v''}\cdot\sigma_{\text{soft}}(z_{v'})  
-\sigma_{\text{soft}}(z_{v'})\sigma_{\text{soft}}(z_{v''})\right)
 \frac{\partial  z_{v'}}{\partial z_v}\cdot
 \frac{\partial z_{v''}}{\partial z_v}
  \right]
\end{equation}
where $\sigma_{\text{soft}}$ is the softmax function and we have that 
$\sigma_{\text{soft}}(z_v) = \frac{e^{z_v}}{\sum_{v'\in V_\Out} 
e^{z_{v'}}}$. The above values can be calculated as fast as 
$\abs{V_\Out}$ backpropagation on the mini-batch. Moreover, 
considering the connection between DDP-SGD and the diagonal of outer 
product of Jacobians, we can consider adding the path-regularizer to 
the diagonal of the Fisher information.
}
\section{Node-wise invariance} \label{sec:node-rescaling} In this
section, we show that DDP-SGD is invariant to node-wise rescalings,
 while DDP-Normalization does not have favorable
invariance properties.

\subsection{DDP-SGD on feedforward networks}
In Chaper~\ref{chap:invariances}, we observed that feedforward ReLU networks are
invariant to node-wise rescaling.  To see if DDP-SGD is also invariant
to such rescaling, consider a rescaled $\vecw'=T(\vecw)$, where $T$ is
a rescaling by $\rho$ at node $v$.  Let
$\vecw^+$ denote the weights after a step of DDP-SGD.  To establish
invariance to node-rescaling we need to show that
$\vecw'^+=T(\vecw^+)$.  For the outgoing weights from $v$  we have:
\begin{align*}
w'^{+}_{v\rightarrow j} &= \rho w_{v\rightarrow j} - 
\frac{\rho^2 \eta}{\kappa_{v\rightarrow j}(\vecw)}\frac{\partial 
L}{\rho \partial w_{v\rightarrow j}}(\vecw)\\
&=\rho\left(w_{v\rightarrow j}- \frac{\eta}{\kappa_{v\rightarrow 
j}(\vecw)}\frac{\partial L}{\partial w_{v\rightarrow j}}(\vecw)\right)
 = \rho w^{+}_{v\rightarrow j}\\
\end{align*}
Similar calculations can be done for incoming weights to the node $v$.
The only difference is that $\rho$ will be substituted by $1/\rho$. Moreover,
note that due to non-negative homogeneity of ReLU activation function,
the updates for the rest of the weights remain exactly the same.
Therefore, DDP-SGD is node-wise rescaling invariant.

\subsection{SGD on DDP-Normalized networks}
Since DDP-Normalized networks are reparametrization of feedforward
networks, their invariances are different. Since the operations in DDP-Normalized
networks are based on $\tilde{w}$, we should study the invariances for
$\tilde{w}$.  The invariances in this case are given by rescaling of
incoming weights into a node, i.e. for an internal node $v$ and
scaling $\rho>0$:
\begin{equation}
T(\tilde{w})_{k\rightarrow v} =\rho \tilde{w}_{k\rightarrow v}\quad
 (\forall k\in N^\In(v))\notag
\end{equation}
while all other weights are unchanged.  The DDP-Normalized networks
are invariant to the above transformation because the output of each
node is normalized.  The SGD update rule is however not invariant to
this transformation:
\begin{align*}
T(\tilde{w})^{+}_{k\rightarrow v} &= \rho \tilde{w}_{k\rightarrow 
v}-\eta\frac{\partial L}{\rho \partial \tilde{w}_{k\rightarrow 
v}}(\tilde{\vecw})
\neq \rho\left(\tilde{w}_{k\rightarrow v}-\eta \frac{\partial L}{\partial 
\tilde{w}_{k\rightarrow v}}(\tilde{\vecw})\right) = \rho 
\tilde{w}^{+}_{k\rightarrow v}\\
\end{align*}

\section{Supporting Results}

\subsection{Implementation of DDP-Normalization}\label{sec:BN-imp}
Given any batch of $n$ data points to estimate mean, variance and the 
gradient, the stochastic gradients for the weight $\tilde{\vecw}$ (weights 
in the DDP-Normalized network) can then be calculated through the 
chain rule:
\begin{align}\label{eq:dl}
\frac{\partial L}{\partial \tilde{\vecw}_{\rightarrow v}} 
&=\frac{1}{n\tilde{\gamma}_v}\sum_{i=1}^{n} \frac{\partial L}{\partial 
z_{v}^{(i)}}\left[ {\vech}_{N^{\In}(v)}^{(i)} - \frac{1}{n} 
\sum_{j=1}^n {\vech}_{N^{\In}(v)}^{(j)} -\frac{
\hat{z}^{(i)}_v }{ 
2\tilde{\gamma}_v^2}\frac{\partial \tilde{\gamma}^2_v}{\partial \tilde{\vecw}_{\rightarrow 
v}} \right]\\
\frac{\partial L}{\partial z^{(i)}_{u}} &= \frac{1}{\tilde{\gamma}_v} 
\left[\sum_{v \in N^{\Out}(u)} \tilde{w}_{u\rightarrow 
v}\left(\frac{\partial L}{\partial z_{v}^{(i)}} - 
\frac{1}{n}\sum_{j=1}^n\frac{\partial L}{\partial 
z_{v}^{(j)}}\left(1-\alpha\frac{\hat{z}^{(i)}_v\hat{z}^{(j)}_v}{\tilde{\gamma}^2_v}\right)\right)\right]_{z^{(i)}_{u}\geq 0}\\ \notag
\end{align}
where $\hat{z}^{(i)}_{v} = \tilde{z}^{(i)}_{v} - \frac{1}{n} 
\sum_{j=1}^n \tilde{z}^{(j)}_{v}$ and we have:
\begin{equation}\label{eq:dgam}
\frac{\partial \tilde{\gamma}^2_v}{\partial \tilde{\vecw}_{\rightarrow 
v}}=2(1-\alpha)\tilde{\vecw}_{\rightarrow v} + 
\frac{2\alpha}{n}\sum_{i=1}^n \hat{z}^{(i)}_v \left({\vech}_{N^{\In}(v)}^{(i)} - \frac{1}{n} 
\sum_{j=1}^n {\vech}_{N^{\In}(v)}^{(j)}\right)
\end{equation}
Similar to Batch-Normalization, all the above calculations can be efficiently 
carried out as vector operations with negligible extra memory and computations.
\subsection{Implementation of DDP-SGD}\label{sec:ddp-imp}
In order to compute the second derivatives 
$\kappa_e(\vecw)=\frac{\partial^2 \gamma^2_{\rm net}}{\partial w^2_e}$, 
we first calculate the first derivative. The backpropagation can be 
done through $\gamma^2_u$ and  $z^{(i)}_{u}$ but this makes it 
difficult to find the second derivatives. Instead we propagate the 
loss through $\gamma^2_u$ and the second order terms of the form 
$z^{(i)}_{u_1}z^{(i)}_{u_2}$:
\begin{align}
\frac{\partial \gamma^2_{{\rm net}}}{\partial \gamma^2_u} &= 
(1-\alpha)\sum_{v\in N^\Out(u)} \frac{\partial \gamma^2_{{\rm 
net}}}{\partial \gamma^2_v} w^2_{u\rightarrow v}
\end{align}
\begin{equation}
\frac{\partial \gamma^2_{{\rm net}}}{\partial 
(z_{u_1}^{(i)}z_{u_2}^{(i)})} = \alpha \left[\frac{ \partial 
\gamma^2_{\rm net}}{\partial \gamma^2_{u_1}}\right]_{u_1=u_2} + 
\left[\sum_{(v_1,v_2)\in \left(N^\Out(u_1)\right)^2} \frac{\partial 
\gamma^2_{\rm net}}{\partial 
(z_{v_1}^{(i)}z_{v_2}^{(i)})}w_{u_1\rightarrow v_1} w_{u_2 
\rightarrow v_2} \right]_{z^{(i)}_{u_1}>0, z^{(i)}_{u_2}>0}
\end{equation}
Now we can calculate the partials for $w_{u\rightarrow v}$ as follows:
\begin{equation}
\frac{\partial \gamma^2_{\rm net}}{\partial w_{u\rightarrow v}} = 
2(1-\alpha)\frac{\partial \gamma^2_{\rm net}}{\partial \gamma^2_v} 
\gamma^2_u w_{u\rightarrow v} + 2\sum_{i=1}^n\sum_{v'\in N^\Out(u)} 
\frac{\partial \gamma^2_{\rm net}}{\partial (z^{(i)}_v z^{(i)}_{v'})} 
h_{u}^{(i)}z_{v'}^{(i)}
\end{equation}
Since the partials $\frac{\partial \gamma^2_{{\rm net}}}{\partial 
\gamma^2_u}$ and $\frac{\partial \gamma^2_{{\rm net}}}{\partial 
(z_{u_1}^{(i)}z_{u_2}^{(i)})}$ do not depend on $w_{u\rightarrow v}$, 
the second order derivative can be calculated directly:
\begin{equation}
\kappa_{u\rightarrow v}(\vecw)=\frac{1}{2}\frac{\partial^2 \gamma^2_{\rm 
net}}{\partial w_{u\rightarrow v}^2} = (1-\alpha)\frac{\partial 
\gamma^2_{\rm net}}{\partial \gamma^2_v} \gamma^2_u + 
\sum_{i=1}^n\frac{\partial \gamma^2_{\rm net}}{\partial 
\left({z^{(i)}_v}^2\right)}\left(h^{(i)}_u\right)^2
\end{equation}

\subsection{Natural Gradient}\label{sec:ng}
The natural gradient algorithm \citep{Ama98} achieves invariance 
by applying the inverse of the Fisher information
matrix $F(\vecw^{(t)})$ at the current parameter $\vecw^{(t)}$ to the 
negative gradient direction
as follows:
 \begin{align}
 \vecw^{(t+1)} &=  \vecw^{(t)} + \eta\vecDelta^{(\rm natural)},\notag
 \intertext{where}
 \label{eq:ng-argmin}
 \Delta^{(\rm natural)} &=
   \argmin{\Delta\in\mathR^{|E|}}
  \inner{-\frac{\partial L}{\partial w}(\vecw^{(t)})}{\vecDelta}
  ,\quad {\rm s.t.}\quad \vecDelta\top F(\vecw^{(t)})\vecDelta\leq \delta^2\\
 \label{eq:ng}
&=-F^{-1}(\vecw^{(t)})\frac{\partial L}{\partial w}(\vecw^{(t)}).
 \end{align}
Here $F(\vecw)$ is the Fisher information matrix at point $\vecw$ and is
defined with respect to the probabilistic view of the feedforward neural network
model, which we describe in more detail below.

Suppose that we are solving a classification problem and the final 
layer
of the network is fed into a softmax layer that determines the
probability of candidate classes given the input $x$. Then the neural
network with the softmax layer can be viewed as a conditional
probability distribution
 \begin{align}
  \label{eq:cond-prob}
 q(y|\vecx)= \frac{\exp(f_{\vecw}(\vecx)[v_y])}{\sum_{v\in  V_{\rm 
out}}\exp(f_{\vecw}(\vecx)[v])},
 \end{align}
where $v_y$ is the output node corresponding to class $y$.
If we are solving a regression problem a Gaussian distribution is
probably more appropriate for $q(y|\vecx)$.

Given the conditional probability distribution $q(y|\vecx)$, the Fisher
information matrix can be defined as follows:
 \begin{align}
  \label{eq:fisher-information}
 F(\vecw)[e,e'] = \mathE_{\vecx\sim p(\vecX)}\mathE_{y\sim q(y|\vecx)}\left[
\frac{\partial \log q(y|\vecx)}{\partial w_e}
\frac{\partial \log q(y|\vecx)}{\partial w_{e'}}
 \right],
 \end{align}
where $p(x)$ is the marginal distribution of the data.

Since we have
 \begin{align}
  \label{eq:partial-logq}
 \frac{\partial \log q(y|\vecx)}{\partial w_{u\rightarrow v}}
 =\frac{\partial \log q(y|\vecx)}{\partial z_v}\cdot h_u
 =\sum_{v'\in V_{\rm out}}
 \frac{\partial \log q(y|\vecx)}{\partial z_{v'}}\cdot
 \frac{\partial z_{v'}}{\partial z_{v}}\cdot h_u
 \end{align}
using the chain rule, each entry of the Fisher information matrix can 
be
computed efficiently by forward and backward propagations on a 
minibatch.

\phantomsection
\addcontentsline{toc}{chapter}{\Large  Conclusion}
\chapter*{Conclusion} \label{chap:conclusion}
In this dissertation, we tried to explain generalization in deep learning with a view that is central around implicit regularization by the optimization algorithm, showing that the implicit regularization is the main component that should be taken into account. We proved several generalization guarantees based on different complexity measures for neural networks and investigated whether implicit regularization is indeed penalizing the complexity of the model based on any of those measures. Finally, we designed optimization algorithms to implicitly regularize complexity measures that are more suitable for neural networks and provided empirical evidence indicating that these algorithms lead to better generalization than SGD for feedforward and recurrent networks.
\phantomsection
\addcontentsline{toc}{chapter}{\Large Bibliography}
\bibliography{ref}

\begin{thebibliography}{100}
\providecommand{\natexlab}[1]{#1}
\providecommand{\url}[1]{\texttt{#1}}
\expandafter\ifx\csname urlstyle\endcsname\relax
  \providecommand{\doi}[1]{doi: #1}\else
  \providecommand{\doi}{doi: \begingroup \urlstyle{rm}\Url}\fi

\bibitem[Schmidhuber(2015)]{schimidhuber15}
J.~Schmidhuber.
\newblock Deep learning in neural networks: An overview.
\newblock \emph{Neural Networks}, 61:\penalty0 85--117, 2015.
\newblock \doi{10.1016/j.neunet.2014.09.003}.

\bibitem[Hinton et~al.(2006)Hinton, Osindero, and Teh]{hinton06}
Geoffrey~E Hinton, Simon Osindero, and Yee-Whye Teh.
\newblock A fast learning algorithm for deep belief nets.
\newblock \emph{Neural computation}, 18\penalty0 (7):\penalty0 1527--1554,
  2006.

\bibitem[Bengio et~al.(2007)Bengio, Lamblin, Popovici, Larochelle,
  et~al.]{bengio07}
Yoshua Bengio, Pascal Lamblin, Dan Popovici, Hugo Larochelle, et~al.
\newblock Greedy layer-wise training of deep networks.
\newblock \emph{Advances in neural information processing systems},
  19:\penalty0 153, 2007.

\bibitem[LeCun et~al.(2015)LeCun, Bengio, and Hinton]{lecun15}
Yann LeCun, Yoshua Bengio, and Geoffrey Hinton.
\newblock Deep learning.
\newblock \emph{Nature}, 521\penalty0 (7553):\penalty0 436--444, 2015.

\bibitem[Hornik et~al.(1989)Hornik, Stinchcombe, and
  White]{hornik1989multilayer}
Kurt Hornik, Maxwell Stinchcombe, and Halbert White.
\newblock Multilayer feedforward networks are universal approximators.
\newblock \emph{Neural networks}, 2\penalty0 (5):\penalty0 359--366, 1989.

\bibitem[Sipser(2006)]{sipser2012}
Michael Sipser.
\newblock \emph{Introduction to the Theory of Computation}.
\newblock Thomson Course Technology, 2006.

\bibitem[Anthony and Bartlett(2009{\natexlab{a}})]{anthony09}
Martin Anthony and Peter~L. Bartlett.
\newblock \emph{Neural network learning: Theoretical foundations}.
\newblock Cambridge University Press, 2009{\natexlab{a}}.

\bibitem[Shalev-Shwartz and Ben-David(2014{\natexlab{a}})]{shalev14}
Shai Shalev-Shwartz and Shai Ben-David.
\newblock \emph{Understanding Machine Learning: From Theory to Algorithms}.
\newblock Cambridge University Press, 2014{\natexlab{a}}.

\bibitem[Bartlett(1998{\natexlab{a}})]{bartlett98}
Peter~L. Bartlett.
\newblock The sample complexity of pattern classification with neural networks:
  the size of the weights is more important than the size of the network.
\newblock \emph{IEEE transactions on information theory}, 44\penalty0
  (2):\penalty0 525--536, 1998{\natexlab{a}}.

\bibitem[Sherstov(2006)]{klivans2006cryptographic}
Adam R Klivansand Alexander~A Sherstov.
\newblock Cryptographic hardness for learning intersections of halfspaces.
\newblock In \emph{Foundations of Computer Science, 2006. FOCS'06. 47th Annual
  IEEE Symposium on}, pages 553--562. IEEE, 2006.

\bibitem[Daniely et~al.(2014)Daniely, Linial, and Shalev-Shwartz]{Daniely14}
Amit Daniely, Nati Linial, and Shai Shalev-Shwartz.
\newblock From average case complexity to improper learning complexity.
\newblock \emph{STOC}, 2014.

\bibitem[Chaudhari et~al.(2016)Chaudhari, Choromanska, Soatto, and
  LeCun]{chaudhari2016entropy}
Pratik Chaudhari, Anna Choromanska, Stefano Soatto, and Yann LeCun.
\newblock Entropy-sgd: Biasing gradient descent into wide valleys.
\newblock \emph{arXiv preprint arXiv:1611.01838}, 2016.

\bibitem[Keskar et~al.(2016)Keskar, Mudigere, Nocedal, Smelyanskiy, and
  Tang]{keskar2016large}
Nitish~Shirish Keskar, Dheevatsa Mudigere, Jorge Nocedal, Mikhail Smelyanskiy,
  and Ping Tak~Peter Tang.
\newblock On large-batch training for deep learning: Generalization gap and
  sharp minima.
\newblock \emph{arXiv preprint arXiv:1609.04836}, 2016.

\bibitem[Neyshabur et~al.(2015{\natexlab{a}})Neyshabur, Salakhutdinov, and
  Srebro]{NeySalSre15}
Behnam Neyshabur, Ruslan Salakhutdinov, and Nathan Srebro.
\newblock {Path-SGD}: Path-normalized optimization in deep neural networks.
\newblock In \emph{Advanced in Neural Information Processsing Systems (NIPS)},
  2015{\natexlab{a}}.

\bibitem[Nair and Hinton(2010)]{nair10}
Vinod Nair and Geoffrey~E Hinton.
\newblock Rectified linear units improve restricted boltzmann machines.
\newblock In \emph{Proceedings of the International Conference on Machine
  Learning (ICML)}, pages 807--814, 2010.

\bibitem[Bordes and Bengio(2011)]{glorot11}
Xavier Glorot~Antoine Bordes and Yoshua Bengio.
\newblock Deep sparse rectifier networks.
\newblock \emph{AISTATS}, 2011.

\bibitem[Zeiler et~al.(2013)Zeiler, Ranzato, Monga, Mao, Yang, Le, Nguyen,
  Senior, Vanhoucke, Dean, and Hinton]{zeiler13}
M.D. Zeiler, M.~Ranzato, R.~Monga, M.~Mao, K.~Yang, Q.V. Le, P.~Nguyen,
  A.~Senior, V.~Vanhoucke, J.~Dean, and G.E. Hinton.
\newblock On rectified linear units for speech processing.
\newblock \emph{ICASSP}, 2013.

\bibitem[Vapnik and Chervonenkis(1971)]{vapnik1971uniform}
Vladimir~N Vapnik and A~Ya Chervonenkis.
\newblock On the uniform convergence of relative frequencies of events to their
  probabilities.
\newblock \emph{Theory of Probability \& Its Applications}, 16\penalty0
  (2):\penalty0 264--280, 1971.

\bibitem[Blumer et~al.(1987)Blumer, Ehrenfeucht, Haussler, and
  Warmuth]{blumer1987occam}
Anselm Blumer, Andrzej Ehrenfeucht, David Haussler, and Manfred~K Warmuth.
\newblock Occam's razor.
\newblock \emph{Information processing letters}, 24\penalty0 (6):\penalty0
  377--380, 1987.

\bibitem[Anthony and Bartlett(2009{\natexlab{b}})]{anthony2009neural}
Martin Anthony and Peter~L Bartlett.
\newblock \emph{Neural network learning: Theoretical foundations}.
\newblock cambridge university press, 2009{\natexlab{b}}.

\bibitem[Bartlett(1998{\natexlab{b}})]{bartlett1998sample}
Peter~L Bartlett.
\newblock The sample complexity of pattern classification with neural networks:
  the size of the weights is more important than the size of the network.
\newblock \emph{IEEE transactions on Information Theory}, 44\penalty0
  (2):\penalty0 525--536, 1998{\natexlab{b}}.

\bibitem[Bartlett et~al.(1998)Bartlett, Maiorov, and Meir]{bartlett1998almost}
Peter~L Bartlett, Vitaly Maiorov, and Ron Meir.
\newblock Almost linear vc dimension bounds for piecewise polynomial networks.
\newblock \emph{Neural computation}, 10\penalty0 (8):\penalty0 2159--2173,
  1998.

\bibitem[Shalev-Shwartz and
  Ben-David(2014{\natexlab{b}})]{shalev2014understanding}
Shai Shalev-Shwartz and Shai Ben-David.
\newblock \emph{Understanding machine learning: From theory to algorithms}.
\newblock Cambridge university press, 2014{\natexlab{b}}.

\bibitem[Bartlett(2017)]{bartlet2017}
P.~L. Bartlett.
\newblock The impact of the nonlinearity on the {VC}-dimension of a deep
  network.
\newblock \emph{Preprint}, 2017.

\bibitem[Harvey et~al.(2017)Harvey, Liaw, and Mehrabian]{harvey2017nearly}
Nick Harvey, Chris Liaw, and Abbas Mehrabian.
\newblock Nearly-tight vc-dimension bounds for piecewise linear neural
  networks.
\newblock \emph{arXiv preprint arXiv:1703.02930}, 2017.

\bibitem[Zhang et~al.(2017)Zhang, Bengio, Hardt, Recht, and
  Vinyals]{zhang2017understanding}
Chiyuan Zhang, Samy Bengio, Moritz Hardt, Benjamin Recht, and Oriol Vinyals.
\newblock Understanding deep learning requires rethinking generalization.
\newblock In \emph{International Conference on Learning Representations}, 2017.

\bibitem[Neyshabur et~al.(2015{\natexlab{b}})Neyshabur, Tomioka, and
  Srebro]{neyshabur15b}
Behnam Neyshabur, Ryota Tomioka, and Nathan Srebro.
\newblock In search of the real inductive bias: On the role of implicit
  regularization in deep learning.
\newblock \emph{Proceeding of the International Conference on Learning
  Representations workshop track}, 2015{\natexlab{b}}.

\bibitem[Bartlett and Mendelson(2002)]{bartlett2002rademacher}
Peter~L Bartlett and Shahar Mendelson.
\newblock Rademacher and gaussian complexities: Risk bounds and structural
  results.
\newblock \emph{Journal of Machine Learning Research}, 3\penalty0
  (Nov):\penalty0 463--482, 2002.

\bibitem[Luxburg and Bousquet(2004)]{luxburg2004distance}
Ulrike~von Luxburg and Olivier Bousquet.
\newblock Distance-based classification with lipschitz functions.
\newblock \emph{Journal of Machine Learning Research}, 5\penalty0
  (Jun):\penalty0 669--695, 2004.

\bibitem[Xu and Mannor(2012)]{xu2012robustness}
Huan Xu and Shie Mannor.
\newblock Robustness and generalization.
\newblock \emph{Machine learning}, 86\penalty0 (3):\penalty0 391--423, 2012.

\bibitem[Sokolic et~al.(2016)Sokolic, Giryes, Sapiro, and
  Rodrigues]{sokolic2016generalization}
Jure Sokolic, Raja Giryes, Guillermo Sapiro, and Miguel~RD Rodrigues.
\newblock Generalization error of invariant classifiers.
\newblock \emph{arXiv preprint arXiv:1610.04574}, 2016.

\bibitem[Dziugaite and Roy(2017)]{dziugaite2017computing}
Gintare~Karolina Dziugaite and Daniel~M Roy.
\newblock Computing nonvacuous generalization bounds for deep (stochastic)
  neural networks with many more parameters than training data.
\newblock \emph{arXiv preprint arXiv:1703.11008}, 2017.

\bibitem[McAllester(1998)]{mcallester1998some}
David~A McAllester.
\newblock Some {PAC-Bayesian} theorems.
\newblock In \emph{Proceedings of the eleventh annual conference on
  Computational learning theory}, pages 230--234. ACM, 1998.

\bibitem[McAllester(1999)]{mcallester1999pac}
David~A McAllester.
\newblock {PAC-Bayesian} model averaging.
\newblock In \emph{Proceedings of the twelfth annual conference on
  Computational learning theory}, pages 164--170. ACM, 1999.

\bibitem[McAllester(2003)]{mcallester2003simplified}
David McAllester.
\newblock Simplified pac-bayesian margin bounds.
\newblock \emph{Lecture notes in computer science}, pages 203--215, 2003.

\bibitem[Langford and Shawe-Taylor(2003)]{langford2003pac}
John Langford and John Shawe-Taylor.
\newblock Pac-bayes \& margins.
\newblock In \emph{Advances in neural information processing systems}, pages
  439--446, 2003.

\bibitem[Langford and Caruana(2001)]{langford2001not}
John Langford and Rich Caruana.
\newblock (not) bounding the true error.
\newblock In \emph{Proceedings of the 14th International Conference on Neural
  Information Processing Systems: Natural and Synthetic}, pages 809--816. MIT
  Press, 2001.

\bibitem[Srebro et~al.(2004)Srebro, Rennie, and Jaakkola]{Srebro04}
Nathan Srebro, Jason Rennie, and Tommi~S. Jaakkola.
\newblock Maximum-margin matrix factorization.
\newblock \emph{Advances in neural information processing systems}, pages
  1329--1336, 2004.

\bibitem[Fazel et~al.(2001)Fazel, Hindi, and Boyd]{Fazel01}
Maryam Fazel, Haitham Hindi, and Stephen~P. Boyd.
\newblock A rank minimization heuristic with application to minimum order
  system approximation.
\newblock \emph{Proceedings of American Control Conference}, pages 4734--4739,
  2001.

\bibitem[Burer and Choi(2006)]{Burer06}
Samuel Burer and Changhui Choi.
\newblock Computational enhancements in low-rank semidefinite programming.
\newblock \emph{Optimization Methods and Software}, 21\penalty0 (3):\penalty0
  493--512, 2006.

\bibitem[Srebro and Jaakkola(2003)]{Srebro03}
Nathan Srebro and Tommi~S. Jaakkola.
\newblock Weighted low-rank approximations.
\newblock \emph{ICML}, pages 720--727, 2003.

\bibitem[Gunasekar et~al.(2017)Gunasekar, Woodworth, Bhojanapalli, Neyshabur,
  and Srebro]{gunasekar2017implicit}
Suriya Gunasekar, Blake Woodworth, Srinadh Bhojanapalli, Behnam Neyshabur, and
  Nathan Srebro.
\newblock Implicit regularization in matrix factorization.
\newblock \emph{arXiv preprint arXiv:1705.09280}, 2017.

\bibitem[Rennie and Srebro(2005)]{rennie2005fast}
Jasson~DM Rennie and Nathan Srebro.
\newblock Fast maximum margin matrix factorization for collaborative
  prediction.
\newblock In \emph{Proceedings of the 22nd international conference on Machine
  learning}, pages 713--719. ACM, 2005.

\bibitem[Srebro and Salakhutdinov(2010)]{srebro2010collaborative}
Nathan Srebro and Ruslan Salakhutdinov.
\newblock Collaborative filtering in a non-uniform world: Learning with the
  weighted trace norm.
\newblock In \emph{Advances in Neural Information Processing Systems}, pages
  2056--2064, 2010.

\bibitem[Bengio et~al.(2005)Bengio, Roux, Vincent, Delalleau, and
  Marcotte]{Bengio05}
Yoshua Bengio, Nicolas~L. Roux, Pascal Vincent, Olivier Delalleau, and Patrice
  Marcotte.
\newblock Convex neural networks.
\newblock \emph{Advances in neural information processing systems}, pages
  123--130, 2005.

\bibitem[Bartlett et~al.(2017)Bartlett, Foster, and
  Telgarsky]{bartlett2017spectrally}
Peter Bartlett, Dylan~J Foster, and Matus Telgarsky.
\newblock Spectrally-normalized margin bounds for neural networks.
\newblock \emph{arXiv preprint arXiv:1706.08498}, 2017.

\bibitem[Bartlett and Mendelson(2003)]{bartlett03}
Peter~L. Bartlett and Shahar Mendelson.
\newblock Rademacher and gaussian complexities: Risk bounds and structural
  results.
\newblock \emph{The Journal of Machine Learning Research}, pages 463--482,
  2003.

\bibitem[Koltchinskii and Panchenko(2002)]{koltchinskii02}
Vladimir Koltchinskii and Dmitry Panchenko.
\newblock Empirical margin distributions and bounding the generalization error
  of combined classifiers.
\newblock \emph{Annals of Statistics}, pages 1--50, 2002.

\bibitem[Bach(2014)]{Bach14}
Francis Bach.
\newblock Breaking the curse of dimensionality with convex neural networks.
\newblock \emph{Technical report, HAL-01098505}, 2014.

\bibitem[Cho and Saul(2009)]{Cho09}
Youngmin Cho and Lawrence~K. Saul.
\newblock Kernel methods for deep learning.
\newblock \emph{Advances in neural information processing systems}, pages
  342--350, 2009.

\bibitem[Lee et~al.(1996)Lee, Bartlett, and Williamson]{lee96}
Wee~Sun Lee, Peter~L Bartlett, and Robert~C Williamson.
\newblock Efficient agnostic learning of neural networks with bounded fan-in.
\newblock \emph{Information Theory, IEEE Transactions on}, 42\penalty0
  (6):\penalty0 2118--2132, 1996.

\bibitem[Kakade et~al.(2009)Kakade, Sridharan, and AmbujTewari]{kakade09}
Sham~M Kakade, Karthik Sridharan, and AmbujTewari.
\newblock On the complexity of linear prediction: Risk bounds, margin bounds,
  and regularization.
\newblock \emph{Advances in neural information processing systems}, pages
  793--800, 2009.

\bibitem[Balcan and Berlind(2014)]{balcan14}
Maria-Florina Balcan and Christopher Berlind.
\newblock A new perspective on learning linear separators with large lqlp
  margins.
\newblock \emph{Proceedings of the Seventeenth International Conference on
  Artificial Intelligence and Statistics}, pages 68--76, 2014.

\bibitem[Haagerup(1981)]{haagerup81}
Uffe Haagerup.
\newblock The best constants in the khintchine inequality.
\newblock \emph{Studia Mathematica}, 70\penalty0 (3):\penalty0 231--283, 1981.

\bibitem[Klivans and Sherstov(2006)]{klivans2006}
Adam~R Klivans and Alexander~A Sherstov.
\newblock Cryptographic hardness for learning intersections of halfspaces.
\newblock \emph{FOCS}, pages 553--562, 2006.

\bibitem[Livni et~al.(2014)Livni, Shalev-Shwartz, and Shamir]{livni14}
Roi Livni, Shai Shalev-Shwartz, and Ohad Shamir.
\newblock On the computational efficiency of training neural networks.
\newblock \emph{Advances in Neural Information Processing Systems}, pages
  855--863, 2014.

\bibitem[Tropp(2012)]{tropp2012user}
Joel~A Tropp.
\newblock User-friendly tail bounds for sums of random matrices.
\newblock \emph{Foundations of computational mathematics}, 12\penalty0
  (4):\penalty0 389--434, 2012.

\bibitem[Neyshabur et~al.(2015{\natexlab{c}})Neyshabur, Tomioka, and
  Srebro]{NeyTomSre15}
Behnam Neyshabur, Ryota Tomioka, and Nathan Srebro.
\newblock Norm-based capacity control in neural networks.
\newblock In \emph{Proceeding of the 28th Conference on Learning Theory
  (COLT)}, 2015{\natexlab{c}}.

\bibitem[Simonyan and Zisserman(2014)]{simonyan2014very}
Karen Simonyan and Andrew Zisserman.
\newblock Very deep convolutional networks for large-scale image recognition.
\newblock \emph{arXiv preprint arXiv:1409.1556}, 2014.

\bibitem[Neyshabur et~al.(2016{\natexlab{a}})Neyshabur, Tomioka, Salakhutdinov,
  and Srebro]{neyshabur16}
Behnam Neyshabur, Ryota Tomioka, Ruslan Salakhutdinov, and Nathan Srebro.
\newblock Data-dependent path normalization in neural networks.
\newblock In \emph{the International Conference on Learning Representations},
  2016{\natexlab{a}}.

\bibitem[Srebro et~al.(2011)Srebro, Sridharan, and Tewari]{srebro11}
Nathan Srebro, Karthik Sridharan, and Ambuj Tewari.
\newblock On the universality of online mirror descent.
\newblock In \emph{Advances in neural information processing systems}, pages
  2645--2653, 2011.

\bibitem[Ollivier(2015)]{ollivier2015riemannian}
Yann Ollivier.
\newblock Riemannian metrics for neural networks ii: recurrent networks and
  learning symbolic data sequences.
\newblock \emph{Information and Inference}, page iav007, 2015.

\bibitem[Neyshabur et~al.(2016{\natexlab{b}})Neyshabur, Tomioka, Salakhutdinov,
  and Srebro]{neyshabur2016data}
Behnam Neyshabur, Ryota Tomioka, Ruslan Salakhutdinov, and Nathan Srebro.
\newblock Data-dependent path normalization in neural networks.
\newblock In \emph{International Conference on Learning Representations},
  2016{\natexlab{b}}.

\bibitem[Neyshabur et~al.(2016{\natexlab{c}})Neyshabur, Wu, Salakhutdinov, and
  Srebro]{neyshabur2016path}
Behnam Neyshabur, Yuhuai Wu, Ruslan Salakhutdinov, and Nathan Srebro.
\newblock Path-normalized optimization of recurrent neural networks with relu
  activations.
\newblock \emph{Advances in Neural Information Processing Systems},
  2016{\natexlab{c}}.

\bibitem[Neyshabur et~al.(2015{\natexlab{d}})Neyshabur, Tomioka, and
  Srebro]{neyshabur2015norm}
Behnam Neyshabur, Ryota Tomioka, and Nathan Srebro.
\newblock Norm-based capacity control in neural networks.
\newblock In \emph{The 28th Conference on Learning Theory}, pages 1376--1401,
  2015{\natexlab{d}}.

\bibitem[Srebro and Shraibman(2005)]{srebro05}
Nathan Srebro and Adi Shraibman.
\newblock Rank, trace-norm and max-norm.
\newblock In \emph{Learning Theory}, pages 545--560. Springer, 2005.

\bibitem[Goodfellow et~al.(2013)Goodfellow, Warde{-}Farley, Mirza, Courville,
  and Bengio]{goodfellow13}
Ian~J. Goodfellow, David Warde{-}Farley, Mehdi Mirza, Aaron~C. Courville, and
  Yoshua Bengio.
\newblock Maxout networks.
\newblock In \emph{Proceedings of the 30th International Conference on Machine
  Learning, {ICML}}, pages 1319--1327, 2013.

\bibitem[Srivastava et~al.(2014)Srivastava, Hinton, Krizhevsky, Sutskever, and
  Salakhutdinov]{srivastava14}
Nitish Srivastava, Geoffrey Hinton, Alex Krizhevsky, Ilya Sutskever, and Ruslan
  Salakhutdinov.
\newblock Dropout: A simple way to prevent neural networks from overfitting.
\newblock \emph{The Journal of Machine Learning Research}, 15\penalty0
  (1):\penalty0 1929--1958, 2014.

\bibitem[Neyshabur et~al.(2015{\natexlab{e}})Neyshabur, Salakhutdinov, and
  Srebro]{neyshabur2015path}
Behnam Neyshabur, Ruslan~R Salakhutdinov, and Nati Srebro.
\newblock Path-sgd: Path-normalized optimization in deep neural networks.
\newblock In \emph{Advances in Neural Information Processing Systems}, pages
  2413--2421, 2015{\natexlab{e}}.

\bibitem[LeCun et~al.(1998{\natexlab{a}})LeCun, Bottou, Bengio, and
  Haffner]{lecun1998gradient}
Yann LeCun, L{\'e}on Bottou, Yoshua Bengio, and Patrick Haffner.
\newblock Gradient-based learning applied to document recognition.
\newblock \emph{Proceedings of the IEEE}, 86\penalty0 (11):\penalty0
  2278--2324, 1998{\natexlab{a}}.

\bibitem[Krizhevsky and Hinton(2009)]{krizhevsky2009learning}
Alex Krizhevsky and Geoffrey Hinton.
\newblock Learning multiple layers of features from tiny images.
\newblock \emph{Computer Science Department, University of Toronto, Tech. Rep},
  1\penalty0 (4):\penalty0 7, 2009.

\bibitem[Netzer et~al.(2011)Netzer, Wang, Coates, Bissacco, Wu, and
  Ng]{netzer2011reading}
Yuval Netzer, Tao Wang, Adam Coates, Alessandro Bissacco, Bo~Wu, and Andrew~Y
  Ng.
\newblock Reading digits in natural images with unsupervised feature learning.
\newblock In \emph{NIPS workshop on deep learning and unsupervised feature
  learning}, 2011.

\bibitem[Neyshabur et~al.(2015{\natexlab{f}})Neyshabur, Tomioka, and
  Srebro]{neyshabur2015search}
Behnam Neyshabur, Ryota Tomioka, and Nathan Srebro.
\newblock In search of the real inductive bias: On the role of implicit
  regularization in deep learning.
\newblock \emph{International Conference on Learning Representations (ICLR)
  workshop track}, 2015{\natexlab{f}}.

\bibitem[Marcus et~al.(1993)Marcus, Marcinkiewicz, and Santorini]{marcus1993}
Mitchell~P Marcus, Mary~Ann Marcinkiewicz, and Beatrice Santorini.
\newblock Building a large annotated corpus of english: The penn treebank.
\newblock \emph{Computational linguistics}, 19\penalty0 (2):\penalty0 313--330,
  1993.

\bibitem[Hochreiter(1998)]{hochreiter98}
Sepp Hochreiter.
\newblock The vanishing gradient problem during learning recurrent neural nets
  and problem solutions.
\newblock \emph{International Journal of Uncertainty, Fuzziness and
  Knowledge-Based Systems}, 6\penalty0 (02), 1998.

\bibitem[Bengio et~al.(1994)Bengio, Simard, and Frasconi]{bengio1994learning}
Yoshua Bengio, Patrice Simard, and Paolo Frasconi.
\newblock Learning long-term dependencies with gradient descent is difficult.
\newblock \emph{Neural Networks, IEEE Transactions on}, 5\penalty0
  (2):\penalty0 157--166, 1994.

\bibitem[Hochreiter and Schmidhuber(1997)]{hochreiter97}
Sepp Hochreiter and J{\"u}rgen Schmidhuber.
\newblock Long short-term memory.
\newblock \emph{Neural computation}, 9\penalty0 (8), 1997.

\bibitem[Le et~al.(2015)Le, Jaitly, and Hinton]{le15}
Quoc~V Le, Navdeep Jaitly, and Geoffrey~E Hinton.
\newblock A simple way to initialize recurrent networks of rectified linear
  units.
\newblock \emph{arXiv preprint arXiv:1504.00941}, 2015.

\bibitem[Arjovsky et~al.(2015)Arjovsky, Shah, and Bengio]{arjovsky15}
Martin Arjovsky, Amar Shah, and Yoshua Bengio.
\newblock Unitary evolution recurrent neural networks.
\newblock \emph{arXiv preprint arXiv:1511.06464}, 2015.

\bibitem[Talathi and Vartak(2014)]{talathi16}
Sachin~S. Talathi and Aniket Vartak.
\newblock Improving performance of recurrent neural network with relu
  nonlinearity.
\newblock In \emph{the International Conference on Learning Representations
  workshop track}, 2014.

\bibitem[Pachitariu and Sahani(2013)]{pachitariu2013regularization}
Marius Pachitariu and Maneesh Sahani.
\newblock Regularization and nonlinearities for neural language models: when
  are they needed?
\newblock \emph{arXiv preprint arXiv:1301.5650}, 2013.

\bibitem[Mikolov et~al.(2012)Mikolov, Sutskever, Deoras, Le, Kombrink, and
  Cernocky]{mikolov2012subword}
Tom{\'a}{\v{s}} Mikolov, Ilya Sutskever, Anoop Deoras, Hai-Son Le, Stefan
  Kombrink, and J~Cernocky.
\newblock Subword language modeling with neural networks.
\newblock \emph{(http://www.fit.vutbr.cz/~imikolov/rnnlm/char.pdf)}, 2012.

\bibitem[Krueger and Memisevic(2016)]{KruegerM15}
David Krueger and Roland Memisevic.
\newblock Regularizing {RNNs} by stabilizing activations.
\newblock In \emph{Proceeding of the International Conference on Learning
  Representations}, 2016.

\bibitem[Kingma and Ba(2015)]{kingma2014adam}
Diederik Kingma and Jimmy Ba.
\newblock Adam: A method for stochastic optimization.
\newblock In \emph{Proceeding of the International Conference on Learning
  Representations}, 2015.

\bibitem[Saxe et~al.(2014)Saxe, McClelland, and Ganguli]{saxe2013exact}
Andrew~M Saxe, James~L McClelland, and Surya Ganguli.
\newblock Exact solutions to the nonlinear dynamics of learning in deep linear
  neural networks.
\newblock In \emph{International Conference on Learning Representations}, 2014.

\bibitem[Ioffe and Szegedy(2015)]{IofSze15}
Sergey Ioffe and Christian Szegedy.
\newblock Batch normalization: Accelerating deep network training by reducing
  internal covariate shift.
\newblock In \emph{ICML}, 2015.

\bibitem[Amari(1998)]{Ama98}
Shun-Ichi Amari.
\newblock Natural gradient works efficiently in learning.
\newblock \emph{Neural computation}, 10\penalty0 (2):\penalty0 251--276, 1998.

\bibitem[{Le Cun} et~al.(1998){Le Cun}, Bottou, Orr, and
  M{\"{u}}ller]{lecun-98x}
Yann {Le Cun}, L\'{e}on Bottou, Genevieve~B. Orr, and Klaus-Robert
  M{\"{u}}ller.
\newblock Efficient backprop.
\newblock In \emph{Neural Networks, Tricks of the Trade}, Lecture Notes in
  Computer Science LNCS~1524. Springer Verlag, 1998.
\newblock URL \url{http://leon.bottou.org/papers/lecun-98x}.

\bibitem[Larochelle et~al.(2009)Larochelle, Bengio, Louradour, and
  Lamblin]{larochelle2009exploring}
Hugo Larochelle, Yoshua Bengio, J{\'e}r{\^o}me Louradour, and Pascal Lamblin.
\newblock Exploring strategies for training deep neural networks.
\newblock \emph{The Journal of Machine Learning Research}, 10:\penalty0 1--40,
  2009.

\bibitem[Glorot and Bengio(2010)]{difficulty}
Xavier Glorot and Yoshua Bengio.
\newblock Understanding the difficulty of training deep feedforward neural
  networks.
\newblock In \emph{AISTATS}, 2010.

\bibitem[Sutskever et~al.(2013)Sutskever, Martens, Dahl, and
  Hinton]{sutskever2013importance}
Ilya Sutskever, James Martens, George Dahl, and Geoffrey Hinton.
\newblock On the importance of initialization and momentum in deep learning.
\newblock In \emph{ICML}, 2013.

\bibitem[Grosse and Salakhudinov(2015)]{grosse2015scaling}
Roger Grosse and Ruslan Salakhudinov.
\newblock Scaling up natural gradient by sparsely factorizing the inverse
  {Fisher} matrix.
\newblock In \emph{ICML}, 2015.

\bibitem[Martens and Grosse(2015)]{martens2015optimizing}
James Martens and Roger Grosse.
\newblock Optimizing neural networks with {Kronecker-factored} approximate
  curvature.
\newblock In \emph{ICML}, 2015.

\bibitem[Desjardins et~al.(2015)Desjardins, Simonyan, Pascanu, and
  Kavukcuoglu]{desjardins2015natural}
Guillaume Desjardins, Karen Simonyan, Razvan Pascanu, and Koray Kavukcuoglu.
\newblock Natural neural networks.
\newblock \emph{arXiv preprint arXiv:1507.00210}, 2015.

\bibitem[Pascanu and Bengio(2014)]{pascanu2013revisiting}
Razvan Pascanu and Yoshua Bengio.
\newblock Revisiting natural gradient for deep networks.
\newblock In \emph{ICLR}, 2014.

\bibitem[Martens(2010)]{martens2010deep}
James Martens.
\newblock Deep learning via hessian-free optimization.
\newblock In \emph{ICML}, 2010.

\bibitem[Vinyals and Povey(2011)]{vinyals2011krylov}
Oriol Vinyals and Daniel Povey.
\newblock Krylov subspace descent for deep learning.
\newblock In \emph{ICML}, 2011.

\bibitem[Roux et~al.(2008)Roux, Manzagol, and Bengio]{roux2008topmoumoute}
Nicolas~L Roux, Pierre-Antoine Manzagol, and Yoshua Bengio.
\newblock Topmoumoute online natural gradient algorithm.
\newblock In \emph{NIPS}, 2008.

\bibitem[LeCun et~al.(1998{\natexlab{b}})LeCun, Bottou, Orr, and
  Muller]{lecun1998neural}
Yann LeCun, Leon Bottou, Genevieve~B Orr, and Klaus-Robert Muller.
\newblock Neural networks-tricks of the trade.
\newblock \emph{Springer Lecture Notes in Computer Sciences}, 1524\penalty0
  (5-50):\penalty0 7, 1998{\natexlab{b}}.

\bibitem[Schaul et~al.(2013)Schaul, Zhang, and Lecun]{schaul2013no}
Tom Schaul, Sixin Zhang, and Yann Lecun.
\newblock No more pesky learning rates.
\newblock In \emph{ICML}, 2013.

\end{thebibliography}
\end{document}